\documentclass[11pt]{ociamthesis}  
\usepackage[utf8]{inputenc}
\usepackage[T1]{fontenc}
\usepackage{textcomp}

\usepackage{amsmath}
\usepackage{amsthm}
\usepackage{mathtools}
\usepackage[inline]{enumitem}
\usepackage{bm}
\usepackage{hyperref}
\usepackage{inconsolata}
\usepackage{booktabs}
\usepackage{faktor}
\usepackage{url}
\usepackage{graphicx}
\usepackage[charter]{mathdesign}
\usepackage{bm}
\usepackage{thmtools}
\usepackage{algorithm}
\usepackage{algorithmic}
\usepackage{tikz}
\let\mathbb\relax
\usepackage{bbold}
\usepackage{appendix}
\usepackage{caption}
\usepackage{siunitx}
\usepackage{adjustbox}
\usepackage{wrapfig}
\usepackage{multirow}
\usepackage{tikz-layers}
\usepackage{subcaption}
\usepackage{rotating}
\usepackage{xcolor}
\usepackage{arydshln}
\usepackage{longtable}
\usepackage{tikz-cd}
\usepackage{makecell}
\usepackage{tabularx}
\usepackage{cancel}
\usepackage[mathcal]{eucal}
\usepackage{nicefrac}

\usepackage{setspace}

\doublespacing
\allowdisplaybreaks


\mathtoolsset{showonlyrefs=true}

\newcommand{\ba}{\mathbf{a}}
\newcommand{\bb}{\mathbf{b}}
\newcommand{\bc}{\mathbf{c}}
\newcommand{\bh}{\mathbf{h}}
\newcommand{\bi}{\mathbf{i}}
\newcommand{\be}{\mathbf{e}}
\newcommand{\bj}{\mathbf{j}}
\newcommand{\bk}{\mathbf{k}}
\newcommand{\bl}{\mathbf{l}}
\newcommand{\bs}{\mathbf{s}}
\newcommand{\bt}{\mathbf{t}}
\newcommand{\bp}{\mathbf{p}}
\newcommand{\bq}{\mathbf{q}}

\newcommand{\bu}{\mathbf{u}}
\newcommand{\bv}{\mathbf{v}}
\newcommand{\bw}{\mathbf{w}}
\newcommand{\bx}{\mathbf{x}}
\newcommand{\by}{\mathbf{y}}
\newcommand{\bz}{\mathbf{z}}

\newcommand{\bW}{\mathbf{W}}
\newcommand{\bX}{\mathbf{X}}

\newcommand{\bZ}{\mathbf{Z}}
\newcommand{\bA}{\mathbf{A}}
\newcommand{\bL}{\mathbf{L}}

\newcommand{\bP}{\mathbf{P}}

\newcommand{\bell}{{\boldsymbol{\ell}}}



\newcommand{\one}{\mathbb{1}}


\newcommand{\node}{\text{node}}

\newcommand{\wA}{\widetilde{A}}
\newcommand{\wB}{\widetilde{B}}
\newcommand{\wC}{\widetilde{C}}
\newcommand{\wcL}{\widetilde{\mathcal{L}}}
\newcommand{\wcK}{\widetilde{\mathcal{K}}}
\newcommand{\wP}{\widetilde{P}}



\newcommand{\tensalg}{H} 
\newcommand{\tensalgps}{H} 
\newcommand{\tensalgmr}{H^{n \times n}} 

\newcommand{\inc}{\text{inc}}
\newcommand{\zs}{\text{zs}}
\newcommand{\tp}{\text{tp}}

\newcommand{\GTN}{\texttt{G2TN} }
\newcommand{\GTAN}{\texttt{G2T(A)N} }

\newcommand{\middashrule}{\hdashline\noalign{\vskip0.5ex}}

\newcommand{\cB}{\mathcal{B}}

\newcommand{\cE}{\mathcal{E}}

\newcommand{\cG}{\mathcal{G}}
\newcommand{\cH}{\mathcal{H}}
\newcommand{\cI}{\mathcal{I}}
\newcommand{\cK}{\mathcal{K}}
\newcommand{\cL}{\mathcal{L}}
\newcommand{\cM}{\mathcal{M}}
\newcommand{\cN}{\mathcal{N}}

\newcommand{\cV}{\mathcal{V}}
\newcommand{\cX}{\mathcal{X}}

\newcommand{\bbE}{\mathbb{E}}
\newcommand{\bbV}{\mathbb{V}}
\newcommand{\bbP}{\mathbb{P}}
\newcommand{\bbN}{\mathbb{N}}
\newcommand{\bbR}{\mathbb{R}}
\newcommand{\R}{\mathbb{R}}
\newcommand{\bbZ}{\mathbb{Z}}


\newcommand{\norm}[1]{\left\lVert #1 \right\rVert}
\newcommand{\expe}[1]{\bbE\left[ #1 \right]}
\newcommand{\prob}[1]{\bbP\left[ #1 \right]}
\newcommand{\abs}[1]{\left\lvert #1 \right\rvert}
\newcommand{\bracks}[1]{\left\lbrack #1 \right\rbrack}
\newcommand{\curls}[1]{\left\{ #1 \right\}}
\newcommand{\pars}[1]{\left( #1 \right)}
\newcommand{\inner}[2]{\left\langle #1, #2 \right\rangle}
\newcommand{\KL}[2]{\mathsf{D}_\texttt{KL}\left\lbrack #1 \, \Vert \, #2 \right\rbrack}

\newcommand{\given}{\, \vert \,}
\newcommand{\setgiven}{\, : \,}

\newcommand{\T}[1]{\mathsf{T}((#1))}
\newcommand{\TV}[1]{\mathsf{T}(#1)}
\renewcommand{\d}{\mathrm{d}}
\renewcommand{\L}{L}
\newcommand{\spn}{\texttt{span}}

\newcommand{\med}{\operatorname{med}}
\newcommand{\Jac}{\operatorname{J}}
\DeclareMathOperator{\Span}{span}
\DeclareMathOperator{\sh}{Sh}
\newcommand{\vs}{\bbR^d}

\newcommand{\onevar}{{1\texttt{-var}}}
\newcommand{\Lattice}[2]{\mathsf{Lattice}_{{#1},{#2}}(\cX)}
\newcommand{\Paths}{\mathsf{Paths}}
\newcommand{\Seq}{\mathsf{Seq}}
\newcommand{\len}[1]{{\mathsf{L}_{#1}}}
\newcommand{\hor}[1]{{\mathsf{T}_{#1}}}

\theoremstyle{plain}
\newtheorem{theorem}{Theorem}[chapter]
\newtheorem{proposition}[theorem]{Proposition}
\newtheorem{lemma}[theorem]{Lemma}
\newtheorem{corollary}[theorem]{chapter}
\newtheorem{definition}[theorem]{Definition}
\newtheorem{remark}[theorem]{Remark}
\newtheorem{example}[theorem]{Example}

\newcommand{\texp}[1][]{\ifthenelse{\equal{#1}{}}{\exp_\otimes}{\exp_{\otimes_{#1}}}}
\renewcommand{\S}{\mathsf{S}}

\newcommand{\kernel}{\mathsf{k}} 
\newcommand{\sigkernel}[1][]{{\ifthenelse{\equal{#1}{}}{\kernel_\S}{\kernel_{\S_{#1}}}}} 
\newcommand{\sigkernelp}[2][]{{\ifthenelse{\equal{#1}{}}{\kernel^{(#2)}_\S}{\kernel^{(#2)}_{\S_{#1}}}}} 
\newcommand{\rffkernel}{\tilde{\kernel}}
\newcommand{\rffsigkernel}[1][]{\ifthenelse{\equal{#1}{}}{\rffkernel_\S}{\rffkernel_{\S_{#1}}}}
\newcommand{\rffsigkernelp}[2][]{\ifthenelse{\equal{#1}{}}{\rffkernel^{(#2)}_\S}{\rffkernel^{(#2)}_{\S_{#1}}}}
\newcommand{\rffsigkernelhat}[1][]{\ifthenelse{\equal{#1}{}}{\hat{\kernel}_\S}{\hat{\kernel}_{\S_{#1}}}}

\newcommand{\rffsigkernelDP}[1][]{\ifthenelse{\equal{#1}{}}{\rffkernel^{\texttt{DP}}_\S}{\rffkernel^{\texttt{DP}}_{\S_{#1}}}}
\newcommand{\rffsigkernelTRP}[1][]{\ifthenelse{\equal{#1}{}}{\rffkernel^\texttt{TRP}_\S}{\rffkernel^\texttt{TRP}_{\S_{#1}}}}

\newcommand{\test}[1][]{\ifthenelse{\equal{#1}{}}{given}{not given}}

\newcommand{\fm}{\varphi} 
\newcommand{\fms}{\widetilde{\varphi}} 
\newcommand{\fmn}{\Phi} 
\newcommand{\fmg}{\Psi} 
\newcommand{\fmnz}{\hat{\Phi}}
\newcommand{\nf}{f} 

\newcommand{\kernelfeatures}{\varphi}
\newcommand{\signature}[1][]{\ifthenelse{\equal{#1}{}}{\kernelfeatures_\S}{\kernelfeatures_{\S_{#1}}}}

\newcommand{\rff}{\tilde{\kernelfeatures}}

\newcommand{\rffsig}[1][]{\ifthenelse{\equal{#1}{}}{\rff_{\S}}{\rff_{\S_{#1}}}}
\newcommand{\rffsighat}[1][]{\ifthenelse{\equal{#1}{}}{\hat{\kernelfeatures}_{\S}}{\hat{\kernelfeatures}_{\S_{#1}}}}
\newcommand{\rffsigDP}[1][]{\ifthenelse{\equal{#1}{}}{\rff_{\S}^{\texttt{DP}}}{\rff_{\S_{#1}}^{\texttt{DP}}}}
\newcommand{\rffsigTRP}[1][]{\ifthenelse{\equal{#1}{}}{\rff_{\S}^{\texttt{TRP}}}{\rff_{\S_{#1}}^{\texttt{TRP}}}}

\newcommand{\iid}{\text{i.i.d.}}

\newcommand{\Dp}[1]{D^{(p)}}
\newcommand{\Ep}[1]{E^{(p)}}
\newcommand{\Hil}{{\cH_\kernel}}
\newcommand{\HilT}{\mathrm{T}(\cH)}
\newcommand{\HilRFF}{\tilde{\Hil}}

\newcommand{\HilRFFT}{\HilRFF_{\S}}
\newcommand{\HilRFFTDP}{\HilRFF_{\S}^{\texttt{DP}}}
\newcommand{\HilRFFTTRP}{\HilRFF_{\S}^{\texttt{TRP}}}

\newcommand{\dimRFF}{\tilde{d}}

\newcommand{\dimTRP}{\dimRFF_{\texttt{TRP}}}

\newcommand{\p}{\prime}
\newcommand{\GP}{\texttt{GP}}
\newcommand{\FCN}[1]{\ensuremath{\texttt{FCN}_{#1}}}

\newcommand{\LStwoTwidth}[2]{\ensuremath{\texttt{LS2T}_{#1}^{#2}}}

\newcommand{\FCNLStwoTwidth}[3]{\ensuremath{\texttt{FCN}_{#1}\text{-}\texttt{LS2T}_{#2}^{#3}}}
\newcommand{\SVM}{\texttt{SVM}}

\newcommand{\RKHS}{\texttt{RKHS}}
\newcommand{\RFF}{\texttt{RFF}}
\newcommand{\RBF}{\texttt{RBF}}
\newcommand{{\RFSF}}{{\texttt{RFSF}}}
\newcommand{\RFSFD}{\texttt{RFSF-DP}}
\newcommand{\RFSFT}{\texttt{RFSF-TRP}}
\newcommand{\KS}{\texttt{KSig}}
\newcommand{\KSP}{\texttt{KSigPDE}}
\newcommand{\GAK}{\texttt{GAK}}
\newcommand{\RWS}{\texttt{RWS}}

\newcommand{\TRP}{\texttt{TRP}}
\newcommand{\CP}{\texttt{CP}}
\newcommand{\pr}{\texttt{Pr}}

\renewcommand{\d}{\,\mathrm{d}}

\newcommand{\diag}{\operatorname{diag}}

\renewcommand{\b}{\mathbf}

\renewcommand{\c}{\mathcal}

\newcommand{\cU}{\mathcal{U}}

\newcommand{\bmu}{\boldsymbol{\mu}}
\newcommand{\bomega}{\boldsymbol{\omega}}
\newcommand{\blambda}{\boldsymbol{\lambda}}
\newcommand{\bOmega}{\boldsymbol{\Omega}}

\newcommand{\spc}{\hspace{3pt}}

\usepackage{l3draw,xparse}
\ExplSyntaxOn
\fp_new:N \l__expl_shuffle_fp

\cs_new_protected:Nn \expl_shuffle:
 {
  \draw_begin:
  \draw_linewidth:n { \l__expl_shuffle_fp }
  \draw_cap_round:
  \draw_join_round:
  \draw_path_moveto:n { 0.0ex , 1.0ex }
  \draw_path_lineto:n { 0.0ex , 0.0ex }
  \draw_path_lineto:n { 1.75ex , 0.0ex }
  \draw_path_lineto:n { 1.75ex , 1.0ex }
  \draw_path_use_clear:n { stroke }
  \draw_path_moveto:n { 0.875ex , 0.0ex }
  \draw_path_lineto:n { 0.875ex , 1.0ex }
  \draw_path_use_clear:n { stroke }
  \draw_end:
 }
\NewDocumentCommand{\shuffle}{}
 {
  \mathrel
   {
    \fp_set:Nn \l__expl_shuffle_fp { 0.08ex }
    \text{$\mspace{2mu}$ \expl_shuffle: $\mspace{2mu}$}
   }
 }
\ExplSyntaxOff

\usetikzlibrary{shapes}
\usetikzlibrary{calc,positioning,decorations.pathreplacing,shapes.misc,arrows.meta,3d}

\pgfdeclarehorizontalshading{red}{150bp}{rgb(0bp)=(1,0.76,0.77);
rgb(37.5bp)=(1,0.76,0.77);
rgb(62.5bp)=(0.95,0.55,0.56);
rgb(100bp)=(0.95,0.55,0.56)}

\pgfdeclarehorizontalshading{grey}{150bp}{rgb(0bp)=(0.89,0.89,0.89);
rgb(37.5bp)=(0.89,0.89,0.89);
rgb(50bp)=(0.86,0.86,0.86);
rgb(50.25bp)=(0.82,0.82,0.82);
rgb(62.5bp)=(0.97,0.97,0.97);
rgb(100bp)=(1,1,1)}

\pgfdeclarehorizontalshading{blue}{150bp}{rgb(0bp)=(0.76,0.89,1);
rgb(37.5bp)=(0.76,0.89,1);
rgb(62.5bp)=(0.58,0.79,0.9);
rgb(100bp)=(0.58,0.79,0.9)}

\pgfdeclarehorizontalshading{green}{150bp}{rgb(0bp)=(0.71,0.94,0.73);
rgb(37.5bp)=(0.71,0.94,0.73);
rgb(62.5bp)=(0.49,0.79,0.48);
rgb(100bp)=(0.49,0.79,0.48)}

\tikzset{outer sep=0.1pt, inner sep=0,
	node/.style={rectangle, draw=black, minimum width=42, minimum height=20, outer sep=1.0pt, inner sep=0},
	every picture/.style={line width=0.75pt}}

 \tikzset{
    rect_sharp/.style={rectangle, draw=black, minimum width=30, minimum height=20, outer sep=2.5pt, inner sep=2.0pt},
    rect/.style={rectangle, rounded corners=2.5pt, draw=black, minimum width=30, minimum height=20, outer sep=2.5pt, inner sep=2.0pt},
	rect2/.style={rectangle, rounded corners=2.5pt, draw=black, minimum width=40, minimum height=20, outer sep=2.5pt, inner sep=2.0pt},
	every picture/.style={line width=0.75pt},
    ncbar angle/.initial=90,
    ncbar/.style={
        to path=(\tikztostart)
        -- ($(\tikztostart)!#1!\pgfkeysvalueof{/tikz/ncbar angle}:(\tikztotarget)$)
        -- ($(\tikztotarget)!($(\tikztostart)!#1!\pgfkeysvalueof{/tikz/ncbar angle}:(\tikztotarget)$)!\pgfkeysvalueof{/tikz/ncbar angle}:(\tikztostart)$)
        -- (\tikztotarget)
    },
    ncbar/.default=0.5cm,
}

\tikzset{square left brace/.style={ncbar=0.15cm}}
\tikzset{square right brace/.style={ncbar=-0.2cm}}

\tikzset{
    seq/.style={rectangle, text centered, minimum height=0.5cm, minimum width=3.5cm, rotate=90, outer sep=2.5pt, inner sep=0pt},
    wideseq/.style={rectangle, text centered, minimum height=0.75cm, minimum width=3.5cm, rotate=90, outer sep=2.5pt, inner sep=0pt},
    seq2/.style={rectangle, text centered, minimum height=0.5cm, minimum width=3.5cm, outer sep=1.0pt, inner sep=0pt},
    wideseq2/.style={rectangle, text centered, minimum height=0.75cm, minimum width=3.5cm, outer sep=1.0pt, inner sep=0pt}
}

\tikzset{
    sqr/.style={rectangle, text centered, minimum height=1.0cm, minimum width=1.0cm, outer sep=0.25cm, inner sep=0pt},
}

\tikzset{
    circ/.style={circle, draw=black, minimum size=25, outer sep=2.5pt, inner sep=0.0pt},
    circ2/.style={circle, minimum width=0.4cm, minimum height=0.4cm, outer sep=2.5pt, inner sep=0pt}
}

\tikzset{
    cross/.style={cross out, draw=black, minimum size=12.5pt, inner sep=0pt, outer sep=0pt, rotate=45},
    cross/.default={1pt}
}

\definecolor{grey}{HTML}{F5F5F5}
\definecolor{darkgrey}{HTML}{888888}
\definecolor{orange}{HTML}{FFF2CC}
\definecolor{brown}{HTML}{D6B656}
\definecolor{lightyellow}{HTML}{fff0d2}
\definecolor{yellow}{HTML}{d0ab49}
\definecolor{lightblue}{HTML}{e6f1f8}
\definecolor{blue}{HTML}{69a7d0}
\definecolor{lightgreen}{HTML}{e9f5e9}
\definecolor{green}{HTML}{7cb959}
\definecolor{lightpurple}{HTML}{f4ebf9}
\definecolor{purple}{HTML}{b793c8}
\definecolor{darkerpurple}{HTML}{9673A6}



\let\oldparagraph=\paragraph
\renewcommand\paragraph[1]{\oldparagraph{#1.}}

\title{Scalable Machine Learning Algorithms\\     
       using Path Signatures}   

\author{Csaba T{\'o}th}             
\college{Corpus Christi College}  

\degree{Doctor of Philosophy in Mathematics}     
\degreedate{Michaelmas 2024}         

\sloppy

\begin{document}

\baselineskip=18pt plus1pt

\setcounter{secnumdepth}{3}
\setcounter{tocdepth}{3}

\maketitle                  
\pagenumbering{gobble}
\chapter*{Acknowledgements}
First and foremost, I would like to thank my supervisor Prof.~Harald Oberhauser for his ongoing support and guidance throughout my DPhil journey. I am deeply grateful to Harald for the thought-provoking conversations we had and for always supporting me in exploring my ideas.
I would like to extend my gratitude to Prof.~Terry Lyons and
the DataSig team, who welcomed me as one of their own, and provided me with opportunities to present my work at various workshops and conferences. I would also like to thank the Mathematical Institute of the University of Oxford for generously providing me with a scholarship that allowed me to undertake this journey, and for providing the computational resources required for my applied projects.

I am grateful to my collaborators and colleagues for the many discussions we
had: Patric Bonnier, Darrick Lee, Zoltán Szabó, Masaki Adachi, Patrick Kidger, Cristopher Salvi, Maud Lemercier, Christina Zou, Alexander Schell; thank you for your help and collegiality. 
I am also grateful to all my friends whether located in the UK or Hungary for supporting me in all ways and standing by me potentially despite the physical distance between us. Finally, I would like to thank the infinite support of my family: my sister Réka and my mother Éva; without your love and support I would not be where or who I am today. I dedicate this thesis to you.  
\chapter*{Declarations}
This thesis is submitted to the University of Oxford in support of my application for the degree of Doctor of Philosophy. It has been composed by myself under the supervision of Prof.~Harald Oberhauser. I confirm that this thesis has not been submitted for another university degree. Parts of this thesis have appeared as published papers some of which were written with collaborators:
\begin{enumerate}
    \item The work in Chapter \ref{ch:gpsig} adapts the paper \cite{toth2020bayesian} written by me under the supervision of Harald Oberhauser published at the \textit{International Conference on Machine Learning} in 2020.
    \item The work in Chapter \ref{ch:s2t} adapts the paper \cite{toth2021seq2tens}, that was written in collaboration with Patric Bonnier under the supervision of Harald Oberhauser published at the \textit{International Conference on Learning Representations} in 2021. In particular, I wrote the paper besides the theoretical results appearing originally in Appendices A, B and C, which were written by Patric, and due to space limitations these were omitted here and deferred to the article.
    \item Chapter \ref{ch:g2t} adapts the paper \cite{toth2022capturing} written in collaboration with Darrick Lee with frequent discussions with Celia Hacker and Harald Oberhauser published at the \textit{Advances in Neural Information Processing Systems} in 2022. My contribution was the idea of using expected signatures to describe random walks on graphs, deriving the governing equations both in the tensor-valued and low-rank case, implementing and running experiments, while Darrick formalized the theory and the connection to hypo-elliptic diffusions. The main text was written with equal contribution, while the parts of the appendix written by Darrick consisting of a background and certain theoretical proofs were deferred to the article.
    \item Chapter \ref{ch:rfsf} adapts the paper \cite{toth2023random} published in \textit{SIAM Mathematics of Data Science} in 2025 written by me with frequent discussions with Zoltán Szabó and Harald Oberhauser.
    \item Chapter \ref{ch:vfsf} adapts \cite{toth2024learning} which was published at \textit{Artificial Intelligence and Statistics 2025} written by me with frequent discussions with Masaki Adachi and Harald Oberhauser. 
\end{enumerate}        
\chapter*{Abstract}

In this thesis, we consider the integration of path signatures--a mathematical object rooted in rough path theory--with scalable machine learning algorithms to address challenges in sequential and structured data modelling. The key topics considered include:
\begin{itemize}

\item \textbf{Path Signatures}: Introduced as a hierarchical and theoretically robust feature representation for sequential data, path signatures faithfully capture dynamics while offering properties like invariance to reparameterization and tree-like equivalence. Challenges like computational overheads are addressed through novel algorithms throughout the thesis.

\item \textbf{Gaussian Processes}: We demonstrate embedding signature kernels into Gaussian process models, offering an expressive probabilistic modelling approach for sequential data while tackling computational barriers via sparse variational inference techniques. This approach enhances performance on probabilistic time series classification tasks.

\item \textbf{\texttt{Seq2Tens} Framework}: Combines signature features with deep learning, using iterations of low-rank layers to mitigate computational costs while retaining expressiveness. Applications include time series classification, mortality prediction, generative modelling.

\item \textbf{Graph Representation}: Extends path signatures to graph data and connects them to hypo-elliptic diffusions, combined with low-rank techniques offering scalable architectures for capturing global and local graph structures, and outperforming conventional graph neural networks on tasks requiring long-range reasoning.

\item \textbf{Random Fourier Signature Features}: Introduces scalable random feature-based approximations for signature kernels with supporting theoretical results, overcoming computational limitations for large datasets while retaining state-of-the-art performance.

\item \textbf{Recurrent Sparse Spectrum Signature Gaussian Processes}: Combines Random Fourier Signature Features with Gaussian Processes and a forgetting mechanism for adaptive context focus in time series forecasting, bridging short- and long-term dependencies.
\end{itemize}

The topics are divided into separate, self-contained chapters that can be read independently.          
\begin{romanpages}
\tableofcontents            
\end{romanpages}            

\pagenumbering{arabic}
\chapter*{Introduction} \label{ch:intro}
\addcontentsline{toc}{chapter}{Introduction}

\section*{Unifying theme}
\addcontentsline{toc}{section}{Unifying theme}
The unifying theme of this thesis centers on the integration of path signatures, a mathematical framework rooted in rough path theory, with scalable machine learning algorithms to address challenges in sequential and structured data modelling. Sequential data, such as time series, DNA sequences, or video streams, represent a cornerstone of modern data analysis due to its prevalence across various fields, including finance, healthcare, and natural language processing. The core challenge lies in extracting meaningful representations from such data while preserving its intrinsic sequential structure and ensuring computational efficiency.

A prominent approach for exploiting sequential structure is through handcrafted feature engineering techniques, which find representations of the data so that well-established models become applicable. The dataset is then typically represented in a finite-dimensional Euclidean space. This approach is problematic in the sense that the feature extraction method often has to be tailored to the problem at hand, and some loss of information is inevitably incurred. In recent years, the signature method originating from rough path theory has become a popular method for extracting features of streamed data, that describe sequences and paths using a hierarchy of tensors with increasing degrees. Path signatures provide a robust theoretical foundation for representing sequential data as a hierarchy of tensor features, faithfully capturing the dynamics of the data. On the one hand, this has been remarkably successful due to the powerful mathematical machinery that underlies signatures allowing to provide strong theoretical guarantees. Their mathematical elegance offers unique benefits, such as invariance to reparameterization and universality for approximating continuous functions. On the other hand, this representation invokes a considerable computational burden due to the combinatorially increasing dimensionality of tensors of increasing degrees, as such their practical application in machine learning has been limited by significant computational challenges, e.g.~the polynomial growth in feature space dimension leading to bottlenecks in handling high-dimensional data.

This thesis takes a multifaceted approach to address these challenges by developing novel algorithms that retain the theoretical strengths of path signatures while significantly improving their computational scalability. Central to this work is the integration of path signatures with popular machine learning methodologies, including Gaussian processes, deep learning approaches, and kernel methods. Additionally, the thesis extends the application of path signatures beyond sequential data to the domain of graphs, showcasing their versatility and adaptability. Overall, this approach proved to be a fruitful one often leading to outperforming state-of-the-art methods in their genre on tasks such as time series classification, time series forecasting, mortality prediction in healthcare, generative modelling, and graph-based learning. This unifying theme underscores the thesis's contribution to bridging the gap between the theoretical elegance of path signatures and their practical deployment in real-world machine learning applications.

\section*{Outline}
\addcontentsline{toc}{section}{Outline}
The thesis is organized into six chapters that serve as standalone blocks readable independently, each complete with their own set of introductions, conclusions, and appendices.

\paragraph{Chapter \ref{ch:back}}
This introductory chapter provides an accessible introduction to path signatures, a mathematical object rooted in rough path theory, that serves as a powerful feature representation for sequential data. This chapter lays the foundation for understanding the algebraic, analytic, and computational properties of path signatures that make them suitable for machine learning applications.
The chapter begins by defining tensor products and constructing tensor algebras, which form the hierarchical feature space where path signatures reside. Tensor algebras, with their associative and noncommutative structure, allow for the representation of data as sequences of tensors, while the dual space and operations like the shuffle product further enhance their algebraic utility. Path signatures are then presented as sequences of iterated integrals capturing the geometry of paths. For bounded variation paths, these signatures encode the interactions of path coordinates, with practical computation simplified using tools like Chen’s identity. Although the signature admits extensions to the unbounded variation setting, our applications lie in the discrete domain, and we defer to \cite{lyons2007differential} for a review of rough path theory, one of the main themes of which is exactly that how highly oscillatory driving signals affect systems of differential equations, and how the signature can be extended to this setting.

The chapter emphasizes the key properties of path signatures: their universality as an algebra of functions that approximate continuous mappings, reparameterization invariance for robustness to changes in time sampling, and the ability to algebraically manipulate path concatenation and reversal.
Motivating examples illustrate how path signatures generalize polynomial feature maps for sequential data, providing a versatile and robust framework for sequence modelling. Challenges like the combinatorial growth of tensor dimensions are acknowledged, which is addressed via various scalable solutions later in the thesis, such as low-rank approximations and random projections. By establishing the theoretical and practical foundations of path signatures, Chapter 1 sets the stage for their integration into machine learning models.

\paragraph{Chapter \ref{ch:gpsig}}
Gaussian Processes (\texttt{GP}s) are a cornerstone of Bayesian machine learning, providing a flexible, non-parametric framework for modelling uncertainty in data. However, their application to sequential data, such as time series or trajectories, is often limited by the difficulty of constructing covariance functions that adequately capture temporal dependencies and patterns. Path signatures offer a powerful solution to this challenge. They provide a hierarchical and structured encoding of sequence information through iterated integrals, making them an ideal tool for defining expressive covariance functions in \texttt{GP}s. This work introduces a novel approach to embedding signature kernels within the \texttt{GP} framework. Thus, we are concerned with learning from sequential data in a Bayesian manner by constructing Gaussian processes with the inner product of signatures as covariance functions, continuing on the premise of \cite{kiraly2019kernels}, where a kernel trick was introduced for the inner product of signature features.

The chapter demonstrates the potential of integrating path signature features into \texttt{GP}s, offering a mathematically grounded and practical approach to Bayesian modelling of sequential data. By constructing covariance functions based on the inner product of signatures, the method captures the temporal and structural dependencies inherent in sequential data. The introduction of sparse variational inference further addresses the computational challenges, ensuring scalability even for large datasets of time series data. Empirical evaluations highlight the effectiveness of the approach, showcasing superior performance on time series classification tasks compared to conventional sequence kernels and comparable performance to state-of-the-art frequentist methods. The results underscore the versatility of signature-based \texttt{GP}s, providing a robust framework for combining theoretical rigor with practical applicability in Bayesian sequential data modelling. This integration not only enriches the representational capacity of \texttt{GP}s, but also opens up new venues for applying Bayesian methods to sequential data problems.

\paragraph{Chapter \ref{ch:s2t}}
Deep learning and Gaussian processes have emerged as powerful tools for modelling sequential data, yet both face limitations when processing complex, high-volume or high-dimensional streams. Signature features provide an effective representation of sequences but can be computationally expensive when incorporated into parametric models like neural networks as in \cite{Kidger2019DeepSig}. We introduce \texttt{Seq2Tens}, a novel framework that bridges this gap by leveraging the expressive power of signature-based features while introducing low-rank linear functionals to mitigate computational challenges. This approach allows the representation of sequential data through layers of low-rank transformations, offering a computationally efficient mechanism to extract global sequence information. By stacking multiple such layers, \texttt{Seq2Tens} achieves deep learning-style expressiveness while preserving interpretability and scalability. This chapter presents both the theoretical foundation of \texttt{Seq2Tens} and its practical utility, extending the versatility of signature-based features into new domains of deep learning.

The chapter establishes \texttt{Seq2Tens} as an advancement in the scalable deployment of signature-based features for sequential data modelling. The framework introduces low-rank constraints on linear functionals of signatures, enabling efficient computations directly in the dual space. This idea is further enhanced by the concept of deep layers, which stack multiple low-rank transformations to recover lost expressiveness due to the low-rank constraint in the style of deep learning. Experimental results validate the effectiveness of \texttt{Seq2Tens} across a range of tasks, including time series classification, mortality prediction, and generative modelling, where it consistently outperforms state-of-the-art neural network baselines. The experiments demonstrate how \texttt{Seq2Tens} not only improves computational efficiency but also preserves the expressiveness of signature features, making it a practical and robust solution for modern sequential data challenges. This work paves the way for broader adoption of signature-based methods in deep learning while addressing scalability concerns.

\paragraph{Chapter \ref{ch:g2t}}
Graphs provide a natural representation for structured data, appearing in diverse fields such as social networks, molecular chemistry, and computer vision. A major challenge in graph learning is to design representations that capture not only local neighborhood structures but also long-range dependencies and global characteristics. Building upon the foundational ideas of path signatures and their algebraic properties, The chapter extends their application to graph domains by leveraging hypo-elliptic diffusion processes. This novel approach introduces the hypo-elliptic graph Laplacian, a tensor-valued operator that encodes graph substructures through the evolution of random walks and diffusion dynamics. By integrating this operator into neural network architectures, the chapter proposes a scalable framework for capturing complex interactions in graph-structured data. This innovation bridges the gap between local aggregation and long-range structural encoding in graph neural networks, offering a theoretically grounded and computationally efficient solution for graph learning tasks.

This work can be treated as a follow-up to Chapter \ref{ch:s2t}, where we generalized the signature method using algebraic techniques, and devised scalable neural network architectures for streamed data. We present a significant extension of signature-based methods from sequential data to graph domains, addressing a critical gap in the ability to model long-range interactions and hierarchical structures in graphs by classical graph neural networks. The introduction of the hypo-elliptic graph Laplacian provides a mathematically robust mechanism to encode diffusion dynamics and random walk distributions, enabling the representation of rich graph features. The chapter further adapts low-rank algorithms from the previous chapter to design scalable neural architectures for graph learning, ensuring computational efficiency without sacrificing expressiveness. Empirical evaluations demonstrate state-of-the-art performance on challenging graph tasks compared to classic graph neural networks, such as node classification and graph-level predictions, where traditional methods struggle to capture long-range dependencies. By combining algebraic insights with practical implementation, this work lays the groundwork for future exploration of graph-specific adaptations of signature-based methodologies.

\paragraph{Chapter \ref{ch:rfsf}}
This chapter follows the premise of increased interest in the signature kernel. Signature kernels have emerged as a powerful tool for modelling sequential data, offering theoretical guarantees such as universality and invariance to reparameterization. Further, the benefits of kernelization allow us to compose the signature map with a static kernel enhancing expressiveness and allowing us to circumvent the curse of dimensionality incurred by high-degree tensors. However, their practical application faces significant computational barriers, particularly the quadratic complexity in sequence length and sample size, which limits scalability. We address this limitation by introducing Random Fourier Signature Features, a novel approximation method that combines the algebraic richness of path signatures with the efficiency of random feature maps. By constructing low-dimensional embeddings that approximate the signature kernel, this approach dramatically reduces the computational overhead while preserving key properties of the kernel with high probability. By refining this method with dimensionality reduction techniques, we end up with even more scalable variants capable of handling millions of sequences. This innovation extends the applicability of signature kernels to large-scale machine learning tasks, offering a practical solution for computationally intensive domains.

Thus, we make a contribution to signature kernels by overcoming their scalability challenges, introducing a random feature-based approximation that is both efficient and theoretically robust. The Random Fourier Signature Features approach leverages probabilistic concentration arguments to ensure accurate kernel approximations with high probability. Experimental evaluations demonstrate that this approach achieves performance on par with full-rank signature kernels on moderate datasets while significantly outperforming other random feature methods on large-scale datasets. This chapter not only addresses the computational bottlenecks of signature kernels but also establishes Random Fourier Signature Features as a scalable and versatile tool for modern machine learning, enabling the practical application of signature methods in domains requiring high-throughput data processing.

While working on this project, a general \texttt{Scikit-Learn} compatible Python package has also been produced for all kinds of signature kernel computations on \texttt{GPU} called \texttt{KSig} available at \texttt{\href{https://github.com/tgcsaba/ksig}{https://github.com/tgcsaba/ksig}}, and a preprint that serves as a user guide and documentation to it \cite{toth2025ksig}. This paper is not included as part of this thesis due to space limitations.

\paragraph{Chapter \ref{ch:vfsf}}
Time series forecasting is a crucial task in fields ranging from finance and energy to healthcare and logistics, requiring models that can capture both short-term dynamics and long-term dependencies. Traditional Gaussian processes (\texttt{GP}s) provide a powerful probabilistic framework but struggle with scalability and context adaptability for long time series data. We introduce Recurrent Sparse Spectrum Signature Gaussian Processes (\texttt{RS\textsuperscript{3}GP}), a novel approach that combines the efficiency of low-rank signature features with variational inference in Gaussian processes. \texttt{RS\textsuperscript{3}GP} incorporates a principled forgetting mechanism using a novel decay parameterization, adjusting the context length to focus on more recent time steps. This innovation enables \texttt{RS\textsuperscript{3}GP} to overcome the challenges of traditional \texttt{GP}s and signature-based models, offering a scalable and interpretable solution for time series forecasting across diverse problems.

We present \texttt{RS\textsuperscript{3}GP} as a novel approach that bridges the gap between theoretical rigour and practical applicability in time series forecasting. By leveraging the expressive power of Random Fourier Signature Features and introducing a decay-based forgetting mechanism, \texttt{RS\textsuperscript{3}GP} dynamically adjusts its focus on relevant data segments, making it particularly effective in scenarios with both short-term and long-term dependencies. The model is computationally efficient, processing long sequences in real time through efficient parallelized computations while maintaining a scalable memory footprint. Experimental evaluations highlight its superior performance over traditional \texttt{GP}s and competitive results against deep learning models on synthetic and real-world datasets. This contribution establishes \texttt{RS\textsuperscript{3}GP} as a competitive approach with the state-of-the-art for scalable, probabilistic time series forecasting. Overall, the proposed methodology opens new directions in advancing scalable and interpretable models in time series forecasting.
\begingroup
\chapter{Path Signatures} \label{ch:back}

\section{Tensors, tensor algebras, and their dual} \label{sec:back:tensors}
First, we construct the space that path signatures live in called the tensor algebra. 

\subsection{Tensor products} Here we provide a brief overview of tensor products of vector spaces, which we will use to construct our feature space called the tensor algebra .

\paragraph{Preliminary example} We start with some preliminary examples about tensors.
\begin{example}[Tensors on $\bbR^d$]
If $\bx = (x_1, \ldots, x_d) \in \bbR^d $ and $\by = (y_1, \ldots, y_e) \in \bbR^e $ are two vectors, then their tensor product $\bx \otimes \by$ is defined as the $ (d\times e) $-matrix, or degree-$2$ tensor, with entries $ (\bx \otimes \by)_{i,j} = x_i y_j $. This is also commonly called the outer product of the two vectors. The space $ \bbR^d \otimes \bbR^e $ is defined as the linear span of all degree-$2$ tensors $\bx \otimes \by $ for $\bx \in \bbR^d, \by \in \bbR^e $.
If $\bz \in \bbR^f $ is another vector, then one may form a degree-$3$ tensor $\bx \otimes \by \otimes \bz $ with shape $ (d\times e \times f) $ defined to have entries $ (\bx \otimes \by \otimes \bz)_{i,j,k} = x_iy_jz_k $. The space $\bbR^d \otimes \bbR^e \otimes \bbR^f $ is analogously defined as the linear span of all degree-$3$ tensors $\bx \otimes \by \otimes \bz $ for $ \bx \in \bbR^d, \by \in \bbR^e, \bz \in \bbR^f$, and so forth for higher orders.
\end{example}
\paragraph{Abstract tensor product} The abstract definition of the tensor product, which holds generally for infinite-dimensional vector spaces is given below.
\begin{definition}[Abstract tensor product]
If $U$ and $V$ are (not necessarily finite-dimensional) vector spaces, their tensor product $U \otimes V$ is a vector space together with a bilinear map $\otimes: U \times V \to U \otimes W$, such that any bilinear map $B: U \times V \to W$ factors through $U \otimes V$, i.e.~there exists a unique linear map $\tilde B: U \otimes V \to W$, which satisfies $B = \tilde B \circ \otimes$. Let $\cB_U$ and $\cB_V$ respectively be bases of $U$ and $V$. Then, $U \times V$ is unique up to isomorphism, and its basis is formed by the tensor product of the bases, i.e.~$\cB_{U \otimes V} = \curls{\bu \otimes \bv \in U \otimes V \setgiven \bu \in \cB_U, \bv \in \cB_V}$.
\end{definition}

\begin{example}
    Similarly to the example about $\bbR^d$ above, a coordinate-wise construction of the tensor product is given by identifying $U \otimes V$ with the set of functions from $\cB_U \times \cB_V$ to $\bbR$ that are nonzero only on finitely many elements, which is a vector space by pointwise operations on functions.  Specifically, for $\bu \in \cB_U$ and $\bv \in \cB_V$, we may identify $\bu \otimes \bv$ with the function that takes $1$ on $(\bu, \bv)$ and $0$ on every other element of $\cB_U \times \cB_V$. Then, we can extend this to $U \otimes V$ by asserting the bilinearity of $\otimes$, so that if we denote $\bx = \sum_{\bu \in \cB_U} \alpha_\bu \bu$ and $\by = \sum_{\bv \in \cB_V} \beta_\bv \bv$, we have that
\begin{align}
    \bx \otimes \by = \pars{\sum_{\bu \in \cB_U} \alpha_{\bu}\bu} \otimes \pars{\sum_{\bv \in \cB_V} \beta_{\bv} \bv} = \sum_{\bu \in \cB_U} \sum_{\bv \in \cB_V} \alpha_{\bu} \beta_{\bv} \bu \otimes \bv,
\end{align}
where finitely many coefficients are nonzero in $\curls{\alpha_\bu \in \bbR \setgiven \bu \in \cB_U}$ and $\curls{\beta_\bv \in \bbR \setgiven \bv \in \cB_V}$.
\end{example}

We refer the reader to \cite{ryan2013introduction, yokonuma1992tensor, lang2002algebra} for further details about abstract tensors.

\paragraph{Tensor products of Hilbert spaces}
It is often that one comes across the definition of the tensor product formulated in the category of vector spaces as an abstract vector space such that any bilinear map factors through it as discussed in the previous paragraph. This can then be endowed with an inner product and completed to get a tensor product in the category of Hilbert spaces. However, there also exist explicit models of tensor products of Hilbert spaces. Hence, we introduce a concrete model of the Hilbert space tensor product originally proposed by \cite{murray1936rings}.

\begin{definition}
Let $\cH_1, \dots, \cH_m$ be Hilbert spaces. To each element $(h_1, \dots, h_m) \in \cH_1 \times \cdots \cH_m$, associate the multi-linear operator $h_1 \otimes \cdots \otimes h_m: \cH_1 \times \cdots \times \cH_m \to \bbR$ defined for each $(f_1, \dots, f_m) \in \cH_1 \times \cdots \times \cH_m$ by
\begin{align}
    (h_1 \otimes \cdots \otimes h_m)(f_1, \dots, f_m) = \inner{h_1}{f_1}_{\cH_1} \cdots \inner{h_m}{f_m}_{\cH_m}.
\end{align}
Take the linear span of all such multi-linear operators to build the space
\begin{align}
    \cH_1 \otimes^\p  \cdots \otimes^\p  \cH_m = \Span\curls{h_1 \otimes \cdots \otimes h_m \setgiven h_1 \in \cH_1, \dots, h_m \in \cH_m},
\end{align}
and endow $\cH_1 \otimes^\p  \cdots \otimes^\p  \cH_m$ with an inner product via
\begin{align} \label{eq:back:inner_tensor}
    \inner{h_1 \otimes \cdots h_m}{f_1 \otimes \cdots \otimes f_m}_{\cH_1 \otimes^\p  \cdots \otimes^\p  \cH_m} = \inner{h_1}{f_1}_{\cH_1} \cdots \inner{h_m}{f_m}_{\cH_m}
\end{align}
for all $h_1, f_1 \in \cH_1, \dots, h_m, f_m \in \cH_m$, and extend by linearity to $\cH_1 \otimes^\p  \cdots \otimes^\p  \cH_m$. Taking the topological completion of this space under this inner product gives a Hilbert space denoted by $\cH_1 \otimes  \cdots \otimes \cH_m$ called the tensor product of the Hilbert spaces $\cH_1, \dots, \cH_m$.
\end{definition}

\subsection{Tensor algebras} \label{sec:back:tensalgs}
Now that we have discussed how to take tensor products of vector (Hilbert) spaces, we show how to embed a linear space $V$ into a much bigger linear space $\T{V}$, which is also an associative algebra\footnote{An algebra $A$ is a vector space $A$, where one can multiply elements together, i.e.~$\ba \bb \in A$ for $\ba, \bb \in A$.} using a so-called free construction. 
Since the tensor product is associative, we can unambiguously take tensor powers of the vector space $V$, and we denote $V^{\otimes m} = V \otimes \cdots \otimes V$. Note that if $\cB_V$ is a basis of $V$, then the basis of $V^{\otimes m}$ is given by all elements of the form
\begin{align}
    \bv_1 \otimes \cdots \otimes \bv_m \quad \text{for} \quad \bv_1, \dots, \bv_m \in \cB_V.
\end{align}

\begin{definition}[Extended tensor algebra]
We define the extended tensor algebra as formal series of tensors over a vector space $V$ indexed by their degree $m \in \bbN$,
\begin{align} \label{eq:back:free_alg}
   \T{V} = \prod_{m \geq 0} V^{\otimes m} = \curls{(t_0, \bt_1, \bt_2, \dots) \setgiven \bt_m \in V^{\otimes m} \text{ for } m \in \bbN},
\end{align}
where by convention $V^{\otimes 0} = \bbR$ is a scalar.
$\T{V}$ is again a vector space with vector addition and scalar multiplication defined coordinate-wise as
\begin{align}
    \bs + \bt = \pars{\bs_m + \bt_m}_{m \geq 0}, \quad \lambda \bs = \pars{\lambda \bs_m}_{m \geq 0}.
\end{align}
$V$ is a linear subspace of $\T{V}$ by identifying $\bv \in V$ as $(0, \bv, \mathbf{0}, \mathbf{0}, \dots) \in  \T{V}$. 
Further, $\T{V}$ is also an associative unital algebra with unit $\mathbf{1} = (1, \mathbf{0}, \dots) \in \T{V}$. It is endowed with a (noncommutative\footnote{Noncommutative means that $\ba \bb \neq \bb \ba$ in general for elements $\ba, \bb \in V$ of the algebra.} when $\dim V > 1$) product defined for $\bs, \bt \in \T{V}$
\begin{align} \label{eq:back:alg_mult}
    \bs \cdot \bt = \pars{\sum_{i=0}^m \bs_i \otimes \bt_{m-i}}_{m \geq 0} \in  \T{V}.
\end{align}
An element $\bt = (t_0, \bt_1, \dots) \in \T{V}$ is invertible if and only if $t_0 \neq 0$, with inverse
\begin{align}
    \bt^{-1} = \frac{1}{t_0} \sum_{n \geq 0} \pars{\mathbf{1} - \frac{\bt}{t_0}}^n
\end{align}
\end{definition}
\begin{definition}[Tensor algebra]
    Another commonly encountered formulation of the tensor algebra consists of sequences of tensors with only finitely many non-zero coefficients, which we refer to simply as the tensor algebra. It is defined as
    \begin{align} 
        \TV{V} = \bigoplus_{m \geq 0} V^{\otimes m} = \curls{(t_0, \bt_1, \bt_2, \dots, \bt_M, \mathbf{0}, \mathbf{0}, \dots) \setgiven \bt_m \in V^{\otimes m} \text{ for } m \in \bbN, M \in \bbN}.
    \end{align}
    All algebraic operations carry over from $\T{V}$ to $\TV{V}$. Hence, we may equivalently define $\TV{V}$ as the subalgebra of $\T{V}$ where all but finitely many projections are zero.
\end{definition}
For example, take $V = \bbR^d$, then the degree-$0$ term is a scalar, the degree-$1$ term is a vector, the degree-$2$ term is matrix, the degree-$3$ term is a $d \times d \times d$ array, and so forth.
This process of turning $V$ into an algebra $\TV{V}$ or $\T{V}$ is a free construction; informally this means that this is the minimal structure that turns $V$ into an algebra; for more details about tensor algebras, see \cite{yokonuma1992tensor, lang2002algebra, reutenauer2003free}.

\begin{definition}[Tensor algebra over a Hilbert space]
Now, let us consider the case when $V = \cH$ is a Hilbert space. 
We now define for $\bs, \bt \in \bigoplus_{m \geq 0} \cH^{\otimes m}$ their inner product as
\begin{align}\label{ref:hilbert inner product}
    \inner{\bs}{\bt}_{\bigoplus_{m \geq 0} \cH^{\otimes m}} = \sum_{m \geq 0} \inner{\bs_m}{\bt_m}_{\cH^{\otimes m}},
\end{align}
where the inner product $\inner{\bs_m}{\bt_m}_{\cH^{\otimes m}}$ on $\cH^{\otimes m}$ is as in \eqref{eq:back:inner_tensor}.
Finally, the topological completion of $\bigoplus_{m \geq 0} \cH^{\otimes m}$ in this inner product gives a Hilbert space $\TV{\cH}$, which we call the tensor algebra over the Hilbert space $\cH$, which can be equivalently defined as
\begin{align} \label{eq:back:hilt_def}
    \TV{\cH} = \{ \bt = (t_0,\bt_1,\bt_2,\ldots): \bt_m \in \cH^{\otimes m}, \,\langle \bt, \bt \rangle_{\TV{\cH}} < \infty\}.
\end{align}
\end{definition}

\subsection{Duals of tensors}
In this section, we study linear functionals acting on tensor algebras over vector spaces following the exposition in \cite{lyons2007differential}. Recall that the algebraic dual $V^\star$ for a vector space $V$ denotes the set of all linear functions from $V$ to $\bbR$, i.e. $\L(V, \bbR)$. When $V$ is finite-dimensional, $V$ and $V^\star$ are isomorphic in a basis dependent way, and in particular $\dim V = \dim V^\star$. Let $\curls{\be_1, \dots, \be_d}$ be a basis of $V$. This induces a dual basis $\{\be_1^\star, \dots \be_d^\star\}$ of $V^\star$ uniquely defined by
\begin{align}
    \inner{\be_i^\star}{\be_j} = \delta_{i,j} = \begin{cases}
        1 &\text{ if } i = j,\\
        0 &\text{ otherwise}.
    \end{cases}
\end{align}
Similarly, we know that the basis of $V^{\otimes m}$ is given by
\begin{align}
    \curls{\be_{i_1} \otimes \dots \otimes \be_{i_m} \setgiven i_1, \dots, i_m \in [d]},
\end{align}
and the basis of $(V^\star)^{\otimes m}$ is given by the elements
\begin{align}
    \curls{\be_{i_1}^\star \otimes \cdots \otimes \be_{i_m}^\star \setgiven i_1, \dots, i_m \in [d]}.
\end{align}
Now, note that $\pars{V^{\otimes m}}^\star \cong (V^\star)^{\otimes m}$, where the isomorphism is given by identifying the basis of $\pars{V^{\otimes m}}^\star$ with the basis of $(V^\star)^{\otimes m}$ such that
\begin{align}
    \be_{i_1}^\star \otimes \cdots \otimes \be_{i_m}^\star \mapsto \pars{\be_{i_1} \otimes \cdots \otimes \be_{i_m}}^\star, 
\end{align}
and this isomorphism is natural, that is, independent of the choice of basis for V, i.e.
\begin{align}
    \inner{\be_{i_1}^\star \otimes \cdots \otimes \be_{i_m}^\star}{\be_{j_1} \otimes \cdots \otimes \be_{j_m}} = \delta_{i_1, j_1} \cdots \delta_{i_m, j_m}.
\end{align}
The linear action of $(E^\star)^{\otimes m}$ on $E^{\otimes m}$ extends naturally to a linear mapping $(E^\star)^{\otimes m} \to \T{V}^\star$ defined for a $\bt = (t_0, \bt_1, \dots) \in \T{V}$ by
\begin{align}
    \inner{\be_{i_1}^\star \otimes \cdots \otimes \be_{i_m}^\star}{\bt} = \inner{\be_{i_1}^\star \otimes \cdots \otimes \be_{i_m}^\star}{\bt_m}.
\end{align}
In essence, if we think of $\bt$ as a non-commuting power series in the letters $\curls{i_1, \dots i_d}$, then this action picks up the coefficient of the word $I = (i_1, \dots i_m) \in [d]^{m}$. Analogously, by taking linear combinations of such linear functionals, the action of $\TV{V^\star}$ can be extended to a linear mapping $\TV{V^\star} \to \T{V}^\star$, and with this identification, $\TV{V^\star}$ can be identified as a strict subspace of $\T{V}^\star$. It is important that $\TV{V^\star}$ contains sequences of tensors with finitely many nonzero coefficients, since this allows to consider its action on $\T{V}$ without any issues of convergence.

Next, we introduce a product on $\TV{V^\star}$ which turns it into a commutative algebra.

\begin{definition}[Shuffle product \cite{lyons_differential_2007}]
    Let $\curls{\be_1^\star, \dots, \be_d^\star}$ be a basis of $V^\star$, and let $I = (i_1, \dots, i_n) \in [d]^n$, $J = (i_{n+1}, \dots, i_{n+m}) \in [d]^m$ be arbitrary multi-indices. The shuffle product of two elementary tensors $\be_I^\star \in (V^\star)^{\otimes n}$ and $\be_J^\star \in (V^\star)^{\otimes m}$ is defined as
    \begin{align}
        \be_I^\star \shuffle \be_J^\star \sum_{\sigma \in \sh(n, m)} \be^\star_{i_{\sigma^{-1}(1)}} \otimes \cdots \otimes \be^\star_{i_{\sigma^{-1}(n+m)}},
    \end{align}
    where the summation is taken over all $(n, m)$-shuffles, that is over the subset of all permutations of $\curls{1, 2, \dots, n + m}$, that satisfy $\sigma(1) < \dots < \sigma(n)$ and $\sigma(n+1) < \dots < \sigma(n+m)$.
    It can be checked that this construction does not depend on the choice of basis, and turns $\TV{V^\star}$ into a commutative algebra by considering its linear extension.
\end{definition}

\subsection{Motivating example: polynomial features}
Finally, before introducing path signatures, we consider a motivating example in the spirit of statistical learning, and recall how polynomials in the coordinates of a vector $\bx = (x_1, \dots, x_d) \in \bbR^d$ can be used to approximate continuous functions on compact sets, a property known as universality in the machine learning literature.

\begin{definition}[Tensor exponential]
    The tensor exponential is defined as 
    \begin{align} \label{eq:back:tensor_exp}
        \exp_{\otimes}: V \to \T{V}, \quad \exp_{\otimes}(\bx) = \pars{1, \bx, \frac{\bx^{\otimes 2}}{2!}, \dots, \frac{\bx^{\otimes m}}{m!}, \dots}.
    \end{align}
    That is, the map $\exp_\otimes$ takes a vector $\bx \in \bbR^d$ and maps it to the infinite sequence of tensors, which has as its degree-$m$ term $\exp_{\otimes_m}(\bx) = {\bx^{\otimes m}} / {m!}$.
\end{definition}

Let $V = \bbR^d$. We can build a functions $f: V \to \bbR$ by taking linear functionals of $\exp_{\otimes}$, that is for $\ell \in \TV{V^\star}$, we may consider the composition
    \begin{align}\label{eq:back:pol feature}
    f: \bx \mapsto \ell \odot \exp_{\otimes} (\bx) = \inner{\ell}{\exp_\otimes(\bx)}.
    \end{align}
Hence, any choice of $\ell \in \TV{V^\star}$ induces a function $f: V \to \bbR$, which is a linear function in terms of $\exp_\otimes$. It might be helpful to spell out the definition of $f$ in coordinates. By the definition of $\ell = (\ell_0, \ell_1, \dots, \ell_M, \mathbf{0}, \dots) \in \TV{V^\star}$, we have for some $M \in \bbN$ 
\begin{align}
    \inner{\ell}{\exp_\otimes(x)} = \sum_{m=0}^M \frac{1}{m!} \inner{\ell_m}{\bx^{\otimes m}}.
\end{align}
Spelled out in coordinates, $\bx=(x_1,\ldots,x_d)$ and $\ell_m = \sum_{I \in [d]^m} a_I \be_I^\star$, this reads as 
\begin{align}
    \inner{\ell_m}{\bx^{\otimes m}} = \sum_{i_1,\ldots,i_m \in [d]} a_{i_1,\dots,i_m} \be^\star_{i_1,\dots,i_m}(\bx) = \sum_{i_1,\dots,i_m \in [d]} a_{i_1, \dots, i_m} x_{i_1}\cdots x_{i_m}.
\end{align}
Thus putting this together, formulated in coordinates $f$ is written as 
\begin{align}
    f(\bx) &= a_0+ a_1 x_1 + \cdots a_d x_d \\
    &+ \frac{1}{2!}\pars{a_{1,1} x_1^2 + a_{1,2} x_1 x_2 + \cdots + a_{d,d} x_d^2} \\
    &+ \vdots \\
    &+ \frac{1}{M!}\pars{a_{1,\ldots,1} x_1^M +\cdots +a_{d,\ldots,d} x_d^M}
\end{align}
which is a polynomial of degree-$m$. The Weierstrass approximation theorem guarantees that continuous functions can be uniformly approximated over compact sets in terms of polynomials. This property makes polynomials a classical choice for machine learning, making it possible to learn from vector-valued data. Due to this approximation property, fit datasets well, but they can exhibit arbitrarily large oscillations between the interpolation points, and they are often replaced by other nonlinearities \cite{rasmussen2006}.

\section{Path signatures}\label{sec:back:pathsig} 

In this part, let $(V, \norm{\cdot}_V)$ be a finite-dimensional Banach space. In the following, we define the signature of a path and shed light on its properties, which make it a viable feature map for learning from streamed data. Path signatures can be seen as a natural generalization of the tensor exponential map, but instead of mapping a point in $V$ to a sequence of tensors, its domain is the space of paths. 
It generalizes many of the nice properties of polynomial features such as universality and simultaneously gives the option to ignore time-parameterization without an explicit search over all possible time changes (like in dynamic time warping approaches).

\subsection{Definition} \label{appendix:sig_properties}
\begin{definition}
    A continuous path $\bx: [0, T] \to V$ is of bounded variation if
    \begin{align}
        \norm{\bx}_\onevar = \sup_{D \subset [0, T]} \sum_{t_i \in D} \norm{\bx_{t_{i}} - \bx_{t_{i-1}}}_V < \infty,
    \end{align}
    where $D = \{0 = t_0 < t_1 < \cdots < t_n = T \setgiven n \in \bbN \}$ runs over all finite partitions of $[0, T]$.
\end{definition}
We denote the set of bounded variation paths on some arbitrary time horizon by
\begin{align*}
    \Paths(V) = \curls{\bx \in C([0,T], V): T \in \bbR_+, \bx_0 =0, \| \bx \|_\onevar<\infty },
\end{align*}

To ease the notation, let us define the continuous $m$-simplex:
\begin{align}
    \Delta_m([0, T]) = \curls{ (t_1, \ldots, t_m) \in \bbR^m \setgiven 0 < t_1 < \cdots < t_m < T },
\end{align}
and for a path $(\bx_t)_{t \in [0, T]} \in \Paths(V)$ denote its time horizon by $\hor{\bx} = T$.

\begin{definition}
A classic way to obtain a structured and hierarchical description of a path $\bx \in \Paths(V)$ is by computing a sequence of iterated integrals called the path signature given as tensors of increasing degrees such that the degree-$m$ object is
\begin{align} \label{eq:back:pathsig}
    \S_m(\bx) = \int_{\bt \in \Delta_m([0, \hor{x}])} \d\bx_{t_1} \otimes \cdots \otimes \d\bx_{t_m}
    = \int_{\bt \in \Delta_m([0, \hor{x}])} \dot \bx_{t_1} \otimes \cdots \otimes\dot \bx_{t_m} \d t_1\cdots \d t_m,
\end{align}
where the integrals are defined in the Riemann--Stieltjes sense. Formally, we refer to the map that takes a path to its iterated integrals as the path signature map, and it is defined as
\begin{align}
    \S: \Paths(V) \to \T{V}, \quad \S(\bx) = \pars{1, \S_1(\bx), \S_2(\bx), \ldots}.
\end{align}
\end{definition}

For a path $(\bx_t)_t = (x^1_{t}, \dots, x^d_{t})_t \in \Paths(V)$ that evolves in $V$, one can spell this out in coordinates: the $m$-th signature level $\S_m(\bx) \in V^{\otimes m}$ is the tensor that has as its $(i_1,\ldots, i_m) \in [d]^m$--th coordinate entry the real number computed by the Riemann--Stieltjes integral
\begin{align*} \label{eq:back:sigcoords}
    \S_m(\bx)_{i_1,\dots,i_m} = \int_{\bt \in \Delta_m([0, \hor{\bx}])} \d x^{i_1}_{t_1} \cdots \d x^{i_m}_{t_m} 
    = \int_{\bt \in \Delta_m([0, \hor{\bx}])} \dot x^{i_1}_{t_1} \cdots \dot x^{i_m}_{t_m} \d t_1\cdots \d t_m
\end{align*}
where $\dot x^i_t = \frac{\d x^i_t}{\d t}$ denotes the time derivative.  


\begin{proposition}[Factorial decay \cite{lyons2007differential}]
    Let $\bx \in \Paths(V)$ be a bounded variation path. We have for the level-$m$ path signature feature for $m \in \bbN$ that
    \begin{align}
        \norm{\S_m(\bx)}_{V^{\otimes m}} \leq \frac{\norm{\bx}_{\onevar}^m}{m!}.
    \end{align}
\end{proposition}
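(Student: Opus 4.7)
The plan is the classical three-step argument: pass the tensor norm inside the iterated integral via sub-multiplicativity, dominate the vector-valued integrator by the (scalar) total variation measure, and then invoke the standard volume of the continuous $m$-simplex.

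First I would use the fact that on $V^{\otimes m}$ the (projective/cross) norm satisfies $\norm{\bv_1 \otimes \cdots \otimes \bv_m}_{V^{\otimes m}} \leq \norm{\bv_1}_V \cdots \norm{\bv_m}_V$. Applying this under the Riemann--Stieltjes integral in \eqref{eq:back:pathsig} gives the bound
\begin{align}
    \norm{\S_m(\bx)}_{V^{\otimes m}}
    \leq \int_{\Delta_m([0,\hor{\bx}])} \norm{\d \bx_{t_1}}_V \cdots \norm{\d \bx_{t_m}}_V.
\end{align}
To make this rigorous one would pass through a Riemann sum along a partition $D$, apply sub-multiplicativity term-by-term, recognise that $\sum \norm{\bx_{t_i} - \bx_{t_{i-1}}}_V$ is bounded by $\norm{\bx}_\onevar$, and then take the mesh of $D$ to zero.

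Next, let $\tau(t) = \sup_{D \subset [0,t]} \sum \norm{\bx_{t_i} - \bx_{t_{i-1}}}_V$ be the cumulative $1$-variation of $\bx$ on $[0,t]$. Then $\tau$ is non-decreasing, continuous (as $\bx$ is continuous of bounded variation), with $\tau(\hor{\bx}) = \norm{\bx}_\onevar$, and the positive Borel measure $\d\tau$ dominates the total-variation measure $\norm{\d \bx_t}_V$ in the sense that $\norm{\bx_t - \bx_s}_V \leq \tau(t) - \tau(s)$ for all $s \leq t$. Substituting the dominating measure gives
\begin{align}
    \norm{\S_m(\bx)}_{V^{\otimes m}}
    \leq \int_{\Delta_m([0,\hor{\bx}])} \d\tau(t_1) \cdots \d\tau(t_m).
\end{align}

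Finally, the change of variables $u_i = \tau(t_i)$ maps $\Delta_m([0,\hor{\bx}])$ onto $\Delta_m([0,\norm{\bx}_\onevar])$ up to a null set, reducing the right-hand side to the Lebesgue volume of the $m$-simplex, which is exactly $\norm{\bx}_\onevar^m / m!$. I expect the main obstacle to be justifying the domination step cleanly: one needs to argue that the vector-valued Stieltjes integral $\int \norm{\d \bx}$ really is controlled measure-theoretically by $\d\tau$. A clean workaround, if that becomes fiddly, is to prove the claim by induction on $m$: write $\S_m(\bx) = \int_0^{\hor{\bx}} \S_{m-1}(\bx|_{[0,t]}) \otimes \d\bx_t$, apply the inductive hypothesis to the inner factor against $\tau(t)^{m-1}/(m-1)!$, and integrate against $\d\tau$, which telescopes to $\norm{\bx}_\onevar^m / m!$ via $\int_0^{\hor{\bx}} \tau(t)^{m-1}\,\d\tau(t) = \norm{\bx}_\onevar^m / m$.
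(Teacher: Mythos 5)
The paper does not prove this proposition; it is stated with a citation to \cite{lyons2007differential}, so there is no in-paper argument to compare against. Your proposal is correct and is essentially the standard proof from that reference: the norm on $V^{\otimes m}$ used here (whether projective or the Hilbert-space cross norm of Section \ref{sec:back:tensors}) multiplies over elementary tensors, so the Riemann-sum passage to $\norm{\S_m(\bx)}\leq\int_{\Delta_m}\norm{\d\bx_{t_1}}\cdots\norm{\d\bx_{t_m}}$ is legitimate, and domination by the cumulative variation $\tau$ follows from $\norm{\bx_t-\bx_s}_V\leq\tau(t)-\tau(s)$. Two small remarks: continuity of $\tau$ (which you use implicitly when claiming the change of variables loses only a null set, since $\d\tau$ must be atomless) does hold for continuous bounded-variation paths, but it is worth stating; and an even lighter way to finish, avoiding the change of variables altogether, is the symmetry argument — the $m!$ permuted copies of the simplex tile $[0,T]^m$ up to $\d\tau^{\otimes m}$-null diagonal sets, so $\int_{\Delta_m}\d\tau(t_1)\cdots\d\tau(t_m)\leq\frac{1}{m!}\bigl(\int_0^T\d\tau\bigr)^m=\frac{\norm{\bx}_\onevar^m}{m!}$. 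Your inductive fallback via $\int_0^{\hor{\bx}}\tau(t)^{m-1}\,\d\tau(t)=\norm{\bx}_\onevar^m/m$ is also correct and is arguably the cleanest route.
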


Next, we show that there is a precise connection between the algebra structure of $\T{\bbR^d}$ and operations on paths. Hence, we first define the concatenation operation on paths, and then state how this affects the signature.

\begin{definition}[Path concatenation]
    Let $\bx, \by \in \Paths(V)$ such that $\bx: [0, S] \to V$ and $\by: [0, T] \to V$. Their concatenation denoted by $\bx \star \by$ is defined for $t \in [0, S+T]$ as
    \begin{align}
        (\bx \star \by)_t = \begin{cases}
            \bx_t \quad&\text{if } t \in [0, S]\\
            \bx_S - \by_0 + \by_{t-S} &\text{if } t \in [S, S+T].
        \end{cases}
    \end{align}
\end{definition}

\begin{proposition}[Chen identity \cite{lyons2007differential}] \label{prop:chen}
    Let $\bx, \by \in \Paths(V)$. Then, we have that
    \begin{align}
        \S(\bx \star \by) = \S(\bx) \cdot \S(\by).
    \end{align}
\end{proposition}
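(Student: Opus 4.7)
The plan is to verify Chen's identity level-by-level: for each $m \in \bbN$, I will show that
\begin{align}
\S_m(\bx \star \by) = \sum_{i=0}^m \S_i(\bx) \otimes \S_{m-i}(\by),
\end{align}
which is precisely the degree-$m$ component of the algebra product $\S(\bx) \cdot \S(\by)$ defined in \eqref{eq:back:alg_mult}. The whole proof is a change-of-variables argument plus a partition of the integration simplex, so the combinatorial structure of the tensor algebra product will appear naturally from a decomposition of $\Delta_m([0, S+T])$.

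\textbf{Step 1: Decompose the simplex.} Write $\bx:[0,S]\to V$ and $\by:[0,T]\to V$, so $\bx \star \by : [0, S+T] \to V$. For each $i \in \{0,1,\dots,m\}$ let
\begin{align}
A_i = \{(t_1,\ldots,t_m) \in \Delta_m([0, S+T]) : t_i < S < t_{i+1}\},
\end{align}
with the convention $t_0 = 0$ and $t_{m+1} = S+T$. The sets $\{A_i\}_{i=0}^m$ partition $\Delta_m([0,S+T])$ up to a Lebesgue-null set (the set where some $t_j$ equals $S$). Because $\bx \star \by$ is continuous and of bounded variation, the Riemann--Stieltjes iterated integral over this null set contributes nothing, so
\begin{align}
\S_m(\bx \star \by) = \sum_{i=0}^m \int_{A_i} \d(\bx \star \by)_{t_1} \otimes \cdots \otimes \d(\bx \star \by)_{t_m}.
\end{align}

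\textbf{Step 2: Identify the integrand on each piece.} On $A_i$ we have $t_1,\dots,t_i \in (0,S)$ and $t_{i+1},\dots,t_m \in (S, S+T)$. By the definition of $\bx \star \by$, the measure $\d(\bx \star \by)_t$ equals $\d\bx_t$ for $t \in (0,S)$ and equals $\d\by_{t-S}$ for $t \in (S, S+T)$ (this is where the shift by $\bx_S - \by_0$ vanishes, since it is a constant offset). Thus on $A_i$ the integrand becomes $\d\bx_{t_1} \otimes \cdots \otimes \d\bx_{t_i} \otimes \d\by_{t_{i+1}-S} \otimes \cdots \otimes \d\by_{t_m-S}$.

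\textbf{Step 3: Factorize via Fubini and change of variables.} Substitute $u_j = t_j$ for $j \leq i$ and $u_j = t_j - S$ for $j > i$. Then $A_i$ becomes the product $\Delta_i([0,S]) \times \Delta_{m-i}([0,T])$, and since the integrand splits into a tensor product of a factor depending only on $(u_1,\dots,u_i)$ and one depending only on $(u_{i+1},\dots,u_m)$, Fubini yields
\begin{align}
\int_{A_i} \d(\bx \star \by)_{t_1} \otimes \cdots \otimes \d(\bx \star \by)_{t_m} = \S_i(\bx) \otimes \S_{m-i}(\by).
\end{align}
Summing over $i$ gives the claim. The product structure in $\T{V}$ as defined in \eqref{eq:back:alg_mult} then packages these level-$m$ identities into the single statement $\S(\bx \star \by) = \S(\bx) \cdot \S(\by)$.

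\textbf{Main obstacle.} The substantive analytic point is justifying that the boundary set where some $t_j = S$ contributes nothing, and that Fubini applies to Riemann--Stieltjes iterated integrals. Both follow from the bounded variation hypothesis via the factorial bound cited just above, which gives absolute summability of the total-variation-based bounds on each $A_i$; once this is in place, the rest is a clean change-of-variables. Everything else is notation-heavy but routine.
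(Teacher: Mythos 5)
Your proof is correct, and it is the standard argument: the paper itself does not prove this proposition but cites it to \cite{lyons2007differential}, and the simplex-decomposition-plus-change-of-variables argument you give is essentially the proof found there. One small point of precision: justifying that the boundary set $\{t_j = S\}$ contributes nothing should not rest on it being \emph{Lebesgue}-null (a Lebesgue-null set can carry positive mass for a general Stieltjes measure); the correct reason is that $\bx \star \by$ is continuous, so the induced Stieltjes (total-variation) measure is atomless and the hyperplane has product-measure zero. With that adjustment, the argument is complete.
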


Similarly, path reversal can also be represented in terms of the signature.
\begin{proposition}[Path reversal \cite{lyons2007differential}]
    Let $\bx \in \Paths(V)$ defined on $[0, T]$ taking values in $V$. Denote by $(\overleftarrow{\bx}_t)_{t \in [0, T]} = (\bx_{T - t})_{t \in [0, T]}$. Then, we have that
    \begin{align}
        \S(\overleftarrow{\bx}) = \S^{-1}(\bx).
    \end{align}
    In particular, the range of $\S: \Paths(V) \to \T{V}$ is a group.
\end{proposition}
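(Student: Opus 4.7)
The plan is to use Chen's identity (Proposition~\ref{prop:chen}) to reduce the claim to showing that the out-and-back concatenation $\bx \star \overleftarrow{\bx}$ has trivial signature $\mathbf{1} \in \T{V}$. Once this is established, Chen's identity gives $\S(\bx) \cdot \S(\overleftarrow{\bx}) = \S(\bx \star \overleftarrow{\bx}) = \mathbf{1}$, and since the scalar component of $\S(\bx)$ equals $1 \neq 0$, the inverse in $\T{V}$ exists and is unique by the explicit formula given in the definition of the extended tensor algebra, yielding $\S(\overleftarrow{\bx}) = \S(\bx)^{-1}$.

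To establish $\S(\bx \star \overleftarrow{\bx}) = \mathbf{1}$, I would first verify it for piecewise linear paths and then extend by density and continuity. On a single linear segment parametrised from $\mathbf{0}$ to $\bv \in V$, the iterated integrals factor directly and one computes $\S(\text{segment}) = \exp_\otimes(\bv)$. Combining this with Chen's identity, a piecewise linear path $\bx$ with increments $\bv_1, \ldots, \bv_n$ satisfies
\begin{align}
    \S(\bx) = \exp_\otimes(\bv_1) \cdot \exp_\otimes(\bv_2) \cdots \exp_\otimes(\bv_n),
\end{align}
while its reversal, whose increments are $-\bv_n, -\bv_{n-1}, \ldots, -\bv_1$ in that order, satisfies $\S(\overleftarrow{\bx}) = \exp_\otimes(-\bv_n) \cdots \exp_\otimes(-\bv_1)$. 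The key algebraic fact is that $\exp_\otimes(\bv) \cdot \exp_\otimes(-\bv) = \mathbf{1}$; this holds because $\bv$ commutes with itself, so the subalgebra it generates is commutative and the usual scalar identity $e^a e^{-a} = 1$ carries over level-by-level. Substituting into the product $\S(\bx) \cdot \S(\overleftarrow{\bx})$ and telescoping from the middle outward collapses everything to $\mathbf{1}$.

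The final step is to lift the identity from piecewise linear paths to arbitrary bounded variation paths. For $\bx \in \Paths(V)$ one takes dyadic piecewise linear interpolants $\bx^{(n)} \to \bx$ in the $1$-variation topology, observes that both reversal $\bx \mapsto \overleftarrow{\bx}$ and the signature map $\S$ are continuous in this topology (the latter with uniform control across levels thanks to the factorial decay bound from the preceding proposition), and passes to the limit in each tensor level of $\T{V}$. The ``in particular'' statement that $\im \S$ is a group then follows immediately: closure under multiplication is Chen's identity, the unit $\mathbf{1}$ is the signature of the constant path, and closure under inversion is precisely what has just been proved; associativity is inherited from $\T{V}$.

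The main obstacle is the continuity step: one must choose an appropriate topology on $\Paths(V)$, verify that reversal and concatenation are continuous there, and control the signature level-by-level together with uniform tail bounds so that convergence happens in $\T{V}$ as a whole rather than only in finitely many projections. The factorial decay estimate $\norm{\S_m(\bx)}_{V^{\otimes m}} \leq \norm{\bx}_\onevar^m / m!$ is the essential ingredient that makes this tail control tractable and legitimises the exchange of limits underlying the whole approximation argument.
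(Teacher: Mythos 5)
The paper does not actually prove this proposition — it is quoted from \cite{lyons2007differential} — so your argument has to stand on its own. Its algebraic half does: Chen's identity, the fact that a linear segment with increment $\bv$ has signature $\exp_\otimes(\bv)$, the level-wise binomial cancellation giving $\exp_\otimes(\bv)\cdot\exp_\otimes(-\bv)=\mathbf{1}$, and the telescoping from the middle outward correctly yield $\S(\bx)\cdot\S(\overleftarrow{\bx})=\mathbf{1}$ for piecewise linear $\bx$, and uniqueness of inverses for elements of $\T{V}$ with unit scalar part then identifies $\S(\overleftarrow{\bx})$ with $\S^{-1}(\bx)$. The "in particular" part is also handled correctly.

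The gap is in the limiting step, exactly where you located the difficulty but with the wrong topology: piecewise linear interpolants of a continuous bounded variation path do \emph{not} in general converge in $1$-variation. If $\bx$ is the Cantor (devil's staircase) path, $\d\bx$ is singular with respect to Lebesgue measure while the interpolant's increment measure is absolutely continuous on each partition interval, so on every interval the variation of the difference is twice the mass of $\d\bx$ there, giving $\norm{\bx-\bx^{(n)}}_\onevar = 2\norm{\bx}_\onevar$ for every partition, however fine. So the claim "$\bx^{(n)}\to\bx$ in the $1$-variation topology" cannot be used as stated. The standard repair is to weaken the mode of convergence: the interpolants converge uniformly with $\norm{\bx^{(n)}}_\onevar \leq \norm{\bx}_\onevar$, and each fixed level $\S_m$ is continuous under uniform convergence with uniformly bounded $1$-variation (equivalently, under $p$-variation convergence for any $p>1$, obtained by interpolation); combined with the factorial decay bound, which controls the tails uniformly in $n$, this lets you pass to the limit in every tensor degree, and reversal clearly preserves this mode of convergence, so the identity survives the limit. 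Alternatively, you can avoid approximation altogether: the substitution $t \mapsto T-t$ in the defining iterated integrals shows that $\S_m(\overleftarrow{\bx})$ equals $(-1)^m$ times $\S_m(\bx)$ with the order of the tensor factors reversed, after which $\S(\bx)\cdot\S(\overleftarrow{\bx})=\mathbf{1}$ is a purely algebraic (antipode-type) identity requiring no limits at all.
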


\subsection{Properties} \label{subsec:back:props}

\paragraph{Invariance under tree-like equivalence} 
A classic result going back to Chen~\cite{chen-58} shows that the map $\bx \mapsto \S(\bx)$ is injective up to tree-like equivalence. 
Loosely speaking, tree-like equivalence is from a purely analytic point of view more natural to work with than reparameterization since tree-like equivalence between paths is analogous to Lebesgue almost sure equivalence between sets.
Howevever, from a practical point of view, the difference between paths that are tree-like equivalent and paths that differ by a reparameterization is negligible and we invite the reader to use them as synonyms throughout.
Nevertheless, we give the precise definition below and refer the interested reader to~\cite{MR2630037} for a detailed discussion. 
\begin{definition}
A bounded variation path $\bx \in \Paths(V)$ is \emph{tree-like} if there exists a continuous function $h:[0, T] \rightarrow [0,\infty)$ such that $h(0)= h(T)=0$ and for all $s < t$
\begin{align}
\norm{\bx_t - \bx(s)}_V \le h(s) + h(t) - 2 \inf_{u \in [s,t]} h(u). 
\end{align}
\end{definition}
\begin{proposition}
Let $\bx, \by \in \Paths(V)$ be two paths of bounded variation. Then 
\begin{align}
\S(\bx) = \S(\by) 
\end{align}
if and only if $\bx \star \overleftarrow \by $ is tree-like where $\star$ denotes path concatenation and $\overleftarrow{\by}$ denotes path reversal.
\end{proposition}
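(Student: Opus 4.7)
The plan is to reduce both implications to a single statement about paths with trivial signature. Given $\bx, \by \in \Paths(V)$, Chen's identity (Proposition \ref{prop:chen}) together with the path reversal proposition yields
\begin{align}
    \S(\bx \star \overleftarrow{\by}) \;=\; \S(\bx) \cdot \S(\overleftarrow{\by}) \;=\; \S(\bx) \cdot \S(\by)^{-1}.
\end{align}
Since the range of $\S$ is a group, this product equals $\mathbf{1} \in \T{V}$ if and only if $\S(\bx) = \S(\by)$. Setting $\bz := \bx \star \overleftarrow{\by} \in \Paths(V)$, the proposition therefore reduces to the following single equivalence: a bounded variation path $\bz$ satisfies $\S(\bz) = \mathbf{1}$ if and only if $\bz$ is tree-like.

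For the direction \emph{tree-like} $\Rightarrow$ \emph{trivial signature}, I would argue by approximation together with Chen. First, one verifies the claim for piecewise-linear paths whose linear segments can be paired up so that each segment is exactly retraced (with opposite orientation) by a later one compatible with the tree-like structure. For such paths, a direct induction on the number of segments, using Chen's identity repeatedly to split off an adjacent cancelling pair $\bv \star (-\bv)$, shows that the signature collapses to $\mathbf{1}$. The general tree-like case then follows by approximating $\bz$ in $1$-variation by such piecewise-linear retracing paths, which is possible precisely because the height function $h$ governs the depth of every excursion; continuity of $\S: \Paths(V) \to \T{V}$ in the $1$-variation topology (a consequence of the factorial decay estimate applied level-wise to differences) lets one pass to the limit.

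For the harder direction \emph{trivial signature} $\Rightarrow$ \emph{tree-like}, I would invoke the uniqueness theorem of Hambly and Lyons \cite{MR2630037} rather than reproduce its proof. The heart of that argument, and the main obstacle here, is to construct the height function $h$ directly from $\bz$: roughly, $h(t)$ measures the length of a canonically associated reduced path up to time $t$, obtained by successively erasing tree-like sub-excursions. One then shows that the vanishing of \emph{every} iterated integral is exactly what forces the required inequality $\norm{\bz_t - \bz_s}_V \leq h(s) + h(t) - 2 \inf_{u \in [s,t]} h(u)$, because any surviving non-retraced piece would produce a non-zero higher-order signature coordinate by a geometric argument based on embedding the reduced path into a suitably chosen hyperplane and applying the Chow-type reconstruction of differences from iterated integrals.

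Combining the two directions with the opening reduction gives the proposition. The principal technical obstacle is the Hambly--Lyons construction of the height function from the signature; the forward direction and the algebraic reduction are essentially bookkeeping with Chen's identity.
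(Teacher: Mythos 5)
The paper does not actually prove this proposition: it cites Chen and Hambly--Lyons \cite{MR2630037} for the equivalence, and your argument ultimately rests on the same source, so the two approaches coincide in substance; your opening reduction via Chen's identity, path reversal, and the group property of the signature's range is correct and standard. Do note that your sketch of the forward direction (approximating a general tree-like bounded variation path in $1$-variation by piecewise-linear retracing paths) is itself a nontrivial step of the Hambly--Lyons argument -- in particular one must first show the height function can be chosen of bounded variation -- so in effect you are invoking \cite{MR2630037} for both implications, which is precisely what the paper does.
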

In particular for $\bx, \by \in \Paths(V)$ and $\ell \in \TV{V^\star}$, for any functions of the form $f(\bx) = \langle \ell, \S(\bx) \rangle$,
$f(\bx) = f(\by)$ if and only if $\bx$ and $\by$ differ by parameterization (strictly speaking, by a tree-like equivalence).
This ability to factor out time-invariance can be very powerful since the space of all possible parameterizations is huge and we never make an explicit search over possible time changes like in the DTW distance.

However, we are often interested in time parameterization, and the path signature can be made sensitive to it by adding an extra coordinate to the path $\bx \in \Paths(V)$ before computing the signature features of this enhanced path $\hat{\bx}_t = (t, \bx_t) \in \bbR \oplus V$. In other words, we make the parameterization part of the trajectory. The augmented path $\hat{\bx}$ is never tree-like since the first coordinate is strictly increasing. Hence, the map $\Paths(V) \ni \bx \mapsto \S(\hat\bx) \in \T{\bbR \oplus V}$ is injective.

\paragraph{Universality} 
One of the most attractive properties of the monomial feature map $\bx \rightarrow \texp(\bx)$ for vectors $\bx \in V$, is that any continuous function $f: V \rightarrow \bbR$ can be uniformly approximated on compact sets as a linear functional of $\texp$, that is $f(\bx) \approx \langle \ell, \texp(\bx) \rangle$ for some $\ell \in \TV{V^\star}$.
The reason is that linear combinations of monomials (polynomials) form an algebra and the Stone--Weierstrass theorem applies.
Such approximation properties of feature maps are usually referred to as ``universality'' in the ML literature.
One of the most attractive properties of the signature feature map $\bx \mapsto \S(\bx)$ for paths $\bx \in \Paths(V)$ is that a universality result holds. A central result in this direction is the shuffle identity for path signatures, which we state below.

\begin{theorem}[Shuffle identity \cite{lyons2007differential}] Let $\bx \in \Paths(V)$, and $\ell_1, \ell_2 \in \TV{V^\star}$. Then,
\begin{align}
    \inner{\ell_1}{\S(\bx)}\inner{\ell_2}{S(\bx)} = \inner{\ell_1 \shuffle \ell_2}{\S(\bx)}
\end{align}
\end{theorem}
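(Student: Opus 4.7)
The plan is first to reduce the identity to elementary dual basis tensors by bilinearity, then to carry out the classical geometric decomposition of a product of ordered simplices into shuffle-indexed simplices. Since both sides of the claimed identity are bilinear in $(\ell_1, \ell_2) \in \TV{V^\star} \times \TV{V^\star}$, it suffices to verify it for $\ell_1 = \be_I^\star$ and $\ell_2 = \be_J^\star$, where $I = (i_1, \dots, i_n) \in [d]^n$ and $J = (j_1, \dots, j_m) \in [d]^m$ are arbitrary multi-indices.

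With this reduction, the left-hand side unfolds via the coordinate formula for $\S(\bx)$ into a product of two iterated Riemann--Stieltjes integrals,
\begin{align}
\inner{\be_I^\star}{\S(\bx)} \inner{\be_J^\star}{\S(\bx)} = \pars{\int_{\bs \in \Delta_n([0,T])} \d x^{i_1}_{s_1} \cdots \d x^{i_n}_{s_n}} \pars{\int_{\bt \in \Delta_m([0,T])} \d x^{j_1}_{t_1} \cdots \d x^{j_m}_{t_m}},
\end{align}
which by Fubini equals a single integral of the joint integrand over the product region $\Delta_n([0,T]) \times \Delta_m([0,T])$.

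Next I would decompose this product region, up to a measure-zero boundary, as a disjoint union indexed by $(n,m)$-shuffles,
\begin{align}
\Delta_n([0,T]) \times \Delta_m([0,T]) = \bigsqcup_{\sigma \in \sh(n,m)} \Delta^{\sigma}_{n+m}([0,T]),
\end{align}
where $\Delta^\sigma_{n+m}$ records the $(n,m)$-shuffle $\sigma$ that dictates the relative order of the $s$- and $t$-coordinates on that piece. Summing the corresponding integrals and relabeling the integrand accordingly produces
\begin{align}
\sum_{\sigma \in \sh(n,m)} \int_{\bu \in \Delta_{n+m}([0,T])} \d x^{k_{\sigma^{-1}(1)}}_{u_1} \cdots \d x^{k_{\sigma^{-1}(n+m)}}_{u_{n+m}},
\end{align}
where $k = (i_1, \dots, i_n, j_1, \dots, j_m)$ is the concatenated multi-index. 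By the definition of the shuffle product introduced earlier, this sum is precisely $\inner{\be_I^\star \shuffle \be_J^\star}{\S(\bx)}$, which closes the identity on basis tensors and hence, by bilinearity, in full generality.

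The main obstacle is making the geometric decomposition rigorous, namely showing that the ``diagonal'' set $\{s_i = t_j \text{ for some pair } i,j\}$ contributes nothing to the iterated Riemann--Stieltjes integral. This rests on $\bx$ being continuous and of bounded variation, so that the induced scalar measures on each coordinate are atomless and the diagonals form a finite union of lower-dimensional hyperplanes of vanishing product measure. Once this null set is discarded, every point of $\Delta_n \times \Delta_m$ lies in a unique $\Delta^\sigma_{n+m}$, and the remainder of the argument is careful index bookkeeping under $\sigma$.
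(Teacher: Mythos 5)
Your proposal is correct, and it is the classical argument: the paper itself gives no proof of this statement, citing it to Lyons et al., and the proof in that reference is exactly the one you outline — reduce by bilinearity to elementary dual tensors, write the product of the two iterated integrals as an integral over $\Delta_n([0,T])\times\Delta_m([0,T])$ via Fubini, and decompose that product region, up to the null set where some $s_i=t_j$, into simplices indexed by $(n,m)$-shuffles, which reassembles into $\inner{\be_I^\star \shuffle \be_J^\star}{\S(\bx)}$. Your handling of the only delicate point is also right: continuity of the bounded variation path makes the induced Lebesgue--Stieltjes measures atomless, so the diagonal hyperplanes carry no mass and each remaining point lies in exactly one shuffle cell.
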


In particular, the range of the path signature forms an algebra of real-valued functons. This means that the Stone-Weierstrass theorem applies, and linear functionals acting on the signature allow to approximate continuous functions on paths.

\begin{proposition}
For every continuous $f: \Paths(V) \rightarrow V$, $\cK\subset \Paths(V)$ compact, $\epsilon>0$ there exists a $\ell  \in \TV{V^\star}$ such that 
\begin{align}
\label{eq:back:univ}
\textstyle{\sup_{\bx \in \cK} | f(\bx) - \langle \ell, \S(\hat\bx) \rangle | < \epsilon.}
\end{align}
The analogous result holds for paths $\bx \in \Paths(V)$ without a time coordinate by replacing the domain of paths by equivalence classes of paths under tree-like equivalence. 
For a proof and many extensions, see~\cite{chevyrev2022signature}.
\end{proposition}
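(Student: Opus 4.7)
The plan is to prove this via a Stone--Weierstrass argument on the compact set $\cK$, treating $\cA := \curls{\bx \mapsto \inner{\ell}{\S(\hat\bx)} \setgiven \ell \in \TV{V^\star}}$ as a candidate algebra of real-valued continuous functions (so really I am reducing the vector-valued case to the scalar case coordinate-wise, since the target $V$ is finite-dimensional). The first step is to fix a topology on $\Paths(V)$ (the $\onevar$-topology will do, or any topology under which $\cK$ is compact and under which $\bx \mapsto \S(\hat\bx)$ is continuous, which follows from the factorial decay bound together with continuity of the Riemann--Stieltjes integrals levelwise). Then each map $\bx \mapsto \inner{\ell}{\S(\hat\bx)}$ is continuous because $\ell$ only sees finitely many tensor levels.

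The next step is to verify the Stone--Weierstrass hypotheses. Closure under addition and scalar multiplication of $\cA$ is immediate from linearity in $\ell$. Closure under pointwise multiplication is where the \textbf{Shuffle identity} enters: for $\ell_1, \ell_2 \in \TV{V^\star}$,
\begin{align}
\inner{\ell_1}{\S(\hat\bx)} \inner{\ell_2}{\S(\hat\bx)} = \inner{\ell_1 \shuffle \ell_2}{\S(\hat\bx)},
\end{align}
and $\ell_1 \shuffle \ell_2 \in \TV{V^\star}$ again, so $\cA$ is a subalgebra. The constants lie in $\cA$ by taking $\ell = c \cdot \mathbf{1}^\star$, which pairs against the degree-$0$ term of $\S(\hat\bx) = 1$ to return $c$.

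The main remaining obstacle — and really the only nontrivial step — is point separation: I must show that for $\bx \neq \by$ in $\cK$ there exists $\ell \in \TV{V^\star}$ with $\inner{\ell}{\S(\hat\bx)} \neq \inner{\ell}{\S(\hat\by)}$. This is where the time-augmentation trick $\hat\bx_t = (t, \bx_t)$ is crucial: because the first coordinate of $\hat\bx$ is strictly increasing, the only candidate reparameterization would have to preserve the first coordinate, forcing it to be the identity, so $\hat\bx$ can never be tree-like equivalent to a distinct $\hat\by$ (this is where I would invoke the uniqueness proposition stated just before, showing $\hat\bx \star \overleftarrow{\hat\by}$ has a nontrivial monotone component and hence is not tree-like). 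Consequently $\S(\hat\bx) \neq \S(\hat\by)$ as elements of $\T{\bbR \oplus V}$, so they differ in some finite tensor level $m$, and picking $\ell$ to be the linear functional extracting that coordinate separates the two points.

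With all Stone--Weierstrass hypotheses verified on the compact Hausdorff space $\cK$, the density of $\cA$ in $C(\cK, \bbR)$ yields, for each coordinate of the target $V$, a functional $\ell$ approximating $f$ uniformly to within $\epsilon$; assembling the coordinates gives the desired approximation. The analogous statement without time augmentation follows by passing to the quotient of $\Paths(V)$ by tree-like equivalence, on which $\S$ is already injective by Chen's theorem so the separation step is automatic.
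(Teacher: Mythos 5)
Your argument is correct and follows essentially the same route the paper sketches: the shuffle identity makes the linear functionals of $\S(\hat\bx)$ a subalgebra containing constants, time-augmentation gives point separation via injectivity of $\bx \mapsto \S(\hat\bx)$, and Stone--Weierstrass on the compact set $\cK$ finishes the proof (the paper itself defers the details to \cite{chevyrev2022signature}). The only cosmetic remark is that the target should really be read as $\bbR$-valued, which you handle correctly by arguing coordinate-wise.
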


Informally, iterated integrals of paths can be seen as a generalization of classical monomials and from this perspective, the approximation \eqref{eq:back:univ} can be regarded as the extension of classic polynomial regression to path-valued data.
Thus at least informally it is not surprising, that vanilla signature features suffer from similar drawbacks as classic monomial features; for example, if classic monomials are replaced by other nonlinearities this often drastically improves the approximations; see Section \ref{sec:back:sigkernels} for kernelized methods.

\subsection{Examples} \label{subsec:back:examples}
\paragraph{Linear paths}
Consider the path $\bx:[0,1] \rightarrow V$ that just runs along a straight line $\bx: t \mapsto t\bv$, where $\bv \in \bbR^d$ is a given vector.
Plugging this into the definition of the path signature~\eqref{eq:back:pathsig}, we get by a direct calculation that 
\begin{align} \label{eq:back:lin_path}
\S_m(\bx) = \frac{\bv^{\otimes m}}{m!} \in V^{\otimes m}, \quad \S(\bx) = \texp(\bv) \in \T{V}
\end{align}

We see that for this special case the that $\bx$ is determined by its increment $\bx_1 - \bx_0 = \bv$, the path signature $\S(\bx)$ equals the monomial feature map $\texp(\bv)$ of the total increment $\bv = \bx_1 - \bx_0$. This shows how in this special case the signature reduces to monomial features, and this is one of the many reasons why signatures are regarded as ``polynomials of paths''. 

\paragraph{Piecewise linear paths}
In general, these integrals need to be computed by standard integration techniques, but for a piecewise linear path $\bx: [0,T] \to V$, that is partitioned into $L$ disjoint intervals, $[0,T] = \bigsqcup_{i=1}^{L} [t_{i-1},t_{i}]$, and $\bx$ is piecewise linear on each of these pieces with increments $\bx_{t_i} - \bx_{t_{i-1}} = \bv_i$ for some $\bv_i \in V$ for $i \in [L]$, these iterated integrals can be computed by Chen's identity (Proposition \ref{prop:chen}) via 
\begin{align} \label{eq:back:pathsiglin_chen}
    \S(\bx) = \S(\bx\vert_{[t_0, t_1]}) \cdot \S(\bx\vert_{[t_1, t_2]}) \cdots \S(\bx\vert_{[t_{L-1}, t_L]}) = \texp(\bv_1) \cdot \texp(\bv_2) \cdots \texp(\bv_L),
\end{align}
where $\bx\vert_{[t_i, t_{i+1}]}$ denotes the restriction of $\bx$ to $[t_i, t_{i+1}]$.
Performing the algebra multiplication from \eqref{eq:back:alg_mult}, we get that the degree-$m$ term reduces to an iterated sum of the form
\begin{align} \label{eq:back:pathsiglin_explicit}
    \S_m(\bx) = \sum_{1 \leq i_1 \leq \cdots \leq i_m \leq L} \frac{1}{\bi!} \bv_{i_1} \otimes \cdots \otimes \bv_{i_m} \in V^{\otimes m}, 
\end{align}
where the coefficient $\bi!$ is such that if there are $k \in [m]$ distinct entries among $i_1, \dots, i_m$ with number of repetitions $p_1, \dots, p_k$, then $\bi! = {p_1! \cdots p_k!}$.

\subsection{Signature kernels} \label{sec:back:sigkernels}
\paragraph{Kernel trick for path signatures}
Signature kernels provide a means to compare paths through their signatures by defining inner products in the tensor algebra. Given their structure, they offer a powerful similarity measure for sequences of potentially different lengths, making them particularly suitable for sequential data analysis. The signature kernel inherently captures the algebraic and geometric structure of the paths by inheriting the powerful properties of path signatures, such as invariance to reparameterization and universality, see \cite[Thm.~2.2.4]{cass2024lecture} for a proof that the signature kernel is universal in the kernel learning sense.

Next, we will consider the setting when $\cH$ is a Hilbert space, and our bounded variation paths evolve in this space, that is, $\Paths(\cH)$. Then, the signature kernel $\sigkernel: \Paths(\cH) \times \Paths(\cH) \to \bbR$ is defined for paths $\bx, \by \in \Paths(\cH)$ by taking the inner product of signatures
\begin{align}
    \sigkernel(\bx, \by) = \inner{\S(\bx)}{\S(\by)}_{\TV{\cH}} = \sum_{m=0}^\infty \langle \S_m(x), \S_m(y) \rangle_{\cH^{\otimes m}} 
\end{align}
where $\inner{\cdot}{\cdot}_{\TV{\cH}}$ denotes the inner product in the tensor algebra $\TV{\cH}$ over the Hilbert space $\TV{\cH}$, which itself is indeed a Hilbert space as discussed in Section \ref{sec:back:tensalgs}. In practice, one often introduces a truncation factor $M \in \bbN \cup \{\infty\}$, and considers 
\begin{align}
	\sigkernel[\leq M](x, y) = \sum_{m=0}^M \langle \S_m(x), \S_m(y) \rangle_{\cH^{\otimes m}},
\end{align}
and with some abuse of notation we may omit the truncation superscript, that is, when talking about the signature kernel $\sigkernel$, we may mean the truncated version.

First, \cite{kiraly2019kernels} developed an algorithm for finite $M$, and later \cite{salvi2021signature} formulated a \texttt{PDE} to solve for infinite $M$. The kernel trick of \cite{kiraly2019kernels} is written on the level-$m$ for $\bx, \by \in \Paths(\cH)$ as
\begin{align} \label{eq:back:pathsigkernel_m}
    \sigkernel[m](\bx, \by) = \int_{\substack{\bs \in \Delta_m([0, \hor{\bx}])\\\bt \in \Delta_m([0, \hor{\by}])}} \inner{\d \bx_{s_1}}{\d \by_{t_1}}_\cH \cdots \inner{\d \bx_{s_m}}{\d\by_{t_m}}_\cH.
\end{align}
This allows to compute $\sigkernel$ solely by inner product evaluation on $\cH$.

\paragraph{Lifting paths by static kernels}
Since the kernel trick introduced above allows to compute the inner product of path signatures only by inner product evaluation, it is often beneficial to preprocess paths by lifting them to a higher-dimensional space, which can enhance the richness of the information captured by the signature. One such tool for this is the method of reproducing kernel Hilbert spaces (\RKHS{}), which has become wide-spread in machine learning, since it allows for nonlinear learning in infinite-dimensional spaces using the kernel trick \cite{berlinet2011reproducing}.

Now let $\cX$ be a set, which will be our data space where our paths take values. Further, let $\kernel: \cX \times \cX \to \bbR$ be a so-called static kernel on $\cX$ with \RKHS{} $\Hil$. Let us denote for an $\bx \in \cX$, $\kernel_\bx = \kernel(\bx, \cdot) \in \Hil$ the reproducing kernel lift of $\bx$. Analogously, for a path $(\bx_t)_{t \in [0, T]} \in \Paths(\bx)$ denote by $\kernel_\bx = (\kernel_{\bx_t})_{t \in [0, T]} \in \Paths(\Hil)$ the path $\bx$ lifted into the \RKHS{} of $\kernel$, $\Hil$. Then, \eqref{eq:back:pathsigkernel_m} can formally be written for the lifts of $\bx, \by \in \Paths(\cX)$ as
\begin{align} \label{eq:back:pathsigkernel_m_2}
    \sigkernel[m](\kernel_{\bx}, \kernel_\by) &= \int_{\substack{\Delta_m([0, \hor{\bx}])\\\Delta_m([0, \hor{\by}])}} \inner{\d \kernel_{\bx_{s_1}}}{\d \kernel_{\by_{t_1}}}_{\Hil} \cdots \inner{\d \kernel_{\bx_{s_m}}}{\d\kernel_{\by_{t_m}}}_{\Hil} \\
    &= \int_{\substack{\Delta_m([0, \hor{\bx}])\\\Delta_m([0, \hor{\by}])}} \d \kappa(s_1, t_1), \cdots \d\kappa(s_m, t_m),
\end{align}
where $\kappa([s, t] \times [u, v]) = \kernel(\bx_t, \by_v) - \kernel(\bx_t, \by_u) - \kernel(\bx_s, \by_v) + \kernel(\bx_s, \by_u)$ is a signed Borel measure on $[0, \hor{\bx}] \times [0, \hor{\by}]$ for fixed $\bx, \by \in \Paths(\cX)$, see \cite[Thm.~2]{kiraly2019kernels} for a proof. This allows to compute the signature kernel for paths lifted into reproducing kernel Hilbert spaces, that significantly enhances the expressiveness of the signature kernel. Note that $\cH_k$ is a genuine infinite-dimensional space for non-degenerate kernels $\kernel$, for which there are various choices, see \cite[Ch.~4]{rasmussen2006}, e.g. the \RBF{} or Matérn family of kernels. Abstractly, we can think of the signature kernel as taking a static kernel $\kernel$ on the state space $\cX$ and turning it into a kernel $\sigkernel$ for paths evolving in that state space. In the following, we will treat the static kernel $\kernel$ as a hyperparameter of $\sigkernel$, and we do not make the composition explicit, i.e.~for $\bx, \by \in \Paths(\cX)$, we will write $\sigkernel(\bx, \by)$ and implicitly mean that $\bx$ and $\by$ are first lifted into an \RKHS{} before taking signatures. The standard formulation of signatures when $\cX = \cE$ is a Euclidean space can be recovered by using the linear kernel $\kernel(\bx, \by) = \inner{\bx}{\by}_\cE$ as the static kernel lift for $\bx, \by \in \cE$.




\subsection{Rough paths}
    So far we assumed that our input domain consists of paths of bounded variation paths, but in the real-world the evolution of quantities is often subject to noise, e.g.~a classical model in physics and engineering is $\bx_t = \ba_t + B_t$, where $\ba$ is a bounded variation path, but $B$ is a Brownian sample path. Since Brownian sample paths are not of bounded variation, $\bx$ is not of bounded variation.
    However, the iterated integrals of $\bx$ can still be defined as above, but one has to replace the iterated Riemann--Stieltjes integrals by Ito--Stratonovich integrals in the definition of $\S(\bx)$.
    Even rougher trajectories such as fractional Brownian motion and non-Markovian processes can be handled that way with so-called rough path integrals. 
    Rough path theory provides a systematic study that comes with a rich toolbox combining analytic and algebraic estimates, rich enough to cover the trajectories of large classes of stochastic processes; see \cite{lyons2007differential,lyons2014rough,friz2020course}.	
\endgroup

\subsection{Discrete-time setting} \label{sec:back:discrete}
In practice, we often do not have access to paths, but are only given discrete-time observations that lie in some state space $\cX$, that is, we work with the space of finite length sequences
\begin{align}
    \Seq(\cX) = \curls{\bx = (\bx_0, \dots, \bx_L) \setgiven \bx_0, \dots, \bx_L \in \cX, \bx_0 = \mathbf{0}, L \in \bbN},
\end{align}
where we denote the ``effective'' length of a sequence $\bx = (\bx_0, \dots, \bx_L) \in \Seq(\cX)$ by $\len{\bx} = L$. Notice that for sequences we use $0$-based indexing, that does not count to the length. We make this choice because signatures use sequence increments, which reduces length by $1$-step.

Below, we will assume that $\cX$ is either a topological vector space (i.e.~$\bbR^d$), or that as a preprocessing step our input sequence $\bx$ has been lifted to one using a static feature map $\varphi: \cX \to V$, i.e.~consider $\varphi(\bx)$ substituted in place of $\bx$ which we do not make explicit here.

\paragraph{Discretized signature features}
It is not obvious how to extend path signatures to the discrete-time setting, since they are defined in continuous-time via iterated integrals, and we discuss multiple choices below. First, we consider the so-called ``canonical'' choice, which proceeds by identifying $\bx$ with a continuous time path by its linear interpolation for any choice of time parameterization, and then computing the signature of this path.

To ease the notational burden, we define the following discrete-time $m$-simplices:
\begin{align}
    \Delta_m(L) = \curls{(i_1, \dots, i_m) \in \bbZ_+^m \setgiven 1 \leq i_1 < \cdots < i_m \leq L},
\end{align}
and its closed companion
\begin{align}
     \bar\Delta_m(L) = \curls{(i_1, \dots, i_m) \in \bbZ_+^m \setgiven 1 \leq i_1 \leq \cdots \leq i_m \leq L}.
\end{align}

Then, under this choice of path embedding (linear interpolation), we can compute the level~--~$m$ signature of $\bx \in \Seq(\cX)$ using the relation \eqref{eq:back:pathsiglin_explicit} by
\begin{align} \label{eq:back:sig_discrete}
    \S_m(\bx) = \sum_{\bi \in \bar\Delta_m(\len{\bx})} \frac{1}{\bi!} \delta \bx_{i_1} \otimes \cdots \otimes \delta \bx_{i_m},
\end{align}
where $\delta \bx_i = \bx_i - \bx_{i-1}$ refers to first-order (backwards) sequence differencing.

Before discussing \eqref{eq:back:sig_discrete} in detail, let us first generalize this formula. Thus, next we consider other choices for approximating iterated path integrals using discrete-time data. We introduce the order-$p$ approximation to the path signature considered in \cite[App.~B]{kiraly2019kernels}. The order-$p$ level-$m$ signature feature $\S^{(p)}: \Seq(\cX) \to \TV{\cX}$ of a sequence $\bx \in \Seq(\cX)$ is defined as
\begin{align} \label{eq:back:sig_order_p}
    \S_m^{(p)}(\bx) = \sum_{\bi \in \bar\Delta_m(\len{\bx})} \frac{1}{\bi!^{(p)}} \delta \bx_{i_1} \otimes \cdots \otimes \delta \bx_{i_m},
\end{align}
where $\bi!^{(p)}$ is such that if there are $k$ distinct indices in $\bi$ with number of repetitions $p_1, \dots, p_k$ respectively, then it is defined as
\begin{align}
    \bi!^{(p)} = \begin{cases}
        p_1! \cdots p_k! \quad &\text{for } p_1, \dots, p_k \leq p\\
        0 &\text{otherwise}.
    \end{cases}
\end{align}

Now, we notice that \eqref{eq:back:sig_order_p} is a generalization of \eqref{eq:back:sig_discrete}, since the latter can be recovered by choosing $p=m$. As mentioned before, we refer to $p$ as the order hyperparameter of the discretized signature features. Notice that \eqref{eq:back:sig_order_p} summarizes the sequence $\bx$ by a summation over its possibly contiguous length-$m$ subsequences with at most $p$-repetitions for each sequence index, such that each term is normalized by a factor depending on the number of repetitions for each unique index. In particular, we will often focus on the $p=1$ case, which is often referred to as a non-geometric approximation since there does not exist a continuous-time path such that $\S^{(1)}(\bx)$ is its path signature. Nevertheless, a key observation of this thesis is that the order-$1$ case often works well or even better than higher orders, while preserving a considerable amount of computation time and memory. This is explained in \cite{toth2021seq2tens}, which is an extension of Chapter \ref{ch:s2t}, where my collaborator Patric proved that the $p=1$ case is universal on the space of sequences given the existence a time coordinate which encodes the position within the sequence. 

In particular, the $p=1$ case can be written for a sequence $\bx \in \Seq(\cX)$ as
\begin{align} \label{eq:back:sig_order_1}
    \S_m^{(1)}(\bx) = \sum_{\bi \in \Delta_m(\len{\bx})} \delta \bx_{i_1} \otimes \cdots \otimes \delta \bx_{i_m},
\end{align}
i.e.~notice that the summation is taken exclusively over non-contiguous subsequences, and as such, can be treated as a generalization of $k$-mer features appearing in the context of string kernels, see \cite[Sec.~5]{kiraly2019kernels} for a connection to such classic kernels.

We remark that different choices of $p$ correspond to different algebraic embeddings, i.e.~replacing the tensor exponential in \eqref{eq:back:pathsiglin_chen} with a different algebra embedding. This is one of the main themes of Chapter \ref{ch:s2t}, and different choices lead to beside mildly different computational complexities and performances, to interesting algebraic questions, see \cite{diehl2020time, diehl2023generalized} for a discussion.

In the following, we may omit the order superscript when talking about signature features and kernels in the discrete-time setting, and simply treat it as a hyperparameter setting.

\paragraph{Discretized signature kernels}
The signature is a powerful feature set for nonlinear regression on streamed data. 
A computational bottleneck associated with it is the dimensionality of $\TV{\cX}$. As we are dealing with tensors, for $\cX$ finite-dimensional $\S_m(\bx)$ is a tensor of degree-$m$ which has $\pars{\dim \cX}^{m}$ coordinates that need to be computed. This can quickly become computationally expensive. For infinite-dimensional $\cX$, it is infeasible to directly compute $\S$.

The work of \cite{kiraly2019kernels} takes the setting when $\cX = \cH$ is a Hilbert space, and shows a kernel trick that allows to compute signature kernels for discrete data up to finite truncation $M \in \bbZ_+$ using dynamic programming, even when $\Hil$ is infinite-dimensional.
Subsequently, \cite{salvi2021signature} proposed a \texttt{PDE}-based algorithm to approximate the untruncated signature kernel, which was further extended in \cite{cass2021general}, and we refer to \cite{lee2023signature} for a recent overview of signature kernels.

Here, we focus on discrete-time, and continue the discussion of signature kernels. Our starting point is the generalized order-$p$ approach of \cite{kiraly2019kernels}, that results in the features \eqref{eq:back:sig_order_p}.

The signature kernel is a powerful formalism that allows to transform any static kernel on $\cX$ into a kernel for sequences that evolve in $\cX$. Let $\kernel: \cX \times \cX \to \bbR$ be a static kernel with \RKHS{} $\Hil$. Now, let $\bx, \by \in \Seq(\cX)$, and consider the discretized signature kernel of order-$p$ and level-$m$ $\sigkernelp[m]{p}: \Seq(\cX) \times \Seq(\cX) \to \bbR$ given as the inner product of order-$p$ (lifted) signatures from \eqref{eq:back:sig_order_p} 
\begin{align}
    \sigkernelp[m]{p}(\bx, \by) &= \inner{\S_m^{(p)}(\kernel_\bx)}{\S_m^{(p)}(\kernel_\bx)}_{\Hil^{\otimes m}}
    \\
    &= \inner{\sum_{\bi \in \bar\Delta_m(\len{\bx})} \frac{1}{\bi!^{(p)}} \delta \kernel_{\bx_{i_1}} \otimes \cdots \otimes \delta \kernel_{\bx_{i_m}}}{\sum_{\bj \in \bar\Delta_m(\len{\by})} \frac{1}{\bj!^{(p)}} \delta \kernel_{\by_{j_1}} \otimes \cdots \otimes \delta \kernel_{\by_{j_m}}}
    \\
    &= \sum_{\bi \in \bar\Delta_m(\len{\bx})} \sum_{\bj \in \bar\Delta_m(\len{\by})} \frac{1}{\bi!^{(p)} \bj!^{(p)}} \inner{\delta \kernel_{\bx_{i_1}}}{\delta \kernel_{\bx_{j_1}}}_{\Hil}  \cdots \inner{\delta \kernel_{\bx_{i_m}}}{\delta \kernel_{\by_{j_m}}}_\Hil
    \\
    &= \sum_{\substack{\bi \in \bar\Delta_m(\len{\bx})\\\bj \in \bar\Delta_m(\len{\by})}} \frac{1}{\bi!^{(p)}\bj!^{(p)}} \delta_{i_1, j_1} \kernel(\bx_{i_1}, \by_{j_1}) \cdots \delta_{i_m, j_m} \kernel(\bx_{i_m}, \by_{j_m}), \label{eq:back:discr_sig_kernel_p}
\end{align}
where $\delta_{i, j} \kernel(\bx_i, \by_j) = \kernel(\bx_i, \by_j) - \kernel(\bx_i, \by_{j-1}) - \kernel(\bx_{i-1}, \by_j) + \kernel(\bx_{i-1}, \by_{j-1})$ denotes a second-order (backwards) cross-differencing operator.
The key insight by \cite{kiraly2019kernels} is equation \eqref{eq:back:discr_sig_kernel_p}, i.e.~that $\sigkernel[m]$ can be computed without computing $\S_m$ itself by a kernel trick using only kernel evaluations.

The hyperparameters are: \begin{enumerate*}[label=(\arabic*)] \item the choice of the static kernel $\kernel$, for which there is a wide range of options, e.g.~for $\cX = \bbR^d$ the \RBF, exponential, polynomial or Mat{\'e}rn family of kernels; \item any hyperparameters that $\kernel$ comes with, such as the bandwidth or polynomial degree; \item truncation level-$M$; \item the choice of algebra embedding, i.e. the order-$p$; \item and the choice of kernel normalization that scales each level $\sigkernel[m]$ appropriately. It also comes with theoretical guarantees such as analytic estimates when sequences converge to paths \cite{kiraly2019kernels}, its maximum mean discrepancy (\texttt{MMD}) \cite{gretton2012kernel} metrizes classic topologies for stochastic processes, and can lead to robust statistics in the classic statistical sense (B-robustness); see \cite{chevyrev2022signature} for details. \end{enumerate*}

Although \eqref{eq:back:discr_sig_kernel_p} looks expensive to compute, \cite{kiraly2019kernels} applies dynamic programming to efficiently compute $\sigkernel$ for finite $M$ using a recursive algorithm; an alternative algorithm is the above mentioned approach of approximating the (untruncated) signature kernel $\sigkernel$ using \texttt{PDE}-discretization. Importantly, \eqref{eq:back:discr_sig_kernel_p} avoids computing tensors, and only depends on the entry-wise evaluations of the static kernel $\kernel(\bx_i, \by_j)$. Indeed, this leads to a computational cost of $O(\len{\bx} \len{\by})$ with respect to length, that is feasible to compute for sequences evolving in high-dimensional state-spaces, but only with moderate sequence length. Note that the same bottleneck applies to \texttt{PDE}-based approaches. In part, the aim of Chapter \ref{ch:rfsf} is to alleviate this quadratic cost in sequence length, while approximately enjoying the modelling capability of working within an infinite-dimensional \RKHS.

\section{Algorithms} \label{sec:back:algs}
\subsection{Notation for algorithms} \label{sec:back:notation}
We define notation based on \cite{kiraly2019kernels} for describing vectorized computations. We use $1$-based indexing for arrays. Note that elements outside the bounds of an array are treated as zeros. Let $A$ and $B$ be k-fold arrays of size $(n_1 \times \dots \times n_k)$, indexed by $i_j \in \{1, \dots, n_j\}$ for $j \in \{1, \dots, k\}$.

We define the following operations:
  \begin{enumerate}[label=(\roman*)]
  	\item  The cumulative sum along axis $j$ as:
  	\begin{align}
  		&A[:, \dots, :, \boxplus, :, \dots, :][i_1, \dots, i_{j-1}, i_j, i_{j+1}, \dots i_k] = \sum_{\kappa=1}^{i_j} A[i_1, \dots, i_{j-1}, \kappa, i_{j+1}, \dots, i_k].
  	\end{align}
  	\item The slice-wise sum along axis $j$ as:
  	\begin{align}
	  	&A[:, \dots, :, \Sigma, :, \dots, :][i_1, \dots, i_{j-1}, i_{j+1}, \dots, i_k] = \sum_{\kappa=1}^{n_j} A[i_1,\dots, i_{j-1}, \kappa, i_{j+1}, \dots i_k].
  	\end{align}
  	\item The shift along axis $j$ by $+m$ for $m \in \bbZ$ as:
  	\begin{align}
	  	&A[:, \dots, :, +m, :, \dots, :][i_1, \dots, i_j, \dots, i_k] = A[i_1,\dots, i_j-m, \dots i_k]
  	\end{align}
    \item The Hadamard product of arrays $A$ and $B$:
  	\[(A \odot B) [i_1, \dots, i_k] = A[i_1, \dots, i_k] \cdot B[i_1, \dots, i_k]. \]
    \item Now, if $A$ has shape $(n_1 \times \cdots \times n_j \times \cdots \times n_k)$ and $B$ has shape $(n_1 \times \cdots \times n_j^\prime \times \cdots \times n_k)$, then their Kronecker product along axis $j$ is defined for $i_j \in [n_j]$ and $i_j^\p \in [n_j^\prime]$ as
        \begin{align}
            (A \boxtimes_j B) [i_1, \dots, (i_j-1)n_j^\p + i_j^\p, \dots, i_k] = A[i_1, \cdots, i_j, \cdots i_k] B[i_1, \dots, i_j^\p, \dots, i_k].
        \end{align}  
  \end{enumerate}

\subsection{Algorithms for path signatures}
In this section, we provide adapted algorithms for computing signature features and kernels.

First, we provide an algorithm for computing the order-$p$ signature features given in \eqref{eq:back:sig_order_p} in Algorithm \ref{alg:back:sig_order_p}. This algorithm has not appeared explicitly in previous work, and is a combination of Algorithms 5 and 6 in \cite{kiraly2019kernels}. The output of the algorithm are flattened signature tensors of dimension $d, d^2, \dots, d^M$. Upon investigation, the time complexity of the algorithm is $O(MNLpd^M)$ and requires a space of $O(p^2NLD + Nd^M)$. This is clearly inefficient due to the polynomail complexity in $d$ and exponential in $M$, and in the later Chapters \ref{ch:s2t} and \ref{ch:rfsf} we discuss how this complexity can be reduced via various low-rank and random projection techniques.

Next, we adapt the dual algorithm for computing the order-$p$ signature kernel \eqref{eq:back:discr_sig_kernel_p} from \cite[Alg.~6]{kiraly2019kernels} under Algorithm \ref{alg:back:sigkernel_p}. The output of the algorithm consists of signature kernel matrices per level $K_1, \dots, K_m \in \bbR^{N \times N}$. The time complexity of the algorithm is $O(N^2 L^2 d + MN^2L^2p^2)$, while the space is $O(p^2 N^2 L^2)$, where $d$ is the complexity of static kernel evaluation. Due to joint quadratic complexity in $N^2$ and $L^2$, this is infeasible for high-volume ($N$ large) and long time series ($L$ large). In Chapter \ref{ch:gpsig}, we combine the signature kernel with Gaussian processes and variational inference techniques reducing the complexity to linear in $N$, allowing to scale to large datasets, although the length still remains a bottleneck. Later on, in Chapter \ref{ch:rfsf}, we introduce scalable random approximations to the signature kernel, which alleviate the quadratic complexity in both $N$ and $L$, while also scaling favourably in the ``feature size'' i.e.~in the dimensionality of the random feature map that represents the random approximate kernel.

\begin{algorithm}[t]
\begin{footnotesize}
\caption{Computing order-$p$ signature features \eqref{eq:back:sig_order_p}.}
\label{alg:back:sig_order_p}
\begin{algorithmic}[1]
    \STATE {\bfseries Input:} Time series $X = (\bx_1, \dots, \bx_N) \subset \Seq(\bbR^d)$, truncation-$M \in \bbZ_+$, order-$p \in \bbZ_+$
    \STATE Optional: Add time-parameterization $\bx_i \gets (\bx_{i, t}, t / \len{\bx_i})_{t=1}^{\len{\bx_i}}$ for all $i \in [N]$
    \STATE Tabulate to uniform length $L = \max_{j \in [N]} \len{\bx_j}$ by $\bx_i \gets (\bx_{i, 1}, \ldots, \bx_{i, \len{\bx_i}}, \ldots, \bx_{i, \len{\bx_i}})$ for all $i \in [N]$
    \STATE Initialize an array $U$ with shape $[N, L, d]$
    \STATE Fill $U$ with values $U[i, l, :] = \bx_{i,l} - \bx_{i, l-1}$ for all $i \in [N]$ and $l \in [L]$
    \STATE Accumulate into level-$1$ features $P_1 \gets U[:, \Sigma, :]$
    \STATE Initialize an array $R$ of shape $[1, N, L, d]$
    \STATE Assign values $R[1, :, :, :] \gets U$
    \FOR{$m=2$ {\bfseries to} $M$}
        \item Let $q \gets \min(p, m)$
        \item Collapse into expanding window features $P^\p \gets R[\Sigma, :, \boxplus, :]$
        \item Update with next increment $P^\p \gets P^\p[:, +1, :] \boxtimes_{-1} U$
        \STATE Initialize new array $R^\p$ of shape $[q, N, L, d]$
        \STATE Assign values $R^\p[1, :, :, :] \gets P^\p$ 
        \FOR{$r=2$ to $q$}
            \STATE Update with current increment $R^\p[r, :, :, :] \gets \frac{1}{r} R[r-1, :, :, :] \boxtimes_{-1} U$
        \ENDFOR
        \STATE Aggregate into level-$m$ features $P_m \gets R^\prime[\Sigma, :, \Sigma, :]$
        \STATE Roll array $R \gets R^\prime$
    \ENDFOR
    \STATE {\bfseries Output:} Arrays of signature features per level $P_1 \in \bbR^d, \dots, P_M$.
\end{algorithmic}
\end{footnotesize}
\end{algorithm}

\begin{algorithm}[h]\begin{footnotesize}
\caption{Computing order-$p$ signature kernels \eqref{eq:back:discr_sig_kernel_p}.}
\label{alg:back:sigkernel_p}
\begin{algorithmic}[1]
    \STATE {\bfseries Input:} Sequences $\bX=(\bx_i)_{i=1,\dots,N} \subset \Seq(\cX)$, truncation-$M \in \bbZ_+$, order-$p \in \bbZ_+$, static kernel $\kernel: \cX \times \cX \to \bbR$
    \STATE Optional: Add time-parameterization $\bx_i \gets (\bx_{i, t}, t / \len{\bx_i})_{t=1}^{\len{\bx_i}}$ for all $i \in [N]$
    \STATE Tabulate to uniform length $L = \max_{j \in [N]} \len{\bx_j}$ by $\bx_i \gets (\bx_{i, 1}, \ldots, \bx_{i, \len{\bx_i}}, \ldots, \bx_{i, \len{\bx_i}})$ for all $i \in [N]$
    \STATE Initialize an array $K$ with shape $[N, N, L, L]$
    \STATE Compute kernel increments $K[i, j, k, l] = \delta_{k, l} \kernel(\bx_{i, k}, \bx_{j, k})$ for $i, j \in [N]$ and $k, l \in [L]$
    \STATE Collapse into level-$1$ kernel entries $K_1 \gets K[:, :, \Sigma, \Sigma]$
    \STATE Initialize an array $R$ of shape $[1, 1, N, N, L, L]$
    \STATE Assign values $R[1, 1, :, :, :, :] \gets K$
    \FOR{$m=2$ {\bfseries to} $M$}
    \STATE Let $q \gets \min(p, m)$
    \STATE $K^\p \gets R[\Sigma, \Sigma, :, :, \boxplus+1, \boxplus+1] \odot K$
    \STATE Initialize new array $R^\p$ of shape $[q, q, N, N, L, L]$
    \STATE Assign values $R^\p[1, 1, :, :, :, :] \gets K^\p$
    \FOR{$r=2$ {\bfseries to} $q$}
        \STATE $R^\p[r, 1, :, :, :, :] \gets \frac{1}{r} R[r-1, \Sigma, :, :, :, \boxplus+1] \odot K$
        \STATE $R^\p[1, r, :, :, :, :] \gets \frac{1}{r} R[\Sigma, r-1, :, :, \boxplus+1, :] \odot K$
        \FOR{$s=2$ {\bfseries to} $q$}
            \STATE $R^\p[r, s, :, :, :, :] \gets \frac{1}{rs} R[r-1, s-1, :, :, :, :] \odot K$
        \ENDFOR
    \ENDFOR
    \STATE Collapse into level-$m$ kernel entries $K_m \gets R^\p[\Sigma, \Sigma, :, :, \Sigma, \Sigma]$
    \STATE Roll array $R \gets R^\p$
    \ENDFOR
    \STATE {\bfseries Output:} Signature kernel matrices per level $K_1, \dots, K_M$.
\end{algorithmic}
\end{footnotesize}
\end{algorithm}



\begingroup
\chapter{Gaussian Processes with Signature Covariances} \label{ch:gpsig}
\section{Introduction} \label{section:introduction}

The evolution of some state variable, parameter or object gives naturally gives rise to sequential data, which is defined by having a notion of order on the incoming information.
The ordering relation, or index set does not have to represent physical time, but for simplicity we will call it as such. When the index-set is discrete, we refer to the data as a sequence, and when continuous as a path.
For example, besides time series (\texttt{TS}), sources of sequential data are text \cite{Pennington2014Glove}, \texttt{DNA} \cite{Heather2016DNA}, or even topological data analysis \cite{chevyrev_nanda_oberhauser_2018}.
This ubiquity of sequential data has received special attention by the machine learning community in recent years.

In this chapter, we are focused on combining signature kernels with these approaches:

\paragraph{Bayesian approaches}
Often not only point predictions, but estimates of the associated uncertainties are required \cite{Ghahramani2013Bayesian}.
Gaussian processes (\texttt{GP}) \cite{rasmussen2006} provide flexible priors over functions of the data in nonparametric Bayesian models.
In the context of sequential data, two prominent ways to use \texttt{GP}s are: \begin{enumerate*}[label=(\arabic*)] \item using as covariance functions kernels specifically designed for sequences~\cite{lodhi2002text, cuturi2011fast, Cuturi2011AR, al2017learning}, \item  modelling the evolution in a latent space, that emits the observations, as a discrete dynamical system with a \texttt{GP} prior on the transition function, a model called the Gaussian Process State Space Model (\texttt{GPSSM}) \cite{Frigola2013MCMC, frigola2014variational, mattos2016recurrent, eleftheriadis2017identification, Doerr2018Proba, ialongo2019overcoming}. \end{enumerate*}
These two approaches are not mutually exclusive; if one models the latent system as a higher order Markov process, then sequence kernels can incorporate the effect of past states.

\paragraph{Deep learning approaches} 
Deep learning approaches, such as the celebrated \texttt{LSTM} network \cite{hochreiter1997long}, other forms of \texttt{RNN}s \cite{Cho2014}, convolutional networks, and transformers \cite{vaswani2017attention} have successfully been applied to a variety of tasks involving sequential data \cite{Sutskever2014Seq2Seq, Oord2016Wavenet, wen2022transformers}. Deep learning models can approximate any continuous function, but the cost is a large number of parameters, high variance and poor interpretability.
This leaves the door open for alternative approaches not only as competitors, but as complementary building blocks in larger models.

\paragraph{Contribution}
In principle, one can just use the signature kernel and algorithms as covariance function to define a \texttt{GP} for sequential data. 
However, the computational complexity becomes quickly prohibitive, which ultimately does not lead to scalable performance on many time series benchmarks.
We therefore develop a different approach to signature covariances that builds on two recent advances in \texttt{GP} inference: variational inference \cite{Titsias2009Variational, Hensman2015Scalable, matthews2016sparse} and inter-domain inducing points~\cite{lazaro2009inter} to alleviate the computational burden.
In particular, we show that one can use low-rank tensors as inter-domain inducing points by optimizing a variational bound.
Moreover, we use this \texttt{GP} as a building block in combination with \texttt{RNN}s to build models that combine the strenghts of signatures, Bayesian models, and deep learning architectures. 
This results in scalable inference algorithms and we use this to benchmark on standard \texttt{TS} datasets \begin{enumerate*}[label=(\roman*)] \item against popular non-Bayesian time series classifiers purely in terms of accuracy, \item against alternative Bayesian models by comparing the calibration of uncertainties for predictions \end{enumerate*}.
Code and benchmarks are publically available at \url{http://github.com/tgcsaba/GPSig}.

\section{Background and Notation}\label{sec:background}
Given data $(\bX, Y)$ consisting of $n_\bX$ inputs $\bX = (\bx_1,\ldots,\bx_{n_\bX}) \subset \cX$ with labels $Y=(y_1,\ldots,y_{n_\bX}) \subset \bbR$, the Bayesian approach is to put a prior on a set of functions $\{f \setgiven f: \cX \rightarrow \bbR\}$, update this prior by conditioning on $(\bX, Y)$, and then use the resulting posterior to make inference about the label $y_\star$ of an unseen point $\bx_\star$. 
When this is done with Gaussians, the central object is a \texttt{GP} $f=(f_{\bx})_{\bx \in \cX}$ which is specified by the mean and covariance functions.
Below we recall how covariances can be constructed from polynomial features, which we then extend to signatures.

\subsection{The feature space view}
Given a map $\varphi:\cX \hookrightarrow V$ that injects $\cX$ into a Hilbert space $\cH$, a natural way to put a prior on a function class $\cX \rightarrow \bbR$ is to consider {linear} functions of $\varphi$ as model, that is $f_{\bx}= \langle \ell, \varphi(\bx) \rangle$ to model $f(\bx_i)\approx y_i$ for some ``weights`` $\ell \in \cH$. Uncertainty about $f$ is then specified by uncertainty about $\ell$ (and the hyperparameters of $\varphi$). 
We refer to $\varphi$ as a {feature map} and to $\cH$ as {feature space}.
The advantage of the nonparametric approach is that it allows one to avoid explicitly computing features and weights, and to perform inference directly on the function $f$.
Throughout we assume that $f=(f_{\bx})_{\bx \in \cX}$ is a centered \texttt{GP}. 
Then, predictions about unseen points can then be made by Bayesian inference using Gaussian conditioning.    
If the task is classification where the labels $Y$ are discrete, such an approach can be still applied by using a \texttt{GP} $f=(f_{\bx})_{\bx \in \cX}$ as {nuiscance function} to put a {prior on the class membership probability} by specifying $ p(y=1|\bx)=\sigma(f_{\bx})$, where $\sigma$ is for example a sigmoid.

\subsection{Polynomial features} First, we review the classical case of how a Gaussian process can be constructed from polynomial features, which we will later on generalize to signatures.

Now, let $\cX \subseteq \bbR^d$ and define the monomial feature map
\begin{align}\label{eq:gpsig:moments}
    \varphi(\bx) = (1,\bx, \bx^{\otimes 2}, \bx^{\otimes 3}, \ldots, \bx^{\otimes M})
\end{align}
where $\bx^{\otimes m} \in \pars{\bbR^d}^{\otimes m}$ is a tensor. 
We refer to Section \ref{sec:back:tensors} about tensors. 
If we set $f_{\bx}= \langle  \ell, \varphi(\bx)\rangle$ and put a centered Gaussian prior on $\ell = (\ell_1,\ldots, \ell_M)$, then it follows from Section \ref{sec:back:tensors} that
\begin{align}\label{eq:gpsig:polynomial cov.}
    \expe{f_{\bx} f_{\by}} &= 1 + \sum_{m=1}^M \expe{\inner{\ell \otimes \ell}{\bx^{\otimes m} \otimes \by^{\otimes m}}}
    \\
    &= 1 + \sum_{m=1}^M \inner{\expe{\ell \otimes \ell}}{\bx^{\otimes m} \otimes \by^{\otimes m}}
    \\
    & =1+ \sum_{m=1}^M \langle \Sigma_m^{2}, \bx^{\otimes m} \otimes \by^{\otimes m} \rangle
\end{align}
where $\Sigma_m^2= \bbE[ \ell_m \otimes \ell_m ] \in \pars{\bbR^d}^{\otimes (2m)}$.
Taking $\Sigma_m^2$ to be an isotropic ``diagonal``' tensor $\Sigma_m^2 = \sigma_m^2 \sum_{i_1, \dots, i_m =1}^m (e_{i_1} \otimes \cdots \otimes e_{i_m})^{\otimes 2} $ recovers the polynomial kernel,
\begin{align} \label{eq:gpsig:poly_isotropic}
    {\bbE[f_{\bx}f_{\by}]} = \sum_{m=0}^M \sigma_m^2 \langle \bx, \by \rangle^m. 
\end{align}
Many other variations exist, for example other classes of polynomials, such as Hermite polynomials (the eigenfunctions of the classic \texttt{RBF} kernel), can increase the effectiveness, since they allow to make the associated feature expansion infinite dimensional.
However, what makes any such class of polynomials a sensible choice for $\varphi$ is that by the Stone--Weierstrass theorem, any continuous compactly supported function $\cX \rightarrow \bbR$ can be arbitrary well approximated as linear functions of $\varphi(\bx)$.
This approximation property is often called {universality} \cite{Micchelli2006Universal, sriperumbudur2011universality}.

\section{From signature features to covariances} \label{sec:gpsig:our_gp}
The signature feature map $\S$ can be seen as a generalization of the polynomial feature map $\varphi$ as defined in~\eqref{eq:gpsig:moments} from the domain $\cX \subseteq \bbR^d$ of vectors to the domain of bounded variation paths. 

Recall from Section \ref{sec:back:pathsig} that it is defined for a $(\bx_t)_{t \in [0, T]} \in \Paths(\cX)$ as
\begin{align}\label{eq:gpsig:signature} 
    \S(\bx) = \pars{\int_{\bt \in \Delta_m([0, T])} \d \bx_{t_1} \otimes \cdots \otimes \d \bx_{t_m}}_m.  
\end{align} 

\subsection{Automatic parameterization invariance determination}
A classic empirical finding that led to dynamic time warping (\texttt{DTW}) is that functions of sequences are often invariant to a certain degree of time parameterization: for example, different speakers pronounce words at different speeds. 
However, sometimes the parameterization matters, e.g.~for financial data.
Thus we do not only care about
\begin{align}
\label{eq:gpsig:functions}
\cH_{\Paths} = {\{ f:\Paths(\cX) \rightarrow \bbR \setgiven \text{continuous and compactly supported}\} }
\end{align}
but also about the subset of it that consist of parameterization invariant functions. 
To make this precise, we call $(\bx_{s})_{s \in [0, S]}$ a {reparameterization} of $(\by_t)_{t \in [0, T]}$ if there exists a a smooth increasing function $\rho: [0, S] \rightarrow [0, T]$ (the ``time change'') such that $\bx_s = y_{\rho(s)}$ for all $s \in [0, S]$.

Often the function we want to learn is invariant to some but not extreme reparameterization, so we need a more nuanced way to quantify parameterization (in)variance.
Hence, what we really want is a hyperparameter $\tau \ge 0$ that signifies the degree of parameterization invariance: for $\tau=0$ all the prior mass should concentrate on the ``extreme case'' that is the subset of \eqref{eq:gpsig:functions} consisting parameterization invariant functions; and as $\tau$ gets increased the probability mass should spread out to parameterization sensitive functions. 
This would allow to infer the degree of parameterization invariance by automatic relevance determination (\texttt{ARD}). 

To accomplish this, we augment our paths with a hyperparameter $\tau \ge 0$ such that
\begin{align}
    (\bx^{(\tau)}_t)_{t \in [0, T]} = \pars{\tau \cdot t, \bx_t)}_{t \in [0, T]} \in \Paths(\hat\cX),
\end{align}
where $(\bx_t)_{t \in [0, T]} \in \Paths(\cX)$ and $\hat\cX \subseteq \bbR^{1+d}$.
This simply makes the parameterization part of the trajectory by adding an extra coordinate, and allows the magnitude of $\tau$ to scale the effect of parameterization sensitivity.
Since signatures distinguish different trajectories (but not the speed at which we run through them), it follows that for $\tau>0$ and $\bx, \by \in \Paths(\cX)$ 
\begin{align}
    \S(\bx^{(\tau)}) = \Phi(\by^{(\tau)}) \text{ if and only if } \bx = \by,
\end{align}
while for $\tau = 0$ we have, 
\begin{align}
{\S(\bx^{(0)}) = \S(\by^{(0)}) \text{ if and only if } \bx \sim \by}
\end{align}
since the extra coordinate is ``switched off''. 
Here, $\sim$ denotes tree-like equivalence, but we invite the reader to read $\bx \sim \by$ as saying that $\bx$ is a reparameterization of $\by$.
This is strictly speaking not true and the precise mathematical statement is given in Section \ref{subsec:back:props}, but note that for real-world data, tree-like equivalence is synonymous with reparameterization.

In the rest of the chapter, we omit the dependence on $\tau$ from the notation, but it should be noted that we are still working with the parameterization augmented path $\bx^{(\tau)}$ in place of $\bx$.

\begin{table}[t]
	\begin{center}
		\begin{sc}
        \resizebox{\textwidth}{!}{
		\begin{tabular}{lccc}
		\toprule
		& Vectors & Paths & Sequences \\
		\midrule
		Domain  & $\cX$ & $\Paths(\cX)$ & $\Seq(\cX)$\\
		Features & $\varphi(\bx)=\pars{\bx^{\otimes m}}_{m}$ & $\S(\bx)=\pars{\int_{\bt \in \Delta_m([0, \hor{\bx}])} \d\bx_{t_1} \otimes \cdots \otimes\d\bx_{t_m}}_{m}$ & $\S^{(1)}(\bx) = \pars{\sum_{\bi \in \Delta_m(\len{\bx})} \delta \bx_{i_1} \otimes \cdots \delta \bx_{i_m}}_m$\\
		Feature space & $\prod_{m}\pars{\bbR^d}^{\otimes m}$ & $\prod_m\pars{\bbR^d}^{\otimes m}$ & $\prod_m\pars{\bbR^d}^{\otimes m}$\\
		Functions & $f:\bbR^d \rightarrow \bbR$ & $f:\Paths(\cX) \rightarrow \bbR$ & $f: \Seq(\cX) \to \bbR$\\ 
		Covariance & $\sum_m \sigma_m^2\langle \bx,\by \rangle^m$& $\sum_m\sigma^2_m \int_{\substack{\bs \in \Delta_m([0, \hor{\bx}])\\ \bt \in \Delta_m([0, \hor{\by}])}} \prod_{k=1}^m \inner{d\bx_{s_m}}{d\by_{t_m}}$ & $\sum_m \sigma_m^2 \sum_{\substack{\bi \in \Delta_m(\len{\bx})\\\bj \in \Delta_m(\len{\by})}} \prod_{k=1}^m\inner{\delta \bx_{i_k}}{\delta \by_{j_k}}$
        \\
		\bottomrule
		\end{tabular}}
	\end{sc}
	\end{center}
	\label{table:gpsig:vectors_vs_paths}
	\caption{Comparison of polynomial and signature features}
	\end{table}

\subsection{Signature covariances}
In this section, we focus on deriving the signature covariances with paths lifted by the linear kernel for simplicity, but the construction extends to other variations, e.g.~the \texttt{RBF} kernel, by replacing the coordinate-wise expansion of paths with the eigenfunctions of the kernel, which forms an orthonormal basis of the \texttt{RKHS}. The formulation gets more technical since in this case the linear functional $\ell$ becomes infinite dimensional, and equation \eqref{eq:gpsig:karhunen} becomes the Karhunen-Loeve expansion for \texttt{GP}s \cite{adler2009random}.
Following the feature space view, we now argue in complete analogy to the case of the classical polynomial feature map $\varphi$ for $\cX=\bbR^d$, and define a centered \texttt{GP} $f=(f_{\bx})_{\bx \in \Paths(\cX)}$ by putting a centered Gaussian prior on $\ell=(\ell_1,\ldots,\ell_M)$ and setting
\begin{align} \label{eq:gpsig:karhunen}
    f_{\bx}= \inner{\ell}{\S(\bx)} \quad \text{for } \bx \in \Paths(\cX). 
\end{align}
The \texttt{GP} is hence fully specified by its covariance function 
\begin{align}\label{eq:gpsig:signature covariance}
    \sigkernel(\bx, \by)= \sum_{m=0}^M \inner{\Sigma_m^2}{\int_{\bs \in \Delta_m([0, \hor{\bx}])} \d\bx_{s_1} \otimes \cdots \otimes\d\bx_{s_m} \otimes \int_{\bt \in \Delta_m([0, T])} \d\by_{t_1} \otimes \cdots \otimes\d\by_{t_m}}_{m}
\end{align}
that has $(\tau, M, \Sigma_1^2,\ldots,\Sigma_M^2)$ as hyperparameters.
In particular, choosing an isotropic covariance structure for $\Sigma_m^2$ completely analogously to how it is done above equation \eqref{eq:gpsig:poly_isotropic} gives that
\begin{align} \label{eq:gpsig:isotropic_sig_cov}
    \sigkernel(\bx, \by) = \sum_{m=0}^M \sigma_m^2 \int_{\substack{\bs \in \Delta_m([0, \hor{\bx}])\\\bt\in\Delta_m{([0, \hor{\by}])}}} \inner{\d\bx_{s_1}}{\d\by_{t_1}} \cdots \inner{\d\bx_{s_m}}{\d\by_{t_m}},
\end{align}

Finally, as we are given data not as paths, but as discrete-time observations $\bx, \by \in \Seq(\cX)$, in practice we use the order-$1$ discretized signature kernel (see Section \ref{sec:back:discrete}) given as 

\begin{align} \label{eq:gpsig:discrete_sig_cov}
    \sigkernelp{1}(\bx, \by) = \sum_{m=0}^M \sigma_m^2 \sum_{\substack{\bi \in \Delta_m(\len{\bx}) \\ \bj \in \Delta_m(\len{\by})}} \inner{\delta \bx_{i_1}}{\delta \by_{j_1}} \cdots \inner{\delta \bx_{i_m}}{\delta \by_{j_m}},
\end{align}
and in forth going we suppress the dependence on the order-$p=1$ hyperparameter. 

\subsection{Regularity of the resulting Gaussian process}
Next, we want to find out the regularity properties of the \texttt{GP} with covariance \eqref{eq:gpsig:isotropic_sig_cov}. However, the index set $\Paths(\cX)$ is a very large space so some care is needed. 
In Appendix~\ref{app:gpsig:gpsig}, we compute covering numbers that yield explicit bounds on the modulus of continuity in terms of $\norm{\bx}_\onevar$. 
\begin{theorem} \label{thm:continuity}
    Let $L > 0$ and $\Paths_L(\cX)=\{\bx \in \Paths(\cX): \| \bx \|_{\onevar} \le L\}$.
    There exists a centered \texttt{GP} $f=(f_{\bx})_{\bx \in \Paths_L(\cX)}$ with $\sigkernel(\bx,\by)$ as defined in~\eqref{eq:gpsig:isotropic_sig_cov} as covariance function and that has continuous sample paths $\bx \mapsto f_{\bx}$.
    Further, an explicit bound on its modulus of continuity in terms of $L$ is given in equation~\eqref{eq:gpsig:modulus}. 
\end{theorem}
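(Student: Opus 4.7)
The plan proceeds in three steps.

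First, I would establish existence by verifying that $\sigkernel$ is positive semi-definite. Since the isotropic form rewrites as $\sigkernel(\bx,\by) = \sum_{m=0}^M \sigma_m^2 \inner{\S_m(\bx)}{\S_m(\by)}_{V^{\otimes m}}$, it is manifestly a sum of inner-product kernels, hence positive semi-definite. The Kolmogorov extension theorem then yields a centered Gaussian process $f = (f_{\bx})_{\bx \in \Paths_L(\cX)}$ with $\sigkernel$ as covariance, which handles the existence half of the claim.

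Second, sample-path continuity will follow from a Dudley-type entropy bound applied to the intrinsic pseudo-metric
\begin{align}
d_\sigkernel(\bx,\by)^2 = \bbE\bracks{(f_{\bx} - f_{\by})^2} = \sum_{m=0}^M \sigma_m^2 \norm{\S_m(\bx) - \S_m(\by)}_{V^{\otimes m}}^2.
\end{align}
To control $d_\sigkernel$ I would invoke the standard Lipschitz estimate for signature levels on bounded sets, which combines the Chen identity with the factorial-decay bound quoted earlier in the chapter: for $\bx, \by \in \Paths_L(\cX)$,
\begin{align}
\norm{\S_m(\bx) - \S_m(\by)}_{V^{\otimes m}} \le \frac{m\, L^{m-1}}{(m-1)!}\, \norm{\bx - \by}_\onevar.
\end{align}
Summing with the weights $\sigma_m^2$ yields an estimate $d_\sigkernel(\bx,\by) \le C(L,M)\, \norm{\bx - \by}_\onevar$ with an explicit closed-form constant; crucially, the factorial decay guarantees that the tail of this sum is uniformly controlled in $M$, so the same estimate survives in the limit $M \to \infty$ if desired.

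Third, I would estimate the covering numbers of $\Paths_L(\cX)$ in the $d_\sigkernel$-metric. The idea is to approximate each $\bx \in \Paths_L(\cX)$ by its piecewise-linear interpolant on a uniform grid of mesh $1/n$; the $1$-variation approximation error is controlled by $L$ and the modulus of the variation measure $\d\norm{\bx}_\onevar$, while the set of piecewise-linear paths with total variation at most $L$ on an $n$-point grid embeds into a bounded subset of a finite-dimensional Euclidean space whose $\epsilon$-covering number grows polynomially in $1/\epsilon$. Combined with the previous Lipschitz estimate, this yields a bound of the form $\log N(\epsilon, \Paths_L, d_\sigkernel) \le \Psi(\epsilon, L, M)$ whose Dudley integral $\int_0^\delta \sqrt{\log N(\epsilon,\Paths_L,d_\sigkernel)}\,\d\epsilon$ is finite and continuous at $\delta = 0$. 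Applying Dudley's inequality then produces simultaneously a continuous modification of $f$ on $\Paths_L(\cX)$ and the explicit modulus of continuity announced in \eqref{eq:gpsig:modulus}, by tracking the constants $L$ and $\sigma_m^2$ through each of the above bounds.

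The main obstacle is the third step: $1$-variation balls are notoriously not totally bounded in their own norm, so the entropy estimate cannot be read off directly and requires a careful reduction to finite-dimensional piecewise-linear approximants whose error is in turn absorbed by the factorial tail of the signature. Once the metric entropy is under control, steps one and two slot in routinely, and the quantitative modulus in \eqref{eq:gpsig:modulus} is obtained by bookkeeping of the constants $L$, $M$ and $\set{\sigma_m^2}_m$ appearing in the Lipschitz and entropy bounds.
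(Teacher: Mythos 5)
Your first two steps are fine (existence via positive definiteness plus Kolmogorov, and reduction of continuity to a Dudley entropy bound for $d_{\sigkernel}$ is exactly the paper's strategy), but the proposal breaks down at the step you yourself flag as the main obstacle, and the repair you sketch does not work. Routing the entropy estimate through the Lipschitz bound $d_{\sigkernel}(\bx,\by) \lesssim C(L,M)\,\norm{\bx-\by}_\onevar$ requires covering $\Paths_L(\cX)$ in the $1$-variation metric, which is impossible; and the proposed substitute --- piecewise-linear interpolation on a \emph{uniform time grid} with error ``controlled by $L$ and the modulus of the variation measure'' --- is not uniform over the ball $\Paths_L(\cX)$. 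A path may concentrate all of its variation, or oscillate back and forth, inside a single grid cell: its interpolant is then nearly constant there, and $\norm{\bx-\bx^{(n)}}_\onevar$ (and already the level-$2$ signature error, through the Lévy area) remains of order $L$ no matter how fine the mesh. The ``modulus of the variation measure'' is a per-path quantity not controlled by the constraint $\norm{\bx}_\onevar\le L$, so no bound of the form $\log N(\epsilon,\Paths_L,d_{\sigkernel})\le\Psi(\epsilon,L,M)$ follows from this construction, and the Dudley integral is never shown to be finite.

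The paper avoids the $1$-variation metric altogether: it approximates each $\bx\in\Paths_L(\cX)$ directly in the signature pseudo-metric by a \emph{lattice path} of step size $L2^{-n}$, using the quantitative estimate of Section 4 of \cite{lyons2011inversion}, which gives $\Delta_m(\bx,\by)\le \frac{d}{2^{n-1}}\frac{4L^{m-1}}{(m-1)!}$ uniformly over the ball (this is \eqref{eq:gpsig:weijun}); it then simply counts lattice paths, $|\Lattice{L2^{-n}}{L}|\le(2^d+1)^{L2^{n}}$, to obtain $\log_2 N(\epsilon,L)\le 2(d+1)L\sqrt{M}/\epsilon$, whose square root is integrable at $0$, and feeds this into Dudley's theorem together with an almost-sure modulus-of-continuity result to get \eqref{eq:gpsig:modulus}. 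If you want to keep a piecewise-linear flavour, you would have to partition each path at points of \emph{equal variation} $L/n$ (an adapted, path-dependent partition) rather than a uniform time grid, bound the signature error per piece via factorial decay and Chen's identity, and then still verify that the resulting approximant class has metric entropy growing no faster than $\epsilon^{-\alpha}$ with $\alpha<2$, since otherwise the entropy integral diverges and no modulus of continuity is obtained.
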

We now have a well-defined \texttt{GP} for Bayesian inference for sequences at hand that inherits many of the attractive properties of signature features. 
To turn this into useful models for large \texttt{TS} benchmarks we develop efficient inference algorithms in the next section.

	\section{Sparse variational inducing tensors}\label{sec:sparse var tensor}
To reiterate, we are given data $(\bX, Y)$ consisting of $n_\bX$ sequences $\bX = (\bx_1,\ldots,\bx_{n_\bX}) \subset \Seq(\cX)$ of maximal length $\len{\bX}=\max_{\bx \in \bX}\len{\bx}$ that evolve in $\bbR^d$ with labels $Y=(y_1,\ldots,y_{n_\bX})$, and the task is to predict labels $y_\star$ of unseen points $\bx_\star$.
For sequences, the sample size $n_\bX$ and associated covariance matrix inversion is not the only compational bottleneck but also the maximal length $\len{\bX}$, and the dimension $d$ of the state space matter: $n_\bX$, $\len{\bX}$ and $d$ can be simultaneously large. 

In this section, we introduce a sparse variational inference scheme to approximate the posterior, that locates the inducing points in a space other than the data-domain; this approach is usually called \emph{inter-domain} sparse variational inference \cite{lazaro2009inter, matthews2016sparse}. This allows for more efficient data-representation and faster inference. 
Key to our approach is that signature features take values in a well-understood subset of the feature space $\TV{\bbR^d}$.
This allows as us to augment the index set with structured tensors, and locate inducing points in this larger set.

\subsection{Variational inference}
As is well-known, inference for \texttt{GP}s scales as $O(n_\bX^3)$, see Section 3.3.~in~\cite{rasmussen2006}.
This first led to {sparse} models, \cite{quinonero2005unifying}, that select a subset $\bZ=\{\bz_1,\ldots,\bz_{n_\bZ}\}$ of $\bX$ consisting of $n_\bZ\ll n_\bX$ points, and subsequently to {pseudo-inputs}, \cite{snelson2006sparse}, that select points $\bZ$ that are not necessarily in $\bX$.
This was a big step towards complexity reduction, but pseudo-inputs are prone to overfitting, \cite{MatthewsDPhil}.
A different idea is to treat $\bZ$ as parameters of a variational approximation \cite{Titsias2009Variational} and not as model parameters; that is the points $\bZ$ are choosen simultaneously with the hyperparameters of the \texttt{GP} by maximising a lower bound on the log-marginal likelhood $\log p(Y)$, the so-called {evidence lower bound} (ELBO), given as
\begin{align} \label{eq:gpsig:elbo}
	\log p(Y) \geq \bbE_{q(f_{\bx})}[\log p(Y \vert f_{\bx})] - \KL{q(f_\bZ)}{p(f_\bZ)},
\end{align}
where $f_{\bx}$ and $f_\bZ$ denotes the \texttt{GP} evaluated at the data-points and the inducing locations. Typically, $q(f_\bZ)$ is given a free-form multivariate Gaussian to be learnt from the data, and then extended to other indices of the \texttt{GP} by \textit{prior conditional matching}, i.e. $q(f_{\bx} \vert f_\bZ) = p(f_{\bx} \vert f_\bZ)$.	Initially applied to regression, this was extended to classification~\cite{chai2012variational, Hensman2015Scalable}.
Among its advantages are that it gives a nonparametric approximation to the true posterior, adding inducing points only improves the approximation, and any optimization method can be used to maximize the \texttt{ELBO}, most importantly, stochastic optimization; see \cite{Hensman2013GaussianPF, bauer2016understanding,bui2016unifying}.

\paragraph{Inter-domain approaches} Another idea is to go beyond the original index set and place inducing points $\bZ$ in a different space $\cX'$, that is, given a centered \texttt{GP} $g=(g_\bx )_{\bx \in \cX}$ one augments the original index set $\cX$ by a set $\cX'$ to define a new \texttt{GP} $(g_\bx)_{\bx \in \cX \cup \cX'}$ and then locates the inducing points in this bigger model.
This was suggested in~\cite{lazaro2009inter} in the context of integral transforms, which was extended in~\cite{Hensman2016Fourier}, and studied in more generality in~\cite{matthews2016sparse}.
In general, it is not obvious how to find a useful augmentation set $\cX'$ and define the covariance enlarged to $\cX\cup \cX'$.
\subsection{Inter-domain low-rank tensors}
\paragraph{A feature space augmentation}
Given any \texttt{GP} with a covariance function $\kernel(\bx,\by)=\langle \Phi(\bx), \Phi(\by) \rangle$ where $\Phi$ is explicitly known\footnote{Mercer's Theorem guarantees the existence of $\Phi$, but not in a sufficiently explicit form.}, we propose that a natural augmentation candidate is the ``feature space'' $\cX'=\operatorname{span}\{\Phi(\bx): \bx \in \cX\}$ itself.
The covariance function $\kernel$ of $g$ can be simply extended to $\cX\cup \cX'$ by linearity,
\begin{align}\label{eq:gpsig:augment}
\kernel(\bx,\bz)=\kernel(\bz,\bx)=\alpha \kernel(\bx, \bx') + \beta \kernel(\bx, \bx'') 
\end{align}
for $\bx \in \cX$, $ \bz=\alpha \Phi(\bx')+\beta \Phi(\bx'') \in \cX'$, $\alpha,\beta \in \bbR$; analogous for $\kernel(\bz,\bz')$ with $\bz,\bz' \in \cX'$. 
For our \texttt{GP},
\begin{align} \label{eq:gpsig:sig_span}
    \cX'=\operatorname{span}\{\Phi_{\tau}(\bx):\bx \in \Paths(\cX)\} \subset \TV{\bbR^d}.
\end{align}
We can thus extend our signature covariance~\eqref{eq:gpsig:signature covariance} to $\Seq(\cX) \cup \TV{\bbR^d}$ by~\eqref{eq:gpsig:augment}. 
This provides a flexible class of inducing point locations $\bZ$ by optimizing over elements of the tensor algebra $ \bZ \subset \TV{\bbR^d}$.
We coin these inducing point locations as \textit{inducing tensors}.

\paragraph{Consistency of augmentation}	
A subtle point about augmenting the index set is that maximizing the \texttt{ELBO} in \eqref{eq:gpsig:elbo} is not necessarily equivalent anymore to minimizing a rigorously defined KL divergence between the true posterior process and its approximation over the unaugmented index set. In \cite{matthews2016sparse}, a sufficient condition given for this to hold is that the prior \texttt{GP} evaluated at the newly added indices is deterministic conditioned on the original \texttt{GP}.
In the case of~\eqref{eq:gpsig:augment}, this is easily seen to be true, since the augmented indices arise as linear combinations of (the lifts of) elements in the original index set.
Therefore, the corresponding \texttt{GP} evaluations arise as linear combinations of evaluations of the original process by the fact that the (pre-)feature space $\cX^\p$ is a \textit{representation} of the \textit{Hilbert space generated by the process} \cite[Sec.~3]{berlinet2011reproducing}.

\paragraph{Representation of inducing tensors}
We define our inducing tensors as in the low-rank format, specifically as rank-$1$ tensors such each inducing point is represented as 
\[ \label{eq:gpsig:sparse_tens}
    \bz = (\bz_m)_{m=0,\ldots,M} \in \prod_{m=0}^M (\bbR^d)^{\otimes m},
\]
where $z_0 \in \bbR$ and $\bz_m = \bv_{m,1} \otimes \bv_{m,2} \otimes \cdots \otimes \bv_{m,m}$ for ${m \geq 1}$.
We remark that this construction does not generally give tensors that can be signatures of paths.
However, they can be represented as linear combinations of signatures, since path signatures span the whole tensor algebra, hence the previous argument about the augmentation carries over. Also, informally, what gives the data-efficiency of inducing tensors is exactly that they are not represented in a basis of signatures.

By linearity of integration and the inner product, the inducing point covariance equals 
\begin{align} \label{eq:gpsig:inducing_cov_n}
    \bbE[f_{\bz}f_{\bz'}] = \sum_{m=0}^M \sigma_m^2 \langle \bz_m, \bz_m^\p \rangle = \sum_{m=0}^M \sigma_m^2 \prod_{k=1}^m\langle \bv_{m,k}, \bv_{m, k}^\p \rangle,
\end{align}
and the cross-covariance between paths and inducing tensors equals
\begin{align}\label{eq:gpsig:cross_cov}
    \bbE[f_{\bx}f_{\bz}] =\sum_{m=0}^M \sigma_m^2 \langle \S_m (\bx), z_m \rangle = \sum_{m=0}^M \int_{\bt \in \Delta_m([0, \hor{\bx}])} \langle \d \bx_{t_1},\bv_{m,1} \rangle \cdots \langle \d \bx_{t_m}, \bv_{m, m} \rangle,
\end{align}
while when working with discrete-time data, using the order-$1$ signature \eqref{eq:gpsig:discrete_sig_cov} it equals
\begin{align} \label{eq:gpsig:discr_cross_cov_n}
    \bbE[f_\bx f_\bz] = \sum_{\bi \in \Delta_m(\len{\bx})} \langle \delta \bx_{i_1}, \bv_{m,1} \rangle  \cdots \langle \delta \bx_{{i_m}},  \bv_{m, m} \rangle.
\end{align}
Below we use the approximation \eqref{eq:gpsig:discr_cross_cov_n} since it makes the recursive algorithms simpler but note that a simple modification exactly computes~\eqref{eq:gpsig:discr_cross_cov_n} for a marginal computational overhead.
\subsection{Algorithms.}
We need to compute the three covariance matrices:~\begin{enumerate*}[label=(\arabic*)] \item $K_{\bZ\bZ}$ of inducing tensors $\bZ$ and inducing tensors $\bZ$, \label{it:tens2tens} \item $K_{\bZ\bX}$ of inducing tensors $\bZ$ and sequences $\bX$, \label{it:tens2stream} \item $K_{\bX\bX}$ of sequences $\bX$ and sequences $\bX$.\label{it:stream2stream} \end{enumerate*}
Using the above tensor representations allows to give vectorized algorithms for~\ref{it:tens2tens} and~\ref{it:tens2stream} in Algorithms~\ref{alg:gpsig:inducing_cov} and \ref{alg:gpsig:cross_cov}, respectively.
For~\ref{it:stream2stream} we use a variation of Algorithm \ref{alg:back:sigkernel_p} which can be found in \cite[App.~D.2]{toth2020bayesian}. We use notation defined in Section~\ref{sec:back:notation}. For $v, v^\p \in V$, $d$ denotes the time to compute $\langle v, v^\p \rangle$, $c$ the memory requirement of $v$. 

\begin{algorithm}[t]
        \begin{footnotesize}
	\caption{Computing the inducing covariances $K_{\bZ\bZ}$}
	\label{alg:gpsig:inducing_cov}
	\begin{algorithmic}[1]
		\STATE {\bfseries Input:} Tensors $\bZ=(\bz_i)_{i=1,\dots,n_\bZ} \subset \prod_{n=0}^m V^{\otimes n}$, scalars $(\sigma^2_0, \sigma^2_1, \dots, \sigma^2_m)$, depth $m \in \bbN$ 
		\STATE Compute $K[i, j, n, k] \gets \langle v^i_{n, k}, v^j_{n, k} \rangle$ for $i, j \in [n_\bZ]$, $n \in [m]$ and $k \in [n]$ \\
		\STATE Initialize $R[i, j] \gets \sigma_0^2$ for $i, j \in [n_{\bZ}]$
		\FOR{$n=1$ {\bfseries to} $m$}
		\STATE Assign $A \gets K[:, :, n, 1]$
		\FOR{$k=2$ {\bfseries to} $n$}
		\STATE Iterate $A \gets K[:, :, n, k] \odot A$
		\ENDFOR
		\STATE Update $R \gets R + \sigma_n^2 \cdot A$
		\ENDFOR
		\STATE {\bfseries Output:} Matrix of inducing covariances $K_{\bZ\bZ} \gets R$
	\end{algorithmic}
        \end{footnotesize}
\end{algorithm}

\begin{algorithm}[t!]
        \begin{footnotesize}
	\caption{Computing the cross-covariances $K_{\bZ\bX}$}
	\label{alg:gpsig:cross_cov}
	\begin{algorithmic}[1]
		\STATE {\bfseries Input:} Tensors $\bZ=(\bz_i)_{i=1,\dots,n_\bZ} \subset \prod_{n=0}^m V^{\otimes n}$, \\ sequences $\bX=(\bx_i)_{i=1,\dots,n_{\bX}} \subset \Seq(\cX)$, scalars $(\sigma^2_0, \sigma^2_1, \dots, \sigma^2_m)$, depth $M \in \bbN$ 
		\STATE Compute $K[i, j, l, m, k] \gets \langle \bv^i_{m, k}, \delta \bx_{j, t_l} \rangle$ for $i \in [n_\bZ]$, $j \in [n_\bX]$, $l \in [L_\bX]$, $m \in [M]$ and $k \in [m]$ \\
		\STATE Initialize $R[i, j] \gets \sigma_0^2$ for $i \in [n_{\bZ}]$, $j \in [n_{\bX}]$
		\FOR{$m=1$ {\bfseries to} $M$}
		\STATE Assign $A \gets K[:, :, n, 1]$
		\FOR{$k=2$ {\bfseries to} $m$}
		\STATE Iterate $A \gets K[:, :, :, m, k] \odot A[:, :, \boxplus+1]$
		\ENDFOR
		\STATE Update $R \gets R + \sigma_m^2 \cdot A[:, :, \Sigma]$
		\ENDFOR
		\STATE {\bfseries Output:} Matrix of cross-covariances $K_{\bZ\bX} \gets R$
	\end{algorithmic}
    \end{footnotesize}
\end{algorithm}

\begin{proposition}\label{eq:gpsig:complexity}

Algorithm~\ref{alg:gpsig:inducing_cov} computes the covariance matrix $K_{\bZ\bZ}$ of $n_\bZ$ inducing points in $O(M^2 \cdot n_\bZ^2 \cdot d )$ steps.
Algorithm~\ref{alg:gpsig:cross_cov} computes the cross-covariance matrix $K_{\bZ\bX}$ in $O(M^2 \cdot n_\bX \cdot n_\bZ \cdot \len{\bX} \cdot d)$ steps.
Additionally to storing the inducing tensors $\bZ$, Algorithm~\ref{alg:gpsig:inducing_cov} requires $O(M^2 \cdot n_\bZ^2)$ memory, Algorithm~\ref{alg:gpsig:cross_cov} requires $O(M^2 \cdot n_\bX \cdot n_\bZ \cdot \len{\bX})$ memory.
\end{proposition}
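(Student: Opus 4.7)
The plan is to bound separately the cost of the two phases of each algorithm: the initial inner-product tabulation into the array $K$, and the recursive assembly of the level-wise covariances via Hadamard products (and a cumulative sum in Algorithm~\ref{alg:gpsig:cross_cov}). Throughout, I exploit the rank-$1$ structure of the inducing tensors in \eqref{eq:gpsig:sparse_tens}, which reduces each inner product $\langle \bz_m, \bz_m^\p\rangle = \prod_{k=1}^m \langle \bv_{m,k}, \bv_{m,k}^\p\rangle$ (and the analogous cross term) to $m$ inner products on $V$.

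For Algorithm~\ref{alg:gpsig:inducing_cov}, the tabulation in line~2 evaluates $\langle \bv^i_{n,k}, \bv^j_{n,k}\rangle$ over $(i,j,n,k)$ with $i,j\in[n_\bZ]$, $n\in[M]$, $k\in[n]$. There are $n_\bZ^2 \sum_{n=1}^M n = O(M^2 n_\bZ^2)$ such entries, each costing $d$, for a total of $O(M^2 n_\bZ^2 d)$. The subsequent double loop performs, for level $n$, exactly $n-1$ Hadamard products of $n_\bZ\times n_\bZ$ matrices and one weighted addition, totalling $\sum_{n=1}^M O(n\cdot n_\bZ^2) = O(M^2 n_\bZ^2)$ further operations. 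The total runtime is therefore $O(M^2 n_\bZ^2 d)$. The dominant storage is the array $K$ of shape $n_\bZ^2$ with $\sum_{n=1}^M n$ level-slices, giving $O(M^2 n_\bZ^2)$ memory, while $R$ and $A$ are of size $O(n_\bZ^2)$.

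For Algorithm~\ref{alg:gpsig:cross_cov}, the same accounting applies with an extra axis of length $\len{\bX}$ for time indices of the sequences. The tabulation in line~2 fills $K[i,j,l,m,k]$ over $i\in[n_\bZ]$, $j\in[n_\bX]$, $l\in[\len{\bX}]$, $m\in[M]$, $k\in[m]$, which is $O(M^2 n_\bZ n_\bX \len{\bX})$ entries each costing $d$. The recursion at level $m$ performs $m-1$ operations combining a Hadamard product with the cumulative-sum shift $A[:,:,\boxplus+1]$; each such operation acts on an array of shape $n_\bZ\times n_\bX\times \len{\bX}$. Summing over $m$ yields $\sum_{m=1}^M O(m\cdot n_\bZ n_\bX \len{\bX}) = O(M^2 n_\bZ n_\bX \len{\bX})$ extra operations, giving the stated total runtime of $O(M^2 n_\bZ n_\bX \len{\bX} d)$. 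Memory is again dominated by $K$, of size $O(M^2 n_\bZ n_\bX \len{\bX})$, since $R$ and the working array $A$ fit within $O(n_\bZ n_\bX \len{\bX})$.

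There is no substantial obstacle; the only point requiring care is that the cumulative-sum operation $A[:,:,\boxplus+1]$ in Algorithm~\ref{alg:gpsig:cross_cov} does not blow up the complexity: it is a single linear-time pass along the time axis and can be performed in-place, so it contributes the same $O(n_\bZ n_\bX \len{\bX})$ per inner iteration as an elementwise operation. Once this is observed, the counts above follow by summing geometric-style arithmetic series over the level index.
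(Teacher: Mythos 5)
Your proof is correct and follows essentially the same route as the paper: the proposition is established by direct inspection of Algorithms~\ref{alg:gpsig:inducing_cov} and~\ref{alg:gpsig:cross_cov}, counting the $O(M^2)$ tabulated inner products (each costing $d$) and the level-wise Hadamard/cumulative-sum passes, with the array $K$ dominating memory. Your observation that the $\boxplus$ scan is a linear-time in-place pass is the only subtlety, and you handle it correctly.
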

Proposition~\ref{eq:gpsig:complexity} follows details the scalability and we emphasize:
\begin{enumerate*}[label=(\roman*)]
\item Both algorithms are linear in the maximal sequence length $\len{\bX}$.
\item $M$ is a hyperparameter, and in all our experiments we learnt from the data $M \leq 5$, thus the quadratic complexity in $M$ is negligible.
\item The memory cost of inducing tensors $\bZ$ is much less than for the data $\bX$, which is stored in $O(n_\bX \cdot \len{\bX} \cdot d)$ memory, which is important because the inducing tensors are variational parameters, and not amenable to subsampling, while the learning inputs can be subsampled as noted by~\cite{Hensman2013GaussianPF}.
Especially for \texttt{GP}Us memory cost is decisive and such savings are very important.
\end{enumerate*}

The computation of $K_{\bX\bX}$ detailed in Appendix \cite[App.~D.2]{toth2020bayesian} has time complexity $O((M + d) \cdot n_\bX^2 \cdot \len{\bX}^2)$ and memory of $O(d \cdot n_\bX \cdot \len{\bX} + n_\bX^2 \cdot \len{\bX}^2)$. However, given a factorizing likelihood, one only requires $K_\bX = [k(\bx, \bx)]_{\bx \in \bX}$, which eliminates the quadratic cost in $n_\bX$.
It turns out that this is enough to train on \texttt{GP}Us with reasonable minibatch sizes (e.g. $n_\bX = 50$) on several real world datasets. Finally, note that the \texttt{ELBO} \eqref{eq:gpsig:elbo} requires an additional matrix inversion and multiplication in $O(n_\bZ^2 \cdot n_\bX + n_\bZ^3)$ time, which is not significant in our case.

\section{Experiments} \label{sec:gpsig:experiments}
\paragraph{TS classification}

Using \texttt{GPFlow}~\cite{Matthews2017GPflowAG}, \texttt{Keras}~\cite{Chollet2015Keras}, we implemented three models: \texttt{GPSig}, \texttt{GPSigLSTM}, and \texttt{GPSigGRU}.
All three use the signature covariance with the low-rank inducing tensors of Section~\ref{sec:sparse var tensor}.
Note that in place of the signature kernel with the Euclidean lift \eqref{eq:gpsig:discrete_sig_cov}, we precompose the signature kernel with the \texttt{RBF} kernel as detailed in Section \ref{sec:back:sigkernels} and \ref{sec:back:discrete}, so that it has greater flexibility. All computations carry on mutatis mutandis by replacing inner product evaluations with the \texttt{RBF} kernel. Regarding the implemented models, \texttt{GPSig} is a plain vanilla variational \texttt{GP} classifier.
Previous applications of neural nets to covariance constructions, in particular~\cite{Wilson2016DeepK, al2017learning}, inspired \texttt{GPSigLSTM} and \texttt{GPSigGRU} that include an \texttt{RNN} as a sequence-to-sequence transformation
with $h $ hidden units; see Figures~\ref{fig:gpsig:GPSig} and \ref{fig:gpsig:GPSigRNN} where $\hat \bx$ denotes augmentation with lags and $\kernel_{\hat \bx}$ a static kernel lift. We benchmarked these \texttt{GP} models on $16$ multivariate \texttt{TS} classification datasets, a collection introduced in~\cite{baydogan2015multivarate} that was a semi-standard archive at the time of working on this project in \texttt{TS} classification, e.g. see the citations in \cite[App.~E.4]{toth2020bayesian} for papers that use these datasets. The same datasets are also used in \cite{Fawaz2019DLforTSC} to compare several deep learning architectures for \texttt{TSC}.

As Bayesian baselines we used three \texttt{GP} models: \begin{enumerate*}[label=(\roman*)] \item
\texttt{GPLSTM} and \item \texttt{GPGRU} consist of an \texttt{LSTM} and a \texttt{GRU} network with an \texttt{RBF} kernel on top, in which case the \texttt{RNN}s are used as a sequence-to-vector transformation from $\Seq(\bbR^d)$ to $\bbR^h$; \item \texttt{GPKConv1D} uses the convolutional kernel introduced in \cite{Wilk2017ConvGP} in $1$-dimension (time) \end{enumerate*}. Throughout we used sparse variational inference: for \texttt{GPSigLSTM}, \texttt{GPSigGRU}, \texttt{GPSig}, the inducing tensors detailed in Section~\ref{sec:sparse var tensor} are used; for \texttt{GPLSTM} and \texttt{GPGRU} the inducing points are located in the output space of the \texttt{RNN} layer, $\bbR^h$; for \texttt{GPKConv1D}, the inducing patches of \cite{Wilk2017ConvGP} are used.

We used $n_\bZ = 500$ for all models; further all use a static kernel in one form or another, which we fixed to be the \texttt{RBF} kernel.
The signature kernel was truncated\footnote{For these experiments, the $M=4$ value seemed to give an optimal trade-off between computational complexity and expressiveness of the kernel, see \cite[App.~E.1]{toth2020bayesian} for more details.} at $M=4$, and for \texttt{GPSig} $p=1$ lags were used; the \texttt{GPSigRNN}s did not use lags, as the sequence of hidden states already incorporate lagged information about past observations. The window size in \texttt{GPKConv1D} was set to $w=10$. 
The \texttt{RNN}-architectures were selected independently for all models by grid-search among $6$ variants, that is, the number of hidden units from $\{8, 32, 128\}$ and with or without dropout. For training, early stopping was used with $n = 500$ epochs patience; a learning rate of $\alpha = 1 \times 10^{-3}$; a minibatch size of $50$; as optimizer Adam \cite{kingma2014adam} and Nadam \cite{Dozat2015IncorporatingNM} were employed. Implementations, datasets, and the training and grid-search methodology are respectively detailed in Appendices E.1, E.2, and E.3 in \cite{toth2020bayesian}.

\begin{table}[t]
	\caption{Average ranks of \texttt{GP}s with the 1\textsuperscript{st} and 2\textsuperscript{nd} best in \textbf{bold} and \textit{italicized} for each row.}
	\label{table:gpsig:avg_ranks}
	\begin{center}
		\resizebox{\textwidth}{!}{
			\begin{sc}
				\begin{tabular}{lrrrrrr}
					\toprule
					 & \texttt{GPSigLSTM} & \texttt{GPSigGRU} & \texttt{GPSig} & \texttt{GPLSTM} & \texttt{GPGRU} & \texttt{GPKConv1D} \\
					\midrule
					Mean rank (nlpp, $n_\bX < 300$) & $\mathit{2.80}$ & $2.90$ & $\mathbf{2.20}$ & $4.70$ & $4.00$ & $4.40$ \\
					Mean rank (acc., $n_\bX < 300$) & $\mathit{3.00}$ & $3.10$ & $\mathbf{2.80}$ & $4.25$ & $4.25$ & $3.60$ \\
					Mean rank (nlpp, $n_\bX \geq 300$) & $\mathbf{2.33}$ & $3.33$ & $\mathit{2.83}$ & $4.83$ & $4.33$ & $3.33$ \\
					Mean rank (acc., $n_\bX \geq 300$) & $\mathbf{2.17}$ & $3.50$ & $\mathit{3.00}$ & $4.17$ & $4.33$ & $3.83$ \\
					Mean rank (nlpp, all) & $\mathit{2.63}$ & $3.06$ & $\mathbf{2.44}$ & $4.75$ & $4.13$ & $4.00$ \\
					Mean rank (acc., all) & $\mathbf{2.69}$ & $3.25$ & $\mathit{2.88}$ & $4.22$ & $4.28$ & $3.69$ \\
					\bottomrule
				\end{tabular}
			\end{sc}
  }
	\end{center}
	\end{table}

\begin{figure*}[h!]
	\centering
	\begin{minipage}{0.29\textwidth}
		\centering
		\includegraphics[height=1.85in]{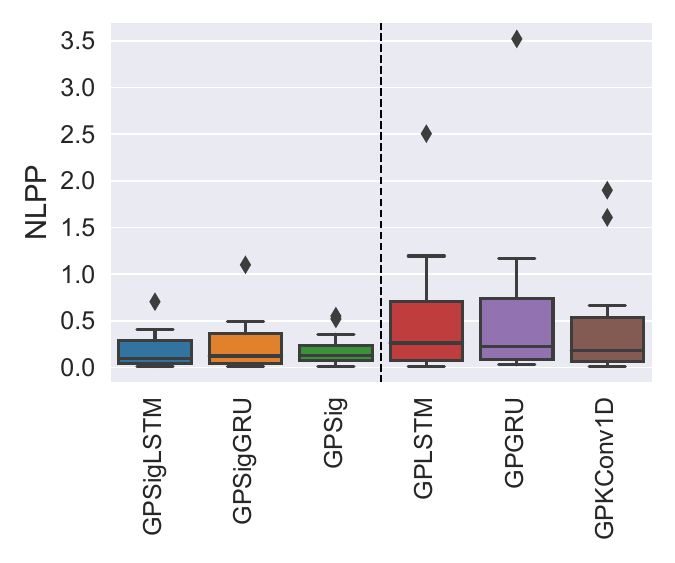}
	\end{minipage}
	\begin{minipage}{0.69\textwidth}
		\centering
		\includegraphics[height=1.85in]{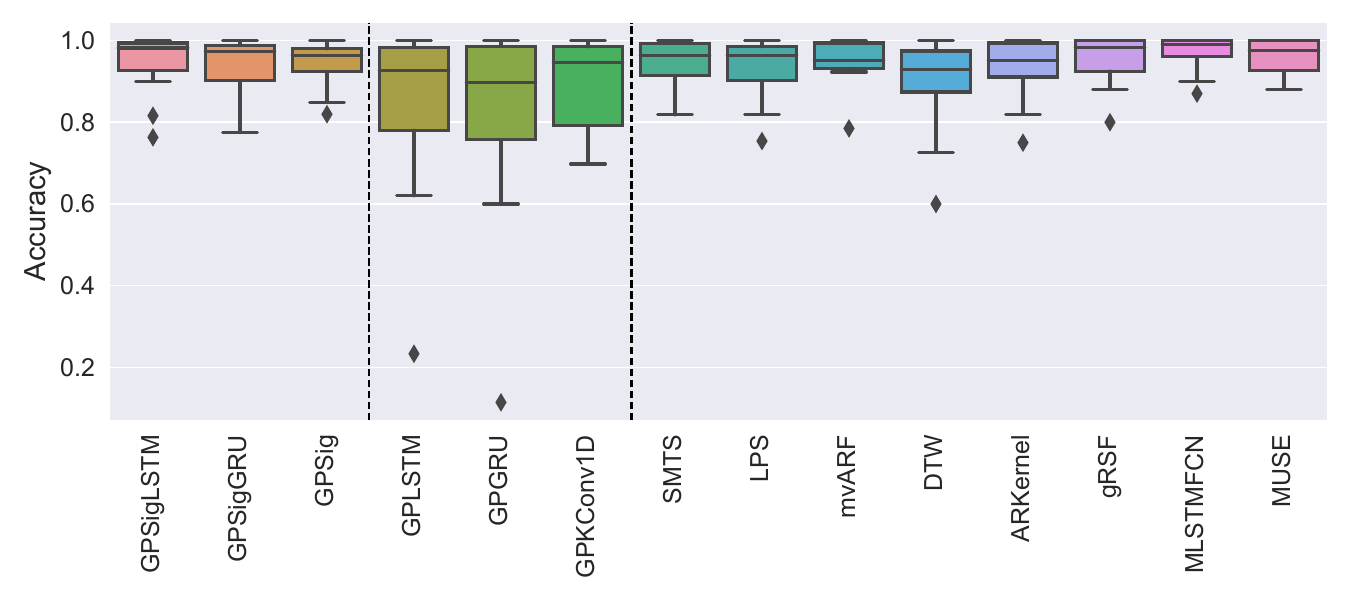}
	\end{minipage}
	\caption{Box-plots of negative log-predictive probabilities (left) and classification accuracies (right) on 16 TSC datasets.}
	\label{fig:gpsig:boxplots}
\end{figure*}

\begin{figure}[h!]
	\begin{minipage}{0.44\textwidth}
		\centering
		\begin{tikzpicture}[scale=0.9, every node/.style={scale=0.9}, shorten >=1pt,draw=black!50, node distance=\layersep]
	
		\node[node, text centered, shading=green] at (0.0,0.5) (x) {$\bx$};
		\node[text centered] at (0.0,1.25) {$\Seq\pars{\bbR^d}$};
		
		\node[node, text centered, shading=blue] at (1.8,0.5) (kx) {$\kernel_{\hat\bx}$};
		\node[text centered] at (1.8, 1.25) {$\Seq(\Hil)$};
		
		\draw[->, color=black] (x) -- (kx);
		
		(4.0,0) rectangle ++(1,1);
		\node[node, text centered, shading=blue] at (3.6,0.5) (sigx) {$\Phi(\kernel_{\hat\bx})$};
		\node[text centered] at (3.6, 1.25) {$\TV{\Hil}$};
		
		\draw[->, color=black] (kx) -- (sigx);
		
		\node[node, text centered, shading=red] at (5.4,0.5) (gp) {$f_{\bx}$};
		\node[text centered] at (5.4, 1.25) {$\GP$};
		
		\draw[->, color=black] (sigx) -- (gp);
		\end{tikzpicture}
		\captionof{figure}{The \texttt{GPSig} model}
		\label{fig:gpsig:GPSig}     
	\end{minipage}
	\begin{minipage}{0.56\textwidth}
		\centering
		\begin{tikzpicture}[scale=0.9, every node/.style={scale=0.9}, shorten >=1pt,draw=black!50, node distance=\layersep]
	
		\node[node, text centered, shading=green] at (0.0,0.5) (x) {$\bx$};
		\node[text centered] at (0.0, 1.25) {$\Seq\pars{\bbR^d}$};
		
		\node[node, text centered, shading=blue] at (1.8,0.5) (rnnx) {$\phi_\theta(\hat\bx)$};
		\node[text centered] at (1.8, 1.25) {$\Seq(\bbR^h)$};
		
		\draw[->, color=black] (x) -- (rnnx);
		
		\node[node, text centered, shading=blue] at (3.6, 0.5) (krnnx) {$\kernel_{\phi_\theta(\hat\bx)}$};
		\node[text centered] at (3.6, 1.25) {$\Seq(\Hil)$};
		
		\draw[->, color=black] (rnnx) -- (krnnx);
		
		\node[node, text centered, shading=blue] at (5.4, 0.5) (sigkrnnx) {$\Phi(\kernel_{\phi_\theta(\hat\bx)})$};
		\node[text centered] at (5.4, 1.25) {$\TV{\cH_k}$};
		
		\draw[->, color=black] (krnnx) -- (sigkrnnx);
		
		\node[node, text centered, shading=red] at (7.2, 0.5) (gp) {$f_{\bx}$};
		\node[text centered] at (7.2, 1.25) {$\GP$};
		
		\draw[->, color=black] (sigkrnnx) -- (gp);
		\end{tikzpicture}
		\captionof{figure}{The \texttt{GPSigRNN} model}
		\label{fig:gpsig:GPSigRNN}     
	\end{minipage}
\end{figure}

\paragraph{Discussion of results}
For \texttt{GP}s, we report accuracies and negative log-predictive probabilities (\texttt{NLPP}), the latter take not only accuracies, but the calibration of probabilities into account as well.
Table \ref{table:gpsig:avg_ranks} shows the average ranks among the \texttt{GP}s.
The full table of \texttt{NLPP}s and accuracies with mean and standard deviation over 5 model trains are reported in Appendix~\ref{app:gpsig:benchmark} in Table~\ref{table:gpsig:full_nlpp_results} and Table~\ref{table:gpsig:full_acc_results}.
As non-Bayesian baselines, we report accuracies of eight frequentist \texttt{TS} classifiers in Table~\ref{table:gpsig:freq_acc_results}. On Figure \ref{fig:gpsig:boxplots}, we visualize the box-plot distributions of \begin{enumerate*}[label=(\roman*)] \item negative log-predictive probabilities of \texttt{GP}s, \item accuracies of both \texttt{GP}s and frequentist methods. \end{enumerate*}

The signature models perform consistently the best in terms of average rankings of both \texttt{NLPP} and accuracy among the \texttt{GP}s. Particularly, they achieve stronger mean performance and a smaller variance across datasets. To explain this, inspecting the results in Tables \ref{table:gpsig:full_nlpp_results}, \ref{table:gpsig:full_acc_results}, we observe that all other \texttt{GP} baselines perform very poorly on some datasets, while the signature based models perform at least moderately well on {all datasets}. We believe this ties in to the universality property of signatures, see Section \ref{subsec:back:props}. The convolutional \texttt{GP}, \texttt{GPKConv1D}, which also has a very small parameter set, performed rather competitively with the deep kernel baselines, even on larger datasets. Comparison among variants of \texttt{GPSig} can be summarized as follows: for smaller datasets ($n_\bX < 300$), \texttt{GPSig} outperforms other variants as it has a very small parameter set; for larger datasets $(n_\bX \geq 300)$, \texttt{GPSigLSTM} performs best which conforms with the intuition that \texttt{RNN}s suffer from small sample sizes.
A related observation is that \texttt{GPLSTM} and \texttt{GPGRU} perform about on par, while \texttt{GPSigLSTM} does much better than \texttt{GPSigGRU}, which suggests that the signature makes explicit use of the additional complexity of the \texttt{LSTM} network.

Compared only in terms of accuracy, \texttt{GPSig} competes with frequentist classifiers: it outperforms the usual \texttt{DTW} baseline and competes with state-of-the art classifiers such as \texttt{MUSE} and \texttt{MLSTMFCN}. Purely based on accuracy, these win overall, but the difference is usually small, hence the extra Bayesian advantages come at a small cost. Obviously \texttt{TS} classification is a vast field and many other models could be considered; e.g.~we did not use recurrent \texttt{GP}s or \texttt{GPSSM}s since
\begin{enumerate*}[label=(\arabic*)]
\item 
they have so-far not been used for \texttt{TS} classification, possibly because there is no sequential nature in the output space,
\item we did not find a \texttt{GPFlow} implementation that would allow to use sequence kernels in the \texttt{GP} transition function. (Implementation of~\cite{ialongo2019overcoming} does currently not allow taking subsequences of past states in the transition function.
An implementation would require much further work, but an interesting project would be to combine our models.) 
\end{enumerate*}

\begin{table}[t]
	\caption{Average ranks of \texttt{GP}s in terms of negative log-predictive probabilities (\texttt{NLPP})  across data domains with the 1\textsuperscript{st} and 2\textsuperscript{nd} best in \textbf{bold} and \textit{italicized} for each row.}
	\label{table:gpsig:domain}
	\begin{center}
			\begin{sc}
				\begin{tabular}{lrrrrrr}
					\toprule
					Domain & \texttt{GPSigLSTM} & \texttt{GPSigGRU} & \texttt{GPSig} & \texttt{GPLSTM} & \texttt{GPGRU} & \texttt{GPKConv1D} \\
					\midrule
Handwriting & $\mathit{2.75}$ & $4.00$ & $\mathbf{1.75}$ & $3.75$ & $5.50$ & $3.25$ \\
Motion & $\mathbf{1.92}$ & $2.42$ & $\mathit{2.25}$ & $5.17$ & $4.08$ & $5.17$ \\
Sensor & $\mathit{2.75}$ & $4.25$ & $\mathbf{2.38}$ & $5.13$ & $3.50$ & $3.00$ \\
Speech & $4.00$ & $\mathbf{1.50}$ & $5.00$ & $4.50$ & $\mathit{2.50}$ & $3.50$ \\
					\bottomrule
				\end{tabular}
			\end{sc}
	\end{center}
	\end{table}

\paragraph{Domain analysis}
Next, we provide a breakdown of the performance of the various \texttt{GP} models across data domains. The datasets can be categorized into 4 different domains: handwriting, motion, sensor, and speech. The average ranks of \texttt{NLPP}s of each \texttt{GP} model for each data domain is provided in Table \ref{table:gpsig:domain}. We observe the following points: \begin{enumerate*}[label=(\arabic*)] \item on handwriting datasets, the vanilla \texttt{GPSig} model provides the best performance with \texttt{GPSigLSTM} in second and \texttt{GPKConv1D} in third while \texttt{GPSigGRU} underperforms; \item on motion datasets, the additional flexibility of the preprocessing \texttt{LSTM} layer shines and \texttt{GPSigLSTM} takes first place while \texttt{GPSig} second and \texttt{GPSigGRU} third; \item on sensor datasets again \texttt{GPSig} performs the best and \texttt{GPSigLSTM} the second best while \texttt{GPSigGRU} severely underperforms; \item somewhat surprisingly, on speech datasets the \texttt{GRU} architecture provides the best performance, and \texttt{GPSigGRU} takes first place while \texttt{GPGRU} second and \texttt{GPKConv1D} third while the other models underperform.    
\end{enumerate*}
Overall, we conclude that the vanilla \texttt{GPSig} model often provides sufficient performance, while augmenting it with a preprocessing \texttt{LSTM} layer can often provide further improvements. However, in some cases, i.e.~as seen on speech data here, the inductive bias of the \texttt{GRU} layer is needed, and the signature kernel can only perform well when the data has been first fed through one such layer.

\paragraph{Inducing tensors vs inducing sequences} Our results rely on the inter-domain approach using tensors to locate inducing points from Section~\ref{sec:sparse var tensor}. 
An alternative is to use sequences for the inducing points, $\bZ \subset \Seq(\cX)$, and controlling their maximal length $\len{\bZ} = \max_{\bz \in \bZ} \len{\bz}$ to be of order $m$, i.e. $\len{\bZ} \sim m$.
We coin this approach \textit{inducing sequences}.
Intuitively, one expects the inducing tensors to be more efficient than inducing sequences, since they make full use of the structure of the signature feature space/covariance.
To test this intuition, we compared the performance of the inducing tensors and inducing sequences subject to both having the same computational complexitiy.
For this experiment, we took the AUSLAN dataset~\cite{Dua2017UCI}, which consists of $n_c = 95$ classes for $n_\bX = 1140$ training examples.
This is a challenging dataset as the inducing variables need to characterize the abundance of classification boundaries.

We used \texttt{GPSig} with the same settings as in the previous experiments.
The hyperparameters of the kernel were a-priori learnt with $n_{\bZ} = 500$ inducing tensors, and we purely investigated how the quality of the approximation changes for both approaches by varying the number of inducing points $n_\bZ$.
For each number of inducing variables, both approaches were trained independently $5$ times for $300$ epochs with random initialization of the inducing variables, for details \cite[App.~E.3]{toth2020bayesian}.
We plot on Figure \ref{fig:gpsig:ind_tens_vs_ind_seq} three metrics: \begin{enumerate*}[label=(\arabic*)] \item the achieved \texttt{ELBO}, \item the achieved accuracy, and \item \texttt{NLPP} on the testing set \end{enumerate*}.
At $n_{\bZ} = 500$ both approaches are close to saturation, but the inducing tensors consistently perform better.
We remark that in practice, an important aspect is also how well the kernel hyperparameters can be recovered, that we did not consider here, and is a tricky question for sparse variational inference in general \cite{bauer2016understanding}.
Although, intuition suggests that the closer the model to saturation is with respect to the inducing points, the more consistent should the optimization be with un-sparsified variational inference.

\begin{figure}[t]
	\centering
	\begin{minipage}{0.49\textwidth}
		\hspace{-40pt}
		\centering
		\includegraphics[width=3.25in]{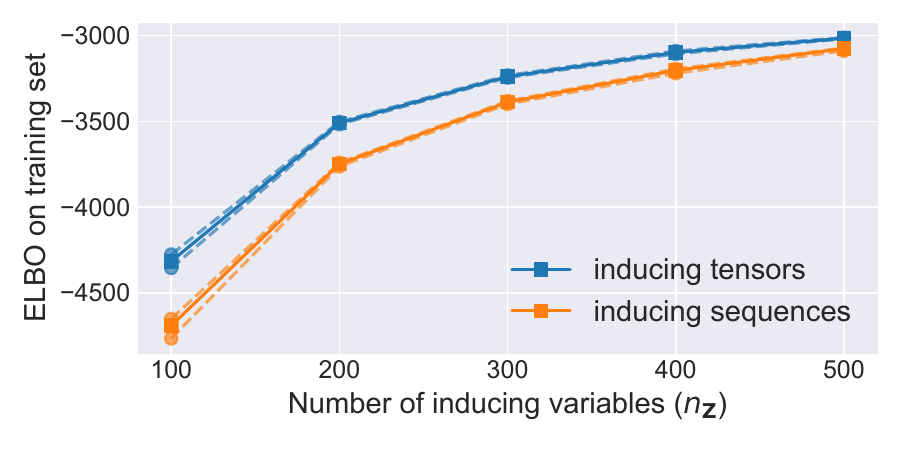}
	\end{minipage}
	\begin{minipage}{0.49\textwidth}
		\hspace{-40pt}
		\centering
		\includegraphics[width=3.25in]{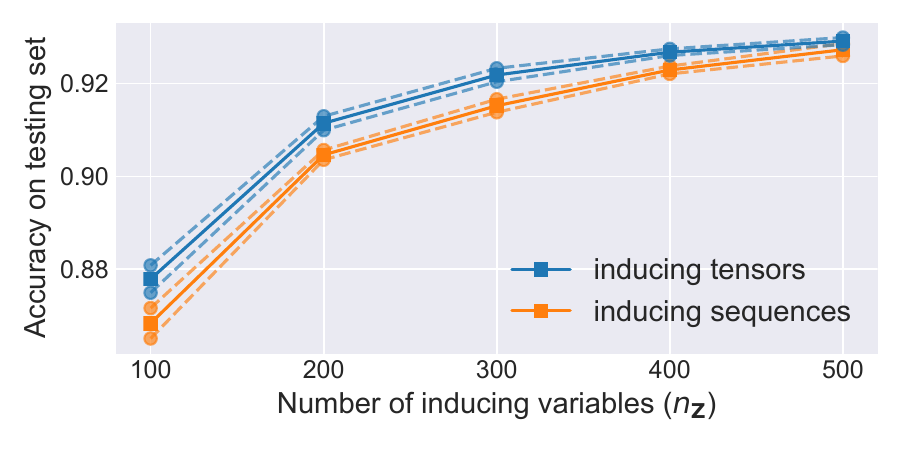}
	\end{minipage}
	\begin{minipage}{0.49\textwidth}
		\hspace{-40pt}
		\centering
		\includegraphics[width=3.25in]{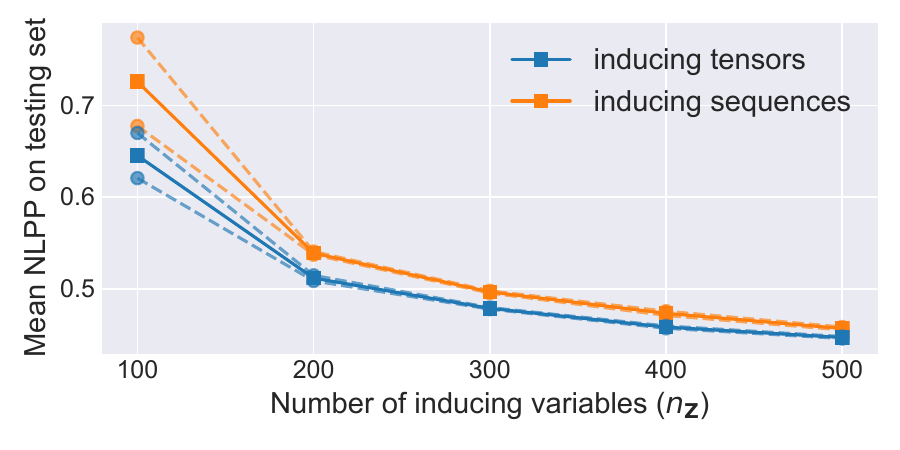}
	\end{minipage}
	\caption{Achieved \texttt{ELBO} (top), accuracy (middle), mean \texttt{NLPP} (bottom) after $300$ epochs of training the variational parameters with random initialization and frozen pre-learnt kernel hyperparameters. Solid is the mean over $5$ independent runs, dashed is the $1$-std region.}
	\label{fig:gpsig:ind_tens_vs_ind_seq}
	\vspace{-10pt}
	\end{figure}

\paragraph{Visualizing inducing tensors}
To gain more intuition, we visualized the feature space for one of the trained models on AUSLAN with $n_\bZ = 500$ inducing tensors.
We used \texttt{UMAP}~\cite{mcinnes2018umap-software} with the semi-metric $(\bx, \by) \mapsto \sqrt{\kernel(\bx, \bx) + \kernel(\by, \by) - 2 \kernel(\bx, \by)}$ for $\bx, \by \in \cX \cup \cX^\p$, see~Figure~\ref{fig:gpsig:tens_umap}.
There are two imminent observations: \begin{enumerate*}[label=(\roman*)] \item in the point cloud corresponding to the data, the classes hardly look linearly separable; \label{umap:point1} \item the tensors, however, seem to live on a completely separate subspace than the data\label{umap:point2}.\end{enumerate*} 
The algorithm achieves $92\%$ accuracy on this set, therefore, point \ref{umap:point1} is likely due to information being lost in the projection. 
However, point \ref{umap:point2} challenges the intuition about classical sparse variational inference, that the inducing points are located mixed-in with the data-points, concentrating close to the classification boundaries \cite{Hensman2015Scalable}.
In general, the mechanism of how inter-domain inducing points represent the information in the data seems to be more complicated than classically.

To explain point \ref{umap:point2}, we remark that this phenomenon is not surprising at all: signature features live in a manifold that is embedded in the linear tensor space $\TV{\Hil}$. 
In general, sparse tensors of the form~\eqref{eq:gpsig:sparse_tens} will {not} be signatures of paths. 
We believe variational inference works because of an interplay of two factors:
Firstly, signatures of finite sequences can be written as finite linear combinations of such sparse tensors. 
Secondly, the prior conditional term used to define $q(f_{\bx} \vert f_{\bZ})) = p(f_{\bx} \vert f_{\bZ})$.
The feature space is {congruent} to the prior \texttt{GP}~\cite{berlinet2011reproducing}, which means that for $\bx \in \Seq(\cX)$, the value of $f_{\bx}$ given $f_{\bZ}$ is not only almost deterministic when $\bx$ is close to any of $\bz \in \bZ$, but when it is close to any linear combinations of elements in $\bZ$.
By the first remark this can always be achieved given a large enough $n_\bZ$.
To sum up, the inducing tensors do not represent signature features individually, but form atomic building blocks such that their linear combinations induce the actual variational posterior at the data-examples.

\begin{figure}[t]
	\centering
	\begin{minipage}{\textwidth}
		\hspace{-40pt}
		\centering
		\includegraphics[width=4.in]{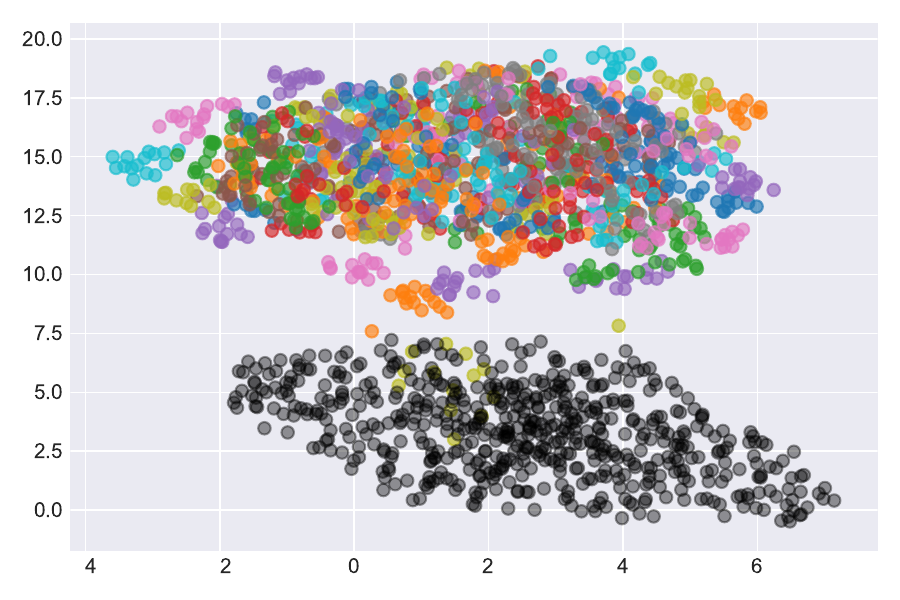}
	\end{minipage} \hspace{-5pt}
	\vspace{-10pt}
	\caption{A \texttt{UMAP} visualization of the features corresponding to data-points (coloured), and inducing tensors (black) in the feature space on the AUSLAN dataset.}
	\label{fig:gpsig:tens_umap}
	\vspace{-10pt}
	\end{figure}

\section{Conclusion}
We used a classical object from stochastic analysis -- signatures -- to define a \texttt{GP} for sequential data.
The \texttt{GP} inherits many of the theoretical guarantees that are known for signature features such as universality and parameterization invariance.
To make it scalable, we develop ``inducing tensors'' that exploit the structure of the feature space, inter-domain inducing points, and variational inference. 
Applied in a plain vanilla variational framework, this yields a classifier, \texttt{GPSig}, that is not only competitive in terms of \texttt{NLPP} with other \texttt{GP} models, but also with state-of-the-art frequentist \texttt{TS} classifiers in terms of accuracy alone. As one of our reviewers remarked, several datasets we consider have a strong signal-to-noise ratio, which makes it worthwhile to point out that even for such datasets, the alternative \texttt{GP} baselines suffer on at least some of them, while the proposed models are consistently able to learn on all datasets. This observation ties in to the universality property, and it suggests that \texttt{GP}s with signatures can be a good starting point when building Bayesian models on time series datasets.

We also demonstrate that signatures can be used as a building block in deep kernels to build larger \texttt{GP} models that leverage the benefits of both, \texttt{RNN}s and signatures. Interestingly, we find that the vanilla \texttt{GPSig} model outperforms the \texttt{GPSigRNN}s for smaller datasets, conforming to the intuition that smaller sample sizes are detrimental for recurrent neural nets. To really get the best of both worlds, one could insert an additional model selection step, that specifies whether a parametric transformation is layer used before feeding the input into the kernel or not. Alternatively, it could also be possible to increase the flexibility of the sequential \texttt{GP} model while staying within a purely nonparametric framework using deep \texttt{GP}s \cite{damianou2013deep} by e.g.~applying a \texttt{GP} layer as observation-wise state-space embedding before the kernel computation. The inference framework of \cite{salimbeni19deep} for deep \texttt{GP}s could also come in handy when moving to datasets with lower signal-to-noise ratios, which can require \texttt{GP} models capable of handling not only epistemic (reducible) uncertainty, but aleatoric (irreducible) uncertainty in the data. It would also be interesting to see if such sequence kernels can be used to improve recurrent \texttt{GP} models \cite{mattos2016recurrent, ialongo2019overcoming} by incorporating sequential information into the \texttt{GP} transition function, that could potentially allow for a more efficient latent state representation.

\begin{subappendices}

 \section{Continuity of \texttt{GP}s with Signature Covariances} \label{app:gpsig:gpsig}
 We specified a covariance function $\kernel$ on the set $\Paths(\cX)$ as inner product of the signature map.
 This guarantees that $(\bx,\by) \mapsto \kernel(\bx,\by) = \langle  \Phi(\bx), \Phi(\by) \rangle$ is a positive definite function, and from the general theory of stochastic processes by the Kolmogorov extension theorem \cite{dudley2002real}, the existence of a centered \texttt{GP} $(f_{\bx})_{\bx \in \Paths(\cX)}$ such that $\bbE[ f_{\bx} f_{\by}] = \kernel(\bx, \by)$.  
 However, this does not guarantee that the sample paths $\bx \mapsto f_{\bx}$ are continuous. 
 Seminal work of Dudley~\cite{dudley1967sizes} showed that such regularity estimates can be derived by bounding the growth of the covering number of the index set of the \texttt{GP} $f$ under the semi-metric
 \begin{align}
  d_\kernel(\bx,\by) = \sqrt{\bbE[ |f_{\bx} - f_{\by}|^2 ]} = \sqrt{\kernel(\bx,\bx) - 2 \kernel(\bx,\by) + \kernel(\by,\by)}. 
 \end{align}
 Already when when the index set is finite dimensional ``nice'' covariance functions can lead to discontinuous \texttt{GP}s, see e.g.~Section 1.4.~in \cite{adler2009random}. 
 Our \texttt{GP} has as index set the space of bounded variation paths $\Paths(\cX)$, which is infinite-dimensional so some caution is needed. 
 However, as we show below we can cover this space by lattice paths and derive covering number estimates.
 
\begin{theorem}\label{thm:covering}
For $V > 0$ and $\epsilon>0$ denote with $N(\epsilon, V)$ the covering number of the set
\begin{align}
    \Paths_L(\cX)= \curls{\bx \in \Paths(\cX) \setgiven \norm{\bx}_\onevar \leq V}
\end{align}
of bounded variation paths of length less or equal than $V$ under the $d_{\sigkernel}$ pseudo-metric.
  Then
  \begin{align}
   \log_2 N(\epsilon,L) \le 2 (d+1) L\frac{\sqrt M}{\epsilon}
  \end{align}
  \end{theorem}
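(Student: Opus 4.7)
The plan is to control the covering number of $\Paths_L(\cX)$ in the signature semi-metric $d_{\sigkernel}$ in two steps: first, bounding $d_{\sigkernel}$ from above by a constant multiple of the bounded-variation distance $\|\bx - \by\|_{\onevar}$; and second, explicitly constructing a lattice cover of $\Paths_L(\cX)$ in the $\onevar$-norm. Combining the two, a $\delta$-cover in $\|\cdot\|_{\onevar}$ at an appropriately chosen scale $\delta$ then yields an $\epsilon$-cover in $d_{\sigkernel}$.

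For the first step, I would start from the expansion
\[
d_{\sigkernel}(\bx,\by)^2 \;=\; \sum_{m=0}^M \sigma_m^2 \, \|\S_m(\bx) - \S_m(\by)\|_{V^{\otimes m}}^2,
\]
and derive a per-level Lipschitz estimate of the form $\|\S_m(\bx) - \S_m(\by)\|_{V^{\otimes m}} \le C_m(L)\,\|\bx - \by\|_{\onevar}$ valid on $\Paths_L(\cX)$. Such an estimate follows from a telescoping argument: decomposing $\S(\bx) - \S(\by) = \S(\bx \star \overleftarrow{\by})$ via Chen's identity (Proposition \ref{prop:chen}) and then applying the factorial decay $\|\S_m(\bx)\| \le \|\bx\|_\onevar^m / m!$ level by level. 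With the isotropic normalization of the $\sigma_m^2$ used to define the signature covariance in~\eqref{eq:gpsig:isotropic_sig_cov}, the weighted constants $\sigma_m^2 C_m(L)^2$ can be arranged to be uniformly bounded in $m$, so that summing over $m = 0, \dots, M$ contributes at most a factor of $M$, giving the Lipschitz bound $d_{\sigkernel}(\bx,\by) \le C \sqrt{M}\, \|\bx - \by\|_{\onevar}$.

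For the second step, I would discretize $\bbR^{d+1}$ (state plus the time coordinate needed to faithfully represent parameterization) on a lattice of spacing $\delta > 0$, and approximate each $\bx \in \Paths_L(\cX)$ in $\|\cdot\|_{\onevar}$ by a piecewise-linear lattice path $\hat\bx$ whose successive increments are axis-aligned $\delta$-steps. Since $\|\bx\|_{\onevar} \le L$, the approximating path uses at most $L/\delta$ increments, each chosen from $2(d+1)$ axis-aligned directions, so the number of distinct lattice paths is at most $(2(d+1))^{L/\delta}$. Using $\log_2(2(d+1)) \le 2(d+1)$ gives
\[
\log_2 N_{\onevar}(\delta, L) \;\le\; 2(d+1)\,\frac{L}{\delta}.
\]

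Setting $\delta = \epsilon/(C\sqrt{M})$ then combines the two steps and produces the stated estimate. The main obstacle is the first step: the naive per-level constants $C_m(L) \lesssim L^{m-1}/(m-1)!$ grow too quickly in $L$, so the argument depends crucially on the normalization of $\sigma_m^2$ and on verifying that $\sum_m \sigma_m^2 C_m(L)^2$ grows only linearly in $M$, rather than exponentially in $L$, in order to obtain exactly the $\sqrt{M}$ factor. The lattice construction in the second step is essentially combinatorial and straightforward once the metric bound is in place.
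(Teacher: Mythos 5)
There is a genuine gap, and it sits in the step you describe as ``essentially combinatorial and straightforward''. A finite $\delta$-cover of $\Paths_L(\cX)$ in the $\norm{\cdot}_\onevar$ norm by axis-aligned lattice paths does not exist, because lattice paths are not dense in $1$-variation. Concretely, take the diagonal segment $\bx_t = t(\be_1+\be_2)$, $t\in[0,1]$, so $\norm{\bx}_\onevar=\sqrt{2}$. If $\by$ is any axis-aligned path with $\norm{\bx-\by}_\onevar\le\delta$ and $\by_0=0$, then its endpoint lies within $\delta$ of $(1,1)$, and since each of its steps is axis-aligned its $1$-variation is at least the $\ell^1$-norm of its total displacement, i.e.\ $\norm{\by}_\onevar\ge 2-\sqrt{2}\,\delta$; on the other hand $\norm{\by}_\onevar\le\norm{\bx}_\onevar+\delta=\sqrt2+\delta$, which is contradictory for small $\delta$. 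Worse, the ball $\Paths_L(\cX)$ is not totally bounded in the $1$-variation metric at all (highly oscillatory paths of variation $\le L$ with different frequencies stay at mutual distance of order $L$), so no finite $\delta$-net exists with \emph{any} choice of centers. Hence the scheme ``Lipschitz bound $d_{\sigkernel}\lesssim\sqrt M\,\norm{\cdot}_\onevar$ plus a $1$-variation cover'' cannot yield a finite covering number: the whole point is that $\Paths_L(\cX)$ is totally bounded only in the much weaker signature pseudo-metric, and this must be established directly in that metric. (You flagged the Lipschitz constants as the main obstacle; in fact that part is the more benign one — the transfer step is where the approach breaks.)

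This is precisely where the paper's proof takes a different route: it never covers in $1$-variation. After the same level-wise aggregation you propose (in the form $d_{\sigkernel}^2\le M\max_m\Delta_m^2$ with $\Delta_m(\bx,\by)=\norm{\S_m(\bx)-\S_m(\by)}$), it invokes the quantitative approximation result of Lyons--Xu (Section 4 of \cite{lyons2011inversion}): for every $\bx\in\Paths_L(\cX)$ and every $n$ there is a lattice path $\by$ of step size $L2^{-n}$ with $\Delta_m(\bx,\by)\le \frac{d}{2^{n-1}}\frac{4L^{m-1}}{(m-1)!}$ for \emph{all} $m$ — that is, lattice paths approximate every signature level of a bounded-variation path even though they do not approximate the path in $1$-variation. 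Uniformizing over $m$ via $L^{m-1}/(m-1)!\le e^L-1$, choosing $n(\epsilon)$ accordingly, and counting lattice paths then gives the stated estimate. Two further remarks on your write-up: the identity $\S(\bx)-\S(\by)=\S(\bx\star\overleftarrow{\by})$ is false (Chen's identity gives $\S(\bx\star\overleftarrow{\by})=\S(\bx)\cdot\S(\by)^{-1}$, a product, not a difference), although per-level $1$-variation Lipschitz estimates with constants of order $L^{m-1}/(m-1)!$ do hold by standard arguments; and the weights $\sigma_m^2$ are part of the fixed covariance, not parameters you may ``arrange'', so they cannot be used to tame those constants — in the theorem's metric they are effectively all equal to one.
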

  \begin{proof}
    By definition of the metric $d_{\sigkernel}$
    \begin{align}
    d_{\sigkernel}(x,y) &\equiv \sqrt{\langle \Phi(\bx)-\Phi(\by),
                 \Phi(\bx)-\Phi(\by) \rangle} \\ &= \|\Phi(\bx)-\Phi(\by)\|.
    \end{align}
    By definition $\Phi$ and of the norm $\|\cdot\|$ on $\prod_{m=0}^M \pars{\bbR^d}^{\otimes m}$ this reads 
    \begin{align}\label{eq:gpsig:d_k}
d_{\sigkernel}^2(\bx,\by) &= \sum_{m=1}^M \| \int d\bx^{\otimes m} - \int d\by^{\otimes m}\|^2  \\&\le M \max_{m=1,\ldots,M}\Delta_m^2(\bx,\by)
    \end{align}
   where we denote $\Delta_m(\bx,\by)=\| \int d\bx^{\otimes m} - \int d\by^{\otimes m}\|$.
   Let $\Lattice{s}{L} \subset \Paths_L(\cX)$ be the set of lattice paths starting at $0 \in \bbR^d$ that take steps of size $s$ and that are of total length at most $L$. 
By the results in Section 4 of~\cite{lyons2011inversion}, for every $\bx \in \Paths_L(\cX)$ and every $n \ge 1$ there exists a $\by \in \Lattice{L 2^{-n}}{L}$ such that for every $m \ge 1$, 
\begin{align}\label{eq:gpsig:weijun}
\Delta_m(\bx,\by)\le \frac{d}{2^{n-1}} \frac{4 L^{m-1}}{(m-1)!}. 
\end{align}
Since $\frac{ L^{m-1}}{(m-1)!}\le e^L-1 $ we can apply \eqref{eq:gpsig:weijun} with
$n=n(\epsilon)= 1-\log_2 \frac{{\epsilon} }{d\sqrt{M}4(e^L-1)}$ to get $\Delta_m(\bx,\by) \le \epsilon$.
Hence, there exists a lattice path $\by \in \Lattice{L2^{-n(\epsilon)}}{L}$ such that
\begin{align}
 d_{\sigkernel}(\bx,\by) \le \epsilon.
\end{align}
Further, the set $\Paths(\cX)$ is finite and we can bound it by
\begin{align}
 |\Lattice{L2^{-n(\epsilon)}}{L} | &\le (2^d+1)^{L2^{n(\epsilon)}} \le  2^{(d + 1) L2^{n(\epsilon)}} \\&=   2^{2 (d + 1)L2^{n(\epsilon)-1}} = 2^{2 (d+1)L\frac{\sqrt M}{\epsilon}} 
\end{align}
 where the first inequality follows since a lattice path has at every step $2^d$ directions to choose from and in addition can choose not to make a step.  
The last equality follows from the definition of $n(\epsilon)$.
Since $\bx\in \Paths_L(\cX)$ was chosen arbitrary it follows that $\Paths_L(\cX)$ can be covered by $2^{2(d+1)L\frac{\sqrt M}{\epsilon}}$ balls of radius $\epsilon$ centered at lattice paths.
\end{proof}
Theorem~\ref{thm:covering} combined with Dudley's celebrated entropy estimates gives regularity results for samples of our \texttt{GP}.
In fact, this even yields a modulus of continuity for our \texttt{GP}.
\begin{theorem}
  There exists a centered \texttt{GP} $(f_{\bx})_{\bx \in \cX^L_{paths}}$ that has a covariance $\bbE[f_{\bx} f_{\by}]$ the signature covariance function $k(\bx,\by)=\langle \Phi(\bx), \Phi(\by) \rangle$.
  Moreover, if we denote its modulus of continuity on $\Paths_L(\cX)$ with 
  \begin{align}
   \omega(\delta)= \sup_{\substack{\bx,\by \in \Paths_L(\cX)\\ d_{\sigkernel}(\bx,\by)< \delta}} |f_{\bx} - f_{\by}| 
  \end{align}
  then it holds with probability one that 
  \begin{align}\label{eq:gpsig:modulus}
\limsup_{\delta \rightarrow 0} \frac{\omega(\delta) }{\sqrt {\delta}4\sqrt{ (d+1)L \sqrt{M}}  + c \delta \sqrt{\ln \ln \frac{1}{\delta}} } \le 24
  \end{align}
  where $c>0$ denotes a universal constant.
\end{theorem}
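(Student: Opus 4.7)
The plan is to decompose the claim into three standard steps: existence of the process, sample path continuity via the Dudley entropy integral, and the sharper modulus of continuity involving the $\sqrt{\log\log}$ factor.

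First, I would establish existence. The map $(\bx,\by) \mapsto \sigkernel(\bx,\by) = \inner{\Phi(\bx)}{\Phi(\by)}$ is an inner product in the Hilbert space $\prod_{m=0}^M (\bbR^d)^{\otimes m}$, hence positive semi-definite. By Kolmogorov's extension theorem (see e.g.\ Theorem 12.1.2 in \cite{dudley2002real}) there exists a centered Gaussian process $(f_\bx)_{\bx \in \Paths_L(\cX)}$ with $\bbE[f_\bx f_\by] = \sigkernel(\bx,\by)$. This gives the induced pseudo-metric $d_{\sigkernel}(\bx,\by) = \sqrt{\bbE[|f_\bx - f_\by|^2]}$ which is exactly the pseudo-metric under which Theorem \ref{thm:covering} produces its covering number bound.

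Second, I would feed the covering number estimate from Theorem~\ref{thm:covering} into Dudley's entropy integral. Setting $C = 2(d+1)L\sqrt{M}$, the bound $\log_2 N(\epsilon, L) \le C/\epsilon$ gives
\begin{align}
\int_0^\delta \sqrt{\log N(\epsilon, L)}\, \d\epsilon \;\le\; \sqrt{C \log 2} \int_0^\delta \epsilon^{-1/2}\, \d\epsilon \;=\; 2\sqrt{(\log 2) C}\,\sqrt{\delta}.
\end{align}
Since this is finite (in particular, integrable at $0$), Dudley's theorem yields a continuous modification of $f$ and, up to absolute multiplicative constants, the leading contribution $\sqrt{\delta}\cdot 4\sqrt{(d+1)L\sqrt{M}}$ appearing in \eqref{eq:gpsig:modulus}.

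Third, to obtain the precise almost sure modulus of continuity — including the additive higher-order correction $c\,\delta\sqrt{\ln\ln(1/\delta)}$ — I would invoke the sharper Dudley--Fernique form of the modulus of continuity theorem for centered Gaussian processes on a totally bounded pseudo-metric space (as stated, for example, in Theorem 1.3.5 of \cite{adler2009random}). This result gives that, almost surely,
\begin{align}
\limsup_{\delta\to 0}\frac{\omega(\delta)}{\int_0^{\delta}\sqrt{\log N(\epsilon, L)}\,\d\epsilon + c\,\delta\sqrt{\ln\ln(1/\delta)}} \;\le\; K
\end{align}
for a universal constant $K$; plugging in the entropy estimate above and tracking constants yields the constant $24$ in \eqref{eq:gpsig:modulus}.

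The main obstacle I anticipate is bookkeeping of constants: the $\sqrt{\delta}$ term is the easy part (a direct one-line integration), but producing the iterated logarithm correction with the correct multiplicative form requires the refined version of Dudley's theorem rather than the textbook weak form, and requires one to verify that the pseudo-metric $d_{\sigkernel}$ on $\Paths_L(\cX)$ is totally bounded — this follows from Theorem~\ref{thm:covering} since $N(\epsilon,L)<\infty$ for every $\epsilon>0$. Beyond that, the argument is essentially mechanical: existence from Kolmogorov, continuity from finiteness of Dudley's integral, and the explicit modulus from the sharp Gaussian modulus theorem combined with \eqref{eq:gpsig:d_k} and the lattice covering bound established in Theorem~\ref{thm:covering}.
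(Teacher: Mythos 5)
Your proposal follows essentially the same route as the paper: existence from general Gaussian-process/Kolmogorov results, continuity from finiteness of Dudley's entropy integral fed by Theorem~\ref{thm:covering}, and the almost sure modulus with the $\delta\sqrt{\ln\ln(1/\delta)}$ correction from a sharp Gaussian modulus-of-continuity theorem (the paper cites Khoshnevisan's Theorem 2.7.1, Chapter 5, where your Adler--Taylor reference plays the same role), with the constant $4\sqrt{(d+1)L\sqrt{M}}$ coming from integrating the entropy bound exactly as you indicate. The only differences are cosmetic bookkeeping (the paper evaluates $\int_0^\delta\sqrt{\log_2 N(\epsilon/2,L)}\,\d\epsilon$ directly, which is where the factor $4$ and the stated constant $24$ come from), so the argument is correct and matches the paper's proof.
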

\begin{proof}
The existence of a centered \texttt{GP} $ (\hat f_{\bx})_{\bx}$ with covariance $k$ follows from general results about Gaussian processes. 
The existence of a continuous modification $(f_{\bx})_{\bx \in \cX_{paths,L}}$ of follows from Dudley's theorem if 
 \begin{align}
  \int_0^1 \sqrt{ \log_2 N (\epsilon,L)} d \epsilon  < \infty
 \end{align}
 but by Theorem~\ref{thm:covering} we have \[\int_0^1 \sqrt{ \log_2 N (\epsilon,L)} d \epsilon \le \sqrt {2 (d+1 )L \sqrt{M}} \int_0^1 \frac{1}{\sqrt \epsilon}d \epsilon < \infty.\]
 Dudley's results immediately yield a modulus of continuity in probability.
 By standard arguments this can be strengthened to give an almost sure modulus of continuity. 
 Concretely, we use the formulation given in Theorem 2.7.1 in Chapter 5 of \cite{khoshnevisan2002multiparameter} which guarantees that
 \[
 \limsup_{\delta \rightarrow 0} \frac{\omega(\delta) }{\int_0^\delta \sqrt{ N(\frac{\epsilon}{2},L)} d\epsilon + c \delta \sqrt{\ln \ln \frac{1}{\delta}} } \le 24.
 \]
The bound~\eqref{eq:gpsig:modulus} follows immediately since first term in the denominator equals 
 \begin{align}
   \int_0^\delta \sqrt{ \log_2 N \left(\frac{\epsilon}{2},L\right)} d \epsilon  &= \sqrt {2(d+1)L \sqrt{M}}2 \sqrt{ 2}\sqrt{ \delta } \\&= \sqrt{\delta}4\sqrt{  (d+1)L \sqrt{M}}.
 \end{align}
\end{proof}

\section{Benchmark results} \label{app:gpsig:benchmark}

We report in Table \ref{table:gpsig:full_nlpp_results} and Table \ref{table:gpsig:full_acc_results} the negative log-predictive probabilities and accuracies of the \texttt{GP} models considered in Section \ref{sec:gpsig:experiments}. For each method-dataset pair, 5 models were trained with initialization described in \cite[App.~E.3]{toth2020bayesian}. The variance of the results is therefore due to random initialization and minibatch randomness. The \texttt{RNN} based models used the architectures detailed in \cite[Tab.~4]{toth2020bayesian}. As non-Bayesian baselines, we report the results of recent frequentist \texttt{TS} classification methods from the respective publications, that is, \cite{Cuturi2011AR, baydogan2015learning, Baydogan2015TimeSR, karlsson2016generalized, tuncel2018autoregressive, Schfer2017MUSE, Karim2019LSTMFCN}. Particularly for \texttt{MLSTMFCN}, we report the same results as in \cite{Schfer2017MUSE}.

\begin{table*}[ht]
	\caption{Mean and standard deviation of \texttt{NLPP} on test sets over $5$ independent runs}
	\label{table:gpsig:full_nlpp_results}
	\vskip 0.15in
	\begin{center}
		\begin{small}
			\begin{sc}
   \resizebox{\textwidth}{!}{
				\begin{tabular}{lcccccc}
					\toprule
					Dataset & \texttt{GPSigLSTM} & \texttt{GPSigGRU} & \texttt{GPSig} & \texttt{GPLSTM}  & \texttt{GPGRU} & \texttt{GPKConv1D}\\
					\midrule
                        Arabic Digits & $0.047 \pm 0.030$ & $0.023 \pm 0.006$ & $0.071 \pm 0.021$ & $0.082 \pm 0.022$ & $0.066 \pm 0.010$ & $0.050 \pm 0.003$ \\ 
                        AUSLAN & $0.106 \pm 0.007$ & $0.123 \pm 0.045$ & $0.550 \pm 0.114$ & $0.650 \pm 0.071$ & $0.248 \pm 0.063$ & $1.900 \pm 0.139$ \\ 
                        Character Traj. & $0.031 \pm 0.007$ & $0.258 \pm 0.265$ & $0.108 \pm 0.005$ & $2.506 \pm 1.007$ & $3.523 \pm 0.635$ & $0.409 \pm 0.141$ \\ 
                        CMUsubject16 & $0.088 \pm 0.020$ & $0.040 \pm 0.009$ & $0.089 \pm 0.027$ & $0.270 \pm 0.080$ & $0.089 \pm 0.039$ & $0.255 \pm 0.002$ \\ 
                        DigitShapes & $0.008 \pm 0.001$ & $0.035 \pm 0.051$ & $0.021 \pm 0.001$ & $0.013 \pm 0.002$ & $0.727 \pm 0.569$ & $0.035 \pm 0.003$ \\ 
                        ECG & $0.402 \pm 0.023$ & $0.431 \pm 0.037$ & $0.356 \pm 0.008$ & $0.496 \pm 0.018$ & $0.601 \pm 0.137$ & $0.543 \pm 0.019$ \\ 
                        Jap.~Vowels & $0.080 \pm 0.031$ & $0.053 \pm 0.009$ & $0.069 \pm 0.003$ & $0.061 \pm 0.029$ & $0.052 \pm 0.005$ & $0.067 \pm 0.001$ \\ 
                        Kick vs Punch & $0.301 \pm 0.109$ & $0.493 \pm 0.128$ & $0.224 \pm 0.014$ & $0.696 \pm 0.046$ & $0.674 \pm 0.037$ & $0.662 \pm 0.017$ \\ 
                        LIBRAS & $0.320 \pm 0.045$ & $0.346 \pm 0.091$ & $0.259 \pm 0.021$ & $0.911 \pm 0.056$ & $1.110 \pm 0.248$ & $1.608 \pm 0.311$ \\ 
                        NetFlow & $0.218 \pm 0.009$ & $0.259 \pm 0.078$ & $0.189 \pm 0.014$ & $0.251 \pm 0.041$ & $0.194 \pm 0.011$ & $0.168 \pm 0.081$ \\ 
                        PEMS & $0.704 \pm 0.130$ & $1.100 \pm 0.064$ & $0.520 \pm 0.058$ & $1.194 \pm 0.308$ & $0.784 \pm 0.111$ & $0.537 \pm 0.010$ \\ 
                        PenDigits & $0.289 \pm 0.127$ & $0.399 \pm 0.206$ & $0.146 \pm 0.007$ & $0.185 \pm 0.027$ & $0.187 \pm 0.043$ & $0.181 \pm 0.005$ \\ 
                        Shapes & $0.014 \pm 0.004$ & $0.012 \pm 0.004$ & $0.011 \pm 0.002$ & $0.016 \pm 0.008$ & $0.168 \pm 0.142$ & $0.012 \pm 0.001$ \\ 
                        UWave & $0.113 \pm 0.011$ & $0.121 \pm 0.017$ & $0.140 \pm 0.004$ & $0.745 \pm 0.151$ & $1.168 \pm 1.063$ & $0.189 \pm 0.008$ \\ 
                        Wafer & $0.048 \pm 0.021$ & $0.081 \pm 0.011$ & $0.105 \pm 0.010$ & $0.105 \pm 0.086$ & $0.029 \pm 0.011$ & $0.085 \pm 0.002$ \\ 
                        Walk vs Run & $0.030 \pm 0.008$ & $0.030 \pm 0.008$ & $0.023 \pm 0.007$ & $0.048 \pm 0.040$ & $0.028 \pm 0.000$ & $0.066 \pm 0.001$ \\
                        \midrule
                        Mean & $0.175$ & $0.238$ & $0.180$ & $0.514$ & $0.603$ & $0.423$ \\ 
                        Avg.~rank ($n_\bX < 300$) & $2.800$ & $2.900$ & $2.200$ & $4.700$ & $4.000$ & $4.400$ \\ 
                        Avg.~rank ($n_\bX \geq 300$) & $2.333$ & $3.333$ & $2.833$ & $4.833$ & $4.333$ & $3.333$ \\ 
                        Avg.~rank (all) & $2.625$ & $3.062$ & $2.438$ & $4.750$ & $4.125$ & $4.000$ \\
                    \bottomrule
				\end{tabular}
    }
			\end{sc}
		\end{small}
	\end{center}
\end{table*}

\begin{table*}[h]
	\caption{Mean and standard deviation of accuracies on test sets over $5$ independent runs}
	\label{table:gpsig:full_acc_results}
	\vskip 0.15in
	\begin{center}
		\begin{small}
			\begin{sc}
                    \resizebox{\textwidth}{!}{
				\begin{tabular}{lcccccc}
					\toprule
					Dataset & \texttt{GPSigLSTM} & \texttt{GPSigGRU} & \texttt{GPSig} & \texttt{GPLSTM}  & \texttt{GPGRU} & \texttt{GPKConv1D}\\
					\midrule 
                        Arabic Digits & $0.992 \pm 0.003$ & $0.994 \pm 0.002$ & $0.979 \pm 0.004$ & $0.985 \pm 0.004$ & $0.986 \pm 0.005$ & $0.984 \pm 0.001$ \\ 
                        AUSLAN & $0.983 \pm 0.003$ & $0.978 \pm 0.006$ & $0.925 \pm 0.014$ & $0.880 \pm 0.012$ & $0.949 \pm 0.014$ & $0.784 \pm 0.012$ \\ 
                        Character Traj. & $0.991 \pm 0.003$ & $0.925 \pm 0.078$ & $0.979 \pm 0.002$ & $0.233 \pm 0.331$ & $0.114 \pm 0.050$ & $0.941 \pm 0.013$ \\ 
                        CMUsubject16 & $1.000 \pm 0.000$ & $1.000 \pm 0.000$ & $0.979 \pm 0.017$ & $0.924 \pm 0.051$ & $0.993 \pm 0.014$ & $0.897 \pm 0.000$ \\ 
                        DigitShapes & $1.000 \pm 0.000$ & $0.988 \pm 0.025$ & $1.000 \pm 0.000$ & $1.000 \pm 0.000$ & $0.812 \pm 0.153$ & $1.000 \pm 0.000$ \\ 
                        ECG & $0.816 \pm 0.029$ & $0.832 \pm 0.012$ & $0.848 \pm 0.010$ & $0.782 \pm 0.032$ & $0.734 \pm 0.033$ & $0.760 \pm 0.018$ \\ 
                        Jap.~Vowels & $0.981 \pm 0.005$ & $0.985 \pm 0.004$ & $0.982 \pm 0.005$ & $0.982 \pm 0.004$ & $0.986 \pm 0.005$ & $0.986 \pm 0.002$ \\ 
                        Kick vs Punch & $0.900 \pm 0.063$ & $0.820 \pm 0.098$ & $0.900 \pm 0.000$ & $0.620 \pm 0.075$ & $0.600 \pm 0.110$ & $0.700 \pm 0.089$ \\ 
                        LIBRAS & $0.921 \pm 0.013$ & $0.899 \pm 0.031$ & $0.923 \pm 0.004$ & $0.776 \pm 0.019$ & $0.742 \pm 0.050$ & $0.698 \pm 0.026$ \\ 
                        NetFlow & $0.931 \pm 0.002$ & $0.921 \pm 0.012$ & $0.937 \pm 0.003$ & $0.928 \pm 0.011$ & $0.926 \pm 0.012$ & $0.945 \pm 0.027$ \\ 
                        PEMS & $0.763 \pm 0.016$ & $0.775 \pm 0.019$ & $0.820 \pm 0.014$ & $0.745 \pm 0.044$ & $0.769 \pm 0.020$ & $0.794 \pm 0.008$ \\ 
                        PenDigits & $0.928 \pm 0.030$ & $0.902 \pm 0.048$ & $0.955 \pm 0.002$ & $0.953 \pm 0.008$ & $0.951 \pm 0.008$ & $0.946 \pm 0.001$ \\ 
                        Shapes & $1.000 \pm 0.000$ & $1.000 \pm 0.000$ & $1.000 \pm 0.000$ & $1.000 \pm 0.000$ & $0.867 \pm 0.163$ & $1.000 \pm 0.000$ \\ 
                        UWave & $0.970 \pm 0.004$ & $0.968 \pm 0.006$ & $0.964 \pm 0.001$ & $0.870 \pm 0.029$ & $0.763 \pm 0.225$ & $0.947 \pm 0.002$ \\ 
                        Wafer & $0.988 \pm 0.005$ & $0.978 \pm 0.005$ & $0.965 \pm 0.004$ & $0.966 \pm 0.037$ & $0.994 \pm 0.002$ & $0.984 \pm 0.001$ \\ 
                        Walk vs Run & $1.000 \pm 0.000$ & $1.000 \pm 0.000$ & $1.000 \pm 0.000$ & $1.000 \pm 0.000$ & $1.000 \pm 0.000$ & $1.000 \pm 0.000$ \\
                        \midrule
                        Mean & $0.948$ & $0.935$ & $0.947$ & $0.853$ & $0.824$ & $0.898$ \\ 
                        Avg.~rank ($n_\bX < 300$) & $3.000$ & $3.100$ & $2.800$ & $4.250$ & $4.250$ & $3.600$ \\ 
                        Avg.~rank ($n_\bX \geq 300$) & $2.167$ & $3.500$ & $3.000$ & $4.167$ & $4.333$ & $3.833$ \\ 
                        Avg.~rank (all) & $2.688$ & $3.250$ & $2.875$ & $4.219$ & $4.281$ & $3.688$ \\
					\bottomrule
				\end{tabular}}
			\end{sc}
		\end{small}
	\end{center}
\end{table*}

\begin{table*}[h]
	\caption{Accuracies of frequentist time series classification methods}
	\label{table:gpsig:freq_acc_results}
	\vskip 0.15in
	\begin{center}
		\begin{small}
			\begin{sc}
				\begin{tabular}{lrrrrrrrr}
					\toprule
					Dataset & \texttt{SMTS} & \texttt{LPS} & \texttt{mvARF} & \texttt{DTW} & \texttt{ARKernel} & \texttt{gRSF} & \texttt{MLSTMFCN} & \texttt{MUSE} \\
					\midrule
					Arabic Digits & $0.964$ & $0.971$ & $0.952$ & $0.908$ & $0.988$ & $0.975$ & $0.990$ & $0.992$ \\
					AUSLAN & $0.947$ & $0.754$ & $0.934$ & $ 0.727$ & $0.918$ & $0.955$ & $0.950$ & $0.970$ \\
					Character Traj. & $0.992$ & $0.965$ & $0.928$ & $0.948$ & $0.900$ & $0.994$ & $0.990$ & $0.937$ \\
					CMUsubject16 & $0.997$ & $1.000$ & $1.000$ & $0.930$ & $1.000$ & $1.000$ & $1.000$ & $1.000$ \\
					DigitShapes & $1.000$ & $1.000$ & $1.000$ & $1.000$ & $1.000$ & $1.000$ & $1.000$ & $1.000$ \\
					ECG & $0.818$ & $0.820$ & $0.785$ & $0.790$ & $0.820$ & $0.880$ & $0.870$ & $0.880$ \\
					Jap.~Vowels & $0.969$ & $0.951$ & $0.959$ & $0.962$ & $0.984$ & $0.800$ & $1.000$ & $0.976$ \\
					Kick vs Punch & $0.820$ & $0.900$ & $0.976$ & $0.600$ & $0.927$ & $1.000$ & $0.900$ & $1.000$ \\
					LIBRAS & $0.909$ & $0.903$ & $0.945$ & $0.888$ & $0.952$ & $0.911$ & $0.970$ & $0.894$ \\
					NetFlow & $0.977$ & $0.968$ & NA & $0.976$ & NA & $0.914$ & $0.950$ & $0.961$ \\
					PEMS & $0.896$ & $0.844$ & NA & $0.832$ & $0.750$ & $1.000$ & NA & NA \\
					PenDigits & $0.917$ & $0.908$ & $0.923$ & $0.927$ & $0.952$ & $0.932$ & $0.970$ & $0.912$ \\
					Shapes & $1.000$ & $1.000$ & $1.000$ & $1.000$ & $1.000$ & $1.000$ & $1.000$ & $1.000$   \\
					UWave & $0.941$ & $0.980$ & $0.952$ & $0.916$ & $0.904$ & $0.929$ & $0.970$ & $0.916$ \\
					Wafer & $0.965$ & $0.962$ & $0.931$ & $0.974$ & $0.968$ & $0.992$ & $0.990$ & $0.997$ \\
					Walk vs Run & $1.000$ & $1.000$ & $1.000$ & $1.000$ & $1.000$ & $1.000$ & $1.000$ & $1.000$ \\
					\bottomrule
				\end{tabular} 
			\end{sc}
		\end{small}
	\end{center}
\end{table*}


\end{subappendices}
\endgroup
\begingroup
\chapter{\texttt{Seq2Tens}} \label{ch:s2t}
\section{Introduction}
A central task of learning is to find representations of the underlying data that efficiently and faithfully capture their structure.
In the case of sequential data, one data point consists of a sequence of objects.
This is a rich and non-homogeneous class of data and includes classical uni- or multi-variate time series (sequences of scalars or vectors), video (sequences of images), and text (sequences of letters). 
Particular challenges of sequential data are that each sequence entry can itself be a highly structured object and that data sets typically include sequences of different length which makes naive vectorization troublesome.

\paragraph{Contribution}
Our main result is a generic method that takes a \emph{static feature map} for a class of objects (e.g.~a feature map for vectors, images, or letters) as input and turns this into a feature map for sequences of arbitrary length of such objects (e.g.~a feature map for time series, video, or text). 
We call this feature map for sequences \texttt{Seq2Tens}; among its attractive properties are that it 
\begin{enumerate*}[label=(\roman*)]
\item provides a structured description of sequences; generalizing classical methods for strings, 
\item comes with theoretical guarantees such as universality,
\item can be turned into modular and flexible neural network (\texttt{NN}) layers. 
\end{enumerate*}
The key ingredient to our approach is to embed the feature space of the static feature map into the tensor algebra, and then the product in this algebra is then used to ``stitch together'' the static features of the sequence entries in a structured way.
\paragraph{Outline}
Section \ref{sec:2} formalizes the main ideas of \texttt{Seq2Tens}. 
Section \ref{sec:3} shows how low-rank (\texttt{LR}) constructions combined with \texttt{seq2seq} transforms allows one to efficiently use this rich algebraic structure. 
Section \ref{sec:4} applies the results of Sections~\ref{sec:2} and \ref{sec:3} to build modular and scalable  \texttt{NN} layers for sequential data.
Section \ref{sec:5} demonstrates the flexibility and modularity of this approach on both discriminative and generative benchmarks.
Section \ref{sec: summary} makes connections with previous work and summarizes this article.   
In the appendices we provide mathematical background, extensions, and detailed proofs for our theoretical results.


\section{Capturing order by non-commutative multiplication} \label{sec:2}
We denote the set of sequences of elements in a set $\cX$ by 
\begin{align} \label{eq:s2t:seq1start}
  \Seq(\cX)=\{\bx=(\bx_i)_{i=1,\ldots,L}: \bx_i \in \cX,\, L \ge 1\} 
\end{align}
where $ L\geq 1 $ is some arbitrary length. Notice that in this chapter we use $1$-based indexing for sequences, and for a sequence $\bx=(\bx_1, \dots, \bx_L) \in \Seq(\cX)$, $\len{\bx} = L$. 
Even if $\cX$ itself is a linear space, e.g.~$\cX=\bbR$, $\Seq(\cX)$ is never a linear space since there is no natural addition of two sequences of different length.
\subsection{\texttt{Seq2Tens} in a nutshell}
Given any vector space $ V $ we may construct the tensor algebra $ \T{V} $ over $ V $.
We describe the space $\T{V}$ in detail in Section \ref{sec:back:tensalgs}, but as for now the only thing that is important is that $\T{V}$ is also a vector space that includes $V$, and that it carries a non-commutative product, which is, in a precise sense, ``the most general product'' on $ V $.

The main idea of \texttt{Seq2Tens} is that any ``static feature map'' for elements in $\cX$, $\phi: \cX \to V$, can be used to construct a new feature map $\Phi:\Seq(\cX) \rightarrow \T{V}$ for sequences in $\cX$ by using the algebraic structure of $\T{V}$: 
the non-commutative product on $ \T{V} $ makes it possible to ``stitch together'' the individual features $\phi(\bx_1),\ldots,\phi(\bx_L) \in V \subset \T{V}$ of the sequence $\bx$ in the larger space $\T{V}$ by multiplication in $\T{V}$.
With this we may define the feature map $\Phi(\bx)$ for a sequences $\bx=(\bx_1,\ldots,\bx_L) \in \Seq(\cX)$ as follows 
\begin{enumerate}[label=(\roman*)]
\item\label{itm:lift map}
  lift the map $\phi:\cX \to V$ to a map $\varphi: \cX \to \T{V}$,
	\item \label{itm:lift}
	map $\Seq(\cX) \to \Seq(\T{V})$ by $(\bx_1,\ldots,\bx_L) \mapsto (\varphi(\bx_1),\ldots,\varphi(\bx_L))$, 
	\item \label{itm:mult}
    map $\Seq(\T{V}) \to \T{V}$ by multiplication $(\varphi(\bx_1),\ldots,\varphi(\bx_L)) \mapsto \varphi(\bx_1)\cdots \varphi(\bx_L)$. 
\end{enumerate}
In a more concise form, we define $\Phi$ as
\begin{align}
 \Phi: \Seq(\cX) \to \T{V},\quad \Phi(\bx)=\prod_{i=1}^L \varphi(\bx_i) \label{eq:s2t:mult_varphi}
\end{align}
where $\prod$ denotes multiplication in $\T{V}$.
We refer to the resulting map $\Phi$ as the \texttt{Seq2Tens} map, which stands short for \textit{\textbf{Seq}uences-\textbf{2}-\textbf{Tens}ors}.
Why is this construction a good idea?
First note, that step \ref{itm:lift map} is always possible since $V\subset \T{V}$ and we discuss the simplest such lift before Theorem~\ref{thm:univ} as well as other choices in \cite[App.~B]{toth2021seq2tens}. 
Further, if $\phi$, respectively~$\varphi$, provides a faithful representation of objects in $\cX$, then there is no loss of information in step~\ref{itm:lift}.
Finally, since step \ref{itm:mult} uses ``the most general product'' to multiply $\varphi(\bx_1)\cdots \varphi(\bx_L)$ one expects that $\Phi(\bx) \in \T{V}$ faithfully represents the sequence $\bx$ as an element of $\T{V}$.
Indeed in Theorem~\ref{thm:univ} below we show an even stronger statement, namely that if the static feature map $\phi:\cX \to V$ contains enough non-linearities so that non-linear functions from $\cX $ to $\bbR$ can be approximated as \emph{linear functions} of the static feature map $\phi$, then the above construction extends this property to functions of sequences.
Put differently, \emph{if $\phi$ is a universal feature map for $\cX$, then $\Phi$ is a universal feature map for $\Seq(\cX)$};
that is, any non-linear function $f(\bx)$ of a sequence $\bx$ can be approximated as a linear functional of $\Phi(\bx)$,  $f(\bx) \approx \langle \ell, \Phi(\bx) \rangle$.  
We also emphasize that the domain of $\Phi$ is the space $\Seq(\cX)$ of sequences of \emph{arbitrary} (finite) length.
The remainder of this Section gives details about steps~\ref{itm:lift map},\ref{itm:lift},\ref{itm:mult} for the construction of $\Phi$. 

\subsection{Construction}
\paragraph{Lifting static feature maps}
Step~\ref{itm:lift map} in the construction of $\Phi$ requires turning a given feature map $\phi:\cX \to V$ into a map $\varphi: \cX \to \T{V}$.
Throughout the rest of this article we use the lift  
\begin{align} \label{eq:s2t:oneplus}
  \varphi(\bx) = (1, \phi(\bx), 0, 0 \ldots ) \in \T{V}.
\end{align}
We discuss other choices in \cite[App.~B]{toth2020bayesian}, but attractive properties of the lift \ref{eq:s2t:oneplus} are that
\begin{enumerate*}[label=(\arabic*)]
\item
  the evaluation of $\Phi$ against low-rank tensors becomes a simple recursive formula (Proposition~\ref{prop:rank one},) 
\item 
  it is a generalization of sequence sub-pattern matching as used in string kernels (\cite[App.~B.3]{toth2021seq2tens},  
\item 
  despite its simplicity it performs exceedingly well in practice (Section \ref{sec:4}).
\end{enumerate*}

\paragraph{Extending to sequences of arbitrary length} Steps \ref{itm:lift map} and \ref{itm:lift} in the construction specify how the map $\Phi: \cX \rightarrow \T{V}$ behaves on sequences of length-$1$, that is, single observations. Step \ref{itm:mult} amounts to the requirement that for any two sequences $\bx, \by \in \Seq(V)$, their concatenation defined as $\bz = (\bx_0, \dots, \bx_{\len{\bx}}, \by_0, \dots, \by_\len{\by}) \in \Seq(V)$ can be understood in the feature space as (non-commutative) multiplication of their corresponding features
\begin{align}
	\Phi(\bz) = \Phi(\bx) \cdot \Phi(\by). \label{eq:s2t:conc_product}
\end{align}
In other words, we inductively extend the lift $\varphi$ to sequences of arbitrary length by starting from sequences consisting of a single observation, which is given in \eqref{eq:s2t:mult_varphi}.
Repeatedly applying the definition of the tensor convolution product in \eqref{eq:back:alg_mult} leads to the following explicit formula

\begin{align}\label{eq:s2t:phi formula}
	\Phi_m(\bx) = \sum_{\bi \in \Delta_m(\len{\bx})} \bx_{i_1} \otimes \cdots \otimes \bx_{i_m} \in V^{\otimes m}, \hspace{5pt} \Phi(\bx) = (\Phi_m(\bx))_{m \geq 0},
\end{align}
where $\bx=(\bx_0, \dots, \bx_\len{\bx}) \in \Seq(V)$ and the summation is over non-contiguous subsequences. 

\paragraph{Some intuition: generalized pattern matching}
Our derivation of the feature map $\Phi(\bx) = \in \T{V}$ was guided by general algebraic principles, but \eqref{eq:s2t:phi formula} provides an intuitive interpretation.
It shows that for each $m\ge 1$, the entry $\Phi_m(\bx) \in V^{\otimes m}$ constructs a summary of a long sequence $\bx \in \Seq(V)$ based on subsequences $(\bx_{i_1},\ldots,\bx_{i_m})$ of $\bx$ of length-$m$.
It does this by taking the usual tensor product (see Section \ref{sec:back:tensors}) $\bx_{i_1} \otimes \cdots \otimes \bx_{i_m} \in V^{\otimes m}$ and summing over all possible non-contiguous subsequences.
This is completely analogous to how string kernels provide a structured description of text by looking at non-contiguous substrings of length-$m$ (indeed, Appendix \cite[App.~B.3]{toth2021seq2tens} makes this rigorous).
However, the main difference is that the above construction works for arbitrary sequences and not just sequences of discrete letters.

The algebraic background allows to prove that $\Phi$ is universal, see Theorem \ref{thm:univ} below.

\paragraph{Universality}
A function $ \phi : \cX \to V $ is said to be \emph{universal for }$\cX$ if all continuous functions on $ \cX $ can be approximated as linear functions on the image of $ \phi $.
One of the most powerful features of \texttt{NN}s is their universality~\cite{hornik1991approximation}.
A very attractive property of $\Phi$ is that it preserves universality: if $\phi:\cX \to V$ is universal for $\cX$, then $\Phi: \Seq(\cX) \to \T{V}$ is universal for $\Seq(\cX)$. 
To make this precise, note that $V^{\otimes m}$ is a linear space and therefore any $\ell= (\ell_0,\ell_1,\ldots,\ell_M,0,0,\ldots) \in \T{V}$ consisting of $M$ tensors $\ell_m\in V^{\otimes m}$, yields a linear functional on $\T{V}$; e.g.~if $V=\bbR^d$ and we identify $\ell_m$ in coordinates as $\ell_m=(\ell_m^{i_1,\ldots,i_m})_{i_1,\ldots,i_m \in \{1,\ldots,d\}}$ then
\begin{align}\label{eq:s2t: coordinates functional}
  \langle \ell, \bt \rangle:= \sum_{m=0}^M \langle \ell_m, \bt_m \rangle = \sum_{m = 0}^M \sum_{i_1,\ldots,i_m \in \{1,\ldots,d\}} \ell^{i_1,\ldots,i_m}_m \bt_m^{i_1,\ldots,i_m}.
\end{align}
Thus linear functionals of the feature map $\Phi$, are real-valued functions of sequences.
Theorem \ref{thm:univ} below shows that any continuous function $f:\Seq(\cX)\to \bbR$ can by arbitrary well approximated by a $\ell \in \T{V}$, $f(\bx) \approx \langle \ell, \Phi(\bx) \rangle$.
\begin{theorem}[Universality] \label{thm:univ}
	Let $ \phi : \cX \to V $  be a universal map with a lift that satisfies some mild constraints, then the following map is universal:
	\begin{align}
\Phi:	\mathrm{Seq}(\cX) \to \T{V}, \quad \bx \mapsto \Phi(\bx).
	\end{align}
\end{theorem}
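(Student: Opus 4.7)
The plan is to apply the Stone--Weierstrass theorem to the family of real-valued functions
\begin{align}
\cA = \curls{\bx \mapsto \inner{\ell}{\Phi(\bx)} \setgiven \ell \in \TV{V^\star}} \subset C(\Seq(\cX), \bbR),
\end{align}
restricted to a suitable compact subset of $\Seq(\cX)$. The three hypotheses to verify are that $\cA$ contains the constants, separates points, and is closed under pointwise multiplication.

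The first property is immediate: because the lift $\varphi(\bx_i) = (1, \phi(\bx_i), 0, \ldots)$ has leading coefficient $1$, the product $\Phi(\bx) = \prod_{i} \varphi(\bx_i)$ satisfies $\Phi_0(\bx) = 1$, so the functional $\ell = (1, \mathbf{0}, \ldots)$ produces a constant function. For point separation, universality of $\phi$ implies its injectivity on $\cX$, so the tuple $(\varphi(\bx_1), \ldots, \varphi(\bx_L))$ is a faithful encoding of $\bx$; the ``mild constraints'' on the lift are what prevent the subsequent multiplication in $\T{V}$ from losing this information, playing the same role as augmenting with a monotonic time coordinate does for path signatures in Section \ref{subsec:back:props}. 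Concretely, projecting $\Phi(\bx)$ onto the top graded component $V^{\otimes L}$ extracts the single ordered tensor $\phi(\bx_1) \otimes \cdots \otimes \phi(\bx_L)$, from which $\bx$ is recovered coordinatewise using injectivity of $\phi$; distinguishing sequences of different lengths reduces to inspecting the highest nonzero graded slot of $\Phi$.

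The main obstacle is the algebra-closure property. In the classical signature setting this is the shuffle identity, which relies on signatures being group-like elements of a Hopf algebra. Here $\varphi(\bx) = (1, \phi(\bx), 0, \ldots)$ is a truncation rather than a tensor exponential, so $\Phi(\bx)$ fails to be group-like, and the product $\inner{\ell_1}{\Phi(\bx)} \cdot \inner{\ell_2}{\Phi(\bx)}$ picks up, beyond the strict interleavings of indices, ``contraction'' terms in which summation indices from \eqref{eq:s2t:phi formula} coincide and two factors of $\phi(\bx_i)$ fuse inside the same slot. My plan is to record these via a quasi-shuffle (Hoffman-type) product $\star$ on $\TV{V^\star}$ that extends the shuffle by a fusion rule, yielding
\begin{align}
\inner{\ell_1}{\Phi(\bx)} \cdot \inner{\ell_2}{\Phi(\bx)} = \inner{\ell_1 \star \ell_2}{\Phi(\bx)}.
\end{align}
The fusion terms land in higher-degree graded components but remain in $\TV{V^\star}$, so the product is well defined. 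With all three hypotheses established, Stone--Weierstrass yields uniform density of $\cA$ in $C(\Seq(\cX), \bbR)$, which is the stated universality of $\Phi$.
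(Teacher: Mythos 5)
Your overall strategy (Stone--Weierstrass: constants, point separation, closure under products) is the natural one, and it is the route the thesis itself gestures at; note, though, that the thesis defers the actual proof and precise hypotheses to Appendix~B of the cited \texttt{Seq2Tens} article, so what can be checked here is the internal correctness of your sketch --- and two of its key steps do not hold as stated.

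The main gap is the algebra-closure step. The exact identity $\inner{\ell_1}{\Phi(\bx)}\,\inner{\ell_2}{\Phi(\bx)} = \inner{\ell_1 \star \ell_2}{\Phi(\bx)}$ with a Hoffman quasi-shuffle is false for the truncated lift $\varphi(\bx)=(1,\phi(\bx),0,\dots)$: the ``fusion'' terms in which two summation indices coincide produce expressions such as $\sum_i \inner{\ell_1}{\phi(\bx_i)}\inner{\ell_2}{\phi(\bx_i)}$, and these are \emph{not} linear functionals of $\Phi(\bx)$, because $\Phi$ never records a repeated letter $\phi(\bx_i)^{\otimes 2}$ at a single time index (the indices in \eqref{eq:s2t:phi formula} are strictly increasing and the lift is cut off at degree one). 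A concrete counterexample: for $\cX=V=\bbR$, $\phi=\mathrm{id}$, the functionals $\inner{\ell}{\Phi(\bx)}$ are exactly the linear span of the elementary symmetric polynomials $e_m(\bx_1,\dots,\bx_L)$, and $\bigl(\sum_i \bx_i\bigr)^2 = 2e_2 + \sum_i \bx_i^2$ is not in that span, so the family is not multiplication-closed. Quasi-shuffle identities genuinely live either on iterated-sums signatures over a lift whose functionals form an algebra, or on richer embeddings such as the tensor exponential \eqref{eq:rfsf:embed_exp} (order $p\ge 2$). To rescue your plan you would have to work with the \emph{closure} of $\cA$ and use universality of $\phi$ to absorb the diagonal terms only approximately, controlling the accumulated error via the bound on sequence length available on a compact subset of $\Seq(\cX)$ --- a real argument that your sketch currently replaces with an identity that does not hold.

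The point-separation step also has a genuine flaw. Recovering the factors of a pure tensor $\phi(\bx_1)\otimes\cdots\otimes\phi(\bx_L)$ is only possible up to scalars $\lambda_i$ with $\prod_i \lambda_i = 1$, so ``coordinatewise recovery via injectivity of $\phi$'' does not follow; moreover the top slot can vanish if some $\phi(\bx_i)=0$. Indeed $\Phi$ is simply not injective for the plain lift: the sequences $(\ba,\lambda\ba)$ and $(\lambda\ba,\ba)$ have identical $\Phi$ for any $\lambda\neq 1$, since every graded component is symmetric in proportional letters. This shows the ``mild constraints'' (e.g.\ a time/position coordinate, or a constant unit coordinate in $\phi$ that kills the scaling ambiguity) are not a side remark to be waved at --- they are exactly what the separation argument must invoke, and your proof needs to state them and use them explicitly at this point rather than relying on the top-component factorisation.
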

A detailed proof and the precise statement of Theorem \ref{thm:univ} is given in \cite[App.~B]{toth2021seq2tens}.

\section{Approximation by low-rank linear functionals} \label{sec:3}
\paragraph{The combinatorial explosion of tensor coordinates and what to do about it}
The universality of $\Phi$ suggests the following approach to represent a function $f:\Seq(\cX) \to \bbR$ of sequences: First compute $\Phi(\bx)$ and then optimize over $\ell$ (and possibly also the hyperparameters of $\phi$) such that $f(\bx) \approx \langle \ell, \Phi(\bx) \rangle=\sum_{m=0}^M \langle \ell_m, \Phi_m(\bx) \rangle$.   
Unfortunately, tensors suffer from a combinatorial explosion in complexity, i.e.~even just storing $ \Phi_m(\bx) \in  V^{\otimes m} \subset \T{V} $ requires $O(\operatorname{dim}(V)^{m})$ real numbers.
Below we resolve this computational bottleneck as follows: in Proposition \ref{prop:rank one} we show that for a special class of low-rank elements $\ell \in \T{V}$, the functional $\bx \mapsto \langle \ell, \Phi(\bx) \rangle$ can be efficiently computed in both time and memory.
This is somewhat analogous to a kernel trick since it shows that $\langle \ell, \Phi(\bx) \rangle$ can be cheaply computed without explicitly computing the feature map $\Phi(\bx)$. 
However, Theorem~\ref{thm:univ} guarantees universality under no restriction on $\ell$, thus restriction to rank-$1$ functionals limits the class of functions $f(\bx)$ that can be approximated.
Nevertheless, by iterating these ``low-rank functional'' constructions in the form of \texttt{seq2seq} transformations this can be ameliorated.
We give the details below but to gain intuition, we invite the reader to think of this iteration analogous to stacking layers in a neural network: each layer is a relatively simple non-linearity (e.g.~a sigmoid composed with an affine function) but by composing such layers, complicated functions can be efficiently approximated.

\paragraph{Rank-1 functionals are computationally cheap} 
Degree $m=2$ tensors are matrices and low-rank approximations are widely used in practice \cite{Udell19} to address quadratic complexity. 
The definition below generalizes the rank of matrices to tensors of any degree $m$. 
\begin{definition} \label{def:rank}
	The \emph{rank} (also called \emph{\texttt{CP} rank} \cite{carroll1970analysis}) of a degree-$m$ tensor $ \bt_m \in V^{\otimes m} $ is the smallest number $ r \geq 0 $ such that one may write
	\begin{align}
	\bt_m = \sum_{i=0}^r \bv_i^1 \otimes \cdots \otimes \bv_i^m, \quad \bv_i^1, \ldots, \bv_i^m \in V.
	\end{align}
  We say that $\bt = (\bt_m)_{m \ge0} \in \T{V}$ has rank-$1$ (and truncation-$M$) if each $\bt_m \in V^{\otimes m}$ is a rank-$1$ tensor and $\bt_i=0$ for $i>M$.
\end{definition}
The ranks of signature tensors related are investigated in \cite{galuppi2024rank}.
A direct calculation shows that if $\ell$ is of rank-$1$, then $\langle \ell, \Phi(\bx) \rangle$ can be computed efficiently by inner product evaluations.
\begin{proposition}\label{prop:rank one}
  Let $\ell=(\ell_m)_{m \ge 0} \in \T{V}$ be of rank-$1$ and truncation-$M$. Then,
  \begin{align}
    \langle \ell, \Phi(\bx)\rangle &=
    \sum_{m=0}^M \sum_{\bi \in \Delta_m(\len{\bx})} \prod_{k=1}^m \langle \bv_k^m, \phi(\bx_{i_k})\rangle \label{eq:s2t:sum}
  \end{align}
where $\ell_m=\bv_1^m \otimes \dots \otimes \bv_m^m \in V^{\otimes m}$, $\bv_i^m \in V$ and $m=0,\ldots,M$.
\end{proposition}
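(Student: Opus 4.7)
The proof is essentially a direct unpacking of definitions, so I would organize it as a three-step calculation rather than search for a clever trick.

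First, I would use the fact that the inner product on $\T{V}$ decomposes degree-wise (see equation \eqref{ref:hilbert inner product}). Since $\ell$ is truncated at degree $M$, only finitely many terms survive, giving
\begin{align}
\langle \ell, \Phi(\bx) \rangle = \sum_{m=0}^M \langle \ell_m, \Phi_m(\bx) \rangle_{V^{\otimes m}}.
\end{align}
The degree-zero term contributes $\ell_0 \cdot 1$, matching the $m=0$ summand on the right-hand side (where $\Delta_0(\len{\bx})$ is taken to be a singleton with empty product equal to $1$).

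Second, I would substitute the rank-$1$ decomposition $\ell_m = \bv_1^m \otimes \cdots \otimes \bv_m^m$ together with the explicit formula \eqref{eq:s2t:phi formula} for $\Phi_m(\bx)$, applied to the lifted sequence $(\phi(\bx_1), \ldots, \phi(\bx_L)) \in \Seq(V)$ so that
\begin{align}
\Phi_m(\bx) = \sum_{\bi \in \Delta_m(\len{\bx})} \phi(\bx_{i_1}) \otimes \cdots \otimes \phi(\bx_{i_m}).
\end{align}
By bilinearity of the inner product this turns $\langle \ell_m, \Phi_m(\bx) \rangle$ into a sum over $\bi \in \Delta_m(\len{\bx})$ of inner products between elementary tensors.

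Third, I would invoke the defining property of the inner product on tensor powers of Hilbert spaces, namely equation \eqref{eq:back:inner_tensor}, which states that
\begin{align}
\langle \bv_1^m \otimes \cdots \otimes \bv_m^m,\, \phi(\bx_{i_1}) \otimes \cdots \otimes \phi(\bx_{i_m}) \rangle_{V^{\otimes m}} = \prod_{k=1}^m \langle \bv_k^m, \phi(\bx_{i_k}) \rangle_V.
\end{align}
Combining this with the previous step and summing over $m$ yields the claimed identity.

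There is no real obstacle here; the only thing to be careful with is to make the implicit composition with the static feature map $\phi$ explicit, since the formula \eqref{eq:s2t:phi formula} is stated for sequences in $\Seq(V)$ while the proposition is stated for sequences in $\Seq(\cX)$ lifted through $\varphi$ as in \eqref{eq:s2t:oneplus}. Once this bookkeeping is handled, the proof reduces to three lines of standard tensor algebra.
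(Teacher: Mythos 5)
Your argument is correct and is exactly the direct calculation the paper intends (the thesis defers the proof to the external appendix of \cite{toth2021seq2tens}, describing it as ``a direct calculation''): degree-wise splitting of the pairing, the explicit formula \eqref{eq:s2t:phi formula} for the lift \eqref{eq:s2t:oneplus}, and the factorization \eqref{eq:back:inner_tensor} of inner products of elementary tensors. Your remark about making the composition with the static map $\phi$ explicit is the right bookkeeping point; nothing further is needed.
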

Note that the inner sum is taken over all non-contiguous subsequences of $\bx$ of length-$m$, analogously to $m$-mers of strings; the proof of Proposition \ref{prop:rank one} is given in \cite[App.~B]{toth2021seq2tens}. While \eqref{eq:s2t:sum} looks expensive, by casting it into a recursive formulation over time, it can be computed in $O(M^2 \cdot \len{\bx} \cdot d)$ time and $O(M^2 \cdot (\len{\bx} + c))$ memory, where $d$ is the inner product evaluation time on $V$, while $c$ is the memory footprint of a $v \in V$. This can further be reduced to $O(M \cdot \len{\bx} \cdot d)$ time and $O(M \cdot (\len{\bx} + c))$ memory by an efficient parameterization of the rank-$1$ element $\ell \in \T{V}$. We give further details and numerical scalability analysis in Appendix \ref{app:s2t:comps}.

\paragraph{Low-rank \texttt{Seq2Tens} maps}
The composition of a linear map $\cL: \T{V} \rightarrow \bbR^{N}$ with $\Phi$ can be computed cheaply in parallel using \eqref{eq:s2t:sum} when $\cL$ is specified through a collection of $N \in \bbN$ rank-$1$ elements $\ell^1,\ldots,\ell^N \in \T{V}$ such that 
\begin{align}
	\tilde\Phi_{\tilde\theta}(\bx_1, \dots, \bx_L) := \cL\circ \Phi(\bx_1,\ldots,\bx_L) = (\langle \ell^j, \Phi(\bx_1,\ldots,\bx_L))_{j=1}^N \in \bbR^{N}.
\end{align}
We call the resulting map $\tilde\Phi_{\tilde\theta}: \Seq(\cX) \rightarrow \bbR^N$ a \textit{\textbf{L}ow-rank \textbf{S}eq\textbf{2T}ens} map of width-$N$ and truncation-$M$, where $M \in \bbN$ is the maximal degree of $\ell^1, \dots, \ell^N$ such that $\ell^j_i = 0$ for $i > M$. The \texttt{LS2T} map is parametrized by \begin{enumerate*}[label=(\arabic*)] \item the component vectors $\bv_{j,m}^k \in V$ of the rank-$1$ elements $\ell^j_m = \bv_{j,m}^1 \otimes \cdots \otimes \bv_{j,m}^{m}$, \item by any parameters $\theta$ that the static feature map $\phi_\theta: \cX \rightarrow V$. We denote these parameters by $\tilde\theta = (\theta, \ell^1, \dots, \ell^N)$ \end{enumerate*}. In addition, by the subsequent composition of $\tilde\Phi_{\tilde \theta}$ with a linear functional $\bbR^{N} \rightarrow \bbR$, we get the following function subspace as hypothesis class
\begin{align}
	\tilde\cH = \big\{\langle \sum_{j=1}^N \alpha_j \ell^j, \Phi(\bx_1, \dots, \bx_L) \rangle \setgiven \alpha_j \in \bbR \big\} \subsetneq \cH = \big\{\langle \ell, \Phi(\bx_1, \dots, \bx_L) \setgiven \ell \in \T{V}\big\}	
\end{align}
Hence, we acquire an intuitive explanation of the (hyper)parameters: the width of the \texttt{LS2T}, $N \in \bbN$ specifies the maximal rank of the low-rank linear functionals of $\Phi$ that the \texttt{LS2T} can represent, while the span of the rank-$1$ elements, $\spn(\ell^1, \dots, \ell^N)$ determine an $N$-dimensional subspace of the dual space of $\T{V}$ consisting of at most rank-$N$ functionals.

Recall now that without rank restrictions on the linear functionals of \texttt{Seq2Tens} features, Theorem~\ref{thm:univ} would guarantee that any real-valued function $f:\Seq(\cX)\to \bbR$ could be approximated by $f(\bx) \approx \langle \ell, \Phi(\bx_1, \dots, \bx_L) \rangle$. As pointed out before, the restriction of the hypothesis class to low-rank linear functionals of $\Phi(\bx_1, \dots, \bx_L$) would limit the class of functions of sequences that can be approximated. To ameliorate this, we use \texttt{LS2T} transforms in a \texttt{seq2seq} fashion that allows us to stack such low-rank functionals, significantly recovering expressiveness.
 
\paragraph{Sequence-to-sequence transforms}
Motivated by the empirical successes of stacked \texttt{RNN}s \cite{graves2013speech, graves2013hybrid, Sutskever2014Seq2Seq}, we build sequence-to-sequence (\texttt{seq2seq}) transformations.
We can use \texttt{LS2T} to build \texttt{seq2seq} transformations in the following way: fix the static map $\phi_{\theta}: \cX \to V$ parametrized by $\theta$ and rank-$1$ elements such that $\tilde\theta = (\theta, \ell^1, \dots, \ell^N)$ and apply the resulting \texttt{LS2T} map $\tilde\Phi_{\tilde \theta}$:
\begin{align} \label{eq:s2t:seq2seq_lls2t}
	\Seq(\cX) \to \Seq(\bbR^N),\quad \bx \mapsto \big(\tilde\Phi_{\tilde\theta}(\bx_1), \tilde\Phi_{\tilde\theta}(\bx_1, \bx_2), \dots, \tilde\Phi_{\tilde\theta}(\bx_1, \dots, \bx_L)\big).
\end{align}
Note that the cost of computing the expanding window \texttt{seq2seq} transform in \eqref{eq:s2t:seq2seq_lls2t} is no more expensive than computing $\tilde\Phi_{\tilde\theta}(\bx_1, \dots, \bx_L)$ itself due to the recursive nature of our algorithms, for further details see Appendix \ref{app:s2t:comps}.

\paragraph{Deep \texttt{seq2seq} transforms}
Inspired by the empirical successes of deep \texttt{RNN}s \cite{graves2013speech, graves2013hybrid, Sutskever2014Seq2Seq}, we iterate the transformation \eqref{eq:s2t:seq2seq_lls2t} $D$-times:
\begin{align}\label{eq:s2t:stacked seq2seq}
  \Seq(\cX) \rightarrow \Seq(\bbR^{N_1}) \rightarrow \Seq(\bbR^{N_2}) \rightarrow \cdots \rightarrow \Seq(\bbR^{N_D}). 
\end{align}
Each of these mappings $\Seq(\bbR^{N_i}) \rightarrow \Seq(\bbR^{N_{i+1}})$ is parametrized by the parameters $\tilde\theta_i$ of a static feature map $\phi_{\theta_i}$ and a linear map $\cL_i$ specified by $N_i$ rank-$1$ elements of $\T{V}$; these parameters are collectively denoted by $\tilde\theta_i= (\theta_i, \ell_i^1, \dots, \ell_i^{N_i})$. 
Evaluating the final sequence in $\Seq(\bbR^{N_D})$ at the last observation-time $t=L$, we get the deep \texttt{LS2T} map with depth-$D$
\begin{align}\label{eq:s2t:SeqFeatures}
  \tilde\Phi_{\tilde \theta_1,\ldots,\tilde \theta_{D}}:\Seq(\cX) \rightarrow \bbR^{n_D}.
\end{align}
Making precise how the stacking of such low-rank \texttt{seq2seq} transformations approximates general functions requires more tools from algebra, and we provide a rigorous quantitative statement in \cite[App.~C]{toth2021seq2tens}. 
Here, we just appeal to the analogy made with adding depth in neural networks mentioned earlier and empirically validate this in our experiments in Section~\ref{sec:4}. 

\section{Building neural networks with \texttt{LS2T} layers} \label{sec:4}
The \texttt{Seq2Tens} map $\Phi$ built from a static feature map $\phi$ is universal if $\phi$ is universal, Theorem~\ref{thm:univ}.
NNs form a flexible class of universal feature maps with strong empirical success for data in $\cX=\bbR^d$, and thus make a natural choice for $\phi$.
Combined with standard deep learning constructions, the framework of Sections~\ref{sec:2} and \ref{sec:3} can build modular and expressive layers for sequences.
\paragraph{Neural \texttt{LS2T} layers}
The simplest choice among many is to use as static feature map $\phi:\cX=\bbR^d \to \bbR^h$ a feedforward network with depth-$P$, $\phi = \phi_{P} \circ \cdots \circ \phi_1$ where $\phi_{j}(\bx) = \sigma(\mathbf{W}_j \bx + \mathbf{b}_j)$ for $\mathbf{W}_j \in \bbR^{h \times d}$, $\mathbf{b}_j \in \bbR^{h}$.
We can then lift this to a map $\varphi: \bbR^d \to \T{\bbR^h}$ as prescribed in~\eqref{eq:s2t:oneplus}.
Hence, the resulting \texttt{LS2T} layer $\bx \mapsto (\tilde\Phi_{\tilde\theta} (\bx_1,\ldots,\bx_i))_{i=1,\ldots,L} $ is a \texttt{seq2seq} transform $\Seq(\bbR^d) \rightarrow \Seq{\bbR^h}$ that is parametrized by $\tilde \theta=(\bW_1, \bb_1, \dots, \bW_P, \bb_P, \ell_1^1, \dots, \ell^{N_1}_1)$.

\paragraph{Bidirectional \texttt{LS2T} layers} The transformation in \eqref{eq:s2t:seq2seq_lls2t} is completely causal in the sense that each step of the output sequence depends on the past.
For generative models, it can behove us to make the output depend on both past and future, see~\cite{graves2013hybrid, baldi1999exploiting,li2018disentangled}.
Similarly to bidirectional \texttt{RNN}s \cite{schuster1997bidirectional, Graves2005BiLSTM}, we may achieve this by defining a bidirectional layer, 
\begin{align}
    \tilde\Phi^{\operatorname{b}}_{(\tilde\theta_1, \tilde\theta_2)}(\mathbf{x}): \Seq(\bbR^d) \rightarrow \Seq(\bbR^{N + N^{\prime}}), \quad \mathbf{x} \mapsto (\tilde\Phi_{\tilde\theta_1}(\bx_1, \ldots, \bx_i), \tilde\Phi_{\tilde\theta_2}(\bx_i, \ldots, \bx_L))_{i=1}^L.
\end{align}
The sequential nature is kept intact by making the distinction between what classifies as past (the first $N$ coordinates) and future (the last $N^\prime$ coordinates) information. This amounts to having a form of precognition in the model, and has been applied in e.g. dynamics generation \cite{li2018disentangled}, machine translation \cite{sundermeyer2014translation}, and speech processing \cite{graves2013hybrid}.

\paragraph{Convolutions and \texttt{LS2T}} We motivate to replace the time-distributed feedforward layers proposed in the paragraph above by temporal convolutions (CNN) instead. Although theory only requires the ``preprocessing'' layer to be a static feature map, we find that it is beneficial to capture some of the sequential information as well, e.g.~using \texttt{CNN}s or \texttt{RNN}s. From a mathematical point of view, \texttt{CNN}s are a straightforward extension since they can be interpreted as time-distributed feedforward layers applied to the input sequence augmented with a $p \in \bbN$ number of its lags for \texttt{CNN} kernel size $p$ (see \cite[App.~D.1]{toth2021seq2tens} for further discussion). 

In the following, we precede our deep \texttt{LS2T} blocks by one or more \texttt{CNN} layers. Intuitively, \texttt{CNN}s and \texttt{LS2T}s are similar in that both transformations operate on subsequences of their input sequence. The main difference between the two lies in that \emph{CNNs operate on contiguous subsequences}, and therefore, capture local, short-range nonlinear interactions between timesteps; \emph{while \texttt{LS2T}s (\eqref{eq:s2t:sum}) use all non-contiguous subsequences}, and hence, learn global, long-range interactions in time. This observation motivates that the inductive biases of the two types of layers (local/global time-interactions) are highly complementary in nature, and we suggest that the improvement in the experiments on the models containing vanilla \texttt{CNN} blocks are due to this.

\section{Experiments}\label{sec:5}
We demonstrate the modularity and flexibility of the above \texttt{LS2T} and its variants by applying it to 
\begin{enumerate*}[label=(\roman*)] 
\item  multivariate time series classification,
\item mortality prediction in healthcare, 
\item  generative modelling of sequential data. 
\end{enumerate*}
In all cases, we take a strong baseline model (\texttt{FCN} and \texttt{GP-VAE}, as detailed below) and upgrade it with \texttt{LS2T} layers. As Thm.~\ref{thm:univ} requires the \texttt{Seq2Tens} layers to be preceded by at least a static feature map, we expect these layers to perform best as an add-on on top of other models, which however can be quite simple, such as a \texttt{CNN}. The additional computation time is negligible (in fact, for \texttt{FCN} it allows to reduce the number of parameters significantly, while retaining performance), but it can yield improvements.
This is remarkable, since the original models are already state-of-the-art on well-established benchmarks. 
\subsection{Multivariate time series classification} \label{subseq:s2t:4_tsc}

As the first task, we consider multivariate time series classification (\texttt{TSC}) on an archive of benchmark datasets collected by \cite{baydogan2015multivarate}. Numerous previous publications report results on this archive, which makes it possible to compare against several well-performing competitor methods from the \texttt{TSC} community. These baselines are detailed in \cite[App.~E.1]{toth2021seq2tens}. This archive was also considered in a recent popular survey paper on \texttt{DL} for \texttt{TSC} \cite{Fawaz2019DLforTSC}, from where we borrow the two best performing models as deep learning baselines: \texttt{FCN} and \texttt{ResNet}. 
The \texttt{FCN} is a fully convolutional network which stacks 3 convolutional layers of kernel sizes $(8, 5, 3)$ and filters $(128, 256, 128)$ followed by a global average pooling (\texttt{GAP}) layer, hence employing global parameter sharing. We refer to this model as \FCN{128}. The \texttt{ResNet} is a residual network stacking 3 \texttt{FCN} blocks of various widths with skip-connections in between \cite{he2016deep} and a final \texttt{GAP} layer.

The \texttt{FCN} is an interesting model to upgrade with \texttt{LS2T} layers, since the \texttt{LS2T} also employs parameter sharing across the sequence length, and as noted previously, convolutions are only able to learn local interactions in time, that in particular makes them ill-suited to picking up on long-range interactions, which is exactly where the \texttt{LS2T} can provide improvements. 
As our models, we consider three simple architectures: \begin{enumerate*}[label=(\roman*)] \item \LStwoTwidth{64}{3} stacks $3$ \texttt{LS2T} layers of order-$2$ and width-$64$; \item \FCNLStwoTwidth{64}{64}{3} precedes the \LStwoTwidth{64}{3} block by an \FCN{64} block; a downsized version of \FCN{128}; \item \FCNLStwoTwidth{128}{64}{3} uses the full \FCN{128} and follows it by a \LStwoTwidth{64}{3} block as before. \end{enumerate*} Also, both \texttt{FCN-LS2T} models employ skip-connections from the input to the \texttt{LS2T} block and from the \texttt{FCN} to the classification layer, allowing for the \texttt{LS2T} to directly see the input, and for the \texttt{FCN} to directly affect the final prediction. These hyperparameters were only subject to hand-tuning on a subset of the datasets, and the values we considered were $H, N \in \{32, 64, 128\}$, $M \in \{2, 3, 4\}$ and $D \in \{1, 2, 3\}$, where $H, N \in \bbN$ is the \texttt{FCN} and \texttt{LS2T} width, resp., while $M \in \bbN$ is the \texttt{LS2T} order and $D \in \bbN$ is the \texttt{LS2T} depth. We also employ techniques such as time-embeddings \cite{Liu2018coordConv}, sequence differencing and batch normalization, see \cite[App.~D.1]{toth2021seq2tens}; \cite[App.~E.1]{toth2021seq2tens} for further details on the experiment and Figure 2 in there for a visualization of the architectures.

\begin{table}
	\caption{Posterior probabilities given by a Bayesian signed-rank test comparison of the proposed methods against the baselines. $\{>\}$, $\{<\}$, $\{=\}$ refer to the respective events that the row method is better, the column method is better, or that they are equivalent.}
	\label{table:s2t:bayes_signed_rank_probs}
	\vspace{-10pt}
	\begin{center}
	\begin{small}
	\begin{sc}
	\resizebox{\textwidth}{!}{
		\begin{tabular}{lccccccccc}
			\toprule
			\multirow{2}{*}{Model} & \multicolumn{3}{c}{\LStwoTwidth{64}{3}} & \multicolumn{3}{c}{\FCNLStwoTwidth{64}{64}{3}} & \multicolumn{3}{c}{\FCNLStwoTwidth{128}{64}{3}} \\
			\cmidrule{2-10}
			& $p(>)$ & $p(=)$ & $p(<)$ & $p(>)$ & $p(=)$ & $p(<)$ & $p(>)$ & $p(=)$ & $p(<)$ \\
			\midrule 
			\texttt{SMTS} \cite{baydogan2015learning} & $0.180$ & $0.000$ & $\mathbf{0.820}$ & $0.010$ & $0.000$ & $\mathbf{0.990}$ & $0.008$ & $0.000$ & $\mathbf{0.992}$ \\
			\texttt{LPS} \cite{Baydogan2015TimeSR} & $0.191$ & $0.002$ & $\mathbf{0.807}$ & $0.012$ & $0.001$ & $\mathbf{0.987}$ & $0.006$ & $0.001$ & $\mathbf{0.993}$ \\
			\texttt{mvARF} \cite{tuncel2018autoregressive} & $0.011$ & $0.140$ & $\mathbf{0.849}$ & $0.000$ & $0.126$ & $\mathbf{0.874}$ & $0.000$ & $0.088$ & $\mathbf{0.912}$ \\
			\texttt{DTW} \cite{Sakoe1978DTW} & $0.033$ & $0.000$ & $\mathbf{0.967}$ & $0.001$ & $0.000$ & $\mathbf{0.999}$ & $0.000$ & $0.000$ & $\mathbf{1.000}$ \\
			\texttt{ARKernel} \cite{Cuturi2011AR} & $0.100$ & $0.097$ & $\mathbf{0.803}$ & $0.000$ & $0.021$ & $\mathbf{0.979}$ & $0.000$ & $0.015$ & $\mathbf{0.985}$ \\
			\texttt{gRSF} \cite{karlsson2016generalized} & $0.481$ & $0.011$ & $\mathbf{0.508}$ & $0.028$ & $0.013$ & $\mathbf{0.960}$ & $0.022$ & $0.013$ & $\mathbf{0.965}$ \\
			\texttt{MUSE} \cite{Schfer2017MUSE} & $0.405$ & $0.128$ & $\mathbf{0.467}$ & $0.001$ & $0.074$ & $\mathbf{0.925}$ & $0.001$ & $0.077$ & $\mathbf{0.922}$ \\
			\texttt{MLSTMFCN} \cite{Karim2019LSTMFCN} & $\mathbf{0.916}$ & $0.043$ & $0.041$ & $0.123$ & $0.071$ & $\mathbf{0.807}$ & $0.055$ & $0.110$ & $\mathbf{0.835}$ \\
			$\text{FCN}_{128}$ \cite{Wang2017TSCfromScratch} & $\mathbf{0.998}$ & $0.002$ & $0.000$ & $0.363$ & $0.186$ & $\mathbf{0.451}$ & $0.169$ & $0.011$ & $\mathbf{0.820}$ \\
			\texttt{ResNet} \cite{Wang2017TSCfromScratch} & $\mathbf{0.998}$ & $0.002$ & $0.001$ & $0.056$ & $0.240$ & $\mathbf{0.704}$ & $0.016$ & $0.048$ & $\mathbf{0.935}$ \\
			\LStwoTwidth{64}{3} & - & - & - & $0.000$ & $0.001$ & $\mathbf{0.999}$ & $0.000$ & $0.001$ & $\mathbf{0.999}$ \\
			\FCNLStwoTwidth{64}{64}{3} & $\mathbf{0.999}$ & $0.001$ & $0.000$ & - & - & - & $0.020$ & $0.387$ & $\mathbf{0.593}$ \\
			\bottomrule
		\end{tabular}}
	\end{sc}
	\end{small}
	\end{center}
\end{table}

\begin{figure}[t]
	\centering
	\begin{minipage}{0.49\textwidth}
		\centering
		\includegraphics[width=2.75in]{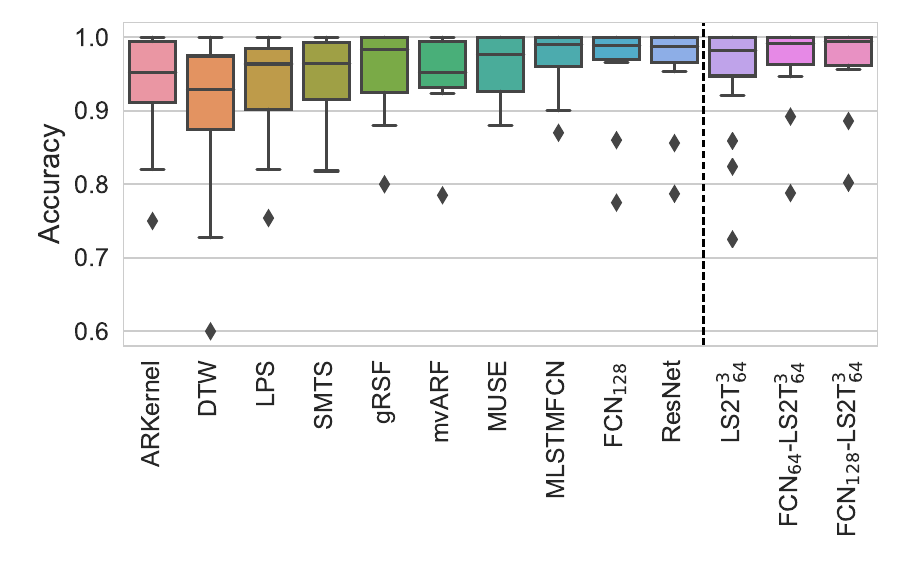}
	\end{minipage}
	\begin{minipage}{0.49\textwidth}
		\centering
		\vspace{-30pt}
		\includegraphics[width=2.75in]{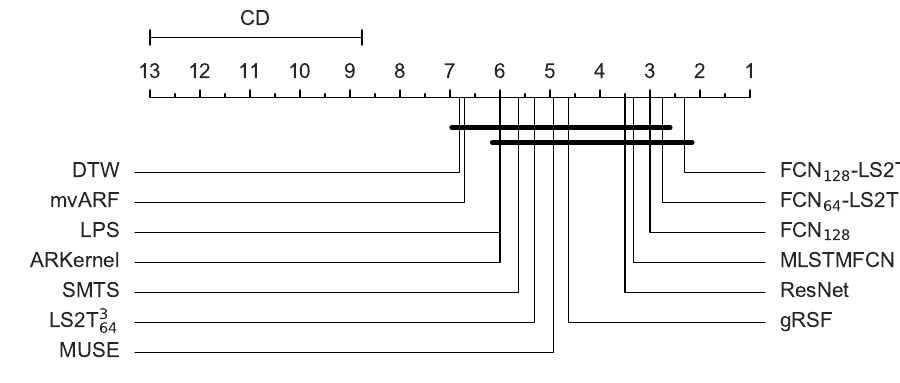}
	\end{minipage}
	\vspace{-10pt}
	\caption{Box-plot of classification accuracies (left) and critical difference diagram (right).}
	\label{fig:s2t:box_and_cd}
\end{figure}

\paragraph{Results} The full table of results is available in Table \ref{table:s2t:classification_accuracies} in Appendix \ref{app:s2t_further_res}, where for the models that we trained ourselves, i.e.~\FCN{128}, \texttt{ResNet}, \LStwoTwidth{64}{3}, \FCNLStwoTwidth{64}{64}{3}, \FCNLStwoTwidth{128}{64}{3}, we report the mean and standard deviation of test set accuracies over 5 model trains. Figure \ref{fig:s2t:box_and_cd} depicts the box-plot distributions of classification accuracies and the corresponding critical difference (\texttt{CD}) diagram. The \texttt{CD} diagram depicts the mean ranks of each method averaged over datasets with a calculated \texttt{CD} region using the Nemenyi test \cite{nemenyi1963distribution} for an alpha value of $\alpha=0.1$. For the Bayesian signed-rank test \cite{benavoli2014bayesian}, we used the implementation from \texttt{\href{https://github.com/janezd/baycomp}{https://github.com/janezd/baycomp}}, and the posterior probabilities are compared in Table \ref{table:s2t:bayes_signed_rank_probs}. The posterior distributions themselves are visualized on Figure 4 \cite[App.~E.1]{toth2021seq2tens}. The region of practical equivalence (\texttt{rope}) was set to $\texttt{rope} = 1 \times 10^{-3}$, that is, two accuracies were practically equivalent if they are at most the given distance from each other.

The results of the signed-rank test indicate that while \LStwoTwidth{3}{64} is only better with high probability ($p \geq 0.8)$ than 5 of the baselines, \FCNLStwoTwidth{64}{3}{64} performs better than all except $\texttt{FCN}_{128}$ and \texttt{ResNet} with high probability ($p \geq 0.8$), while \FCNLStwoTwidth{128}{3}{64} performs better than \emph{all baselines} with high probability ($p \geq 0.8$). 
Note however that \emph{\FCNLStwoTwidth{64}{3}{64} has fewer parameters than $\texttt{FCN}_{128}$ by around $60\%$}, see Table 6 in \cite[App.~E.1]{toth2021seq2tens}, while providing comparable or slightly improved performance. In conclusion, we have verified that \LStwoTwidth{3}{64} already performs well on some classification tasks, and by preceding it with an \texttt{FCN}, its performance is elevated to outperforming all baselines. This observation is not surprising, as Thm. \ref{thm:univ} warrants, the strength of the \texttt{LS2T} layer shines when it is preceded by state-space nonlinearities. Hence, we expect \texttt{LS2T}s to find their best use as a modular building block being part of larger models. We also suggest the improvement on the vanilla \texttt{FCN} composed of \texttt{CNN} blocks is due to the \texttt{LS2T} enhancing the model's ability to pick up long-range interactions, see the discussion at the end of Section \ref{sec:4}.

\begin{table}[t]
	\caption{Average ranks of classifier accuracies across data domains with the best and second best highlighted for each row in \textbf{bold} and \textit{italic}, respectively.}
	\label{table:s2t:domain}
	\begin{center}
		\resizebox{\textwidth}{!}{
		\begin{sc}
		\begin{tabular}{lccccccccccccc}
			\toprule
            Domain & \texttt{ARKernel} & \texttt{DTW} & \texttt{LPS} & \texttt{SMTS} & \texttt{gRSF} & \texttt{mvARF} & \texttt{MUSE} & \texttt{MLSTMFCN} & \FCN{128} & \texttt{ResNet} & \LStwoTwidth{64}{3} & \FCNLStwoTwidth{64}{64}{3} & \FCNLStwoTwidth{128}{64}{3} \\ 
			\midrule
            Handwriting & $5.00$ & $4.75$ & $5.50$ & $4.00$ & $2.75$ & $5.50$ & $5.75$ & $\mathit{2.00}$ & $\mathit{2.00}$ & $2.75$ & $3.50$ & $\mathit{2.00}$ & $\mathbf{1.75}$ \\
Motion & $5.33$ & $7.50$ & $4.67$ & $5.67$ & $4.50$ & $4.67$ & $5.00$ & $3.50$ & $\mathbf{1.83}$ & $2.17$ & $4.17$ & $2.33$ & $\mathit{2.00}$ \\
Sensor & $8.67$ & $6.00$ & $6.50$ & $5.50$ & $4.00$ & $11.50$ & $\mathbf{3.33}$ & $5.00$ & $5.50$ & $6.00$ & $8.25$ & $4.25$ & $\mathit{3.75}$ \\
Speech & $6.00$ & $10.50$ & $10.00$ & $9.00$ & $10.00$ & $10.50$ & $5.50$ & $3.00$ & $3.50$ & $4.00$ & $6.50$ & $\mathit{2.50}$ & $\mathbf{1.50}$ \\

			\bottomrule
		\end{tabular}
	\end{sc}
	}
	\end{center}
\end{table}

\paragraph{Domain analysis}
We provide a breakdown of classifier performance in Table \ref{table:s2t:domain} across four distinct domains: handwriting, motion, sensor, speech. We observe the following \begin{enumerate*}[label=(\arabic*)] \item for handwriting datasets, both variants of \FCNLStwoTwidth{128}{3}{64} provides the best performance with \FCNLStwoTwidth{64}{3}{64}, \FCN{128} and \texttt{MLSTMFCN} coming in close second, and \texttt{ResNet} in third; \item however, on motion datasets \FCN{128} takes first place and \FCNLStwoTwidth{128}{3}{64} in close second and \FCNLStwoTwidth{64}{3}{64} in third place; \item on sensor datasets, surprisingly, the deep learning baselines underperform with \texttt{MUSE} taking first place and \FCNLStwoTwidth{128}{3}{64} coming in second and \FCNLStwoTwidth{64}{3}{64} in third place, while \FCN{128}, \texttt{MLSTMFCN}, and \texttt{ResNet} underperform; \item finally, on speech datasets, the models \FCNLStwoTwidth{128}{3}{64} and \FCNLStwoTwidth{64}{3}{64} come in first and second place respectively with \texttt{MLSTMFCN} coming in third. \end{enumerate*} Overall, we observe that the models \FCNLStwoTwidth{128}{3}{64} and \FCNLStwoTwidth{64}{3}{64} perform well across each data domain, coming in top 3 at all times, while the vanilla deep learning baselines without \texttt{LS2T} layers, \FCN{128}, \texttt{MLSTMFCN} and \texttt{ResNet} only work well on certain data modalities, but underperform on others, such as sensor readings as discussed above.

\subsection{Mortality prediction} \label{subseq:s2t:4_mortality}
We consider the  {\sc Physionet2012} challenge dataset \cite{goldberger2000components} for mortality prediction, which is a case of medical \texttt{TSC} as the task is to predict in-hospital mortality of patients after their admission to the \texttt{ICU}. This is a difficult \texttt{ML} task due to missingness in the data, low signal-to-noise ratio (\texttt{SNR}), and imbalanced class distributions with a prevalence ratio of around $14 \%$. We extend the experiments conducted in \cite{horn2020set}, which we also use as very strong baselines. Under the same experimental setting, we train two models: $\texttt{FCN-LS2T}$ as ours and the \texttt{FCN} as another baseline. For both models, we conduct a random search for all hyperparameters with 20 samples from a pre-specified search space, and the setting with best validation performance is used for model evaluation on the test set over 5 independent model trains, exactly the same way as it was done in \cite{horn2020set}. We preprocess the data using the same method as in \cite[eq.~(9)]{che2018recurrent} and additionally handle static features by tiling them along the time axis and adding them as extra coordinates. We additionally introduce in both models a \texttt{SpatialDropout1D} layer after all \texttt{CNN} and \texttt{LS2T} layers with the same tunable dropout rate to mitigate the low \texttt{SNR} of the dataset.

\begin{table}
	\caption{Comparison of \texttt{FCN}-LS2T and \texttt{FCN} on {\sc Physionet2012} with baselines from \cite{horn2020set}.}
	\label{table:s2t:mort_pred}
	\begin{center}
	\begin{small}
	\begin{sc}
		\begin{tabular}{lcccc}
			\toprule
			Model & Accuracy & \texttt{AUPRC} & \texttt{AUROC} \\
			\midrule
			$\texttt{FCN-LS2T}$ & $\mathbf{84.1 \pm 1.6}$ & $\mathbf{53.9 \pm 0.5}$ & $85.6 \pm 0.5$ \\ 
			\texttt{FCN} & $80.7 \pm 1.7$ & $52.8 \pm 1.3$ & $85.6 \pm 0.2$ \\
			\texttt{GRU-D} & $80.0 \pm 2.9$ & $\mathit{53.7 \pm 0.9}$ & $\mathbf{86.3 \pm 0.3}$ \\
			\texttt{GRU-Simple} & $82.2 \pm 0.2$ & $42.2 \pm 0.6$ & $80.8 \pm 1.1$ \\
			\texttt{IP-Nets} & $79.4 \pm 0.3$ & $51.0 \pm 0.6$ & $\mathit{86.0 \pm 0.2}$ \\
			\texttt{Phased-LSTM} & $76.8 \pm 5.2$ & $38.7 \pm 1.5$ & $79.0 \pm 1.0$ \\
			\texttt{Transformer} & $\mathit{83.7 \pm 3.5}$ & $52.8 \pm 2.2$ & $\mathbf{86.3 \pm 0.8}$ \\
			\texttt{Latent-ODE} & $76.0 \pm 0.1$ & $50.7 \pm 1.7$ & $85.7 \pm 0.6$ \\
			\texttt{SeFT-Attn.} & $75.3 \pm 3.5$ & $52.4 \pm 1.1$ & $85.1 \pm 0.4$ \\
			\bottomrule
		\end{tabular}
	\end{sc}
	\end{small}
	\end{center}
\end{table}

\paragraph{Results} Table \ref{table:s2t:mort_pred} compares the performance of $\texttt{FCN-LS2T}$ with that of \texttt{FCN} and the results from \cite{horn2020set} on 3 metrics: \begin{enumerate*}[label=(\arabic*)] \item {\sc accuracy}, \item area under the precision-recall curve ({\sc \texttt{AUPRC}}), \item area under the \texttt{ROC} curve ({\sc \texttt{AUROC}}). 
\end{enumerate*} We can observe that \emph{FCN-LS2T takes on average first place according to both {\sc Accuracy} and {\sc \texttt{AUPRC}}, outperforming \texttt{FCN} and all \texttt{SOTA} methods}, e.g.~{\texttt{Transformer}} \cite{vaswani2017attention}, {\texttt{GRU-D}} \cite{che2018recurrent}, {\texttt{SeFT}} \cite{horn2020set}, and also being competitive in terms of {\sc \texttt{AUROC}}. This is very promising, and it suggests that \texttt{LS2T} layers might be particularly well-suited to complex and heterogenous datasets, such as medical time series, since the \texttt{FCN-LS2T} models significantly improved accuracy on {\sc ECG} as well, another medical dataset in the previous experiment.

\subsection{Generating sequential data} \label{subseq:s2t:4_gpvae}
Finally, we demonstrate on sequential data imputation for time series and video that \texttt{LS2T}s also provide good representations of sequences in generative models.
\paragraph{The \texttt{GP-VAE} model}
In this experiment, we take as base model the recent \texttt{GP-VAE} \cite{fortuin2019gpvae}, that provides state-of-the-art results for probabilistic sequential data imputation. 
The \texttt{GP-VAE} is essentially based on the \texttt{HI-VAE} \cite{nazabal2018handling} for handling missing data in variational autoencoders (\texttt{VAE}s) \cite{kingma2013auto} adapted to the handling of time series data by the use of a Gaussian process (\texttt{GP}) prior \cite{rasmussen2006} across time in the latent sequence space.
The in-depth description of \texttt{GPVAE} can be found in \cite[App.~E.3]{toth2021seq2tens}.
We extend the experiments conducted in \cite{fortuin2019gpvae}, and we make one simple change to the \texttt{GP-VAE} architecture without changing any other hyperparameters or aspects: we introduce a single bidirectional \texttt{LS2T} layer (\texttt{B-LS2T}) into the encoder network that is used in the amortized representation of the means and covariances of the variational posterior.
The \texttt{B-LS2T} layer is preceded by a time-embedding and differencing block, and succeeded by channel flattening and layer normalization as depicted on Figure 5 in \cite[App.~E.3]{toth2021seq2tens}. The idea behind this experiment is to see if we can improve the performance of a highly complicated model that is composed of many interacting submodels, by an educated introduction of \texttt{LS2T} layers.

\paragraph{Results}
To make the comparison, we ceteris paribus re-ran all experiments the authors originally included in their paper \cite{fortuin2019gpvae}, which are imputation of {\sc Healing MNIST}, {\sc Sprites}, and {\sc Physionet2012}.
The results are in Table \ref{table:s2t:gp_vae_comparison}, which report the same metrics as used in \cite{fortuin2019gpvae}, i.e.~negative log-likelihood (\texttt{NLL}, lower is better), mean squared error (\texttt{MSE}, lower is better) on test sets, and downstream classification performance of a linear classifier (\texttt{AUROC}, higher is better). For all other models beside our \texttt{GP-VAE} (\texttt{B-LS2T}), the results were borrowed from \cite{fortuin2019gpvae}. We observe that simply adding the \texttt{B-LS2T} layer improved the result in almost all cases, except for {\sc Sprites}, where the \texttt{GP-VAE} already achieved a very low \texttt{MSE} score. Additionally, when comparing \texttt{GP-VAE} to \texttt{BRITS} on {\sc Physionet2012}, the authors argue that although the \texttt{BRITS} achieves a higher \texttt{AUROC} score, the \texttt{GP-VAE} should not be disregarded as it fits a generative model to the data that enjoys the usual Bayesian benefits of predicting distributions instead of point predictions. The results display that by simply adding our layer into the architecture, we managed to elevate the performance of \texttt{GP-VAE} to the same level while retaining these same benefits. We believe the reason for the improvement is a tighter amortization gap in the variational approximation \cite{Cremer2018inference} achieved by increasing the expressiveness of the encoder by the \texttt{LS2T} allowing it to pick up on long-range interactions in time. For further discussion, see \cite[App.~E.3]{toth2021seq2tens}.

\begin{table}[t]
	\caption{Performance comparison of \texttt{GP-VAE} (\texttt{B-LS2T}) with the baseline methods}
	\label{table:s2t:gp_vae_comparison}
	\begin{center}
	    \begin{small}
		\begin{sc}
		\makebox[\textwidth][c]{
	    \resizebox{\textwidth}{!}{
		\begin{tabular}{lccccc}
			\toprule
			\multirow{2}{*}{Method} & \multicolumn{3}{c}{HMNIST} & \multicolumn{1}{c}{Sprites} & \multicolumn{1}{c}{Physionet} \\
			\cmidrule{2-6}
			& \texttt{NLL} & \texttt{MSE} & \texttt{AUROC} & \texttt{MSE} & \texttt{AUROC} \\
			\midrule
			    Mean imputation & - & $0.168 \pm 0.000$ & $0.938 \pm 0.000$ & $0.013 \pm 0.000$ & $0.703 \pm 0.000$ \\
			    Forward imputation & - & $0.177 \pm 0.000$ & $0.935 \pm 0.000$ & $0.028 \pm 0.000$ & $0.710 \pm 0.000$ \\
                \texttt{VAE} & $0.599 \pm 0.002$ & $0.232 \pm 0.000$ & $0.922 \pm 0.000$ & $0.028 \pm 0.000$ & $0.677 \pm 0.002$ \\
                \texttt{HI-VAE} & $0.372 \pm 0.008$ & $0.134 \pm 0.003$ & $\mathbf{0.962 \pm 0.00}1$ & $0.007 \pm 0.000$ & $0.686 \pm 0.010$ \\
                \texttt{GP-VAE} & $0.350 \pm 0.007$  & $0.114 \pm 0.002$ & $\mathbf{0.960 \pm 0.002}$ & $\mathbf{0.002 \pm 0.000}$ & $0.730 \pm 0.006$ \\  
                \texttt{GP-VAE} (\texttt{B-LS2T}) & $\mathbf{0.251 \pm 0.008}$ & $\mathbf{0.092 \pm 0.003}$ & $\mathbf{0.962 \pm 0.001}$ & $\mathbf{0.002 \pm 0.000}$ & $\mathbf{0.743 \pm 0.007}$\\
                \texttt{BRITS} & - & - & - & - & $\mathbf{0.742 \pm 0.008}$\\
			\bottomrule
		\end{tabular}}}
	\end{sc}
	\end{small}
	\end{center}
\end{table}
	
\section{Related work and Summary}\label{sec: summary}	
\paragraph{Related Work}
The literature on tensor models in \texttt{ML} is vast. 
Related to our approach we mention pars-pro-toto Tensor Networks \cite{cichocki2016tensor}, that use classical \texttt{LR} decompositions, such as \texttt{CP} \cite{carroll1970analysis}, Tucker \cite{tucker1966some}, tensor trains \cite{oseledets2011tensor} and tensor rings \cite{zhao2019learning};
further, \texttt{CNN}s have been combined with \texttt{LR} tensor techniques \cite{cohen2016expressive,kossaifi2017tensor} and extended to \texttt{RNN}s \cite{khrulkov2019generalized}; Tensor Fusion Networks \cite{zadeh2017tensor} and its \texttt{LR} variants \cite{liu2018efficient,liang2019learning,hou2019deep}; tensor-based gait recognition \cite{tao2007general}.
Our main contribution to this literature is the use of the tensor algebra $\T{V}$ with its convolution product $\cdot$, instead of $V^{\otimes m}$ with the outer product $\otimes$ that is used in the above papers. 
While counter-intuitive to work in a larger space $\T{V}$, the additional algebra structure of $(\T{V},\cdot)$ is the main reason for the nice properties of $\Phi$ (\emph{universality, making sequences of arbitrary length comparable, convergence in the continuous time limit}, see \cite[App.~B]{toth2021seq2tens}, which we believe are in turn the main reason for the \emph{strong benchmark performance}. 
Stacked \texttt{LR} sequence transforms allow to exploit this rich algebraic structure with little computational overhead.
Another related literature are path signatures in \texttt{ML} \cite{lyons2014rough, chevyrev_primer_2016, graham2013sparse, Deepsig, toth2020bayesian}.
These arise as special case of \texttt{Seq2Tens} \cite[App.~B]{toth2021seq2tens} and our main contribution to this literature is that \texttt{Seq2Tens} resolves a well-known computational bottleneck in this literature since it \emph{never needs to compute and store a signature}, instead it \emph{directly and efficiently learns the functional of the signature}.

\paragraph{Summary}
We used a classical non-commutative structure to construct a feature map for sequences of arbitrary length. 
By stacking sequence transforms we turned this into scalable and modular  \texttt{NN} layers for sequence data.
The main novelty is the use of the tensor algebra $\T{V}$ constructed from the static feature space $V$.  
While tensor algebras are classical in mathematics, their use in \texttt{ML} seems novel and underexplored.
We would like to re-emphasize that $(\T{V}, \cdot)$ is not a mysterious abstract space: if you know the outer tensor product $\otimes$ then you can easily switch to the tensor convolution product $\cdot$ by taking sums of outer tensor products, as defined in~\eqref{eq:back:alg_mult}.  
As our experiments show, the benefits of this algebraic structure are not just theoretical but can significantly elevate performance of already strong-performing models. 

\begin{subappendices}

\section{Computational details} \label{app:s2t:comps}

Here we give further information on the implementation of \texttt{LS2T} layers detailed in the main text. For simplicity, we fix the state-space of sequences to be $V  = \bbR^d$ from here onwards.

\subsection{Recursive computations} \label{app:s2t:recursive}
Next, we show how the computation of the maps $\Phi_m$ and $\Phi_{m, \theta}$ can be formulated as a joint recursion over the tensor levels and the sequence itself. 
Since $\Phi_m$ is given by a summation over all noncontiguous length-$m$ subsequences with non-repetitions of a sequence $\bx \in \Seq{\R^d}$, simple reasoning shows that $\Phi_m$ obeys the recursion across $m$ and time for $2 \leq l \leq L$ and $1 \leq m$
\begin{align} \label{eq:s2t:recursive_Seq2Tens}
    \Phi_m(\bx_1, \dots \bx_l) = \Phi_m(\bx_1, \dots \bx_{l-1}) + \Phi_{m-1}(\bx_1, \dots, \bx_{l-1}) \otimes \bx_{l},
\end{align}
with the initial conditions $\Phi_0 \equiv 1$, $\Phi_1(\bx_1) = \bx_1$ and $\Phi_m(\bx_1) = \mathbf{0}$ for $m \geq 2$.

Let $\ell = (\ell_m)_{m \geq 0} \in T(\R^d)$ be a sequence of rank-$1$ tensors with $\ell_m = \bz_{m,1} \otimes \cdots \otimes \bz_{m,m} \in (\R^d)^{\otimes m}$ a rank-$1$ tensor of degree-$m$. Then, $\langle \ell_m, \Phi_m(\bx) \rangle$ may be computed analogously to \eqref{eq:s2t:recursive_Seq2Tens} using 
\begin{align} \label{eq:s2t:ls2t_independent}
    \langle \ell_m, \Phi_m(\bx_1, \dots, \bx_l) \rangle =& \langle \bz_{m, 1} \otimes \cdots \otimes \bz_{m, m}, \Phi_m(\bx_1, \dots, \bx_l) \rangle \\ =& \langle \bz_{m, 1} \otimes \cdots \otimes \bz_{m, m}, \Phi_m(\bx_1, \dots, \bx_{l-1}) \rangle \\
    &+ \langle \bz_{m, 1} \otimes \cdots \otimes \bz_{m, m-1}, \Phi_{m-1}(\bx_1, \dots, \bx_{l-1}) \rangle \langle \mathbf{z}_{m,m}, \bx_{l} \rangle \\
    =& \langle \ell_m, \Phi_m(\bx_1, \dots \bx_{l-1}) \rangle \label{eqline:lm_prev} \\
    &+ \langle \bz_{m, 1} \otimes \cdots \otimes \bz_{m, m-1}, \Phi_{m-1}(\bx_1, \dots, \bx_{l-1}) \rangle \langle \mathbf{z}_{m,m}, \bx_{l} \label{eqline:not_lm_1_prev} \rangle
\end{align}
and the initial conditions can be rewritten as the identities $\langle z_{0, 0}, \Phi_0 \rangle = 1$, $\langle \mathbf{z}_{m,1}, \Phi_1(\bx) \rangle = \langle \mathbf{z}_{m,1}, \bx \rangle$ and $\langle \bz_{m,1} \otimes \cdots \otimes \bz_{m, m}, \Phi_m(\bx_1) \rangle = \mathbf{0}$ for $2 \leq m$.

A slight inefficiency of the previous recursion given in \eqref{eqline:lm_prev}, \eqref{eqline:not_lm_1_prev} is that one generally cannot substitute $\langle \ell_{m-1}, \Phi_m(\bx_1, \dots, \bx_{l-1}) \rangle$ for the $\langle \bz_{m,1} \otimes \dots \otimes \bz_{m, m-1}, \Phi_m(\bx_1, \dots, \bx_{l-1}) \rangle \rangle$ term in \eqref{eqline:not_lm_1_prev}, since $\ell_{m-1} \neq \bz_{m,1} \otimes \cdots \otimes \bz_{m, m-1}$ generally. This means that to construct the degree-$m$ linear functional $\langle \ell_m, \Phi_m(\bx_1, \dots, \bx_l) \rangle$, one has to start from scratch by first constructing the degree-$1$ term $\langle \bz_{m,1}, \Phi_1 \rangle$ first, then the degree-$2$ term $\langle \bz_{m,1} \otimes \bz_{m,2}, \Phi_2 \rangle$, and so forth. This further means in terms of complexities that while \eqref{eq:s2t:recursive_Seq2Tens} has linear complexity in the largest value of $m$, henceforth denoted by $M \in \bbN$, \eqref{eqline:lm_prev}, \eqref{eqline:not_lm_1_prev} has a quadratic complexity in $M$ due to the non-recursiveness of the rank-$1$ tensors $(\ell_m)_{m} = (\bz_{m,1} \otimes \cdots \otimes \bz_{m,m})_m$.

The previous observation indicates that an even more memory and time efficient recursion can be devised by parametrizing the rank-$1$ tensors $(\ell_m)_{m}$ in a recursive way as follows: let $\ell_1 = \bz_1 \in \R^d$ and define $\ell_{m} = \ell_{m-1} \otimes \bz_m \in (\R^d)^{\otimes m}$ for $2 \leq m$, i.e. $\ell_m = \bz_1 \otimes \cdots \otimes \bz_m$ for $\{\bz_1, \dots \bz_m \} \subset \R^d$. This parameterization indeed allows to substitute $\ell_{m-1}$ in \eqref{eqline:not_lm_1_prev} so
\begin{align} \label{eq:s2t:ls2t_recursive}
    \langle \ell_m, \Phi_m(\bx_1, \dots, \bx_l) \rangle = \langle \ell_m, \Phi_m(\bx_1, \dots, \bx_{l-1} \rangle + \langle \ell_{m-1}, \Phi_{m-1}(\bx_1, \dots, \bx_{l-1})\rangle \langle \bz_m, \bx_{l} \rangle,
\end{align}
and hence, due to the recursion across $m$ for both $\ell_m$ and $\Phi_m$, it is now linear in the maximal value of $m$, denoted by $M \in \bbN$. This results in a less flexible, but more efficient LS2T, due to the additional added recursivity constraint on the rank-$1$ weight tensors. We refer to this version as the \textit{recursive variant}, while to the non-recursive construction as the \textit{independent variant}.

Next, we show how the previous computations can be rewritten as a simple \texttt{RNN}-like recursion. For simplicity, we consider the recursive formulation, but the independent variant can also be formulated as such with a larger latent state size. Let $(\ell^j)_{j = 1, \dots, n}$ be $n \in \bbN$ different rank-$1$ recursive weight tensors, i.e. $\ell^j = (\ell^j_m)_{m \geq 0}$, $\ell^j_m = \bz^j_1 \otimes \dots \bz^j_m$ for $\bz^j_m \in \R^d$, $m \geq 0$ and $j = 1, \dots, n$. Also, denote $h_{m,i}^j := \langle \ell_m^j, \Phi_m(\bx_1, \dots, \bx_i) \rangle \in \R$, a scalar corresponding to the output of the $j$th linear functional on the $m$th tensor level for the sequence $(\bx_1, \dots, \bx_i)$. We collect all such functionals for given $m$ and $i$ into $\bh_{m, i} := (h_{m, i}^1, \dots h_{m, i}^n) \in \R^n$, i.e. $\bh_{m, i} = \tilde\Phi_{m, \tilde\theta}(\bx_1, \dots, \bx_i)$. 

Additionally, we collect all weight vectors $\bz_m^j \in \R^d$ for a given $m \in \bbN$ into the matrix $\bZ_m := (\bz_m^1, \dots, \bz_m^n)^\top \in \R^{n \times d}$. Then, we may write the following vectorized version of \eqref{eq:s2t:ls2t_recursive}:
\begin{align}
    \bh_{1, i} &=     \bh_{1, i-1} + \bZ_1 \bx_i, \label{eq:s2t:ls2t_recursive_vectorized_lv1} \\
    \bh_{m, i} &= \bh_{m, i-1} + \bh_{m-1, i-1} \odot \bZ_m \bx_i \quad \text{for } m \geq 2,
\end{align}
with the initial conditions $\bh_{m, 0} = \mathbf{0} \in \R^n$ for all $m \geq 1$, and $\odot$ denoting the Hadamard product.
\newpage
\subsection{Algorithms} \label{app:s2t:algs}

  \begin{algorithm}[t]
        \begin{footnotesize}
	\caption{Computing the \texttt{LS2T} layer with independent tensors across levels}
	\label{alg:s2t:ls2t_independent}
	\begin{algorithmic}[1]
		\STATE {\bfseries Input:} Sequences $(\bx^j)_{j=1,\dots,n_\bx} = (\bx^j_1, \dots, \bx^j_{L})_{j=1,\dots,n_\bx} \subset \Seq(\bbR^d)$, \\
		rank-$1$ tensors $(\ell^k)_{k=1,\dots,n_\ell} = (\bz^k_{m,1} \otimes \cdots \otimes \bz^k_{m,m})^{k=1,\dots,n_\ell}_{m=1,\dots,M} \subset T(\bbR^d)$, \texttt{LS2T} order $M \in \bbN$ 
		\STATE Compute $A[m, i, j, l, k] \gets \langle \bz^j_{m, k}, \bx^i_{l} \rangle$ for $m \in \{1,\dots,M\}$, $i \in \{1,\dots,n_\bx\}$, $j \in \{1,\dots,n_\ell\}$, $l \in \{1, \dots, L\}$ and $k \in \{1, \dots, m\}$ \\
		\FOR{$m=1$ {\bfseries to} $M$}
		\STATE Assign $R \gets A[m, :, :, :, 1]$
		\FOR{$k=2$ {\bfseries to} $m$}
		\STATE Iterate $R \gets A[m, :, :, :, k] \odot R[:, :, \boxplus+1]$ \label{algline:alg1_cumsum1}
		\ENDFOR
		\STATE Save $Y_m \gets \cdot R[:, :, \boxplus]$ \label{algline:alg1_cumsum2}
		\ENDFOR
		\STATE {\bfseries Output:} Sequences $(Y_1, \dots, Y_M)$ each of shape $(n_\bx \times L \times n_\ell)$
	\end{algorithmic}
    \end{footnotesize}
\end{algorithm}

\begin{algorithm}[t]
\begin{footnotesize}
	\caption{Computing the \texttt{LS2T} layer with recursive tensors across levels}
	\label{alg:s2t:ls2t_recursive}
	\begin{algorithmic}[1]
		\STATE {\bfseries Input:} Sequences $(\bx^j)_{j=1,\dots,n_\bx} = (\bx^j_1, \dots, \bx^j_{L})_{j=1,\dots,n_\bx} \subset \Seq(\bbR^d)$, \\
		rank-$1$ tensors $(\ell^k)_{k=1,\dots,n_\ell} = (\bz^k_{1} \otimes \cdots \otimes \bz^k_{m})^{k=1,\dots,n_\ell}_{m=1,\dots,M} \subset T(\bbR^d)$, \texttt{LS2T} order $M \in \bbN$ 
		\STATE Compute $A[m, i, j, l] \gets \langle \bz^j_{m}, \bx^i_{l} \rangle$ for $m \in \{1,\dots,M\}$, $i \in \{1,\dots,n_\bx\}$, $j \in \{1,\dots,n_\ell\}$ and $l \in \{1, \dots, L\}$ \\
		\STATE Assign $R \gets A[1, :, :, :]$
		\STATE Save $Y_1 \gets R[:, :, \boxplus]$ \label{algline:alg2_cumsum1}
		\FOR{$m=2$ {\bfseries to} $M$}
		\STATE Update $R \gets A[m, :, :, :] \odot R[:, :, \boxplus+1]$ \label{algline:alg2_cumsum2}
		\STATE Save $Y_m \gets R[:, :, \boxplus]$
		\ENDFOR
		\STATE {\bfseries Output:} Sequences $(Y_1, \dots, Y_M)$ each of shape $(n_\bx \times L \times n_\ell)$
	\end{algorithmic}
\end{footnotesize}
\end{algorithm}


We have shown previously that one may compute $\Phi_\theta(\bx_1, \dots, \bx_i) = (\Phi_{m, \theta}(\bx_1, \dots, \bx_i))_{m \geq 0}$ recursively in a vectorized way for a given sequence $(\bx_1, \dots, \bx_i) \in \Seq(\bbR^d)$. Now, in Algorithms \ref{alg:s2t:ls2t_independent} and \ref{alg:s2t:ls2t_recursive}, we additionally show how to further vectorize the previous computations across time and the batch. For this purpose, let $(\bx^j)_{j=1, \dots, n_\bX} \subset \Seq(\bbR^d)$ be $n_\bX \in \bbN$ sequences in $\bbR^d$ and $(\ell^k)_{k=1,\dots, n_\ell} \subset T(\bbR^d)$ be $n_\ell$ be rank-$1$ tensors in $T(\bbR^d)$.

 \begin{table}[t]
    \centering
    \begin{small}
    \caption{Forward pass computation time in seconds on a Gefore 2080Ti \texttt{GPU} for varying sequence length $L$, fixed batch size $N=32$, state-space dimension $d=64$ and output dimension $h=64$.}
    \label{table:s2t:fwd_pass}
    \vspace{10pt}
    \resizebox{\textwidth}{!}{
    \begin{tabular}{ccccccccc}
        \toprule
        \multirow{2}{*}{$L$} & \multirow{2}{*}{\texttt{Conv1D}} & \multirow{2}{*}{\texttt{LSTM}} & \multicolumn{3}{c}{\texttt{LS2T}} & \multicolumn{3}{c}{\texttt{LS2T-R}}  \\
        \cmidrule{4-9}
        & & & $M=2$ & $M=6$ & $M=10$ & $M=2$ & $M=6$ & $M=10$ \\
        \midrule
        $32$ & $8.1 \times 10^{-4}$ & $1.2 \times 10^{-1}$ & $1.7 \times 10^{-3}$ & $4.5\times 10^{-3}$ & $9.9\times 10^{-3}$ & $1.7 \times 10^{-3}$ & $2.5 \times 10^{-3}$ & $3.4 \times 10^{-3}$ \\
        $64$ & $8.5 \times 10^{-4}$ & $2.3 \times 10^{-1}$ & $1.8 \times 10^{-3}$ & $4.5 \times 10^{-3}$ & $9.9 \times 10^{-3}$ & $1.8 \times 10^{-3}$ & $2.6 \times 10^{-3}$ & $3.4 \times 10^{-3}$ \\
        $128$ & $9.7 \times 10^{-4}$ & $4.6 \times 10^{-1}$ & $2.1 \times 10^{-3}$ & $4.9 \times 10^{-3}$ & $1.0 \times 10^{-2}$ & $2.1 \times 10^{-3}$ & $2.9 \times 10^{-3}$ & $3.8 \times 10^{-3}$ \\
        $256$ & $1.1 \times 10^{-3}$ & $9.3 \times 10^{-1}$ & $2.4 \times 10^{-3}$ & $5.2 \times 10^{-3}$ & $1.1 \times 10^{-2}$ & $2.4 \times 10^{-3}$ & $3.2 \times 10^{-3}$ & $4.0 \times 10^{-3}$ \\
        $512$ & $1.3 \times 10^{-3}$ & $1.8 \times 10^0$ & $3.2 \times 10^{-3}$ & $6.0 \times 10^{-3}$ & $1.1 \times 10^{-2}$ & $3.0 \times 10^{-3}$ & $4.0 \times 10^{-3}$ & $4.8 \times 10^{-3}$ \\
        $1024$ & $1.9 \times 10^{-3}$ & $3.7 \times 10^0$ & $4.4 \times 10^{-3}$ & $7.1 \times 10^{-3}$ & $1.2 \times 10^{-2}$ & $4.4 \times 10^{-3}$ & $5.1 \times 10^{-3}$ & $6.0 \times 10^{-3}$ \\
        \midrule
    \end{tabular}}
    \end{small}
\end{table}

\subsection{Complexity analysis} \label{app:s2t:complexity}
We give a complexity analysis of Algorithms \ref{alg:s2t:ls2t_independent} and \ref{alg:s2t:ls2t_recursive}. Inspection of Algorithm \ref{alg:s2t:ls2t_independent} says that it has $O(M^2 \cdot n_\bx \cdot L \cdot n_\ell \cdot d)$ complexity in both time and memory with an additional memory cost of storing the $O(M^2 \cdot n_\ell \cdot d)$ number of parameters, the rank-$1$ elements $(\ell^k_m)_m$, which are stored in terms of their components $\bz_{m, j}^k \in \bbR^d$. In contrast, Algorithm \ref{alg:s2t:ls2t_independent} has  a time and memory cost of $O(M \cdot n_\bx \cdot L \cdot n_\ell \cdot d)$, thus linear in $M$, and the recursive rank-$1$ elements are now only an additional $O(M \cdot n_\ell \cdot d)$ number of parameters.

Additionally to the big-O bounds on complexities, another important question is how well the computations can be parallelized, which can have a larger impact on computations when e.g. running on \texttt{GPU}s. Observing the algorithms, we can see that the only operation over the time axis is the cumsum ($\boxplus$) operation in Lines \ref{algline:alg1_cumsum1}, \ref{algline:alg1_cumsum2} (Algorithm \ref{alg:s2t:ls2t_independent}) and Lines \ref{algline:alg2_cumsum1}, \ref{algline:alg2_cumsum2} (Algorithm \ref{alg:s2t:ls2t_recursive}). The cumulative sum operates recursively on the whole time axis, and it can be parallelized using a work efficient scan algorithm to run in $\log$-length time.

To gain further intuition about what kind of performance one can expect for our \texttt{LS2T} layers, we benchmarked the computation time of a forward pass for varying sequence lengths and varying hyperparameters of the model. For comparison, we ran the same experiment with an \texttt{LSTM} layer and a \texttt{Conv1D} layer with a filter size of $32$. The input is a batch of sequences of shape $(n_\bX \times L \times d)$, while the output has shape $(n_\bX \times L \times h)$, where $d \in \bbN$ is the state-space dimension of the input sequences, while $h \in \bbN$ is simply the number of channels or hidden units in the layer. For our layers, we used our own implementation in \texttt{Tensorflow}, while for \texttt{LSTM} and \texttt{Conv1D}, we used the \texttt{Keras} implementation using the \texttt{Tensorflow} backend. 

In Table \ref{table:s2t:fwd_pass}, we report the average computation time of a forward pass over $100$ trials, for fixed batch size $n_\bX = 32$, state-space dimension $d = 64$, output dimension $h = 64$ and varying sequence lengths $L \in \{32, 64, 128, 256, 512, 1024\}$. \texttt{LS2T} and \texttt{LS2T-R} respectively refer to the independent and recursive variants, and $M \in \bbN$ denotes the truncation degree. We can observe that while the \texttt{LSTM} practically scales linearly in $L$, the scaling of \texttt{LS2T} is sublinear for all practical purposes, exhibiting a growth rate that is more close to that of the \texttt{Conv1D} layer, that is fully parallelizable. Specifically, while the \texttt{LSTM} takes $3.7$ seconds to make a forward pass for $L=1024$, all variants of the \texttt{LS2T} layer take less time than that \emph{by a factor of at least a 100}. Additionally, we observe that \texttt{LS2T} exhibits a more aggressive growth rate with respect to the parameter $M$ due to the quadratic complexity in $M$ (although the numbers show only linear growth), while \texttt{LS2T-R} scales very favourably in $M$ as well due to the linear complexity (the results indicate a sublinear growth rate).

\newpage

\section{Further results} \label{app:s2t_further_res}

\begin{table}[h]
	\caption{Classifier accuracies on the multivariate \texttt{TSC} datasets with the best and second best highlighted for each row in \textbf{bold} and \textit{italic}, respectively.}
	\label{table:s2t:classification_accuracies}
	\begin{center}
		\resizebox{\textwidth}{!}{
		\begin{sc}
		\begin{tabular}{lccccccccccccc}
			\toprule
            Dataset & \texttt{ARKernel} & \texttt{DTW} & \texttt{LPS} & \texttt{SMTS} & \texttt{gRSF} & \texttt{mvARF} & \texttt{MUSE} & \texttt{MLSTMFCN} & \FCN{128} & \texttt{ResNet} & \LStwoTwidth{64}{3} & \FCNLStwoTwidth{64}{64}{3} & \FCNLStwoTwidth{128}{64}{3} \\ 
			\midrule
			ArabicDigits & $0.988$ & $0.908$ & $0.971$ & $0.964$ & $0.975$ & $0.952$ & $0.992$ & $0.990$ & $0.995(0.001)$ & $0.995(0.002)$ & $0.979(0.002)$ & $\mathit{0.996}(0.001)$ & $\mathbf{0.997}(0.001)$ \\
			AUSLAN & $0.918$ & $0.727$ & $0.754$ & $0.947$ & $0.955$ & $0.934$ & $0.970$ & $0.950$ & $0.979(0.003)$ & $0.971(0.003)$ & $0.987(0.002)$ & $\mathbf{0.996}(0.001)$ & $\mathit{0.995}(0.001)$ \\
			Char.~Traj. & $0.900$ & $0.948$ & $0.965$ & $0.992$ & $\mathit{0.994}$ & $0.928$ & $0.937$ & $0.990$ & $0.992(0.001)$ & $0.985(0.002)$ & $0.980(0.003)$ & $0.993(0.001)$ & $\mathbf{0.995}(0.000)$ \\
			CMUsubject16 & $\mathbf{1.000}$ & $0.930$ & $\mathbf{1.000}$ & $\mathit{0.997}$ & $\mathbf{1.000}$ & $\mathbf{1.000}$ & $\mathbf{1.000}$ & $\mathbf{1.000}$ & $\mathbf{1.000}(0.000)$ & $\mathbf{1.000}(0.000)$ & $\mathbf{1.000}(0.000)$ & $\mathbf{1.000}(0.000)$ & $\mathbf{1.000}(0.000)$ \\
			DigitShapes & $\mathbf{1.000}$ & $\mathbf{1.000}$ & $\mathbf{1.000}$ & $\mathbf{1.000}$ & $\mathbf{1.000}$ & $\mathbf{1.000}$ & $\mathbf{1.000}$ & $\mathbf{1.000}$ & $\mathbf{1.000}(0.000)$ & $\mathbf{1.000}(0.000)$ & $\mathbf{1.000}(0.000)$ & $\mathbf{1.000}(0.000)$ & $\mathbf{1.000}(0.000)$ \\
			ECG & $0.820$ & $0.790$ & $0.820$ & $0.818$ & $0.880$ & $0.785$ & $0.880$ & $0.870$ & $0.860(0.018)$ & $0.856(0.010)$ & $0.824(0.016)$ & $\mathbf{0.892}(0.015)$ & $\mathit{0.886}(0.014)$ \\
			Jap.~Vowels & $0.984$ & $0.962$ & $0.951$ & $0.969$ & $0.800$ & $0.959$ & $0.976$ & $\mathbf{1.000}$ & $0.990(0.003)$ & $0.989(0.003)$ & $0.984(0.005)$ & $0.991(0.003)$ & $\mathit{0.994}(0.003)$ \\
			Kick vs Punch & $0.927$ & $0.600$ & $0.900$ & $0.820$ & $\mathbf{1.000}$ & $\mathit{0.976}$ & $\mathbf{1.000}$ & $0.900$ & $\mathbf{1.000}(0.000)$ & $\mathbf{1.000}(0.000)$ & $\mathbf{1.000}(0.000)$ & $\mathbf{1.000}(0.000)$ & $\mathbf{1.000}(0.000)$ \\
			LIBRAS & $0.952$ & $0.888$ & $0.903$ & $0.909$ & $0.911$ & $0.945$ & $0.894$ & $\mathbf{0.970}$ & $\mathit{0.966}(0.002)$ & $\mathit{0.966}(0.008)$ & $0.859(0.008)$ & $0.946(0.005)$ & $0.956(0.008)$ \\
			NetFlow & nan & $0.976$ & $0.968$ & $0.977$ & $0.914$ & nan & $0.961$ & $0.950$ & $0.970(0.003)$ & $0.953(0.006)$ & $0.921(0.014)$ & $0.962(0.006)$ & $0.962(0.005)$ \\
			PEMS & $0.750$ & $0.832$ & $0.844$ & $0.896$ & $1.000$ & nan & nan & nan & $0.775(0.019)$ & $0.787(0.008)$ & $0.725(0.013)$ & $0.788(0.025)$ & $0.802(0.017)$ \\
			PenDigits & $0.952$ & $0.927$ & $0.908$ & $0.917$ & $0.932$ & $0.923$ & $0.912$ & $\mathbf{0.970}$ & $\mathit{0.967}(0.002)$ & $0.963(0.001)$ & $0.956(0.002)$ & $0.963(0.003)$ & $0.962(0.002)$ \\
			Shapes & $\mathbf{1.000}$ & $\mathbf{1.000}$ & $\mathbf{1.000}$ & $\mathbf{1.000}$ & $\mathbf{1.000}$ & $\mathbf{1.000}$ & $\mathbf{1.000}$ & $\mathbf{1.000}$ & $\mathbf{1.000}(0.000)$ & $\mathbf{1.000}(0.000)$ & $\mathbf{1.000}(0.000)$ & $\mathbf{1.000}(0.000)$ & $\mathbf{1.000}(0.000)$ \\
			UWave & $0.904$ & $0.916$ & $\mathbf{0.980}$ & $0.941$ & $0.929$ & $0.952$ & $0.916$ & $0.970$ & $\mathit{0.979}(0.001)$ & $0.978(0.001)$ & $0.958(0.001)$ & $0.975(0.002)$ & $0.976(0.001)$ \\
			Wafer & $0.968$ & $0.974$ & $0.962$ & $0.965$ & $\mathit{0.992}$ & $0.931$ & $\mathbf{0.997}$ & $0.990$ & $0.987(0.005)$ & $0.989(0.002)$ & $0.983(0.003)$ & $0.988(0.001)$ & $0.990(0.001)$ \\
			Walk vs Run & $\mathbf{1.000}$ & $\mathbf{1.000}$ & $\mathbf{1.000}$ & $\mathbf{1.000}$ & $\mathbf{1.000}$ & $\mathbf{1.000}$ & $\mathbf{1.000}$ & $\mathbf{1.000}$ & $\mathbf{1.000}(0.000)$ & $\mathbf{1.000}(0.000)$ & $\mathbf{1.000}(0.000)$ & $\mathbf{1.000}(0.000)$ & $\mathbf{1.000}(0.000)$ \\
			\midrule 
			Avg.~acc. & $0.938$ & $0.899$ & $0.933$ & $0.945$ & $0.955$ & $0.949$ & $0.962$ & $\mathbf{0.970}$ & $0.966$ & $0.964$ & $0.947$ & $0.\mathit{968}$ & $\mathbf{0.970}$ \\
			Med.~acc. & $0.952$ & $0.929$ & $0.964$ & $0.964$ & $0.984$ & $0.952$ & $0.976$ & $0.990$ & $0.988$ & $0.987$ & $0.982$ & $\mathit{0.992}$ & $\mathbf{0.994}$ \\
			Sd.~acc. & $0.073$ & $0.111$ & $0.073$ & $0.059$ & $0.058$ & $0.055$ & $0.043$ & $0.039$ & $0.061$ & $0.059$ & $0.079$ & $0.056$ & $0.054$ \\
			Avg.~rank & $6.000$ & $6.812$ & $6.000$ & $5.625$ & $4.625$ & $6.714$ & $4.933$ & $3.333$ & $3.000$ & $3.500$ & $5.312$ & $\mathit{2.750}$ & $\mathbf{2.312}$ \\
			Med.~rank & $6.000$ & $8.000$ & $6.000$ & $6.500$ & $2.500$ & $8.500$ & $4.000$ & $3.000$ & $\mathit{2.500}$ & $3.000$ & $6.500$ & $\mathit{2.500}$ & $\mathbf{2.000}$ \\
			Sd.~rank & $4.071$ & $4.246$ & $4.397$ & $3.810$ & $3.964$ & $4.514$ & $4.044$ & $2.610$ & $2.066$ & $2.251$ & $3.591$ & $1.880$ & $1.493$ \\
			\bottomrule
		\end{tabular}
	\end{sc}
	}
	\end{center}
\end{table}

\end{subappendices}

\endgroup
\begingroup
\chapter{Capturing Graphs by Hypo-Elliptic Diffusions} \label{ch:g2t}
\section{Introduction}

  Obtaining a latent description of the non-Euclidean structure of a graph is central to many applications.
  One common approach is to construct features for each node that represents the local neighborhood; pooling these node features then provides a latent description of the whole graph.
  A classic way to arrive at such node features is by random walks: at the given node one starts a random walk, and extracts a neighbourhood summary from its sample trajectories.
  We revisit this random walk construction and are inspired by two classical mathematical results:
\begin{description}
\item[Hypo-elliptic Laplacian.] In the case of Brownian motion $B=(B_t)_{t \ge 0}$ evolving in $\bbR^n$, the quantity $u(t,x)= \bbE[f(B_t)|B_0=x]$ solves the heat equation $\partial_t u= \Delta u$ on $[0,\infty) \times \bbR^n$, $u(0,x)=f(x)$.
Seminal work of Gaveau \cite{gaveau_principe_1977} shows that if one replaces $f(B_t)$ in the expectation by a functional of the whole trajectory, $F(B_s : s \in [0,t])$, then a path-dependent heat equation can be derived
using the hypo-elliptic Laplacian in place of classic Laplacian.

\item[Tensor Algebras.]
A simple way to capture a sequence -- for us, a sequence of nodes visited by a random walk on a graph -- is to associate with each sequence element an element in an algebra\footnote{An algebra is a vector space where one can multiply elements; e.g.~the set of $n\times n$ matrices with matrix multiplication. This multiplication can be non-commutative; e.g.~$A\cdot B \neq B\cdot A$ for general matrices $A,B$.}  and multiply these algebra elements together. 
If the algebra multiplication is commutative, the sequential structure is lost but if it is non-commutative, this captures the order in the sequence.
In fact, by using the tensor algebra, this can be done faithfully and linear functionals of this algebra correspond to functionals on the space of sequences.
\end{description}

We generalize these ideas from the Euclidean case of $\bbR^d$ to the non-Euclidean case of graphs.
In particular, we construct node features by sampling from a random walk started at the node, but instead of averaging over end points, we average over path-dependent functions.
Informally, instead of asking a random walker that started at a node, \emph{"What do you see now?"} after $k$ steps, we ask \emph{"What have you seen along your way?"}.
The above notions from mathematics about the hypo-elliptic Laplacian and the tensor algebra allow us to formalize this in the form of a generalized graph diffusion equation and we develop algorithms that make this a scalable method.

\paragraph{Related Work}
From the \texttt{ML} literature, \cite{perozzi2014deepwalk, grover2016node2vec} popularized the combination of deep learning architectures to capture random walk histories. 
Such ideas have been incorporated, sometimes implicitly, into graph neural networks (\texttt{GNN})
\cite{scarselli2008graph,bruna2013spectral,schlichtkrull2018modeling,defferrard2016convolutional,hamilton2017inductive,battaglia2016interaction,kipf_semi-supervised_2017} that in turn build on convolutional approaches \cite{lecun1995convolutional,lecun1998gradient,grover2016node2vec}, as well as their combination with attention or message passing \cite{monti2017geometric,velickovic_graph_2018,gilmer2017neural}, and more recent improvements \cite{xu_how_2019,morris2019weisfeiler,maron2019provably,chen2019equivalence} that provide and improve on theoretical guarantees.
Another classic approach are graph kernels, see \cite{borgwardt2020graph} for a recent survey; in particular, the seminal paper \cite{Kondor2002DiffusionKO} explored the connection between diffusion equations and random walkers in a kernel learning context. 
More recently, \cite{chen2020convolutional} proposed sequence kernels to capture the random walk history. Furthermore,~\cite{cochrane2021sk} uses the signature kernel maximum mean discrepancy (\texttt{MMD}) \cite{chevyrev2022signature} as a metric for trees which implicitly relies on the algebra of tensors that we use, and \cite{nikolentzos2020random} aggregates random walk histories to derive a kernel for graphs.
Moreover, the concept of network motifs \cite{Milo2002NetworkMS,schwarze2021motifs} relates to similar ideas that describe a graph by node sequences. 
Further, the Bethe Hessian \cite{Saade2014SpectralCO} has been successfully used in spectral clustering and shares the same goal of capturing path-dependence via "deformed Laplacians", although the mathematical approach is very different to ours.
Directly related to our approach is the topic of learning diffusion models \cite{Klicpera2019DiffusionIG,Chamberlain2021GRANDGN,thorpe_grand_2022,elhag2022graph,beltrami} on graphs. 
While similar ideas on random walks and diffusion for graph learning have been developed by different communities, our proposed method leverages these perspectives by capturing random walk histories through a novel diffusion operation.
 

Our main mathematical influence is the seminal work of Gaveau  \cite{gaveau_principe_1977} from the 1970s 
that shows how Brownian motion can be lifted into a Markov process evolving in a tensor algebra to capture path-dependence.
This leads to a heat equation governed by the hypo-elliptic Laplacian.
These insights had a large influence in \texttt{PDE} theory, see \cite{Rothschild1976HypoellipticDO,hormander1967hypoelliptic}, but it seems that their discrete counterpart on graphs has received no attention despite the well-developed 
literature on random walks on graphs and general non-Euclidean objects, \cite{woess2000random,diaconis1988group, grigoryan2009heat, varopoulos1992analysis}.
A key challenge to go from theory to application is handling the computational complexity.
To do so, we build on ideas from \cite{toth2021seq2tens} to design effective algorithms for the hypo-elliptic graph diffusion.

\paragraph{Contribution and Outline}
We introduce the hypo-elliptic graph Laplacian which allows to effectively capture random walk histories through a generalized diffusion model.
\begin{itemize}
    \item In Section~\ref{sec:g2t:diffusion}, we introduce the hypo-elliptic variants of standard graph matrices such as the adjacency matrix and (normalized) graph Laplacians. These hypo-elliptic variants are formulated in terms of tensor-valued matrices rather than scalar-valued matrices, and can be manipulated using linear algebra in the same manner as the classical setting.
    \item The hypo-elliptic Laplacian leads to a corresponding diffusion model, and in Section~\ref{thm:non_abelian_laplacian}, we show that the solution to this generalized diffusion equation summarizes the microscopic picture of \emph{the entire history of random walks} and not just their location after $k$ steps.
    \item This solution provides a rich description of the local neighbourhood about a node, which can either be used directly as node features or be pooled over the graph to obtain a latent description of the graph. Section~\ref{thm:characterizing_rw_informal} shows that these node features characterize random walks on the graph, and we provide an analogous statement for graphs in~\cite[App.~E]{toth2022capturing}.
    \item One can solve the hypo-elliptic graph diffusion equation with linear algebra, but this is computationally prohibitive and Thm.~\ref{thm:algo} provides an efficient low-rank approximation.
    \item Finally, Section \ref{sec:g2t::experiments} provides experiments and benchmarks. A particular focus is to test the ability of our model to capture long-range interactions between nodes and the robustness of pooling operations, potentially making it more robust to "over-squashing" \cite{alon2021on}.
\end{itemize}

\section{Sequence Features by Non-Commutative Multiplication.}\label{sec:g2t:sequence features}
We recall the algebraic construction of sequence features from Chapter \ref{ch:s2t}.
Let us define the space of finite, but potentially different length sequences in $\bbR^d$ started from $\mathbf{0}$ as
\begin{align}
    \Seq(\bbR^d) = \curls{\bx = (\mathbf{0}, \bx_1, \dots, \bx_L) \setgiven \bx_0, \dots, \bx_L \in \bbR^d, L \in \bbN},
\end{align}
where analogously to Section \ref{ch:back}, we use $0$-based indexing for sequences, and define the effective length of a sequence $\bx = (\mathbf{0}, \bx_1, \dots, \bx_L) \in \Seq(\bbR^d)$ by $\len{\bx} = L$.

Assume we are given an injective map, which we call the \emph{algebra lifting}, $\fm: \bbR^d \to H$ from $\bbR^d$ into an algebra $H$.
We can use this to define a \emph{sequence feature map}\footnote{There are variants of this sequence feature map, which are discussed in~\cite[App.~B]{toth2021seq2tens} and \cite[App.~G]{toth2022capturing}.}
\begin{equation}
\label{eq:g2t:sequence_feature_map}
    \fms: \Seq(\bbR^d) \rightarrow H,\quad\fms(\bx) = \fm(\delta \bx_1) \cdots \fm(\delta \bx_{\len{\bx}} ),
\end{equation}
where $\delta \bx_i = \bx_i - \bx_{i-1}$ for $i\ge 1$ are used to denote the \emph{increments} of a sequence $\bx=(x_0,\ldots,x_k)$. This map associates to any sequence $\bx \in \Seq(\bbR^d)$ an element of the algebra $H$.
If the multiplication in $H$ is commutative, then the map $\fms$ would have no information about the order of increments, i.e. $\fm(\delta_0\bx) \cdots \fm(\delta_{k}\bx) = \fm(\delta_{\pi(0)}\bx) \cdots \fm(\delta_{\pi(k)}\bx)$ for any permutation $\pi$ of $\{0,\ldots,k\}$.
However, if the multiplication in $H$ is "non-commutative enough" we expect $\fms$ to be injective.

\paragraph{A Free Construction}
Many choices for $H$ are possible, but intuitively it makes sense to use the "most general object" for $H$.
The mathematically rigorous approach is to use the \emph{tensor algebra over $\bbR^d$}, see Section \ref{sec:back:tensors}.
Despite this abstract motivation, the algebra $H$ has a concrete form: it is realized as a sequence of tensors in $\bbR^d$ of increasing degree, and is defined by
\begin{align}
\label{eq:g2t:free_algebra}
    H = \{ \bv = (\bv_0, \bv_1,\bv_2,\ldots): \bv_m \in (\bbR^d)^{\otimes m}, \,m \in \bbN,\,  \|\bv\| < \infty\},
\end{align}
where by convention $(\bbR^d)^{\otimes 0}=\bbR$, and we describe the norm $\|\bv\|$ in the paragraph below.
For example, if $\bv=(\bv_m)_{m \ge 0} \in H$, then $\bv_0$ is a scalar, $\bv_1$ is a vector, $\bv_2 \in (\bbR^d)^{\otimes 2}$ is a $d \times d$ matrix, and so on.
The vector space structure of $H$ is given by addition and scalar multiplication as 
\begin{align}
    \bv + \bw = (\bv_m + \bw_m)_{m \ge 0} \in H\quad \text{ and }\quad  \lambda \bv = (\lambda \bv_m)_{m \ge 0} \in H
\end{align}
for $\lambda \in \bbR$,
and the algebra structure is given by
\begin{align}
\label{eq:g2t:algebra_multiplication}
    \bv \cdot \bw = \left( \sum_{i=0}^m \bv_i \otimes \bw_{m-i}\right)_{m \ge 0} \in H.
\end{align}
\paragraph{An Inner Product}\looseness=-1
If $e^1,\ldots,e^d$ is a basis of $\bbR^d$, then every tensor $\bv_m\in (\bbR^d)^{\otimes m}$ can be written as 
\begin{align}
\bv_{m} = \sum_{1\leq i_1,\ldots,i_m \leq d} c_{i_1,\ldots,i_m} e^{i_1} \otimes \cdots \otimes e^{i_m}.
\end{align}
This allows us to define an inner product $\langle \cdot, \cdot \rangle_m$ on $(\bbR^d)^{\otimes m}$ by extending
\begin{equation}
\label{eq:g2t:tensor_inner_product}
\langle e^{i_1}\otimes \cdots \otimes e^{i_m}, e^{j_1}\otimes \cdots \otimes e^{j_m}\rangle_m = \left\{\begin{array}{cl}1 &: i_1=j_1,\ldots,i_m=j_m,\\0 &: \text{otherwise.}\end{array}\right.
\end{equation}
to $(\bbR^d)^{\otimes m}$ by linearity.
This gives us an inner product on $H$,
\begin{align}
\langle \bv, \bw \rangle = \sum_{m \ge 0} \langle \bv_m, \bw_m\rangle_m
\end{align}
such that $H$ is a Hilbert space; in particular we get a norm $\|\bv\| = \sqrt{ \langle \bv,\bv \rangle}$.
To sum up, the space $H$ has a rich structure: it has a vector space structure, it has an algebra structure (a noncommutative product), and it is a Hilbert space (an inner product between elements of $H$ gives a scalar).

\paragraph{Characterizing Random Walks}
From~\eqref{eq:g2t:sequence_feature_map}, we have constructed a map $\fms$ that maps a sequence $\bx \in \Seq(\bbR^d)$ of arbitrary length into the space $H$ (see~\cite[App.~C]{toth2022capturing} for further details). 
Our aim is to apply this to the sequence of node attributes corresponding to random walks on a graph.
Therefore, the expectation of $\fms$ should be able to characterize the distribution of the random walk.
Formally the map $\fms$ is \emph{characteristic} if the map $\mu \mapsto \bbE_{\bx \sim \mu}[\fms(\bx)]$ from the space of probability measures on $\Seq(\bbR^d)$ into $H$ is injective. 
Indeed, if the chosen lifting $\fm$ satisfies some mild conditions this holds for $\fms$; see~\cite[App.~C]{toth2022capturing} and \cite{chevyrev2022signature} for further details. 

\paragraph{Linear Functionals}
The quantity $\bbE_{\bx \sim \mu}[\fms(\bx)]$ characterizes the probability measure $\mu$ but is valued in the infinite-dimensional Hilbert space $H$. 
However, we can instead consider
\begin{align}\label{eq:g2t: linear functionals of expected sig}
 \langle \bell, \bbE_{\bx \sim \mu}[\fms(\bx)]\rangle  \text{ for } \bell = (\bell_0,\bell_1,\bell_2,\ldots, \bell_M,0,\ldots) \in H \text{ and }M \ge 1 
\end{align}
which is equivalent to knowing $\bbE_{\bx \sim \mu}[\fms(\bx)]$; i.e.~the set \eqref{eq:g2t: linear functionals of expected sig} characterizes $\mu$. 
This is analogous to how one can use either the moment generating function of a real-valued random variable or its moments to characterize its distribution; the former is infinite-dimensional (a function), the latter is an infinite sequence of scalars.
We extend a key insight from \cite{toth2021seq2tens} in Section~\ref{sec:g2t:algos}: a linear functional $\langle \bell, \bbE_{\bx \sim \mu}[\fms(\bx)] \rangle$ can be efficiently approximated using low-rank decompositions.

\paragraph{The Tensor Exponential}
While we will continue to keep $\fm$ arbitrary for our main results (see~\cite{toth2021seq2tens} and App.~\ref{app:g2t:variations} for other choices), we will use the \emph{tensor exponential} defined by 
\begin{equation}
\label{eq:g2t:tensor_exponential}
    \exp_\otimes: \bbR^d \rightarrow H, \quad \exp_{\otimes}(x) = \left(\frac{x^{\otimes m}}{m!}\right)_{m\geq 0},
\end{equation}
as the primary example throughout this paper and in the experiments in Section~\ref{sec:g2t::experiments}. With this choice, the induced sequence feature map is the discretized version of a classical object in analysis, called the path signature, see \cite[App.~C]{toth2022capturing} and Section \ref{sec:back:discrete} for further details.


\section{Hypo-Elliptic Diffusions}\label{sec:g2t:diffusion}
Throughout this section, we fix a labelled graph $\cG = (\cV, \cE, \nf)$, where $\cV$ is a set of $n$ nodes $\cV = \{1, \ldots, n\}$, $\cE$ denotes edges and $\nf:\cV \to \bbR^d$ are continuous node attributes\footnote{The labels given by the labelled graph are called \emph{attributes}, while the computed updates are called \emph{features}.} which map each node to an element in the vector space $\bbR^d$.
Two nodes $i, j \in \cV$ are \emph{adjacent} if $(i,j) \in \cE$ is an edge, and we denote this by $i \sim j$. 
The \emph{adjacency matrix} $A$ of a graph is defined by $A_{i,j}  = 1$, whenever $i \sim j$, and $0$ otherwise. 
We denote by $\operatorname{deg}(i)$ the number of neighbours of node $i$.

\paragraph{Random Walks on Graphs}
Let $(B_k)_{k\ge 0}$ be the simple random walk on $\cV$, where the initial node is chosen uniformly. The \emph{transition matrix} of this time-homogeneous Markov chain is 
\begin{align}
P_{i,j} = \bbP(B_k = j| B_{k-1}=i) = \left\{ \begin{array}{cl}\frac{1}{\operatorname{deg}(i)} &: i\sim j\\ 0 &: \text{otherwise}.\end{array}\right.
\end{align}
Denote by $(L_k)_{k \ge 0}$ the random walk lifted to the node attributes in $\bbR^d$, that is
\begin{equation}
\label{eq:g2t:lifted_random_walk}
    L_k = \nf(B_k).
\end{equation}
Recall that the \emph{normalized graph Laplacian} for random walks is defined as $\cL =I-D^{-1}A$, where $D$ is diagonal degree matrix; in particular, the entry-wise definition is 
 \begin{align}
    \label{eq:g2t:laplacian_matrix}
        \cL_{i,j} = \left\{
        \begin{array}{cl}
            -\frac{1}{\operatorname{deg}(i)} &  : i \sim j \\
            1 & : i=j\\
            0 & : \text{otherwise}.
        \end{array}
        \right.
    \end{align}

The discrete graph diffusion equation for $U_k \in \bbR^{n \times d}$ is given by
\begin{align}\label{eq:g2t:heat equation}
    U_k - U_{k-1} = -\cL U_{k-1}, \quad U^{(i)}_0 = \nf(i)
\end{align}
where the initial condition $U_0 \in \bbR^{n \times d}$ is specified by the node attributes.\footnote{The attributes over all nodes are given by an $n \times d$ matrix; in particular $U_k^{(i)}$ is the $i^{\text{th}}$ row of the matrix.}  The probabilistic interpretation of the solution to this diffusion equation is classical and given as
\begin{equation}\label{eq:g2t:classic graph diffusion}
    U_k = \left(\bbE[L_k \given B_0 = i]\right)_{i=1}^n = P^k U_0.
\end{equation}
This allows us to compute the solution $u_k$ using the transition matrix $P = I - \cL$.

\paragraph{Random Walks on Algebras}
We incorporate the history of a random walker by considering
\begin{align}\label{eq:g2t:narw}
    \bbE[\fms(\bL_k) \given B_0 = i]=\bbE[\fm(\delta \bL_1) \cdots \fm(\delta \bL_{\len{\bL}})  \given B_0 = i]
\end{align}
where $\bL_k = (L_0, \ldots, L_k)$.
Since $\fms$ captures the whole history of the random walk $\bL_k$ over node attributes, we expect this expectation to provide a richer summary of the neighborhood of node $i$ than $\bbE[L_k|B_0=i]$. 
The price is however, the computational complexity, since \eqref{eq:g2t:narw} is $H$-valued.
We first show, that analogous to \eqref{eq:g2t:heat equation}, the quantity \eqref{eq:g2t:narw} satisfies a diffusion equation that can be computed with linear algebra. 
To do so, we develop a graph analogue of the hypo-elliptic Laplacian and replace the scalar entries of the matrices with entries from the algebra $H$.

\paragraph{Matrix Rings over Algebras}

We first revisit the adjacency matrix $A \in \bbR^{n \times n}$ and replace it by the \emph{tensor adjacency matrix} $\wA= (\wA)_{i,j} \in \tensalgmr$, that is $\wA$ is a matrix but instead of scalar entries its entries are elements in the algebra $H$.
The matrix $A$ has an entry at $i,j$ if nodes $i$ and $j$ are connected; $\wA$ replaces the $i,j$ entry with an element of $H$ that tells us how node attributes differ,
 \begin{align}
    \label{eq:g2t:exponential_adjacency_matrix}
        \wA_{i,j} = \left\{
        \begin{array}{cl}
            \fm(\nf(j) - \nf(i)) &  : i \sim j \\
            0 & : \text{otherwise}.
        \end{array}
        \right.
    \end{align}
Matrix multiplication works for elements of $\tensalgmr$ by replacing scalar multiplication by multiplication in $H$, that is $(\wB\cdot\wC)_{i,j} = \sum_{k=1}^n \wB_{i,k} \cdot \wC_{k,j}$ for $\wB, \wC \in \tensalgmr$ and $\wB_{i,k} \cdot \wC_{k,j}$ denotes multiplication in $H$ as in eq.~\eqref{eq:g2t:algebra_multiplication}.
For the classical adjacency matrix $A$, the $k$-th power counts the number of length $k$ walks in the graph, so that $(A^k)_{i,j} $ is the number of walks of length $k$ from node $i$ to node $j$.
We can take powers of $\wA$ in the same way as in the classical case, where
\begin{equation}\label{eq:g2t:powers_tensor_adjacency}
    (\wA^k)_{i,j} = \sum_{\bx} \fm(\delta\bx_1) \cdots \fm(\delta\bx_{\len{\bx}})
\end{equation}
where the sum is taken over all length $k$ walks $\bx = (f(i), \ldots f(j))$ from node $i$ to node $j$ (full details are provided in Appendix~\ref{app:g2t:details_diffusion})
Since $\fms(\bx)$ characterizes each walk $\bx$, the entry $\wA^k_{i,j}$ can be interpreted as a summary of all walks which connect nodes $i$ and $j$.

\paragraph{Hypo-elliptic Graph Diffusion}

Similar to the tensor adjacency matrix, we define the \emph{hypo-elliptic graph Laplacian} as the $n \times n$ matrix
\begin{align}
    \wcL = I-D^{-1}\wA \in \tensalgmr,
\end{align}  
where $D$ is the degree matrix embedded into $\tensalgmr$ at tensor degree $0$. The entry-wise definition
\begin{align}
    \label{eq:g2t:tensor_laplacian_matrix}
        \wcL_{i,j} = \left\{
        \begin{array}{cl}
            \frac{-\fm(\nf(j) - \nf(i))}{\operatorname{deg}(i)} &  : i \sim j \\
            1 &: i = j \\
            0 & : \text{otherwise}.
        \end{array}
        \right.
\end{align}
We can now formulate the \emph{hypo-elliptic graph diffusion equation} for $\bv_k \in \tensalgps^n$ as
\begin{equation}
\label{eq:g2t:non_abelian_diffusion_equation}
    \bv_k - \bv_{k-1} = - \wcL \bv_{k-1}, \quad \bv^{(i)}_0 = \fm(\nf(i)).
\end{equation}
Analogous to classical graph diffusion \eqref{eq:g2t:classic graph diffusion}, the hypo-elliptic graph diffusion \eqref{eq:g2t:non_abelian_diffusion_equation} has a probabilistic interpretation in terms of $\bL$ as shown in Thm.~\ref{thm:non_abelian_laplacian} (the proof is given in App.~\ref{app:g2t:details_diffusion}).
\begin{theorem}
\label{thm:non_abelian_laplacian}
    Let $k \in \bbN$, $\bL_k = (L_0, \ldots, L_k)$ be the lifted random walk from~\eqref{eq:g2t:lifted_random_walk}, and $\wP = I - \wcL$ the \emph{tensor adjacency matrix}. The solution to the hypo-elliptic graph diffusion eq.~\eqref{eq:g2t:non_abelian_diffusion_equation} is
    \begin{align}
        \bv_k = \left( \bbE[\fm(\delta_1\bL_k) \cdots \fm(\delta_k \bL_k) | B_0 = i]\right)_{i=1}^n = \wP^k \one_H.
    \end{align}
    Furthermore, if $F \in H^{n \times n}$ is the diagonal matrix with $F_{i,i} = \fm(f(i))$, then
    \begin{align}
        F\bv_k = \left(\bbE[\fms(\bL_k)| B_0 = i]\right)_{i=1}^n.
    \end{align}
\end{theorem}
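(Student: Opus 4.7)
The plan is to establish both identities by induction on $k$, using the Markov property and time-homogeneity of the random walk $(B_k)_{k \geq 0}$. First I would rewrite the recursion in semigroup form: since $\wP = I - \wcL = D^{-1}\wA$, equation~\eqref{eq:g2t:non_abelian_diffusion_equation} is equivalent to $\bv_k = \wP \bv_{k-1}$, and hence $\bv_k = \wP^k \bv_0$. The probabilistic identity forces the initial condition to be the vector with every entry equal to the algebra identity $\one_H \in H$, in accordance with the empty-product convention at $k=0$; the statement $\bv_0^{(i)} = \fm(\nf(i))$ appearing in~\eqref{eq:g2t:non_abelian_diffusion_equation} is recovered as $F \bv_0$ once one left-multiplies by the node-attribute matrix $F$, so the two halves of the theorem are consistent at the base case.

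For the inductive step, assuming $\bv_k^{(i)} = \bbE[\fm(\delta_1 \bL_k) \cdots \fm(\delta_k \bL_k) \given B_0 = i]$ for every node $i$, I would condition on $B_1$ and apply the tower property:
\begin{align}
\bbE[\fm(\delta_1 \bL_{k+1}) \cdots \fm(\delta_{k+1} \bL_{k+1}) \given B_0 = i] = \sum_{j \,:\, i \sim j} P_{i,j}\, \bbE[\fm(\delta_1 \bL_{k+1}) \cdots \fm(\delta_{k+1} \bL_{k+1}) \given B_0 = i,\, B_1 = j].
\end{align}
Under the inner conditioning, $L_0 = \nf(i)$ and $L_1 = \nf(j)$ are deterministic, so the first factor $\fm(\delta_1 \bL_{k+1}) = \fm(\nf(j) - \nf(i))$ can be pulled out on the \emph{left} of the product. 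The Markov property then decouples the tail $(L_1, L_2, \ldots, L_{k+1})$ from the past, and time-homogeneity gives
\begin{align}
\bbE[\fm(\delta_2 \bL_{k+1}) \cdots \fm(\delta_{k+1} \bL_{k+1}) \given B_1 = j] = \bbE[\fm(\delta_1 \bL_k) \cdots \fm(\delta_k \bL_k) \given B_0 = j] = \bv_k^{(j)}.
\end{align}
Substituting $P_{i,j} = 1/\operatorname{deg}(i)$ and reading off $\wP_{i,j} = \fm(\nf(j) - \nf(i))/\operatorname{deg}(i)$ for $i \sim j$ from the definition~\eqref{eq:g2t:tensor_laplacian_matrix} of $\wcL$ yields
\begin{align}
\bv_{k+1}^{(i)} = \sum_{j=1}^n \wP_{i,j} \cdot \bv_k^{(j)} = (\wP \bv_k)^{(i)},
\end{align}
which closes the induction. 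The $F\bv_k$ identity then follows entry-wise: since $L_0 = \nf(B_0)$ is constant under $\{B_0 = i\}$, left-multiplication by $\fm(\nf(i))$ slides inside the expectation and prepends a factor corresponding to the initial attribute, matching $\bbE[\fms(\bL_k) \given B_0 = i]$ under the convention that $\fms$ includes the initial node value.

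The principal subtlety is the non-commutativity of $H$: the matrix $\wP$ acts on column vectors by left-multiplication, products in $H$ must be taken in the order the walker visits nodes, and when the first increment is pulled out of the conditional expectation it must be placed to the \emph{left} of the residual product. Beyond these bookkeeping issues, the argument is essentially routine since the walk lives on a finite state space (so summations and expectations commute freely with algebra operations) and the Markov structure of $(B_k)_{k \geq 0}$ does all the work. No continuous analysis or convergence estimates are required at this stage; the more delicate considerations concerning low-rank approximation and algorithmic tractability are deferred to subsequent results.
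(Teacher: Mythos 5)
Your proof is correct, and it takes a slightly different route from the paper's. The paper first derives an explicit closed-form expansion of the powers of the tensor transition matrix, writing $(\wP^k)_{i,j}$ as a sum over all length-$k$ walks $i = i_0 \sim \cdots \sim i_k = j$ of the ordered products $\fm(\delta_1\bx)\cdots\fm(\delta_k\bx)$ weighted by the walk probabilities, and then obtains the claim by summing over $j$ and recognizing the resulting walk-sum as the conditional expectation; the second identity is handled exactly as you do, by absorbing the deterministic factor $\fm(\nf(i))$ into the expectation. Your argument replaces this global path-sum expansion with a first-step analysis: conditioning on $B_1$, pulling the (now deterministic) leftmost increment $\fm(\nf(j)-\nf(i))$ out of the conditional expectation, and invoking the Markov property and time-homogeneity to reduce to $\bv_k^{(j)}$, which closes an induction matching the recursion $\bv_{k+1} = \wP\bv_k$. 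Both are sound; the paper's expansion has the side benefit of exhibiting the entries $(\wP^k)_{i,j}$ as explicit summaries of walks between nodes (a formula it reuses, e.g., for the forward equation in its appendix), whereas your induction is more self-contained probabilistically and makes the non-commutative bookkeeping (left-multiplication, ordering of factors) transparent at each step. You also correctly identified the tension between the stated initial condition $\bv_0^{(i)} = \fm(\nf(i))$ and the probabilistic identity, resolving it via $\bv_0 = \one_H$ and recovering the stated condition as $F\bv_0$; the paper acknowledges exactly this as a typo in a footnote to its appendix proof, so your reconciliation matches the authors' intent.
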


In the classical diffusion equation, $U_k$ captures the concentration of the random walkers after $k$ time steps over the nodes. In the hypo-elliptic diffusion equation, $\bv_k$ captures summaries of random walk histories after $k$ time steps over the nodes since $\fms(\bL_k)$ summarizes the whole trajectory $\bL_k=(L_0,\ldots,L_k)$ and not only the endpoint $L_k$. 
\paragraph{Node Features and Graph Features}
Theorem \ref{thm:non_abelian_laplacian} can then be used to compute features $\fmn(i) \in \tensalg$ for individual nodes as well as a feature $\fmg(\cG)$ for the entire graph. The former is given by $i$-th component $\bv_k^{(i)}$ of the solution $\bv_k=(\bv_k^{(i)})_{i=1,\ldots,n} \in H^n$ of eq.~\eqref{eq:g2t:non_abelian_diffusion_equation},
\begin{equation}
\label{eq:g2t:nonabelian_node_features}
    \fmn(i) =  \bv_k^{(i)} = \bbE[\fms( \bL_k) \given B_0 = i]= (F\wP^k \bv_0)^{(i)} \in H,
\end{equation}
since the random walk $B$ chooses the starting node $B_0=i$ uniformly at random. The latter can be computed by mean pooling the node features, which also has a probabilistic interpretation
\begin{align} \label{eq:g2t:mean_pooled_features}
     \fmg(\cG) = \frac{1}{n} \sum_{i=1}^n \bv_k^{(i)} = \bbE[\fms(\bL_k)] =  n^{-1} (\one_H^T F\wP^k \bv_0) \in H,
\end{align}
where $\one_H^T = (1_H, \, \ldots, \, 1_H) \in H^n$ is the all-ones vector in $H$ and $1_H$ denotes the unit in $H$.

\paragraph{Characterizing Graphs with Random Walks} 
The graph and node features obtained through the hypo-elliptic diffusion equation are highly descriptive: they characterize the history of the random walk process if one includes the time parameterization, as described in~\cite[App.~C]{toth2022capturing}.

\begin{theorem} 
\label{thm:characterizing_rw_informal}
    Suppose $\Psi$ is the graph feature map from eq.~\eqref{eq:g2t:mean_pooled_features} induced by the tensor exponential algebra lifting including time parameterization. Let $\cG$ and $\cG'$ be two labelled graphs, and $\bL_k = (L_0, \ldots, L_k)$ and $\bL'_k = (L'_0, \ldots, L'_k)$ be the $k$-step lifted random walk as defined in eq.~\eqref{eq:g2t:lifted_random_walk}. Then, $\Psi(\cG) = \Psi(\cG')$ if and only if the distributions of $\bL_k$ and $\bL'_k$ are equal. 
\end{theorem}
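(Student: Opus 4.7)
The plan is to reduce the theorem to the characteristic property of the sequence feature map $\fms$ induced by the time-augmented tensor exponential, which was already discussed in Section~\ref{sec:g2t:sequence features} and \cite[App.~C]{toth2022capturing}.

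The easy implication ($\Leftarrow$) is immediate: if $\bL_k$ and $\bL'_k$ have the same law, then by definition their expectations agree, so
\begin{align}
\Psi(\cG) = \bbE[\fms(\bL_k)] = \bbE[\fms(\bL'_k)] = \Psi(\cG'),
\end{align}
using the probabilistic interpretation of $\Psi$ from \eqref{eq:g2t:mean_pooled_features}.

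For the nontrivial direction ($\Rightarrow$), I would let $\mu_k$ and $\mu'_k$ denote the laws on $\Seq(\bbR^d)$ of $\bL_k$ and $\bL'_k$ respectively, noting that both are supported on finitely many sequences since $\cV$ is finite. Assuming $\Psi(\cG) = \Psi(\cG')$, we have $\bbE_{\bx \sim \mu_k}[\fms(\bx)] = \bbE_{\bx \sim \mu'_k}[\fms(\bx)]$. The goal is then to invoke the fact that the map $\mu \mapsto \bbE_{\bx \sim \mu}[\fms(\bx)]$ is injective on probability measures over $\Seq(\bbR^d)$ when $\fms$ is built from the tensor exponential augmented with a strictly increasing time coordinate, from which $\mu_k = \mu'_k$ follows. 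Combined with Markovianity and the uniform initial distribution, equality of the laws of $\bL_k$ and $\bL'_k$ is exactly the conclusion we want.

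The main obstacle is establishing the characteristic property itself. Here time parameterization is essential: without it, classical signatures identify sequences only up to tree-like equivalence (see Section~\ref{subsec:back:props}), so two distinct random-walk trajectories that cancel back and forth along an edge would collapse to the same feature and injectivity would fail at the level of measures. Adding the strictly increasing time coordinate prior to the tensor exponential lift makes the augmented path never tree-like, hence $\bx \mapsto \fms(\bx)$ becomes injective on $\Seq(\bbR^d)$. Characteristicness of the induced expectation map on probability measures then follows from the shuffle identity and Stone--Weierstrass via the argument reviewed in \cite[App.~C]{toth2022capturing} and in \cite{chevyrev2022signature}: linear functionals of $\fms$ form a point-separating algebra of bounded continuous functions on $\Seq(\bbR^d)$, which is enough to separate probability measures. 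I would spell out this reduction rather than reprove it, since the supports $\mu_k, \mu'_k$ are finite, which allows one to bypass any tightness issues and work directly on a finite subset of $\Seq(\bbR^d)$ where separation of points automatically upgrades to separation of (finitely supported) measures.
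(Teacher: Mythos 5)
Your proposal is correct and follows essentially the same route as the paper: the backward implication is immediate from $\Psi(\cG)=\bbE[\fms(\bL_k)]$, and the forward implication is reduced to the characteristic property of the time-parameterized expected sequence feature map (injectivity via the strictly increasing time coordinate, then shuffle identity plus Stone--Weierstrass), which is exactly the ingredient the paper invokes from its cited appendix and \cite{chevyrev2022signature}. Your remark that the finitely supported laws let you sidestep tightness is a harmless simplification, and the closing appeal to Markovianity is unnecessary, since equality of the laws of the full trajectories $\bL_k$ and $\bL'_k$ is already the desired conclusion.
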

It is instructive to contrast this result with the classical diffusion case; the latter only uses the marginal distribution of $L_k$ to capture the graph structure, which at least intuitively has much less expressive power. Indeed, in~\cite[App.~E]{toth2022capturing}, we show that for elementary graphs, this already leads to big differences in expressive power. 
Further, an analogous result holds for the node features, and we prove both results in~\cite[App.~E]{toth2022capturing}.
While we use the tensor exponential in this article, many other choices of $\fms$ are possible and result in graph and node features with such properties: under mild conditions, if the algebra lifting $\fm:\bbR^d \to H$ characterizes measures on $\bbR^d$, the resulting node feature map $\fmn$ characterizes the random walk, see~\cite{toth2021seq2tens}, which in turn implies the above results. Possible variations are discussed in Appendix~\ref{app:g2t:variations}.



\paragraph{General (Hypo-elliptic) Diffusions and Attention}
One can consider more general diffusion operators, such as the normalized Laplacian $\cK$ of a weighted graph.
We define its lifted operator $\wcK \in \tensalg^{n\times n} $ analogous to eq.~\eqref{eq:g2t:tensor_laplacian_matrix}, resulting in a generalization of~\ref{thm:non_abelian_laplacian} with $\wcK$ replacing $\wcL$. 
In the flavour of convolutional \texttt{GNN}s \cite{bronstein2021geometric}, we consider a weighted adjacency matrix $A \in \bbR^{n \times n}$
\begin{align}
    A_{i,j} = \left\{\begin{array}{cl}
        c_{i,j} & : i \sim j \\
        0 & :\text{otherwise},
    \end{array}\right.
\end{align}
for $c_{i,j} > 0$. 
The corresponding normalized Laplacian $\cK$ is given by $\cK = I - D^{-1}A$, where $D$ is a diagonal matrix with $D_{i,i} = \sum_{j \in \cN(i)} c_{i,j}$. 
A common way to learn the coefficients is by introducing parameter sharing across graphs by modelling them as $c_{i,j} = \exp(a(\nf(i), \nf(j)))$ using a local attention mechanism, $a: \bbR^d \times \bbR^d \rightarrow \bbR$ \cite{velickovic_graph_2018}. 
In our implementation, we use additive attention~\cite{bahdanau2015neural} given by $a(\nf(i), \nf(j)) = \texttt{LeakyRelu}_{0.2}(W_s \nf(i) + W_t \nf(j))$, where $W_s, W_t \in \bbR^{1 \times d}$ are linear transformations for the source and target nodes, but different attention mechanisms can also be used; e.g.~scaled dot-product attention \cite{vaswani2017attention}. Then, the corresponding transition matrix $P = D^{-1} A$ is  defined as $P_{ij} = \texttt{softmax}_{k \in \cN(i)}(a(f(i), f(k)))_j$, and the lifted transition matrix as
\begin{align}
  \wP = \left\{\begin{array}{cl}
    P_{i,j} \fm(\nf(j) - \nf(i)) & : i \sim j \\
    0 & :\text{otherwise}.
  \end{array}\right.
\end{align}
The statements of Theorem \ref{thm:non_abelian_laplacian} immediately generalize to this variation by replacing the expectation with respect to a non-uniform random walk. Hence, in this case the use of attention can be interpreted as learning the transition probabilities of a random walk on the graph.



\section{Efficient Algorithms for Deep Learning}\label{sec:g2t:algos}
The previous sections show that the node feature $\fmn(i)$ provides a structured description of the neighborhood of node $i$ and it is instructive to think of a linear functional $\langle \bell, \fmn(i) \rangle$ as answering a specific question about the node neighbourhood, see \cite[App.~E]{toth2022capturing} for examples.
The naive computation of $\langle \bell, \fmn(i) \rangle$ by first computing $\fmn(i)$ and taking the inner product is too expensive, especially when $\bell=(\bell_0,\ldots,\bell_M,0,\ldots)\in H$ for large $M$.
To address this we revisit two observations from \cite{toth2021seq2tens}: first, for a rank-1 functional $\bell \in H$, the computation of $\langle \bell, \fmn(i) \rangle$ is computationally cheap.
Second, restriction to small $M$ limits the expressive power but can be counteracted by composition: any choice of $d$ different functionals $\bell^1,\ldots,\bell^d \in H$ gives a label update 
$f(i) \mapsto (\langle \bell^j, \fmn(i) \rangle)_{j=1,\ldots,d} \in \bbR^d$ for the graph.
Repeating such a label update a few times with low-degree $M$ and rank-1 functionals turns out to be as powerful as computing one update for general functionals with arbitrary high $M$.  
The first observation should not be too surprising given the popularity of low-rank approximations; the second observation is reminiscent to constructing a high-degree polynomial by composing low-degree polynomials\footnote{For example, $1+x+x^2$ composed with $1+2x^2$ yields the degree 4 polynomial $1+(1+2x^2) + (1+2x^2)^2$.} or the width-vs-depth phenomenon in neural nets and we give more details below.

\paragraph{Computing Rank-$1$ Functionals}
First, we focus on a \emph{rank-$1$} linear functional $\bell \in H$ given as
\begin{align}\label{eq:g2t:low-rank functional}
\bell = (\bell_m)_{m \ge 0} \text{ with } \bell_m =u_{M-m+1} \otimes \cdots \otimes u_M \text{ and } \bell_m =0 \text{ for } m > M, 
\end{align}
where $u_m \in \bbR^d$ for $m = 1, \ldots, M$ for a fixed $M \ge 1$. 
Thm~\ref{thm:algo} shows for such $\bell$, the computation of $\langle \bell, \fmnz(i)\rangle$, where $\fmnz(i)$ is the node feature without basepoint, can be done \begin{enumerate*}[label=(\alph*)] \item efficiently by factoring this low-rank structure into the recursive computation, and \item simultaneously for all nodes $i \in \cV$ in parallel \end{enumerate*}. This can then be used to compute rank-$R$ functionals for $R>1$, and for $\langle \bell, \fmn(i)\rangle$; see App.~\ref{app:g2t:low_rank_functionals}, and for a pseudo-code implementation see Algorithm 1 in \cite[App.~F]{toth2022capturing}.

\begin{theorem}\label{thm:algo}
Let $\bell$ be as in \eqref{eq:g2t:low-rank functional} and define $f_{k,m} \in \bbR^{n}$ for $m=1, \dots, M$ as
\begin{align}
f_{1,m} =\frac{1}{m!}\left(P \odot C^{u_{M-m+1}} \odot \cdots \odot C^{u_M}\right) \cdot \one,
\end{align}
where $\one^T = (1, \ldots, 1) \in \bbR^n$ is the all-ones vector; and for $2 \leq k$ and  $1 \leq m \leq M$ recursively as 
  \begin{align} \label{eq:g2t:recursive_low_rank}
   f_{k,m} = P \cdot f_{k-1,m} + \sum_{r=1}^m \frac{1}{r!} \left(P \odot C^{u_{M-m+1}} \odot \cdots \odot C^{u_{M-m+r}}\right) \cdot f_{k-1,m-r},
  \end{align}
  where the matrix $C^u=(C^u_{i,j}) \in \bbR^{n \times n}$ is defined as
\begin{align}
C^u_{i,j}=  \begin{cases}
      \inner{u}{\nf(j) - \nf(i)} &: i \sim j,\\
      0  &: \text{otherwise}.
    \end{cases}
  \end{align} 
  Here $\odot$ denotes element-wise\footnote{For example
    $\begin{bmatrix}
      1& 2 \\ 
      3&4
    \end{bmatrix}
    \odot
    \begin{bmatrix}
      5&6\\7&8 
    \end{bmatrix} =
    \begin{bmatrix}
      5& 12\\
      21&32    \end{bmatrix}.
    $} multiplication, while $\cdot$ denotes matrix multiplication.
  Then, it holds for $i \in \cV$, random walk length $k \in \bbZ_+$, and tensor degree $m = 1, \ldots, M$, that 
  \begin{align}
    f_{k, m}(i) = \langle\bell_m, \fmnz_k(i)\rangle, 
  \end{align}
  where $\fmnz_k(i) = \bbE[\fm(\delta_1\bL_k) \cdots \fm(\delta_k\bL_k) \given B_0 = i]$.
\end{theorem}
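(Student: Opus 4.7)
The plan is to derive the recursion \eqref{eq:g2t:recursive_low_rank} directly, by combining three ingredients: the explicit expansion of products of tensor exponentials, the pairing rule for a rank-1 functional under the inner product \eqref{eq:g2t:tensor_inner_product}, and the Markov property of the lifted walk $\bL$. No separate induction is required; the identity drops out after one application of conditioning together with careful bookkeeping of tensor slots.

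First, I would expand $\exp_\otimes(v_1)\cdots\exp_\otimes(v_k)$ using \eqref{eq:g2t:algebra_multiplication}: its degree-$m$ component equals $\sum_{m_1+\cdots+m_k=m} \bigotimes_{j=1}^k v_j^{\otimes m_j}/m_j!$. Pairing this against $\bell_m = u_{M-m+1}\otimes\cdots\otimes u_M$ under \eqref{eq:g2t:tensor_inner_product} factorises into $k$ contiguous blocks, where the $j$-th block pairs $m_j$ consecutive $u$-slots with the vector $v_j$.

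Next, conditioning on $B_1$ and invoking time-homogeneity of the walk gives
\begin{align}
\fmnz_k(i) = \sum_{j\,:\,i\sim j} P_{i,j}\,\fm(\nf(j)-\nf(i))\cdot \fmnz_{k-1}(j),
\end{align}
where $\cdot$ denotes the algebra product in $H$. Combining this with the expansion above and writing $r := m_1$ for the tensor degree absorbed by the first factor, $\langle\bell_m,\fmnz_k(i)\rangle$ becomes a double sum over $j$ and $r \in \{0,\ldots,m\}$. The key observation is that the trailing $m-r$ slots $u_{M-m+r+1}\otimes\cdots\otimes u_M$ of $\bell_m$ \emph{are themselves} the rank-1 functional for degree $m-r$ in the same family, since the index shift $M-(m-r)+1 = M-m+r+1$ is consistent. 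Hence the inner pairing with $\fmnz_{k-1}(j)$ collapses to $\langle\bell_{m-r},\fmnz_{k-1}(j)\rangle = f_{k-1,m-r}(j)$, with the convention $f_{k-1,0}\equiv\one$.

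Finally, the scalar coefficient $P_{i,j}\prod_{p=1}^r \langle u_{M-m+p},\nf(j)-\nf(i)\rangle$ is exactly the $(i,j)$ entry of $P\odot C^{u_{M-m+1}}\odot\cdots\odot C^{u_{M-m+r}}$, so summing over $j$ becomes the matrix-vector product appearing in \eqref{eq:g2t:recursive_low_rank}; isolating $r=0$ yields the $P\cdot f_{k-1,m}$ term and matches the stated formula. The base case $k=1$ is handled analogously by pairing $\bell_m$ with the single factor $\fm(\nf(j)-\nf(i))$, producing the prefactor $1/m!$. The main obstacle is purely bookkeeping -- keeping track of which $u_p$ pair with the first-step increment and which pair with the remainder of the walk -- and once the index shift above is verified, the recursion follows from bilinearity of the tensor inner product and elementary matrix algebra.
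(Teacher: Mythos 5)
Your proposal is correct and takes essentially the same route as the paper's proof: condition on the first step of the walk (i.e.\ $\fmnz_k(i)=\sum_{i\sim j}\wP_{i,j}\cdot\fmnz_{k-1}(j)$), expand the level-$m$ component of $\fm(\nf(j)-\nf(i))\cdot\fmnz_{k-1}(j)$ over the degree $r$ absorbed by the first increment, and use the rank-$1$ block structure together with the index shift $M-(m-r)+1=M-m+r+1$ to identify the trailing slots of $\bell_m$ with $\bell_{m-r}$, so the coefficients assemble into the entries of $P\odot C^{u_{M-m+1}}\odot\cdots\odot C^{u_{M-m+r}}$. The only caveat is your claim that no induction is needed: the step $\langle\bell_{m-r},\fmnz_{k-1}(j)\rangle=f_{k-1,m-r}(j)$ is precisely the assertion of the theorem at walk length $k-1$, so your argument is an induction on $k$ in disguise, which is exactly how the paper organizes it.
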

 Overall, Eq.~\eqref{eq:g2t:recursive_low_rank} computes $f_{k, m}(i)$ for all $i \in \cV$, $k = 1, \ldots, K$, $m = 1, \ldots, M$ in $O(K \cdot M^2 \cdot N_E + M \cdot N_E \cdot d)$ operations, where $N_E \in \bbN$ denotes the number of edges; see App.~\ref{app:g2t:low_rank_functionals}.
  In particular, one does not need to compute $\fmn(i) \in H$ directly or store large tensors. 


\paragraph{Graph Labelling Layers} 
Fixing $d$ rank 1-functionals $\bell^1,\ldots, \bell^d \in H$ induces a label update $f(i) \mapsto (\langle \bell^i, \fmn(i) \rangle)_{i=1,\ldots,d}\in \bbR^d$.
Theorem \ref{thm:algo} allows us to compute this update in parallel for all nodes in $\cV$. 
Such a label update is similar to hidden layer in a neural network (\texttt{NN}) and we can stack such updates, see Figure~\ref{fig:g2t:architecture} in Appendix~\ref{app:g2t:model_detail}. As in \texttt{NN}, the $d$ functionals in each "graph labelling layer" are optimized by gradient descent.
Finally, note that a rank-$R$ functional is the sum of $R$ rank-1 functionals so we can immediately carry out the same construction with rank-$R$ functionals by adding a linear mixing layer.
To sum up, a graph labelling layer is determined by the random walk length $k$, the maximal tensor degree-$M$, maximal tensor rank-$R$ and the functionals themselves are then found by optimization. 

Using a single layer of low-rank functionals limits the expressiveness but stacking layers allows in practice to approximate general, high-degree $M$ functionals.
Some theoretical results can be found in \cite{toth2021seq2tens}; however, here we simply appeal to the analogy with \texttt{NN} where stacking simple transformations provides a flexible functional class with good inductive bias. 

\section{Experiments} \label{sec:g2t::experiments}
We implemented the above approach and call the resulting model \textbf{G}raph\textbf{2T}ens \textbf{N}etworks since it represents the neighbourhood of a node as a sequence of tensors, which is further pushed through a low-tensor-rank constrained linear mapping, similarly to how neural networks linearly transform their inputs pre-activation. A conceptual difference is that in our case the non-linearity is applied first and the projection secondly, albeit the computation is coupled between these steps. We provide further experiments and ablation studies of our models in App.~\ref{app:g2t:exp_detail}.

\paragraph{Experimental Setup} The aim of our main experiment is to test the following key properties: \begin{enumerate*}[label=(\arabic*)] \item ability to capture long-range interactions between nodes in a graph, \item robustness to pooling operations, hence making it less susceptible to the ``over-squashing'' phenomenon \cite{alon2021on} \end{enumerate*}. 
We do this by following the experiments in \cite{wu2021representing}. In particular, we show that our model is competitive with previous approaches for retaining long-range context in graph-level learning tasks but without computing all pairwise interactions between nodes, thus keeping the influence distribution localized \cite{xu2018representation}. We further give a detailed ablation study to show the robustness of our model to various architectural choices in App.~\ref{app:g2t:ablation}.
As a second experiment, we follow the previous applications of diffusion approaches to graphs that have mostly considered inductive learning tasks, e.g.~on the citation datasets \cite{Chamberlain2021GRANDGN, thorpe_grand_2022, beltrami}. 
Our experimentation on these datasets are available in App.~\ref{app:g2t:further_exp}, where the model performs on par with short-range \texttt{GNN} models, but does not seem to benefit from added long-range information a-priori. However, when labels are dropped in a $k$-hop sanitized way as in \cite{rampavsek2021hierarchical}, the performance decrease is less pronounced.

\paragraph{Datasets} We use two biological graph classification datasets (NCI1 and NCI109), that contain around ${\sim} 4000$ biochemical compounds represented as graphs with ${\sim} 30$ nodes on average~\cite{wale_comparison_2008, PubChem}. The task is to predict whether a compound contains anti-lung-cancer activity. The dataset is split in a ratio of $80\%-10\%-10\%$ for training, validation and testing. Previous work \cite{alon2021on} has found that \texttt{GNN}s that only summarize local structural information can be outperformed by models accounting for global contextual relationships through the use of \textit{fully-adjacent} layers. This was further improved on by \cite{wu2021representing}, where a local neighbourhood encoder consisting of a \texttt{GNN} stack was upgraded with a Transformer submodule \cite{vaswani2017attention} for learning global interactions.

\paragraph{Model Details} We build a \texttt{GNN} architecture primarily motivated by the \texttt{GraphTrans (small)} model from \cite{wu2021representing}, and only fine-tune the pre- and postprocessing layers(s), random walk length, functional degree and optimization settings. In detail, a preprocessing \texttt{MLP} layer with $128$ hidden units is followed by a stack of $4$ \GTN layers each with \texttt{RW} length-$5$, max rank-$128$, max tensor degree-$2$, all equipped with \texttt{JK}-connections \cite{xu2018representation} and a max aggregator. Afterwards, the node features are combined into a graph-level representation using gated attention pooling \cite{li2016gated}. The pooled features are transformed using a final \texttt{MLP} layer with $256$ hidden units, and then fed into a softmax classification layer. The pre- and postprocessing \texttt{ML}P layers employ skip-connections \cite{he2016deep}. Both \texttt{MLP} and \GTN layers are followed by layer normalization \cite{ba2016layer}, where \texttt{GT2N} layers normalize their rank-$1$ functionals independently across different tensor degrees, which corresponds to a particular realization of group normalization \cite{wu2018group}. We randomly drop $10\%$ of the features for all hidden layers during training \cite{srivastava2014dropout}. The attentional variant, \GTAN also randomly drops $10\%$ of its edges and uses $8$ attention heads \cite{velickovic_graph_2018}. Training is performed by minimizing the categorical cross-entropy loss with an $\ell_2$ regularization penalty of $10^{-4}$. For optimization, Adam \cite{kingma2014adam} is used with a batch size of $128$ and an inital learning rate of $10^{-3}$ that is decayed via a cosine annealing schedule \cite{loshchilov2017sgdr} over $200$ epochs. Further intuition about the model and architectural choices are available in App.~\ref{app:g2t:model_detail}.

\paragraph{Baselines} We compare against \begin{enumerate*}[label=(\arabic*)] \item the baseline models reported in \cite{wu2021representing}, \item variations of \texttt{GraphTrans}, \item other recently proposed hierarchical approaches for long-range graph tasks \cite{rampavsek2021hierarchical}. \end{enumerate*} Groups of models in Table \ref{table:g2t:nci_benchmark} are separated by dashed lines if they were reported in separate papers, and the first citation after the name is where the result first appeared. The number of \texttt{GNN} layers in \texttt{HGNet} are not discussed by \cite{rampavsek2021hierarchical}, and we report it as implied by their code. We organize the models into three groups divided by solid lines: \begin{enumerate*}[label=(\alph*)] \item \label{baseline:local} baselines that only apply neighbourhood aggregations, and hierarchical or global pooling schemes,  \item \label{baseline:pairwise} baselines that first employ a local neighbourhood encoder, and afterwards fully densify the graph in some way so all nodes interact with each other \emph{directly}, \item our models that that belong to \ref{baseline:local} \end{enumerate*}.

\begin{table}[t]
  \vspace{-5pt}
  \caption{Comparison of classification accuracies on NCI biological datasets, where we report mean and standard deviation over $10$ random seeds for our models.}
  \label{table:g2t:nci_benchmark}
\resizebox{\textwidth}{!}{
  \centering
  \begin{sc}
  \begin{tabular}{lcccc}
    \toprule
    \textbf{Model} & \textbf{GNN Type} & \textbf{GNN Count} & \textbf{NCI1 (\%)} & \textbf{NCI109 (\%)} \\
    \midrule
    \texttt{Set2Set} \cite{lee2019self, vinyals2016order} & GCN  & $3$ & $68.6 \pm 1.9$ & $69.8 \pm 1.2$ \\
    \texttt{SortPool} \cite{lee2019self,zhang2018end} & GCN & $3$ & $73.8 \pm 1.0$ & $74.0 \pm 1.2$ \\
    \texttt{SAGPool\textsubscript{h}} \cite{lee2019self} & GCN & $3$ & $67.5 \pm 1.1$ & $67.9 \pm 1.4$  \\
    \texttt{SAGPool\textsubscript{g}} \cite{lee2019self} & GCN & $3$ & $74.2 \pm 1.2$ & $74.1 \pm 0.8$ \\
    \middashrule
    \texttt{GIN} \cite{errica2019fair, xu_how_2019} & GIN & $8$ & $80.0 \pm 1.4$ & - \\
    \middashrule
    \texttt{GCN + VN} \cite{ying_hierarchical_2018,gilmer2017neural} & GCN & $2$ & $71.5$ & -\\
    \texttt{HGNet-EdgePool} \cite{ying_hierarchical_2018, schlichtkrull2018modeling} & GCN+RGCN & $3+2$ & $77.1$ & - \\
    \texttt{HGNet-Louvain} \cite{ying_hierarchical_2018, schlichtkrull2018modeling} & GCN+RGCN & $3+2$ & $75.1$ & - \\
    \midrule
    \texttt{GIN + FA} \cite{alon2021on,xu_how_2019} & GIN & $8$ & $81.5 \pm 1.2$ & - \\
    \middashrule 
    \texttt{GraphTrans (small)} \cite{wu2021representing,vaswani2017attention} & GCN & $3$ & $81.3 \pm 1.9$ & $79.2 \pm 2.2$ \\
    \texttt{GraphTrans (large)} \cite{wu2021representing,vaswani2017attention} & GCN & $4$ & $\color{gray} \mathbf{82.6 \pm 1.2}$ & $\color{gray} \mathbf{82.3 \pm 2.6}$ \\
    \midrule
    \textbf{\GTAN} (ours) & \GTAN & $4$ & ${\mathbf{81.9 \pm 1.2}}$ & $78.0 \pm 2.3$ \\
    \textbf{\GTN} (ours) & \GTN & $4$ & $80.7 \pm 2.5$ & ${\mathbf{78.9 \pm 2.5}}$ \\
    \bottomrule
  \end{tabular}
  \end{sc}}
\end{table}

\paragraph{Results} In Table \ref{table:g2t:nci_benchmark}, we report the mean and standard deviation of classification accuracy computed over 10 different seeds. Overall, both our models improve over all baselines in group \ref{baseline:local} on both datasets, maximally by $1.9\%$ on NCI1 and by $4.8\%$ on NCI109. In group \ref{baseline:pairwise}, \GTAN is solely outperformed by \texttt{GraphTrans (large)} on NCI1 by only $0.7\%$. Interestingly, the attention-free variation, \GTN, performs better on NCI109, where it performs very slightly worse than \texttt{GraphTrans (small)}.

\paragraph{Discussion} The previous experiments demonstrate that our approach performs very favourably on long-range reasoning tasks compared to \texttt{GNN}-based alternatives without global pairwise node interactions. Several of the works we compare against have focused on extending \texttt{GNN}s to larger neighbourhoods by specifically designed graph coarsening and pooling operations, and we emphasize two important points: \begin{enumerate*}[label=(\arabic*)] \item our approach can efficiently capture large neighbourhoods without any need for coarsening, \item it already performs well with simple mean-pooling as justified by Theorem \ref{thm:characterizing_rw_informal} and experimentally supported by the ablation studies in~\ref{app:g2t:ablation}. \end{enumerate*}  Although the Transformer-based \texttt{GraphTrans} slightly outperforms our model potentially due to its ability to learn global interactions, it is not entirely clear how much of the global graph structure it is able to infer from interactions of short-range neighbourhood summaries. 
Finally, Transformer models can be bottlenecked by their quadratic complexity in nodes, while our approach only scales with edges, and hence, it can be more favourable for large sparse graphs in terms of computations.

\section{Conclusion} 
\label{sec:g2t:conclusion}
Inspired by classical results from analysis \cite{gaveau_principe_1977}, we introduce the hypo-elliptic graph Laplacian. 
This yields a diffusion equation and also generalizes its classical probabilistic interpretation via random walks but now 
taking history into account. 
In addition to several attractive theoretical guarantees, we provide scalable algorithms. 
Our experiments show that this can lead to largely improved baselines for long-range reasoning tasks. 
A promising future research theme is to develop improvements for the classical Laplacian in this hypo-elliptic context; 
including lazy random walks~\cite{xhonneux_continuous_2020}; nonlinear diffusions~\cite{Chamberlain2021GRANDGN}; and source/sink terms~\cite{thorpe_grand_2022}.
Another theme could be to extend the geometric study \cite{topping22} of over-squashing to this hypo-elliptic point of view which is naturally tied to sub-Riemannian geometry \cite{strichartz1986sub}. 
A limitation in our theoretical results is that for the iterations of low-rank approximations only partial results exist and expanding this is an interesting (algebra-heavy) topic.

\begin{subappendices}

\section{Further Details on Hypo-elliptic Graph Diffusion} \label{app:g2t:details_diffusion}

In Section \ref{sec:g2t:diffusion}, we introduced tensor analogues of classical matrix operators, which were then used to define hypo-elliptic graph diffusion. These tensor-valued matrices allow us to efficiently represent random walk histories on a graph, which can be manipulated using matrix operations. In this section, we discuss further details on the tensor adjacency matrix, provide a proof of Theorem~\ref{thm:non_abelian_laplacian}, and introduce a variation of hypo-elliptic graph diffusion.

As in Section~\ref{sec:g2t:diffusion}, we fix a labelled graph $\cG = (\cV, \cE, \nf)$, where the nodes are denoted by integers, $\cV = \{1, \ldots, n\}$, and $\nf: \cV \rightarrow \bbR^d$ denotes the continuous node attributes. 

\paragraph{Powers of the Tensor Adjacency Matrix} Recall that the powers of the classical adjacency matrix $A$ counts the number of walks between two nodes on the graph. In particular, given $k \in \bbN$, and two nodes $i,j \in \cV$, the result follows as a consequence of the sparsity pattern of $A$,
\begin{align}
\label{eq:g2t:classical_adj_powers}
    (A^k)_{i,j} = \sum_{i_1, \ldots, i_{k-1}=1}^n A_{i,i_1} \cdot A_{i_1, i_2} \cdots A_{i_{k-1},j} = \sum_{i=i_0 \sim \ldots \sim i_k = j} 1.
\end{align}
Note that the product $A_{i,i_1} \cdots A_{i_{k-1}, j} = 0$ unless each pair of consecutive indices are adjacent in the graph, namely $i_{q-1} \sim i_q$ for all $q = 1, \ldots, k$. Applying the same procedure to the tensor adjacency matrix from eq.~\eqref{eq:g2t:exponential_adjacency_matrix}, we obtain a summary of all walks between two nodes, rather than just the number of walks. In particular,

\begin{align}
     (\wA^k)_{i,j} & =\sum_{i=i_0\sim\cdots\sim i_k=j}\wA_{i,i_1}\cdot\wA_{i_1,i_2}\cdots \wA_{i_{k-1},j}  \\
    & = \sum_{i=i_0\sim\cdots\sim i_k=j}\fm(\nf(i_1)-\nf(i)) \cdot \fm(\nf(i_2)-\nf(i_1))\cdots \fm(\nf(j)-\nf(i_{k-1}))\\
     &= \sum_{i=i_0\sim\cdots\sim i_k=j}\fm(\delta_1 \bx) \cdots \fm(\delta_k \bx),
\end{align}
where $\bx = (f(i_0), \ldots, f(i_k))$ denotes the lifted sequence in the vector space $\vs$. Note that this corresponds to the sequence feature map \emph{without} the initial point $\delta_0 \bx$.

\paragraph{Powers of the Tensor Transition Matrix} We now consider powers of the classical transition matrix $P = I - \cL$, where $\cL = I - D^{-1} A$ is the normalized graph Laplacian. The entries of $P^k$ provide length $k$ random walk probabilities; in particular, we have
\begin{align}
    (P^k)_{i,j} = \sum_{i=i_0 \sim \ldots \sim i_k = j} \frac{1}{\deg(i_0) \deg(i_1) \ldots \deg(i_{k-1})} = \bbP[B_k = j | B_0 = i].
\end{align}

The powers of the tensor transition matrix $\wP =I - \wcL$, where $\wcL = I - D^{-1}\wA$ is the hypo-elliptic Laplacian, will be the conditional expectation of the sequence feature map of the random walk process. In particular, 
\begin{align}\label{eq:g2t:powers_tensor_transition}
    (\wP)^k_{i,j} & = \sum_{i=i_0\sim\cdots\sim i_k=j} \fm(\delta_1 \bx) \cdots \fm(\delta_k \bx) \bbP[B_1 = i_1, B_2 = i_2, \ldots, B_k = i_k | B_0 = i]. \\
\end{align}


\paragraph{Proof of Hypo-elliptic Diffusion Theorem} Recall from Section~\ref{sec:g2t:diffusion} that the \emph{hypo-elliptic graph diffusion equation}\footnote{There is a minor typo in~\ref{sec:g2t:diffusion} which does not alter our results; the initial condition should be $\bv_0 = \one_H$ as given here. The statement of~\ref{thm:non_abelian_laplacian} and the resulting node/graph features are adjusted accordingly.} is defined by
\begin{align}
    \bv_k - \bv_{k-1} = -\wcL \bv_{k-1}, \quad \bv_0 = \one_H,
\end{align}
where $\one^T_H = (1_H, \ldots, 1_H) \in H^n$ is the all-ones vector in $H$. Using the above computations for powers of the tensor transition matrix, we can prove Thm.~\ref{thm:non_abelian_laplacian}, which is restated here.
\begin{theorem}
    Let $k \in \bbN$, $\bL_k = (L_0, \ldots, L_k)$ be the lifted random walk from~\eqref{eq:g2t:lifted_random_walk}, and $\wP = I - \wcL$ be the \emph{tensor adjacency matrix}. The solution to the hypo-elliptic graph diffusion eq.~\eqref{eq:g2t:non_abelian_diffusion_equation} is
    \begin{align}
        \bv_k = \left( \bbE[\fm(\delta_1\bL_k) \cdots \fm(\delta_k \bL_k) | B_0 = i]\right)_{i=1}^n = \wP^k \one_H.
    \end{align}
    Furthermore, if $F \in H^{n \times n}$ is the diagonal matrix with $F_{i,i} = \fm(f(i))$, then
    \begin{align}
        F\bv_k = \left(\bbE[\fms(\bL_k)| B_0 = i]\right)_{i=1}^n.
    \end{align}
\end{theorem}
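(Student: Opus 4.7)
}

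The plan is to reduce everything to the explicit formula for powers of the tensor transition matrix in \eqref{eq:g2t:powers_tensor_transition} and then do careful bookkeeping with the non-commutative multiplication in $H$. First I would rewrite the hypo-elliptic diffusion equation \eqref{eq:g2t:non_abelian_diffusion_equation} as the one-step recursion $\bv_k = (I - \wcL) \bv_{k-1} = \wP \bv_{k-1}$, acting entrywise in $H$. A trivial induction on $k$ together with the initial condition $\bv_0 = \one_H$ then gives the closed form $\bv_k = \wP^k \one_H$, so the content of the first claim reduces to identifying each component of $\wP^k \one_H$ with the required conditional expectation.

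Next, I would invoke \eqref{eq:g2t:powers_tensor_transition}, which enumerates the entries of $\wP^k$ as a weighted sum over length-$k$ walks. Computing the $i$-th component of $\wP^k \one_H$ then gives
\begin{align}
(\wP^k \one_H)^{(i)} = \sum_{j=1}^n \sum_{i = i_0 \sim \cdots \sim i_k = j} \fm(\delta_1 \bx) \cdots \fm(\delta_k \bx)\, \bbP[B_1 = i_1, \ldots, B_k = i_k \mid B_0 = i],
\end{align}
where $\bx = (f(i_0), \ldots, f(i_k))$ and $\delta_r \bx = f(i_r) - f(i_{r-1})$. Summing over the endpoint $j$ simply removes the restriction $i_k = j$ and leaves a sum over all length-$k$ walks from $i$, which by definition of $\bL_k = (f(B_0), \ldots, f(B_k))$ is exactly the conditional expectation $\bbE[\fm(\delta_1 \bL_k) \cdots \fm(\delta_k \bL_k) \mid B_0 = i]$. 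Here I would emphasize that the multiplications on both sides are performed in the non-commutative algebra $H$ in the \emph{same} left-to-right order dictated by time, which is precisely why the tensor adjacency matrix was set up with entries $\fm(f(j) - f(i))$ at position $(i,j)$ rather than its reverse.

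For the second claim, I would left-multiply by the diagonal matrix $F$ so that, componentwise,
\begin{align}
(F \bv_k)^{(i)} = \fm(f(i)) \cdot \bbE[\fm(\delta_1 \bL_k) \cdots \fm(\delta_k \bL_k) \mid B_0 = i] = \bbE[\fm(L_0) \cdot \fm(\delta_1 \bL_k) \cdots \fm(\delta_k \bL_k) \mid B_0 = i],
\end{align}
where the last step uses that $L_0 = f(i)$ is deterministic conditional on $B_0 = i$ so it can be pulled inside the expectation on the correct side. This expression is by convention $\bbE[\fms(\bL_k) \mid B_0 = i]$ once one adopts the prepended-basepoint convention for the sequence feature map on walks that do not start at $\mathbf{0}$, as discussed in Section~\ref{sec:g2t:sequence feature map} and \cite[App.~C]{toth2022capturing}.

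The main obstacle I anticipate is not analytic but notational: keeping track of the convention for $\fms$ on sequences that do not start at the origin, and ensuring that the order of multiplication in $H$ aligns consistently across \eqref{eq:g2t:exponential_adjacency_matrix}, \eqref{eq:g2t:powers_tensor_adjacency} and \eqref{eq:g2t:powers_tensor_transition}. A clean way to discharge this bookkeeping is to define $\fms$ on arbitrary finite sequences in $\bbR^d$ by prepending $\mathbf{0}$, so that for a walk starting at $f(i)$ the first increment is $f(i)$ itself; with this convention the factor $\fm(f(i))$ produced by $F$ is precisely the missing leftmost factor, and both identities follow in one line from the computation above.
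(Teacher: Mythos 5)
Your proposal is correct and follows essentially the same route as the paper's own proof: unroll the recursion to get $\bv_k = \wP^k \one_H$, expand the entries of $\wP^k$ via the walk-sum formula \eqref{eq:g2t:powers_tensor_transition} and sum over endpoints to recover the conditional expectation, then left-multiply by $F$ and absorb $\fm(f(i))$ as the missing $\delta_0$-factor under the zero-started convention for $\fms$. The only remark worth making is that your bookkeeping concern about the basepoint convention is exactly how the paper handles it (using $\delta_0\bL_k = L_0 = f(i)$ given $B_0 = i$), so no additional argument is needed.
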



\begin{proof}[Proof of Theorem~\ref{thm:non_abelian_laplacian}]
We begin by proving the first equation. From the definition of hypo-elliptic diffusion, it is straightforward to see that $$\bv_k= (I-\wcL)\bv_{k-1} = \wP\bv_{k-1}  = \wP^k\bv_0 = \wP^k\one_H.$$

We prove coordinate-wise that $\bv_k = \left( \bbE[\fm(\delta_1\bL_k) \cdots \fm(\delta_k \bL_k) | B_0 = i]\right)_{i=1}^n $. Indeed, using the above and eq.~\eqref{eq:g2t:powers_tensor_transition}, the $i$-th coordinate of $\bv_k$ is 
\begin{align}
    \bv_k^{(i)} = (\wP^k\one_H)^{(i)} = \sum_{j=1}^n (\wP^k)_{i,j}  &= \sum_{i=i_0 \sim \ldots \sim i_k} \fm(\delta_1 \bx) \cdots \fm(\delta_k \bx) \bbP[B_1 = i_1,  \ldots, B_k = i_k | B_0 = i] \\
    & = \bbE[\fm(\delta_1\bL_k) \cdots \fm(\delta_k \bL_k) | B_0 = i],
\end{align}
where $\bx = (f(i_0), \ldots, f(i_k))$ is the lifted sequence corresponding to a walk $i_0 \sim \ldots \sim i_k$ on the graph. 
Next, we will also prove the second equation coordinate-wise. Using the above result, we have
\begin{align}
    (F\bv_k)^{(i)} = \fm(f(i)) \bbE[\fm(\delta_1\bL_k) \cdots \fm(\delta_k \bL_k) | B_0 = i] = \bbE[\fms(\bL_k) | B_0 = i],
\end{align}
where we use the fact that $\delta_0\bL_k = L_0 = f(i)$ when we condition $B_0 = i$. 
\end{proof}

\paragraph{Forward Hypo-elliptic Diffusion} 
In the classical setting, we can consider both the forward and backward Kolmogorov equations. Throughout the main text and in the appendix so far, we have been considering the \emph{backward} variants. In this section, we formulate the \emph{forward} analogue of hypo-elliptic diffusion.
In the classical graph setting, this corresponds to the following forward equation for $U_k \in \R^{n \times d}$ given by
\begin{align}\label{eq:g2t:heat equation fwd}
    U^T_k - U^T_{k-1} = - U_{k-1}^T\cL , \quad U^{(i)}_0 = \nf(i)
\end{align}
where the initial condition $U_0 \in \R^{n \times d}$ is specified by the node attributes. Note that because $P = D^{-1} A$ is right stochastic\footnote{Each column sums to one and hence multiplying on the left by a row vector conserves its mass.}, this variation of the graph diffusion equation conserves mass in each coordinate of the node attributes at every time step.

 
Similarly we can formulate the forward hypo-elliptic graph diffusion equation for $\bv_k \in \tensalgps^n$ as
\begin{equation}
\label{eq:g2t:non_abelian_forward_equation}
    \bv^T_k - \bv^T_{k-1} = - \bv^T_{k-1}\wcL , \quad \bv_0 = n^{-1}\one_H.
\end{equation}
The solution of eq.~\ref{eq:g2t:non_abelian_forward_equation} is given below.  

\begin{theorem}
\label{thm:forward_diffusion}
    Let $k \in \bbN$, $\bL_k = (L_0, \ldots, L_k)$ be the lifted random walk from~\eqref{eq:g2t:lifted_random_walk}, and $\wP = I - \wcL$ be the \emph{tensor adjacency matrix}. The solution to the forward hypo-elliptic graph diffusion \eqref{eq:g2t:non_abelian_forward_equation} is
    \begin{align}
        \bv^T_k = \left( \bbP[B_k = i]\bbE[\fm(\delta_1\bL_k) \cdots \fm(\delta_k \bL_k) | B_k = i]\right)_{i=1}^n = \frac{1}{n}\one^T_H\wP^k.
    \end{align}
    Furthermore, if $F \in H^{n \times n}$ is the diagonal matrix with $F_{i,i} = \fm(f(i))$, then
    \begin{align}
        \frac{1}{n}\one^T_H F \wP^k= \left(\bbP[B_k = i] \bbE[\fms(\bL_k)| B_k = i]\right)_{i=1}^n.
    \end{align}
\end{theorem}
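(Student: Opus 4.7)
The plan is to mirror the proof of Theorem~\ref{thm:non_abelian_laplacian}, with the forward equation replacing the backward one. The first step is purely algebraic: iterate the recursion $\bv^T_k = \bv^T_{k-1}(I - \wcL) = \bv^T_{k-1}\wP$ starting from the initial condition $\bv_0 = n^{-1}\one_H$, which immediately yields $\bv^T_k = \frac{1}{n}\one^T_H \wP^k$. So it only remains to identify this algebraic expression with the stated probabilistic quantity coordinate-wise.

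For the probabilistic identification I would compute the $j$-th coordinate of $\frac{1}{n}\one^T_H \wP^k$ by summing columns: using eq.~\eqref{eq:g2t:powers_tensor_transition},
\begin{align}
    \left(\tfrac{1}{n}\one^T_H \wP^k\right)^{(j)} = \tfrac{1}{n}\sum_{i=1}^n (\wP^k)_{i,j} = \sum_{i_0\sim\cdots\sim i_k = j} \fm(\delta_1\bx)\cdots\fm(\delta_k\bx)\,\tfrac{1}{n}\bbP[B_1 = i_1,\ldots,B_k = i_k \given B_0 = i_0].
\end{align}
Since $B_0$ is uniform, $\tfrac{1}{n} = \bbP[B_0 = i_0]$, so the conditional factor combines with $\tfrac{1}{n}$ into the joint probability $\bbP[B_0 = i_0,\ldots,B_k = i_k]$. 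Marginalizing by conditioning instead on $B_k = j$ via the tower property then gives the stated form $\bbP[B_k = j]\,\bbE[\fm(\delta_1\bL_k)\cdots\fm(\delta_k\bL_k) \given B_k = j]$. This Bayes/marginalization swap is the only non-trivial step, and it is the mirror image of the conditioning argument used in the backward case.

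For the second equation I would note that since $F$ is diagonal, $(\one^T_H F)^{(i)} = \fm(f(i))$, so the $j$-th coordinate of $\tfrac{1}{n}\one^T_H F \wP^k$ is obtained by prepending $\fm(f(i_0))$ to each walk summary in the calculation above. As in the proof of Theorem~\ref{thm:non_abelian_laplacian}, this $\fm(f(i_0))$ factor completes the increment sequence into the full sequence feature $\fms(\bL_k)$, yielding $\bbP[B_k = j]\,\bbE[\fms(\bL_k) \given B_k = j]$ by exactly the same marginalization argument.

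The main (minor) obstacle is keeping track of which end of the walk is conditioned on and correctly combining the prior $\tfrac{1}{n}$ with the forward transition kernel to recover the joint distribution — everything else is a direct transcription of the backward proof. No new inequalities, convergence, or combinatorial identities are needed.
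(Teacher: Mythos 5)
Your proposal is correct and follows essentially the same route as the paper's proof: iterate the forward recursion to get $\bv_k^T = \frac{1}{n}\one_H^T\wP^k$, then coordinate-wise combine the uniform prior $\frac{1}{n}=\bbP[B_0=i_0]$ with the walk probabilities from the powers of $\wP$ into a joint law and re-condition on the endpoint $B_k$, handling the $F$ factor by prepending $\fm(f(i_0))$ to complete $\fms(\bL_k)$. No gaps.
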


\begin{proof}
The proof proceeds in the same way as the backward equation. By definition of the forward hypo-elliptic diffusion, we have
\begin{align}
    \bv_k^T = \bv_0^T \wP^k = \frac{1}{n} \one^T_H \wP^k.
\end{align}

Now, recall that the initial point of the random walk process is chosen uniformly over all nodes; in other words, $\bbP[B_0 = i] = \frac{1}{n}$. Then, we show $\bv^T_k = \left( \bbE[\fm(\delta_1\bL_k) \cdots \fm(\delta_k \bL_k) | B_k = i]\right)_{i=1}^n$ coordinate-wise as
\begin{align}
    \bv_k^{(i)} &= \frac{1}{n} \sum_{j=1}^n (\wP^k)_{j,i}\\
    &= \sum_{i_0 \sim \ldots \sim i_k = i} \fm(\delta_1 \bx) \cdots \fm(\delta_k \bx) \bbP[B_1 = i_1,  \ldots, B_k = i | B_0 = i_0] \bbP[B_0 = i_0] \\
    & = \sum_{i_0 \sim \ldots \sim i_k = i} \fm(\delta_1 \bx) \cdots \fm(\delta_k \bx) \bbP[B_0 = j,  \ldots, B_{k-1} = i_{k-1} | B_k = i] \bbP[B_k = i]\\
    & =  \bbP[B_k = i] \bbE[\fm(\delta_1\bL_k) \cdots \fm(\delta_k \bL_k) | B_k = i],
\end{align}
where $\bx = (f(i_0), \ldots, f(i_k))$ is the lifted sequence corresponding to a walk $i_0 \sim \ldots \sim i_k$ on the graph. We will now prove the second equation. Note that we have
\begin{align}
    (F\wP^k)_{i,j} = \sum_{i=i_0 \sim \ldots \sim i_k = j} \fms(\bx) \bbP[B_1 = i_1, \ldots, B_k = i_k \given B_0 = i].
\end{align}
Then, following the same reasoning as the first equation, we obtain the desired result.
\end{proof}


\paragraph{Weighted Graphs} In the prior discussion, we considered simple random walks in which the walk chooses one of the nodes  adjacent to the current node uniformly at random. We can instead consider more general random walks on graphs, which can be described using a weighted adjacency matrix,
\begin{align}
    A_{i,j} = \left\{ \begin{array}{cl}
        c_{i,j} & : i \sim j \\
        0 &: \text{otherwise}. 
    \end{array}\right. 
\end{align}
In this case, we define the diagonal weighted degree matrix to be $D_{i,i} = \sum_{i \sim j} A_{i,j}$. We now define a weighted random walk $(B_k)_{k \ge 0}$ on the vertices $\cV$ where the transition matrix is given by
\begin{align}
    P_{i,j} \coloneqq D^{-1}A =\bbP(B_k = j \given B_{k-1}=i) = \left\{ \begin{array}{cl}\frac{c_{i,j}}{\sum_{i \sim j'} c_{i,j'}} &: i\sim j\\ 0 &: \text{otherwise}.\end{array}\right.
\end{align}
With these weighted graphs, the powers of the adjacency and transition matrix can be interpreted in a similar manner as the standard case. Powers of the adjacency matrix provide total weights over walks, while the powers of the transition matrix provide the probability of a weighted walk going between two nodes after a specified number of steps. In particular, 
\begin{align}
    (A^k)_{i,j} &= \sum_{i \sim i_1 \sim \ldots \sim i_{k-1} \sim j} c_{i,i_1} \ldots c_{i_{k-1}, j}\\
    (P^k)_{i,j} &= \bbP[B_k = j \given B_0 = i].
\end{align}
The tensor adjacency and tensor transition matrices are defined in the same manner as
\begin{align}
    \wA_{i,j} &\coloneqq \left\{\begin{array}{cl} c_{i,j}\fm(f(j) - f(i)) & : i \sim j \\ 0 & : \text{otherwise} \end{array}\right. \\
    \wP_{i,j} &\coloneqq D^{-1} \wA = \left\{\begin{array}{cl} P_{i,j}\fm(f(j) - f(i)) & : i \sim j \\ 0 & : \text{otherwise} \end{array}\right.
\end{align}
Note that powers of this weighted tensor transition matrix are exactly the same as the unweighted case from eq.~\eqref{eq:g2t:powers_tensor_transition}, and thus the weighted version of both Theorems~\ref{thm:characterizing_rw_informal} and~\ref{thm:forward_diffusion} immediately follow.

\section{Details on the low-rank Algorithm.}
\label{app:g2t:low_rank_functionals}

In this section, we will provide further details and proofs on the low-rank approximation method discussed in Sec.~\ref{sec:g2t:algos}. We can use low-rank tensors to define corresponding functionals of the features obtained via hypo-elliptic diffusion. The following is the definition of \emph{CP-rank}~\cite{carroll1970analysis}.

\begin{definition}
    The \emph{rank} of a level $m$ tensor $\bv_m \in (\bbR^d)^{\otimes m}$ is the smallest number $r \geq 0$ such that
    \begin{align}
        \bv_m = \sum_{i=1}^r v_{i,1} \otimes \ldots \otimes v_{i,m}, \quad v_{i,j} \in \bbR^d.
    \end{align}
    We say that $\bv = (\bv_0, \bv_1, \ldots) \in H$ is \emph{rank $1$} if all $\bv_m$ are rank $1$ tensors. 
\end{definition}

We will now prove~Theorem \ref{thm:algo}, which is stated using the node feature map without \texttt{ZeroStart} (App.~\ref{app:g2t:variations}). In particular, the node feature map is given by
\begin{align}
\label{eq:g2t:fmn_no_zerostart}
    \fmnz_k(i) = \bbE[\fm(\delta_1 \bL_k) \cdots \fm(\delta_k\bL_k) \given B_0 = i] = (\wP^k \one_H)^{(i)} \in H,
\end{align}
where we explicitly specify the walk length $k$ in the subscript. Furthermore, if we also need to specify the tensor degree $m$, we will use two subscripts, where
\begin{align}
    \fmnz_{k,m}(i) \in (\bbR^d)^{\otimes m}
\end{align}
is the level $m$ component of the hypo-elliptic diffusion with a walk length of $k$. Throughout the proof, we omit the $H$ subscript for the all-ones vector, such that $\one^T = (1_H, \ldots, 1_H) \in H^n$, and we denote $\one^T_i = (0, \ldots, 1_H, \ldots, 0) \in H^n$ to be the unit vector in the $i^{\text{th}}$ coordinate. 

\begin{proof}[Proof of Theorem~\ref{thm:algo}]

First, we will show that $f_{1,m}(i) = \langle \bell_m, \fmnz_1(i)\rangle$ for all $m = 1, \ldots, M$. By the definition of hypo-elliptic diffusion, we know that
\begin{align}
    \fmnz_1(i) = \one_i^T \wP \one = \sum_{j=1} \wP_{i,j} = \sum_{i \sim j} \frac{\exp_{\otimes}(\nf(j) - \nf(i))}{d_i}.
\end{align}
By explicitly expressing the level $m$ component, and by factoring out the inner product, we get
\begin{align}
    \left\langle \bell_m, \frac{\exp_{\otimes}(\nf(j) - \nf(i))}{d_i} \right\rangle &= \frac{1}{d_i m!} \prod_{r=M-m+1}^M \langle u_r, \nf(j) - \nf(i)\rangle \\
    & = \frac{1}{m!} (P_{i,j} \cdot C^{u_{M-m+1}}_{i,j} \cdot \ldots \cdot C^{u_M}_{i,j}).
\end{align}
Then by linearity of the inner product, we get $f_{1,m}(i) = \langle \bell_m, \fmnz_1(i)\rangle$.

Next, we continue by induction and suppose that $f_{k-1,m}(i) = \langle \bell_m, \fmnz_{k-1}(i)\rangle$ holds for all $m = 1, \ldots, M$. Starting once again from the definition of hypo-elliptic diffusion, we know that 
\begin{align}
    \fmnz_k(i) = \one_i^T\wP\fmnz_{k-1} = \sum_{i \sim j} \wP_{i,j} \cdot \fmnz_{k-1}(j).
\end{align}
Fix a degree $m$. We explicitly write out the level $m$ component of this equation by expanding $\wP_{i,j}$ and the tensor product as
\begin{align}
    \fmnz_{k,m}(i) &= \sum_{i \sim j} \sum_{r=0}^m \frac{(\nf(j) - \nf(i))^{\otimes r}}{d_i r!} \cdot  \fmnz_{k-1, m- r}(j) \nonumber\\
    & = \sum_{i \sim j} \frac{\fmnz_{k-1, m}(j)}{d_i} + \sum_{r=1}^m \frac{1}{r!} \sum_{i \sim j} \frac{(\nf(j) - \nf(i))^{\otimes r}}{d_i} \cdot  \fmnz_{k-1, m- r}(j) \label{eq:g2t:Phi_k_m_expanded}
\end{align}
Note that the first sum is equivalent to
\begin{align}
    \sum_{i \sim j} \frac{\fmnz_{k-1, m}(j)}{d_i} = \one_i^T P \cdot \fmnz_{k-1,m}.
\end{align}

Applying the linear functional $\bell_m$ and the induction hypothesis to this, we have
\begin{align}
    \left\langle \bell_m, \sum_{i \sim j} \frac{\fmnz_{k-1, m}(j)}{d_i} \right\rangle = \one_i^T P \cdot f_{k-1,m}.
\end{align}
For the second sum in \eqref{eq:g2t:Phi_k_m_expanded}, we factor the inner product and apply the induction hypothesis
\begin{align}
    \Bigg\langle \bell_m , \sum_{i \sim j} \frac{(\nf(j) - \nf(i))^{\otimes r}}{d_i} \cdot  \fmnz_{k-1, m- r}(j) \Bigg\rangle  = \sum_{j=1}^n P_{i,j} \cdot C^{u_{M-m+1}}_{i,j} \cdot \ldots \cdot C^{M-m+r}_{i,j} \cdot f_{k-1, m-r}.
\end{align}
Putting this all together, we get
\begin{align}
    \fmnz_{k,m}(i) = \one_i^T \left( P \cdot f_{k-1, m} + \sum_{r=1}^m \frac{1}{r!} (P \odot C^{u_{M-m+1}} \odot \ldots \odot C^{u_M}) \cdot f_{k-1, m-r}\right) = f_{k,m}(i).
\end{align}
\end{proof}

\paragraph{Computational Complexity}
We will now consider the computational complexity of our algorithms. We begin by noting that the naive approach of computing $\Phi_k(i) = (F\wP^k\one_H)^{(i)}$ has the computational complexity of matrix multiplication; though this counts tensor operations, which itself requires $O(d^m)$ scalar multiplications at tensor degree $m$. This is computationally too expensive for practical applications.

Next, we consider the complexity of the recursive low-rank algorithm from Thm.~\ref{thm:algo}, where the primary computational advantage is that we only perform \emph{scalar} operations rather than \emph{tensor} operations. We consider the recursive step from eq.~\eqref{eq:g2t:recursive_low_rank}, reproduced here for $m = M$,
\begin{align}
   f_{k,M} = P \cdot f_{k-1,M} + \sum_{r=1}^M \frac{1}{r!} \left(P \odot C^{u_{1}} \odot \cdots \odot C^{u_{r}}\right) \cdot f_{k-1,M-r}.
\end{align}
Given a graph $\cG = (\cV, \cE, f)$ with $n = |\cV|$ nodes and $E = |\cE|$ edges, both $P$ and $C^u$ are sparse $n \times n$ (scalar) matrices with $O(E)$ nonzero entries, and $f_{k,m}$ is an $n$-dimensional column vector. Recall that $\odot$ denotes element-wise multiplication, and thus both the sparse matrix-matrix multiplication and the sparse matrix-vector multiplication have complexity $O(E)$. Furthermore, the entry-wise products $C^{u_1} \odot \cdots \odot C^{u_r}$ differ by only one factor between $r=m$ and $r=m+1$, and thus, computing $f_{k,M}$ assuming all lower $f_{k', m'}$ have been computed has complexity $O(M E)$. Taking into account the two recursive parameters results in a complexity of $O(kM^2 E)$. Note that this is the complexity to compute features for \emph{all nodes}. Once the $f_{k,m}$ are computed, the complexity of adding the zero starting point is $O(M)$.

\section{Variations and Hyperparameters of Hypo-Elliptic Diffusion}
\label{app:g2t:variations}

In this appendix, we summarize possible variations of the sequence feature map, leading to different hypo-elliptic diffusion features. The choice of variation is learned during training, and we also summarize the hyperparameters used for our features. While the theoretical results on characterizing random walks, such as Thm.~\ref{thm:characterizing_rw_informal}, depend on specific choices of the sequence feature map, there exist analogous results for these variations, which can characterize random walks up to certain equivalences. Furthermore, the computation of these variations can be performed in the same way: through tensorized linear algebra for exact solutions, and through an analogous low-rank method (as in~Thm.~\ref{thm:algo}) for approximate solutions. 

We fix an algebra lifting $\fm: \bbR^d \rightarrow H$ and let $\bx = (x_0, \ldots, x_k) \in \Seq(\bbR^d)$. The simplest sequence feature map to define simply multiplies the terms in the sequence together as
\begin{align} \label{eq:g2t:nodiff}
    \fms(\bx) = \fm(x_0) \cdots \fm(x_k).
\end{align}
Note that this is \emph{not} the sequence feature map used in the main text. We will now discuss several variations of this map, where $\fms_{\inc, \zs}$ is the one primarily used in the main text and $\fms_{\inc, \zs, \tp}$ is used to characterize random walks in Theorem~\ref{thm:characterizing_rw_informal} and \cite[App.~E]{toth2022capturing}.

\begin{description}
    \item [Increments ($\texttt{Diff}$).] Rather than directly multiplying terms in the sequence together, we can instead multiply the \emph{increments} as
    \begin{align}
    \fms_\inc(\bx) = \fm(\delta_1 \bx) \cdots \fm(\delta_k\bx),
    \end{align}
    where $\delta_i \bx = x_i - x_{i-1}$ for $i \geq 1$. In both cases, the sequence feature map is the path signature of a continuous piecewise-linear path when we set $\fm = \exp_\otimes$, as discussed in~\cite[App.~C]{toth2022capturing}, and it is instructive to use this perspective to understand the effect of increments. If we use increments, the path corresponding to the sequence is
    \begin{align}
    \bX_\inc(t) = x_{i} + (t-i)(x_{i+1}-x_i) \text{ for }t \in \left[{i},{i+1}\right),
    \end{align}
    while if we do not use increments, the path corresponding to the sequence is
    \begin{align}
    \bX(t) = \sum_{j=0}^{i - 1} x_j + (t-i)x_i \text{ for }t \in \left[{i},{i+1}\right).
    \end{align}
    Thus, when we use increments the sequence $\bx$ corresponds to vertices of the path $\bX_\inc$; if we do not, it corresponds to the vectors between vertices of the path $\bX$. In practice, this variation corresponds to taking 1\textsubscript{st} differences of $\bx$ before using eq.~\eqref{eq:g2t:nodiff}. 
    
    \item [Zero starting point ($\texttt{ZeroStart}$).] The sequence feature map with increments, $\fms_\inc$, as defined above is \emph{translation-invariant}, meaning $\fms_\inc(\bx+a) = \fms_\inc(\bx)$, where $\bx+a = (x_0 +a, x_1 +a, \ldots, x_k +a)$ for some $a \in \bbR^d$. In order to remove translation invariance, we can start each sequence at the origin $0 \in \bbR^d$ by pre-appending a $0$ to each sequence. A concise way to define the resulting \emph{zero started} sequence feature map is
    \begin{align}
    \fms_{\inc, \zs}(\bx) = \fm(\delta_0 \bx) \cdots \fm(\delta_k\bx),
    \end{align}
    where we define $\delta_0 \bx = x_0$. This is the sequence feature map defined in eq.~\eqref{eq:g2t:sequence_feature_map}. Note that this does not change the sequence feature map if we do not use increments.
    
    \item [Time parameterization.] When we relate sequences to piecewise linear paths as described in~\cite[App.~C]{toth2022capturing}, we can use the fact that the path signature is invariant under reparameterization, or more generally, tree-like equivalence~\cite{hambly2010uniqueness}. In terms of discrete sequences, this includes invariance with respect to $0$ elements in the sequence (without increments), and repeated elements in the sequence (with increments). In order to remove this invariance, we can include \emph{time parameterization} by setting
    \begin{align}
        \fms_{-, \tp}(\bx) = \fms_{-}(\bar{\bx}),
    \end{align}
    where $\bar{\bx} = (\bar x_0, \ldots, \bar x_k) \in \Seq(\bbR^{d+1})$, with $\bar x_i = (i, x_i) \in \bbR^{d+1}$. This is a simple form of positional encoding, but other encodings are also possible, e.g.~sinusoidal waves \cite{vaswani2017attention}.
    
    \item [Algebra lifting ($\texttt{AlgOpt}$).] Throughout this article, we have used the tensor exponential as the algebra lifting. However, we can also scale each level of the lifting independently, and keep these as hyperparameters to optimize. In particular, for a sequence $\bc = (c_0, c_1, \ldots) \in \bbR^\bbN$, we define $\fm^\bc: \bbR^d \rightarrow H$ to be
    \begin{align}
        \fm^\bc(x) = \left(c_m x^{\otimes m}\right)_{m=0}^\infty,
    \end{align}
    where $1 / m!$ is used as initialization for $c_m$ and learned along with the other parameters.
\end{description}

The choice of which variant of the sequence feature map to use depends on which invariance properties are important for the specific problem. In practice, the choice can be learned during the training, which is done in our experiments in Section~\ref{sec:g2t::experiments}. Furthermore, the features obtained through the low-rank hypo-elliptic diffusion depend on three hyperparameters:
\begin{enumerate*}[label=(\arabic*)]
    \item the length of random walks;
    \item the number of low-rank functionals;
    \item the maximal tensor degree;
    \item the number of iterations (layers).
\end{enumerate*}
Note that the first three hyperparameters can also potentially vary across different iterations. 

\section{Experiments} \label{app:g2t:exp}

We have implemented the low-rank algorithm for our layers given in Theorem \ref{thm:algo} using \texttt{Tensorflow}, \texttt{Keras} and \texttt{Spektral} \cite{grattarola2021graph}. Code is available at \url{https://github.com/tgcsaba/graph2tens}. All experiments were run on one of 3 computing clusters that were overall equipped with 5 NVIDIA Geforce 2080Ti, 2 Quadro GP100, 2 A100 GPUs. The largest GPU memory allocation any one of the experiments required was around ${\sim}5$GB. For the experiments ran using different random seeds, the seed was used to control the randomness in the \begin{enumerate*}[label=(\arabic*)] \item data splitting process, \item parameter initialization, \item optimization procedure. \end{enumerate*} For an experiment with overall $n_{\text{runs}}$ number of runs, the used seeds were $\{0, \dots, n_{\text{runs}}-1\}$.
\subsection{Model details.} \label{app:g2t:model_detail}
\begin{figure}[t]
    \centering
    \includegraphics[width=\textwidth]{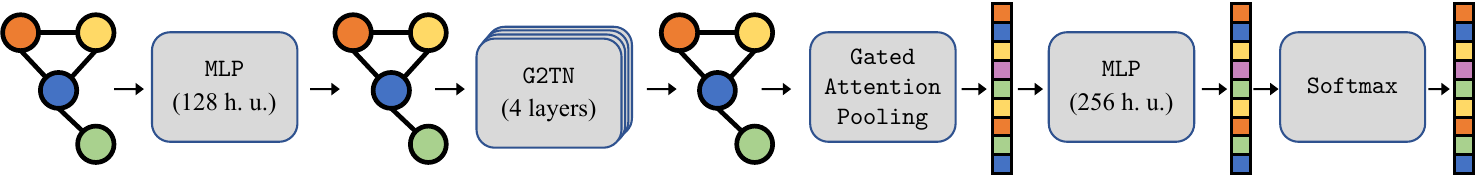}
    \caption{Visualization of the architecture used for NCI1 and NCI109 described in Section \ref{sec:g2t::experiments}.}
    \label{fig:g2t:architecture}
\end{figure}

The architecture described in Section \ref{sec:g2t::experiments} is visualized on Figure \ref{fig:g2t:architecture}, where for \GTAN the \GTN stack is ceteris paribus replaced by the attentional variation. To further the discussion, here we provide more details on the model implementation.

\paragraph{Initialization} In our \GTN and \GTAN layers, we linearly project tensor-valued features using linear functionals that use the same low-rank parameterization as the \emph{recursive} variation in \cite[App.~D.2]{toth2021seq2tens}. There, the authors also propose in App.~D.5 an initialization heuristic for setting their variances, that we have employed with a centered uniform distribution.

\paragraph{Regularization} We also apply $\ell_2$ regularization for both experiments in Sections \ref{sec:g2t::experiments} and \ref{app:g2t:exp_detail}. Given a max.~tensor degree $M \geq 1$ and a max.~tensor rank $R \geq 1$ representing the layer width, there are overall $RM$ rank-$1$ linear functionals of the form
\begin{align} \label{eq:g2t:lin_func}
    \bell^r_m = u^r_{M-m+1} \otimes \cdots \otimes u^r_{M} \quad \text{for } m=1,\dots M \text{ and } r=1, \dots, R.
\end{align}
Since in practice these are represented using only the component vectors $u^r_{i} \in \bbR^d$, a naive approach would compute the $2$-norm of each component vector $u^r_i$ to penalize the loss function. However, we found this approach to underperform compared to the following. Although our algorithm represents the functionals using a rank-$1$ decomposition and computes the projection of each tensor-valued node feature without explicitly building high-dimensional tensors, conceptually we still have tensors ($\bell_m^r$) acting on tensors ($\fmn(i)$ for $i \in \cV$), and hence the tensor norm, $\norm{\bell_m^r}_2$, should be used for regularization. Fortunately, this can also be computed efficiently:
\begin{align}
    \norm{\bell_m^r}_2 = \norm{u_{M-m+1}^r \otimes \cdots \otimes u_M^r}_2 = \norm{u_{M-m+1}^r}_2 \cdots \norm{u_M^r}_2.
\end{align}
Further, as is common, we replace the $\ell_2$ norm with its squared value, sum over all functionals in \eqref{eq:g2t:lin_func}, and multiply by the regularization parameter $\lambda > 0$ so the final penalty is
\begin{align}
    \texttt{L2Penalty} = \lambda \sum_{r=1}^R \sum_{m=1}^M \norm{u_{M-m+1}^r}_2^2 \cdots \norm{u_M^r}_2^2.
\end{align}

\section{Details on experiments.} \label{app:g2t:exp_detail}
Here we extend the experimental results demonstrated in Section \ref{sec:g2t::experiments}.

\begin{table}[t]
\caption{Accuracies computed over 5 seeds of \GTAN ablated by ceteris paribus changing a single model setting.}
\label{table:g2t:ablation_g2tan}
\resizebox{\textwidth}{!}{ 
  \centering
  \begin{sc}
  \begin{tabular}{lccccccc}
    \toprule
    \textbf{Dataset} & \texttt{NoDiff} & \texttt{NoZeroStart} & \texttt{NoAlgOpt} & \texttt{NoJK} & \texttt{NoSkip} & \texttt{NoNorm} & \texttt{AvgPool} \\
    \midrule
    NCI1 & $80.1 \pm 0.7$ &  $79.5 \pm 1.8$ & $81.6 \pm 1.6$ & $82.1 \pm 1.8$ & $81.8 \pm 0.9$ & $81.6 \pm 1.5$ & $82.4 \pm 0.9$ \\
    NCI109 & $78.2 \pm 1.2$ & $77.7 \pm 1.8$ & $77.5 \pm 1.3$ & $77.6 \pm 1.2$ & $78.3 \pm 1.3$ & $79.8 \pm 1.4$ & $77.6 \pm 1.3$ \\
    \bottomrule
  \end{tabular}
\end{sc}
}
\end{table}

\begin{table}[t]
    \caption{Accuracies computed over 5 seeds of \GTN ablated by ceteris paribus changing a single model setting.}
    \label{table:g2t:ablation_g2tn}
      \centering
      \begin{sc}
      \resizebox{\textwidth}{!}{
      \begin{tabular}{lccccccc}
        \toprule
        \textbf{Dataset} & \texttt{NoDiff} & \texttt{NoZeroStart} & \texttt{NoAlgOpt} & \texttt{NoJK} & \texttt{NoSkip} & \texttt{NoNorm} & \texttt{AvgPool} \\
        \midrule
        NCI1 & $79.1 \pm 1.5$ &  $80.7 \pm 0.6$ & $78.9 \pm 1.3$ & $79.4 \pm 1.9$ & $81.0 \pm 1.3$ & $79.5 \pm 1.1$ & $80.3 \pm 1.7$ \\
        NCI109 & $77.8 \pm 1.2$ & $79.5 \pm 1.5$ & $76.5 \pm 1.7$ & $78.1 \pm 1.9$ & $77.0 \pm 1.6$ & $77.4 \pm 2.7$ & $77.6 \pm 1.8$ \\
        \bottomrule
      \end{tabular}
      }
    \end{sc}
\end{table}

\subsection{Ablation Studies} 
\label{app:g2t:ablation}
Further to using attention, we give ideas for variations on our models in Appendix \ref{app:g2t:variations} which can be summarized briefly as: \begin{enumerate*}[label=(\roman*)] \item using increments of node attributes (\texttt{Diff}), \item preprending a zero point to sequences (\texttt{ZeroStart}), \item  optimizing over the algebra embedding (\texttt{AlgOpt}) \end{enumerate*}, all of which are built into our main models. Further, the previous architectural choices aimed at incorporating several commonly used tricks for training \texttt{GNN}s. We investigate the effect of the previous variations and ablate the architectural ``tricks'' by measuring the performance change resulting from ceteris paribus removing it. Table~\ref{table:g2t:ablation_g2tan} shows the result for \GTAN. To summarize the main observations, the model is robust to all the architectural changes, removing the layer norm even improves on NCI109. Importantly, replacing the attention pooling with mean pooling does not significantly affect the performance, but actually slightly improves on NCI1. Regarding variations, $\texttt{AlgOpt}$ slightly improves on both datasets, while removing $\texttt{Diff}$ and $\texttt{ZeroStart}$ significantly degrades the accuracy on NCI1. The latter means that translations of node attributes are important, not just their distances.

 We give the analogous ablation study for the \GTN model in Table \ref{table:g2t:ablation_g2tn}, and compare the derived conclusions between the attentional and attention-free versions. First, we discuss the layer variations. Similarly to \GTAN, \texttt{NoDiff} slightly decreases the accuracy. However the conclusions regarding \texttt{NoZeroStart} and \texttt{NoAlgOpt} are different. In this case, removing \texttt{ZeroStart} actually improves the performance, while on \GTAN the opposite was true. An interpretation of this phenomenon is that only the attention mapping that is used to learn the random walks required information about translations of the layer's features, and not the tensor-features themselves. Another difference is that \texttt{NoAlgOpt} degrades the accuracy more significantly for \GTN. A possible explanation is that since \GTAN layers are more flexible thanks to their use of attention, they rely less on being able to learn the algebraic lift, while as \GTN layers are more rigid in their random walk definition, and benefit more from the added flexibility of \texttt{AlgOpt}. Additionally, it seems that \GTN is more sensitive to the various architectural options, and removing any of them, i.e.~\texttt{NoJK}, \texttt{NoSkip}, \texttt{NoNorm} or \texttt{AvgPool}, degrades the accuracy by $1\%$ or more on at least one dataset. Intuitively, it seems that overall the \GTAN model is more robust to the various architectural ``tricks'', and more adaptable due to its ability to learn the random walk.

\subsection{Further Experiments} \label{app:g2t:further_exp}
\paragraph{Citation datasets}
Additionally to transductive learning on the biological datasets, we have carried out inductive learning tasks on some of the common citation datasets, i.e.~Cora, Citeseer \cite{sen2008collective} and Pubmed \cite{namata2012query}. We follow \cite{shchur2018pitfalls} in carrying out the experiment, and use the largest connected component for each dataset with $20$ training examples per class, $30$ validation examples per class, and the rest of the data used for testing.  The hyperparameters of our models and optimization procedure were based on the settings of the GAT model in \cite[Table 4]{shchur2018pitfalls}, which we have slightly fine-tuned on Cora and used for the other datasets. In particular, a single layer of \GTN or \GTAN is used with $64$ functionals, max.~tensor degree $2$ and random walk length $5$. The dropout rate was tuned to $0.9$, while the attentional dropout was set to $0.3$ in \GTAN. Optimization is carried out with Adam \cite{kingma2014adam}, a fixed learning rate of $0.01$ and $\ell_2$ regularization strength $0.01$. Training is stopped once the validation loss does not improve for $50$ epochs, and restored to the best checkpoint. For both of our models, \texttt{NoDiff} is used that we found to improve on the results as opposed to using increments of node features. The dropout rate had to be tuned as high as $0.9$, which suggests very strong overfitting, hence the additional complexity of \texttt{AlgOpt} was also contrabeneficial, and \texttt{NoAlgOpt} was used. As such, each model employs a single hidden layer, which is followed by layer normalization that was found to perform slightly better than other normalizations, e.g.~graph-level normalization.
\begin{table}[t]
\caption{Accuracies of our models on the citation datasets computed over 100 seeds compared with the 4 consistently best performing baselines from \cite{shchur2018pitfalls}.}
\label{table:g2t:cit_results}
\centering
\resizebox{\textwidth}{!}{
\begin{sc}
    \begin{tabular}{lcccccc}
    \toprule
    \textbf{Dataset} & \textbf{GCN} & \textbf{GAT} & \textbf{MoNet} & \textbf{GraphSage (mean)} & \textbf{\GTN (ours)} & \textbf{\GTAN (ours)} \\
    \midrule
    Cora & $81.5 \pm 1.3$ & $81.8 \pm 1.3$ & $81.3 \pm 1.3$ & $79.2 \pm 7.7$ & $\mathbf{82.6 \pm 1.0}$ & $82.0 \pm 1.1$ \\
    Citeseer & $\mathbf{71.9 \pm 1.9}$ & $71.1 \pm 1.9$ & $71.2 \pm 2.0$ & $71.6 \pm 1.9$ & $69.4 \pm 1.0$ & $68.2 \pm 1.3$ \\
    Pubmed & $77.8 \pm 2.9$ & $78.7 \pm 2.3$ & $78.6 \pm 2.3$ & $77.4 \pm 2.2$ & $\mathbf{78.8 \pm 1.9}$ & $78.0 \pm 1.9$ \\
    \bottomrule
    \end{tabular}
    \end{sc}
}
\end{table}

\begin{table}[t]
    \caption{Accuracies of our models on $k$-hop sanitized citation datasets computed over 100 seeds compared with the 5 consistently best performing models from \cite{rampavsek2021hierarchical}.}
    \label{table:g2t:results_cit_khop}
    \centering
    \resizebox{\textwidth}{!}{
    \begin{sc}
    
        \begin{tabular}{llccccccc}
        \toprule
        \textbf{k} & \textbf{Dataset} & \textbf{GCN} & \textbf{GAT} & \textbf{g-U-net} & \textbf{HGNet-EP} & \textbf{HGNet-L} & \textbf{\GTN (ours)} & \textbf{\GTAN (ours)} \\
        \midrule 
        \multirow{3}{*}{$1$} & Cora & $76.7$ & $78.5$ & $78.1$ & $77.2$ & $77.1$ & $\mathbf{81.8 \pm 1.3}$ & $80.9 \pm 1.4$ \\
        & Citeseer & $64.2$ & $66.4$ & $63.0$ & $64.3$ & $64.10$ & $\mathbf{68.1 \pm 1.3}$ & $66.6 \pm 1.4$ \\
        & Pubmed & $75.8$ & $75.9$ & $75.8$ & $77.0$ & $76.3$ & $\mathbf{78.7 \pm 1.9}$ & $77.5 \pm 2.0$ \\
        \middashrule
        \multirow{3}{*}{$2$} & Cora & $72.0$ & $73.4$ & $74.4$ & $74.0$ & $75.4$ & $\mathbf{79.4 \pm 2.8}$ & $78.3 \pm 2.9$ \\
        & Citeseer & $58.3$ & $59.4$ & $57.3$ & $57.8$ & $59.9$ & $\mathbf{63.6 \pm 1.8}$ & $62.2 \pm 1.9$ \\
        & Pubmed & $72.1$ & $73.1$ & $72.4$ & $72.9$ & $75.1$ & $\mathbf{77.1 \pm 1.7}$ & $76.3 \pm 1.6$ \\
        \bottomrule
        \end{tabular}
    \end{sc}
    }
    \vspace{-10pt}
    \end{table}

The results of our models over 100 runs are reported in Table \ref{table:g2t:cit_results} compared with the 4 consistently best performing baselines on these datasets from \cite{shchur2018pitfalls}, i.e.~GCN \cite{kipf_semi-supervised_2017}, GAT \cite{velickovic_graph_2018}, MoNet \cite{monti2017geometric}, and GraphSage \cite{hamilton2017inductive} with a mean aggregator. Firstly on Cora, both \GTN and \GTAN outperform the baselines with a more significant improvement for \GTN. For CiteSeer, our models are left somewhat behind compared to the baselines in terms of accuracy. Finally, they are again competitive on Pubmed, where \GTN takes the top score with a very slight lead. Two consistent observations are: \begin{enumerate*}[label=(\arabic*)] \item \GTN and \GTAN have a lower variance than all baselines, \item \GTN consistently outperforms \GTAN. \end{enumerate*} The latter may be attributed to the observation that due to the severe overfitting on these datasets, the additional complexity of the attention mechanism in \GTAN is unhelpful for generalization.

\paragraph{$K$-hop Sanitized Splits} Recent work \cite{rampavsek2021hierarchical} has demonstrated that it is possible to make the previously considered citation datasets more suitable for testing the ability of a model to learn long-range information by dropping node labels in a structured way. Concretely, they use a label resampling strategy to guarantee that if a node is selected for a data split, none of its $k$-th degree neighbours are included in any splits, i.e.~training, validation nor testing, allowing to reduce the effect of short-range ``label imprinting''. In practice, we select a maximal independent set from the graph with respect to the $k$-th power of the adjacency matrix with self-loops, and repeat the previous experiment with the same data splitting method, model choice and training settings as before. The experiment seed also controls the random maximal independent set that is selected.  

The results of our models trained on the citation datasets sanitized this way are available in Table \ref{table:g2t:results_cit_khop} computed over 100 seeds for $k=1,2$. As baseline results, we compare against the 5 best performing models from \cite{rampavsek2021hierarchical}, where various \texttt{GNN} depths were also considered for each model, and we use the \emph{best} reported result for each of the baselines. Overall, we can observe that all baseline models exhibit a very sharp drop in performance as $k$ is increased, while for \GTN and \GTAN, the performance decrease is not nearly as pronounced. For both $k=1$ and $k=2$, our models perform better than the baselines on all datasets. This is explained by the fact that due to using random walks length of $5$, the models can efficiently pick up on information outside of the sanitized neighbourhoods. As previously, \GTN performs better than \GTAN as the additional flexibility of the attention layer does not lead to improved generalization performance when severe overfitting is present. This experiment demonstrates that the proposed models efficiently pick up on long-range information within larger neighbourhoods, and it suggests that on inductive learning tasks they should be more robust to sparse labeling rates compared to common short-range \texttt{GNN} models.

\subsection{Computation Time}
\label{app:g2t:computation_time}

Table \ref{table:g2t:comp_time_graph} shows the computation time of one forward pass of a \texttt{G2TN} layer with various graphs. This empirically demonstrates that our layer does not depend on the number of nodes, and only depends linearly on the number of edges as expected from the theoretical complexity.

Table \ref{table:g2t:comp_time_modelparam} shows the computation time of one forward pass of a \texttt{G2TN} layer with a fixed graph and various model parameters. Empirically, our layer scales roughly linearly in both the walk length $k$ and the maximum tensor level $M$. Linear complexity in walk length is expected, but the linearity in the maximum tensor level $M$ is due to parallelization of the algorithm presented in Thm.~\ref{thm:algo}. This is discussed in the pseudocode section of \cite[App.~F]{toth2022capturing}.

\begin{table}[t]
    \caption{Computation time of one forward pass of one \texttt{G2TN} layer in milliseconds on a Nvidia GeForce 2080TI GPU for a graph with varying nodes ($N$) and edges ($E$), while node feature dimension is fixed at $128$. }
    \label{table:g2t:comp_time_graph}
    \begin{adjustbox}{center}
      \centering
      \begin{sc}
      \begin{tabular}{cccccc}
        \toprule
        & $E=5000$ & $E=10000$ & $E=20000$ & $E=40000$ & $E=80000$ \\
        \midrule
        $N=500$ & $37$ & $61$ & $98$ & $151$ & $293$ \\
        $N=1000$ & $37$ & $62$ & $104$ & $152$ & $290$\\
        $N=2000$ & $39$ & $63$ & $89$ & $160$ & $293$\\
        $N=4000$ & $39$ & $63$ & $109$ & $160$ & $294$\\
        $N=8000$ & $42$ & $65$ & $108$ & $156$ & $295$\\
        \bottomrule
      \end{tabular}
      \end{sc}
    \end{adjustbox}
\end{table}

\begin{table}[t]
    \caption{Computation time of one forward pass of one \texttt{G2TN} layer in milliseconds on a Nvidia GeForce 2080TI GPU for a fixed graph with $N=2000$ nodes and $E=10000$ edges. The walk length ($k$) and the maximum level $M$ of the model are varied. }
    \label{table:g2t:comp_time_modelparam}
    \begin{adjustbox}{center}
      \centering
      \begin{sc}
      \begin{tabular}{cccccc}
        \toprule
        & $k=2$ & $k=4$ & $k=6$ & $k=8$ & $k=10$ \\
        \midrule
        $M=1$ & $14$ & $26$ & $45$ & $59$ & $70$\\
        $M=2$ & $20$ & $40$ & $72$ & $81$ & $101$\\
        $M=3$ & $26$ & $55$ & $95$ & $115$ & $163$\\
        $M=4$ & $34$ & $73$ & $101$ & $150$ & $188$\\
        $M=5$ & $42$ & $92$ & $139$ & $187$ & $237$\\
        \bottomrule
      \end{tabular}
      \end{sc}
    \end{adjustbox}
\end{table}

\subsection{Parameter Count}
In both the NCI datasets, the \texttt{G2TN} and \texttt{G2T(A)N} have 505k and 519k trainable parameters respectively. Thus, our models are comparable in size to the GraphTrans, which is reported to have 500k trainable parameters~\cite{wu2021representing}.

\subsection{Summary of Dataset Parameters}
We provide a summary of dataset statistics and the model parameters in the experiments. Here, $M$ denotes the maximum tensor degree, $k$ denotes the walk length, and $R$ denotes the maximum rank of the low-rank functions. Concretely, $R$ is the number of node features used within the network, and is analogous to the width of a neural network. For the computation of the diameter of a disconnected graph, we take the largest diameter of any connected component.  
\begin{table}[h]
    \caption{Dataset statistics and \texttt{G2TN} model parameters used.}
    \label{table:g2t:dataset_stats}
      \centering
      \resizebox{\textwidth}{!}{
      \begin{sc}
      \begin{tabular}{cccccccccc}
        \toprule
        & graphs & total nodes & total edges &  avg. diam. & classes & layers & $M$ &  $k$ &  $R$ \\
        \midrule
        NCI1& 4110 & 122k & 132k   & 13.3 &2 & 4 & 2 & 5 & 128\\
        NCI109& 4127 & 122k & 132k  & 13.1 &2 & 4 & 2 & 5 & 128\\
        CORA & 1 & 2485 & 5069  & 19 & 7 &4 & 2 & 5 & 64\\
        Citeseer & 1 & 2110 & 3668  & 28& 6 & 4 & 2 & 5 & 64\\
        Pubmed & 1 &  19k & 44k & 18 &3 & 4 & 2 & 5 & 64\\
        \bottomrule
      \end{tabular}
      \end{sc}
    }
\end{table}



\end{subappendices}

\endgroup
\begingroup
\chapter{Random Fourier Signature Features} \label{ch:rfsf}

\section{Introduction} \label{sec:rfsf:introduction}
Machine learning has successfully been applied to tasks that require learning from complex and structured data types on non-Euclidean domains. Feature engineering on such domains is often tackled by exploiting the geometric structure and symmetries existing within the data \cite{bronstein2021geometric}.
Learning from sequential data (such as video, text, audio, time series, health data, etc.) is a classic, but an ongoing challenge due to the following properties:
\begin{itemize}
\item \emph{Non-Euclidean data.} The data domain is nonlinear since there is no obvious and natural way of adding sequences of different length. 

\item \emph{Time-space patterns.} 
  Statistically significant patterns can be distributed over time and space, that is, capturing the order structure in which ``events'' arise is crucial. 

\item \emph{Time-warping invariance.} The meaning of many sequences is often invariant to reparameterization also frequently called time-warping, at least to an extent; e.g.~a sentence spoken quicker or slower contains (essentially) the same information. 

\item \emph{Discretization and irregular sampling.}
  Sequences often arise by sampling along an irregularly spaced grid of an underlying continuous time process. A general methodology should be robust as the sampling gets finer, sequences approximate paths (continuous-time limit), or as the discretization grid varies between sequences.

\item \emph{Scalability.} Sequence datasets can quickly become massive, so the computational complexity should grow subquadratically, in terms of all of the state-space dimension, and the length and number of sequences. 
\end{itemize}
The signature kernel $\sigkernel$ is the state-of-the-art kernel for sequential data \cite{toth2020bayesian,salvi2021signature,lemercier2021scaling} that addresses the first 4 of the above questions and can rely on the modular and powerful framework of kernel learning \cite{scholkopf2002learning}.
Its construction is motivated by classic ideas from stochastic analysis that give a structured description of a sequence by developing it into a series of tensors.
We refer to \cite{lee2023signature} for a recent overview of its various constructions and applications. In the real-world, various phenomena are  well-modelled by systems of differential equations. The path signature arises naturally in the context of controlled differential equations. The role of the signature here is to provide a basis for the effects of a driving signal on systems of controlled differential equations. In essence, it captures the interactions of a controlling signal with a nonlinear system. This explains the widespread applicability of signatures to various problems across the sciences \cite{lyons2007differential}. There is also geometric intuition behind signatures, see Section 1.2.4 in \cite{chevyrev_primer_2016}.
\paragraph{Features vs Kernel/Primal vs Dual}
Kernel learning circumvents the costly evaluation of a high- or infinite-dimensional feature map by replacing it with the computation of a Gram matrix which contains as entries the inner products of features between all pairs of data points.
This can be very powerful since the inner product evaluation can often be done cheaply by the celebrated "kernel trick", even for infinite-dimensional feature spaces, but the price is that now the computational cost is quadratic in the number of samples, and downstream algorithms further often incur a cubic cost usually in the form of a matrix inversion. On the other hand, when finite-dimensional features can used for learning, the primal formulation of a learning algorithm can perform training and inference in a cost that is linear with respect to the sample size assuming that the feature dimension is fixed. This motivates the investigation of finite-dimensional approximations to kernels that mimic their expressiveness at a lower computational cost. It is an interesting question how the feature dimension should scale with the dataset size to maintain a given (optimal) learning performance in downstream tasks,
which is investigated for instance by \cite{rudi17generalization,carratino18learning,sun2018but,li2019towards,sriperumbudur22approximate,lanthaler2023error}.

\paragraph{Computational Cost of the Signature kernel}
In the context of the signature kernel, one data point is itself a whole sequence. 
Hence, given a data set $\bX$ consisting of $N \in \bbZ_+$ sequences where each sequence $\bx=(\bx_1,\ldots,\bx_{\len{\bx}})$ is of maximal length $\len{\bx} \leq \ell \in \bbZ_+$ and has sequence entries $\bx_i$ in a state-space of dimension $d$, then the existing algorithms to evaluate the Gram matrix of the $\sigkernel$ scale quadratically, i.e.~as $O(N^2 \ell^2 d)$, both in sequence length $\ell$ and number of sequences $N$.
So far this has only been addressed by subsampling (either directly the sequence elements to reduce the length or by column subsampling via the Nyström approach \cite{williams2000nystrom}), which can lead to crude approximations and performance degradation on large-scale datasets.

\paragraph{Contribution}
Random Fourier Features (\RFF) \cite{rahimi2007random} is a classic technique to enjoy both the benefits of the primal and dual approach. 
Here, a low-dimensional and random feature map is constructed, which although does not approximate the feature map of a translation-invariant kernel, its inner product is with high probability close to the kernel itself.
The main contribution of this article is to carry out such a construction for the signature kernel.
Concretely, we construct a random feature map on the domain of sequences called Random Fourier Signature Features (\RFSF), such that its inner product is a random kernel $\rffsigkernel$ for sequences that is both 
\begin{enumerate*}[label=(\roman*)]
    \item an unbiased estimator for $\sigkernel$, and 
    \item has analogous probabilistic approximation guarantees to the classic \RFF{} kernel.
\end{enumerate*}
The challenge is that a direct application of the classic \RFF{} technique is not feasible since this relies on Bochner's theorem which does not apply since the sequence domain is not even a linear space and the feature domain is non-Abelian, which makes the use of (generalizations of \cite{fukumizu2008characteristic}) Bochner's theorem difficult due to the lack of sufficiently explicit representations.
We tackle this challenge by combining the algebraic structure of signatures with probabilistic concentration arguments; a careful analysis of the error propagation yields uniform concentration guarantees similar to the \RFF{} on $\bbR^d$.
Then, we introduce dimensionality reduction techniques for random tensors further approximating $\rffsigkernel$ to define the extremely scalable variants $\rffsigkernelDP$ and $\rffsigkernelTRP$ called \RFSFD{} and \RFSFT{} saving considerable amounts of computation time and memory by low-dimensional projection of the feature set of the \RFSF{}.

Hence, analogously to the classic \RFF{} construction, the random kernels $\rffsigkernel, \rffsigkernelDP, \rffsigkernelTRP$ simultaneously enjoy the expressivity of an infinite-dimensional feature space as well as linear complexity in sequence length.
This overcomes the arguably biggest drawback of the signature kernel, which is the quadratic complexity in sample size and sequence length; the price for reducing the complexities by an order is that this approximation only holds with high probability. As in the case of the classic \RFF, our experiments show that this is in general a very attractive tradeoff. Concretely, we demonstrate in the experiments that the proposed random features \begin{enumerate*}[label=(\arabic*)] \item provide comparable performance on moderate sized datasets to full-rank (quadratic time) signature kernels, \item outperform other random feature approaches for time series on both moderate- and large-scale datasets, \item allow scaling to datasets of a million time series\end{enumerate*}.

\paragraph{Related Work}
The signature kernel has found many applications; for example, it is used in ABC-Bayes \cite{dyer2023approximate}, economic scenario validation \cite{andres2023signaturebased}, amortised likelihood estimation \cite{dyer2022amortised}, the analysis of RNNs \cite{fermanian2021framing}, analysis of trajectories in Lie groups \cite{lee2020path}, metrics for generative modelling \cite{buehler2021generating,kidger2021neural}, or dynamic analysis of topological structures \cite{giusti2023signatures}. 
For a general overview see \cite{lee2023signature}.
All of these applications can benefit from a faster computation of the signature kernel with theoretical guarantees.
Previous approaches address the quadratic complexity of the signature kernel only by subsampling in one form or another:
\cite{kiraly2019kernels} combine a structured Nystr{\"o}m type-low-rank approximation to reduce complexity in dimension of samples and sequence length, \cite{toth2020bayesian} combine this with inducing point and variational methods, \cite{salvi2021signature} uses sequence-subsampling, \cite{lemercier2021scaling} use diagonal approximations to Gram matrices in a variational setting.
Related to this work is also the random nonlinear projections in \cite{lyons2017sketching}; further, \cite{morrill2021generalised} combine linear dimension projection in a general pipeline and \cite{cuchiero2021discrete} use signatures in reservoir computing. 
Directly relevant for this work is recent progress on tensorized random projections \cite{sun2021tensor,rakhshan2020tensorized}.
Random Fourier Features \cite{rahimi2007random,rahimi2008weighted} are well-understood theoretically \cite{sutherland2015error, sriperumbudur2015optimal,sriperumbudur22approximate,liao2020random,avron2017random, szabo2019kernel, chamakh2020orlicz, ullah2018streaming,chamakh2020orlicz}. 
In particular, its generalization properties are studied in e.g.~\cite{bach2013sharp,li2019towards, sun2018but, lanthaler2023error}, where it is shown that the feature dimension need only scale sublinearly in the dataset size for supervised learning, and a similar result also holds for kernel principal component analysis \cite{sriperumbudur22approximate}. 
Several variations have been proposed over the years \cite{le2013fastfood, feng2015random, choromanski2016recycling, yu2016orthogonal, choromanski2017unreasonable, choromanski2018geometry, choromanski2022hybrid}, even finding applications in deep learning \cite{tancik2020fourier}. Alternative random feature approaches for polynomial and Gaussian kernels based on tensor sketching have been proposed in e.g.~\cite{wacker2022complex, wacker2022improved, wacker2022local}. Gaussian sketching has also been applied in the RKHS for kernel approximation \cite{kpotufe2020gaussian}.  For a survey, the reader is referred to \cite{chamakh2020orlicz,liu2021random}.

\paragraph{Outline} Section \ref{sec:rfsf:prereq} provides background on the prerequisites of our work: Random Fourier Features, and Signature Features/Kernels. Section \ref{sec:rfsf:RFSF} contains our proposed methods with theoretical results; it introduces Random Fourier Signature Features (\RFSF) $\rffsig[\leq M]$,
\RFSF{} kernels $\rffsigkernel[\leq M]$ (where $M \in \bbZ_+$ is the truncation level introduced later), and most importantly their theoretical guarantees. Theorem \ref{thm:main} quantifies the approximation $\sigkernel[\leq M](\bx,\by) \approx \rffsigkernel[\leq M](\bx,\by)$ uniformly. Then, we discuss additional variants: the \RFSFD{} kernel $\rffsigkernelDP[\leq M]$ and the \RFSFT{} kernel $\rffsigkernelTRP[\leq M]$, which build on the previous construction using dimensionality reduction with corresponding concentration results in Theorems \ref{thm:main2} and \ref{thm:main3}. Section \ref{sec:rfsf:experiments} compares the performance of the proposed scalable signature kernels against popular approaches on \SVM{} multivariate time series classification, which demonstrates that the proposed kernel not only significantly improves the computational complexity of the signature kernel, it also provides comparable performance, and in some cases even improvements in accuracy as well. Hence, we take the best of both worlds: linear batch, sequence, and state-space dimension complexities, while approximately enjoying the expressivity of an infinite-dimensional \RKHS{} with high probability.

\section{Background} \label{sec:rfsf:prereq}

\subsection{Notation} \label{sec:rfsf:notation}
We denote the real numbers by $\bbR$, natural numbers by $\bbN = \curls{0, 1, 2, \dots}$, positive integers by $\bbZ_+ = \curls{1, 2, 3, \dots}$, the range of positive integers from $1$ to $n \in \bbZ_+$ by $\bracks{n} = \curls{1, 2, \dots, n}$. Given $a, b \in \bbR$, we denote their maximum by $a \vee b = \max(a, b)$ and their minimum by $a \wedge b = \min(a, b)$. We define the collection of all ordered $m$-tuples with non-repeating entries by 
\begin{align} \label{eq:rfsf:delta_m_def}
    \Delta_m(n) = \curls{1 \leq i_1 < i_2 < \cdots < i_m \leq n \setgiven i_1, i_2, \dots, i_n \in [n]}.
\end{align}

In general, $\cX$ refers to a subset of the input domain, where the various objects are defined, generally taken to be a subset $\bbR^d$ unless otherwise stated. For a vector $\bx \in \bbR^d$, we denote its $\ell_p$ norm by $\norm{\bx}_p = \pars{\sum_{i=1}^d \abs{x_i}^p}^{1/p}$. For a matrix $\bA \in \bbR^{d \times e}$, we denote the spectral and the Frobenius norm
by $\norm{\bA}_2 = \sup_{\norm{\bx}_2 = 1} \norm{\bA \bx}_2$ and $\norm{\bA}_F = \pars{\sum_{i=1}^e \norm{\bA \be_i}_2^2}^{1/2}$, where $\{\be_1, \dots, \be_e\}$ is the canonical basis of $\bbR^e$. The transpose of a matrix $\b A$ is denoted by $\b A^\top$. For a differentiable $f: \bbR^d \to \bbR$, we denote its gradient at $\bx \in \bbR^d$ by $\nabla f(\bx) = \pars{\nicefrac{\partial f(\bx)}{\partial x_i}}_{i=1}^d$, and its collection of partial derivatives with respect to $\bs = (x_{i_1}, \dots, x_{i_k})$ by $\partial_\bs f(\bx) = \pars{\nicefrac{\partial f(\bx)}{\partial x_{i_j}}}_{j=1}^k$.

$\Seq(\cX)$ refers to sequences of finite, but unbounded length with values in the set $\cX$:
\begin{align}
    \Seq(\cX) = \{\bx=(\bx_0,\ldots,\bx_L) \setgiven \bx_i \in \cX, L \in \bbZ_+   \}.
\end{align}
We denote the effective length of a sequence $\bx = (\bx_0, \dots, \bx_L) \in \Seq(\cX)$ by $\len{\bx} = L$, and define the 1\textsuperscript{st}-order forward differencing operator as $\delta \bx_i = \bx_{i+1} - \bx_i$. We define the $1$-variation functional of a sequence $\bx \in \Seq(\cX)$ as $\norm{\bx}_{\onevar} = \sum_{i=1}^{\ell_{\bx} - 1} \norm{\delta \bx_i}_2$ as a measure of sequence complexity.

\subsection{Signature Features for Sequential Data}
A challenge in machine learning when constructing feature maps for datasets of sequences is that the sequence length can vary from instance to instance; the space of sequences $\Seq(\cX) = \curls{(\bx_i)_{i=1}^\ell \setgiven \bx_0, \dots, \bx_\ell \in \cX \spc \text{and} \spc \ell \in \bbZ_+}$ includes sequences of various lengths, and they should all get mapped to the same feature space, while preserving the information about the ordering of elements. A concatenation property of path signatures called Chen's identity \cite[Thm.~2.9]{lyons2007differential}
turns concatentation into multiplication provides a principled approach to construct features for sequences. Below we recall the construction from Chapter \ref{ch:s2t} of the discrete-time order-$1$ signature features originally introduced in Section~\ref{sec:back:discrete}.

The key idea is to define the discrete-time signature of $1$-step increments, and then glue features together by algebra multiplication to guarantee the Chen identity holds by construction.
Now assume we are given a static feature map $\kernelfeatures: \cX \to \Hil$ into some Hilbert space $\Hil$. Our task is to construct from this feature map for elements of $\cX$, a feature map for sequences of arbitrary length in $\cX$.
A natural first step is to apply the feature map $\kernelfeatures$ elementwise to a sequence $\bx \in \Seq(\cX)$ to lift it into the feature space $\Hil$ of $\kernelfeatures$, $\kernelfeatures(\bx) = \pars{\kernelfeatures(\bx_i)}_{i=0}^{\len{\bx}} \in \Seq(\Hil)$. The challenge is now to construct a feature map for sequences in $\Hil$.
Simple aggregation of the individual features fails; e.g.~summation of the individual features $\kernelfeatures(\bx_i)$ would lose order information, vectorization $\pars{\kernelfeatures(\bx_0), \ldots, \kernelfeatures(\bx_{\len{\bx}})} \in \Hil^{\len{\bx}}$ makes sequences of different length not comparable.
It turns out that multiplication is well-suited for this task in a suitable algebra.

Fortunately, there is a natural way to embed any Hilbert space $\Hil$ into a larger Hilbert space $\HilT$ that is also a non-commutative algebra, see Sec.~\ref{sec:back:tensalgs}.
First, we take the 1\textsuperscript{st} differences,
\begin{align} \label{eq:rfsf:seq_diff}
\bx \mapsto \delta \kernelfeatures(\bx) = \pars{\kernelfeatures(\bx_{i}) - \kernelfeatures(\bx_{i-1})}_{i=1}^{\len{\bx}} \in \Hil^{\len{\bx}},\quad \text{where} \spc \bx \in \Seq(\cX)
\end{align}
since it is more natural to keep track of changes rather than absolute values.
Then we identify $\Hil$ as a subset of $\HilT$. 
The simplest choice given the above construction of $\HilT$ is
\begin{align} \label{eq:rfsf:embed_1px}
\iota: \bh \mapsto (1, \bh, \mathbf{0}, \mathbf{0}, \ldots) \in \HilT \quad \text{where} \spc \bh \in \Hil.
\end{align}

A direct calculation shows that composing the maps \eqref{eq:rfsf:seq_diff}, \eqref{eq:rfsf:embed_1px}, and multiplying the individual entries in $\HilT$ results in a sequence summary using all non-contiguous subsequences, since in each multiplication step a sequence entry is either selected once or not at all.
This gives rise to the discretized signatures $\signature: \Seq(\cX) \to \HilT$ for $\bx \in \Seq(\cX)$ with $\len{\bx} \geq 2$:
\begin{align} \label{eq:rfsf:sig_feat}
    \signature(\bx) = \prod_{i=1}^{\len{\bx}} \iota(\delta\kernelfeatures(\bx_i)) = \pars{\sum_{\bi \in \Delta_m(\len{\bx})} \delta \kernelfeatures(\bx_{i_1}) \otimes \cdots \otimes \delta \kernelfeatures(\bx_{i_m})}_{m \geq 0},
\end{align}
where $\Delta_m: \bbZ_+ \to \bbZ_+^m$
is as defined in \eqref{eq:rfsf:delta_m_def} and $\bi = (i_1, \dots, i_m)$. Thus, the sequence feature is itself a sequence, however, now a sequence of tensors indexed by their degree $m \in \bbN$ in contrast to being indexed by the time index $i \in [\len{\bx}]$.
These sequence features are invariant to a natural transformation of time series called time-warping, but can also be made sensitive to it by including time as an extra coordinate with the mapping $\bx = (\bx_i)_{i=1}^{\len{\bx}} \mapsto (t_i, \bx_i)_{i=1}^{\len{\bx}}$. It also possesses similar approximation properties to path signatures in Section~\eqref{subsec:back:props}, i.e.~uniform approximation of functions of sequences on compact sets; see Appendices A and B in \cite{toth2021seq2tens}.

Despite the abstract derivation, the resulting feature map $\signature$ is---in principle---explicitly computable when $\Hil=\bbR^d$; see \cite{kidger2021signatory} for details.
However, when the static feature map $\kernelfeatures$ is high- or infinite-dimensional, this is not feasible and we discuss a kernel trick further below.

\begin{remark} \label{remark:truncation}
We used the map $\iota$, as defined in \eqref{eq:rfsf:embed_1px}, to embed $\Hil$ into $\HilT$.
Other choices are possible, for example one could use the embedding $\hat\iota: \Hil \to \HilT$ for $\bh \in \Hil$
\begin{align}\label{eq:rfsf:embed_exp}
  \hat\iota(\bh) = \pars{1, \bh, \frac{\bh^{\otimes 2}}{2!}, \frac{\bh^{\otimes 3}}{3!},\ldots} {\in\HilT}.
\end{align}
This embedding is actually the classical choice in mathematics, but different choices of the embedding lead to, besides potential improvements in benchmarks, mildly different computational complexities and interesting algebraic questions  \cite{diehl2023generalized,toth2021seq2tens, toth2022capturing}.
\end{remark}

Finally, it can be useful to only consider the first $M \in \bbZ_+$ tensors in the series $\signature(\bx)$ analogously to using the first $M$ moments in classic polynomial regression to avoid overfitting. Hence, we define the $M$-truncated signature features for $M \in \bbZ_+$ as
\begin{align}\label{eq:rfsf:sig trunc}
 \signature[\leq M](\bx) = \pars{1,\signature[1](\bx),\ldots,\signature[M](\bx), \mathbf{0}, \mathbf{0}, \dots } \quad \text{for} \spc \bx \in \Seq(\cX),
\end{align}
where $\signature[m](\bx)$ is the projection of $\signature(\bx)$ onto $\Hil^{\otimes m}$.
In practice, we regard $M \in \bbZ_+$, and the choice of the embedding as hyperparameters to optimize. From now on, we will focus on the order-$1$ case, but we note that all algorithms can be generalized fully to the higher-order case.

\paragraph{Signature Kernels}
The signature is a powerful feature set for nonlinear regression on paths and sequences. 
A computational bottleneck associated with it is the dimensionality of the feature space $\HilT$. As we are dealing with tensors, for $\Hil$ finite-dimensional $\signature[m](\bx)$ is a tensor of degree-$m$ which has $\pars{\dim \Hil}^{m}$ coordinates that need to be computed. This can quickly become computationally expensive. For infinite-dimensional $\Hil$, e.g.~when $\Hil$ is a reproducing kernel Hilbert space (\RKHS), which is one of the most interesting settings due to the modelling flexibility, it is infeasible to directly compute $\signature$. 
In Section \ref{sec:back:discrete}, we discuss a kernel trick allows to compute the inner product of discretized signature features \eqref{eq:rfsf:sig_feat} up to a given degree $M \in \bbZ_+$ using dynamic programming, even when $\Hil$ is infinite-dimensional.
As mentioned before, here we focus on discrete-time, and our starting point is the approach of order-$1$ discretized signature kernel, which is often referred to as a non-geometric approximation to the signature kernel \cite{diehl2023generalized}. For the continuous-time treatment of signature kernels, see Section \ref{sec:back:sigkernels}.

Above we described a generic way to turn a static feature map $\kernelfeatures: \cX \to \Hil$ into a feature map $\signature[\leq M](\bx)$ for sequences, see \eqref{eq:rfsf:sig_feat}. The signature kernel is a powerful formalism that allows to transform any static kernel on $\cX$ into a kernel for sequences that evolve in $\cX$. Let $\kernel: \cX \times \cX \to \bbR$ be a continuous and bounded kernel, and from now on, let $\Hil$ denote its \RKHS, and $\kernelfeatures(\bx) = \kernel_\bx \equiv \kernel(\bx, \cdot)$ the associated reproducing kernel lift for $\bx \in \cX$. We define the $M$-truncated (discretized) signature kernel $\sigkernel[\leq M]: \Seq(\cX) \times \Seq(\cX) \to \bbR$ for $M \in \bbZ_+$ as
\begin{align}
    \sigkernel[\leq M](\bx, \by)
    &=
    \inner{\signature[\leq M](\bx)}{\signature[\leq M](\by)}_{\HilT}
    =
    \sum_{m=0}^M \inner{\signature[m](\bx)}{\signature[m](\by)}_{\Hil^{\otimes m}}
    \\
    &=
    \sum_{m=0}^M \sigkernel[m](\bx, \by)
    =
    \sum_{m=0}^M \sum_{\substack{\bi \in \Delta_m(\len{\bx})\\\bj \in \Delta_m(\len{\by})}} \delta_{i_1, j_1} \kernel(\bx_{i_1}, \by_{j_1}) \cdots \delta_{i_m, j_m} \kernel(\bx_{i_m}, \by_{j_m}), \label{eq:rfsf:sigkernel_def}
\end{align}
where we defined the level-$m$ (discretized) signature kernel $\sigkernel[m]: \Seq(\cX) \times \Seq(\cX) \to \bbR$ for $m \in [M]$ as $\sigkernel[m](\bx, \by) = \inner{\signature[m](\bx)}{\signature[m](\by)}_{\Hil^{\otimes m}}$, and $\delta_{i,j}$ denotes 2\textsuperscript{nd}-order cross-differencing such that $\delta_{i, j} \kernel (\bx_i, \by_j) = \kernel(\bx_{i+1}, \by_{j+1}) - \kernel(\bx_{i+1}, \by_{j}) - \kernel(\bx_{i}, \by_{j+1}) + \kernel(\bx_{i}, \by_{j})$ for $i \in [\len{\bx}]$ and $j \in [\len{\by}]$.

Although \eqref{eq:rfsf:sigkernel_def} looks expensive to compute, \cite{kiraly2019kernels} applies dynamic programming to efficiently compute $\sigkernel[\leq M]$ using a recursive algorithm. Importantly, \eqref{eq:rfsf:sigkernel_def} avoids computing tensors, and only depends on the entry-wise evaluations of the static kernel $\kernel(\bx_i, \by_j)$. Indeed, this leads to a computational cost of $O((M + d) \len{\bx} \len{\by})$, which is feasible for sequences evolving in high-dimensional state-spaces, but only with moderate sequence length. Note that the same bottleneck applies to PDE-based approaches. In part, the aim of this article is to alleviate this quadratic cost in sequence length, while approximately enjoying the modelling capability of working within an infinite-dimensional \RKHS. 

\subsection{Random Fourier Features}

Kernel methods allow to implicitly use an infinite-dimensional feature map $\kernelfeatures:\cX \to \Hil$ by evaluation of the inner product $\kernel(\bx,\by)= \langle \kernelfeatures(\bx), \kernelfeatures(\by) \rangle_\Hil$, when $\cH$ is a Hilbert space. This inner product can often be evaluated without direct computation of $\kernelfeatures(\bx)$ and $\kernelfeatures(\by)$ via the kernel trick. Although this makes them a powerful tool due to the resulting flexibility, the price of this flexibility is a trade-off in complexity with respect to the number of samples $N \in \bbZ_+$. Disregarding the price of evaluating the kernel $\kernel(\bx, \by)$ momentarily, kernel methods require the computation of a Gram matrix with $O(N^2)$ entries, that further incurs an $O(N^3)$ computational cost by most downstream algorithms, such as \texttt{KRR} \cite{shawe2004kernel}, \texttt{GP} \cite{rasmussen2006}, and \texttt{SVM} \cite{scholkopf2002learning}. Several techniques reduce this complexity, and the focal point of this article is the Random Fourier Feature (\RFF) technique of \cite{rahimi2007random, rahimi2008uniform, rahimi2008weighted}, which can be applied to any continuous, bounded, translation-invariant kernel on $\bbR^{d}$.\footnote{A kernel is called translation-invariant if $\kernel(\bx, \by) = \kernel(\bx+\bz, \by+\bz)$ for any $\bx, \by, \bz \in \bbR^{d}$.} Throughout, we write with some abuse of notation $\kernel(\bx-\by) \equiv \kernel(\bx, \by)$. 

A corollary of Bochner's theorem \cite{rudin2017fourier} is that any continuous, bounded, and translation-invariant kernel $\kernel: \bbR^{d} \times \bbR^{d} \to \bbR$ can be represented as the Fourier transform of a non-negative finite measure $\Lambda$ called the spectral measure associated to $\kernel$, i.e. for $\bx, \by \in \cX$
\begin{align} \label{eq:rfsf:bochner}
    \kernel(\bx - \by) = \int_{\bbR^d} \exp(i \bw^\top(\bx - \by)) \d\Lambda(\bw).
\end{align}
We may, without loss of generality, assume that $\Lambda$ is a probability measure such that $\Lambda(\bbR^{d}) = 0$, which amounts to working with the kernel $\kernel(\bx - \by) / \kernel(\mathbf{0})$. \cite{rahimi2007random} proposed to draw $\dimRFF \in \bbZ_+$ $\iid$ 
samples from $\Lambda$, $\bw_1,\ldots,\bw_{\dimRFF} \stackrel{\iid}{\sim} \Lambda$, to define the random feature map for $\bx \in \cX$ by
\begin{align} \label{eq:rfsf:rff_def}
    \rff: \cX \to \HilRFF = \bbR^{2\dimRFF}, \quad \rff(\bx) = \frac{1}{\sqrt{\tilde d}}\pars{\cos\pars{\bW^\top \bx}, \sin{\pars{\bW^\top \bx}}},     
\end{align}
where $\bW = \pars{\bw_i}_{i=1}^{\dimRFF} \in \bbR^{d \times \dimRFF}$. Then, the corresponding random kernel is defined for $\bx, \by \in \cX$ as 
\begin{align} \label{eq:rfsf:rffkernel_def}
    \rffkernel: \cX \times \cX \to \bbR, \quad \rffkernel(\bx, \by) = \inner{\rff(\bx)}{\rff(\by)}_{\HilRFF} = \frac{1}{\dimRFF} \sum_{i=1}^{\dimRFF} \cos\pars{\bw_i^\top(\bx - \by)} 
\end{align}
to provide a probabilistic approximation to $\kernel$. Indeed, it is a straightforward exercise to check that $\kernel(\bx, \by) = \expe{\rffkernel(\bx, \by)} \approx\rffkernel(\bx, \by)$.
This approximation converges exponentially fast in $\tilde d$ and uniformly over compact subsets of $\bbR^d$ as proven in \cite[Claim~1]{rahimi2007random}. This bound was later tightened and extended to the derivatives of the kernel in the series of works \cite{sriperumbudur2015optimal,szabo2019kernel, chamakh2020orlicz}, and we provide an adapted version under Theorem~\ref{thm:rff_derivative_approx} in the supplement. 

\section{Methodology} \label{sec:rfsf:RFSF}
The goal of this section is to build random features for sequences, that enjoy the benefit of linear sequence length and low-dimensional feature complexity with theoretical guarantees that the corresponding inner product is close to the $M$-truncated (discretized) signature kernel $\sigkernel[\leq M]$ with high probability. We construct these random features in a two step process: firstly, we reduce the feature space from infinite to finite (but high) dimensionality through a careful construction using random Fourier features (\RFF s), and in the second step we apply further dimensionality reduction to reduce the complexity to an even lower dimensional space in order to aid in scalability. Although we present this construction as distinct steps, the steps are coupled during the computation, and the features can be computed directly without the initial step.

\subsection{Random Fourier Signature Features} In Section~\ref{sec:rfsf:prereq}, we recalled the \RFF{} construction, which associates to a continuous, bounded, translation-invariant kernel $\kernel: \cX \times \cX \to \bbR$ on $\cX$ a spectral measure $\Lambda$, and approximates $\kernel$ by drawing samples from $\Lambda$ to define the random features $\rff: \cX \to \HilRFF$ \eqref{eq:rfsf:rff_def}, and the random kernel $\rffkernel: \cX \times \cX \to \bbR$ \eqref{eq:rfsf:rffkernel_def}. Afterwards, we presented a generic way to turn any such static features $\rff: \cX \to \HilRFF$ for elements of $\cX$ into sequence features for sequences that evolve in $\cX$ via $\signature[\leq M]: \Seq(\cX) \to \HilT$. Applying this construction with the \RFF{} as feature map on $\cX$ would already result in a random feature map for sequences, i.e.~a map from $\Seq(\cX)$ into $\HilRFFT$.
Taking the inner product in $\HilRFFT$ of this new random feature map for sequences would, however, only yield a biased estimator for the truncated signature kernel $\sigkernel[\leq M]$.
We correct for this bias by revisiting our previous construction, and build an unbiased approximation to $\sigkernel[\leq M]$ using independent \RFF{} copies in each tensor multiplication step. Then, we show in Theorem \ref{thm:main} that this random estimator comes with good probabilistic guarantees.

The probabilistic construction procedure is outlined in the following definition.

\begin{definition} \label{def:rffsig_def}
Let $\bW^{(1)}, \dots, \bW^{(M)} \stackrel{\iid}{\sim} \Lambda^{\dimRFF}$ be $\iid$ random matrices sampled from $\Lambda^{\dimRFF}$ for \RFF{} dimension $\dimRFF \in \bbZ_+$, and define the independent \RFF{} maps $\rff_m: \cX \to \HilRFF$ as in \eqref{eq:rfsf:rff_def}, i.e.~$\rff_m(\bx) = \frac{1}{\sqrt{\dimRFF}}\pars{\cos({{\bW^{(m)}}^\top} \bx), \sin({\bW^{(m)}}^\top \bx)}$ for $m \in \bracks{M}$ and $\bx \in \cX$. The $M$-truncated Random Fourier Signature Feature (\RFSF) map $\rffsig[\leq M]: \Seq(\cX) \to \HilRFFT$ from sequences in $\cX$ into the free algebra over $\HilRFF$ is defined for truncation level $M \in \bbZ_+$ and $\bx \in \Seq(\cX)$ as
\begin{align}
\rffsig[\leq M](\bx) = \pars{\sum_{\bi \in \Delta_m(\len{\bx})} \delta \rff_1(\bx_{i_1}) \otimes \cdots \otimes \delta \rff_m(\bx_{i_m})}_{m=0}^M.
\label{eq:rfsf:rffsigdef}
\end{align}
Further, the \RFSF{} kernel $\rffsigkernel[\leq M]: \Seq(\cX) \times \Seq(\cX) \to \bbR$ can be computed for $\bx, \by \in \Seq(\cX)$ as
\begin{align}
    \rffsigkernel[\leq M](\bx,\by) &= \inner{\rffsig[\leq M](\bx)}{\rffsig[\leq M](\by)}_{\HilRFFT}
    =
    \sum_{m=0}^M \inner{\rffsig[m](\bx)}{\rffsig[m](\by)}_{\HilRFF^{\otimes m}}
    \\
    &=
    \sum_{m=0}^M \rffsigkernel[m](\bx, \by)
    =
    \sum_{m=0}^M \sum_{\substack{\bi \in \Delta_m(\len{\bx})\\\bj \in \Delta_m(\len{\by})}} \delta_{i_1, j_1} \tilde\kernel_1(\bx_{i_1}, \by_{j_1}) \cdots \delta_{i_m, j_m} \tilde\kernel_m(\bx_{i_m}, \by_{j_m}), \label{eq:rfsf:rffsigkernel_def}
\end{align}
where we defined the level-$m$ \RFSF{} kernel $\rffsigkernel[m]: \Seq(\cX) \times \Seq(\cX) \to \bbR$ for $m \in \bbN$ as $\rffsigkernel[m](\bx, \by) = \inner{\rffsig[m](\bx)}{\rffsig[m](\by)}_{\HilRFF^{\otimes m}}$ with the convention that $\rffsigkernel[0] \equiv 1$, and $\rffkernel_1, \dots, \rffkernel_M: \cX \times \cX \to \bbR$ are independent \RFF{} kernels defined as in \eqref{eq:rfsf:rffkernel_def} with the random weights $\bW^{(1)}, \dots, \bW^{(M)} \in \bbR^{d \times \dimRFF}$.
\end{definition}

Since the feature map $\rffsig[\leq M]$ can be directly evaluated in the feature space recursively, $\rffsigkernel[\leq M]$ has linear complexity in the sequence length. However, it requires computing high-dimensional tensors, where the degree-$m$ component $\rffsig[m](\bx) \in \HilRFF^{\otimes m}$ has $(\dim \HilRFF)^m = (2\dimRFF)^m$ coordinates, making it infeasible for large $m, \dimRFF \in \bbZ_+$. Remark \ref{remark:alg_RFSF} discusses the computational complexity in detail. Further, note that the kernel can be evaluated by means of a kernel trick exactly analogously to the evaluation of \eqref{eq:rfsf:sigkernel_def}, but in this case there are no computational gains compared to the infinite-dimensional signature kernel $\sigkernel[\leq M](\bx, \by)$.

Next, we provide a theoretical analysis to show that the random kernel $\rffsigkernel[\leq M](\bx, \by)$ converges to the ground truth signature kernel $\sigkernel[\leq M](\bx, \by)$ exponentially fast and uniformly over compact state-spaces $\cX \subseteq \bbR^d$, generalizing the result \cite[Claim~2]{rahimi2007random} to this non-Euclidean domain of sequences.
Throughout the analysis, we need certain regularity properties of $\Lambda$ in order to invoke quantitative versions of the law of large numbers, i.e.~properties such as boundedness, existence of the moment-generating function, moment-boundedness, or belonging to certain Orlicz spaces of random variables. Boundedness of the spectral measure is too restrictive an assumption, since a continuous, bounded, translation-invariant kernel $\kernel: \cX \times \cX \to \bbR$ is characteristic if and only if the support of its spectral measure is $\bbR^d$, see \cite[Prop.~8]{sriperumbudur2010relation}. Hence, we instead work with the assumption that its moments are well-controllable, i.e.~the tails of the distribution are not ``too heavy''. Specifically, we assume the Bernstein moment condition that
\begin{align} \label{eq:rfsf:w_cond_main}
    \bbE_{\bw \sim \Lambda}\bracks{w_i^{2m}} \leq \frac{m! S^2 R^{m-2}}{2} \quad \text{for all} \spc i \in [d]
\end{align}
for some $S, R > 0$. We show in the Supplementary Material under Lemmas \ref{lem:alpha_bernstein_cond} and \ref{lem:alpha_exp_norm_to_bernstein}, in a more general context, that this is equivalent to $\Lambda$ being a sub-Gaussian probability measure; see e.g.~\cite[Sec~2.3]{boucheron2013concentration} and \cite[Sec.~2.5]{vershynin2018high} about sub-Gaussianity. This of course includes the spectral measure of the Gaussian kernel defined for bandwidth $\sigma > 0$ and $\bx, \by \in \cX$ $\kernel(\bx, \by) = \exp\pars{-\nicefrac{\norm{\bx - \by}_2^2}{2\sigma^2}}$, which has a Gaussian spectral distribution $\bw \sim \cN\pars{0, \nicefrac{1}{\sigma^2} \b I_d}$, and therefore calculation gives $\bbE_{w \sim \cN\pars{0, \nicefrac{1}{\sigma^2}}}\bracks{w^{2m}} = \frac{2^m \Gamma\pars{m + \frac{1}{2}})}{ \sigma^{2m}\sqrt{\pi}} < \frac{m!}{2} \pars{\frac{2\sqrt{2}}{\sigma^2 \sqrt[4]{\pi}}}^2\pars{\frac{2}{\sigma^2}}^{m-2}$,
since $\Gamma\pars{m + \nicefrac{1}{2}} < \Gamma(m + 1) = m!$. Hence $\Lambda$ satisfies condition \eqref{eq:rfsf:w_cond_main} with $S, R$ as given here.
Now we state our approximation theorem regarding $\rffsigkernel[m]$, which quantifies that it is a (sub-)exponentially good estimator of $\sigkernel[m]$ with high probability and uniformly.

\begin{theorem} \label{thm:main}
    Let $\kernel: \bbR^d \times \bbR^d \to \bbR$ be a continuous, bounded, translation-invariant
    kernel with spectral measure $\Lambda$, which satisfies \eqref{eq:rfsf:w_cond_main}.
    Let $\cX \subset \bbR^d$ be compact and convex with diameter $\abs{\cX}$, $\cX_\Delta = \{\bx - \by : \bx, \by \in \cX \}$. Then, the following quantities are finite: $\sigma_\Lambda^2 = \bbE_{\bw \sim \Lambda}\bracks{\norm{\bw}_2^2}$, $L = \norm{\bbE_{\bw \sim \Lambda}\bracks{\bw \bw^\top}}_2^{1/2}$, $E_{i,j} = \bbE_{{\bw \sim \Lambda}}\bracks{\abs{w_i w_j} \norm{\bw}_2}$ and $D_{i, j} = \sup_{\bz \in \cX_\Delta} \norm{\nabla \bracks{\frac{\partial^2\kernel(\bz)}{\partial z_i \partial z_j}}}_2$ for $i, j \in [d]$. Further, for any max.~sequence $1$-var $V>0$, and signature level $m \in \mathbb{Z}_+$, for $\epsilon > 0$
    \begin{align}
        \bbP & \bracks{\sup_{\substack{\bx, \by \in \Seq(\cX) \\ \norm{\bx}_\onevar, \norm{\by}_\onevar \leq V}} \abs{\sigkernel[m](\bx, \by) - \rffsigkernel[m](\bx, \by)} \geq \epsilon } \le
        \\
        &\leq
        m
        \begin{cases}
        \pars{C_{d, \cX} \pars{\frac{\beta_{d, m, V}}{\epsilon}}^\frac{d}{d+1} + d}
        \exp\pars{-\frac{\dimRFF}{2(d+1)(S^2 + R)} \pars{\frac{\epsilon}{\beta_{d, m, V}}}^{2}} \,&\text{ for }\, \epsilon < \beta_{d, m, V}  \\
        \pars{C_{d, \cX} \pars{\frac{\beta_{d, m, V}}{\epsilon}}^{\frac{d}{(d+1)m}} + d}
        \exp\pars{-\frac{\dimRFF}{2(d+1)(S^2 + R)} \pars{\frac{\epsilon}{\beta_{d, m,v}}}^{\frac{1}{m}} } \,&\text{ for }\, \epsilon \geq \beta_{d, m, V},
        \end{cases}
        \label{eq:rfsf:mainthm_bound}
    \end{align}
    where $C_{d, \cX} = 2^\frac{1}{d+1} 16 \abs{\cX}^\frac{d}{d+1} \sum_{i,j=1}^d (D_{i,j} + E_{i,j})^\frac{d}{d+1}$ and $\beta_{d, m, V} = m \pars{2 V^{2} \pars{L^2 \vee 1} \pars{\sigma_\Lambda^2 \vee d}}^m$.
\end{theorem}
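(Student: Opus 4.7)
The plan is to reduce the uniform approximation of $\sigkernel[m]$ by $\rffsigkernel[m]$ to the uniform approximation of the static kernel $\kernel$ and (crucially) its mixed partial derivatives by the independent \RFF{} copies $\rffkernel_1, \dots, \rffkernel_m$, which is quantitatively controlled by the derivative-enhanced \RFF{} bound (Theorem \ref{thm:rff_derivative_approx}). The starting point is the telescoping identity
\begin{align}
    \sigkernel[m](\bx,\by) - \rffsigkernel[m](\bx,\by)
    = \sum_{\substack{\bi\in\Delta_m(\len{\bx})\\\bj\in\Delta_m(\len{\by})}} \sum_{k=1}^m \Bigl(\prod_{r<k}\delta_{i_r,j_r}\kernel(\bx_{i_r},\by_{j_r})\Bigr)\,\delta_{i_k,j_k}(\kernel-\rffkernel_k)(\bx_{i_k},\by_{j_k})\,\Bigl(\prod_{r>k}\delta_{i_r,j_r}\rffkernel_r(\bx_{i_r},\by_{j_r})\Bigr).
\end{align}

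The key observation is that second-order cross-differences can be represented, by the fundamental theorem of calculus, as an integral of the mixed partial $\nabla_{\bx}\nabla_{\by}$ of the underlying function against the increments; in particular,
\begin{align}
    \abs{\delta_{i,j}f(\bx,\by)} \le \norm{\delta\bx_i}_2\,\norm{\delta\by_j}_2 \sup_{\bu,\bv\in\cX}\norm{\nabla_{\bx}\nabla_{\by}f(\bu,\bv)}_2.
\end{align}
Applying this with $f=\kernel$, $f=\rffkernel_r$, and $f=\kernel-\rffkernel_k$ factorizes each summand into (i) products of increment norms $\norm{\delta\bx_{i_r}}_2\,\norm{\delta\by_{j_r}}_2$, and (ii) the suprema of the mixed partial derivatives of $\kernel$, of the $\rffkernel_r$, and of the discrepancy $\kernel-\rffkernel_k$. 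Summing the increment products over ordered multi-indices $\bi\in\Delta_m(\len{\bx})$, $\bj\in\Delta_m(\len{\by})$ and using that $\sum_{\bi\in\Delta_m(\len\bx)}\prod_{r}\norm{\delta\bx_{i_r}}_2 \le \norm{\bx}_\onevar^m/m!$ absorbs the combinatorial sum into a $V^{2m}/(m!)^2$ factor, depending only on the $1$-variation bound. The translation-invariant suprema $\sup_{\bz\in\cX_\Delta}\norm{\nabla\partial^2_{i,j}\kernel(\bz)}_2$ and $\sup_{\bz\in\cX_\Delta}\abs{\partial^2_{i,j}\kernel(\bz)}$ are controlled through Bochner's representation by the moments $L^2$ and $\sigma_\Lambda^2$ of $\Lambda$, which accounts for the combinatorial factor $\beta_{d,m,V}$ in the statement.

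With this reduction in place, it remains to bound the two random pieces: for each $k\in[m]$, the error $\eps_k := \sup_{\bu,\bv\in\cX}\norm{\nabla_\bu\nabla_\bv(\kernel-\rffkernel_k)(\bu,\bv)}_2$, and for each $r\ne k$, the perturbation of $\rffkernel_r$'s mixed derivative from $\kernel$'s. Both are supplied by Theorem \ref{thm:rff_derivative_approx} applied to each of the $m$ independent \RFF{} kernels, which under the Bernstein moment condition \eqref{eq:rfsf:w_cond_main} gives sub-Gaussian uniform deviation estimates of the form $\bbP(\eps_k\ge\eta) \le (C_{d,\cX}\eta^{-d/(d+1)}+d)\exp(-\dimRFF\eta^2/[2(d+1)(S^2+R)])$, with an analogous bound on the (derivative-free) $\sup\abs{\kernel-\rffkernel_k}$ needed to replace $\rffkernel_r$ by $\kernel$ in the factors $r>k$. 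Combining these via a union bound over $k\in[m]$, and allocating the total budget $\epsilon$ across the telescoping terms, yields the claim. The main obstacle, and the source of the piecewise structure in \eqref{eq:rfsf:mainthm_bound}, is that when $\epsilon$ is large each factor $\rffkernel_r$ (for $r>k$) must itself only be controlled up to order-$\epsilon^{1/m}$ (rather than order-one), so the effective per-factor deviation behaves like $\eta\sim\epsilon^{1/m}/\beta_{d,m,V}^{1/m}$; substituting this into the sub-Gaussian RFF tail produces the slower $\exp(-c\,\dimRFF\,\epsilon^{2/m})$ rate in the large-$\epsilon$ regime, while in the small-$\epsilon$ regime one can instead bound all $r\ne k$ factors by their deterministic ceiling and retain the standard $\exp(-c\,\dimRFF\,\epsilon^2)$ rate. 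Tracking the constants through this optimization, together with $\beta_{d,m,V}$ absorbing the contributions of $V^{2m}$, $L^{2m}$, and $(\sigma_\Lambda^2\vee d)^m$, produces exactly the two-regime statement.
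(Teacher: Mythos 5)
Your overall skeleton coincides with the paper's: the telescoping over the level index is exactly the unrolled form of the Chen-type recursion used in Lemma \ref{lem:RFSF_approx}, your second-order mean-value bound is Lemma \ref{lem:second_mvt}, the $1$-variation combinatorics are those of Lemma \ref{lem:1var_sig_norm}, and the per-copy control comes from Theorem \ref{thm:rff_derivative_approx} together with a union bound over the $m$ positions, with the phase transition arising from a per-factor budget of order $\epsilon^{1/m}$. However, two steps fail as written. First, the factors $r\neq k$ carrying $\rffkernel_r$ have no ``deterministic ceiling'': after the mean-value step they are controlled by $\sup_{\bs,\bt}\norm{\partial^2_{\bs,\bt}\rffkernel_r(\bs,\bt)}_2\leq\frac{1}{\dimRFF}\sum_{j}\norm{\bw_j^{(r)}}_2^2$, an unbounded random variable whenever $\Lambda$ has unbounded support, which is the relevant case (the paper explicitly notes that bounded spectral measures are too restrictive, e.g.\ the Gaussian kernel). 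So in your small-$\epsilon$ branch these factors must themselves be controlled probabilistically; the paper does this via the arithmetic-geometric mean inequality and the Bernstein inequality (Theorem \ref{thm:bernstein_onetail}) applied to the Frobenius norms of $\bW^{(1)},\dots,\bW^{(k-1)}$, and this is precisely where the additive $d$ in the prefactor and the $\pars{\sigma_\Lambda^2\vee d}$ inside $\beta_{d,m,V}$ come from. (Replacing each $\rffkernel_r$ by $\kernel$ up to an error, as you suggest in passing, is a viable alternative, but it is the second-derivative discrepancy you need there, not the derivative-free $\sup\abs{\kernel-\rffkernel_r}$ you mention.)

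Second, your large-$\epsilon$ regime rests on reading Theorem \ref{thm:rff_derivative_approx} as a globally sub-Gaussian tail; it is not. Its exponent has the Bernstein form $\dimRFF\eta^2/\pars{4(d+1)(2S^2+R\eta)}$, sub-Gaussian only for $\eta$ of order at most one and sub-exponential for large $\eta$. With the per-factor budget $\eta\sim\pars{\epsilon/\beta_{d,m,V}}^{1/m}\geq 1$ in the regime $\epsilon\geq\beta_{d,m,V}$, the correct substitution yields $\exp\pars{-c\,\dimRFF\,\pars{\epsilon/\beta_{d,m,V}}^{1/m}}$, which is exactly what the statement asserts; your claimed $\exp\pars{-c\,\dimRFF\,\epsilon^{2/m}}$ is strictly stronger than what this route (or the theorem) delivers and does not follow from the available concentration. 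The repair is mechanical---carry the Bernstein denominator through and use $x^2/(S^2+Rx)\geq x/(S^2+R)$ for $x\geq 1$, as the paper does---but as stated the step is incorrect, and together with the first point it is what separates your sketch from a complete proof.
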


The proof is provided in the supplement under Theorem \ref{thm:rfsf_approx}. The result shows that the random kernel $\rffsigkernel[m]$ approximates the signature kernel $\sigkernel[m]$ uniformly over subsets of $\Seq(\cX)$ of sequences $\bx \in \Seq(\cX)$ with maximal $1$-variation $V$, $\norm{\bx}_\onevar \leq V$, assuming that the state-space $\cX \subset \bbR^d$ is a convex and compact domain.
The error bound is analogous to the classic \RFF{} bounds, in the sense that the tail probability decreases exponentially fast as a function of the \RFF{} dimension $\dimRFF$. The functional form of the bound is inherited from Theorem \ref{thm:rff_derivative_approx}, which provides an analogous result for the derivatives of \RFF. This link follows from Lemma \ref{lem:RFSF_approx}, which connects the concentration of the \RFSF{} kernel to the second derivatives of \RFF.

The main difference from the classic case, i.e.~\cite[Claim~1]{rahimi2007random} and Theorem \ref{thm:rff_derivative_approx}, is the appearance of $\beta_{d,m,V}$ which controls a regime change in the tail behaviour. Concretely, for $\epsilon < \beta_{d, m, V}$ \eqref{eq:rfsf:mainthm_bound} has a polynomial plus a sub-Gaussian tail, while for $\epsilon > \beta_{d, m, V}$ has a $\pars{\nicefrac{1}{m}}$-subexponential tail.
This is not surprising as the inner summand in \eqref{eq:rfsf:rffsigkernel_def} is the $m$-fold tensor product of $m$ independent \RFF{} kernels, which makes the tail heavier exactly by an exponent of $\nicefrac{1}{m}$.
The constant itself, $\beta_{d,m,V}$, depends on \begin{enumerate*}[label=(\roman*)] \item the maximal sequence 1-variation $V$, which measures a notion of time-warping invariant sequence complexity;
\item the Lipschitz constant of the kernel $L$ (see Examples \ref{example:2ndmoment_lip} and \ref{example:rff_lip}); \item the trace of the second moment of $\Lambda$, $\sigma_\Lambda^2 = \bbE_{\bw \sim \Lambda}\bracks{\norm{\bw}_2^2}$; \item the state-space dimension $d$; \item and the signature level $m$ itself. \end{enumerate*} 

\begin{remark} \label{remark:alg_RFSF}
    Algorithm \ref{alg:rfsf:rfsf} demonstrates the computation of the \RFSF{} map $\rffsig[\leq M]$ given a dataset of sequences $\bX = (\bx_i)_{i=1}^{N} \subset \Seq(\cX)$. Upon inspection, we can deduce that the algorithm has a computational complexity of $O\pars{N \ell (M d \dimRFF^M)}$. Importantly, it is linear in $\ell$, the sequence length, although scales polynomially in the \RFF{} sample size $\dimRFF^M$, and exponentially in the truncation-$M$.
\end{remark}

\subsection{Dimensionality Reduction: Diagonal Projection} Previously, we introduced a featurized approximation $\rffsigkernel[\leq M]$ to the signature kernel $\sigkernel[\leq M]$, called the \RFSF{} kernel, which reduces the computation from the infinite-dimensional \RKHS{} to a finite-dimensional feature space using random tensors. Although this makes the computation in the feature space viable of the \RFSF{} map $\rffsig[\leq M]$, it is still tensor-valued, which incurs a computational cost of $O(\dimRFF + \dimRFF^2 + \cdots + \dimRFF^m)$ in the \RFF{} dimension $\dimRFF \in \bbZ_+$. Now, we take another step towards scalability and apply further dimensionality reduction. By examining the structure of these tensors, we introduce a diagonally projected variant called \RFSFD{} that considerably reduces their sizes. We emphasize that the above \RFSF{} construction is crucial: it approximates the inner product in an infinite-dimensional space, and now we further approximate it in an even lower dimensional space. The benefit is that one does not have to go through the computation of the initial \RFSF{} map, but only the selected degrees of freedom have to be computed.

As a first observation, we notice that the computation of \eqref{eq:rfsf:rffsigkernel_def} can be reformulated, due to \eqref{eq:rfsf:rffkernel_def} and linearity of the differencing operator, in the following way:
\begin{align}
    \rffsigkernel[m](\bx, \by)
    =
    \frac{1}{\dimRFF^m} \sum_{q_1, \dots, q_m = 1}^{\dimRFF}  \sum_{\substack{\bi \in \Delta_m(\len{\bx}) \\ \bj \in \Delta_m(\len{\by})}} \prod_{p=1}^m \delta_{i_p, j_p} \cos\pars{{\bw_{q_p}^{(p)}}^\top (\bx_{i_p} - \by_{j_p})} \label{eq:rfsf:rffsigkernel_explicit}
\end{align}
by spelling out the definition of the \RFF{} kernel, where $\bw_{1}^{(1)}, \dots, \bw_{\dimRFF}^{(m)} \stackrel{\iid}{\sim} \Lambda$, such that $\bW^{(p)} = \pars{\bw_1^{(p)}, \dots, \bw_{\dimRFF}^{(p)}} \in \bbR^{d \times \dimRFF}$ as defined in Def.~\ref{def:rffsig_def}. Now, we may observe that there is a dependency structure among the samples being averaged in \eqref{eq:rfsf:rffsigkernel_explicit}, since the outer summation is over the Cartesian product $(q_1, \dots, q_m) \in [\dimRFF]^{\times m}$, which suggests that we might be able to drastically reduce the degrees of freedom by restricting this summation to only go over an independent set of samples.
One way to do this is to restrict to multi-indices of the form $\cI = \curls{(q, \dots, q) \in [\dimRFF]^{\times m} \given q \in [\dimRFF]}$, i.e.~we diagonally project the index set, motivating the name of the approach stated in the following definition.
\begin{definition} \label{def:rffsigdp}
    Let $\bw^{(1)}_1, \dots, \bw^{(M)}_{\dimRFF} \stackrel{\iid}{\sim} \Lambda$ for $\dimRFF \in \bbZ_+$, and define $\hat\kernelfeatures_{m,q}: \cX \to \hat\Hil = \bbR^2$ with sample size $\hat d = 1$ for $q \in [\dimRFF]$ and $m \in [M]$, such that $\hat\kernelfeatures_{m,q}(\bx) = \pars{\cos({\bw^{(m)}_q}^\top \bx), \sin({\bw^{(m)}_q}^\top \bx)}$ for $\bx \in \cX$. The $M$-truncated Diagonally Projected Random Fourier Signature Feature (\RFSFD) map $\rffsigDP[\leq M]: \Seq(\cX) \to \HilRFFTDP = \bigoplus_{m = 0}^M \pars{{\hat\Hil}^{\otimes m}}^{\dimRFF}$ is defined for truncation $M \in \bbZ_+$ and $\bx \in \Seq(\cX)$ as
    \begin{align}
        \rffsigDP[\leq M](\bx) = \frac{1}{\sqrt{\dimRFF}} \pars{\pars{\sum_{\bi \in \Delta_m(\len{\bx})} \delta \hat\kernelfeatures_{1,q}(\bx_{i_1}) \otimes \cdots \otimes \delta \hat\kernelfeatures_{m,q}(\bx_{i_m})}_{q=1}^{\dimRFF}}_{m=0}^M.
    \end{align}
Then, the \RFSFD{} kernel can be directly computed for $\bx, \by \in \Seq(\cX)$ via
\begin{align}
    \rffsigkernelDP[\leq M](\bx, \by)
    &=
    \inner{\rffsigDP[\leq M](\bx)}{\rffsigDP[\leq M](\by)}_{\HilRFFTDP} 
    =
    \sum_{m=0}^M \inner{\rffsigDP[m](\bx)}{\rffsigDP[m](\by)}_{\pars{{\hat\Hil}^{\otimes m}}^{\dimRFF}}
    \\
    &=
    \sum_{m=0}^M \rffsigkernelDP[m](\bx, \by)
    =
    \frac{1}{\dimRFF} \sum_{m=0}^M \sum_{q=1}^{\dimRFF}  \mathrlap{\sum_{\substack{\bi \in \Delta_m(\ell_{\bx}-1)\\\bj \in \Delta_m(\ell_{\by}-1)}}} \hspace{38pt} \delta_{i_1, j_1} \hat\kernel_{1,q}(\bx_{i_1}, \by_{j_1}) \cdots \delta_{i_m, j_m} \hat\kernel_{m,q}(\bx_{i_m}, \by_{j_m}), \label{eq:rfsf:rffsigdpkernel_def}
\end{align}
    where we defined the level-$m$ \RFSFD{} kernel $\rffsigkernelDP[m]: \Seq(\cX) \times \Seq(\cX) \to \bbR$ for $m \in \bbN$ and $\bx, \by \in \Seq(\cX)$ as $\rffsigkernel(\bx, \by) = \inner{\rffsigDP[m](\bx)}{\rffsigDP[m](\by)}_{\pars{{\hat\Hil}^{\otimes m}}^{\dimRFF}}$ with the convention that $\rffsigkernelDP[0] \equiv 1$, and $\hat\kernel_{m,q}: \cX \times \cX \to \bbR$ are independent \RFF{} kernels with sample size $\hat d = 1$ defined for $\bx, \by \in \cX$ as $\hat\kernel_{m,q}(\bx, \by) = \inner{\hat\kernelfeatures_{m,q}(\bx)}{\hat\kernelfeatures_{m,q}(\by)}_{\hat\Hil}$ with the random weights $\bw^{(m)}_q \in \bbR^d$ for $q \in [\dimRFF], m \in [M]$.
\end{definition}

Note that by the definition of the \RFF{} kernels in \eqref{eq:rfsf:rffsigdpkernel_def}, we may substitute that $\hat\kernel_{p,q}(\bx, \by) = \cos({\bw^{(p)}_q}^\top(\bx - \by))$ for $\bx, \by \in \cX$, so \eqref{eq:rfsf:rffsigdpkernel_def} is equivalently written for $\bx, \by \in \Seq(\cX)$ as
\begin{align}
    \rffsigkernelDP[m](\bx, \by) = \frac{1}{\dimRFF} \sum_{q=1}^{\dimRFF} \mathrlap{\sum_{\substack{\bi \in \Delta_m(\ell_{\bx}-1)\\\bj \in \Delta_m(\ell_{\by}-1)}}} \hspace{40pt} \delta_{i_1, j_1} \cos({\bw^{(1)}_q}^\top(\bx_{i_1} - \by_{j_1})) \cdots \delta_{i_m, j_m} \cos({\bw^{(m)}_q}^\top(\bx_{i_m} - \by_{j_m})), 
\end{align}
which is what we set out to do in the above paragraph; that is, restrict the outer summation onto the diagonal projection of the index set.
Another way to look at Definition \ref{def:rffsigdp} is that the \RFSFD{} kernel in \eqref{eq:rfsf:rffsigdpkernel_def} is constructed by defining $\dimRFF$ independent \RFSF{} kernels, each with internal \RFF{} sample size $\hat d = 1$, and then taking their average; the concatenation of their corresponding features are then the features of the \RFSFD{} map. Note that for \RFF{} sample size $1$, each \RFF{} map has dimension $2$, i.e.~$\hat\Hil = \bbR^2$, and hence, the corresponding \RFSF{} kernels have dimension $1 + 2 + \cdots + 2^M = (2^{M+1} - 1)$, which by concatenation results in the overall dimensionality of the \RFSFD{} kernel being $\dim \HilRFFTTRP = \dimRFF\pars{2^{M+1} - 1}$. This relates to the computational complexity of the \RFSFD{} map; for details see Remark \ref{remark:alg_rfsf_dp}.

Next, we state our concentration result regarding the level-$m$ \RFSFD{} kernel $\rffsigkernelDP[m](\bx, \by)$.
\begin{theorem} \label{thm:main2}
    Let $\kernel: \bbR^d \times \bbR^d \to \bbR$ be a continuous, bounded, translation-invariant kernel with spectral measure $\Lambda$, which satisfies \eqref{eq:rfsf:w_cond_main}. Then, for level $m \in \bbZ_+$, $\bx, \by \in \Seq(\cX)$, and $\epsilon > 0$ 
    \begin{align} \label{eq:rfsf:mainthm2_bound}
        \bbP\bracks{\abs{\rffsigkernelDP[m](\bx, \by) - \sigkernel[m](\bx, \by)} \geq \epsilon}
        \leq
        2\exp\pars{-\frac{1}{4}\min\curls{
        \pars{\frac{\sqrt{\dimRFF} \epsilon}{2C_{d, m, \bx, \by}}}^2, 
        \pars{\frac{\dimRFF \epsilon}{\sqrt{8}C_{d, m, \bx, \by}}}^{\frac{1}{m}}
        }},
     \end{align}
    where $L = \norm{\bbE_{\bw \sim \Lambda}\bracks{\bw \bw^\top}}$ is the Lipschitz constant of $\kernel$, and $C_{d, m, \bx, \by} > 0$ is bounded by
    \begin{align}
    C_{d, m, \bx, \by}
    \leq
    \sqrt{8} e^4 (2\pi)^{1/4} e^{1/24} (4e^3\norm{\bx}_\onevar \norm{\by}_\onevar /m)^m \pars{\pars{2d\max(S, R)}^m + \pars{L^2/\ln 2}^m}.
    \end{align}
\end{theorem}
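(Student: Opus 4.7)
}
The first step is to rewrite the centred error as an empirical mean of i.i.d.\ random variables. By Definition \ref{def:rffsigdp}, the $q$-th spectral-sample contribution
\begin{align}
Z_q = \sum_{\bi \in \Delta_m(\len{\bx})} \sum_{\bj \in \Delta_m(\len{\by})} \prod_{p=1}^m \delta_{i_p, j_p} \cos\!\bigl({\bw_q^{(p)}}^\top(\bx_{i_p}-\by_{j_p})\bigr)
\end{align}
depends only on $\bw_q^{(1)},\dots,\bw_q^{(m)}$, and these blocks are independent across $q$. Combining independence of $\bw_q^{(p)}$ across the level index $p$ with Bochner's theorem yields $\bbE Z_q = \sigkernel[m](\bx,\by)$, so that
\begin{align}
\rffsigkernelDP[m](\bx,\by) - \sigkernel[m](\bx,\by) = \tfrac{1}{\dimRFF}\sum_{q=1}^{\dimRFF}\bigl(Z_q - \bbE Z_q\bigr),
\end{align}
reducing the problem to concentration for an average of $\dimRFF$ i.i.d.\ centred summands.

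The second and most delicate step is to control the tail of a single summand. I expect to prove a Bernstein-type moment bound of the form $\bbE|Z_q-\bbE Z_q|^r \le \tfrac{r!}{2}\,\sigma^2\,K^{r-2}$ for all integers $r\ge 2$, equivalently that $Z_q-\bbE Z_q$ is sub-Weibull with exponent $1/m$ and norm comparable to $C_{d,m,\bx,\by}$. Two per-level estimates on each centred cosine factor are blended here: a sub-Gaussian estimate via \eqref{eq:rfsf:w_cond_main}, giving a contribution of size $2d\max(S,R)$ per level, and a Lipschitz estimate $|\cos(t)-\cos(t')|\le|t-t'|$ combined with $L = \|\bbE[\bw\bw^\top]\|^{1/2}$, giving a contribution of size $L^2/\ln 2$ per level (the $1/\ln 2$ arising from a standard sub-Gaussian MGF normalisation). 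Independence across $p$ lets us apply Minkowski in turn at each level, producing the two competing terms $(2d\max(S,R))^m + (L^2/\ln 2)^m$. Simultaneously, telescoping the $\delta_{i_p,j_p}$ operator through the sum over $\Delta_m(\len{\bx})\times\Delta_m(\len{\by})$ converts the double simplex sum into bounds on the $1$-variations, yielding a factor of order $\|\bx\|_\onevar^m\|\by\|_\onevar^m/(m!)^2$; a Stirling estimate $m! \ge (m/e)^m\sqrt{2\pi m}e^{-1/(12m)}$ then turns $1/(m!)^2$ into the factor $(e^2/m)^{2m}$, so that the net geometric factor becomes $(4e^3\|\bx\|_\onevar\|\by\|_\onevar/m)^m$ after collecting absolute constants, matching the claimed bound on $C_{d,m,\bx,\by}$.

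With the sub-Weibull-$(1/m)$ moment control established, the final step is a direct application of the standard Bernstein inequality for averages of sub-Weibull random variables (as in Kuchibhotla--Chakrabortty, or equivalently the Orlicz-norm formulation of Vershynin). The two-regime form of \eqref{eq:rfsf:mainthm2_bound}, with a sub-Gaussian tail $\exp\bigl(-c\dimRFF\epsilon^2/C^2\bigr)$ for small $\epsilon$ and a sub-Weibull tail $\exp\bigl(-c(\dimRFF\epsilon/C)^{1/m}\bigr)$ for large $\epsilon$, is exactly what such an inequality delivers once both the variance proxy and the sub-Weibull norm of $Z_q-\bbE Z_q$ are absorbed into $C_{d,m,\bx,\by}$. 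The main obstacle is the Step~2 moment estimate: reconciling the two distinct tail bounds on the per-level cosine factors while simultaneously exploiting the simplex-ordering to extract the correct $\|\bx\|_\onevar^m\|\by\|_\onevar^m/m^{m}$ behaviour requires careful bookkeeping, and the cleanest path will likely be an induction over $m$ using a recursive tower of Minkowski bounds, rather than a single direct computation.
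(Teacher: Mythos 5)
Your plan is essentially the paper's proof step for step: the paper writes $\rffsigkernelDP[m]$ as an average of $\dimRFF$ i.i.d.\ one-sample \RFSF{} kernels, bounds the $\Psi_{1/m}$-Orlicz norm of each centred summand by combining the simplex/$1$-variation bound $(\norm{\bx}_\onevar\norm{\by}_\onevar)^m/(m!)^2$, the moment condition \eqref{eq:rfsf:w_cond_main} applied to the product $\norm{\bw^{(1)}}_2^2\cdots\norm{\bw^{(m)}}_2^2$ (yielding $(2d\max(S,R))^m$), and the Lipschitz bound on the deterministic mean $\sigkernel[m](\bx,\by)$ (yielding $(L^2/\ln 2)^m$, the $\ln 2$ coming from the Orlicz norm of a constant), before invoking the Kuchibhotla--Chakrabortty sub-Weibull Bernstein inequality to get the two-regime tail, exactly as you propose. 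The only slip is minor: the moment growth $\bbE\abs{Z_q-\bbE Z_q}^r\le \tfrac{r!}{2}\sigma^2K^{r-2}$ you write is the sub-exponential ($\Psi_1$) condition rather than being equivalent to sub-Weibull of order $1/m$, but since you immediately target the $\Psi_{1/m}$ norm itself, this does not affect the argument.
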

The proof is provided in the supplement under Theorem \ref{thm:rfsf_dp_approx}. The result shows that the \RFSFD{} kernel converges for any two sequences $\bx, \by \in \Seq(\cX)$ with a $\pars{\nicefrac{1}{m}}$-subexponential convergence rate with respect to the sample size $\dimRFF \in \bbZ_+$. 
Similarly to Theorem \ref{thm:main}, the bound has a phase transition, where for small values of $\epsilon$, it has a sub-Gaussian tail, while for larger values, it has a $\pars{\nicefrac{1}{m}}$-subexponential tail. A crucial difference from the previous bound is that now the phase transition happens at $\epsilon^\star = C_{d, m, \bx, \by} 2^\frac{2m-3/2}{2m-1} \dimRFF^{\frac{1-m}{2m-1}}$, which depends on the sample size $\dimRFF$. This means that for fixed value of $\epsilon > 0$, the phase transition always happens eventually as $\dimRFF$ gets large enough, hence the convergence rate with respect to $\dimRFF$ is $\pars{\nicefrac{1}{m}}$-subexponential regardless of the value of $\epsilon$. The slightly reduced rate of convergence compared to the \RFSF{} kernel in Theorem \ref{thm:main} is to be expected, since the sample size of the \RFSFD{} kernel is analogously reduced by an exponent of $\pars{\nicefrac{1}{m}}$ with respect to $\dimRFF$ in comparison.
The constant $C_{d,m,\bx, \by}$, similarly to \eqref{eq:rfsf:mainthm_bound}, depends on \begin{enumerate*}[label=(\roman*)] \item the 1-variation of sequences $\norm{\bx}_\onevar, \norm{\by}_\onevar$ that measure the complexity of the sequences; \item $L > 0$, the Lipschitz constant of the kernel $\kernel$ (see Examples \ref{example:2ndmoment_lip},  \ref{example:rff_lip}); \item the moment bound parameters $S, R > 0$ from condition \eqref{eq:rfsf:w_cond_main}; \item the state-space dimension $d$;  and \item the signature level $m$. \end{enumerate*}

\begin{remark} \label{remark:alg_rfsf_dp}
    Algorithm \ref{alg:rfsf:rsfsdp} demonstrates the computation of the \RFSFD{} map $\rffsigDP[\leq M]$ given a dataset of sequences $\bX = (\bx_i)_{i=1}^N \subset \Seq(\cX)$. Upon counting the operations, we deduce that the algorithm has a computational complexity $O\pars{N \ell \dimRFF (M d + 2^{M})}$. Crucially, it is linear in both $\ell$, the maximal sequence length, and $\dimRFF$, the sample size of the random kernel.
\end{remark}
\subsection{Dimensionality Reduction: Tensor Random Projection} \label{sec:rfsf:trp} Previously, we built the \RFSFD{} map by subsampling an independent set from the samples that constitute \RFSF{} kernel. Here, we propose an alternative dimensionality reduction technique that starts again from the \RFSF{} map, and uses random projections to project this generally high-dimensional tensor onto a lower dimension. Random projections are a classic technique in data science for reducing the data dimension, while preserving its important structural properties. They are built upon the celebrated Johnson-Lindenstrauss lemma \cite{johnson1986extensions}, which states that a set of points in a high-dimensional space can be embedded into a space of much lower dimension, while approximately preserving their 
geometry. Exploiting this property, we construct a tensor random projected (\TRP) variant of our random kernel called \RFSFT{}, such that the computation is coupled between the \RFSF{} and \TRP{} maps, similarly to a kernel trick.

Tensorized random projections \cite{sun2021tensor, rakhshan2020tensorized} construct random projections for tensors with concise parameterization that respects their tensorial nature. Given tensors $\bs, \bt \in \pars{\bbR^d}^{\otimes m}$ for $m \in \bbZ_+$, the \TRP{} map with \CP{} (\texttt{CANDECOMP/PARAFAC} \cite{kolda2009tensor})  rank-$1$ is built via a random functional 
$\pr: \pars{\bbR^d}^{\otimes m} \to \bbR$ such that $\pr(\bs) = \inner{\bp_1 \otimes \cdots \otimes \bp_m}{\bs}_{\pars{\bbR^d}^{\otimes m}}$, where $\bp_1, \dots, \bp_m \stackrel{\iid}{\sim} \cN(\b 0, \b I_d)$ are $d$-dimensional component vectors sampled from a standard normal distribution. Then, the inner product can be estimated as $\pr(\bs) \pr(\bt) \approx \expe{\pr(\bs) \pr(\bt)} = \inner{\bs}{\bt}_{\pars{\bbR^d}^{\otimes m}}$. Variance reduction is achieved by stacking $n \in \bbZ_+$ such random projections, each with $\iid$ component vectors $\bp_1^{(1)}, \dots, \bp_m^{(n)} \stackrel{\iid}{\sim} \cN(\b 0, \b I_d)$. Hence, the \TRP{} operator is defined as
\begin{align} \label{eq:rfsf:trp_def}
    \TRP: \pars{\bbR^d}^{\otimes m} \to \bbR^n, \quad  \TRP(\bs) = \frac{1}{\sqrt{n}}\pars{\inner{\bp_1^{(i)} \otimes \cdots \otimes \bp_m^{(i)}}{\bs}}_{i=1}^n.
\end{align}
On the one hand, this allows to represent the random projection map onto $\bbR^n$ using only $O(n m d)$ parameters as opposed to the $O(n d^m)$ parameters in a densely parametrized random projection; and on the other, it allows for downstream computations to exploit the low-rank structure of the operator, as we shall do so in the definition stated below.
\begin{definition} \label{def:rffsigtrp_def}
Let $\bW^{(1)}, \dots, \bW^{(M)} \stackrel{\iid}{\sim} \Lambda^{\dimRFF}$ be $\iid$ random matrices sampled from $\Lambda^{\dimRFF}$ for \RFF{} dimension $\dimRFF \in \bbZ_+$, define the independent \RFF{} maps $\rff_m: \cX \to \HilRFF$ as in \eqref{eq:rfsf:rff_def}, i.e.~$\rff_m(\bx) = \nicefrac{1}{\sqrt{\dimRFF}}\pars{\cos({{\bW^{(m)}}^\top} \bx), \sin({\bW^{(m)}}^\top \bx)}$ for $m \in \bracks{M}$ and $\bx \in \cX$, and let $\bP^{(1)}, \dots, \bP^{(M)} \stackrel{\iid}{\sim} \cN^{\dimRFF}\pars{\b 0, \b I_{2 \dimRFF}}$
be random matrices with $\iid$ standard normal entries. The $M$-truncated Tensor Random Projected Random Fourier Signature Feature (\RFSFT) map $\rffsigTRP[\leq M] \Seq(\cX) \to \HilRFFTTRP = \bbR^{M \dimRFF}$ is defined for truncation level $M \in \bbZ_+$ and $\bx \in \Seq(\cX)$ as
\begin{align}
    \rffsigTRP[\leq M](\bx) &=
    \frac{1}{\sqrt{\dimRFF}} \pars{\pars{\sum_{\bi \in \Delta_m(\len{\bx})} \inner{\bp^{(1)}_q}{\delta \rff_1(\bx_{i_1})} \cdots \inner{\bp^{(m)}_q}{\delta \rff_m(\bx_{i_m})}}_{q=1}^{\dimRFF}}_{m=0}^M
    \\
    &=
    \frac{1}{\sqrt{\dimRFF}} \pars{\sum_{\bi \in \Delta_m(\len{\bx})} \pars{{\bP^{(1)}}^\top \delta \rff_1(\bx_{i_1})} \odot \cdots \odot \pars{{\bP^{(m)}}^\top \delta \rff_m(\bx_{i_m})}}_{m=0}^M,
\label{eq:rfsf:rffsigtrpdef}
\end{align}
where $\bP^{(m)} = \pars{\bp_q^{(m)}}_{q=1}^{\dimRFF} \in \bbR^{2\dimRFF \times \dimRFF}$, and $\odot$ denotes the Hadamard product\footnote{The Hadamard product stands for component-wise multiplication of the vectors $\bx, \by \in \bbR^n$, $\bx \odot \by = \pars{x_i y_i}_{i=1}^n$.}.
The \RFSFT{} kernel $\rffsigkernelTRP[\leq M]: \Seq(\cX) \times \Seq(\cX) \to \bbR$ can then be directly computed for sequences $\bx, \by \in \Seq(\cX)$ by
\begin{align}
    \rffsigkernelTRP[\leq M](\bx,\by) &= \inner{\rffsigTRP[\leq M](\bx)}{\rffsigTRP[\leq M](\by)}_{\HilRFFT}
    =
    \sum_{m=0}^M \inner{\rffsigTRP[m](\bx)}{\rffsigTRP[m](\by)}_{\HilRFF^{\otimes m}}
    \\
    &=
    \sum_{m=0}^M \rffsigkernelTRP[m](\bx, \by)
    =
    \frac{1}{\dimRFF} \sum_{m=0}^M \sum_{q=1}^{\dimRFF} \sum_{\substack{\bi \in \Delta_m(\len{\bx})\\\bj \in \Delta_m(\len{\by})}} \prod_{p=1}^m \inner{\bp^{(p)}_q}{\delta \rff_p(\bx_{i_p})} \inner{\bp^{(p)}_q}{\delta \rff_p(\by_{j_p})}, \label{eq:rfsf:rffsigtrpkernel_def}
\end{align}
where we defined the level-$m$ \RFSFT{} kernel $\rffsigkernelTRP[m]: \Seq(\cX) \times \Seq(\cX) \to \bbR$ for $m \leq M$ as $\rffsigkernelTRP[m](\bx, \by) = \inner{\rffsigTRP[m](\bx)}{\rffsigTRP[m](\by)}_{\HilRFF^{\otimes m}}$ with the convention that $\rffsigkernelTRP[0] \equiv 1$.
\end{definition}

We remark that \eqref{eq:rfsf:rffsigtrpdef} is equivalent to the \TRP{} operator \eqref{eq:rfsf:trp_def} applied to the \RFSF{} map \eqref{eq:rfsf:rffsigdef} by exploiting bilinearity of the inner product, and using that it factorizes over the tensor components, as described in \eqref{eq:back:inner_tensor}. Then, the unbiasedness of \eqref{eq:rfsf:rffsigtrpkernel_def} follows from the fact that the \TRP{} operator is an isometry under expectation, which is applied to the \RFSF{} tensor $\rffsig[m]$, therefore $\rffsigkernelTRP[m]$ kernel is conditionally an unbiased estimator of $\rffsigkernel[m]$ given the \RFSF{} weights $\bW^{(1)}, \dots, \bW^{(m)} \in \bbR^{d \times \dimRFF}$.
By the tower rule for expectations, $\rffsigkernelTRP[m]$ is an unbiased estimator of $\sigkernel[m]$. The approximation quality is then governed by two factors: \begin{enumerate*}[label=(\roman*)] \item \label{it:trp1} how well the \TRP{} projected kernel $\rffsigkernelTRP[m]$ approximates $\rffsigkernel[m]$; \item \label{it:trp2} the quality of the approximation of $\rffsigkernel[m]$ with respect to $\sigkernel[m]$. \end{enumerate*} Note that \ref{it:trp2} has already been discussed in Theorem \ref{thm:main}. Here, we state the following theoretical result which quantifies \ref{it:trp1}. Combining these two results by means of triangle inequality and union bounding quantifies that $\rffsigkernelTRP[m]$ is a good estimator of $\sigkernel[m]$.

\begin{theorem} \label{thm:main3}
    Let $\kernel: \bbR^d \times \bbR^d \to \bbR$ be a continuous, bounded, translation-invariant kernel with spectral measure $\Lambda$, which satisfies \eqref{eq:rfsf:w_cond_main}. Then, the following bound holds for \RFSFT{} kernel for signature level $m \in \bbZ_+$ sequences $\bx, \by \in \Seq(\cX)$ and $\epsilon > 0$
    \begin{align} \label{eq:rfsf:mainthm3_bound}
        \bbP\bracks{\abs{\rffsigkernelTRP[m](\bx, \by) - \rffsigkernel[m](\bx, \by)} \geq \epsilon}
        \leq
        C_{d, \Lambda}
        \exp\pars{- \pars{\frac{m^2 \dimRFF^{\frac{1}{2m}} \epsilon^{\frac{1}{m}}}{2\sqrt{2}e^3 R \norm{\bx}_\onevar \norm{\by}_\onevar}}^\frac{1}{2}},
    \end{align}
    where the absolute constant is defined as $C_{d, \Lambda} = 2\pars{1 + \frac{S}{2R} + \frac{S^2}{4R^2}}^d$.
\end{theorem}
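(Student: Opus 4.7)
The plan is to exploit the tower structure of the randomness by conditioning on the RFF weights $\bW^{(1)}, \dots, \bW^{(m)}$ first, so that $\bT_{\bx} := \rffsig[m](\bx)$ and $\bT_{\by} := \rffsig[m](\by)$ become fixed elements of $\HilRFF^{\otimes m}$. Writing the RFSFT estimator in its TRP form via rank-1 multilinear contractions, I obtain
\begin{align}
    \rffsigkernelTRP[m](\bx,\by) - \rffsigkernel[m](\bx,\by) = \frac{1}{\dimRFF}\sum_{q=1}^{\dimRFF}\bigl(Z_q - \bbE[Z_q \mid \mathrm{RFF}]\bigr),
\end{align}
where $Z_q := \bT_{\bx}(\bp_q^{(1)},\dots,\bp_q^{(m)})\,\bT_{\by}(\bp_q^{(1)},\dots,\bp_q^{(m)})$, $\bbE[Z_q \mid \mathrm{RFF}] = \inner{\bT_{\bx}}{\bT_{\by}}_{\HilRFF^{\otimes m}} = \rffsigkernel[m](\bx,\by)$, and $(Z_q)_q$ are iid conditional on the RFF weights.

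The conditional step exploits that each $Z_q$ is a polynomial of total degree $2m$ in independent standard Gaussians, and the centered empirical average is itself a degree-$2m$ Gaussian chaos in the aggregated Gaussian vector $(\bp_q^{(p)})_{q,p}$. I would apply Gaussian hypercontractivity (Nelson's inequality $\|\,\cdot\,\|_{L^p} \leq (p-1)^m \|\,\cdot\,\|_{L^2}$ on degree-$2m$ chaoses), together with the variance bound
\begin{align}
    \bbE\bigl[Z_q^2 \mid \mathrm{RFF}\bigr] \leq \|\bT_{\bx}(\bp)\|_{L^4}^2 \,\|\bT_{\by}(\bp)\|_{L^4}^2 \leq 9^m \,\|\bT_{\bx}\|_F^2 \,\|\bT_{\by}\|_F^2
\end{align}
obtained by Cauchy--Schwarz on the product structure plus the $L^4$--$L^2$ hypercontractive ratio on each degree-$m$ factor. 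A Marcinkiewicz--Zygmund or Rosenthal-type summation inequality, propagating this hypercontractive factor through the iid averaging to extract the $1/\sqrt{\dimRFF}$-variance gain cleanly from the $(p-1)^m$ prefactor, followed by Markov's inequality optimized over $p$, yields a conditional sub-Weibull tail on $\abs{\rffsigkernelTRP[m] - \rffsigkernel[m]}$ of the required form, scaling like $\exp\bigl(-c(\sqrt{\dimRFF}\,\epsilon/(\|\bT_{\bx}\|_F \|\bT_{\by}\|_F))^{1/(2m)}\bigr)$.

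For the unconditional step, I would control $\|\bT_{\bx}\|_F = \|\rffsig[m](\bx)\|_F$ via the factorial-decay bound inherited from the simplex structure in \eqref{eq:rfsf:rffsigdef}: combining the non-contiguous simplex estimate with the elementary Lipschitz bound $\|\delta \rff_p(\bx_i)\|_{\HilRFF}^2 \leq (2/\dimRFF)\sum_q \|\bw_q^{(p)}\|_2^2 \,\|\delta \bx_i\|_2^2$ reduces the control of $\bbE[\|\bT_{\bx}\|_F^{2p}]$ to moments of $\|\bw\|_2^2$ under $\Lambda$. The Bernstein condition \eqref{eq:rfsf:w_cond_main} then controls the MGF $\bbE_{\bw \sim \Lambda}[\exp(t\|\bw\|_2^2)]$ via a coordinatewise Taylor expansion of the form $\prod_{i=1}^d (1 + t S^2 + t^2 S^2 R + \cdots)$ which, after the right choice of $t$, collapses to the closed form $(1 + S/(2R) + S^2/(4R^2))^d$. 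Integrating the conditional tail against this MGF bound via Fubini and a union bound between the small- and large-$\|\bT_{\bx}\|_F$ regimes produces the stated constant $C_{d, \Lambda}$ and prefactor $2\sqrt{2}e^3 R\norm{\bx}_\onevar \norm{\by}_\onevar$ in \eqref{eq:rfsf:mainthm3_bound}.

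The main obstacle will be aligning the moment inequality so that it produces precisely the $1/(2m)$-subexponential regime with $1/\sqrt{\dimRFF}$ scaling in \eqref{eq:rfsf:mainthm3_bound}: a direct global hypercontractivity argument yields a tighter $1/m$-subexp tail, naive Rosenthal a $1/(m+1)$-subexp tail with $1/\dimRFF$ scaling, and Marcinkiewicz--Zygmund a $2/(2m+1)$-subexp tail with $1/\sqrt{\dimRFF}$ scaling, so hitting the stated shape requires a deliberate relaxation or an interpolation that trades exponent for constant. The secondary bookkeeping difficulty is ensuring that the MGF integration does not inflate the dimensional constant: this pins down the specific $t$ for which the Bernstein MGF bound collapses to the particular combination $1 + S/(2R) + S^2/(4R^2)$, producing $C_{d,\Lambda}$ exactly rather than a looser dimensional factor.
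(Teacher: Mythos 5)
The first half of your plan (condition on the \RFF{} weights, recognize the centered estimator as a degree-$2m$ Gaussian chaos, bound its conditional variance, apply hypercontractivity) is essentially the paper's argument: the paper computes the conditional variance via Isserlis' theorem (getting a $3^m$ factor where your Cauchy--Schwarz/$L^4$ route gives $9^m$) and then applies its Gaussian-polynomial concentration result directly to the empirical average, which is still a degree-$2m$ chaos. This yields a \emph{conditional} tail of the form
\begin{align}
\prob{\abs{\rffsigkernelTRP[m](\bx,\by)-\rffsigkernel[m](\bx,\by)}\geq\epsilon \given \bW}
\leq 2\exp\pars{-\frac{c\,\dimRFF^{\frac{1}{2m}}\dimRFF\,\epsilon^{\frac1m}}{\prod_{p=1}^m\norm{\bW^{(p)}}_2^{2/m}}},
\end{align}
i.e.\ a $\pars{\nicefrac1m}$-subexponential tail whose exponent contains the random quantity $\prod_p\norm{\bW^{(p)}}_2^{2/m}$ in the denominator. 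Your stated "main obstacle" --- not seeing how any moment inequality in the conditional step can produce the $\pars{\nicefrac{1}{2m}}$ exponent of \eqref{eq:rfsf:mainthm3_bound} --- is a sign that you are looking for the loss in the wrong place: the conditional step should \emph{not} be relaxed at all.

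The genuine gap is the de-conditioning step. Because the randomness in $\bW$ sits inside the exponent as $\exp\pars{-a/B}$ with $B=\lambda\prod_p\norm{\bW^{(p)}}_2^{2/m}/\dimRFF$, a plain Fubini argument gives you nothing usable, and your alternative --- a union bound between small- and large-$\norm{\rffsig[m](\bx)}$ regimes --- produces a sum of two exponential terms of a different shape (the truncation-failure term does not vanish as $\epsilon\to\infty$ and cannot reproduce the single-term bound with the exact constant $C_{d,\Lambda}$). The paper's device is the reverse Young inequality (its Lemma with $p=\tfrac12$, $q=1$, i.e.\ $ab\geq 2a^{1/2}-b^{-1}$) applied inside the exponent: this decouples the deterministic part from $B$ at the cost of a square root, which is precisely where the exponent degrades from $\nicefrac1m$ to $\nicefrac{1}{2m}$ (and the rate from $\dimRFF^{1/(2m)}$ to $\dimRFF^{1/(4m)}$). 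After an arithmetic--geometric mean step replacing $\prod_p\norm{\bW^{(p)}}_2^{2/m}$ by $\tfrac1m\sum_p\norm{\bW^{(p)}}_F^2$, one is left with the moment generating function of the squared weight entries, which the Bernstein condition \eqref{eq:rfsf:w_cond_main} bounds at the specific choice $\lambda=\tfrac{1}{2R}$ by $1+\tfrac{S}{2R}+\tfrac{S^2}{4R^2}$ per coordinate; this is exactly how $C_{d,\Lambda}$ and the prefactor $2\sqrt2 e^3 R\norm{\bx}_\onevar\norm{\by}_\onevar$ arise. Without this decoupling mechanism (or an equivalent one), your outline does not reach the stated bound.
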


The proof is given in the supplement under Theorem \ref{thm:rfsf_trp_approx} utilizing the hypercontractivity of Gaussian polynomials \cite{janson1997gaussian} that is used to quantify the concentration of the \TRP{} estimator. The concentration of the \RFSFT{} kernel is then governed by Theorems \ref{thm:main} and \ref{thm:main3} combined. Together, they show that for smaller values of $\epsilon$ (i.e.~the regime change as discussed below Theorem \ref{thm:main}), the probability has a polynomial plus a sub-Gaussian tail, while for large $\epsilon$, it has a $\pars{\frac{1}{2m}}$-subexponential tail due to \eqref{eq:rfsf:mainthm3_bound}, and the dominant convergence rate with respect to $\dimRFF$ is $\pars{\frac{1}{4m}}$-subexponential. This means that in terms of convergence, \RFSFT{} is the slowest among the 3 variations introduced so far. However, it is also the most efficient in terms of overall dimension, hence downstream computational complexity as well, since $\HilRFFTTRP = \bbR^{M \dimRFF}$.
Remark \ref{remark:alg_rfsf_trp} discusses the computational complexity in detail.

\begin{remark} \label{remark:alg_rfsf_trp}
    Algorithm \ref{alg:rfsf:rsfstrp} demonstrates the computation of the \RFSFT{} map $\rffsigTRP[\leq M]$ given a dataset of sequences $\bX = (\bx_i)_{i=1}^N \subset \Seq(\cX)$. Counting the operations, here we can deduce that the algorithm has an $O\pars{M  N  \ell  \dimRFF  (d + \dimRFF)}$ computational complexity. This variation is also linear in $\ell$, the maximal sequence length, although it is quadratic in $\dimRFF$.
\end{remark}

\subsection{Numerical evaluation}
\begin{figure}
\centering
\includegraphics[width=0.6\textwidth]{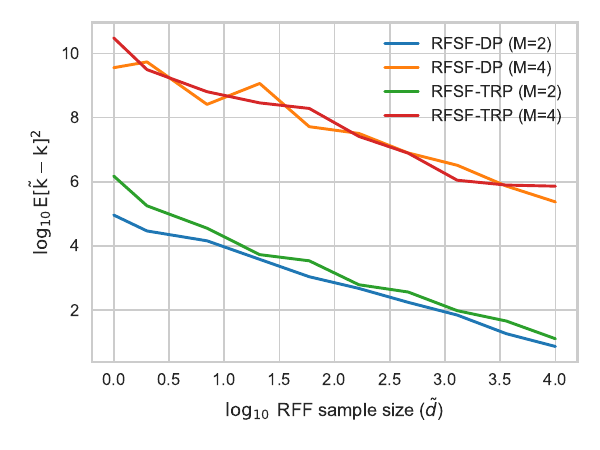}
\caption{Approximation error of random kernels against \RFF{} size on $\log$-$\log$ plot.}
\label{fig:approx}
\end{figure}
Here, we numerically evaluate the approximation error of the proposed scalable kernels, that is, \RFSFD{} and \RFSFT{}. We do not include \RFSF{} since its dimensionality shows polynomial explosion in the base sample size $\dimRFF$ due to its tensor-based representation, which makes its computation infeasible for reasonable values of $\dimRFF$. We generate $d$-dimensional synthetic time series of length-$\ell$ using a \texttt{VAR}(1) process $\tilde\bx \in \Seq(\cX)$, such that $\tilde\bx_0 = \b 0$ and $\tilde\bx_{t+1} = \nicefrac{1}{\sqrt{d}} A \tilde\bx_t + \b\epsilon_t$, where $A \sim \cN^{d \times d}(0, 1)$ and $\b\epsilon_t \sim \cN(\b 0, \sigma^2 \b I_d)$. Then, we compute the normalized version $\bx \in \Seq(\cX)$ of $\tilde\bx$, which is rescaled to have $1$-variation $V > 0$, i.e. $\bx_t = V \tilde\bx_t / \norm{\tilde\bx}_{\onevar}$. We set $d = 10$, $\ell = 100$, $\sigma = 0.1$ and $V = 100$. We compute the squared deviation between the groundtruth signature kernel and the randomized approximations for two randomly sampled time series in this way. This process is repeated for $100$ randomly sampled time series and $100$ times resampled random kernel evaluations, giving rise to overall $10000$ evaluations for each value of $\dimRFF$. Figure \ref{fig:approx} shows the average approximation error plotted against values of $\dimRFF$ on a $\log$-$\log$ plot. We can observe that both \RFSFD{} and \RFSFT{} have approximately the same error curves for a given value of truncation level $M$, and the steepness appears to be the same across different levels of $M$. This means that \RFSFT{} is slightly more efficient in terms of dimensionality, since its dimension is $M\dimRFF$ as opposed to $2^{M+1} \dimRFF$ in \RFSFD{}. We also observe that both curves are close to being linear, which indicates that the approximation error scales approximately as $O(\dimRFF^{-\alpha})$ for some value of $\alpha > 0$.


\section{Experiments}\label{sec:rfsf:experiments}

\subsection{Time series classification}
We perform multivariate time series classification to investigate the performance of the scalable \RFSF{} variants compared to the full-rank signature kernel and other quadratic time baseline kernels, and further, to demonstrate the scalability to large-scale datasets, where the quadratic sample complexity becomes prohibitive.
We use support vector machine (\SVM) \cite{steinwart2008support} classification for classifying multivariate time series on datasets of various sizes. For quadratic time kernels, the dual \SVM{} formulation is used, while for kernels with feature representations, we use the primal formulation that has linear complexity in the size of the dataset $n \in \bbZ_+$ aiding in scalability to truly large-scale datasets. For each considered kernel/feature, we use a \texttt{GPU}-based implementation provided in the \KS{} library\footnote{\href{https://github.com/tgcsaba/KSig}{https://github.com/tgcsaba/KSig}} \cite{toth2025ksig}. For large-scale experiments with the featurized kernels, linear \SVM{} implementation is used from the \texttt{cuML} library \cite{raschka2020machine}, while the dual \SVM{} on moderate-scale datasets uses the \texttt{sklearn} library \cite{scikit-learn}. For multi-class problems, we use the one-vs-one classification strategy. This study is also the largest scale comparison of signature kernels to date which extends the datasets considered in \cite{salvi2021signature}.
The hardware used was 2 computer clusters equipped with overall 8 NVIDIA 3080 Ti \texttt{GPU}s.
\begin{table}
 \caption{Computational complexities of kernels in our experiments; $N \in \bbZ_+$ is the number of time series, $\ell \in \bbZ_+$ is their length, $d \in \bbZ_+$ is their state-space dimension, $M \in \bbZ_+$ is the signature truncation level, $\dimRFF \in \bbZ_+$ is the \texttt{RF} dimension, $W \in \bbZ_+$ is the warping length in \texttt{RWS}.}
 \label{table:rfsf:complexity}
 \resizebox{\textwidth}{!}{
\begin{tabular}{cccccccc}
\toprule
 \texttt{RFSF-DP} & \texttt{RFSF-TRP} & \texttt{KSig} & \texttt{KSigPDE} & \texttt{RWS} & \texttt{GAK} & \texttt{RBF} & \texttt{RFF}
 \\
 \midrule
 $O\pars{N\ell\tilde d\pars{Md + 2^M}}$ & $O\pars{N\ell M\tilde d\pars{d + \tilde d}}$ & $O\pars{N^2 \ell^2 \pars{M + d}}$ & $O\pars{N^2 \ell^2 d}$ & $O\pars{N\ell W d}$ & $O\pars{N^2 \ell^2 d}$ & $O\pars{N^2 \ell d}$ & $O\pars{N \ell d\dimRFF}$
 \\
\bottomrule
\end{tabular}
}
\end{table}

\subsection{Methods}
We compare the proposed variants \RFSFD{} and \RFSFT{} to the baselines described here: 
\begin{enumerate*}[label=(\arabic*)] \item the $M$-truncated Signature Kernel \cite{kiraly2019kernels} \KS{} formulated via the kernel trick, and is a quadratic time baseline; \item the Signature-PDE Kernel \cite{salvi2021signature} \KSP{}, which uses the 2\textsuperscript{nd}-order PDE solver and also has quadratic complexity; \item the Global Alignment Kernel \cite{cuturi2011fast} \GAK{}, one of the most popular sequence kernels to day and can can be related to the signature kernel, see \cite[Sec.~5]{kiraly2019kernels}; \item Random Warping Series \cite{wu2018random} \RWS{}, which produces features by \texttt{DTW} alignments between the input and randomly sampled time series; \item the \RBF{} kernel, which treats the whole time series as a vector of length $\bbR^{\ell d}$, \item Random Fourier Reatures \cite{rahimi2007random} \RFF{}, which also treats the time series as a long vector. We excluded \RFSF{} from the comparison, as it is unfeasible to compute it with reasonable sample sizes $\dimRFF$ due to the polynomial explosion of dimensions in its tensor-based representation. The complexities are compared in Table \ref{table:rfsf:complexity}.
\end{enumerate*}

\subsection{Hyperparameter selection}
For each dataset-kernel, we perform cross-validation to select the optimal hyperparameters that are optimized over the Cartesian product of the following options. For each method that requires a static kernel, we use the \RBF{} kernel with bandwidth hyperparameter $\sigma > 0$. This is specified in terms of a rescaled median heuristic, i.e.~
\begin{align} \label{eq:rfsf:alpha_select}
    \sigma = \alpha \med \curls{\norm{\bx_i - \bx_j^\prime}_2 / 2 \given i \in [\len{\bx}], j \in [\ell_{\bx^\prime}], \bx, \bx^\prime \in \bX}, \quad \text{for} \spc \alpha > 0,
\end{align}
where $\alpha$ is chosen from $\alpha \in \{10^{-3}, \dots, 10^3\}$ on a logarithmic grid with $19$ steps. For each kernel that is not normalized by default (i.e.~the \GAK{} and \RBF{} kernels are normalized, the former is because without normalization it blows up) , we select whether to normalize to unit norm in feature space via $\kernel(\bx, \by) \mapsto \kernel(\bx, \by) / \sqrt{\kernel(\bx, \bx) \kernel(\by, \by)}$. The \SVM{} hyperparameter $C > 0$ is selected from $C \in \{10^0, 10^1, 10^2, 10^3, 10^4\}$. Further, motivated by previous work that investigates the effect of path augmentations in the context of signature methods \cite{morrill2021generalised}, we chose 3 augmentations to cross-validate over. First is parameterization encoding, which gives the classifier the ability to remove the warping invariance of a given sequence kernel, adding the time index as an additional coordinate, i.e. for each time series in the dataset $\bx \in \bX$, we augment it via $\bx = (\bx_i)_{i=1}^{\len{\bx}} \mapsto (\beta i / \len{\bx}, \bx_i)_{i=1}^{\len{\bx}}$, where $\beta > 0$ is the parameterization intensity chosen from $\beta \in \{10^0, 10^1, 10^2, 10^3, 10^4\}$. The second augmentation is the basepoint encoding, the role of which is to remove the translation invariance of signature features. Note that when the static base kernel is chosen to be a nonlinear kernel other than the Euclidean inner product, the signature kernel is not completely translation-invariant due to the state-space nonlinearities, but it is close being that by the $L$-Lipschitz property in Lemma \ref{example:2ndmoment_lip} valid for of the static kernels considered in this work. The basepoint encoding adds an initial $\mathbf{0}$ step at the beginning of each time series, i.e.~for $\bx \in \bX$, $\bx=(\bx_0, \dots, \bx_{\len{\bx}}) \mapsto (\mathbf{0}, \bx_0, \dots, \bx_{\len{\bx}})$. The third augmentation is the lead-lag map, which is defined as $\bx = (\bx_0, \dots, \bx_{\len{\bx}}) \mapsto \pars{(\bx_0, \bx_0), (\bx_1, \bx_0), (\bx_1, \bx_1), \dots, (\bx_{\len{\bx}}, \bx_{\len{\bx}}), (\bx_{\len{\bx}}, \bx_{\len{\bx}})}$. For the truncation-based signature kernels, we select the truncation level $M \in \bbZ_+$ from $M \in \{2, 3, 4, 5\}$. For \RWS{}, we select the warping length from $W \in \{10, 20, \dots, 100\}$ as suggested by the authors. This makes \RWS{} the most expensive feature-based kernel, and so as to fit within the same resource limitations, we omit cross-validating over the path augmentations. We select the standard deviation $\sigma > 0$ of the warping series from the same grid as $\alpha$ in \eqref{eq:rfsf:alpha_select}. For all \texttt{RF} approaches, we set the \texttt{RF} dimension $\tilde d \in \bbZ_+$, so the overall dimension is $1000$. Note that for \RFSFD{} and \RFSFT{} this is respectively $2^{M+1} \dimRFF$ and $M \dimRFF$, where $\dimRFF$ is the base \RFF{} sample size; for \RWS{} it is the number of warping series $\dimRFF$; while for \RFF{} it is twice the number of samples $2 \dimRFF$.

\subsection{Datasets}
\paragraph{\texttt{UEA} Archive}
The \texttt{UEA} archive \cite{dau2019ucr} is a collection of overall 30 datasets for benchmarking classifiers on multivariate time series classification problems containing both binary and multi-class tasks.
The data modality ranges from various sources e.g.~human activity recognition, motion classification, ECG classification, EEG/MEG classification, audio spectra recognition, and others. The sizes of the datasets in terms of number of time series range from moderate ($\leq 1000$ examples) to large ($\leq 30000$), and includes various lengths between $8$ and $18000$. A summary of the dataset characteristics can be found in Table~2 in \cite{dau2019ucr}. Pre-specified train-test splits are provided for each dataset, which we follow. We evaluate all considered kernels on the moderate datasets ($\leq 1000$ time series), but because the non-feature-based become very expensive computationally beyond these sizes, we only evaluate feature-based approaches on medium and large datasets ($\geq 1000$ time series). Each featurized approach is trained and evaluated $5$ times on each dataset in order to account for the randomness in the hyperparameter selection procedure and evaluation.

\begin{table}[!t]
\begin{minipage}{\textwidth}
 \centering
 \captionof{table}{Comparison of \SVM{} test accuracies on moderate multivariate time series classification datasets. For each row, the best result is highlighted in \textbf{bold}, and the second best in \textit{italic}.}
 \label{table:rfsf:results}
 \resizebox{\textwidth}{!}{
 \begin{sc}
\begin{tabular}{lcccccccc}
\toprule
 & \texttt{RFSF-DP} & \texttt{RFSF-TRP} & \texttt{KSig} & \texttt{KSigPDE} & \texttt{RWS} & \texttt{GAK} & \texttt{RBF} & \texttt{RFF}
\\
\midrule
ArticularyWordRecognition & $0.984$ & $0.981$ & $\mathbf{0.990}$ & $0.983$ & $\mathit{0.987}$ & $0.977$ & $0.977$ & $0.978$
\\ 
AtrialFibrillation & $0.373$ & $0.320$ & $\mathit{0.400}$ & $0.333$ & $\mathbf{0.427}$ & $0.333$ & $0.267$ & $0.373$
\\ 
BasicMotions & $\mathbf{1.000}$ & $\mathbf{1.000}$ & $\mathbf{1.000}$ & $\mathbf{1.000}$ & $\mathit{0.995}$ & $\mathbf{1.000}$ & $0.975$ & $0.860$
\\ 
Cricket & $0.964$ & $0.964$ & $0.958$ & $\mathit{0.972}$ & $\mathbf{0.978}$ & $0.944$ & $0.917$ & $0.886$
\\ 
DuckDuckGeese & $0.636$ & $\mathit{0.664}$ & $\mathbf{0.700}$ & $0.480$ & $0.492$ & $0.500$ & $0.420$ & $0.372$
\\ 
ERing & $0.921$ & $0.936$ & $0.841$ & $\mathit{0.941}$ & $\mathbf{0.945}$ & $0.926$ & $0.937$ & $0.915$
\\ 
EigenWorms & $\mathit{0.817}$ & $\mathbf{0.837}$ & $0.809$ & $0.794$ & $0.623$ & $0.511$ & $0.496$ & $0.443$
\\ 
Epilepsy & $\mathbf{0.949}$ & $\mathit{0.942}$ & $\mathbf{0.949}$ & $0.891$ & $0.925$ & $0.870$ & $0.891$ & $0.777$
\\ 
EthanolConcentration & $0.457$ & $0.439$ & $\mathbf{0.479}$ & $\mathit{0.460}$ & $0.284$ & $0.361$ & $0.346$ & $0.325$
\\ 
FingerMovements & $0.608$ & $0.624$ & $\mathbf{0.640}$ & $\mathit{0.630}$ & $0.612$ & $0.500$ & $0.620$ & $0.570$
\\ 
HandMovementDirection & $\mathit{0.573}$ & $0.568$ & $\mathbf{0.595}$ & $0.527$ & $0.403$ & $\mathbf{0.595}$ & $0.541$ & $0.454$
\\ 
Handwriting & $0.434$ & $0.400$ & $0.479$ & $0.409$ & $\mathbf{0.591}$ & $\mathit{0.481}$ & $0.307$ & $0.249$
\\ 
Heartbeat & $0.717$ & $0.712$ & $0.712$ & $\mathbf{0.722}$ & $0.714$ & $0.717$ & $0.717$ & $\mathit{0.721}$
\\ 
JapaneseVowels & $0.978$ & $0.978$ & $\mathbf{0.986}$ & $\mathbf{0.986}$ & $0.955$ & $\mathit{0.981}$ & $\mathit{0.981}$ & $0.979$
\\ 
Libras & $0.898$ & $\mathbf{0.928}$ & $\mathit{0.922}$ & $0.894$ & $0.837$ & $0.767$ & $0.800$ & $0.800$
\\ 
MotorImagery & $\mathit{0.516}$ & $\mathbf{0.526}$ & $0.500$ & $0.500$ & $0.508$ & $0.470$ & $0.500$ & $0.482$
\\ 
NATOPS & $0.906$ & $0.908$ & $0.922$ & $\mathbf{0.928}$ & $\mathit{0.924}$ & $0.922$ & $0.917$ & $0.900$
\\ 
PEMS-SF & $0.800$ & $0.808$ & $0.827$ & $\mathit{0.838}$ & $0.701$ & $\mathbf{0.855}$ & $\mathbf{0.855}$ & $0.770$
\\ 
RacketSports & $0.874$ & $0.861$ & $\mathbf{0.921}$ & $\mathit{0.908}$ & $0.878$ & $0.849$ & $0.809$ & $0.755$
\\ 
SelfRegulationSCP1 & $0.868$ & $0.856$ & $\mathit{0.904}$ & $\mathit{0.904}$ & $0.829$ & $\mathbf{0.915}$ & $0.898$ & $0.885$
\\ 
SelfRegulationSCP2 & $0.489$ & $0.510$ & $\mathit{0.539}$ & $\mathbf{0.544}$ & $0.481$ & $0.511$ & $0.439$ & $0.492$
\\ 
StandWalkJump & $0.387$ & $0.333$ & $\mathit{0.400}$ & $\mathit{0.400}$ & $0.347$ & $0.267$ & $\mathbf{0.533}$ & $0.267$
\\ 
UWaveGestureLibrary & $0.882$ & $0.881$ & $\mathbf{0.912}$ & $0.866$ & $\mathit{0.897}$ & $0.887$ & $0.766$ & $0.846$
\\ 
\midrule
Avg.~acc. & $\mathit{0.740}$ & $0.738$ & $\mathbf{0.756}$ & $0.735$ & $0.710$ & $0.702$ & $0.692$ & $0.656$
\\ 
Avg.~rank & $3.609$ & $3.739$ & $\textbf{2.348}$ & $\mathit{2.957}$ & $3.957$ & $4.174$ & $4.913$ & $5.913$
\\
\bottomrule
\end{tabular}
\end{sc}
}
\end{minipage}
\vspace{10pt}
\resizebox{\textwidth}{!}{
\begin{minipage}{0.51\textwidth}
    \vspace{40pt}
        \centering
        \includegraphics[width=\textwidth]{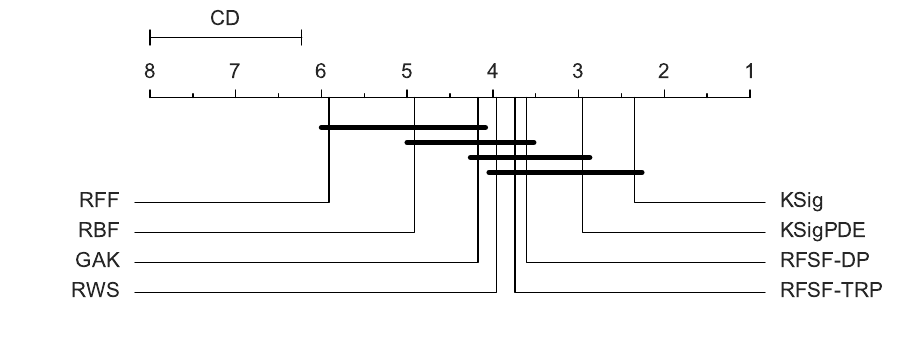}
        \captionof{figure}{Critical difference diagram comparison on moderate datasets of considered approaches using two-tailed Nemenyi test \cite{demvsar2006statistical}.}
        \label{fig:cd_diagram}
\end{minipage}
\hspace{1.5pt}
\begin{minipage}{0.49\textwidth}
\vspace{10pt}
\captionof{table}{Comparison of accuracies on large-scale datasets of random features.}
\label{table:rfsf:large_scale_results}
\resizebox{1.\textwidth}{!}{
\begin{sc}
\begin{tabular}{lcccc}
\toprule
  & \RFSFD{} & \RFSFT{} & \texttt{RWS} & \texttt{RFF}
\\ 
\midrule
CharacterTrajectories & $\mathit{0.990}$ & $\mathit{0.990}$ & $\mathbf{0.991}$ & $0.989$
\\ 
FaceDetection & $\mathit{0.653}$ & $\mathbf{0.656}$ & $0.642$ & $0.572$
\\ 
InsectWingbeat & $\mathit{0.436}$ & $\mathbf{0.459}$ & $0.227$ & $0.341$
\\ 
LSST & $0.589$ & $\mathit{0.624}$ & $\mathbf{0.631}$ & $0.423$
\\ 
PenDigits & $\mathit{0.983}$ & $0.982$ & $\mathbf{0.989}$ & $0.980$
\\ 
PhonemeSpectra & $\mathit{0.204}$ & $\mathit{0.204}$ & $\mathbf{0.205}$ & $0.083$
\\ 
SITS1M & $\mathbf{0.745}$ & $\mathit{0.740}$ & $0.610$ & $0.718$
\\ 
SpokenArabicDigits & $\mathbf{0.981}$ & $\mathit{0.980}$ & $\mathbf{0.981}$ & $0.964$
\\ 
fNIRS2MW & $\mathbf{0.659}$ & $\mathit{0.658}$ & $0.621$ & $0.642$
\\
\midrule
Avg.~acc. & $\mathit{0.693}$ & $\mathbf{0.699}$ & $0.655$ & $0.635$
\\ 
Avg.~rank & $\mathbf{1.778}$ & $\mathit{1.889}$ & $2.222$ & $3.333$
\\
\bottomrule
\end{tabular}
\end{sc}
}
\end{minipage}
}
\end{table}

\paragraph{Mental Workload Intensity Classification}
We evaluate featurized approaches on a large-scale brain-activity recording data set called \texttt{fNIRS2MW}.\footnote{\href{https://github.com/tufts-ml/fNIRS-mental-workload-classifiers}{https://github.com/tufts-ml/fNIRS-mental-workload-classifiers}} This dataset contains brain activity recordings collected from overall $68$ participants during a $30$-$60$ minute experimental session, where they were asked to carry out tasks of varying intensity. The collected time series are sliced into $30$ second segments using a sliding window, and each segment is labelled with an intensity level ($0$-$3$), giving rise to overall $\sim 100000$ segments, which we split in a ratio of $80-20$ for training and testing. We convert the task into a binary classification problem by assigning a label whether the task is low ($0$ or $1$) or high ($2$ or $3$) intensity.

\paragraph{Satellite Image Classification} As a massive scale task, we use a satellite imagery dataset\footnote{ \href{https://cloudstor.aarnet.edu.au/plus/index.php/s/pRLVtQyNhxDdCoM}{https://cloudstor.aarnet.edu.au/plus/index.php/s/pRLVtQyNhxDdCoM}} of $N = 10^6$ time series. Each length $\ell = 46$ time series corresponds to a vegetation index calculated from remote sensing data, and the task is to classify land cover types \cite{petitjean2012satellite} by mapping vegetation profiles to various types of crops and forested areas corresponding to $24$ classes. We split the dataset in a ratio of $90$-$10$ for training and testing.

\subsection{Results} 
Table \ref{table:rfsf:results} compares test accuracies on moderate size multivariate time series classification datasets with $N \leq 1000$ from the \texttt{UEA} archive. \KS{} provides state-of-the-art performance among all sequence kernels with taking the highest aggregate score in terms of all of average accuracy, average rank, and number of first places. Our proposed random feature variants \RFSFD{} and \RFSFT{} provide comparable performance on most of the datasets in terms of accuracy, and they are only outperformed by \KS{} and \KSP{} with respect to average accuracy and rank. Interestingly, \RFSFT{} has more first place rankings, but \RFSFD{} performs slightly better on average. This shows that on datasets of these sizes, using either of \RFSFD{} and \RFSFT{} does not sacrifice model performance - even leading to improvements in some cases, potentially due to the implicit regularization effect of restricting to a finite-dimensional feature space - and it can already provide speedups. We visualize the critical difference diagram comparison of all considered approaches in Figure \ref{fig:cd_diagram}.

Table \ref{table:rfsf:large_scale_results} demonstrates the performance of scalable approaches, i.e.~\RFSFD{}, \RFSFT{}, \RWS{} and \RFF{} on the remaining \texttt{UEA} datasets ($N \geq 1000$), the dataset \texttt{fNIRS2MW} ($N=10^5$), and the satellite dataset \texttt{SITS1M} ($N = 10^6$). We find it infeasible to perform full cross-validation for quadratic time kernels on these datasets due to expensive kernel computations and downstream cost of dual \SVM{}. The results show that both variants \RFSFD{} and \RFSFT{} perform significantly better on average with respect to accuracy and rank then both \RWS{} and \RFF{}. Note when \RWS{} takes first place, it only improves over our approach marginally, however, when it underperforms, it often does so severely. This is not surprising as both \RFSFD{} and \RFSFT{} approximate the signature kernel, which is a universal kernel on time series; it is theoretically capable of learning from any kind of time series data as supported by its best overall performance above.

\begin{table}[!t]
 \centering
 \captionof{table}{Average ranks of \SVM{} classifier accuracies per data domain. For each row, the best result is highlighted in \textbf{bold}, and the second best in \textit{italic}.}
 \label{table:rfsf:domain}
 \begin{sc}
\begin{tabular}{lcccccccc}
\toprule
 Domain & \texttt{RFSF-DP} & \texttt{RFSF-TRP} & \texttt{KSig} & \texttt{KSigPDE} & \texttt{RWS} & \texttt{GAK} & \texttt{RBF} & \texttt{RFF} \\
\midrule
Audio & $3.333$ & $3.667$ & $\mathbf{2.333}$ & $\mathit{2.667}$ & $4.667$ & $3.000$ & $4.000$ & $4.333$  \\
ECG & $3.000$ & $5.000$ & $\mathbf{2.000}$ & $3.000$ & $\mathit{2.500}$ & $5.000$ & $3.500$ & $4.500$  \\
EEG/MEG & $4.200$ & $3.400$ & $\mathbf{2.000}$ & $\mathit{2.800}$ & $5.800$ & $3.800$ & $4.600$ & $5.400$  \\
HAR & $3.556$ & $3.556$ & $\mathit{2.667}$ & $3.000$ & $\mathbf{2.222}$ & $4.111$ & $5.333$ & $6.667$  \\
Motion & $\mathit{2.500}$ & $3.000$ & $\mathbf{2.000}$ & $4.000$ & $3.500$ & $6.500$ & $7.000$ & $7.000$  \\
Other & $4.000$ & $4.000$ & $\mathbf{2.000}$ & $\mathbf{2.000}$ & $7.500$ & $\mathit{3.000}$ & $3.500$ & $6.500$  \\

\bottomrule
\end{tabular}
\end{sc}
\end{table}

\paragraph{Domain analysis} Next, we provide a domain analysis on the moderate sized \texttt{TSC} datasets in Table \ref{table:rfsf:domain} split across 6 data domains \cite{dau2019ucr}: audio spectra classification (Audio), electrocardiogram (ECG) recordings, electroencephalography and magnetoencephalography (EEG/MEG), human activitiy recognition (HAR) based on accelerometer or gyroscope data, motion recognition (Motion) using other data sources, and other data modalities. We make the following observations: \begin{enumerate*}[label=(\arabic*)]
\item on Audio, \texttt{KSig} and \texttt{KSigPDE} come in first and second respectively while \texttt{GAK} is in third place; \item on ECG, \texttt{KSig} is first and \texttt{RWS} is second while \texttt{RFSF-DP} and \texttt{KSigPDE} are tied for third place, \texttt{RFSF-TRP} underperforms on this modality; \item on EEG/MEG, \texttt{KSig} and \texttt{KSigPDE} are first and second again, while \texttt{RFSF-TRP} is third, and \texttt{RFSF-DP} somewhat underperforms; \item for HAR datasets, \texttt{RWS} is first, \texttt{KSig} is second, and \texttt{KSigPDE} is third, while \texttt{RFSF-DP} and \texttt{RFSF-TRP} are tied for fourth place; \item on Motion datasets \texttt{KSig} is first, \texttt{RFSF-DP} is second, and \texttt{RFSF-TRP} is third; \item on Other datasets, \texttt{KSig} and \texttt{KSigPDE} are tied for first place, while \texttt{GAK} comes in second, and \texttt{RBF} in third.
\end{enumerate*} 
Overall, we observe that \texttt{KSig} takes undisputed first place, while \texttt{KSigPDE} often comes in close second. The performance of \texttt{RFSF-TRP} and \texttt{RFSF-DP} vary across domains, one often outperforming the other indicating that the way they approximate the signature kernel captures patterns in the data differently. Note that \texttt{RWS} outperforms the \texttt{RFSF} variants on some data modalities, however, it also often underperforms severely while at least one \texttt{RFSF} variation always performs at least moderately well on each data domain.

\section{Conclusion}
We constructed a random kernel $\rffsigkernel[\leq M]$ for sequences that benefits from
\begin{enumerate*}[label=(\roman*)]
  \item lifting the original sequence to an infinite-dimensional \RKHS{} $\Hil$,
  \item linear complexity in sequence length,
  \item being with high probability close to the signature kernel $\sigkernel$.
\end{enumerate*}
Thereby it combines the strength of the signature kernel $\sigkernel$ which is to implicitly use the iterated integrals of a sequence that has an infinite-dimensional \RKHS{} $\Hil$ as state-space with the strength of (unkernelized) signature features $\signature$ that only require linear time complexity.
Our main theoretical result extends the theoretical guarantees for translation-invariant kernels on linear spaces to the signature kernel defined on the nonlinear domain $\Seq(\cX)$; however, the proofs differ from the classic case and require to analyse the error propagation in tensor space.
A second step is more straightforward, and combines this approach with random projections in finite-dimensions for tensors to reduce the complexity in memory further.
The advantages and disadvantages of the resulting approach are analogous to the classic \RFF{} technique on $\bbR^d$, namely a reduction of computational complexity by an order for the price of an approximation that only holds with high probability.
As in the classic \RFF{} case, our experiments indicate that this is in general a favourable tradeoff.

In the future, it would be interesting both theoretically and empirically to replace the vanilla Monte Carlo integration in the \RFF{} construction by block-orthogonal random matrices as done in \cite{yu2016orthogonal}.
Further, our random features can also be used to define an unbiased appoximation to the inner product of expected signatures, which has found usecases, among many, in nonparametric hypothesis testing and market regime detection \cite{chevyrev2022signature, horvath2023non}, training of generative models \cite{ni2021sig, issa2023non}, and graph representation learning \cite{toth2022capturing}.  


\begin{subappendices}
\section{Concentration of measure} \label{apx:prob}

\subsection{Classic inequalities} The following inequalities are classic, and their proofs are in analysis textbooks, e.g.~\cite{dudley2002real,rudin1976principles}. Firstly, Jensen's inequality is useful for convex (concave) expectations.
\begin{lemma}[Jensen's inequality] \label{lem:jensen}
    Let $X$ be an integrable random variable, and $f: \bbR \to \bbR$ a convex function, such that $f\pars{X}$ is also integrable. Then, the following inequality holds:
    \begin{align}
        f\pars{\expe{X}} \leq \expe{f\pars{X}}.
    \end{align}
\end{lemma}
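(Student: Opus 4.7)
The plan is to invoke the supporting hyperplane property of convex functions, which states that for any convex function $f:\bbR\to\bbR$ and any point $x_0 \in \bbR$, there exists a subgradient $c \in \bbR$ such that $f(x) \geq f(x_0) + c(x - x_0)$ for all $x \in \bbR$. This is a standard fact from convex analysis: since $f$ is convex on $\bbR$, its left and right derivatives exist everywhere and any value $c$ between them yields a valid affine minorant at $x_0$.

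The first step would be to apply this supporting hyperplane at the specific point $x_0 = \expe{X}$, which is well-defined by the integrability of $X$. Choosing a subgradient $c \in \bbR$ at this point, we get the pointwise inequality
\begin{align}
    f(X) \geq f(\expe{X}) + c(X - \expe{X}).
\end{align}
Second, I would take expectations on both sides of this inequality, using linearity of expectation (which applies since $f(X)$ and $X$ are both integrable by assumption). The right-hand side becomes
\begin{align}
    \expe{f(\expe{X}) + c(X - \expe{X})} = f(\expe{X}) + c\pars{\expe{X} - \expe{X}} = f(\expe{X}),
\end{align}
since $f(\expe{X})$ is a constant and $\expe{X - \expe{X}} = 0$. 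The left-hand side is simply $\expe{f(X)}$, so we conclude $\expe{f(X)} \geq f(\expe{X})$ as desired.

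There is no real obstacle here since the domain of $f$ is all of $\bbR$, so the supporting hyperplane exists at every point, including $\expe{X}$. The only subtle point worth briefly addressing is the justification that a subgradient exists, which follows from the fact that a convex function on an open interval has one-sided derivatives at every point; any $c \in [f'_-(\expe{X}), f'_+(\expe{X})]$ works as a subgradient. No integrability issue arises on the right-hand side since the affine minorant has expectation equal to the constant $f(\expe{X})$.
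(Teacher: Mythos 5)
Your proof is correct: the supporting-hyperplane (subgradient) argument at $x_0 = \expe{X}$, followed by taking expectations of the affine minorant, is a complete and standard proof, and the existence of the subgradient is properly justified since $\expe{X}$ lies in the interior of the domain $\bbR$. The paper itself does not prove this lemma but merely cites analysis textbooks, and your argument is precisely the classical proof found there, so there is nothing to add.
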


Hölder's inequality is a generalization of the Cauchy-Schwarz inequality to $L^p$ spaces.

\begin{lemma}[Hölder's inequality] \label{lem:holder}
    Let $p, q \geq 1$
    such that $\frac{1}{p} + \frac{1}{q} = 1$. Let $X$ and $Y$ respectively be $L^p$ and $L^q$ integrable random variables, i.e.~$\expe{\abs{X}^p} < \infty$ and $\expe{\abs{Y}^q} < \infty$. Then, $XY$ is integrable, and it holds that
    \begin{align}
        \expe{\abs{XY}} \leq \bbE^{1/p}\bracks{\abs{X}^p} \bbE^{1/q}\bracks{\abs{Y}^q}.
    \end{align}
\end{lemma}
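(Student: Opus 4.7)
The plan is the standard two-step proof via Young's inequality. First I would dispose of degenerate cases: if either $\bbE[|X|^p]=0$ or $\bbE[|Y|^q]=0$, then $X$ or $Y$ vanishes almost surely, so $|XY|=0$ a.s.\ and the claim is trivial; if either $L^p$ or $L^q$ norm is infinite, the right-hand side is $+\infty$ and there is nothing to show. We may therefore assume $0<\bbE[|X|^p]<\infty$ and $0<\bbE[|Y|^q]<\infty$, and (since $1/p+1/q=1$ with $p,q\geq 1$ forces $p,q\in(1,\infty)$ once we exclude the boundary case $p=1,q=\infty$ which is handled separately by the essentially bounded case) we take $p,q>1$.

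The key intermediate step is Young's inequality: for any $a,b\geq 0$,
\begin{align}
ab \;\leq\; \frac{a^p}{p} + \frac{b^q}{q}.
\end{align}
I would establish this by applying Jensen's inequality (Lemma~\ref{lem:jensen}) to the convex function $\exp$: assuming $a,b>0$ (the case $a=0$ or $b=0$ being trivial), write
\begin{align}
ab \;=\; \exp\!\pars{\tfrac{1}{p}\log a^p + \tfrac{1}{q}\log b^q} \;\leq\; \tfrac{1}{p}\exp(\log a^p) + \tfrac{1}{q}\exp(\log b^q) \;=\; \frac{a^p}{p} + \frac{b^q}{q},
\end{align}
where the inequality is the finite-dimensional Jensen inequality applied to the convex combination with weights $1/p,1/q$ summing to $1$.

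The second step is a normalization-plus-integration argument. Define the normalized variables
\begin{align}
\widetilde X = \frac{X}{\bbE^{1/p}[|X|^p]}, \qquad \widetilde Y = \frac{Y}{\bbE^{1/q}[|Y|^q]},
\end{align}
so that $\bbE[|\widetilde X|^p]=\bbE[|\widetilde Y|^q]=1$. Applying Young's inequality pointwise to $a=|\widetilde X(\omega)|$, $b=|\widetilde Y(\omega)|$ gives
\begin{align}
|\widetilde X\widetilde Y| \;\leq\; \frac{|\widetilde X|^p}{p} + \frac{|\widetilde Y|^q}{q},
\end{align}
and taking expectations on both sides together with $1/p+1/q=1$ yields $\bbE[|\widetilde X\widetilde Y|]\leq 1$, in particular integrability of $XY$. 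Multiplying through by $\bbE^{1/p}[|X|^p]\,\bbE^{1/q}[|Y|^q]$ gives the desired inequality.

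The only mildly delicate point is the boundary case $p=1,q=\infty$, which I would treat separately: there $|XY|\leq \|Y\|_\infty |X|$ almost surely, and taking expectations gives the bound immediately. Otherwise the proof is entirely routine, and the main conceptual step is the appeal to convexity of $\exp$ in the derivation of Young's inequality — everything else is bookkeeping.
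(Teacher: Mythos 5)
Your proof is correct. Note that the paper itself does not prove this lemma at all: it is listed among the ``classic inequalities'' whose proofs are deferred to analysis textbooks, so there is no in-paper argument to compare against. What you give is precisely the standard textbook proof --- Young's inequality (derived from convexity of $\exp$, i.e.\ the two-point case of Jensen), followed by normalization of $X$ and $Y$ and integration --- and your ordering avoids circularity, since you derive Young's inequality from Jensen rather than invoking the paper's (also unproved) Young lemma. The only cosmetic remark is that the boundary case $p=1$, $q=\infty$ lies outside the literal hypotheses of the statement (real $p,q\geq 1$ with $\frac{1}{p}+\frac{1}{q}=1$ already forces $p,q>1$), so that paragraph is harmless but unnecessary.
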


Although not inherently a probabilistic inequality, Young's inequality can be used to decouple products of random variables.

\begin{lemma}[Young's inequality]
    Let $p,q > 0$ with $\frac{1}{p} + \frac{1}{q} = 1$. Then, for every $a,b \geq 0$
    \begin{align}
        ab \leq \frac{a^p}{p} + \frac{b^q}{q}.
    \end{align}
\end{lemma}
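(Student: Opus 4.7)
The plan is to reduce Young's inequality to the concavity of the logarithm (equivalently, the convexity of the exponential), which is the cleanest classical route. First I would dispense with the trivial boundary cases $a = 0$ or $b = 0$, where both sides equal or dominate zero and the inequality holds immediately. Note also that the hypotheses $p, q > 0$ with $\tfrac{1}{p} + \tfrac{1}{q} = 1$ force $p, q > 1$, since $1/p \in (0,1)$; this will be useful to ensure the weights in the convex combination below are genuine convex weights.

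Next, for $a, b > 0$, I would write $ab = \exp\!\left(\tfrac{1}{p}\log a^p + \tfrac{1}{q}\log b^q\right)$ and apply the convexity of the exponential function. Since $\tfrac{1}{p}$ and $\tfrac{1}{q}$ are nonnegative weights summing to one, Jensen's inequality (Lemma~\ref{lem:jensen}) applied to the convex function $x \mapsto e^x$ with the two-point distribution putting mass $\tfrac{1}{p}$ on $\log a^p$ and $\tfrac{1}{q}$ on $\log b^q$ yields
\begin{align}
    \exp\!\left(\tfrac{1}{p}\log a^p + \tfrac{1}{q}\log b^q\right) \;\leq\; \tfrac{1}{p}\exp(\log a^p) + \tfrac{1}{q}\exp(\log b^q) \;=\; \frac{a^p}{p} + \frac{b^q}{q},
\end{align}
which is exactly the claimed bound. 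This is essentially the whole proof.

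There is no real obstacle here: this is a standard one-line consequence of convexity, and the only point requiring any care is observing that the exponents are automatically greater than one so that Jensen's inequality applies with valid convex weights. An equivalent route, should one prefer to avoid invoking Jensen's inequality, is to fix $b > 0$ and minimize $f(a) = \tfrac{a^p}{p} + \tfrac{b^q}{q} - ab$ over $a \geq 0$: the first-order condition $f'(a) = a^{p-1} - b = 0$ gives $a^{p-1} = b$, hence $a^p = ab$ and $b^q = a^p$ (using $(p-1)q = p$), at which point $f$ evaluates to zero; since $f''(a) = (p-1)a^{p-2} \geq 0$, this is a global minimum, yielding the inequality.
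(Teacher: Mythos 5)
Your proof is correct, and both routes you sketch (convexity of the exponential via Jensen with the weights $\tfrac{1}{p},\tfrac{1}{q}$, or minimizing $a \mapsto \tfrac{a^p}{p} + \tfrac{b^q}{q} - ab$) are complete; the observation that $\tfrac{1}{p}+\tfrac{1}{q}=1$ with $p,q>0$ forces $p,q>1$ is the only hypothesis check needed and you make it. The paper itself gives no proof of this lemma — it lists it among classic inequalities and defers to standard analysis texts — so there is nothing to compare against; your argument is exactly the standard textbook one.
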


\begin{lemma}[Reverse Young's inequality] \label{lem:reverse}
    Let $p,q > 0$ such that $\frac{1}{p} - \frac{1}{q} = 1$. Then, for every $a \geq 0$ and $b > 0$, it holds that
    \begin{align}
        ab \geq \frac{a^p}{p} - \frac{b^{-q}}{q}.
    \end{align}
\end{lemma}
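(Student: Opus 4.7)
The plan is to derive this as a direct consequence of the (classical) Young's inequality already stated in the excerpt. The hypothesis $\frac{1}{p} - \frac{1}{q} = 1$ together with $p, q > 0$ forces $\frac{1}{p} > 1$, i.e.\ $p \in (0,1)$; rearranging gives the explicit relation $q = \frac{p}{1-p}$, equivalently $\frac{1-p}{p} = \frac{1}{q}$. The trick is to set up dual exponents $P, Q > 1$ with $\frac{1}{P} + \frac{1}{Q} = 1$ by taking $P = \frac{1}{p}$ and $Q = \frac{1}{1-p}$; both are strictly greater than $1$ precisely because $p \in (0,1)$, so classical Young applies.

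Next, I would apply Young's inequality to the factorization $a^p = (ab)^p \cdot b^{-p}$, which is valid since $a \ge 0$ and $b > 0$ make every quantity nonnegative. Setting $x = (ab)^p$ and $y = b^{-p}$ in Young's inequality with exponents $P, Q$ yields
\begin{align}
a^p \;=\; (ab)^p \cdot b^{-p} \;\le\; \frac{((ab)^p)^{1/p}}{1/p} + \frac{(b^{-p})^{1/(1-p)}}{1/(1-p)} \;=\; p\,(ab) \,+\, (1-p)\,b^{-p/(1-p)}.
\end{align}
Dividing through by $p > 0$ then gives $\frac{a^p}{p} \le ab + \frac{1-p}{p}\,b^{-p/(1-p)}$.

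Finally, I would substitute the identities forced by the hypothesis: $\frac{p}{1-p} = q$ and $\frac{1-p}{p} = \frac{1}{q}$. These convert the previous inequality into $\frac{a^p}{p} \le ab + \frac{b^{-q}}{q}$, and rearrangement yields the claimed bound $ab \ge \frac{a^p}{p} - \frac{b^{-q}}{q}$. The boundary case $a = 0$ needs a separate one-line remark, since then the inequality collapses to $0 \ge -\frac{b^{-q}}{q}$, which holds because $b^{-q}/q \ge 0$. I do not anticipate any genuine obstacle: the only things to check carefully are that the chosen exponents indeed satisfy $\frac{1}{P} + \frac{1}{Q} = 1$, that the exponent arithmetic in the two $b$-powers matches after using $q = p/(1-p)$, and that all base quantities are nonnegative so that fractional powers are well-defined.
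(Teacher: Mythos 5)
Your proposal is correct and is essentially the paper's own proof: the paper applies Young's inequality with exponents $p' = \tfrac{1}{p}$ and $q' = \tfrac{q}{p}$ to $(ab)^p$ and $b^{-p}$, and since $\tfrac{q}{p} = \tfrac{1}{1-p}$ under the hypothesis, your choice of dual exponents and substitution coincides with it exactly. Your version merely spells out the exponent arithmetic (and the harmless $a=0$ remark) that the paper leaves implicit.
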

\begin{proof} Apply Young's inequality with $p^\p = \frac{1}{p}$ and $q^\p = \frac{q}{p}$ to $a^\p = (ab)^p$ and $b^\p = b^{-p}$.
\end{proof}

\subsection{Subexponential concentration}

Next, we state a variation on the well-known Bernstein inequality, which holds for  random variables in the subexponential class. The condition \eqref{eq:rfsf:bernstein_cond_twot} on a random variable $X$, in this case, is formulated as a moment-growth bound, called a Bernstein moment condition. We show in Lemmas \ref{lem:alpha_bernstein_cond} and \ref{lem:alpha_exp_norm_to_bernstein} (in a more general setting) that \eqref{eq:rfsf:bernstein_cond_twot} is equivalent to the random variable being subexponential. Further, note that the condition itself is given in terms of non-centered random variables, while the statement of the theorem is about their centered counterparts. For similar results, see \cite[Sec.~2.2.2]{van1996weak}.
\begin{theorem}[Bernstein inequality - one-tailed] \label{thm:bernstein_onetail}
Let $X_1, \dots, X_n$ be independent random variables satisfying the following moment-growth condition for some $S, R > 0$,
    \begin{align} \label{eq:rfsf:bernstein_cond_onet}
        \expe{X_i^k} \leq \frac{k! S^2 R^{k-2}}{2} \quad \text{for all} \spc k \geq 2.
    \end{align}
Then, it holds for $\tilde X_i = X_i - \expe{X_i}$ that
\begin{align} \label{eq:rfsf:bernstein_ineq_onet}
    \prob{\sum_{i=1}^n \tilde X_i \geq t} \leq \exp\pars{\frac{-t^2}{2(nS^2 + Rt)}}.
\end{align}
\end{theorem}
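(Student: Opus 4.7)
The plan is to run the classical Cram\'er--Chernoff argument, adapted to the fact that the moment hypothesis \eqref{eq:rfsf:bernstein_cond_onet} is stated on the uncentered variables $X_i$ rather than on the centered $\tilde X_i$. First, Markov's inequality applied to the exponential moment, combined with independence, yields for every $\lambda > 0$
\[
\prob{\sum_{i=1}^n \tilde X_i \geq t} \leq e^{-\lambda t} \prod_{i=1}^n \expe{\exp(\lambda \tilde X_i)}.
\]

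To bound each MGF on the right, I would factor $\expe{\exp(\lambda \tilde X_i)} = e^{-\lambda \expe{X_i}} \expe{\exp(\lambda X_i)}$ and Taylor-expand the second factor. Using \eqref{eq:rfsf:bernstein_cond_onet} termwise on degrees $k \geq 2$ gives, for $\lambda \in (0, 1/R)$,
\[
\sum_{k \geq 2} \frac{\lambda^k \expe{X_i^k}}{k!} \leq \frac{\lambda^2 S^2}{2} \sum_{k \geq 0} (R\lambda)^k = \frac{\lambda^2 S^2}{2(1 - R\lambda)}.
\]
Combining with the degree-0 and degree-1 terms, cancelling the linear contribution against $e^{-\lambda \expe{X_i}}$ via $1 + x \leq e^x$, and taking the product over $i$ produces
\[
\prob{\sum_{i=1}^n \tilde X_i \geq t} \leq \exp\pars{-\lambda t + \frac{n \lambda^2 S^2}{2(1 - R\lambda)}}.
\]

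The last step is to optimize the exponent over $\lambda \in (0, 1/R)$. Setting $\lambda^\star = t/(nS^2 + Rt)$, which lies in the admissible range, a direct calculation using $1 - R\lambda^\star = nS^2/(nS^2 + Rt)$ collapses the exponent to precisely $-t^2/(2(nS^2 + Rt))$, matching \eqref{eq:rfsf:bernstein_ineq_onet}.

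The only technical nuance is that the hypothesis bounds $\expe{X_i^k}$ rather than the centered moments $\expe{\tilde X_i^k}$; the factorization above sidesteps the need to convert via the binomial theorem, which would introduce unwanted constants and force one to deal separately with odd and even $k$. A secondary point to verify is the Fubini-type interchange of expectation and series in the MGF expansion, which is legitimate for $\lambda < 1/R$ because the dominating series obtained from \eqref{eq:rfsf:bernstein_cond_onet} converges geometrically, so dominated convergence applies to the partial sums of the Taylor expansion of $e^{\lambda X_i}$.
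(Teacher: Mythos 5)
Your proposal is correct and follows essentially the same route as the paper's proof: a Chernoff/Markov bound, Taylor expansion of the uncentered MGF with the moment condition applied termwise and summed as a geometric series for $\lambda < 1/R$, the inequality $1+x \leq e^x$ to absorb the linear term against $e^{-\lambda \expe{X_i}}$, and the choice $\lambda = t/(nS^2 + Rt)$, which collapses the exponent exactly as you compute. The only difference is cosmetic (you factor the centered MGF per term rather than pulling $-\lambda\sum_i \expe{X_i}$ out globally), and your remark on justifying the series–expectation interchange is a nicety the paper passes over silently.
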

\begin{proof}
    We have for $\lambda > 0$
    \begin{align}
        \prob{\sum_{i=1}^n \tilde X_i \geq t}
        & \stackrel{(a)}{\leq}
        \exp(-\lambda t) \expe{\exp\pars{\lambda \sum_{i=1}^n \tilde X_i}}
        \\&\stackrel{(b)}{=}
        \exp\pars{-\lambda t - \lambda \sum_{i=1}^n \expe{X_i}} \prod_{i=1}^n \expe{\exp(\lambda X_i)},
        \label{eq:rfsf:chernoff}
    \end{align}
    where (a) holds for any $\lambda  >0$ by the  Chernoff bound applied to $\sum_{i=1}^n \tilde X_i$, and in (b) we used the independence of $X_i$-s. Bounding the moment-generating function of $X_i$, one gets
    \begin{align}
        \expe{\exp(\lambda X_i)}
        &\stackrel{\text{(d)}}{=}
        1 + \lambda \expe{X_i} + \sum_{k=2}^\infty \frac{\lambda^k}{k!} \expe{X_i^k}
        \stackrel{\text{(e)}}{\leq}
        1 + \lambda \expe{X_i} + \frac{S^2 \lambda^2}{2} \sum_{k=0}^\infty \lambda^k R^k
        \\
        &\stackrel{\text{(f)}}{=} 1 + \lambda \expe{X_i} + \frac{S^2 \lambda^2}{2(1- \lambda R)}
         \stackrel{\text{(g)}}{\leq}
        \exp\pars{\lambda \expe{X_i} + \frac{S^2\lambda^2}{2(1 - \lambda R)}}, \label{eq:rfsf:mgf_bound}
    \end{align}
    where (d) is due to the Taylor expansion of the exponential function, (e) implied by \eqref{eq:rfsf:bernstein_cond_onet}, (f) holds by the sum of geometric series for any $\lambda < 1/R$, (g) follows from the inequality $1+x \leq \exp(x)$ for all $x \in \bbR$. Using this to bound \eqref{eq:rfsf:chernoff},
    \begin{align}
        \prob{\sum_{i=1}^n \tilde X_i \geq t}
        &\leq
        \exp\pars{-\lambda t - \lambda \sum_{i=1}^n \expe{X_i}} \prod_{i=1}^n \exp\pars{\lambda \expe{X_i} + \frac{S^2\lambda^2}{2(1 - \lambda R)}}
        \\
        &=
        \exp\pars{-\lambda t + \frac{n S^2\lambda^2}{2(1 - \lambda R)}}. \label{eq:rfsf:bernstein_ineq_lambda}
    \end{align}
    Finally, choosing $\lambda = t / (nS^2 + Rt)$ gives the statement.
\end{proof}

\begin{corollary}[Bernstein inequality - two-tailed] \label{thm:bernstein_twotail}
Let $X_1, \dots, X_n$ be independent random variables satisfying the following moment-growth condition for some $S, R > 0$,
    \begin{align} \label{eq:rfsf:bernstein_cond_twot}
        \expe{\abs{X_i}^k} \leq \frac{k! S^2 R^{k-2}}{2} \quad \text{for all} \spc k \geq 2.
    \end{align}
Then, it holds for $\tilde X_i = X_i - \expe{X_i}$ that
\begin{align} \label{eq:rfsf:bernstein_ineq_twot}
    \prob{\abs{\sum_{i=1}^n \tilde X_i} \geq t} \leq 2  \exp\pars{\frac{-t^2}{2(nS^2 + Rt)}}.
\end{align}
\end{corollary}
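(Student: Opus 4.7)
The plan is to deduce the two-tailed bound from the one-tailed Theorem~\ref{thm:bernstein_onetail} by applying it twice — once to $\{X_i\}$ and once to $\{-X_i\}$ — and then combining with a union bound. The main observation is that the stronger absolute-moment condition \eqref{eq:rfsf:bernstein_cond_twot} implies that both families satisfy the signed one-tailed condition \eqref{eq:rfsf:bernstein_cond_onet}, so essentially no new probabilistic work is needed.

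First, I would verify that $\{X_i\}$ satisfy \eqref{eq:rfsf:bernstein_cond_onet}. This is immediate because $\expe{X_i^k} \le \expe{\abs{X_i}^k} \le \frac{k! S^2 R^{k-2}}{2}$ for all $k \ge 2$. Applying Theorem~\ref{thm:bernstein_onetail} directly yields $\prob{\sum_{i=1}^n \tilde X_i \ge t} \le \exp\!\pars{-t^2/(2(nS^2 + Rt))}$.

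Next, I would check that the random variables $Y_i := -X_i$ also satisfy \eqref{eq:rfsf:bernstein_cond_onet}. For even $k$, $\expe{Y_i^k} = \expe{X_i^k} \le \expe{\abs{X_i}^k}$; for odd $k$, $\expe{Y_i^k} = -\expe{X_i^k} \le \abs{\expe{X_i^k}} \le \expe{\abs{X_i}^k}$. In either case \eqref{eq:rfsf:bernstein_cond_twot} provides the required $\frac{k!S^2R^{k-2}}{2}$ bound. Moreover $\widetilde{Y_i} = Y_i - \expe{Y_i} = -\tilde X_i$, so a second application of Theorem~\ref{thm:bernstein_onetail} gives $\prob{-\sum_{i=1}^n \tilde X_i \ge t} \le \exp\!\pars{-t^2/(2(nS^2 + Rt))}$.

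Finally, the union bound $\prob{\abs{Z} \ge t} \le \prob{Z \ge t} + \prob{-Z \ge t}$ applied to $Z = \sum_i \tilde X_i$ combines the two one-sided inequalities to yield the factor of $2$ in \eqref{eq:rfsf:bernstein_ineq_twot}. There is no genuine obstacle here — the only subtle point worth spelling out in the write-up is why the absolute-moment hypothesis is needed rather than the signed one: it is precisely to control the odd moments of $-X_i$, which is what allows the symmetrization step to go through without any additional sign assumptions on the $X_i$.
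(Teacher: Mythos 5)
Your proposal is correct and matches the paper's proof, which likewise applies the one-tailed Theorem~\ref{thm:bernstein_onetail} to $\sum_i \tilde X_i$ and $-\sum_i \tilde X_i$ and combines the two bounds; your extra verification that the absolute-moment condition \eqref{eq:rfsf:bernstein_cond_twot} covers the odd moments of $-X_i$ is exactly the detail the paper leaves implicit.
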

\begin{proof}
Applying Theorem \ref{thm:bernstein_onetail} to $\sum_{i=1}^n \tilde X_i$ and $-\sum_{i=1}^n \tilde X_i$, and combining the two bounds gives the two-tailed result.
\end{proof}

\subsection{$\alpha$-exponential concentration}

In this section, we introduce a specific class of Orlicz norms for heavy-tailed random variables, which generalizes subexponential distributions.
\begin{definition}[$\alpha$-exponential Orlicz norm] \label{def:alpha_subexp_norm}
    Let $\alpha > 0$ and define the function
    \begin{align}
    \Psi_\alpha: \bbR \to \bbR, \quad \Psi_\alpha(x) = \exp\pars{x^\alpha} - 1 \quad \text{for all} \spc x \in \bbR.
    \end{align}
    The $\alpha$-exponential Orlicz (quasi-)norm of a random variable $X$ is defined as
    \begin{align}
        \norm{X}_{\Psi_\alpha} = \inf\curls{t > 0 : \expe{\Psi_\alpha\pars{\frac{\abs{X}}{t}}} \leq 1},
    \end{align}
    adhering to the standard convention that $\inf \emptyset = \infty$.
\end{definition}
If a random variable $X$ satisfies $\norm{X}_{\Psi_\alpha} < \infty$, it is either called an $\alpha$-(sub)exponential random variable (if $\alpha < 1$) \cite{sambale2022notes,gotze2021concentration,chamakh2020orlicz}, or sub-Weibull of order-$\alpha$ \cite{kuchibhotla2022moving,zhang2021concentration}.

An alternative characterization of the $\alpha$-exponential norm is the following tail-bound.

\begin{remark}[Tail bound] \label{remark:orlicz_tail} Let $\alpha > 0$ and $X$ be a random variable such that $\norm{X}_{\Psi_\alpha} < \infty$. Then,
    \begin{align}
        \prob{\abs{X} \geq \epsilon} \leq 2\exp\pars{- \frac{\epsilon^\alpha}{\norm{X}_{\Psi_\alpha}^\alpha}}.
    \end{align}
\end{remark}
\begin{proof}
    Due to Definition \ref{def:alpha_subexp_norm}, $\expe{\exp\pars{\frac{\abs{X}^\alpha}{\norm{X}^\alpha_{\Psi_\alpha}}}} \leq 2$, hence by Markov's inequality
    \begin{align}
        \bbP\bracks{\abs{X} \geq \epsilon}
        =
        \prob{\frac{\abs{X}^\alpha}{\norm{X}_{\Psi_\alpha}^\alpha}
        \geq
        \frac{\epsilon^\alpha}{\norm{X}_{\Psi_\alpha}^\alpha}}
        \leq
        e^{-\frac{\epsilon^\alpha}{\norm{X}_{\Psi_\alpha}^\alpha}} \expe{\exp\pars{\frac{\abs{X}^\alpha}{\norm{X}_{\Psi_\alpha}^\alpha)}}}
        \leq
        2 e^{-\frac{\epsilon^\alpha}{\norm{X}_{\Psi_\alpha}^\alpha}}.
    \end{align}
\end{proof}

Note that although $\norm{\cdot}_{\Psi_\alpha}$ is often referred to as a norm, it does not satisfy the triangle inequality for $\alpha < 1$, although we can still relate the norm of the sum to the sum of the norms.
\begin{lemma}[Generalized triangle inequality for Orlicz norm] \label{lem:alpha_exp_triangle}
    It holds for any random variables $X, Y$ and $\alpha > 0$ that
    \begin{align}
        \norm{X + Y}_{\Psi_\alpha} \leq C_\alpha \pars{\norm{X}_{\Psi_\alpha} + \norm{Y}_{\Psi_\alpha}},
    \end{align}
    where $C_\alpha = 2^{1/\alpha}$ if $\alpha < 1$ and $1$ otherwise.
\end{lemma}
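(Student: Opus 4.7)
The plan is to split the argument into two cases according to whether $\alpha \geq 1$ (where $\Psi_\alpha$ is convex and the norm behaves like a classical Orlicz norm) or $\alpha < 1$ (where $\Psi_\alpha$ loses convexity and we pay a factor $2^{1/\alpha}$). Set $s = \norm{X}_{\Psi_\alpha}$ and $t = \norm{Y}_{\Psi_\alpha}$, and assume both are finite (otherwise the bound is trivial). By the definition of the norm as an infimum, together with monotone convergence in $r$ of $\expe{\Psi_\alpha(|X|/r)}$, we may treat $\expe{\Psi_\alpha(|X|/s)} \leq 1$ and $\expe{\Psi_\alpha(|Y|/t)} \leq 1$ as the working hypotheses.

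For $\alpha \geq 1$, the function $\Psi_\alpha$ is nondecreasing and convex on $[0,\infty)$ (its derivative $\alpha x^{\alpha-1}\exp(x^\alpha)$ is a product of nondecreasing nonnegative factors). Putting $u = s+t$, the triangle inequality $|X+Y| \leq |X|+|Y|$ combined with the convex combination
\begin{align}
\frac{|X|+|Y|}{u} = \frac{s}{u}\cdot\frac{|X|}{s} + \frac{t}{u}\cdot\frac{|Y|}{t}
\end{align}
yields, by monotonicity and convexity of $\Psi_\alpha$,
\begin{align}
\Psi_\alpha\!\left(\frac{|X+Y|}{u}\right) \leq \frac{s}{u}\,\Psi_\alpha\!\left(\frac{|X|}{s}\right) + \frac{t}{u}\,\Psi_\alpha\!\left(\frac{|Y|}{t}\right).
\end{align}
Taking expectations and using the working hypotheses gives $\expe{\Psi_\alpha(|X+Y|/u)} \leq s/u + t/u = 1$, so $\norm{X+Y}_{\Psi_\alpha} \leq s + t$, which is the claim with $C_\alpha = 1$.

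For $\alpha < 1$, convexity of $\Psi_\alpha$ fails, but $x\mapsto x^\alpha$ is concave and subadditive on $[0,\infty)$, which is the key replacement. Set $u = 2^{1/\alpha}(s+t)$, so that $u^\alpha = 2(s+t)^\alpha$. Then the subadditivity $|X+Y|^\alpha \leq (|X|+|Y|)^\alpha \leq |X|^\alpha + |Y|^\alpha$ gives
\begin{align}
\frac{|X+Y|^\alpha}{u^\alpha} \leq \frac{|X|^\alpha}{2(s+t)^\alpha} + \frac{|Y|^\alpha}{2(s+t)^\alpha}.
\end{align}
Applying the exponential, using $e^{(a+b)/2} \leq (e^a + e^b)/2$ (convexity of $\exp$), and then taking expectations,
\begin{align}
\expe{\exp\!\left(\frac{|X+Y|^\alpha}{u^\alpha}\right)} \leq \tfrac{1}{2}\expe{\exp\!\left(\tfrac{|X|^\alpha}{(s+t)^\alpha}\right)} + \tfrac{1}{2}\expe{\exp\!\left(\tfrac{|Y|^\alpha}{(s+t)^\alpha}\right)}.
\end{align}
Since $s \leq s+t$, pointwise $\exp(|X|^\alpha/(s+t)^\alpha) \leq \exp(|X|^\alpha/s^\alpha)$, and the working hypothesis gives $\expe{\exp(|X|^\alpha/s^\alpha)} \leq 2$; likewise for $Y$. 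Hence the right-hand side is $\leq 2$, i.e.~$\expe{\Psi_\alpha(|X+Y|/u)} \leq 1$, so $\norm{X+Y}_{\Psi_\alpha} \leq u = 2^{1/\alpha}(s+t)$.

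There is no real obstacle here; the argument is standard Orlicz-norm bookkeeping, and the only point requiring minor care is justifying the working hypothesis that the infimum defining the norm is attained (or almost attained), which is a standard continuity argument in the scale parameter. The more interesting conceptual point is the replacement of convexity of $\Psi_\alpha$ (used for $\alpha \geq 1$) by subadditivity of $x\mapsto x^\alpha$ together with convexity of the outer exponential (used for $\alpha < 1$), which is exactly what produces the factor $2^{1/\alpha}$.
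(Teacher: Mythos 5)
Your proof is correct. Note, however, that the paper does not actually prove this lemma at all: its ``proof'' consists of a single citation to Lemma~A.3 of the reference \cite{gotze2021concentration}, so there is no in-paper argument to compare against. What you have supplied is a self-contained, elementary verification of the cited fact, and it is the standard one: for $\alpha \geq 1$ the usual Luxemburg-norm triangle inequality via monotonicity and convexity of $\Psi_\alpha$ (with the monotone-convergence remark correctly justifying the working hypothesis $\expe{\Psi_\alpha(\abs{X}/\norm{X}_{\Psi_\alpha})} \leq 1$), and for $\alpha < 1$ the replacement of convexity by subadditivity of $x \mapsto x^\alpha$ combined with convexity of the exponential, which is precisely what produces the constant $2^{1/\alpha}$. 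Both cases check out line by line; the only (harmless) omission is the degenerate case $\norm{X}_{\Psi_\alpha} = 0$ or $\norm{Y}_{\Psi_\alpha} = 0$, where the quotients $\abs{X}/s$, $\abs{Y}/t$ are undefined but the inequality is trivial since the corresponding variable vanishes almost surely. So your argument buys self-containedness relative to the paper, at essentially no extra length, and matches the constant stated in the lemma.
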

\begin{proof}
    See Lemma~A.3. in \cite{gotze2021concentration}.
\end{proof}

A useful property of $\alpha$-exponential norms is that they satisfy a Hölder-type inequality.
\begin{lemma}[Hölder inequality for Orlicz norm] \label{lem:alpha_exp_holder}
    It holds for any random variables $X_1, \dots, X_k$ and $\alpha_1, \dots, \alpha_k > 0$ that
    \begin{align}
        \norm{\prod\nolimits_{i=1}^k X_i}_{\Psi_\alpha} \leq \prod\nolimits_{i=1}^k \norm{X_i}_{\Psi_{\alpha_i}},
    \end{align}
    where $\alpha = \pars{\sum_{i=1}^k \alpha_i^{-1}}^{-1}$.
\end{lemma}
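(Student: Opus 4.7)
The plan is to verify the defining inequality of the Orlicz norm directly for $t := \prod_{i=1}^k \norm{X_i}_{\Psi_{\alpha_i}}$. Writing $t_i = \norm{X_i}_{\Psi_{\alpha_i}}$ and $Y_i = |X_i|/t_i$, the definition gives $\bbE[\exp(Y_i^{\alpha_i})] \le 2$ for each $i$, and what we must show is $\bbE\bigl[\exp\bigl(\prod_i Y_i^{\alpha}\bigr)-1\bigr] \le 1$, since $|\prod_i X_i|^{\alpha}/t^{\alpha} = \prod_i Y_i^{\alpha}$. (We may assume all $t_i < \infty$, else the bound is trivial.)

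The key algebraic trick is to use Young's inequality with a carefully chosen set of conjugate exponents to decouple the product inside the exponential. Since $1/\alpha = \sum_i 1/\alpha_i$, the numbers $p_i := \alpha_i/\alpha$ satisfy $\sum_i 1/p_i = 1$. Applying Young's inequality to the nonnegative quantities $Y_i^{\alpha}$ yields
\begin{align}
    \prod_{i=1}^k Y_i^{\alpha} \;\le\; \sum_{i=1}^k \frac{(Y_i^{\alpha})^{p_i}}{p_i} \;=\; \alpha \sum_{i=1}^k \frac{Y_i^{\alpha_i}}{\alpha_i}.
\end{align}
Monotonicity of $\exp$ then gives $\exp\bigl(\prod_i Y_i^{\alpha}\bigr) \le \prod_i \exp\bigl(\alpha Y_i^{\alpha_i}/\alpha_i\bigr)$.

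Now I would apply the classical Hölder inequality (Lemma~\ref{lem:holder}) to the expectation of this product of nonnegative random variables, again with the exponents $q_i = \alpha_i/\alpha$ which satisfy $\sum_i 1/q_i = 1$:
\begin{align}
    \bbE\biggl[\prod_{i=1}^k \exp\!\Bigl(\tfrac{\alpha}{\alpha_i} Y_i^{\alpha_i}\Bigr)\biggr] \;\le\; \prod_{i=1}^k \bbE^{1/q_i}\!\Bigl[\exp\!\bigl(q_i \tfrac{\alpha}{\alpha_i} Y_i^{\alpha_i}\bigr)\Bigr] \;=\; \prod_{i=1}^k \bbE^{\alpha/\alpha_i}\!\bigl[\exp(Y_i^{\alpha_i})\bigr],
\end{align}
since $q_i \alpha/\alpha_i = 1$. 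Using $\bbE[\exp(Y_i^{\alpha_i})] \le 2$ bounds this by $\prod_i 2^{\alpha/\alpha_i} = 2^{\alpha \sum_i 1/\alpha_i} = 2$. Combining gives $\bbE\bigl[\exp\bigl(\prod_i Y_i^{\alpha}\bigr)\bigr] \le 2$, i.e. $\bbE[\Psi_\alpha(|\prod_i X_i|/t)] \le 1$, so $\norm{\prod_i X_i}_{\Psi_\alpha} \le t = \prod_i \norm{X_i}_{\Psi_{\alpha_i}}$ by the infimum definition of the Orlicz norm.

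I do not foresee a genuine obstacle here: the whole argument turns on the two elementary inequalities (Young and Hölder) being applied with the \emph{same} choice of exponents $\alpha_i/\alpha$, which is forced precisely by the identity $1/\alpha = \sum_i 1/\alpha_i$. The only minor care needed is the degenerate case where some $t_i = 0$ (then $X_i = 0$ a.s.\ and both sides vanish) or some $t_i = \infty$ (then the right-hand side is $\infty$ and the bound is vacuous), which is handled by the convention $\inf\emptyset = \infty$ from Definition~\ref{def:alpha_subexp_norm}.
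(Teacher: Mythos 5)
Your proof is correct. Note that the paper itself does not prove this lemma at all --- it simply defers to Lemma~A.1 of the cited reference (G\"otze, Sambale, Sinulis) --- so your argument supplies a self-contained proof, and it is essentially the standard one: the weighted Young/AM--GM step with exponents $\alpha_i/\alpha$ to decouple the product inside the exponential, followed by generalized H\"older with the same exponents, is exactly what the identity $1/\alpha=\sum_i 1/\alpha_i$ is designed for, and the bookkeeping ($q_i\alpha/\alpha_i=1$, $\prod_i 2^{\alpha/\alpha_i}=2$) is right. Two small technical points you should make explicit: (i) you use $\bbE[\exp(Y_i^{\alpha_i})]\le 2$ with $t_i$ equal to the norm itself, which requires that the infimum in Definition~\ref{def:alpha_subexp_norm} is attained --- this follows from monotone convergence as $t\downarrow t_i$, or you can avoid it by working with $t_i(1+\epsilon)$ and letting $\epsilon\to 0$; (ii) you invoke $k$-factor versions of Young's and H\"older's inequalities, whereas Lemmas in Appendix~\ref{apx:prob} state only the two-factor forms, so a one-line remark that these extend by induction (or by weighted AM--GM and iterated H\"older) would close the loop. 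As an aside, the H\"older step can be replaced by Jensen applied to the convex combination inside the exponential, $\exp\bigl(\sum_i \tfrac{\alpha}{\alpha_i}Y_i^{\alpha_i}\bigr)\le \sum_i \tfrac{\alpha}{\alpha_i}\exp(Y_i^{\alpha_i})$, which gives the same bound slightly more cheaply; either route is fine.
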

\begin{proof}
    See Lemma~A.1. in \cite{gotze2021concentration}.
\end{proof}

Next, we show in the following two lemmas that a random variable $X$ is $\alpha$-exponential if and only if $\abs{X}^\alpha$ satisfies a Bernstein moment-growth condition.

\begin{lemma}[Bernstein condition implies Orlicz norm bound] \label{lem:alpha_bernstein_cond}
    Let $\alpha > 0$ and $X$ be a random variable that satisfies for $S, R > 0$ that
    \begin{align} \label{eq:rfsf:alpha_bernstein_cond}
        \expe{\abs{X}^{k \alpha}} \leq \frac{k! S^2 R^{k-2}}{2} \quad \text{for all} \spc k \geq 2.
    \end{align}
    Then, it holds that
    \begin{align}
        \norm{X}_{\Psi_\alpha} \leq 2\pars{S \vee R }^{1/\alpha}.
    \end{align}
\end{lemma}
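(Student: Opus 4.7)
The natural reduction is to set $Y = |X|^\alpha$ and observe that $\mathbb{E}[\Psi_\alpha(|X|/t)] = \mathbb{E}[\exp(Y/t^\alpha) - 1]$, so by the very definition of the Orlicz quasi-norm, $\|X\|_{\Psi_\alpha}^\alpha = \|Y\|_{\Psi_1}$. The Bernstein hypothesis \eqref{eq:rfsf:alpha_bernstein_cond} rewritten in terms of $Y$ becomes $\mathbb{E}[Y^k] \leq k! S^2 R^{k-2}/2$ for $k \geq 2$, i.e.\ a classical Bernstein moment condition on the non-negative random variable $Y$. It therefore suffices to prove that $\|Y\|_{\Psi_1} \leq 2^\alpha(S \vee R)$ (up to constants of the claimed form), and then take the $1/\alpha$-power.

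The second step is a direct Taylor-series calculation. Fix a candidate threshold $s > 0$ and expand
\begin{align}
\mathbb{E}\!\left[\exp(Y/s) - 1\right] = \frac{\mathbb{E}[Y]}{s} + \sum_{k=2}^\infty \frac{\mathbb{E}[Y^k]}{k!\,s^k}.
\end{align}
The $k=1$ term is not covered by the Bernstein hypothesis, and this is the one mild technical wrinkle: I would handle it with Cauchy--Schwarz (or Jensen on $z \mapsto z^2$), giving $\mathbb{E}[Y] \leq \sqrt{\mathbb{E}[Y^2]} \leq S$ from the $k=2$ case of \eqref{eq:rfsf:alpha_bernstein_cond}. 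The $k \geq 2$ terms are bounded by the Bernstein condition, and the resulting tail is a geometric series in $R/s$ which converges provided $s > R$:
\begin{align}
\mathbb{E}\!\left[\exp(Y/s) - 1\right] \leq \frac{S}{s} + \frac{S^2}{2s^2}\sum_{k=0}^\infty \!\left(\frac{R}{s}\right)^{\!k} = \frac{S}{s} + \frac{S^2}{2s(s-R)}.
\end{align}

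To close the argument, I would pick $s$ so that the right-hand side is at most $1$. The convenient choice is $s = 2(S \vee R)$: the first term is at most $1/2$, and since $s - R \geq R \geq S^2/(S \vee R)$ one can bound the second term by $1/4$, yielding a total of $3/4 \leq 1$. By definition of the Orlicz norm this gives $\|Y\|_{\Psi_1} \leq 2(S \vee R)$, hence $\|X\|_{\Psi_\alpha} \leq (2(S \vee R))^{1/\alpha}$, which is of the stated form (and can be repackaged as $2(S \vee R)^{1/\alpha}$ up to a harmless adjustment of the leading constant when $\alpha \geq 1$, the relevant regime).

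The proof is essentially routine once the substitution $Y = |X|^\alpha$ is made; the only place where a little care is required is the $k=1$ term, which is not part of the Bernstein hypothesis and must be handled separately through Cauchy--Schwarz. Everything else reduces to summing a geometric series and optimizing the threshold $s$.
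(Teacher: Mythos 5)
Your proposal is correct and follows essentially the same route as the paper's proof: expand the exponential moment, control the first ($k=1$) term by Jensen/Cauchy--Schwarz using the $k=2$ moment bound to get $\mathbb{E}[|X|^\alpha]\le S$, sum the geometric series for thresholds exceeding $R$, and conclude with the threshold $2(S\vee R)$ — the only cosmetic difference being that the paper derives this threshold by solving the resulting quadratic inequality in $t^\alpha$, whereas you plug it in and verify directly, and both arguments in fact yield $\bigl(2(S\vee R)\bigr)^{1/\alpha}$, exactly the constant the paper's own proof (and its later applications with $\alpha\le 1$) uses. One small slip: your intermediate inequality $R \ge S^2/(S\vee R)$ is false when $S>R$; the bound of $1/4$ on the second term still holds, but the correct justification is simply $s-R \ge 2(S\vee R)-(S\vee R)=S\vee R$, whence $\frac{S^2}{2s(s-R)}\le \frac{S^2}{4(S\vee R)^2}\le \frac14$.
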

\begin{proof}
    Firstly, due to Jensen's inequality (Lemma \ref{lem:jensen}) and \eqref{eq:rfsf:alpha_bernstein_cond}, we have
    \begin{align}
        \expe{\abs{X}^\alpha} \leq \bbE^{1/2}\bracks{\abs{X}^{2\alpha}} \leq S. \label{eq:rfsf:1st_alpha_moment}
    \end{align}
    We proceed similarly to the proof of Theorem \ref{thm:bernstein_onetail}. For $t > 0$, we have
    \begin{align}
        \expe{\exp{\frac{\abs{X}^\alpha}{t^\alpha}}}
        &\stackrel{\text{(a)}}{=}
        1 + \frac{\expe{\abs{X}^\alpha}}{t^\alpha} + \sum_{m=2}^\infty \frac{\expe{\abs{X}^{m \alpha}}}{t^{m \alpha} m!}
        \stackrel{\text{(b)}}{\leq}
        1 + \frac{S}{t^\alpha} + \frac{S^2}{2t^{2\alpha}} \sum_{m=0}^\infty \pars{\frac{R}{t^{\alpha}}}^m
        \\
        &\stackrel{\text{(c)}}{=}
        \underbrace{1 + \frac{S}{t^\alpha} + \frac{S^2}{2t^{\cancel{2}\alpha}} \frac{\cancel{t^\alpha}}{t^\alpha - R}}_{f(t)},
    \end{align}
    where (a) is due to the Taylor expansion of the exponential function, (b) is due to \eqref{eq:rfsf:alpha_bernstein_cond} and \eqref{eq:rfsf:1st_alpha_moment}, (c) is the sum of a geometric series for $R < t^\alpha$. Defining $f(t) = 1 + \frac{S}{t^\alpha}\pars{1 + \frac{S}{2(t^\alpha - R)}}$ and solving for $f(t) \leq 2$ leads to the quadratic inequality
    \begin{align}
        0 \leq t^{2\alpha} - (S + R)t^\alpha + SR - \frac{S^2}{2},
    \end{align}
    which has roots $t^\alpha_{1,2} = \frac{1}{2}\pars{S + R \pm \sqrt{\pars{S - R}^2 + 2S^2}}$. We discard the left branch, which violates the condition $R < t^\alpha$ since
    \begin{align}
        t_2^\alpha = \frac{1}{2}\pars{S + R - \sqrt{(S - R)^2 + 2S^2}} \leq \frac{1}{2}\pars{S + R - \abs{S - R}} = S \wedge R \leq R.
    \end{align}
    Now, as the inequality $\sqrt{x^2 + y^2} \leq \abs{x} + \abs{y}$ holds for all $x, y \in \bbR$, we get
    \begin{align}
        t_1^\alpha &\leq \frac{1}{2}\pars{S + R + \sqrt{(S-R)^2 + 2S^2}} \leq \frac{1}{2}\pars{S + R + \abs{S-R} + \sqrt{2}S}
        \\
        &= S \vee R + \frac{\sqrt{2}}{2}S \leq 2 (S \vee R)
    \end{align}
    and hence choosing $t \geq \pars{2 (S \vee R)}^{1/\alpha} \geq t_1$ implies $f(t) \leq 2$, and we are done.
\end{proof}

The other direction is proven in the following lemma.

\begin{lemma}[Finite Orlicz norm implies Bernstein condition]  \label{lem:alpha_exp_norm_to_bernstein}
    Let $\alpha > 0$ and $X$ be a random variable such that $\norm{X}_{\Psi_\alpha} < \infty$. Then,
    \begin{align}
        \expe{\abs{X}^{k \alpha}} \leq \frac{k! S^2 R^{k-2}}{2} \quad \text{for all } k \geq 2,
    \end{align}
    where  $S = \sqrt{2} \norm{X}_{\Psi_\alpha}^\alpha$ and $R = \norm{X}_{\Psi_\alpha}^\alpha$.
\end{lemma}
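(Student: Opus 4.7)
The plan is to unwind the definition of the Orlicz (quasi-)norm via the Taylor expansion of the exponential, and then read off a moment bound for each term of the expansion.

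First, I would invoke the definition of $\norm{X}_{\Psi_\alpha}$ together with monotone convergence in the infimum to deduce the key inequality
\begin{align}
\expe{\Psi_\alpha\pars{\frac{\abs{X}}{\norm{X}_{\Psi_\alpha}}}} = \expe{\exp\pars{\frac{\abs{X}^\alpha}{\norm{X}_{\Psi_\alpha}^\alpha}}} - 1 \leq 1.
\end{align}
Next, I would write out the Taylor series $\exp(y) - 1 = \sum_{k \geq 1} y^k/k!$ with $y = \abs{X}^\alpha/\norm{X}_{\Psi_\alpha}^\alpha$. Since all summands are non-negative, Tonelli's theorem lets me interchange expectation and summation to obtain
\begin{align}
\sum_{k=1}^\infty \frac{\expe{\abs{X}^{k\alpha}}}{k!\,\norm{X}_{\Psi_\alpha}^{k\alpha}} \leq 1.
\end{align}

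The crucial observation is that each term of this series is non-negative, so it is individually bounded by the sum; in particular $\expe{\abs{X}^{k\alpha}} \leq k!\,\norm{X}_{\Psi_\alpha}^{k\alpha}$ for every $k \geq 1$. For $k \geq 2$, I split one factor of $\norm{X}_{\Psi_\alpha}^{2\alpha}$ off and rewrite
\begin{align}
\expe{\abs{X}^{k\alpha}} \leq k!\,\norm{X}_{\Psi_\alpha}^{2\alpha}\,\norm{X}_{\Psi_\alpha}^{(k-2)\alpha} = \frac{k!\,\bigl(\sqrt{2}\norm{X}_{\Psi_\alpha}^\alpha\bigr)^2\,\bigl(\norm{X}_{\Psi_\alpha}^\alpha\bigr)^{k-2}}{2},
\end{align}
which is exactly $\frac{k! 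S^2 R^{k-2}}{2}$ with $S = \sqrt{2}\norm{X}_{\Psi_\alpha}^\alpha$ and $R = \norm{X}_{\Psi_\alpha}^\alpha$.

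There is no real obstacle here beyond bookkeeping; the only subtle point is justifying the interchange of sum and expectation (immediate by Tonelli, since everything is non-negative) and verifying that attaining the infimum in the definition of $\norm{X}_{\Psi_\alpha}$ is legitimate (which follows by monotone convergence applied to a decreasing sequence $t_n \downarrow \norm{X}_{\Psi_\alpha}$). The factor of $\sqrt{2}$ in $S$ is simply what makes the constants $S^2/2 = \norm{X}_{\Psi_\alpha}^{2\alpha}$ match the raw bound from the Taylor coefficient identity.
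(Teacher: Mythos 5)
Your proof is correct and essentially matches the paper's argument: the paper applies the pointwise bound $x^k/k! \leq e^x - 1$ directly inside the expectation, which is the same single-term Taylor-coefficient estimate you extract after interchanging sum and expectation via Tonelli, and both rest on the Orlicz-norm definition giving $\expe{\exp\pars{\abs{X}^\alpha/\norm{X}_{\Psi_\alpha}^\alpha}} \leq 2$. The final bookkeeping with $S = \sqrt{2}\norm{X}_{\Psi_\alpha}^\alpha$ and $R = \norm{X}_{\Psi_\alpha}^\alpha$ is identical.
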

\begin{proof}
    We have that
    \begin{align}
        \expe{\abs{X}^{k \alpha}} 
        & \stackrel{\text{(a)}}{=}
        k! \norm{X}_{\Psi_\alpha}^{k\alpha} \expe{\frac{1}{k!} \pars{\frac{\abs{X}}{\norm{X}_{\Psi_\alpha}}} ^{k \alpha}}
        \stackrel{\text{(b)}}{\leq}
        k! \norm{X}_{\Psi_\alpha}^{k \alpha} \expe{\exp\pars{\frac{\abs{X}^\alpha}{\norm{X}^\alpha_{\Psi_\alpha}}} - 1}
        \\
        &\stackrel{\text{(c)}}{\leq}
        k! \norm{X}_{\Psi_\alpha}^{k \alpha}
        \stackrel{\text{(d)}}{=}
        \frac{k! \pars{\sqrt{2}\norm{X}_{\Psi_\alpha}^{\alpha}}^2 \pars{\norm{X}_{\Psi_\alpha}^\alpha}^{k-2}}{2},
    \end{align}
    where (a) is simply multiplying and dividing by the same values, (b) is the inequality $1 + x^k / k! \leq e^x$ for $x > 0$, (c) follows from Definition \ref{def:alpha_subexp_norm}, (d) is reorganizing terms to the required form.
\end{proof}

We adapt the following concentration inequality from \cite{kuchibhotla2022moving} for $\alpha$-exponential summation.

\begin{theorem}[Concentration inequality for $\alpha$-subexponential summation] \label{thm:alpha_subexp_concentration}
Let $\alpha \in (0, 1)$ and $X_1, \dots, X_n$ be independent, centered random variables with $\norm{X_i}_{\Psi_\alpha} \leq M_\alpha$ for all $i \in [n]$. Then, it holds for $t> 0$ that
\begin{align} \label{eq:rfsf:alpha_ineq1}
    \prob{\abs{\sum_{i=1}^n X_i} \geq  C_\alpha \pars{2 \sqrt{nt} + \sqrt{2} 4^{1/\alpha} t^{1/\alpha}}} \leq 2e^{-t},
\end{align}
where $C_\alpha = \sqrt{8} e^4 (2\pi)^{1/4} e^{1/24} (2e/\alpha)^{1/\alpha} M_\alpha$. Alternatively, it holds for $\epsilon > 0$ that
\begin{align} \label{eq:rfsf:alpha_ineq2}
    \prob{\abs{\sum_{i=1}^n X_i} \geq \epsilon} \leq 2\exp\pars{-\frac{1}{4}\min\curls{\pars{\frac{\epsilon}{2\sqrt{n}C_\alpha}}^2, \pars{\frac{\epsilon}{\sqrt{8}C_\alpha }}^{\alpha}}}.
\end{align}
\end{theorem}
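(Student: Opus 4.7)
The plan is to adapt the moment-method proof of Theorem 3.1 in Kuchibhotla and Chakrabortty (2022). The overall strategy is: (i) convert the Orlicz norm bound $\|X_i\|_{\Psi_\alpha} \le M_\alpha$ into integer moment bounds $(E|X_i|^p)^{1/p} \le K_\alpha M_\alpha p^{1/\alpha}$; (ii) apply a Rosenthal-type moment inequality to the centered sum $S_n = \sum_i X_i$ to obtain a two-term bound mixing a sub-Gaussian term $\sqrt{np}$ and a heavy-tail term $p^{1/\alpha}$; (iii) invert the moment bound via Markov's inequality, optimizing over $p$. The two-regime structure of the theorem (with $\sqrt{nt}$ dominant for small $t$ and $t^{1/\alpha}$ dominant for large $t$) mirrors exactly the two terms produced by the Rosenthal bound.

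For step (i), I would use the Taylor series of $\Psi_\alpha$: since $e^{y^\alpha} - 1 = \sum_{k\ge1} y^{k\alpha}/k!$ and the defining relation $E[\exp(|X_i|^\alpha/\|X_i\|_{\Psi_\alpha}^\alpha)] \le 2$, pulling out the $k$-th term yields $E|X_i|^{k\alpha} \le k!\|X_i\|_{\Psi_\alpha}^{k\alpha}$ in the spirit of Lemma \ref{lem:alpha_exp_norm_to_bernstein}. To pass from $k\alpha$-th moments (which are non-integer in general) to integer $p$-th moments, I would use Jensen's inequality and a Stirling bound $k! \le \sqrt{2\pi k}(k/e)^k e^{1/(12k)}$, with $k = \lceil p/\alpha \rceil$. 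This yields $(E|X_i|^p)^{1/p} \le (2\pi)^{1/(2p)} e^{1/(12\alpha p)} (p/(e\alpha))^{1/\alpha} \cdot 2^{1/(p\alpha)} M_\alpha$, which after bookkeeping accounts for the constants $(2\pi)^{1/4}, e^{1/24}, (2e/\alpha)^{1/\alpha}$ appearing in $C_\alpha$. For step (ii), I would apply the Latała/Rosenthal-type bound for sums of independent centered random variables,
\begin{align}
\Bigl\|\sum_{i=1}^n X_i\Bigr\|_{L^p} \le K_1\sqrt{p}\Bigl(\sum_{i=1}^n E[X_i^2]\Bigr)^{1/2} + K_2\, p\,\bigl\|\max_i|X_i|\bigr\|_{L^p},
\end{align}
and bound both terms using the moment estimates from step (i): the first term becomes $\le K_1' M_\alpha \sqrt{np}$, and using $\|\max_i|X_i|\|_{L^p} \le n^{1/p}\max_i(E|X_i|^p)^{1/p}$ with the standard trick of absorbing $n^{1/p}$ into the constant (by optimizing later), the second term becomes $\le K_2' M_\alpha p^{1+1/\alpha}$.

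For step (iii), applying Markov's inequality in the form $P[|S_n|\ge e\|S_n\|_{L^p}] \le e^{-p}$ with $p=t$ produces \eqref{eq:rfsf:alpha_ineq1}, and inequality \eqref{eq:rfsf:alpha_ineq2} follows by inverting: given $\epsilon$, solving $\epsilon = C_\alpha(2\sqrt{nt} + \sqrt{2}\cdot 4^{1/\alpha}t^{1/\alpha})$ reduces to the minimum of $t = (\epsilon/(2\sqrt{n}C_\alpha))^2$ (sub-Gaussian regime) and $t = (\epsilon/(\sqrt{8}C_\alpha))^\alpha$ (heavy-tail regime) up to a factor of $1/4$ from the constants $2$ and $\sqrt 2 \cdot 4^{1/\alpha}$. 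The main obstacle will be tracking the numerical constants carefully enough to recover the stated $C_\alpha$: the Rosenthal-type inequality has several competing formulations with different absolute constants, and one must choose the variant (due to Latała) whose constants compose cleanly with the Stirling bound from step (i). An alternative route via direct truncation $X_i = X_i\mathbf{1}_{|X_i|\le \tau} + X_i\mathbf{1}_{|X_i|>\tau}$ combined with Bernstein's inequality (Theorem \ref{thm:bernstein_twotail}) on the bounded part and a union bound on the tail is conceptually simpler but typically yields worse constants, so the moment-method route is preferable for matching the stated bound.
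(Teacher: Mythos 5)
Your derivation of \eqref{eq:rfsf:alpha_ineq2} from \eqref{eq:rfsf:alpha_ineq1} is exactly the paper's argument: with $g_1(t)=2C_\alpha\sqrt{nt}$ and $g_2(t)=\sqrt{2}\,C_\alpha 4^{1/\alpha}t^{1/\alpha}$ one uses $g_1(t)+g_2(t)\le 2\left(g_1(t)\vee g_2(t)\right)$ and evaluates $g_1^{-1},g_2^{-1}$ at $\epsilon/2$, which is precisely where the factor $1/4$ comes from, so that part of your plan is fine. For \eqref{eq:rfsf:alpha_ineq1}, however, the paper proves nothing itself: it quotes Theorem~3.1 of Kuchibhotla and Chakrabortty (specialized to unit weights), constants included. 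Your proposal to reconstruct that result from scratch is where the gap sits.

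The problematic step is your treatment of the maximum term in the Rosenthal-type bound. Estimating $\|\max_i|X_i|\|_{L^p}\le n^{1/p}\max_i\|X_i\|_{L^p}$ and ``absorbing $n^{1/p}$ into the constant by optimizing later'' does not work: with the natural choice $p=t$ the factor $n^{1/t}$ is unbounded in the regime $t\ll\log n$, and even restricting to $p\ge 2$ only gives $n^{1/p}\le\sqrt{n}$, so your second term is of size $\sqrt{n}\,p^{1+1/\alpha}M_\alpha$, which dominates $\sqrt{np}$ throughout $2\le p\le\log n$ and cannot yield the $n$-free term $t^{1/\alpha}$ appearing in the statement. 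The other crude route, $\|\max_i|X_i|\|_{\Psi_\alpha}\le c\,(\log n)^{1/\alpha}M_\alpha$, leaves a spurious $(\log n)^{1/\alpha}$ in the heavy-tailed term. Getting the clean threshold $\sqrt{nt}+t^{1/\alpha}$ with no $n$- or $\log n$-dependence in the second term requires the sharp moment estimates for sums of $\alpha$-subexponential variables in the style of Lata\l{}a and Hitczenko--Montgomery-Smith--Oleszkiewicz (equivalently, the generalized Bernstein--Orlicz machinery Kuchibhotla and Chakrabortty actually use), and that is exactly the content your sketch glosses over; likewise, recovering the exact constant $C_\alpha=\sqrt{8}e^4(2\pi)^{1/4}e^{1/24}(2e/\alpha)^{1/\alpha}M_\alpha$ by Stirling bookkeeping is asserted rather than shown, whereas in the paper it is simply inherited from the cited theorem. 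So either import the Lata\l{}a-type moment inequality explicitly (and redo the bookkeeping), or do what the paper does: cite the external result for \eqref{eq:rfsf:alpha_ineq1} and prove only the inversion, which you already have correctly.
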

\begin{proof}
    The inequality \eqref{eq:rfsf:alpha_ineq1} follows directly from \cite[Theorem~3.1]{kuchibhotla2022moving}.

    To show \eqref{eq:rfsf:alpha_ineq2}, let $g_1(t) = 2C_\alpha \sqrt{nt}$ and $g_2(t) = \sqrt{2}C_\alpha4^{1/\alpha} t^{1/\alpha}$. Now, for $t > 0$
    \begin{align}
        \prob{\abs{\sum_{i=1}^n X_i} \geq 2\pars{g_1(t) \vee g_2(t)}} \leq \prob{\abs{\sum_{i=1}^n X_i} \geq g_1(t) + g_2(t)} \leq 2e^{-t},
    \end{align}
    which is equivalently written as
    \begin{align}
        \prob{\abs{\sum_{i=1}^n X} \geq \epsilon} \leq 2\exp\pars{- \pars{g^{-1}_1(\epsilon/2) \wedge g^{-1}_2(\epsilon/2)}}.
    \end{align}
\end{proof}

\subsection{Hypercontractivity}
Here we provide an alternative approach for the concentration of heavy-tailed random variables, specifically for polynomials of Gaussian random variables. The following lemma states a moment bound for such Gaussian chaoses, considered in e.g.~\cite{janson1997gaussian,boucheron2013concentration}.

\begin{lemma}[Moment bound for Gaussian polynomial] \label{lem:hyper_moments}
    Consider a degree-$p$ polynomial $f(X) = f(X_1, \dots, X_n)$ of independent centered Gaussian random variables, $X_1, \dots X_n \stackrel{\iid}{\sim} \cN(0, 1)$. Then, for all $k \geq 2$
    \begin{align}
        \bbE^{1/k}\bracks{\abs{f(X)}^k} \leq (k-1)^{p/2} \bbE^{1/2}\bracks{\abs{f(X)}^2}.
    \end{align}
\end{lemma}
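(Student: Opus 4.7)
The plan is to invoke Nelson's hypercontractivity for the Ornstein--Uhlenbeck semigroup, which is the standard analytical tool controlling $L^k$ norms of Gaussian polynomials in terms of $L^2$ norms. Concretely, let $\{P_t\}_{t \geq 0}$ denote the Ornstein--Uhlenbeck semigroup acting on $L^2(\bbR^n, \gamma_n)$, where $\gamma_n$ is the standard $n$-dimensional Gaussian measure. Then Nelson's theorem provides the key contraction bound $\norm{P_t h}_k \leq \norm{h}_2$ whenever $e^{2t} \geq k-1$, for any $h \in L^2(\gamma_n)$ and $k \geq 2$.

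First, I would decompose $f$ according to Wiener chaoses: write $f = \sum_{j=0}^p f_j$, where $f_j$ lies in the $j$-th Hermite chaos $\cH_j$ (i.e.~$f_j$ is the orthogonal projection of $f$ onto the span of the multivariate Hermite polynomials of total degree $j$). The degrees of $f$ is at most $p$, so the sum terminates at $j = p$. Two facts that I would use are: (i) the $\cH_j$ are mutually $L^2$-orthogonal, so $\norm{f}_2^2 = \sum_{j=0}^p \norm{f_j}_2^2$; and (ii) $P_t$ acts as multiplication by $e^{-jt}$ on $\cH_j$, i.e.~$P_t f_j = e^{-jt} f_j$.

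The key trick is to invert the damping produced by $P_t$. Define $g := \sum_{j=0}^p e^{jt} f_j$, so that $P_t g = \sum_{j=0}^p e^{-jt} e^{jt} f_j = f$ by the action of $P_t$ on each chaos. Hypercontractivity with the choice $e^{2t} = k-1$ then gives
\begin{align}
\bbE^{1/k}\bracks{\abs{f}^k} = \norm{P_t g}_k \leq \norm{g}_2.
\end{align}
Orthogonality of the chaoses and the crude estimate $e^{2jt} \leq e^{2pt}$ valid for $0 \leq j \leq p$ yield
\begin{align}
\norm{g}_2^2 = \sum_{j=0}^p e^{2jt} \norm{f_j}_2^2 \leq e^{2pt} \sum_{j=0}^p \norm{f_j}_2^2 = e^{2pt} \norm{f}_2^2 = (k-1)^p \, \bbE\bracks{\abs{f}^2},
\end{align}
and combining the two displays proves the lemma.

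The main conceptual obstacle is not applying hypercontractivity per se, since Nelson's theorem is a black-box input, but rather handling the inhomogeneity of $f$: a naive term-by-term application of the homogeneous bound $\norm{f_j}_k \leq (k-1)^{j/2}\norm{f_j}_2$ together with the triangle inequality would introduce an undesired $\sqrt{p+1}$ factor via Cauchy--Schwarz on $\sum_j \norm{f_j}_2$. The ``pre-amplification'' device $g = \sum_j e^{jt} f_j$ circumvents this precisely because $\norm{g}_2$ is computed via orthogonality rather than the triangle inequality, and the $e^{pt}$ bound is tight up to the constant one already pays in the hypercontractive inequality.
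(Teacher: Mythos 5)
Your proof is correct: the Wiener chaos decomposition, the identity $P_t g = f$ for the pre-amplified $g = \sum_j e^{jt} f_j$, and Nelson's hypercontractivity with the choice $e^{2t} = k-1$ combine exactly as you claim, and the degree-$p$ hypothesis indeed forces the chaos expansion to terminate at order $p$, so the orthogonality computation gives the stated $(k-1)^{p/2}$ factor without any spurious dimensional constant. The paper does not prove this lemma at all — it is quoted as a known result with a citation to \cite{janson1997gaussian, boucheron2013concentration} — and your argument is precisely the standard proof given in those references, so you have simply supplied the proof the paper defers to.
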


\begin{theorem}[Concentration inequality for Gaussian polynomial] \label{thm:hyper_concentration} Consider a degree-$p$ polynomial $f(X) = f(X_1, \dots, X_n)$ of independent centered Gaussian random variables, $X_1, \dots, X_n \stackrel{\iid}{\sim} \cN(0, 1)$. Then, for $p \geq 2$ and $\epsilon > 0$
\begin{align}
    \prob{\abs{f(X) - \expe{f(X)}} \geq \epsilon} \leq 2\exp\pars{-\frac{\epsilon^{2/p}}{2\sqrt{2}e\bbV^{1/p}\bracks{f(X)}}},
\end{align}
where $\bbV\bracks{\cdot}$ denotes the variance of a random variable.
\end{theorem}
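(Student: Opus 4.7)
The plan is to center $f(X)$, bound its moments via the hypercontractivity estimate in Lemma~\ref{lem:hyper_moments}, translate these moment bounds into a bound on the $\Psi_{2/p}$ Orlicz (quasi-)norm introduced in Definition~\ref{def:alpha_subexp_norm}, and finally apply the tail bound in Remark~\ref{remark:orlicz_tail} with $\alpha=2/p$. The intuition is that a degree-$p$ polynomial of Gaussians behaves like a $(2/p)$-subexponential random variable, so the correct concentration regime is exactly that of Definition~\ref{def:alpha_subexp_norm} with $\alpha=2/p\le 1$.

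\paragraph{Step 1: Center the polynomial.}
Set $g(X)\coloneqq f(X)-\bbE[f(X)]$. Then $g$ is still a degree-$p$ polynomial in $X_1,\dots,X_n$, with $\bbE[g(X)]=0$ and $\bbE[g(X)^2]=\bbV[f(X)]$. It thus suffices to control the tails of $g(X)$ by the same bound with $\bbV[f(X)]$ in the denominator.

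\paragraph{Step 2: Moment bounds.}
Applying Lemma~\ref{lem:hyper_moments} to $g$ yields, for every integer $k\ge 2$,
\begin{align}
\bbE[|g(X)|^{k}] \le (k-1)^{kp/2}\,\bbV[f(X)]^{k/2}.
\end{align}
To feed this into Lemma~\ref{lem:alpha_bernstein_cond} with $\alpha=2/p$, one needs a Bernstein-type bound on $\bbE\bigl[|g(X)|^{2k/p}\bigr]$ for all integer $k\ge 2$. When $2k/p$ is an integer, this is direct from the display above. When $2k/p$ is not an integer, interpolate with Jensen's inequality (Lemma~\ref{lem:jensen}) between the nearest integer moments; since $p\ge 2$ we have $\lceil 2k/p\rceil\le k$, so the right-hand side remains of the form $C\,k^{k}\,\bbV[f(X)]^{k/p}$ for a modest absolute constant $C$.

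\paragraph{Step 3: From moment growth to Orlicz norm.}
Stirling's formula $k!\ge\sqrt{2\pi k}(k/e)^{k}$ converts the bound $\bbE[|g(X)|^{2k/p}]\le C\,k^{k}\,\bbV[f(X)]^{k/p}$ into a Bernstein-type condition
\begin{align}
\bbE[|g(X)|^{2k/p}] \le \tfrac{1}{2}\,k!\,S^{2}R^{k-2},\qquad k\ge 2,
\end{align}
with $R=c_{1}\,e\,\bbV[f(X)]^{1/p}$ and $S=c_{2}\,e\,\bbV[f(X)]^{1/p}$ for absolute constants $c_{1},c_{2}$ that can be read off from the Stirling estimate. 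Lemma~\ref{lem:alpha_bernstein_cond} (with $\alpha=2/p$) then gives
\begin{align}
\norm{g(X)}_{\Psi_{2/p}} \le 2\,(S\vee R)^{p/2} \le \bigl(2\sqrt{2}\,e\bigr)^{p/2}\sqrt{\bbV[f(X)]},
\end{align}
after absorbing the $2^{2/p}\le 2$ (valid for $p\ge 2$) into the constants. Equivalently, $\norm{g(X)}_{\Psi_{2/p}}^{2/p}\le 2\sqrt{2}\,e\,\bbV[f(X)]^{1/p}$.

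\paragraph{Step 4: Tail bound.}
Plugging this Orlicz norm into Remark~\ref{remark:orlicz_tail} with $\alpha=2/p$ yields
\begin{align}
\bbP\bigl[|g(X)|\ge\epsilon\bigr]
\le 2\exp\!\left(-\frac{\epsilon^{2/p}}{\norm{g(X)}_{\Psi_{2/p}}^{2/p}}\right)
\le 2\exp\!\left(-\frac{\epsilon^{2/p}}{2\sqrt{2}\,e\,\bbV[f(X)]^{1/p}}\right),
\end{align}
which is exactly the desired concentration inequality.

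\paragraph{Main obstacle.}
The routine part is Steps 1 and 4. The real work is in Step 3: one must carefully interpolate the hypercontractive moment bound to fractional exponents $2k/p$ (which is where the restriction $p\ge 2$ enters) and then match the constants to get precisely $2\sqrt{2}\,e$ in the denominator. The cleanest way to handle this is to dominate $\lceil 2k/p\rceil$ by $k$ for $p\ge 2$ and then to apply the Stirling-based calibration of $S$ and $R$ above; one could also obtain a tighter constant by estimating $\bbE[\exp(|g(X)|^{2/p}/t^{2/p})]$ directly via its Taylor series rather than going through Lemma~\ref{lem:alpha_bernstein_cond}, but the stated constant $2\sqrt{2}\,e$ allows the slightly lossy Bernstein route.
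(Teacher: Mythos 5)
Your proposal is correct and follows essentially the same route as the paper: hypercontractive moment growth (Lemma~\ref{lem:hyper_moments}) $\Rightarrow$ Bernstein moment condition for $\abs{f(X)}^{2/p}$ $\Rightarrow$ $\Psi_{2/p}$-Orlicz norm bound via Lemma~\ref{lem:alpha_bernstein_cond} $\Rightarrow$ tail bound via Remark~\ref{remark:orlicz_tail}, yielding the same constant $2\sqrt{2}e$. The only cosmetic difference is that the paper normalizes $\bbE[f(X)]=0$, $\bbV[f(X)]=1$ and handles the fractional moments in one stroke by applying Jensen's inequality to the concave map $x\mapsto x^{2/p}$ acting on $\abs{f(X)}^k$, so your integer/non-integer interpolation in Step~3 (and its small edge case $\lceil 2k/p\rceil<2$) is unnecessary, though it does go through.
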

\begin{proof}
    Without loss of generality, we may assume that $\expe{f(X)} = 0$ and $\bbV\bracks{f(X)} = 1$.
    Since Lemma \ref{lem:hyper_moments} holds, we have for $p, k \geq 2$
    \begin{align}
        \expe{\abs{f(X)}^{2k/p}} &\stackrel{\text{(a)}}{\leq} \bbE^{2/p}\bracks{\abs{f(X)}^k} 
        \stackrel{\text{(b)}}{\leq} (k-1)^{k} \leq k^{k} \stackrel{\text{(c)}}{\leq} k! e^k
        \leq \frac{k! (\sqrt{2}e)^2 e^{k-2}}{2},
    \end{align}
    where (a) holds due to Jensen inequality since $(\cdot)^{2/p}$ is concave, (b) is Lemma \ref{lem:hyper_moments}, and (c) is due to Stirling's approximation. Hence, $\abs{f(X)}^{2/p}$ satisfies a Bernstein condition with $S = \sqrt{2}e$ and $R = e$. Therefore, by Lemma \ref{lem:alpha_bernstein_cond}, we have $\norm{f(X)}_{\Psi_{2/p}} \leq (2\sqrt{2}e)^{p/2}$.

    Then, by Remark \ref{remark:orlicz_tail}, it holds that
    \begin{align}
        \prob{\abs{f(X)} \geq \epsilon} \leq 2 \exp\pars{- \frac{\epsilon^{2/p}}{\norm{f(X)}_{\Psi_{2/p}}^{2/p}}} \leq 2 \exp\pars{- \frac{\epsilon^{2/p}}{2\sqrt{2}e}}.
    \end{align}
\end{proof}

\section{Random Fourier Features} \label{apx:RFF}

\cite{rahimi2007random} provides a uniform bound for the \RFF{} error over a compact and convex domain $\cX \subset \bbR^d$ with diameter $\abs{\cX}$ for some absolute constant $C > 0$,
\begin{align} \label{eq:rfsf:rff_conv_prob}
    \prob{\sup_{\bx, \by \in \cX}\abs{\rffkernel(\bx, \by) - \kernel(\bx, \by)} \geq \epsilon} \leq C \pars{\frac{\sigma_{{\Lambda}} \abs{\cX}}{\epsilon}}^2 \exp\pars{-\frac{-{\dimRFF}\epsilon^2}{4(d+2)}},
\end{align}
where $\sigma_\Lambda^2 = \bbE_{\bw \sim {\Lambda}}\bracks{\norm{\bw}^2}$, and \cite{sutherland2015error} shows that $C \leq 66$. Equation \eqref{eq:rfsf:rff_conv_prob} implies \cite[Sec.~2]{sriperumbudur2015optimal} 
\begin{align}
    \sup_{\bx, \by \in \cX} \abs{\rffkernel(\bx, \by) - \kernel(\bx, \by)} = O_p\pars{\abs{\cX}\sqrt{\dimRFF^{-1}\log \dimRFF}},
\end{align}
where the $X_n = O_p(a_n)$ notation for $a_n > 0$ refers to $\lim_{C \to 0} \limsup_{n \to \infty} \prob{\frac{\abs{X_n}}{a_n} > C} = 0$.

The converse result\footnote{\label{footnote:exp-improved-guarantee}These error guarantees can also be improved exponentially both for the approximation of kernel values \cite{sriperumbudur2015optimal} and kernel derivatives \cite{szabo2019kernel,chamakh2020orlicz} in terms of the size of the domain where it holds, i.e.~$\abs{\cX}$.} for \RFF{} derivatives was shown in \cite[Thm.~5]{sriperumbudur2015optimal}. The idea of the proof is to cover the input domain by an $\epsilon$-net, and control the approximation error on the centers, while simultaneously controlling the Lipschitz constant of the error to get the bound to hold uniformly. We provide an adapted version with two main differences: \begin{enumerate*}[label=(\arabic*)] \item using the Bernstein inequality from Corollary \ref{thm:bernstein_twotail}, where the Bernstein condition is given in terms of non-centered random variables, and \item using the covering numbers of \cite{cucker2002mathematical}.\end{enumerate*} We will use this theorem in proving Theorem \ref{thm:main} for controlling the approximation error of the derivatives of \RFF s.

Given $\bp \in \bbN^d$, we denote $\abs{\bp} = p_1 + \ldots + p_d$, for a function $f: \bbR^d \to \bbR$ the $\bp$-th partial derivative by $\partial^\bp f(\bz) = \frac{\partial^{\abs{\bp}} f(\bz)}{\partial^{p_1} z_1 \ldots \partial^{p_d} z_d}$, for a vector $\bw \in \bbR^d$ the $\bp$-th power by $\bw^\bp = w_1^{p_1}\cdots w_d^{p_d}$.

\begin{theorem}[Concentration inequality for \RFF{} kernel derivatives] \label{thm:rff_derivative_approx}
    Let $\bp, \bq \in \bbN^d$ and $\kernel: \bbR^d \times \bbR^d \to \bbR$ be a continuous, bounded, translation-invariant kernel such that $\bz \mapsto \nabla \bracks{\partial^{\bp, \bq} \kernel(\bz)}$ is continuous. Let $\cX \subset \bbR^d$ be a compact and convex domain with diameter $\abs{\cX}$, and denote by $D_{\bp, \bq, \cX} = \sup_{\bz \in \cX_{\Delta}} \norm{\nabla \bracks{\partial^{\bp, \bq} \kernel(\bz)}}_{2}$, where $\cX_{\Delta} = \curls{\bx - \by \given \bx, \by \in \cX}$.
    Let $\Lambda$ be the spectral measure of $\kernel$ satisfying that $E_{\bp, \bq} = \bbE_{\bw \sim \Lambda}\bracks{\abs{\bw^{\bp + \bq}} \norm{\bw}_2} < \infty$, and the Bernstein moment condition for some $S, R > 0$,
    \begin{align} \label{eq:rfsf:moment_growth}
        \bbE_{\bw \sim \Lambda}\bracks{\abs{\bw^{\bp + \bq}}^k} \leq \frac{k! S^2 R^{k-2}}{2} \quad \text{for all} \spc k \geq 2.
    \end{align}
    Then, for $C_{\bp, \bq, \cX} = \abs{\cX} (D_{\bp, \bq, \cX} + E_{\bp, \bq})$ and $\epsilon > 0$, 
    \begin{align} \label{eq:rfsf:rff_derivative_ineq}
        \bbP&\bracks{
            \sup_{\bx, \by \in \cX}\abs{\partial^{\bp, \bq}\rffkernel(\bx, \by) - \partial^{\bp, \bq}\kernel(\bx, \by)}
            \geq
            \epsilon}
        \leq
        16\pars{\frac{C_{\bp,\bq,\cX}}{\epsilon}}^{\frac{d}{d+1}}\exp\pars{\frac{-\dimRFF \epsilon^2}{4(d+1)(2S^2 + R\epsilon)}}.
    \end{align}
\end{theorem}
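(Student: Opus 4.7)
The plan is to adapt the classical covering-number argument of \cite{rahimi2007random,sriperumbudur2015optimal} to the mixed partial derivatives $\partial^{\bp,\bq}\rffkernel$. Since $\kernel(\bx,\by)=\kernel(\bx-\by)$ is translation invariant, so is $\rffkernel$, and hence the quantity $f(\bz):=\partial^{\bp,\bq}\rffkernel(\bz)-\partial^{\bp,\bq}\kernel(\bz)$ depends on $\bz=\bx-\by$ alone. It therefore suffices to bound $\sup_{\bz \in \cX_\Delta} |f(\bz)|$, where $\cX_\Delta$ is convex with diameter at most $2|\cX|$.

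First I would derive a single-point concentration bound. By Bochner, $\partial^{\bp,\bq}\kernel(\bz)=\bbE_{\bw\sim\Lambda}[g_\bw(\bz)]$ where $g_\bw(\bz)$ is a product of $\bw^{\bp+\bq}$ with a bounded trigonometric factor (cosine or sine), so $|g_\bw(\bz)|\le |\bw^{\bp+\bq}|$. The RFF derivative is the empirical average $\tfrac{1}{\dimRFF}\sum_{i=1}^{\dimRFF} g_{\bw_i}(\bz)$ with $\bw_i\stackrel{\iid}{\sim}\Lambda$. The Bernstein moment condition \eqref{eq:rfsf:moment_growth} on $\bw^{\bp+\bq}$ transfers to each summand, and Corollary \ref{thm:bernstein_twotail} gives for every fixed $\bz$
\begin{align}
\bbP[|f(\bz)|\ge\epsilon]\ \le\ 2\exp\!\left(-\frac{\dimRFF\,\epsilon^2}{2(2S^2+R\epsilon)}\right).
\end{align}

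Next I would control the Lipschitz constant of $f$ on $\cX_\Delta$. Differentiating once more, $\nabla\partial^{\bp,\bq}\kernel(\bz)=\bbE_{\bw\sim\Lambda}[\nabla g_\bw(\bz)]$, and since trig factors are bounded by $1$ we have $\|\nabla g_\bw(\bz)\|_2\le \|\bw\|_2\,|\bw^{\bp+\bq}|$. Hence the Lipschitz constant of $f$ is at most $L_f\le D_{\bp,\bq,\cX}+\tfrac{1}{\dimRFF}\sum_i\|\nabla g_{\bw_i}\|_\infty$, whose expectation is bounded by $D_{\bp,\bq,\cX}+E_{\bp,\bq}$. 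A Markov bound then gives $\bbP[L_f\ge t(D_{\bp,\bq,\cX}+E_{\bp,\bq})]\le 1/t$ for $t>0$.

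Now I would apply the $\delta$-net / covering argument. Cover $\cX_\Delta$ by $N(\delta)$ balls of radius $\delta$; from \cite{cucker2002mathematical} one has $N(\delta)\le (4|\cX|/\delta)^d$. If $|f|\le \epsilon/2$ at every net center and $L_f\le \epsilon/(2\delta)$, then $|f|\le\epsilon$ everywhere on $\cX_\Delta$. Union bounding the pointwise Bernstein estimate over $N(\delta)$ centers and combining with the Markov bound on $L_f$ yields
\begin{align}
\bbP\!\left[\sup_{\bz\in\cX_\Delta}|f(\bz)|\ge\epsilon\right]
\ \le\ 2\!\left(\tfrac{4|\cX|}{\delta}\right)^{\!d}\exp\!\left(-\tfrac{\dimRFF\,\epsilon^2}{4(2S^2+R\epsilon)}\right)
\ +\ \tfrac{2\delta(D_{\bp,\bq,\cX}+E_{\bp,\bq})}{\epsilon}.
\end{align}
Optimizing over $\delta$ by balancing the two terms (taking $\delta \propto \epsilon\,|\cX|^{-d/(d+1)}(D_{\bp,\bq,\cX}+E_{\bp,\bq})^{-1/(d+1)}\exp(-\dimRFF\epsilon^2/[4(d+1)(2S^2+R\epsilon)])$) collapses the exponents exactly to the $d/(d+1)$ and $(d+1)$ factors appearing in \eqref{eq:rfsf:rff_derivative_ineq}; the combinatorial constants are absorbed into the leading $16$.

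The main obstacle will be the book-keeping in the last step: (i) checking that the trigonometric derivatives really only contribute bounded factors so that the Bernstein condition on $\bw^{\bp+\bq}$ is preserved intact, and (ii) choosing the net scale $\delta$ so as to recover the precise exponent $d/(d+1)$ and the constant $4(d+1)$ inside the exponential rather than weaker values. Everything else is routine once these are pinned down.
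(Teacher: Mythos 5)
Your proposal is correct and follows essentially the same route as the paper's proof: reduce to the translation-invariant variable $\bz \in \cX_\Delta$, apply the Bernstein inequality (Cor.~\ref{thm:bernstein_twotail}) at the centers of a $(4\abs{\cX}/r)^d$-ball cover, control the Lipschitz constant of the error via Markov's inequality using $D_{\bp,\bq,\cX}+E_{\bp,\bq}$, and optimize the net radius to balance the polynomial and exponential terms, which produces the $d/(d+1)$ prefactor exponent and the $4(d+1)$ in the exponential. The only details to pin down are exactly the ones you flag (the trigonometric factors being bounded by $1$, and the exact minimization $r^\star=(d\tau_1/\tau_2)^{1/(d+1)}$), and both work out as in the paper.
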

\begin{proof} We adapt the proof of \cite{rahimi2007random,sriperumbudur2015optimal}. Note that as $\cX$ is compact, so is $\cX_\Delta$, and it can be covered by an $\epsilon$-net of at most $T= (4\abs{\cX}/r)^d$ balls of radius $r > 0$ \cite[Prop.~5]{cucker2002mathematical} with centers $\bz_1, \dots, \bz_T \in \cX_\Delta$. Since for all $\bz \in \cX_\Delta$ there exists $i \in \{1, \dots, T\}$ such that $\norm{\bz-\bz_i}_2 \leq r$, it holds for $f(\bz) = \partial^{\bp,\bq}\rffkernel(\bz) - \partial^{\bp, \bq}\kernel(\bz)$ and $L_f = \sup_{\bs \in \cX_\Delta} \norm{\nabla f(\bs)}_2$ that
    \begin{align} \label{eq:rfsf:delta_error}
        \abs{f(\bz) - f(\bz_i)} \stackrel{\text{(a)}}{\leq} \sup_{\bs \in \cX_\Delta} \norm{\nabla f(\bs)}_2 \norm{\bz - \bz_i}_2 \stackrel{\text{(b)}}{\leq} \sup_{\bs \in \cX_\Delta} \norm{\nabla f(\bs)}_2 r = L_f r,
    \end{align}
    where (a) is due to the mean-value theorem followed by Cauchy-Schwarz inequality, and (b) is since $\norm{\bz - \bz_i} \leq r$. Now, by triangle inequality, it holds for any $\bz \in \cX_\Delta$ that 
    \begin{align} \label{eq:rfsf:grad_error}
    \norm{\nabla f(\bz)}_2 \leq \norm{\nabla \bracks{\partial^{\bp, \bq}\rffkernel(\bz)}}_2 +\norm{\nabla \bracks{\partial^{\bp, \bq}\kernel(\bz)}}_2 \leq \norm{\nabla \bracks{\partial^{\bp, \bq}\rffkernel(\bz)}}_2 + D_{\bp, \bq, \cX}.
    \end{align}
    Differentiating $\rffkernel(\bz)$ as defined in \eqref{eq:rfsf:rffkernel_def}, we get by the chain rule and triangle inequality that
    \begin{align} \label{eq:rfsf:grad_rff}
        \norm{\nabla \bracks{\partial^{\bp, \bq}\rffkernel(\bz)}}_2 = \frac{1}{\dimRFF}\norm{\sum_{j=1}^{\dimRFF} \bw_j \bw_j^{\bp + \bq} \cos^{(\abs{\bp + \bq})}(\bw_j^\top \bz)}_2 \leq \frac{1}{\dimRFF} \sum_{j=1}^{\dimRFF} \abs{\bw_j^{\bp + \bq}} \norm{\bw_j}_2,
    \end{align}
    where $\cos^{(n)}$ denotes the $n$-th derivative of $\cos$ for $n \in \bbN$.
    The main idea of the proof is then
    \begin{align}
        \bigcap_{i=1}^T \{\abs{f(\bz_i)} < \epsilon/2\} \bigcap \{L_f < \epsilon/2r\} \subseteq \{\abs{f(\bz)} < \epsilon \given \forall \bz \in \cX_\Delta\},
    \end{align}
    since $\abs{f(\bz)} \leq \abs{f(\bz) - f(\bz_i)} + \abs{f(\bz_i)}$ and \eqref{eq:rfsf:delta_error} holds. Therefore, taking the complement and bounding the union, we get our governing inequality for the uniform error over $\cX_\Delta$:
    \begin{align}
        \prob{\sup_{\bz \in \cX_\Delta} \abs{f(\bz)} \geq \epsilon} \leq \sum_{i=1}^T\prob{\abs{f(\bz_i)} \geq \epsilon/2} + \prob{L_f \geq \epsilon/2r}. \label{eq:rfsf:error_decomp}
    \end{align}
    Now, we need to bound all probabilities on the RHS. First, we deal with $L_f$, 
    \begin{align}
        \prob{L_f \geq \epsilon/2r} 
        &\stackrel{\text{(c)}}{\leq}
        \expe{L_f} \frac{2r}{\epsilon} \stackrel{\text{(d)}}{\leq} \pars{D_{\bp,\bq,\cX} + \frac{1}{\dimRFF}\sum_{j=1}^{\dimRFF}\bbE_{\bw_j \sim \Lambda}\bracks{\abs{\bw_j^{\bp + \bq}} \norm{\bw_j}_2}}\frac{2r}{\epsilon}
        \\
        &\leq
        \pars{D_{\bp,\bq, \cX} + E_{\bp, \bq}} \frac{2r}{\epsilon},
        \label{eq:rfsf:lipschitz_bound}
    \end{align}
    where (c) is Markov's inequality, (d) is \eqref{eq:rfsf:grad_error} and \eqref{eq:rfsf:grad_rff}.
    To deal with the centers in \eqref{eq:rfsf:error_decomp}, note $\partial^{\bp, \bq} \rffkernel(\bz)$ can be written as a sample average of $\dimRFF$ $\iid$ terms as per \eqref{eq:rfsf:rffkernel_def} since
    \begin{align}
        \partial^{\bp, \bq} \rffkernel(\bz) = \partial^{\bp, \bq} \pars{\frac{1}{\dimRFF} \sum_{i=1}^{\dimRFF} \cos(\bw_j^\top \bz)} =  \frac{1}{\dimRFF} \sum_{i=1}^{\dimRFF} \partial^{\bp, \bq} \cos(\bw_j^\top \bz)
    \end{align}
    so that the Bernstein inequality (Cor.~\ref{thm:bernstein_twotail}) is applicable. For $j = 1, \dots, \dimRFF$, we have
    \begin{align}
    \bbE_{\bw_j \sim \Lambda}{\abs{\partial^{\bp, \bq} \cos\pars{\bw_j^\top \bz}}^m}
    =
    \bbE_{\bw_j \sim \Lambda}{\abs{\bw_j^{\bp + \bq} \cos^{(\abs{\bp + \bq})}(\bw_j^\top \bz)}^m}
    \leq
    \bbE_{\bw_j \sim \Lambda}{\abs{\bw_j^{\bp + \bq}}^m}
    \leq
    \frac{k!S^2R^{k-2}}{2},
    \end{align}
    and that $f(\bz) = \partial^{\bp, \bq} \rffkernel(\bz) - \partial^{\bp, \bq} \kernel(\bz) = \partial^{\bp, \bq} \rffkernel(\bz) - \expe{\partial^{\bp, \bq} \rffkernel(\bz)}$ by the dominated convergence theorem.
    Hence, we may call the Bernstein inequality (Cor.~\ref{thm:bernstein_twotail}) to control $f(\bz_i)$ so that
    \begin{align} \label{eq:rfsf:center_bound}
        \prob{\abs{f(\bz_i)} \geq \epsilon / 2}
        =
        \prob{\abs{\partial^{\bp, \bq} \rffkernel(\bz) - \partial^{\bp, \bq} \kernel(\bz)} \geq \epsilon / 2}
        \leq
        2 \exp\pars{\frac{-\dimRFF\epsilon^2}{4(2S^2 + R\epsilon)}}.
    \end{align}
    Combining the bounds for $\abs{f(\bz_i)}$ \eqref{eq:rfsf:center_bound}, and for $L_f$ \eqref{eq:rfsf:lipschitz_bound}, into \eqref{eq:rfsf:error_decomp} yields
    \begin{align}
        \prob{\sup_{\bz \in \cM_\Delta} \abs{f(\bz)} \geq \epsilon} \leq 2 \pars{\frac{4 \abs{\cX}}{r}}^d\exp\pars{\frac{-\dimRFF \epsilon^2}{4(2S^2 + R\epsilon)}} + \pars{D_{\bp, \bq, \cX} + E_{\bp, \bq}} \frac{2r}{\epsilon},
    \end{align}
    which has the form $g(r) = \tau_1 r^{-d} + \tau_2 r$, that is minimized by choosing $r^\star = (d\tau_1 / \tau_2)^{\frac{1}{d+1}}$. This choice sets it to the form $g(r^\star) = \tau_1^{\frac{1}{d+1}}\tau_2^{\frac{d}{d+1}} \pars{d^{\frac{1}{d+1}} + d^{\frac{-d}{d+1}}}$, so that by substituting back in
    \begin{align}
        \prob{\sup_{\bz \in \cM_\Delta} \abs{f(\bz)} \geq \epsilon}
        &\leq
        F_d 2^\frac{3d+1}{d+1} \pars{\frac{\abs{\cX}(D_{\bp, \bq, \cX} + E_{\bp, \bq})}{\epsilon}}^\frac{d}{d+1} \exp\pars{\frac{-\dimRFF \epsilon^2}{4(d+1)(2S^2 + R\epsilon)}}. \label{eq:rfsf:rff_proof_final_bound}
    \end{align}
    Finally, we note that for $d \geq1$, $F_d = d^\frac{1}{d+1} + d^\frac{-d}{d+1} \leq 2$ and $2^\frac{3d+1}{d+1} \leq 8$.
\end{proof}

\section{Bounds on Signature Kernels} \label{apx:sig_bounds}
We first set the ground for proving our main theorems by introducing the notion of $L$-Lipschitz kernels to help control distance distortions in the feature space. This subclass of kernels will be useful for us in relating the $1$-variation of sequences in feature space to that in the input space.
After this, we will prove various smaller lemmas and supplementary results for signature kernels, which will lead up to Lemma \ref{lem:RFSF_approx}, which is our main tool for proving Theorem \ref{thm:main}, and it relates the concentration of our \RFSF{} kernel $\rffsigkernel$ to the second derivatives of the \RFF{} kernel $\rffkernel$. Then, the proof of Theorem \ref{thm:main} quantifying the concentration of the \RFSF{} kernel, $\rffsigkernel$, will follow from putting together Lemma \ref{lem:RFSF_approx} with Theorem \ref{thm:rff_derivative_approx}, and we will also make use of the Bernstein inequality from Theorem \ref{thm:bernstein_onetail}. The proof of Theorem \ref{thm:main2} for the RFSF-DP kernel, $\rffsigkernelDP$, will follow by combining the results of this section with $\alpha$-exponential concentration, in particular, Theorem \ref{thm:alpha_subexp_concentration}. Finally, Theorem \ref{thm:main3} for the RFSF-TRP kernel, $\rffsigkernelTRP$, will be proven using lemmas from this section, and the hypercontractivity result from Theorem \ref{thm:hyper_concentration}.

\subsection{Distance bounds in the RKHS}
\begin{definition}[Lipschitz kernel]\label{def:lipschitz kernel} Let $(\cX, d)$ be a metric space. We call a kernel $\kernel: \cX \times \cX \to \bbR$ with RKHS $\Hil$ an $L$-Lipschitz kernel over $\cX$ for some $L > 0$ if it holds for all $\bx, \by \in \cX$ that
    \begin{align}
        \norm{\kernel_\bx - \kernel_\by}_{\Hil} = \sqrt{\kernel(\bx, \bx) + \kernel(\by, \by) - 2 \kernel(\bx, \by)}  \leq L d(\bx, \by),
    \end{align}
    where $\kernel_\bx= \kernel(\bx,\cdot), \kernel_\by= \kernel(\by,\cdot) \in \Hil$.
\end{definition}

\begin{example}[Finite 2\textsuperscript{nd} spectral moment implies Lipschitz]\label{example:2ndmoment_lip}
    Let $\kernel: \bbR^d \times \bbR^d \to \bbR$ be a continuous, bounded and translation-invariant kernel with RKHS $\Hil$ and spectral measure $\Lambda$, such that $\sigma_\Lambda^2 = \bbE_{\bw \sim \Lambda}\bracks{\norm{\bw}^2_2} < \infty$. Then, it holds that $\kernel$ is $\norm{\bbE_{\bw \sim \Lambda}\bracks{\bw \bw^\top}}_2^{1/2}$-Lipschitz, so for any $\b x,\b y\in\bbR^d$ one has
    \begin{align}
        \norm{\kernel_\bx - \kernel_\by}_\Hil \leq \norm{\bbE_{\bw \sim \Lambda}\bracks{\bw\bw^\top}}_2^{1/2}\norm{\bx - \by}_2.
    \end{align}
\end{example}
\begin{proof}
    We have for $\bx, \by \in \bbR^d$ that
    \begin{align}
        \norm{\kernel_\bx - \kernel_\by}_\Hil^2 &\stackrel{\text{(a)}}{=} \kernel(\bx, \bx) + \kernel(\by, \by) - 2 \kernel(\bx, \by) \stackrel{\text{(b)}}{=} \int_{\bbR^d} 2 - 2\exp\pars{i \bw^\top(\bx - \by)} \d \Lambda(\bw)
        \\
        &\stackrel{\text{(c)}}{=} \int_{\bbR^d} 2 - 2\cos\pars{\bw^\top(\bx - \by)} \d \Lambda(\bw)
        \stackrel{\text{(d)}}{\leq} \int_{\bbR^d} \pars{\bw^\top(\bx - \by)}^2 \d \Lambda(\bw)
        \\
        &\stackrel{\text{(e)}}{=} (\bx - \by)^\top \bbE_{\bw \sim \Lambda}\bracks{\bw \bw^\top} (\bx - \by) \stackrel{\text{(f)}}{\leq} \norm{\bbE_{\bw \sim \Lambda}\bracks{\bw \bw^\top}}_2 \norm{\bx - \by}_2^2,
    \end{align}
    where (a) holds due to the reproducing property, (b) is Bochner's theorem, (c) is because the imaginary part of the integral evaluates to $0$ as the kernel is real-valued, (d) is due to the inequality $1 - t^2/2 \leq \cos(t)$ for all $t \in \bbR$, (e) is because $\pars{\bw^\top(\bx-\by)}^2 = (\bx-\by)^\top \bw \bw^\top (\bx - \by)$, and (f) is Cauchy-Schwarz inequality combined with the definition of the spectral norm.
\end{proof}

\begin{example}[Random Lipschitz bound for \RFF{}] \label{example:rff_lip}
    Let $\rffkernel:\bbR^d \times \bbR^d \to \bbR$ be an \RFF{} kernel defined as in \eqref{eq:rfsf:rff_def} corresponding to some spectral measure $\Lambda$, and let $\HilRFF$ denote its feature space corresponding to the \RFF{} map $\rff: \bbR^d \to \HilRFF$ defined as in \eqref{eq:rfsf:rff_def}, so that given $\bw_1, \dots, \bw_{\dimRFF} \stackrel{\iid}{\sim} \Lambda$, we have for $\bx, \by \in \bbR^d$ that
    \begin{align}
        \rffkernel(\bx, \by) = \frac{1}{\dimRFF} \sum_{j=1}^{\dimRFF} \cos\pars{\bw_j^\top(\bx - \by)}.
    \end{align}
    Let $\bW = (\bw_1, \dots, \bw_{\dimRFF}) \in \bbR^{d \times \dimRFF}$ be the random matrix with column vectors $\bw_1, \dots, \bw_{\dimRFF} \stackrel{\iid}{\sim} \Lambda$. Then, $\rffkernel$ is $\pars{\frac{\norm{\bW}_2}{\sqrt{\dimRFF}}}$-Lipschitz, so that for any $\bx, \by \in \bbR^d$, we have the inequality
    \begin{align}
        \norm{\rff(\bx) - \rff(\by)}_{\HilRFF} \leq \frac{\norm{\bW}_2}{\sqrt{\dimRFF}} \norm{\bx - \by}_2.
    \end{align} 
\end{example}
\begin{proof}
    The proof follows analogously to that of Example \ref{example:2ndmoment_lip}. Let $\bx, \by \in \bbR^d$, then
    \begin{align}
        \norm{\rff(\bx) - \rff(\by)}_{\HilRFF}^2 &\stackrel{\text{(a)}}{=} 2 - \frac{2}{\dimRFF} \sum_{i=1}^{\dimRFF} \cos(\bw_i^\top (\bx-\by)) \stackrel{\text{(b)}}{\leq} \frac{1}{\dimRFF} \sum_{i=1}^{\dimRFF} \pars{\bw_i^\top (\bx - \by)}^2 \\
        &\stackrel{\text{(c)}}{=} \frac{1}{\dimRFF} (\bx - \by)^\top \bW \bW^\top (\bx - \by) \stackrel{\text{(d)}}{\leq} \frac{1}{\dimRFF} \norm{\bW}^2_2 \norm{\bx - \by}^2_2,
    \end{align}
    where (a) is due to the cosine identity $\cos(a-b) = \cos(a)\cos(b) + \sin(a)\sin(b)$ for all $a,b \in \bbR$, (b) is due to the inequality $1 - t^2/2 \leq \cos(t)$ for all $t \in \bbR$, (c) is because $\pars{\bw_i^\top(\bx-\by)}^2 = (\bx-\by)^\top \bw_i \bw_i^\top (\bx - \by)$ and $\bW \bW^\top = \sum_{i=1}^{\dimRFF} \bw_i \bw_i^\top$, (d) is due to the Cauchy-Schwarz inequality combined with the definition of the spectral norm.
\end{proof}
\paragraph{Bounds for the Signature Kernel}
A well-known property of signature features that they decay factorially fast with respect to the tensor level $m \in \bbN$.
\begin{lemma}[Norm bound for signature features] \label{lem:1var_sig_norm}
    Let $L > 0$ and $\kernel: \cX \times \cX \to \bbR$ be an $L$-Lipschitz kernel with RKHS $\Hil$. Then, we have for the level-$m$ signature features $\signature[m](\bx)$ of the sequence $\bx  \in \Seq(\cX)$ that
    \begin{align}
        \norm{\signature[m](\bx)}_{\Hil^{\otimes m}}
        \leq
        \frac{\pars{L \norm{\bx}_{\onevar}}^m}{m!}.
    \end{align}
\end{lemma}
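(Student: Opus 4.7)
The plan is to bound $\norm{\signature[m](\bx)}_{\Hil^{\otimes m}}$ by expanding the definition from \eqref{eq:rfsf:sig_feat}, applying the triangle inequality across the sum over $\Delta_m(\len{\bx})$, and then using the fact that the tensor inner product \eqref{eq:back:inner_tensor} factorizes multiplicatively across components. Explicitly, I would write
\begin{align}
\norm{\signature[m](\bx)}_{\Hil^{\otimes m}} \leq \sum_{\bi \in \Delta_m(\len{\bx})} \norm{\delta \kernelfeatures(\bx_{i_1}) \otimes \cdots \otimes \delta \kernelfeatures(\bx_{i_m})}_{\Hil^{\otimes m}} = \sum_{\bi \in \Delta_m(\len{\bx})} \prod_{k=1}^m \norm{\delta \kernelfeatures(\bx_{i_k})}_\Hil,
\end{align}
where the equality uses that the norm on an elementary tensor factorizes.

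Next, I would invoke the $L$-Lipschitz assumption on $\kernel$ (Definition~\ref{def:lipschitz kernel}) to control each factor by $\norm{\delta \kernelfeatures(\bx_{i_k})}_\Hil = \norm{\kernel_{\bx_{i_k}} - \kernel_{\bx_{i_k - 1}}}_\Hil \leq L \norm{\delta \bx_{i_k}}_2$. Pulling the $L^m$ constant out of the sum reduces the problem to the purely combinatorial estimate
\begin{align}
\sum_{\bi \in \Delta_m(\len{\bx})} \prod_{k=1}^m \norm{\delta \bx_{i_k}}_2 \leq \frac{\norm{\bx}_\onevar^m}{m!}.
\end{align}

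To see this last inequality, set $a_i = \norm{\delta \bx_i}_2 \geq 0$. Then $\norm{\bx}_\onevar^m = \pars{\sum_i a_i}^m = \sum_{i_1,\ldots,i_m} a_{i_1}\cdots a_{i_m}$, where the sum ranges over all $m$-tuples. Restricting to tuples with strictly distinct entries only decreases the sum (nonnegativity of $a_i$), and every strictly increasing tuple $\bi \in \Delta_m(\len{\bx})$ is obtained from exactly $m!$ distinct permutations, so
\begin{align}
\frac{\norm{\bx}_\onevar^m}{m!} = \frac{1}{m!}\sum_{i_1,\ldots,i_m} a_{i_1}\cdots a_{i_m} \geq \frac{1}{m!}\sum_{\substack{i_1,\ldots,i_m \\ \text{distinct}}} a_{i_1}\cdots a_{i_m} = \sum_{\bi \in \Delta_m(\len{\bx})} \prod_{k=1}^m a_{i_k}.
\end{align}
Assembling the three bounds gives the claim. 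There is no real obstacle here, just bookkeeping; the only point worth care is the combinatorial step, where the multinomial-style counting argument needs nonnegativity of the $a_i$ to pass from distinct tuples back to all tuples.
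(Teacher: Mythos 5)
Your proposal is correct and follows essentially the same route as the paper's proof: triangle inequality plus factorization of the tensor norm, the $L$-Lipschitz bound $\norm{\delta \kernel_{\bx_{i_k}}}_\Hil \leq L\norm{\delta \bx_{i_k}}_2$, and then the multinomial/permutation-counting argument giving the $1/m!$ factor. The only difference is that you spell out the combinatorial step (nonnegativity and the $m!$ permutations per increasing tuple) in more detail than the paper does, which is fine.
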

\begin{proof}
    We have
    \begin{align}
        \norm{\signature[m](\bx)}_{\Hil^{\otimes m}} 
        &=
        \norm{\sum_{\bi \in \Delta_m(\len{\bx})} \delta \kernel_{\bx_{i_1}} \otimes \cdots \otimes \delta\kernel_{\bx_{i_m}}}_{\cH^{\otimes m}}
        \\
        &\stackrel{\text{(a)}}{\leq}
        \sum_{\bi \in \Delta_m(\len{\bx})} \norm{\delta\kernel_{\bx_{i_1}}}_\cH \cdots \norm{\delta \kernel_{\bx_{i_m}}}_\cH
        \\
        &\stackrel{\text{(b)}}{\leq} L^m \sum_{\bi \in \Delta_m(\len{\bx})} \norm{\delta\bx_{i_1}}_2 \cdots \norm{\delta \bx_{i_m}}_2
        \\
        &\stackrel{\text{(c)}}{\leq}
        \frac{\pars{L \sum_{i=1}^{\len{\bx}} \norm{\delta \bx_i}_2}^m}{m!}
        \stackrel{\text{(d)}}{=} \frac{\pars{L \norm{\bx}_\onevar}^m}{m!}, \label{line:signorm2}
    \end{align}
    where (a) follows from triangle inequality followed by factorizing the tensor norm, (b) is the $L$-Lipschitzness property, (c) from completing the multinomial expansion and normalizing by the number of permutations (note that $\Delta_m(\len{\bx})$ contains a single permutation of all such multi-indices that have nonrepeating entries), and (d) is the definition of sequence $1$-variation.
\end{proof}

The following bound for the signature kernel is a direct consequence of the previous lemma. 
\begin{corollary}[Upper bound for signature kernel] \label{lem:ksig_bound}
Let $\kernel: \cX \times \cX \to \bbR$ be an $L$-Lipschitz kernel, and $\sigkernel[m]: \Seq(\cX) \times \Seq(\cX) \to \bbR$ the level-$m$ signature kernel built from $\kernel$. Then, we have the following bound for $\bx, \by \in \Seq(\cX)$
    \begin{align}
        \abs{\sigkernel[m](\bx, \by)}
        \leq \frac{\pars{L^2 \norm{\bx}_\onevar \norm{\by}_\onevar}^m}{(m!)^2}.
    \end{align}
\end{corollary}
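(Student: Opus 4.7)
The proof will be a short, essentially immediate consequence of the previous lemma combined with Cauchy–Schwarz. My plan is the following.

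First, I would recall that the level-$m$ signature kernel is, by definition \eqref{eq:rfsf:sigkernel_def}, just the inner product of two level-$m$ signature features in the tensor Hilbert space, i.e.\ $\sigkernel[m](\bx, \by) = \langle \signature[m](\bx), \signature[m](\by) \rangle_{\Hil^{\otimes m}}$. The inner product and norm on $\Hil^{\otimes m}$ are the ones constructed in Section~\ref{sec:back:tensors} (extended bilinearly from the tensor product of the reproducing kernel lifts), so the ordinary Cauchy–Schwarz inequality applies on $\Hil^{\otimes m}$. This gives
\begin{align}
  \abs{\sigkernel[m](\bx, \by)}
  = \abs{\inner{\signature[m](\bx)}{\signature[m](\by)}_{\Hil^{\otimes m}}}
  \leq \norm{\signature[m](\bx)}_{\Hil^{\otimes m}} \cdot \norm{\signature[m](\by)}_{\Hil^{\otimes m}}.
\end{align}

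The second (and final) step is to bound each factor by the norm estimate from Lemma~\ref{lem:1var_sig_norm}, which is applicable since $\kernel$ is assumed to be $L$-Lipschitz. Substituting $\norm{\signature[m](\bx)}_{\Hil^{\otimes m}} \leq (L \norm{\bx}_\onevar)^m / m!$ and the analogous bound for $\by$ into the display above immediately yields
\begin{align}
  \abs{\sigkernel[m](\bx, \by)}
  \leq \frac{(L \norm{\bx}_\onevar)^m}{m!} \cdot \frac{(L \norm{\by}_\onevar)^m}{m!}
  = \frac{\pars{L^2 \norm{\bx}_\onevar \norm{\by}_\onevar}^m}{(m!)^2}.
\end{align}

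There is no real obstacle here: the entire content sits in Lemma~\ref{lem:1var_sig_norm}, whose proof already did the combinatorial work of collapsing the sum over $\Delta_m(\len{\bx})$ into the multinomial bound by $\norm{\bx}_\onevar^m / m!$. The only thing to verify is that we are genuinely allowed to use Cauchy–Schwarz, which holds because $\Hil^{\otimes m}$ is a (genuine) Hilbert space as discussed in Section~\ref{sec:back:tensalgs}. Thus the corollary follows in two lines and is best presented as such, without any further unpacking.
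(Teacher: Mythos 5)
Your proposal is correct and follows exactly the paper's own argument: the paper likewise writes $\sigkernel[m](\bx,\by)$ as the inner product of the level-$m$ signature features in $\Hil^{\otimes m}$, applies Cauchy--Schwarz, and then invokes Lemma~\ref{lem:1var_sig_norm} on each factor. Nothing is missing.
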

\begin{proof}
    Note that without a kernel trick, $\sigkernel[m]$ is written for $\bx, \by \in \Seq(\cX)$ as the inner product
    \begin{align}
        \abs{\sigkernel[m](\bx, \by)}
        &=
        \abs{\inner{\sum_{\bi \in \Delta_m(\len{\bx})} \delta \kernel_{\bx_{i_1}} \otimes \cdots \otimes \delta \kernel_{\bx_{i_m}}}{\sum_{\bj \in \Delta_m(\len{\by})} \delta \kernel_{\by_{j_1}} \otimes \cdots \otimes \delta \kernel_{\by_{j_m}}}_{\Hil^{\otimes m}}}
        \\
        &\stackrel{\text{(a)}}{\leq} \norm{\sum_{\bi \in \Delta_m(\len{\bx})} \delta \kernel_{\bx_{i_1}} \otimes \cdots \otimes \delta \kernel_{\bx_{i_m}}}_{\Hil^{\otimes m}}\norm{\sum_{\bj \in \Delta_m(\len{\by})} \delta \kernel_{\by_{j_1}} \otimes \cdots \otimes \delta \kernel_{\by_{j_m}}}_{\Hil^{\otimes m}}
        \\
        &\stackrel{\text{(b)}}{\leq} \frac{\pars{L^2 \norm{\bx}_\onevar \norm{\by}_\onevar}^m}{(m!)^2},
    \end{align}
    where (a) follows from the Cauchy-Schwarz inequality, and (b) is implied by Lemma \ref{lem:1var_sig_norm}.
\end{proof}

A similar upper bound to Lemma \ref{lem:1var_sig_norm} also holds for the \RFSF{} kernel $\rffsigkernel[m]$, that now depends on the norms of the random matrices $\bW^{(1)}, \dots, \bW^{(m)}$, hence is itself random.

\begin{lemma}[Random norm bound for \RFSF{}] \label{lem:1var_rffsig_norm}
    Let $\rffsig[m]: \Seq(\cX) \to \HilRFFT$ be the level-$m$ \RFSF{} map defined as in \eqref{eq:rfsf:rffsigdef} built from some spectral measure $\Lambda$. Then, we have for $\bx \in \Seq(\cX)$ that
    \begin{align}
        \norm{\rffsig[m](\bx)}_{{\HilRFF}^{\otimes m}} \leq \frac{\norm{\bW^{(1)}}_2 \cdots \norm{\bW^{(m)}}_2}{m!} \pars{\frac{\norm{\bx}_\onevar}{\sqrt{\dimRFF}}}^m,
    \end{align}
    where $\bW^{(1)}, \dots, \bW^{(m)} \stackrel{\iid}{\sim} \Lambda^{\dimRFF}$ are random matrices sampled from $\Lambda^{\dimRFF}$.
\end{lemma}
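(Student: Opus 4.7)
The plan is to mimic the proof of Lemma \ref{lem:1var_sig_norm} step for step, with the single modification that the deterministic Lipschitz constant $L$ is replaced by the random (per-level) Lipschitz constants of the independent $\RFF$ maps $\rff_1,\ldots,\rff_m$ supplied by Example \ref{example:rff_lip}. Concretely, starting from \eqref{eq:rfsf:rffsigdef}, I would first apply the triangle inequality in ${\HilRFF}^{\otimes m}$ and then use the fact that the tensor norm factorizes across the tensor slots to obtain
\begin{align}
\norm{\rffsig[m](\bx)}_{{\HilRFF}^{\otimes m}} \leq \sum_{\bi \in \Delta_m(\len{\bx})} \prod_{j=1}^m \norm{\delta \rff_j(\bx_{i_j})}_{\HilRFF}.
\end{align}

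Next I would invoke Example \ref{example:rff_lip} independently for each factor $j \in [m]$: since $\rff_j$ is built from the random matrix $\bW^{(j)}$, it is $\bigl(\norm{\bW^{(j)}}_2/\sqrt{\dimRFF}\bigr)$-Lipschitz, so $\norm{\delta \rff_j(\bx_{i_j})}_{\HilRFF} \leq \bigl(\norm{\bW^{(j)}}_2/\sqrt{\dimRFF}\bigr)\norm{\delta \bx_{i_j}}_2$. Substituting and pulling out the matrix-norm constants yields
\begin{align}
\norm{\rffsig[m](\bx)}_{{\HilRFF}^{\otimes m}} \leq \frac{\prod_{j=1}^m \norm{\bW^{(j)}}_2}{\dimRFF^{m/2}} \sum_{\bi \in \Delta_m(\len{\bx})} \norm{\delta \bx_{i_1}}_2 \cdots \norm{\delta \bx_{i_m}}_2.
\end{align}

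Finally, I would close out exactly as in steps (c)--(d) of Lemma \ref{lem:1var_sig_norm}: the set $\Delta_m(\len{\bx})$ contains one ordered representative of each unordered collection of $m$ distinct increment indices, so multiplying by $m!$ embeds the summation into the full multinomial expansion of $\bigl(\sum_{i}\norm{\delta\bx_i}_2\bigr)^m = \norm{\bx}_\onevar^m$, giving $\sum_{\bi \in \Delta_m(\len{\bx})} \prod_j \norm{\delta\bx_{i_j}}_2 \leq \norm{\bx}_\onevar^m/m!$. Combining with the previous display produces the claimed bound. There is no real obstacle here; the only point worth noting is that, in contrast to the deterministic Lipschitz kernel case, the ``constant'' is a random product $\prod_{j=1}^m \norm{\bW^{(j)}}_2$, but independence across the tensor slots is exactly what makes the factorized bound clean and is what will later be exploited when this lemma is used to control $\rffsigkernel[m]$ through concentration of $\norm{\bW^{(j)}}_2$.
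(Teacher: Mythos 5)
Your proposal is correct and follows essentially the same route as the paper's proof: triangle inequality plus factorization of the tensor norm, the random Lipschitz bound from Example \ref{example:rff_lip} applied slot-by-slot with the respective $\bW^{(j)}$, and the same multinomial-completion step as in Lemma \ref{lem:1var_sig_norm} to convert the sum over $\Delta_m(\len{\bx})$ into $\norm{\bx}_\onevar^m/m!$. Nothing is missing.
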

\begin{proof}
    The proof follows analogously to Lemma \ref{lem:1var_sig_norm}. We have for $\bx \in \Seq(\cX)$ that
    \begin{align}
        \norm{\rffsig[m](\bx)}_{{\HilRFF}^{\otimes m}}
        &=
        \norm{\sum_{\bi \in \Delta_m(\len{\bx})} \delta \rff_1(\bx_{i_1}) \otimes \cdots \otimes \delta \rff_m(\bx_{i_m})}_{\HilRFF^{\otimes m}}
        \\
        &\stackrel{\text{(a)}}{\leq}
        \sum_{\bi \in \Delta_m(\len{\bx})} \norm{\delta \rff_1(\bx_{i_1})}_{\HilRFF} \cdots \norm{\delta \rff_m(\bx_{i_m})}_{\HilRFF}
        \\
        &\stackrel{\text{(b)}}{\leq}
        \frac{\norm{\bW^{(1)}}_2 \cdots \norm{\bW^{(m)}}_2}{\dimRFF^{m/2}} \sum_{\bi \in \Delta_m(\len{\bx})} \norm{\delta \bx_{i_1}}_2 \cdots \norm{\delta \bx_{i_m}}_2
        \\
        &\stackrel{\text{(c)}}{\leq}
        \frac{\norm{\bW^{(1)}}_2 \cdots \norm{\bW^{(m)}}_2}{\dimRFF^{m/2}} \frac{\norm{\bx}_{\onevar}^m}{m!},
    \end{align}
    where (a) is the triangle inequality and factorization of tensor norm, (b) is using the Lipschitzness of \RFF s from Example \ref{example:rff_lip}, (c) is the same as steps (c)-(d) in Lemma \ref{lem:1var_sig_norm}.
\end{proof}
Then, the following is again an application of the Cauchy-Schwarz inequality.
\begin{corollary}[Random upper bound for \RFSF{} kernel]
\label{lem:krffsig_bound}
    Let $\rffsigkernel[m]: \Seq(\cX) \times \Seq(\cX) \to \bbR$ be the level-$m$ \RFSF{} kernel defined as in \eqref{eq:rfsf:rffsigkernel_def} built from some spectral measure $\Lambda$.  Then, for all $\bx, \by \in \Seq(\cX)$
    \begin{align}
        \abs{\rffsigkernel[m](\bx, \by)}
        &\leq
        \frac{\norm{\bW^{(1)}}_2^2 \cdots \norm{\bW^{(m)}}_2^2}{(m!)^2}  \pars{\frac{\norm{\bx}_\onevar \norm{\by}_\onevar}{\dimRFF}}^m
    \end{align}
    where $\bW^{(1)}, \dots, \bW^{(m)} \stackrel{\iid}{\sim} \Lambda^{\dimRFF}$ are random matrices sampled from $\Lambda^{\dimRFF}$.
\end{corollary}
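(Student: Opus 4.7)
The statement is the random analogue of Corollary \ref{lem:ksig_bound}, and the proof strategy mirrors that earlier argument almost verbatim, using Lemma \ref{lem:1var_rffsig_norm} in place of Lemma \ref{lem:1var_sig_norm}. The plan is therefore to write $\rffsigkernel[m](\bx, \by)$ as an inner product of the level-$m$ \RFSF{} features in $\HilRFF^{\otimes m}$, apply Cauchy--Schwarz, and then bound each factor via the random norm bound of Lemma \ref{lem:1var_rffsig_norm}.

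Concretely, I would begin by recalling from Definition \ref{def:rffsig_def} that
\begin{align}
    \rffsigkernel[m](\bx, \by) = \inner{\rffsig[m](\bx)}{\rffsig[m](\by)}_{\HilRFF^{\otimes m}}.
\end{align}
Taking absolute values and applying the Cauchy--Schwarz inequality in the Hilbert space $\HilRFF^{\otimes m}$ gives
\begin{align}
    \abs{\rffsigkernel[m](\bx, \by)} \leq \norm{\rffsig[m](\bx)}_{\HilRFF^{\otimes m}} \cdot \norm{\rffsig[m](\by)}_{\HilRFF^{\otimes m}}.
\end{align}
Each of these two factors is exactly the quantity bounded in Lemma \ref{lem:1var_rffsig_norm}, applied to $\bx$ and to $\by$ respectively. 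Crucially, the random weight matrices $\bW^{(1)}, \dots, \bW^{(m)}$ are shared between the two features $\rffsig[m](\bx)$ and $\rffsig[m](\by)$, so the same operator norms $\norm{\bW^{(1)}}_2, \dots, \norm{\bW^{(m)}}_2$ appear in both bounds. Multiplying the two instances of the bound and grouping the factorials into $(m!)^2$ and the operator norms into their squares yields exactly the claim.

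There is no significant obstacle here: it is a three-line deduction by Cauchy--Schwarz plus reuse of Lemma \ref{lem:1var_rffsig_norm}. The only point that deserves a word of care is that the same draw of random matrices is used for both $\bx$ and $\by$ (this is why the bound contains $\norm{\bW^{(i)}}_2^2$ and not a product of independent norms), but this is built directly into the definition of $\rffsigkernel[m]$, so no independence arguments are invoked in the proof of this corollary---the randomness is simply carried through the deterministic inequality.
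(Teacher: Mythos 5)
Your proposal is correct and is exactly the paper's argument: the corollary is stated immediately after Lemma \ref{lem:1var_rffsig_norm} with the remark that it follows by the Cauchy--Schwarz inequality applied to $\inner{\rffsig[m](\bx)}{\rffsig[m](\by)}_{\HilRFF^{\otimes m}}$, with both norm factors bounded by that lemma using the shared weight matrices. Nothing further is needed.
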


The following lemma does not concern signatures, but will be useful to us later in the proof of Lemma \ref{lem:RFSF_approx} by providing a mean-value theorem for the cross-differencing operator $\delta_{i,j}$.
\begin{lemma}[2\textsuperscript{nd} order mean-value theorem] \label{lem:second_mvt}
Let $f: \bbR^d \times \bbR^d \to \bbR$ be a twice differentiable function, and $\cX \subset \bbR^d$ be a convex and compact set. Then, we have for any $\bu, \bv, \bx, \by \in \cX$ that
\begin{align}
    f(\bx, \by) - f(\bx, \bv) - f(\bu, \by) + f(\bu, \bv) \leq \sup_{\bs, \bt \in \cX} \norm{\partial^2_{\bs, \bt} f(\bs, \bt)}_2 \norm{\bx - \bu}_2 \norm{\by - \bv}_2,
\end{align}
where $\partial^2_{\bs, \bt} f(\bs, \bt) = \pars{\frac{\partial^2 f(\bs, \bt)}{\partial s_i \partial t_j}}_{i,j=1}^d$ refers to the submatrix of the Hessian of cross-derivatives.
\begin{proof}
    Keeping $\bv, \by \in \cX$ as fixed, we may define $g: \bbR^d \to \bbR$ as $g(\cdot) = f(\cdot, \by) - f(\cdot, \bv)$, so that the expression above can be written as $g(\bx) - g(\bu)$, and by the convexity of $\cX$, we may apply the mean-value theorem to find that $\exists s \in (0, 1)$ such that
    \begin{align}
        g(\bx) - g(\bu) = \inner{\nabla g(s \bx + (1-s) \bu)}{ \bx - \bu} \leq \sup_{\bs \in \cX}\norm{\nabla g(\bs)}_2 \norm{\bx - \bu}_2, \label{eq:rfsf:mvt1}
    \end{align}
    where $\nabla g(\bs) = \pars{\frac{\partial g(\bs)}{\partial s_i}}_{i=1}^d$ denotes the gradient of $g$, while the second inequality follows from the Cauchy-Schwarz inequality and the compactness of $\cX$ (so that the $\sup$ exists). Also note that $\nabla g(\bs) = \partial_\bs f(\bs, \by) - \partial_\bs f(\bs, \bv)$, so defining $h: \bbR^d \to \bbR^d$ as $h(\cdot) = \partial_\bs f (\bs, \cdot)$ and applying the vector-valued mean-value inequality \cite[Thm.~9.19]{rudin1976principles} to $h$ gives that $\exists t \in (0, 1)$ such that
    \begin{align}
        \norm{h(\by) - h(\bv)}_2 \leq \norm{\Jac_h(t\by + (1-t)\bv)}_2 \norm{\by - \bv}_2 \leq \sup_{\bt \in \cX} \norm{\Jac_h(\bt)}_2 \norm{\by - \bv}_2, \label{eq:rfsf:mvt2}
    \end{align}
    where $\Jac_h(\bt) = \pars{\frac{\partial h_i (\bt)}{\partial t_j}}_{i,j=1}^d$ refers to the Jacobian of $h$. Putting the inequalities \eqref{eq:rfsf:mvt1} and \eqref{eq:rfsf:mvt2} together and substituting back the function $f$ gives the desired result.
\end{proof}
\end{lemma}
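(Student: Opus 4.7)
The plan is to reduce the two-variable cross-difference to a nested application of the ordinary (scalar) mean-value theorem followed by a vector-valued mean-value inequality. The key observation is that the expression $f(\bx,\by) - f(\bx,\bv) - f(\bu,\by) + f(\bu,\bv)$ has the structure of a ``mixed second difference'', which mirrors exactly the structure of a mixed second partial derivative; factoring it correctly should let both $\norm{\bx-\bu}_2$ and $\norm{\by-\bv}_2$ drop out, leaving behind the cross-derivative block $\partial^2_{\bs,\bt}f$.

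Concretely, I would first fix $\bv,\by$ and define the auxiliary scalar-valued function $g:\bbR^d \to \bbR$ by $g(\bs) = f(\bs,\by) - f(\bs,\bv)$, so that the quantity under study equals $g(\bx) - g(\bu)$. Since $\cX$ is convex, the segment from $\bu$ to $\bx$ lies in $\cX$, and the ordinary mean-value theorem applied to $g$ along this segment yields some $s \in (0,1)$ with
\begin{align}
g(\bx) - g(\bu) = \langle \nabla g(s\bx + (1-s)\bu),\, \bx - \bu\rangle \le \sup_{\bs\in\cX}\|\nabla g(\bs)\|_2\,\|\bx-\bu\|_2,
\end{align}
using Cauchy--Schwarz and compactness of $\cX$ to take the supremum.

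Next, I would exploit that $\nabla g(\bs) = \partial_\bs f(\bs,\by) - \partial_\bs f(\bs,\bv)$ is itself a difference of evaluations of a $\bbR^d$-valued map $h:\bbR^d \to \bbR^d$ defined by $h(\bt) = \partial_\bs f(\bs,\bt)$ (with $\bs$ held fixed at the point produced by the first MVT). The convexity of $\cX$ again ensures the segment from $\bv$ to $\by$ lies in $\cX$, so the vector-valued mean-value inequality gives
\begin{align}
\|h(\by) - h(\bv)\|_2 \le \sup_{\bt\in\cX} \|\Jac_h(\bt)\|_2 \, \|\by - \bv\|_2,
\end{align}
and crucially $\Jac_h(\bt)$ is precisely the matrix $\partial^2_{\bs,\bt} f(\bs,\bt)$ of cross partial derivatives. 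Chaining the two estimates produces the claimed bound.

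The main obstacle, and what distinguishes this from a one-line application of the scalar MVT, is the second step: one cannot use the \emph{scalar} MVT on $h(\by) - h(\bv)$ componentwise without losing the spectral norm structure on the right-hand side. Using instead the vector-valued mean-value \emph{inequality} (e.g.\ Rudin's Theorem 9.19, which is really a corollary of the scalar MVT applied to $t \mapsto \langle h(t\by+(1-t)\bv), \bz\rangle$ for an appropriate unit vector $\bz$) is what produces the operator-norm bound on $\Jac_h$ cleanly. A minor subtlety is that the supremum over $\bs$ in the intermediate step depends on the point where the first MVT was evaluated, but we discard this dependence by taking a uniform supremum over $\cX$ in both arguments simultaneously, which is legitimate by compactness and continuity of the cross-derivatives.
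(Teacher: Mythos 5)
Your proposal is correct and follows essentially the same route as the paper's own proof: define $g(\cdot) = f(\cdot,\by) - f(\cdot,\bv)$, apply the scalar mean-value theorem with Cauchy--Schwarz, then treat $\nabla g(\bs) = \partial_\bs f(\bs,\by) - \partial_\bs f(\bs,\bv)$ via the vector-valued mean-value inequality applied to $h(\cdot) = \partial_\bs f(\bs,\cdot)$, whose Jacobian is exactly the cross-derivative block. Your handling of the dependence of the intermediate point on the first MVT, discarded by taking uniform suprema over the compact set $\cX$, matches the paper as well.
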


This is our final lemma in our exposition of supplementary results about the (random) signature kernels, and it will be our main tool for proving Theorem \ref{thm:main}.
\begin{lemma}[Uniform upper bound for deviation of \RFSF{} kernel] \label{lem:RFSF_approx}
    Let $\cX \subset \bbR^d$ be a convex and compact set, $\kernel: \bbR^d \times \bbR^d \to \bbR$ a continuous, bounded, translation-invariant $L$-Lipschitz kernel and $\rffkernel: \bbR^d \times \bbR^d \to \bbR$ the corresponding \RFF{} kernel.
    Then, the level-$m$ ($m\in \bbN$) signature and \RFSF{} kernels are uniformly close for $V>0$ by
    \begin{align}
        &\sup_{\substack{\bx, \by \in \Seq(\cX) \\ \norm{\bx}_\onevar, \norm{\by}_\onevar \leq V}} \abs{\rffsigkernel[m](\bx, \by) - \sigkernel[m](\bx, \by)}
        \\
        &\leq V^{2m} \sum_{k=1}^m \frac{L^{2(m-k)}}{\dimRFF^{k-1}((k-1)!)^2} \norm{\bW^{(1)}}_2^2 \cdots \norm{\bW^{(k-1)}}_2^2 \sup_{\bs, \bt \in \cX} \norm{\partial^2_{\bs,\bt} \rffkernel_{k}(\bs, \bt) - \partial^2_{\bs,\bt} \kernel(\bs, \bt)}_2, 
    \end{align}
    where $\rffkernel_{1}, \ldots, \rffkernel_{m}$ are independent \RFF{} kernels with weights $\bW^{(1)}, \dots, \bW^{(m)} \stackrel{\iid}{\sim} \Lambda^{\dimRFF}$, and $\partial^2_{\bs, \bt} f(\bs, \bt) = \pars{\frac{\partial^2 f(\bs, \bt)}{\partial s_i \partial t_j}}_{i,j=1}^d$ for a twice-differentiable function $f:\bbR^d \times \bbR^d \to \bbR$.
\end{lemma}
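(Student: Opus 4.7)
The plan is to prove this via a telescoping identity combined with Lemma \ref{lem:second_mvt} applied at the single ``mismatch'' position. Writing $a_p = \delta_{i_p,j_p}\rffkernel_p(\bx_{i_p},\by_{j_p})$ and $b_p = \delta_{i_p,j_p}\kernel(\bx_{i_p},\by_{j_p})$, the algebraic identity
\begin{align}
a_1\cdots a_m - b_1\cdots b_m = \sum_{k=1}^m a_1\cdots a_{k-1}\,(a_k-b_k)\,b_{k+1}\cdots b_m
\end{align}
applied summand-wise to \eqref{eq:rfsf:sigkernel_def} and \eqref{eq:rfsf:rffsigkernel_def} decomposes $\rffsigkernel[m] - \sigkernel[m] = \sum_{k=1}^m T_k$, where $T_k$ contains the first $k-1$ RFF kernel values, the mismatch $(\rffkernel_k-\kernel)$ at position $k$, and the remaining $m-k$ true kernel values. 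This isolates the entire ``error'' into a single factor which can be directly controlled by the second derivative of $\rffkernel_k-\kernel$.

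Next I would bound each factor of $T_k$ independently and pointwise. For $p<k$, Cauchy--Schwarz together with Example \ref{example:rff_lip} gives
\begin{align}
\abs{a_p} = \abs{\inner{\delta\rff_p(\bx_{i_p})}{\delta\rff_p(\by_{j_p})}_{\HilRFF}} \leq \tfrac{\norm{\bW^{(p)}}_2^2}{\dimRFF}\,\norm{\delta\bx_{i_p}}_2\norm{\delta\by_{j_p}}_2,
\end{align}
and for $p>k$, Cauchy--Schwarz with the $L$-Lipschitz property of $\kernel$ yields $\abs{b_p}\le L^2\,\norm{\delta\bx_{i_p}}_2\norm{\delta\by_{j_p}}_2$. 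For the middle factor, Lemma \ref{lem:second_mvt} applied to $f = \rffkernel_k - \kernel$ (which is twice differentiable because $\rffkernel_k$ and $\kernel$ are, and by convexity of $\cX$) gives
\begin{align}
\abs{a_k - b_k} \leq \sup_{\bs,\bt\in\cX}\norm{\partial^2_{\bs,\bt}[\rffkernel_k - \kernel](\bs,\bt)}_2\,\norm{\delta\bx_{i_k}}_2\norm{\delta\by_{j_k}}_2,
\end{align}
which is the key step connecting the kernel deviation to its second cross-derivative.

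Collecting these bounds, all $m$ factors $\norm{\delta\bx_{i_p}}_2\norm{\delta\by_{j_p}}_2$ appear, so the remaining summation factorizes as
\begin{align}
\sum_{\bi\in\Delta_m(\len{\bx})}\prod_{p=1}^m\norm{\delta\bx_{i_p}}_2 \;\cdot\; \sum_{\bj\in\Delta_m(\len{\by})}\prod_{p=1}^m\norm{\delta\by_{j_p}}_2 \;\leq\; \frac{\norm{\bx}_\onevar^m\,\norm{\by}_\onevar^m}{(m!)^2},
\end{align}
using the standard elementary-symmetric-polynomial bound already exploited in Lemma \ref{lem:1var_sig_norm}. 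Combined with the prefactors this gives
\begin{align}
\abs{T_k} \leq \frac{\norm{\bW^{(1)}}_2^2\cdots\norm{\bW^{(k-1)}}_2^2}{\dimRFF^{k-1}}\,L^{2(m-k)}\,\frac{(\norm{\bx}_\onevar\norm{\by}_\onevar)^m}{(m!)^2}\,D_k,
\end{align}
where $D_k = \sup_{\bs,\bt\in\cX}\norm{\partial^2_{\bs,\bt}\rffkernel_k(\bs,\bt) - \partial^2_{\bs,\bt}\kernel(\bs,\bt)}_2$. Finally, using the trivial inequality $((k-1)!)^2 \leq (m!)^2$ to loosen the denominator and bounding $\norm{\bx}_\onevar,\norm{\by}_\onevar\leq V$ in the sup produces the stated form after summing over $k\in\{1,\ldots,m\}$.

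The only genuinely subtle point is the invocation of Lemma \ref{lem:second_mvt}: one must observe that the second cross-difference $\delta_{i_k,j_k}[\rffkernel_k-\kernel]$ is exactly the operator to which Lemma \ref{lem:second_mvt} applies, with $\bu = \bx_{i_k}$, $\bx = \bx_{i_k+1}$, $\bv = \by_{j_k}$, $\by = \by_{j_k+1}$, all lying in the convex set $\cX$. The rest is bookkeeping and routine Cauchy--Schwarz estimates; everything else, including taking $\sup$ over sequences with $\norm{\bx}_\onevar,\norm{\by}_\onevar\le V$, goes through term-by-term because the bounds on $|T_k|$ already involve the universal constant $D_k$ and the $1$-variations only through the harmless factor $V^{2m}$.
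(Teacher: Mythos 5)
Your proof is correct. It rests on the same three ingredients as the paper's argument — the second-order mean-value bound of Lemma \ref{lem:second_mvt} applied to $\rffkernel_k-\kernel$ at the single mismatch position, the random Lipschitz bound for \RFF{} features (Example \ref{example:rff_lip}) for the factors before the mismatch, the $L$-Lipschitz bound for the factors after it, and the factorial $1$-variation estimate as in Lemma \ref{lem:1var_sig_norm} — but you organize them through a different decomposition. The paper works at the level of prefix sequences: it uses the Chen-type recursions \eqref{eq:rfsf:sigkernel_rec}--\eqref{eq:rfsf:rffsigkernel_rec}, adds and subtracts cross terms, bounds the prefix kernel $\rffsigkernel[k-1]$ wholesale via Corollary \ref{lem:krffsig_bound} (which is where the $1/((k-1)!)^2$ comes from), and unrolls the resulting one-step recursion for $\epsilon_m$. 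You instead telescope the product $a_1\cdots a_m-b_1\cdots b_m$ summand-wise over each pair of multi-indices $(\bi,\bj)$ and only afterwards factorize the $1$-variation sums; this avoids any induction over $m$, and it produces the slightly sharper intermediate constant $1/(m!)^2$ in place of $1/((k-1)!)^2$, which you then correctly relax (since $((k-1)!)^2\leq(m!)^2$) to land on the stated form. Both routes are valid; the paper's recursive phrasing mirrors the dynamic-programming structure used elsewhere in the chapter, while yours is more direct and makes explicit that the stated bound is not tight in the factorial factor.
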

\begin{proof}
First of all, by Lemma \ref{lem:ksig_bound} and Lemma \ref{lem:krffsig_bound}, the supremum exists.
In the following, given a sequence $\bx \in \Seq(\cX)$, we denote its $0:l$ slice for some $l \in [\len{\bx}]$ by $\bx_{:l} = (\bx_0, \dots, \bx_l)$.
Then, it holds for any $m \geq 1$ recursively for the signature kernel that
\begin{align} \label{eq:rfsf:sigkernel_rec}
    \sigkernel[m](\bx, \by) = \sum_{k=1}^{\len{\bx}} \sum_{l=1}^{\len{\by}} \sigkernel[m-1](\bx_{0:k-1}, \by_{0:l-1}) \delta_{k, l} \kernel(\bx_k, \by_l),
\end{align}
and analogously for the \RFSF{} kernel that
\begin{align} \label{eq:rfsf:rffsigkernel_rec}
    \rffsigkernel[m](\bx, \by) = \sum_{k=1}^{\len{\bx}} \sum_{l=1}^{\len{\by}} \rffsigkernel[m-1](\bx_{0:k-1}, \by_{0:l-1}) \delta_{k,l} \rffkernel_{m}(\bx_k, \by_l).
\end{align}
Combining these recursions together, we have for the uniform error that
\begin{align}
    \epsilon_m
    =& \sup_{\substack{\bx, \by \in \Seq(\cX)\\ \norm{\bx}_\onevar, \norm{\by}_\onevar \leq V}} \abs{\rffsigkernel[m](\bx, \by) - \sigkernel[m](\bx, \by)}
    \\
    =& \sup_{\substack{\bx, \by \in \Seq(\cX)\\ \norm{\bx}_\onevar, \norm{\by}_\onevar \leq V}} \abs{\sum_{k=1}^{\len{\bx}} \sum_{l=1}^{\len{\by}} \rffsigkernel[m-1](\bx_{0:k-1}, \by_{0:l-1}) \delta_{k,l}\rffkernel_{m}(\bx_k, \by_l) - \sigkernel[m-1](\bx_{0:k-1}, \by_{0:l-1}) \delta_{k,l}\kernel(\bx_k, \by_l)}
    \\
    \stackrel{\text{(a)}}{\leq}& \sup_{\substack{\bx, \by \in \Seq(\cX)\\ \norm{\bx}_\onevar, \norm{\by}_\onevar \leq V}} \abs{\sum_{k=1}^{\len{\bx}} \sum_{l=1}^{\len{\by}} \rffsigkernel[m-1](\bx_{0:k-1}, \by_{0:l-1}) (\delta_{k,l}\rffkernel_{m}(\bx_k, \by_l) - \delta_{k,l}\kernel(\bx_k, \by_l))}
    \\
    &+ \sup_{\substack{\bx, \by \in \Seq(\cX)\\ \norm{\bx}_\onevar, \norm{\by}_\onevar \leq V}} \abs{\sum_{k=1}^{\len{\bx}} \sum_{l=1}^{\len{\by}} (\rffsigkernel[m-1](\bx_{0:k-1}, \by_{0:l-1}) - \rffsigkernel[m-1](\bx_{0:k-1}, \by_{0:l-1})) \delta_{k,l}\kernel(\bx_k, \by_l)}
    \\
    \stackrel{\text{(b)}}{\leq}& \sup_{\substack{\bx, \by \in \Seq(\cX)\\ \norm{\bx}_\onevar, \norm{\by}_\onevar \leq V}} \sum_{k=1}^{\len{\bx}} \sum_{l=1}^{\len{\by}} \abs{\rffsigkernel[m-1](\bx_{0:k-1}, \by_{0:l-1})}\abs{\delta_{k,l}\rffkernel_{m}(\bx_k, \by_l) - \delta_{k,l}\kernel(\bx_k, \by_l)}
    \\
    &+ \sup_{\substack{\bx, \by \in \Seq(\cX)\\ \norm{\bx}_\onevar, \norm{\by}_\onevar \leq V}} \sum_{k=1}^{\len{\bx}} \sum_{l=1}^{\len{\by}} \abs{\rffsigkernel[m-1](\bx_{0:k-1}, \by_{0:l-1}) - \sigkernel[m-1](\bx_{0:k-1}, \by_{0:l-1})}\abs{\delta_{k,l}\kernel(\bx_k, \by_l)}
    \\
    \stackrel{\text{(c)}}{\leq}& \underbrace{\sup_{\substack{\bx, \by \in \Seq(\cX)\\ \norm{\bx}_\onevar, \norm{\by}_\onevar \leq V}} \abs{\rffsigkernel[m-1](\bx, \by)}}_{\text{(i)}}  \underbrace{\sup_{\substack{\bx, \by \in \Seq(\cX)\\ \norm{\bx}_\onevar, \norm{\by}_\onevar \leq V}} \sum_{k=1}^{\len{\bx}} \sum_{l=1}^{\len{\by}} \abs{\delta_{k,l}\rffkernel_{m}(\bx_k, \by_l) - \delta_{k,l}\kernel(\bx_k, \by_l)}}_{\text{(ii)}}
    \\
    &+ \underbrace{\sup_{\substack{\bx, \by \in \Seq(\cX)\\ \norm{\bx}_\onevar, \norm{\by}_\onevar \leq V}} \abs{\rffsigkernel[m-1](\bx, \by) - \sigkernel[m-1](\bx, \by)}}_{\text{(iii)}} \underbrace{\sup_{\substack{\bx, \by \in \Seq(\cX)\\ \norm{\bx}_\onevar, \norm{\by}_\onevar \leq V}} \sum_{k=1}^{\len{\bx}} \sum_{l=1}^{\len{\by}} \abs{\delta_{k,l}\kernel(\bx_k, \by_l)}}_{\text{(iv)}}, \label{eq:rfsf:term_splitting}
\end{align}
where (a) follows from adding and subtracting the cross-terms and applying triangle inequality, (b) follows from applying triangle inequality over the summations, (c) follows from noting that if $\norm{\bx}_\onevar, \norm{\by}_\onevar \leq V$ then so is $\norm{\bx_{0:k-1}}_\onevar, \norm{\by_{0:l-1}}_\onevar \leq V$ for $k \in [\len{\bx}]$ and $l \in [\len{\by}]$, and thus justifiably pulling out the supremums.

Now, we deal with terms (i)--(iv) individually. For (i), we have Corollary \ref{lem:krffsig_bound}, so
\begin{align}
    \sup_{\substack{\bx, \by \in \Seq(\cX)\\ \norm{\bx}_\onevar, \norm{\by}_\onevar \leq V}} \abs{\rffsigkernel[m-1](\bx, \by)} \leq \frac{\norm{\bW^{(1)}}_2^2 \cdots \norm{\bW^{(m-1)}}_2^2}{((m-1)!)^2} \pars{\frac{V^2}{\dimRFF}}^{m-1}. \label{eq:rfsf:term1}
\end{align}

To deal with (ii), we can apply Lemma \ref{lem:second_mvt} with $f = \rffkernel_{m} - \kernel$ to get
\begin{align}
    &\sup_{\substack{\bx, \by \in \Seq(\cX)\\ \norm{\bx}_\onevar, \norm{\by}_\onevar \leq V}} \sum_{k=1}^{\len{\bx}} \sum_{l=1}^{\len{\by}} \abs{\delta_{k,l} \rffkernel_{m}(\bx_k, \by_l) - \delta_{k,l} \kernel(\bx_k, \by_l)}
    \\
    &\leq \sup_{\bs, \bt \in \cX} \norm{\partial^2_{\bs,\bt} \rffkernel_{m}(\bs, \bt) - \partial^2_{\bs, \bt} \kernel(\bs, \bt)}_2 \sup_{\substack{\bx, \by \in \Seq(\cX)\\ \norm{\bx}_\onevar, \norm{\by}_\onevar \leq V}} \sum_{k=1}^{\len{\bx}} \sum_{l=1}^{\len{\by}} \norm{\delta \bx_k}_2 \norm{\delta \by_l}_2
    \\
    &\leq V^2 \sup_{\bs, \bt \in \cX} \norm{\partial^2_{\bs, \bt} \rffkernel_{m}(\bs, \bt) - \partial^2_{\bs,\bt} \kernel(\bs, \bt)}_2. \label{eq:rfsf:term2}
\end{align}
For (iii), we note that it is simply $\epsilon_{m-1}$.
Finally, we can write (iv) as an inner product and apply Cauchy-Schwarz and $L$-Lipschitzness of $\kernel$ so that
\begin{align}
    &\sup_{\substack{\bx, \by \in \Seq(\cX)\\ \norm{\bx}_\onevar, \norm{\by}_\onevar \leq V}} \sum_{k=1}^{\len{\bx}} \sum_{l=1}^{\len{\by}} \abs{\delta_{k,l} \kernel(\bx_k, \by_l)}
    = \sup_{\substack{\bx, \by \in \Seq(\cX)\\ \norm{\bx}_\onevar, \norm{\by}_\onevar \leq V}} \sum_{k=1}^{\len{\bx}} \sum_{l=1}^{\len{\by}} \abs{\inner{\delta \kernel_{\bx_k}} {\delta \kernel_{\by_l}}}
    \\
    &\leq \sup_{\substack{\bx, \by \in \Seq(\cX)\\ \norm{\bx}_\onevar, \norm{\by}_\onevar \leq V}} \sum_{k=1}^{\len{\bx}} \sum_{l=1}^{\len{\by}} \norm{\delta \kernel_{\bx_k}}_{\Hil}\norm{\delta \kernel_{\by_l}}_{\Hil} \leq L^2 V^2. \label{eq:rfsf:term3}
\end{align}
Putting equations \eqref{eq:rfsf:term1}, \eqref{eq:rfsf:term2}, \eqref{eq:rfsf:term3} together in \eqref{eq:rfsf:term_splitting}, we get that
\begin{align}
    \epsilon_m
    =& \sup_{\substack{\bx, \by \in \Seq(\cX)\\ \norm{\bx}_\onevar, \norm{\by}_\onevar \leq V}} \abs{\rffsigkernel[m](\bx, \by) - \sigkernel[m](\bx, \by)}
    \\
    \leq& \frac{V^{2m}}{\dimRFF^{m-1} ((m-1)!)^2} \sup_{\bs, \bt \in \cX} \norm{\partial^2_{\bs,\bt} \rffkernel_{m}(\bs, \bt) - \partial^2_{\bs,\bt} \kernel(\bs, \bt)}_2 \prod_{p=1}^{m-1} \norm{\bW^{(p)}}_2^2 + L^2 V^2 \epsilon_{m-1}, \label{eq:rfsf:terms_rec}
\end{align}
which gives us a recursion for estimating $\epsilon_m$. The initial step, $m=1$, can be estimated by
\begin{align}
    \epsilon_1
    &=
    \sup_{\substack{\bx, \by \in \Seq(\cX)\\ \norm{\bx}_\onevar, \norm{\by}_\onevar \leq V}} \abs{\rffsigkernel[1](\bx, \by) - \sigkernel[1](\bx, \by)}
    \\
    &=
    \sup_{\substack{\bx, \by \in \Seq(\cX)\\ \norm{\bx}_\onevar, \norm{\by}_\onevar \leq V}} \abs{\sum_{k=1}^{\len{\bx}} \sum_{l=1}^{\len{\by}} \delta_{k, l} \rffkernel_{1}(\bx_k, \by_l) - \delta_{k,l} \kernel(\bx_k, \by_l)}
    \\
    &=
    \sup_{\substack{\bx, \by \in \Seq(\cX)\\ \norm{\bx}_\onevar, \norm{\by}_\onevar \leq V}} \sum_{k=1}^{\len{\bx}} \sum_{l=0}^{\len{\by}} \abs{\delta_{k,l} \rffkernel_{1}(\bx_k, \by_l) - \delta_{k,l} \kernel(\bx_k, \by_l)}
    \\
    &\leq V^2 \sup_{\bs, \bt \in \cX} \norm{\partial^2_{\bs,\bt} \rffkernel_{1}(\bs, \bt) - \partial^2_{\bs, \bt} \kernel(\bs, \bt)}_2, \label{eq:rfsf:terms_init}
\end{align}
which is actually analogous to \eqref{eq:rfsf:terms_rec} since $\epsilon_0 = 0$. Now, we may unroll the recursion \eqref{eq:rfsf:terms_rec} with the initial condition \eqref{eq:rfsf:terms_init}, and we get
\begin{align}
    \epsilon_m \leq V^{2m} \sum_{k=1}^m \frac{L^{2(m-k)}}{\dimRFF^{k-1} ((k-1)!)^2} \sup_{\bs, \bt \in \cX} \norm{\partial^2_{\bs,\bt} \rffkernel_{k}(\bs, \bt) - \partial^2_{\bs,\bt} \kernel(\bs, \bt)}_2 \prod_{p=1}^{k-1} \norm{\bW^{(p)}}_2^2.
\end{align}
\end{proof}

\subsection{Proofs of main concentration results} Here, we provide proofs of the main concentration results, i.e.~Theorems \ref{thm:main}, \ref{thm:main2}, \ref{thm:main3}, respectively under Theorems \ref{thm:rfsf_approx}, \ref{thm:rfsf_dp_approx}, \ref{thm:rfsf_trp_approx}.
\begin{theorem}[Concentration inequality for \RFSF{} kernel] \label{thm:rfsf_approx}
    Let $\cX \subset \bbR^d$ be a compact and convex set with diameter $\abs{\cX}$, and $\cX_\Delta = \{\bx - \by : \bx, \by \in \cX \}$. Let $\kernel: \bbR^d \times \bbR^d \to \bbR$ be a continuous, bounded, translation-invariant kernel with spectral measure $\Lambda$, which satisfies for some $S, R > 0$ that
    \begin{align} \label{eq:rfsf:rfsf_approx_cond}
        \bbE_{\bw \sim \Lambda}\bracks{\abs{w_i}^{2k}} \leq \frac{k! S^2 R^{k-2}}{2} \quad \text{for all} \spc i \in [d] \spc \text{and} \spc k \geq 2.
    \end{align}

    Then, the following quantities are finite: $\sigma_\Lambda^2 = \bbE_{\bw \sim \Lambda}\bracks{\norm{\bw}_2^2}$, $L = \norm{\bbE_{\bw \sim \Lambda}\bracks{\bw \bw^\top}}_2^{1/2}$, $E_{i,j} = \bbE_{{\bw \sim \Lambda}}\bracks{\abs{w_i w_j} \norm{\bw}_2}$ and $D_{i,j} = \sup_{\bz \in \cX_\Delta} \norm{\nabla \bracks{\frac{\partial^2\kernel(\bz)}{\partial z_i \partial z_j}}}_2$ for $i,j \in [d]$. Further, for any maximal sequence $1$-variation $V>0$, and signature level $m \in \mathbb{Z}_+$, it holds for the level-$m$ \RFSF{} kernel $\rffsigkernel[m]: \Seq(\cX) \times \Seq(\cX) \to \bbR$ defined as in \eqref{eq:rfsf:rffsigkernel_def} and the signature kernel $\sigkernel[m]: \Seq(\cX) \times \Seq(\cX) \to \bbR$ defined as in \eqref{eq:rfsf:sigkernel_def} for $\epsilon > 0$ that
    \begin{align} \label{eq:rfsf:rfsf_approx_bound}
        \bbP & \bracks{\sup_{\substack{\bx, \by \in \Seq(\cX) \\ \norm{\bx}_\onevar, \norm{\by}_\onevar \leq V}} \abs{\sigkernel[m](\bx, \by) - \rffsigkernel[m](\bx, \by)} \geq \epsilon} \le
        \\
        &\leq
        m
        \begin{cases}
        \pars{C_{d, \cX} \pars{\frac{\beta_{d, m, V}}{\epsilon}}^\frac{d}{d+1} + d}
        \exp\pars{- \frac{\dimRFF}{2(d+1)(S^2 + R)} \pars{\frac{\epsilon}{\beta_{d, m, V}}}^{2}} \quad
        &\text{for} \spc \epsilon < \beta_{d, m, V}  \\
        \pars{C_{d, \cX} \pars{\frac{\beta_{d, m, V}}{\epsilon}}^{\frac{d}{(d+1)m}} + d}
        \exp\pars{- \frac{\dimRFF}{2(d+1)(S^2 + R)} \pars{\frac{\epsilon}{\beta_{d, m, V}}}^\frac{1}{m}} \quad
        &\text{for} \spc \epsilon \geq \beta_{d, m, V},
        \end{cases}
    \end{align}
    where $C_{d, \cX} = 2^\frac{1}{d+1} 16 \abs{\cX}^\frac{d}{d+1}\sum_{i,j=1}^d \pars{D_{i,j} + E_{i, j}}^\frac{d}{d+1}$ and $\beta_{d, m, V} = m \pars{2 V^{2} \pars{L^2 \vee 1} \pars{\sigma_\Lambda^2 \vee d}}^m$.
\end{theorem}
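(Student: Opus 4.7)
The plan is to combine the deterministic reduction in Lemma~\ref{lem:RFSF_approx} with two probabilistic controls: (i) the concentration of the \RFF{} kernel second derivatives from Theorem~\ref{thm:rff_derivative_approx}, and (ii) an auxiliary control on the products of spectral norms $\prod_{p=1}^{k-1}\|\bW^{(p)}\|_2^2$ appearing in the random upper bound of Lemma~\ref{lem:RFSF_approx}. By Example~\ref{example:2ndmoment_lip} the kernel $\kernel$ is $L$-Lipschitz with the $L$ defined in the statement, so Lemma~\ref{lem:RFSF_approx} applies and yields a pointwise decomposition of the uniform error into $m$ nonnegative summands indexed by $k=1,\dots,m$. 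I would first absorb the deterministic prefactors into a common constant by using $L^{2(m-k)}V^{2m}/((k-1)!)^2 \leq V^{2m} (L^2 \vee 1)^m$ and $1/\tilde d^{k-1}$ being matched by the target scaling below, so that each summand is controlled by a clean product of a ``random matrix'' factor and a ``derivative error'' factor.

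The second step is to control the products of spectral norms. Using $\|\bW^{(p)}\|_2^2 \leq \|\bW^{(p)}\|_F^2 = \sum_{j=1}^{\tilde d}\sum_{i=1}^d (w_j^{(p)})_i^2$ together with the Bernstein condition \eqref{eq:rfsf:rfsf_approx_cond} on the coordinates, the random variables $(w_j^{(p)})_i^2$ are centered around their means and satisfy a Bernstein moment condition, so Corollary~\ref{thm:bernstein_twotail} gives that $\|\bW^{(p)}\|_2^2 \leq 2\tilde d \, (\sigma_\Lambda^2 \vee d)$ with probability at least $1 - 2\exp(-c\tilde d)$ for an absolute constant $c$ depending only on $S,R$. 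Multiplying $k-1$ such high-probability events via union bound shows $\prod_{p=1}^{k-1}\|\bW^{(p)}\|_2^2 \leq (2\tilde d(\sigma_\Lambda^2 \vee d))^{k-1}$ with probability $\geq 1 - 2(k-1)\exp(-c\tilde d)$; the factor $\tilde d^{k-1}$ then cancels the $1/\tilde d^{k-1}$ in the prefactor, leaving a clean bound whose sum over $k$ is controlled by the leading $k=m$ term, matching the definition of $\beta_{d,m,V}$.

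The third step handles the derivative error. For the $k$-th summand, I would apply Theorem~\ref{thm:rff_derivative_approx} with $|\bp|=|\bq|=1$ summed over $i,j\in[d]$, controlling $\sup_{\bs,\bt \in \cX}\|\partial^2_{\bs,\bt}\rffkernel_k - \partial^2_{\bs,\bt}\kernel\|_2 \leq \sum_{i,j}|\partial^2 \rffkernel_k/\partial s_i\partial t_j - \partial^2\kernel/\partial s_i\partial t_j|$. This yields the $C_{d,\cX}$ constant and the polynomial prefactor $(\beta_{d,m,V}/\epsilon)^{d/(d+1)}$ once I distribute the total error budget $\epsilon$ across the $m$ summands (assigning to the $k$-th summand a share of size $\epsilon/(m \beta_{d,m,V}^{1-k/m})$ or simply $\epsilon/m$ and inverting the resulting inequality for each~$k$). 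The two regimes in \eqref{eq:rfsf:rfsf_approx_bound} then emerge naturally: since $\rffsigkernel[k]$ enters the decomposition through a $(k-1)$-fold product of spectral norms multiplying a standard \RFF{} derivative error, the Bernstein rate for that derivative error is $\exp(-\tilde d \, u^2/(2(S^2+Ru)))$ for a target $u = (\epsilon/\beta_{d,m,V})^{1/k}$; for $\epsilon < \beta_{d,m,V}$ the $k=1$ summand dominates and we recover the sub-Gaussian regime $\exp(-\tilde d\,(\epsilon/\beta)^2/\dots)$, while for $\epsilon \geq \beta_{d,m,V}$ the $k=m$ summand dominates and we obtain the $(1/m)$-subexponential regime $\exp(-\tilde d(\epsilon/\beta)^{1/m}/\dots)$.

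The main obstacle will be bookkeeping: tracking how the $m$ summands interact when distributing the $\epsilon$ budget, verifying that the polynomial prefactor $(C_{d,\cX}(\beta/\epsilon)^{d/(d+1)} + d)$ correctly absorbs both the covering-number factor from Theorem~\ref{thm:rff_derivative_approx} and the $2(k-1)$ union-bound terms from the spectral-norm control, and ensuring that the cleanest upper envelope over the $m$ tail probabilities yields exactly the claimed form with the single extra factor of $m$ in front. The passage between the two regimes has to be argued by noting that the function $k \mapsto (\epsilon/\beta)^{1/k}$ is monotone in $k$ and flipping sign at $\epsilon = \beta$, which selects which summand dominates.
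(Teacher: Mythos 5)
Your overall architecture matches the paper's: reduce to Lemma~\ref{lem:RFSF_approx} via the $L$-Lipschitz property of Example~\ref{example:2ndmoment_lip}, control the second-derivative errors of the \RFF{} kernels with Theorem~\ref{thm:rff_derivative_approx}, and control the Frobenius-norm sums $\sum_{i,j,p}(w^{(p)}_{i,j})^2$ with the Bernstein inequality. The place where your plan breaks is the treatment of the product $\prod_{p=1}^{k-1}\|\bW^{(p)}\|_2^2$ by a \emph{fixed} high-probability truncation $\|\bW^{(p)}\|_2^2 \leq 2\tilde d(\sigma_\Lambda^2\vee d)$ holding up to failure probability $2\exp(-c\tilde d)$. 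That failure probability is independent of $\epsilon$, so your final bound necessarily saturates at order $\exp(-c\tilde d)$ no matter how large $\epsilon$ is; but the claimed inequality for $\epsilon \geq \beta_{d,m,V}$ has exponent $\tilde d(\epsilon/\beta_{d,m,V})^{1/m}/(2(d+1)(S^2+R))$, which grows without bound in $\epsilon$ and eventually beats any $\epsilon$-independent term. Hence the fixed-threshold conditioning cannot deliver the stated $(1/m)$-subexponential regime. It is also internally inconsistent with your third step: on your good event the derivative-error target for the $k$-th summand is linear in $\epsilon$ (the norms are already absorbed into constants), yet you later invoke a Bernstein rate at target $u=(\epsilon/\beta_{d,m,V})^{1/k}$, which only arises if the norm factors are \emph{not} truncated at a fixed level but are themselves allowed to deviate on an $\epsilon$-dependent scale.

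The repair is exactly the maneuver the paper uses: for each summand $k\geq 2$, first apply the arithmetic--geometric mean inequality to replace $\prod_{p=1}^{k-1}\|\bW^{(p)}\|_2^2$ by $\bigl(\frac{1}{\tilde d(k-1)}\sum_{p}\|\bW^{(p)}\|_2^2\bigr)^{k-1}$ (after pulling out $\tilde d^{k-1}$), then take the $(k-1)$-th root of the whole event $\{A_k B_k \geq t\}$ with $t=(\epsilon/\alpha_{m,k})^{1/(k-1)}$, and split the product as $\prob{(A_k-\sigma_\Lambda^2)B_k \geq t/2} + \prob{B_k \geq t/(2\sigma_\Lambda^2)}$ with an optimized intermediate threshold $\tau = t^{(k-1)/k}$. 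This makes the Bernstein deviation threshold for the averaged Frobenius norms scale like $(\epsilon/\alpha_{m,k})^{1/k}$, so the ``norm'' term also decays in $\epsilon$ at the correct $\epsilon^{1/k}$ rate, and combining it with the Theorem~\ref{thm:rff_derivative_approx} bound at the same $\epsilon^{1/k}$ scale (plus a union bound over $\epsilon/m$ budgets, taking the worst exponent $k=1$ for $\epsilon<\beta_{d,m,V}$ and $k=m$ for $\epsilon\geq\beta_{d,m,V}$) yields precisely the two-regime bound with the constants $C_{d,\cX}$ and $\beta_{d,m,V}$. Your remaining bookkeeping (absorbing $L^{2(m-k)}V^{2m}/((k-1)!)^2$ into $\beta_{d,m,V}$, summing the $d$-term and the covering-number term) is fine once this splitting is in place.
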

\begin{proof}
    \emph{Finite quantities.}
    To start off with, due to \eqref{eq:rfsf:rfsf_approx_cond} with $m=2$ and Jensen's inequality (Lemma \ref{lem:jensen}), we get
    \begin{align}
        \expe{\abs{w_i}^2} \leq \bbE^{1/2}\bracks{\abs{w_i}^4} < \infty \quad \text{for all} \spc i \in \bracks{d}.
    \end{align}
    Hence, by linearity of the expectation $\sigma_\Lambda^2 = \expe{\norm{\bw}^2} = \sum_{i=1}^d \expe{w_i^2} < \infty$. Next, due to Hölder's inequality (Lemma \ref{lem:holder}), it holds that
    \begin{align}
    \expe{w_i w_j} \leq \expe{\abs{w_i w_j}} \leq \bbE^{1/2}\bracks{\abs{w_i}^2}\bbE^{1/2}\bracks{\abs{w_j}^2} < \infty \quad \text{for all} \spc i,j \in \bracks{d},
    \end{align}
    therefore $L < \infty$. Further, applying Hölder's inequality twice followed by Jensen's inequality,
    \begin{align}
    \expe{\abs{w_i w_j w_k}} &\leq \bbE^{2/3}\bracks{\abs{w_i w_j}^{3/2}} \bbE^{1/3}\bracks{\abs{w_k}^3} \leq \bbE^{1/3}\bracks{\abs{w_i}^3} \bbE^{1/3}\bracks{\abs{w_j}^3} \bbE^{1/3}\bracks{\abs{w_k}^3} \\
    &\leq \bbE^{1/6}\bracks{\abs{w_i}^6} \bbE^{1/6}\bracks{\abs{w_j}^6} \bbE^{1/6}\bracks{\abs{w_k}^6} < \infty \quad \text{for all} \spc i, j, k \in \bracks{d}, \label{eq:rfsf:ijk_moment}
    \end{align} which is finite due to \eqref{eq:rfsf:rfsf_approx_cond} with $m=3$. Now, because of the $\ell^1$-$\ell^2$ norm inequality,
    \begin{align}
        \expe{\abs{w_i w_j} \norm{\bw}_2} \leq \expe{\abs{w_i w_j} \norm{\bw}_1} = \sum_{k=1}^d \expe{\abs{w_i w_j w_k}} < \infty,
    \end{align}
    hence $\bar E < \infty$. Next, as per \cite[Thm.~1.2.1.(iii)]{sasvari2013multivariate}, as \eqref{eq:rfsf:ijk_moment} holds for all $i,j,k \in \bracks{d}$, $\kernel$ is $3$-times continuously differentiable, which combined with the compactness of $\cX$, hence that of $\cX_\Delta$, gives $\sup_{\bz \in \cX_\Delta} \abs{\frac{\partial^3 \kernel(\bz)}{\partial z_i \partial z_j \partial z_k}} < \infty$. Finally, from the $\ell^1$-$\ell^2$ inequality again, we get that
    \begin{align}
    \sup_{\bz \in \cX_\Delta} \norm{\nabla \bracks{\frac{\partial^2 \kernel(\bz)}{\partial z_i \partial z_j}}}_2 \leq \sup_{\bz \in \cX_\Delta} \norm{\nabla \bracks{\frac{\partial^2 \kernel(\bz)}{\partial z_i \partial z_j}}}_1
    \leq \sum_{k=1}^d \sup_{\bz \in \cX_\Delta} \abs{\frac{\partial^3 \kernel(\bz)}{\partial z_i \partial z_j \partial z_k}} < \infty,
    \end{align}
    which shows the finiteness of $\bar D$. This finishes showing that the stated quantities are finite.
    
    \emph{Splitting the bound.}
    To start proving our main inequality, first note that as per Example \ref{example:2ndmoment_lip}, $\kernel$ is $L$-Lipschitz (see Def.~\ref{def:lipschitz kernel}). Hence, Lemma \ref{lem:RFSF_approx} yields that
    \begin{align}
        \sup_{\substack{\bx, \by \in \Seq(\cX) \\ \norm{\bx}_\onevar, \norm{\by}_\onevar \leq V}} &\abs{\rffsigkernel[m](\bx, \by) - \rffsigkernel[m](\bx, \by)}
        \\
        &\leq V^{2m} \sum_{k=1}^m \frac{L^{2(m-k)}}{\dimRFF^{k-1}((k-1)!)^2} \sup_{\bs, \bt \in \cX} \norm{\partial^2_{\bs,\bt} \rffkernel_{k}(\bs, \bt) - \partial^2_{\bs,\bt} \kernel(\bs, \bt)}_2 \prod_{p=1}^{k-1} \norm{\bW^{(p)}}_2^2. \label{eq:rfsf:rfsf_bound_rhs}
    \end{align}
    We bound the summand in the previous line in probability for each $k \in  \bracks{m}$.
    For brevity, denote $\alpha_{m, k} = \frac{V^{2m} L^{2(m-k)}}{((k-1)!)^2}$, and first consider the case $k \geq 2$, so that we have
\begin{align}
    P_k(\epsilon) &= \prob{\frac{\alpha_{m, k}}{\dimRFF^{k-1}} \norm{\bW^{(1)}}_2^2 \cdots \norm{\bW^{(k-1)}}_2^2 \sup_{\bs, \bt \in \cX} \norm{\partial_{\bs, \bt} \rffkernel_{k}(\bs, \bt) - \partial_{\bs, \bt} \kernel(\bs, \bt)}_2 \geq \epsilon}
    \\
    &\stackrel{\text{(a)}}{\leq}
    \prob{\alpha_{m, k} \pars{\frac{\norm{\bW^{(1)}}_2^2 + \ldots + \norm{\bW^{(k-1)}}_2^2}{\dimRFF (k-1)}}^{k-1} \sup_{\bs, \bt \in \cX} \norm{\partial_{\bs, \bt} \rffkernel_{k}(\bs, \bt) - \partial_{\bs, \bt} \kernel(\bs, \bt)}_2 \geq \epsilon}
    \\
    &\stackrel{\text{(b)}}{=}
    \prob{\underbrace{\frac{\norm{\bW^{(1)}}_2^2 + \ldots + \norm{\bW^{(k-1)}}_2^2}{\dimRFF (k-1)}}_{(A_k)} \underbrace{\sup_{\bs, \bt \in \cX} \norm{\partial_{\bs, \bt} \rffkernel_{k}(\bs, \bt) - \partial_{\bs, \bt} \kernel(\bs, \bt)}_2^{\frac{1}{k-1}}}_{(B_k)} \geq \pars{\frac{\epsilon}{\alpha_{m, k}}}^{\frac{1}{k-1}}},
\end{align}
where in (a) we used the arithmetic-geometric mean inequality, and in (b) we divided both sides by $\alpha_{m, k}$ and took the $(k-1)$th root.
Further, setting $t = \pars{\frac{\epsilon}{\alpha_{m, k}}}^{\frac{1}{k-1}}$, we have for $\gamma > 0$
\begin{align}
    P_k(\epsilon)
    &\leq &
    \prob{A_k \cdot B_k \ge t}
    \stackrel{\text{(c)}}{\leq}
    \prob{\pars{A_k - \gamma} B_k \geq \frac{t}{2}}
     +
    \prob{B_k \geq \frac{t}{2\gamma}}
    \\
    &\stackrel{\text{(d)}}{\leq} &
    \inf_{\tau > 0} \curls{
    \prob{A_k - \gamma \geq \frac{\tau}{2}}
    +
    \prob{B_k \geq \frac{t}{\tau}}}
     +
    \prob{B_k \geq \frac{t}{2\gamma}}, \label{eq:rfsf:union bound}
\end{align}
where in (c) we added and subtracted $\gamma B_k$ and applied a union bound, while in (d) we combined a union bound with the relation $\{XY \geq \epsilon\} \subseteq \{X \geq \tau\} \bigcup \{Y \geq \epsilon / \tau\}$ which holds for any $\tau > 0$.

Our aim is now to obtain good probabilistic bounds on $A_k$ and $B_k$ to use in \eqref{eq:rfsf:union bound} with the specific choice of $\gamma = \expe{A_k} = \sigma_\Lambda^2$.

\emph{Bounding $A_k$.}
By the inequality between the spectral and Frobenius norms, we have
\begin{align}
    \sum_{p=1}^{k-1} \norm{\bW^{(p)}}_2^2 \leq \sum_{p=1}^{k-1} \norm{\bW^{(p)}}_F^2 = \sum_{i=1}^d \sum_{p=1}^{k-1} \sum_{j=1}^{\dimRFF} \pars{w_{i, j}^{(p)}}^2.
\end{align}
Now note that $w_{i,j}^{(p)}$ are $\iid$ copies of the $i$\textsuperscript{th} marginal of $\Lambda$ for all $j \in \bracks{\dimRFF}$ and $p \in \bracks{k-1}$, so that via \eqref{eq:rfsf:rfsf_approx_cond} $\pars{w_{i,j}^{(p)}}^2$ satisfies the Bernstein moment condition
\begin{align}
    \expe{\pars{w_{i,j}^{(p)}}^{2k}} \leq \frac{k! S^2 R^{k-2}}{2} \quad \text{for all} \spc i \in [d], j \in [\dimRFF], p \in [m], k \geq 2.
\end{align}
Hence, we may apply the Bernstein inequality from Theorem \ref{thm:bernstein_onetail} so that for $i \in \bracks{d}$
\begin{align}
    \prob{\frac{1}{\dimRFF(k-1)} \sum_{p=1}^{k-1}\sum_{j=1}^{\dimRFF} \pars{w_{i, j}^{(p)}}^2 - \sigma_i^2 \geq \epsilon}
    \leq
    \exp\pars{\frac{-\dimRFF(k-1) \epsilon^2}{2(S^2 + R\epsilon)}}, \label{eq:rfsf:w_coord_prob_bound}
\end{align}
where  $\sigma_i^2 = \bbE_{\bw \sim \Lambda}\bracks{w_{i}^2}$.
Combining these bounds for all $i \in \bracks{d}$ and denoting $\sigma_\Lambda^2 = \sum_{i=1}^d \sigma_i^2$,
\begin{align}
    \prob{A_k - \sigma_\Lambda^2\geq \epsilon}
    &=
    \prob{\frac{1}{\dimRFF(k-1)}\sum_{i=1}^d \sum_{p=1}^{k-1} \sum_{j=1}^{\dimRFF} \pars{w_{i, j}^{(p)}}^2 - \sigma_\Lambda^2 \geq \epsilon}
    \\
    &\leq
    \sum_{i=1}^d \prob{\frac{1}{\dimRFF(k-1)}\sum_{p=1}^{k-1} \sum_{j=1}^{\dimRFF} \pars{w_{i, j}^{(p)}}^2 - \sigma_i^2 \geq \frac{\epsilon}{d}}
    \\
    &\leq
    d \exp\pars{\frac{-\dimRFF(k-1) \pars{\frac{\epsilon}{d}}^2}{2\pars{S^2 + R\frac{\epsilon}{d}}}}. \label{eq:rfsf:w_full_prob_bound}
\end{align}
Hence, we have the required probabilistic bound for the term in \eqref{eq:rfsf:union bound} containing $A_k$.

\emph{Bounding $B_k$.}
    One can bound the spectral norm by the max norm so that
    \begin{align}
        B_k^{k-1} = \sup_{\bs, \bt \in \cX} \norm{\partial_{\bs, \bt} \rffkernel_{k}(\bs, \bt) - \partial_{\bs,\bt} \kernel(\bs, \bt)}_2
        &\leq
        \sup_{\bs, \bt \in \cX} \norm{\partial_{\bs, \bt} \rffkernel_{k}(\bs, \bt) - \partial_{\bs,\bt} \kernel(\bs, \bt)}_{\max}
        \\
        &= \max_{i,j=1,\dots, d} \sup_{\bs, \bt \in \cX} \abs{\frac{\partial^2 \rffkernel_{k}(\bs, \bt)}{\partial s_i \partial t_j} - \frac{\partial^2 \kernel(\bs, \bt)}{\partial s_i \partial t_j}} \label{eq:rfsf:rff_derivs_max}.
    \end{align}
    Let $i, j \in \bracks{d}$ and denote $E_{i,j} = \bbE_{\bw \sim \Lambda}\bracks{\abs{w_i w_j} \norm{\bw}_2}$ and $D_{i, j} = \sup_{\bz \in \cX_\Delta} \norm{\nabla \bracks{\partial^{\be_i, \be_j} \kernel(\bz)}}_2$, which are finite as previously shown. Due to Hölder's inequality (Lemma \ref{lem:holder}) and \eqref{eq:rfsf:rfsf_approx_cond},
    \begin{align}
        \expe{\abs{w_i w_j}^m} \leq \bbE^{1/2}\bracks{w_i^{2m}} \bbE^{1/2}\bracks{w_j^{2m}} < \infty.
    \end{align}
    Recall that $\kernel$ is $3$-times continuously differentiable, so that the conditions required by Theorem \ref{thm:rff_derivative_approx} are satisfied, that we now call to our aid in controlling the \RFF{} kernel derivatives,
    \begin{align}
        \prob{\sup_{\bs, \bt \in \cX} \abs{\frac{\partial^2 \rffkernel_{k}(\bs, \bt)}{\partial s_i \partial t_j} - \frac{\partial^2 \kernel(\bs, \bt)}{\partial s_i \partial t_j}}
        \geq \epsilon}
        \leq 16 C^\prime_{d, \cX, i, j} \epsilon^{-\frac{d}{d+1}} \exp\pars{\frac{-\dimRFF \epsilon^2}{4(d+1)(2S^2 + R\epsilon)}},
    \end{align}
    where we defined $C^\prime_{d, \cX, i, j} = \pars{\abs{\cX} (D_{i, j} + E_{i, j})}^\frac{d}{d+1}$. 
    Hence, noting that the max satisfies the relation $\{\max_i \xi_i \geq \epsilon\} = \bigcup_i \{\xi_i \geq \epsilon\}$
    and union bounding \eqref{eq:rfsf:rff_derivs_max} in probability, we get that
    \begin{align}
        \prob{B_k^{k-1} \geq \epsilon}
        &\leq
        \sum_{i,j=1}^d
        \prob{\sup_{\bs, \bt \in \cX} \abs{\frac{\partial^2 \rffkernel_{k}(\bs, \bt)}{\partial s_i \partial t_j} - \frac{\partial^2 \kernel(\bs, \bt)}{\partial s_i \partial t_j}} \geq \epsilon}
        \\
        &\leq
        16 C^\prime_{d, \cX} \epsilon^{-\frac{d}{d+1}} \exp\pars{\frac{-\dimRFF \epsilon^2}{4(d+1)(2S^2 + R\epsilon)}}, \label{eq:rfsf:rff_deriv_prob_bound}
    \end{align}
    where we denote $C^\prime_{d, \cX} = \sum_{i,j=1}^d C^\prime_{d, \cX, i, j} = \abs{\cX}^\frac{d}{d+1} \sum_{i,j=1}^d \pars{D_{i,j} + E_{i,j}}^\frac{d}{d+1}$.

\emph{Putting it together.} Now that we  have our bounds for $A_k$ and $B_k$, we put everything together, that is, plug the bounds \eqref{eq:rfsf:w_full_prob_bound} and \eqref{eq:rfsf:rff_deriv_prob_bound} into \eqref{eq:rfsf:union bound}, so that we get
{\small\begin{align}
    P_k(\epsilon)
    \leq&
    \inf_{\tau > 0} \curls{
    d \exp\pars{\frac{-\dimRFF(k-1) \pars{\frac{\tau}{2d}}^2}{2\pars{S^2 + R\frac{\tau}{2d}}}}
    +
    16 C^\prime_{d, \cX} \pars{\frac{\tau}{t}}^\frac{d(k-1)}{d+1} \exp\pars{\frac{-\dimRFF \pars{\frac{t}{\tau}}^{2(k-1)}}{4(d+1) \pars{2S^2 + R\pars{\frac{t}{\tau}}^{k-1}}}}
    }
    \\
    & +
    16 C^\prime_{d, \cX} \pars{\frac{2\sigma_\Lambda^2}{t}}^\frac{d(k-1)}{d+1}  \exp\pars{\frac{-\dimRFF \pars{\frac{t}{2\sigma_\Lambda^2}}^{2(k-1)}}{4(d+1) \pars{2 S^2 + R\pars{\frac{t}{2\sigma_\Lambda^2}}^{k-1}}}}
    \\
    \stackrel{\text{(e)}}{\leq} &
    d \exp\pars{\frac{-\dimRFF(k-1) \pars{\frac{t^{\frac{k-1}{k}}}{2d}}^2}{2\pars{S^2 + R\frac{t^\frac{k-1}{k}}{2d}}}}
    +
    16 C^\prime_{d, \cX} \pars{\frac{1}{t}}^\frac{d(k-1)}{(d+1)k} \exp\pars{\frac{-\dimRFF t^\frac{2(k-1)}{k}}{4(d+1) \pars{2S^2 + Rt^\frac{k-1}{k}}}}
    \\
    & +
    16 C^\prime_{d, \cX} \pars{\frac{2\sigma_\Lambda^2}{t}}^\frac{d(k-1)}{d+1}  \exp\pars{\frac{-\dimRFF \pars{\frac{t}{2\sigma_\Lambda^2}}^{2(k-1)}}{4(d+1) \pars{2 S^2 + R\pars{\frac{t}{2\sigma_\Lambda^2}}^{k-1}}}}
    \\
    \stackrel{\text{(f)}}{=} &
    d \exp\pars{\frac{-\dimRFF(k-1) \pars{\frac{\pars{\epsilon/\alpha_{m,k}}^\frac{1}{k}}{2d}}^2}{2\pars{S^2 + R\frac{\pars{\epsilon/\alpha_{m, k}}^\frac{1}{k}}{2d}}}}
    +
    16 C^\prime_{d, \cX} \pars{\frac{\alpha_{m, k}}{\epsilon}}^\frac{d}{(d+1)k} \exp\pars{\frac{-\dimRFF \pars{\frac{\epsilon}{\alpha_{m,k}}}^\frac{2}{k}}{4(d+1) \pars{2S^2 + R\pars{\frac{\epsilon}{\alpha_{m,k}}}^\frac{1}{k}}}}
    \\
    & +
    16 C^\prime_{d, \cX} \pars{\frac{\alpha_{m,k}\pars{2\sigma_\Lambda^2}^{k-1}}{\epsilon}}^\frac{d}{d+1} \exp\pars{\frac{-\dimRFF \pars{\frac{\epsilon}{\alpha_{m,k} \pars{2\sigma_\Lambda^2}^{k-1}}}^2}{4(d+1) \pars{2 S^2 + R\frac{\epsilon}{\alpha_{m,k} \pars{2\sigma_\Lambda^2}^{k-1}}}}},
\end{align}}
where (a) follows from substituting \eqref{eq:rfsf:rff_deriv_prob_bound} and \eqref{eq:rfsf:w_full_prob_bound} into \eqref{eq:rfsf:union bound} with the choice of $\gamma = \sigma_\Lambda^2$, (b) from choosing $\tau = t^\frac{k-1}{k}$, and (c) from putting back $t = (\epsilon/\alpha_{m, k})^\frac{1}{k-1}$.

Note that the previous applies for all $k \geq 2$. For $k=1$, we have by \eqref{eq:rfsf:rff_deriv_prob_bound} that
\begin{align}
    P_1(\epsilon) &= \prob{\sup_{\bs, \bt \in \cX} \norm{\partial_{\bs, \bt} \rffkernel_{1}(\bs, \bt) - \partial_{\bs, \bt} \kernel(\bs, \bt)}_2 \geq \frac{\epsilon}{\alpha_{m, 1}}}
    \\
    &\leq
    16 C^\prime_{d, \cX} \pars{\frac{\alpha_{m, 1}}{\epsilon}}^\frac{d}{d+1} \exp\pars{\frac{-\dimRFF \pars{\frac{\epsilon}{\alpha_{m, 1}}}^2}{4(d+1)\pars{2S^2 + R\frac{\epsilon }{\alpha_{m, 1}}}}}.
\end{align}
\emph{Combining and simplifying.} We can now combine the bounds for $P_1, \dots, P_m$ into \eqref{eq:rfsf:rfsf_bound_rhs},
{\small\begin{align}
    \bbP & \bracks{\sup_{\substack{\bx, \by \in \Seq(\cX)\\\norm{\bx}_\onevar,\norm{\by}_\onevar \leq V}} \abs{\rffsigkernel[m](\bx, \by) - \sigkernel[m](\bx, \by)} \geq \epsilon} \leq \sum_{k=1}^m P_k\pars{\frac{\epsilon}{m}}
    \\
    \stackrel{\text{(g)}}{\leq} &
    2^\frac{1}{d+1} 8 C^\prime_{d, \cX}\sum_{k=1}^{m} \pars{\frac{2^k {\sigma_\Lambda^2}^{k-1} m \alpha_{m, k}}{\epsilon}}^\frac{d}{d+1}
    \exp\pars{- \frac{\dimRFF}{2(d+1)} \cdot \frac{\pars{\frac{\epsilon}{2^k \sigma_\Lambda^{2(k-1)} m\alpha_{m, k}}}^2}{S^2 + R \frac{\epsilon}{2^k \sigma_\Lambda^{2(k-1)} m\alpha_{m, k} }}}
    \\
    & +
    2^\frac{1}{d+1} 8 C^\prime_{d, \cX} \sum_{k=2}^m \pars{\frac{2^k m\alpha_{m, k}}{\epsilon}}^\frac{d}{(d+1)k}
    \exp\pars{- \frac{\dimRFF}{2(d+1)} \cdot \frac{\pars{\frac{\epsilon}{2^k m\alpha_{m,k}}}^\frac{2}{k}}{ S^2 + R\pars{\frac{\epsilon}{2^k m\alpha_{m,k}}}^\frac{1}{k}}}
    \\
    & +
    d \sum_{k=2}^M
    \exp\pars{-\frac{\dimRFF(k-1)}{2} \frac{\pars{\frac{\epsilon}{2^k d^k m\alpha_{m,k}}}^\frac{2}{k}}{S^2 + R \pars{\frac{\epsilon}{2^k d^k m\alpha_{m, k}}}^\frac{1}{k}}}
    \\
    \stackrel{\text{(h)}}{\leq} &
    2^\frac{1}{d+1} 8 C^\prime_{d, \cX} \sum_{k=1}^{m} \pars{\frac{2^k \pars{\sigma_\Lambda^2 \vee d}^k m \alpha_{m, k}}{\epsilon}}^\frac{d}{d+1}
    \exp\pars{- \frac{\dimRFF}{2(d+1)} \cdot \frac{\pars{\frac{\epsilon}{2^k \pars{\sigma_\Lambda^2 \vee d}^k m\alpha_{m, k}}}^2}{S^2 + R \frac{\epsilon}{2^k \pars{\sigma_\Lambda^2 \vee d}^k m\alpha_{m, k} }}}
    \\
    & +
    2^\frac{1}{d+1} 8 C^\prime_{d, \cX} \sum_{k=1}^m \pars{\frac{2^k \pars{\sigma_\Lambda^2 \vee d}^k m\alpha_{m, k}}{\epsilon}}^\frac{d}{(d+1)k}
    \exp\pars{- \frac{\dimRFF}{2(d+1)} \cdot \frac{\pars{\frac{\epsilon}{2^k \pars{\sigma_\Lambda^2 \vee d}^k m\alpha_{m,k}}}^\frac{2}{k}}{ S^2 + R\pars{\frac{\epsilon}{2^k \pars{\sigma_\Lambda^2 \vee d}^k m\alpha_{m,k}}}^\frac{1}{k}}}
    \\
    & +
    d \sum_{k=1}^m
    \exp\pars{-\frac{\dimRFF}{2(d+1)} \frac{\pars{\frac{\epsilon}{2^k \pars{\sigma_\Lambda^2 \vee d}^k m\alpha_{m,k}}}^\frac{2}{k}}{S^2 + R \pars{\frac{\epsilon}{2^k \pars{\sigma_\Lambda^2 \vee d}^k m\alpha_{m, k}}}^\frac{1}{k}}}
    \\
    \stackrel{\text{(f)}}{\leq} &
    2^\frac{1}{d+1} 8 C^\prime_{d, \cX} \sum_{k=1}^{m} \pars{\frac{\beta_{d, m, V}}{\epsilon}}^\frac{d}{d+1}
    \exp\pars{- \frac{\dimRFF}{2(d+1)} \frac{\pars{\frac{\epsilon}{\beta_{d, m, V}}}^2}{S^2 + R \frac{\epsilon}{\beta{d, m, V}}}}
    \\
    & +
    2^\frac{1}{d+1} 8 C^\prime_{d, \cX} \sum_{k=1}^m \pars{\frac{\beta_{d, m, V}}{\epsilon}}^\frac{d}{(d+1)k}
    \exp\pars{- \frac{\dimRFF}{2(d+1)} \frac{\pars{\frac{\epsilon}{\beta_{d, m, V}}}^\frac{2}{k}}{S^2 + R\pars{\frac{\epsilon}{\beta_{d, m, V}}}^\frac{1}{k}}}
    \\
    & +
    d \sum_{k=1}^m
    \exp\pars{-\frac{\dimRFF}{2(d+1)} \frac{\pars{\frac{\epsilon}{\beta_{d, m, V}}}^\frac{2}{k}}{S^2 + R\pars{\frac{\epsilon}{\beta_{d, m, V}}}^\frac{1}{k}}},
\end{align}}
where (g) follows from rearranging the expressions from (f), while (h) from unifying the coefficients and that $1 \leq d \leq \max(\sigma_\Lambda^2, d)$ and $f(x) = x^2 / (a + bx)$ is monotonically increasing in $x$ on the positive half-line for $a,b > 0$, while (f) from $\alpha_{m, k} = V^{2m} L^{2(m-k)} / ((k-1)!)^2 \leq (VL)^{2m}$, using that $f(x)$ is increasing, and defining $\beta_{d, m, V} = m\pars{2 V^{2} (L^2 \vee 1) (\sigma_\Lambda^2 \vee d)}^m$.

\emph{Conclusion.}
Finally, we split the bound into two cases: the first case is if the error is big, i.e. $\epsilon \geq \beta_{d, m, V} = m\pars{2 V^{2} (L^2 \vee 1) (\sigma_\Lambda^2 \vee d)}^m$, when we decrease all the exponents to $\frac{1}{m}$,
\begin{align}
    \bbP & \bracks{\sup_{\substack{\bx, \by \in \Seq(\cX)\\\norm{\bx}_\onevar, \norm{\by}_\onevar \leq V}} \abs{\rffsigkernel[m](\bx, \by) - \sigkernel[m](\bx, \by)} \geq \epsilon}
    \\
    \leq &
    m \cdot \pars{2^\frac{1}{d+1} 16 C^\prime_{d, \cX} \pars{\frac{\beta_{d, m, V}}{\epsilon}}^{\frac{d}{(d+1)m}} + d}
    \exp\pars{-\frac{\dimRFF}{2(d+1)} \frac{\pars{\frac{\epsilon}{\beta_{d, m, V}}}^\frac{2}{m}}
    {S^2 + R \pars{\frac{\epsilon}{\beta_{d, m, V}}}^{\frac{1}{m}}}}
\end{align}
The other when the error is small, i.e. $\epsilon < \beta_{d, m, V}$, when we increase all the exponents to $1$
\begin{align}
    \bbP & \bracks{\sup_{\substack{\bx, \by \in \Seq(\cX)\\\norm{\bx}_\onevar, \norm{\by}_\onevar \leq V}} \abs{\rffsigkernel[m](\bx, \by) - \sigkernel(\bx, \by)} \geq \epsilon}
    \\
    \leq &
    m \cdot \pars{2^\frac{1}{d+1} 16 C^\prime_{d, \cX} \pars{\frac{\beta_{d, m, V}}{\epsilon}}^\frac{d}{d+1} + d}
    \exp\pars{-\frac{\dimRFF}{2(d+1)} \frac{\pars{\frac{\epsilon}{\beta_{d, m, V}}}^{2}}
    {S^2 + R \pars{\frac{\epsilon}{\beta_{d, m, V}}}}}.
\end{align}
The claimed estimate follows by denoting $C_{d, \cX} = 2^\frac{1}{d+1}16 C^\prime_{d, \cX}$ and simplifying.
\end{proof}

Next, we prove Theorem \ref{thm:main2} to show an approximation bound for the \RFSFD{} kernel.

\begin{theorem}[Concentration inequality for \RFSFD{} kernel] \label{thm:rfsf_dp_approx}
    Let $\kernel: \bbR^d \times \bbR^d \to \bbR$ be a continuous, bounded, translation-invariant kernel with spectral measure $\Lambda$, which satisfies for some $S, R > 0$ that
    \begin{align} \label{eq:rfsf:rfsf_dp_approx_cond}
        \bbE_{\bw \sim \Lambda}\bracks{\abs{w_i}^{2k}} \leq \frac{k! S^2 R^{k-2}}{2} \quad \text{for all} \spc i \in [d] \spc \text{and} \spc k \geq 2.
    \end{align}

    Then, for signature level $m \in \mathbb{Z}_+$ and $\bx, \by \in \Seq(\cX)$, it holds for $\epsilon > 0$ that:
    \begin{align}
    \bbP\bracks{\abs{\rffsigkernelDP[m](\bx, \by) - \sigkernel[m](\bx, \by)} \geq \epsilon}
        \leq
        2\exp\pars{-\frac{1}{4}\min
        \curls{\begin{array}{c}
        \pars{\frac{\sqrt{\dimRFF} \epsilon}{2C_{d, m}\norm{\bx}_\onevar^m \norm{\by}_\onevar^m}}^2,
        \\
        \pars{\frac{\dimRFF \epsilon}{\sqrt{8}C_{d, m} \norm{\bx}_\onevar^m \norm{\by}_\onevar^m}}^{\frac{1}{m}}
        \end{array}}
        },
    \end{align}
    where the absolute constant $C_{d, m} > 0$ satisfies that
    \begin{align}
    C_{d, m} \leq
    \sqrt{8} e^4 (2\pi)^{1/4} e^{1/24} (4e^3/m)^m \pars{\pars{2d\max(S, R)}^m + \pars{L^2/\ln 2}^m}.
    \end{align}
\end{theorem}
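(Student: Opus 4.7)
The strategy is to recognize that the \RFSFD{} kernel decomposes as an average of $\dimRFF$ independent and identically distributed copies, and then to control the tail of this sum via the $\alpha$-exponential concentration tools from Appendix~\ref{apx:prob}. Concretely, writing
\begin{align}
\rffsigkernelDP[m](\bx,\by) = \frac{1}{\dimRFF}\sum_{q=1}^{\dimRFF} Z_q, \quad Z_q = \sum_{\substack{\bi \in \Delta_m(\len{\bx})\\\bj \in \Delta_m(\len{\by})}} \prod_{p=1}^{m} \delta_{i_p,j_p}\hat\kernel_{p,q}(\bx_{i_p},\by_{j_p}),
\end{align}
I would first observe that the $Z_q$ are i.i.d.\ with common mean $\sigkernel[m](\bx,\by)$ by unbiasedness of a single-sample \RFF{} kernel. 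Each $Z_q$ has exactly the form of a level-$m$ \RFSF{} kernel with internal sample size $\hat d = 1$ and weights $\bw_q^{(1)},\ldots,\bw_q^{(m)}$, so Corollary~\ref{lem:krffsig_bound} gives the pointwise bound
\begin{align}
|Z_q| \leq \frac{\norm{\bw_q^{(1)}}_2^2 \cdots \norm{\bw_q^{(m)}}_2^2}{(m!)^2} \norm{\bx}_\onevar^m \norm{\by}_\onevar^m.
\end{align}

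The central step is to convert this deterministic bound into an Orlicz-norm bound on $Z_q$. By hypothesis~\eqref{eq:rfsf:rfsf_dp_approx_cond} each $w_i^2$ satisfies a Bernstein moment condition with constants $(S,R)$, so Lemma~\ref{lem:alpha_bernstein_cond} yields $\norm{w_i^2}_{\Psi_1} \leq 2\max(S,R)$. The generalised triangle inequality (Lemma~\ref{lem:alpha_exp_triangle}) with $\alpha=1$ then gives $\norm{\norm{\bw_q^{(p)}}_2^2}_{\Psi_1} \leq 2d\max(S,R)$ for every $p$. Because the $\bw_q^{(p)}$ are independent across $p$, the Hölder inequality for Orlicz norms (Lemma~\ref{lem:alpha_exp_holder}) upgrades this to
\begin{align}
\norm{Z_q}_{\Psi_{1/m}} \leq \frac{\norm{\bx}_\onevar^m \norm{\by}_\onevar^m}{(m!)^2} (2d\max(S,R))^m.
\end{align}
For the centered variables I invoke the triangle inequality once more, bounding the deterministic mean $|\sigkernel[m](\bx,\by)|$ via Corollary~\ref{lem:ksig_bound} (using that $\kernel$ is $L$-Lipschitz by Example~\ref{example:2ndmoment_lip}) and noting that $\norm{c}_{\Psi_{1/m}} = |c|/(\ln 2)^m$ for any constant $c$. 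With $C_{1/m} = 2^m$, this yields
\begin{align}
\norm{Z_q - \bbE Z_q}_{\Psi_{1/m}} \leq \frac{(2\norm{\bx}_\onevar \norm{\by}_\onevar)^m}{(m!)^2}\pars{(2d\max(S,R))^m + (L^2/\ln 2)^m} =: M_{1/m}.
\end{align}

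With this Orlicz-norm bound in hand, I would apply the $\alpha$-exponential concentration inequality \eqref{eq:rfsf:alpha_ineq2} from Theorem~\ref{thm:alpha_subexp_concentration} with $\alpha = 1/m$ to the centered sum $\sum_{q=1}^{\dimRFF} (Z_q - \bbE Z_q)$, replacing $\epsilon$ by $\dimRFF\epsilon$ and simplifying to obtain the two-regime bound in the statement. The constant $C_{d,m,\bx,\by}$ then matches $\sqrt{8}e^4(2\pi)^{1/4}e^{1/24}(2e/\alpha)^{1/\alpha} M_{1/m}$ from Theorem~\ref{thm:alpha_subexp_concentration}. Finally, to put the constant in the form stated, I would absorb the factor $(2em)^m/(m!)^2$ using Stirling's inequality $(m!)^2 \geq (m/e)^{2m}$, which gives $(2em)^m \cdot 2^m /(m!)^2 \leq (4e^3/m)^m$ and completes the simplification.

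The main obstacle is the careful accounting of constants through the Orlicz-norm bookkeeping: in particular, tracking how the factor $C_\alpha = 2^{1/\alpha} = 2^m$ from the quasi-norm triangle inequality combines with the degree-$m$ Hölder step and with Stirling's estimate so that the final constant takes the clean factorised form claimed in the theorem. A subtler point is justifying that Hölder's inequality for Orlicz norms applies here: this needs the $m$ factors $\norm{\bw_q^{(p)}}_2^2$ to be truly independent for distinct $p$, which holds by construction of the \RFSFD{} map since the weights $\bw_q^{(1)},\ldots,\bw_q^{(m)}$ are drawn independently across the tensor-degree index $p$.
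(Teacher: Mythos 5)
Your proposal is correct and follows essentially the same route as the paper's proof: the same decomposition of $\rffsigkernelDP[m]$ into $\dimRFF$ i.i.d.\ unit-sample \RFSF{} copies, the same chain of Corollary~\ref{lem:krffsig_bound}, Lemma~\ref{lem:alpha_bernstein_cond}, the Orlicz triangle and Hölder inequalities, Corollary~\ref{lem:ksig_bound} for centering, and finally Theorem~\ref{thm:alpha_subexp_concentration} with $\alpha=1/m$ followed by Stirling's bound $(m!)^{-1}\leq(e/m)^m$ to reach the stated constant. The constant bookkeeping you outline, including the $2^m$ quasi-norm factor and the $(2em)^m$ factor combining into $(4e^3/m)^m$, matches the paper's computation exactly.
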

\begin{proof}
    Let $\rffsigkernelhat[m]^{(1)}, \dots, \rffsigkernelhat[m]^{(\dimRFF)}$ be independent copies of the \RFSF{} kernel, each with internal \RFF{} sample size $\hat d = 1$, such that $\rffsigkernelDP[m] = \frac{1}{\dimRFF} \sum_{k=1}^{\dimRFF} \rffsigkernelhat[m]^{(k)}$.
    Our goal is to call Theorem \ref{thm:alpha_subexp_concentration} with $\alpha = \frac{1}{m}$, and therefore, we compute an upper bound on the $\Psi_{1/m}$-norm of $\rffsigkernel[m]^{(k)}(\bx, \by) - \sigkernel[m](\bx, \by)$ for all $k \in [\dimRFF]$; for a definition of the $\alpha$-exponential norm, see Definition~\ref{def:alpha_subexp_norm}.

    By Lemma \ref{lem:1var_rffsig_norm}, it holds for any $\bx, \by \in \Seq(\cX)$ and $k \in [m]$ that
    \begin{align}
        \abs{\rffsigkernelhat[m]^{(k)}(\bx, \by)} \leq \frac{\pars{\norm{\bx}_\onevar \norm{\by}_\onevar}^m}{(m!)^2} \norm{\bw_k^{(1)}}_2^2 \cdots \norm{\bw_k^{(m)}}_2^2,
    \end{align}
    where $\bw_k^{(1)}, \dots, \bw_k^{(m)} \stackrel{\iid}{\sim} \Lambda$ are the random weights that parametrize $\rffsigkernelhat[m]^{(k)}$ for all $k \in [\dimRFF]$.
    Now, calling Lemma \ref{lem:alpha_bernstein_cond} with $\alpha=1$ yields that, due to \eqref{eq:rfsf:rfsf_dp_approx_cond}, the following holds:
    \begin{align}
        \norm{{w^{(p)}_{k, i}}^2}_{\Psi_1} \leq 2\max(S, R) \quad \text{for all} \spc i \in [d], k \in [\dimRFF], p \in [m].
    \end{align}
    Note that for $\alpha = 1$, $\norm{\cdot}_{\Psi_\alpha}$ satisfies the triangle inequality (see Lemma \ref{lem:alpha_exp_triangle}), and hence
    \begin{align} \label{eq:rfsf:bw_alpha_norm_bound}
        \norm{\norm{\bw_k^{(p)}}_2^2}_{\Psi_1} \leq \sum_{i=1}^d \norm{{w_{k,i}^{(p)}}^2}_{\Psi_1} \leq 2d \max(S, R) \quad \text{for all} \spc k \in [\dimRFF], p \in [m].
    \end{align}
    As $\norm{\cdot}_{\Psi_\alpha}$ is positive homogenous and satisfies a Hölder-type inequality (see Lemma \ref{lem:alpha_exp_holder}):
    \begin{align}
        \norm{\rffsigkernelhat[m]^{(k)}(\bx, \by)}_{\Psi_{1/m}} &\stackrel{\text{(a)}}{\leq} \frac{\pars{\norm{\bx}_\onevar \norm{\by}_\onevar}^m}{(m!)^2} \norm{\norm{\bw_k^{(1)}}_2^2}_{\Psi_1} \cdots \norm{\norm{\bw_k^{(m)}}_2^2}_{\Psi_1}
        \\
        &\stackrel{\text{(b)}}{\leq} \frac{\pars{2d\norm{\bx}_\onevar \norm{\by}_\onevar \max(S, R)}^m}{(m!)^2} \quad\text{for all}\spc k \in [\dimRFF], \label{eq:rfsf:rfsf_1dim_alpha_norm}
    \end{align}
    where in (a) we used Corollary \ref{lem:krffsig_bound}, in (b) we used \eqref{eq:rfsf:rfsf_1dim_alpha_norm}.
    We are almost ready to use Theorem \ref{thm:alpha_subexp_concentration}, but it requires the $\norm{\cdot}_{\Psi_{1/m}}$ bound in terms of centered random variables.
    Although $\norm{\cdot}_{\Psi_\alpha}$ does not satisfy the triangle inequality for $\alpha \in (0, 1)$, it obeys that (see \cite[Lemma~A.3.]{gotze2021concentration}) $\norm{X + Y}_{\Psi_\alpha}\leq 2^{1/\alpha}\pars{\norm{X}_{\Psi_\alpha} + \norm{Y}_{\Psi_\alpha}}$ for any random variables $X$ and $Y$. For a constant $c \in \bbR$, we have $\norm{c}_{\Psi_{1/m}} = \frac{\abs{c}}{\ln^m 2}$, and hence by Lemma \ref{lem:ksig_bound} we have that
    \begin{align}
        \norm{\sigkernel[m](\bx, \by)}_{\Psi_{1/m}} \leq \frac{\pars{L^2\norm{\bx}_\onevar \norm{\by}_\onevar / \ln2}^m}{(m!)^2},
    \end{align}
    where $L = \norm{\bbE_{\bw \sim \Lambda}\bracks{\bw\bw^\top}}_2$ is the Lipschitz constant of the kernel $\kernel: \cX \times \cX \to \bbR$. This gives
    \begin{align}
        \norm{\rffsigkernelhat[m]^{(k)}(\bx, \by) - \sigkernel[m](\bx, \by)}_{\Psi_{1/m}}
        &\leq
        2^m\pars{\norm{\rffsigkernelhat[m]^{(k)}(\bx, \by)}_{\Psi_{1/m}} + \norm{\sigkernel[m](\bx, \by)}_{\Psi_{1/m}}}
        \\
        &\leq
        \frac{\pars{2\norm{\bx}_\onevar\norm{\by}_\onevar}^m}{(m!)^2}\pars{\pars{2d\max(S, R)}^m + \pars{L^2/\ln 2}^m}.
    \end{align}
    Finally, we have the required Orlicz norm bound for invoking Theorem \ref{thm:alpha_subexp_concentration}, so that we get
    \begin{align}
        \bbP&\bracks{\abs{\rffsigkernelDP[m](\bx, \by) - \sigkernel[m](\bx, \by)} \geq \epsilon}
        \\&\leq
        2\exp\pars{-\frac{1}{4}\min\curls{
        \pars{\frac{\sqrt{\dimRFF} \epsilon}{2C_{d, m}}}^2,
        \pars{\frac{\dimRFF \epsilon}{\sqrt{8}C_{d, m}}}^{\frac{1}{m}}}},
    \end{align}
    where the constant $C_{d, m} > 0$ is defined as
    \begin{align}
    C_{d, m}
    =
    \sqrt{8} e^4 (2\pi)^{1/4} e^{1/24} \frac{(4em \norm{\bx}_\onevar \norm{\by}_\onevar )^m}{(m!)^2} \pars{\pars{2d\max(S, R)}^m + \pars{L^2/\ln 2}^m},
    \end{align}
    and invoking Stirling's approximation $\frac{1}{m!} \leq \pars{\frac{e}{m}}^m$ gives the stated result.
\end{proof}

Now, we prove the analogous result for $\rffsigkernelTRP[m]$.
\begin{theorem}[Concentration inequality for \RFSFT{} kernel] \label{thm:rfsf_trp_approx}
    Let $\kernel: \bbR^d \times \bbR^d \to \bbR$ be a continuous, bounded, translation-invariant kernel with spectral measure $\Lambda$, which satisfies for some $S, R > 0$ that
    \begin{align} \label{eq:rfsf:rfsf_trp_approx_cond}
        \bbE_{\bw \sim \Lambda}\bracks{\abs{w_i}^{2k}} \leq \frac{k! S^2 R^{k-2}}{2} \quad \text{for all} \spc i \in [d] \spc \text{and} \spc k \geq 2.
    \end{align}
    Then, for the level-$m$ \RFSFT{} kernel as defined in \eqref{eq:rfsf:rffsigtrpkernel_def}, we have for $\bx, \by \in \Seq(\cX)$ and $\epsilon > 0$
    \begin{align}
        \bbP\bracks{\abs{\rffsigkernelTRP[m](\bx, \by) - \rffsigkernel[m](\bx, \by)} \geq \epsilon}
        \leq
        C_{d, \Lambda}
        \exp\pars{- \pars{\frac{m^2 \dimTRP^{\frac{1}{2m}} \epsilon^{\frac{1}{m}}}{2\sqrt{2}e^3 R \norm{\bx}_\onevar \norm{\by}_\onevar}}^\frac{1}{2}},
    \end{align}
    where $C_{d, \Lambda} = 2\pars{1 + \frac{S}{2R} + \frac{S^2}{4R^2}}^d$.
\end{theorem}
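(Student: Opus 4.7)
The plan is a two-level concentration argument: first condition on the RFF weights and apply hypercontractivity for Gaussian polynomials, then integrate out the weight randomness using a Bernstein-type moment generating function bound.

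\textbf{Step 1 (Conditional structure).} The key observation is that, conditional on the RFF weights $\bW = (\bW^{(1)}, \dots, \bW^{(m)})$, the estimator can be written as
\begin{align}
\rffsigkernelTRP[m](\bx, \by) = \frac{1}{\dimRFF}\sum_{q=1}^{\dimRFF} Z_q,\qquad Z_q = \inner{\bP_q}{\rffsig[m](\bx)}\inner{\bP_q}{\rffsig[m](\by)},
\end{align}
with $\bP_q = \bp_q^{(1)} \otimes \cdots \otimes \bp_q^{(m)}$ a rank-$1$ Gaussian tensor. The $Z_q$ are i.i.d.\ conditional on $\bW$, each a degree-$2m$ polynomial in independent standard Gaussians, and by the isometry property of tensorised Gaussian projections $\bbE[Z_q \given \bW] = \rffsigkernel[m](\bx, \by)$. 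So the error reduces to a centred sample average of $\dimRFF$ i.i.d.\ degree-$2m$ Gaussian polynomials.

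\textbf{Step 2 (Conditional hypercontractive bound).} Theorem \ref{thm:hyper_concentration} applied to this sum gives
\begin{align}
\bbP\bracks{\abs{\rffsigkernelTRP[m](\bx,\by)-\rffsigkernel[m](\bx,\by)} \geq \epsilon \given \bW} \leq 2\exp\pars{-\frac{\epsilon^{1/m}}{2\sqrt{2}\,e\,(\bbV[Z_1 \given \bW]/\dimRFF)^{1/(2m)}}}.
\end{align}
To bound $\bbV[Z_1 \given \bW]$, I expand $\bbE[Z_1^2 \given \bW]$ using the Isserlis/Wick formula to pair the four Gaussian factors associated with each mode $p$; since each mode gives $3$ pairings, Cauchy--Schwarz then produces at most $3^m$ surviving terms, all bounded by $\norm{\rffsig[m](\bx)}^2\norm{\rffsig[m](\by)}^2$, so $\bbV[Z_1 \given \bW] \leq 3^m \norm{\rffsig[m](\bx)}^2 \norm{\rffsig[m](\by)}^2$. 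Lemma \ref{lem:1var_rffsig_norm} controls these feature norms by the operator norms of the RFF matrices, giving, after Stirling's bound $(m!)^{2/m} \gtrsim m^2/e^2$, a conditional inequality
\begin{align}
\bbP[\cdot \given \bW] \leq 2\exp\pars{-\alpha/Y},\quad \alpha = \frac{m^2\,\dimRFF^{1+1/(2m)}\,\epsilon^{1/m}}{C\,\norm{\bx}_\onevar\norm{\by}_\onevar},\quad Y = \prod_{p=1}^{m}\norm{\bW^{(p)}}_2^{2/m}.
\end{align}

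\textbf{Step 3 (Integration over weights).} To integrate over $\bW$ I use the elementary decoupling $\exp(-a/Y) \leq \exp(-2\sqrt{a\lambda})\exp(\lambda Y)$ for any $\lambda>0$, a consequence of the AM--GM inequality $a/Y + \lambda Y \geq 2\sqrt{a\lambda}$; this is the source of the square-root structure in the stated bound. Taking expectations gives $\bbE[\exp(-\alpha/Y)] \leq \exp(-2\sqrt{\alpha\lambda})\,\bbE[\exp(\lambda Y)]$, so the task reduces to bounding the MGF at an appropriate $\lambda$. The Bernstein moment condition \eqref{eq:rfsf:rfsf_trp_approx_cond} together with the MGF computation from the proof of Theorem \ref{thm:bernstein_onetail} (cf.\ equation \eqref{eq:rfsf:mgf_bound}) evaluated at $1/(2R)$ yields, for $\bw \sim \Lambda$,
\begin{align}
\bbE\bracks{\exp\pars{\norm{\bw}_2^2/(2R)}} \leq \pars{1 + \frac{S}{2R} + \frac{S^2}{4R^2}}^d,
\end{align}
which is exactly $C_{d,\Lambda}/2$. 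Selecting $\lambda$ as a suitable multiple of $1/(R\dimRFF)$ then balances against $\alpha$ to produce the announced $(1/2)$-subexponential tail.

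\textbf{Main obstacle.} The principal difficulty lies in Step 3: ensuring that the MGF $\bbE[\exp(\lambda Y)]$ is controlled by $\pars{1 + S/(2R) + S^2/(4R^2)}^d$ with exponent only $d$, rather than the crude $m d \dimRFF$ that would arise from a naive bound $Y \leq \frac{1}{m}\sum_p \norm{\bW^{(p)}}_F^2$ combined with independence across all $md\dimRFF$ coordinates of the weight matrices. Achieving the sharp exponent $d$ requires a finer argument---most plausibly exploiting the variational characterisation $\norm{\bW^{(p)}}_2^2 = \sup_{\norm{\bu}=1}\norm{\bW^{(p)\top}\bu}^2$ to reduce the moment problem to a single quadratic form $\norm{\bw}_2^2$, combined with a $\lambda$ scaling inversely with $\dimRFF$ so that the typical size $\norm{\bW^{(p)}}_2^2 \asymp \dimRFF$ is absorbed into an $O(1)$ exponent. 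Faithfully tracking the constants $2\sqrt{2}\,e^3$ and $R$ through the optimisation over $\lambda$ is an additional piece of delicate bookkeeping.
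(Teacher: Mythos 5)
Your proposal follows essentially the same route as the paper's proof: condition on the \RFF{} weights, verify conditional unbiasedness via the Gaussian isometry/Isserlis computation, bound the conditional variance through the Wick pairings, Cauchy--Schwarz and the random norm bound of Lemma \ref{lem:1var_rffsig_norm}, invoke the hypercontractivity bound of Theorem \ref{thm:hyper_concentration} conditionally, and then decouple $\exp(-\alpha/Y)\leq\exp(-2\sqrt{\alpha\lambda})\exp(\lambda Y)$ (the paper phrases this via the reverse Young inequality, Lemma \ref{lem:reverse}, which is your AM--GM step) before integrating out $\bW$ with $\lambda$ of order $1/(R\dimRFF)$ --- exactly matching the stated constants.

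The one point you leave open, the ``principal difficulty'' of obtaining the exponent $d$ in $C_{d,\Lambda}$, has a simpler resolution than the spectral-norm variational argument you suggest. The paper uses precisely the ``naive'' chain you dismiss: $\prod_{p}\norm{\bW^{(p)}}_2^{2/m}\leq\frac{1}{m}\sum_{p}\norm{\bW^{(p)}}_2^{2}\leq\frac{1}{m}\sum_{p,i,j}\pars{w_{i,j}^{(p)}}^2$, so that after the decoupling the exponent is $\frac{\lambda}{m\dimRFF}\sum_{p,i,j}\pars{w_{i,j}^{(p)}}^2$; Jensen's inequality in the form $\bbE\bracks{\exp\pars{X/(m\dimRFF)}}\leq\bbE^{1/(m\dimRFF)}\bracks{\exp(X)}$ then pulls the $1/(m\dimRFF)$ factor outside, and independence across the $m d \dimRFF$ coordinates together with the Bernstein MGF bound at $\lambda=1/(2R)$ yields $\pars{1+\frac{S}{2R}+\frac{S^2}{4R^2}}^{md\dimRFF\cdot\frac{1}{m\dimRFF}}=\pars{1+\frac{S}{2R}+\frac{S^2}{4R^2}}^{d}$. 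So no finer spectral argument is needed; the exponent $md\dimRFF$ you feared is exactly cancelled by the Jensen root. A very minor further remark: in your Step 2 the mixed Wick terms are bounded in the paper by the \emph{sum of norms} $\sum_{\bi}\prod_p\norm{\delta\rff_p(\bx_{i_p})}$ rather than by $\norm{\rffsig[m](\bx)}$ itself (the latter is the norm of the sum), but since Lemma \ref{lem:1var_rffsig_norm} controls the sum of norms anyway, your final variance bound and conditional tail agree with the paper's.
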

\begin{proof}
    First, we consider the conditional probability $\prob{\abs{\rffsigkernelTRP[m](\bx, \by) - \rffsigkernel[m](\bx, \by)} \geq \epsilon \middle\vert \bW}$ by conditioning on the \RFSF{} weights $\bW = (\bW^{(1)}, \dots, \bW^{(m)})$, so that the only source of randomness comes from the \TRP{} weights $\bP = (\bP^{(1)}, \dots, \bP^{(m)})$. The idea is to call Theorem \ref{thm:hyper_concentration} to estimate the conditional probability, and then take expectation over $\bW$. Since Theorem \ref{thm:hyper_concentration} quantifies the concentration of a Gaussian polynomial around its mean in terms of its variance, we first compute the conditional statistics of $\rffsigkernelTRP[m](\bx, \by)$.
    
    \emph{Conditional expectation.}
    Recall the definition of $\rffsigTRP[m](\bx)$ \eqref{eq:rfsf:rffsigtrpdef}, where $\bp^{(1)}_1, \dots, \bp^{(m)}_{\dimTRP} \stackrel{\iid}{\sim} \cN(0, \b I_{2\dimRFF})$, and $\bx \in \Seq(\cX)$, so that
    \begin{align}
        \rffsigTRP[m](\bx) &= \frac{1}{\sqrt{\dimTRP}} \pars{\sum_{\bi \in \Delta_m(\len{\bx})} \prod_{p=1}^m \inner{\bp_i^{(p)}}{\delta \rff_p(\bx_{i_p})}}_{i=1}^{\dimTRP}.
    \end{align}
    
    Hence, $\rffsigkernelTRP[m](\bx, \by)$ is written for $\bx, \by \in \Seq(\cX)$ as
    \begin{align}
        \rffsigkernelTRP[m](\bx, \by) = \frac{1}{\dimTRP} \sum_{i=1}^{\dimTRP} \underbrace{\sum_{\substack{\bi \in \Delta_m(\len{\bx}) \\ \bj \in \Delta_m(\len{\by})}} \prod_{p=1}^m \inner{\bp_i^{(p)}}{\delta \rff_p(\bx_{i_p})} \inner{\bp_i^{(p)}}{\delta \rff_p(\by_{i_p})}}_{A_i},
    \end{align}
    which is a sample average of $\dimTRP$ $\iid$ terms, i.e.~$\rffsigkernelTRP[m](\bx, \by) = \frac{1}{\dimTRP} \sum_{i=1}^{\dimTRP} A_i$. We only have to verify that $A_i$ is conditionally an unbiased approximator of $\rffsigkernel[m]$ given $\bW$. 
    \begin{align}
        \expe{A_i \given \bW} &\stackrel{\text{(a)}}{=} \sum_{\substack{\bi \in \Delta_m(\len{\bx}) \\ \bj \in \Delta_m(\len{\by})}} \prod_{p=1}^m  \expe{\inner{\bp^{(p)}_i}{\delta \rff_p(\bx_{i_p})} \inner{\bp^{(p)}_i}{\delta \rff_p(\by_{j_p})} \given \bW}
        \\
        &\stackrel{\text{(b)}}{=}
        \sum_{\substack{\bi \in \Delta_m(\len{\bx}) \\ \bj \in \Delta_m(\len{\by})}} \prod_{p=1}^m \inner{\expe{\bp^{(p)}_i \otimes \bp^{(p)}_i}}{\delta \rff_p(\bx_{i_p}) \otimes \delta \rff_p(\by_{j_p})}
        \\
        &\stackrel{\text{(c)}}{=}
        \sum_{\substack{\bi \in \Delta_m(\len{\bx}) \\ \bj \in \Delta_m(\len{\by})}} \prod_{p=1}^m \inner{I_{\dimRFF}}{\delta \rff_p(\bx_{i_p}) \otimes \delta \rff_p(\by_{j_p})}
        \\
        &\stackrel{\text{(d)}}{=}
        \sum_{\substack{\bi \in \Delta_m(\len{\bx}) \\ \bj \in \Delta_m(\len{\by})}} \prod_{p=1}^m \inner{\delta \rff_p(\bx_{i_p})}{\delta \rff_p(\by_{j_p})},
    \end{align}
    where (a) follows from linearity of expectation and independence of the $\bp_i^{(p)}$'s for $p \in [m]$, (b) from bilinearity of inner product, and independence of $\bP$ and $\bW$, (c) from substituting the covariance, (d) is since the outer product is projected onto the diagonal.

    \emph{Conditional variance.} We compute the conditional variance of $A_i$ given $\bW$:
    {\small\begin{align}
        \bbE&\bracks{A_{m, i}^2 \given \bW}&&
        \\
        &\stackrel{\text{(e)}}{=} \sum_{\substack{\bi,\bk \in \Delta_m({\len{\bx}})\\\bj,\bl \in \Delta_m({\len{\by}})}} \prod_{p=1}^m
        &&\bbE\bracks{\inner{\bp^{(p)}_i}{\delta \rff_p(\bx_{i_p})} \inner{\bp^{(p)}_i}{\delta \rff_p(\by_{j_p})}\inner{\bp^{(p)}_i}{\delta \rff_p(\bx_{k_p})} \inner{\bp^{(p)}_i}{\delta \rff_p(\by_{l_p})} \given \bW}
        \\
        &\stackrel{\text{(f)}}{=}
        \sum_{\substack{\bi,\bk \in \Delta_m({\len{\bx}})\\\bj,\bl \in \Delta_m({\len{\by}})}} \prod_{p=1}^m
        &&\bigg(
        \bbE\bracks{\inner{\bp^{(p)}_i}{\delta \rff_p(\bx_{i_p})} \inner{\bp^{(p)}_i}{\delta \rff_p(\by_{j_p})} \given \bW} \bbE\bracks{\inner{\bp^{(p)}_i}{\delta \rff_p(\bx_{k_p})} \inner{\bp^{(p)}_i}{\delta \rff_p(\by_{l_p})} \given \bW}
        \\
        &&&+
        \bbE\bracks{\inner{\bp^{(p)}_i}{\delta \rff_p(\bx_{i_p})} \inner{\bp^{(p)}_i}{\delta \rff_p(\bx_{k_p})}\given \bW} \bbE\bracks{\inner{\bp^{(p)}_i}{\delta \rff_p(\by_{j_p})} \inner{\bp^{(p)}_i}{\delta \rff_p(\by_{l_p})} \given \bW}
        \\
        &&&+
        \bbE\bracks{\inner{\bp^{(p)}_i}{\delta \rff_p(\bx_{i_p})} \inner{\bp^{(p)}_i}{\delta \rff_p(\by_{l_p})} \given \bW} \bbE\bracks{\inner{\bp^{(p)}_i}{\delta \rff_p(\bx_{k_p})} \inner{\bp^{(p)}_i}{\delta \rff_p(\by_{j_p})} \given \bW}\bigg)
        \\
        &\stackrel{\text{(g)}}{=}
        \sum_{\substack{\bi,\bk \in \Delta_m({\len{\bx}})\\\bj,\bl \in \Delta_m({\len{\by}})}} \prod_{p=1}^m
        &&\bigg(
        \inner{\delta \rff_p(\bx_{i_p})}{\delta \rff_p(\by_{j_p})} \inner{\delta \rff_p(\bx_{k_p})}{\delta \rff_p(\by_{l_p})}
        \\
        &&&+
        \inner{\delta \rff_p(\bx_{i_p})}{\delta \rff_p(\bx_{k_p})} \inner{\delta \rff_p(\by_{j_p})}{\delta \rff_p(\by_{l_p})}
        \\
        &&&+
        \inner{\delta \rff_p(\bx_{i_p})}{\delta \rff_p(\by_{l_p})} \inner{\delta \rff_p(\bx_{k_p})}{\delta \rff_p(\by_{j_p})} \bigg)
        \\
        &\stackrel{\text{(h)}}{\leq} \mathrlap{
        \sum_{\substack{\bi,\bk \in \Delta_m({\len{\bx}})\\\bj,\bl \in \Delta_m({\len{\by}})}} 3^m \prod_{p=1}^m
        \norm{\delta \rff_p(\bx_{i_p})} \norm{\delta \rff_p(\by_{j_p})} \norm{\delta \rff_p(\bx_{k_p})}\norm{\delta \rff_p(\by_{l_p})}}
        \\
        &\stackrel{\text{(i)}}{=} \mathrlap{
        3^m \pars{\sum_{\bi \in \Delta_m({\len{\bx}})} \prod_{p=1}^m \norm{\delta \rff_p(\bx_{i_p})}}^2
        \pars{\sum_{\bj \in \Delta_m({\len{\by}})} \prod_{p=1}^m
        \norm{\delta \rff_p(\by_{j_p})}}^2}
        \\
        &\stackrel{\text{(j)}}{\leq} \mathrlap{
        \frac{1}{(m!)^4} {\pars{\frac{3 \norm{\bx}_\onevar^2 \norm{\by}_\onevar^2}{\dimRFF^2}}}^m \prod_{p=1}^m \norm{\bW^{(p)}}^4_2,}
    \end{align}}
    where (e) follows from linearity of expectation and independence of the $\bp_i^{(p)}$'s for $p \in [m]$, (f) from Isserlis' theorem \cite{isserlis1918formula}, (g) is the same as (a)-(d), (h) is the Cauchy-Schwarz inequality, (i) from factorizing the summation, (j) is the same as Lemma \ref{lem:1var_rffsig_norm}. 
    
    Therefore, we have due to Lemma \ref{lem:krffsig_bound} for the variance that
    \begin{align}
        \bbV\bracks{A_{m, i} \,\vert\, \bW} 
        = \expe{A_{m, i}^2 \,\vert\, \bW} - \bbE^2\bracks{A_{m, i} \,\vert\, \bW}
        \leq 
        \frac{3^m + 1}{(m!)^4} {\pars{\frac{\norm{\bx}_\onevar^2 \norm{\by}_\onevar^2}{\dimRFF^2}}}^m \prod_{p=1}^m \norm{\bW^{(p)}}^4_2.
    \end{align}
    Let $\beta_m(\bx, \by) = \frac{3^m + 1}{(m!)^4} \norm{\bx}_\onevar^{2m} \norm{\by}_\onevar^{2m}$. Then, as $\rffsigkernelTRP[m](\bx, \by) \,\vert\, \bW$ is a sample average,
    \begin{align}
        \bbV\bracks{\rffsigkernelTRP[m](\bx, \by) \given \bW} \leq \frac{\beta_m(\bx, \by)}{\dimTRP \dimRFF^{2m}} \prod_{p=1}^m \norm{\bW^{(p)}}^4_2. \label{eq:rfsf:cond_variance}
    \end{align}

    \emph{Conditional bound.} Since $\rffsigkernelTRP[m](\bx,\by) \,\vert\, \bW$ is a Gaussian polynomial of degree-$2m$, with expectation $\rffsigkernel[m](\bx, \by)$, and variance \eqref{eq:rfsf:cond_variance}, we have by Theorem \ref{thm:hyper_concentration} for $\epsilon > 0$ that
    \begin{align}
        \bbP\bracks{\abs{\rffsigkernelTRP[m](\bx, \by) - \rffsigkernel[m](\bx, \by)} \geq \epsilon \given \bW} 
        &\leq 2 \exp\pars{-\frac{\epsilon^{\frac{1}{m}}}{2\sqrt{2}e \bbV^{\frac{1}{2m}}\bracks{\rffsigkernelTRP[m](\bx, \by) \given \bW}}}
        \\
        &\leq 2 \exp\pars{- \frac{\dimTRP^{\frac{1}{2m}} \dimRFF \epsilon^{\frac{1}{m}}}{2\sqrt{2}e \beta_m^{\frac{1}{2m}}(\bx, \by) \prod_{p=1}^m \norm{\bW^{(p)}}^\frac{2}{m}_2}}. \label{eq:rfsf:conditional_bound}
    \end{align}
    
    \emph{Undoing the conditioning.} We take the expectation in \eqref{eq:rfsf:conditional_bound} so that
    \begin{align}
        \bbP\bracks{\abs{\rffsigkernelTRP[m](\bx, \by) - \rffsigkernel[m](\bx, \by)} \geq \epsilon} &= \bbE\bracks{\prob{\abs{\rffsigkernelTRP[m](\bx, \by) - \rffsigkernel[m](\bx, \by)} \geq \epsilon \given \bW}}
        \\
        &\leq 2\expe{\exp\pars{- \frac{\dimTRP^{\frac{1}{2m}} \dimRFF \epsilon^{\frac{1}{m}}}{2\sqrt{2}e \beta_m(\bx, \by)^{\frac{1}{2m}} \prod_{p=1}^m \norm{\bW^{(p)}}^\frac{2}{m}_2}}}
        \\
        &\stackrel{\text{(k)}}{\leq} 2\expe{\exp\pars{- \frac{\lambda \dimTRP^{\frac{1}{2m}} \epsilon^{\frac{1}{m}}}{2\sqrt{2}e \beta_m(\bx, \by)^{\frac{1}{2m}}} \frac{\dimRFF}{  \lambda \prod_{p=1}^m \norm{\bW^{(p)}}^\frac{2}{m}_2}}} 
        \\
        &\stackrel{\text{(l)}}{\leq} 2\expe{\exp\pars{- 2\pars{\frac{\lambda \dimTRP^{\frac{1}{2m}} \epsilon^{\frac{1}{m}}}{2\sqrt{2}e \beta_m(\bx, \by)^{\frac{1}{2m}}}}^\frac{1}{2} + \frac{\lambda \prod_{p=1}^m \norm{\bW^{(p)}}^\frac{2}{m}_2}{\dimRFF}}}
        \\
        &\stackrel{\text{(m)}}{\leq} 2\expe{\exp\pars{- \pars{\frac{\sqrt{2}\lambda \dimTRP^{\frac{1}{2m}} \epsilon^{\frac{1}{m}}}{e \beta_m(\bx, \by)^{\frac{1}{2m}}}}^\frac{1}{2} + \frac{\lambda \sum_{p=1}^m \norm{\bW^{(p)}}^2_2}{m\dimRFF}}}, \label{eq:rfsf:expected_trp_prob_decomp}
    \end{align}
    where (k) follows form multiplying and dividing with a $\lambda > 0$, (l) from applying Lemma \ref{lem:reverse} with $p=\frac{1}{2}$ and $q = 1$, and (m) from the arithmetic-geometric mean inequality. 

    Bounding the MGF of ${w_{i,j}^{(p)}}^2$ for $p \in [m], i \in [d], j \in [\dimRFF]$, we have that 
    \begin{align}
    \bbE\bracks{\exp\pars{\lambda{w^{(p)}_{i,j}}^2}}
    &\stackrel{\text{(n)}}{\leq} \sum_{k \geq 0} \bbE\bracks{{w_{i,j}^{(p)}}^k} \frac{\lambda^k}{k!}
    \stackrel{\text{(o)}}{\leq} 1 + \lambda S + \frac{\lambda^2 S^2}{2} \sum_{k \geq 0} (\lambda R)^k
    \\
    &\stackrel{\text{(p)}}{=} 1 + \lambda S + \frac{\lambda^2 S^2}{2} \frac{1}{1 - \lambda R} \stackrel{\text{(q)}}{=} 1 + \frac{S}{2R} + \frac{S^2}{4R^2},
    \end{align}
    where (n) is the Taylor expansion, (o) is the condition \eqref{eq:rfsf:rfsf_trp_approx_cond} and applying Jensen inequality to the degree-$1$ term, (p) is the geometric series for $\lambda < \frac{1}{R}$, and (q) is choosing $\lambda = \frac{1}{2R}$. Hence,
    \begin{align}
        \expe{\exp\pars{\frac{\lambda \sum_{p=1}^m \norm{\bW^{(p)}}^2_2}{m\dimRFF}}}
        &=
        \expe{\exp\pars{\frac{\lambda \sum_{p=1}^m \sum_{i=1}^d \sum_{j=1}^{\dimRFF} {w_{i, j}^{(p)}}^2}{m \dimRFF}}}
        \\
        &\stackrel{\text{(r)}}{\leq} 
        \bbE^{1/(m \dimRFF)}\bracks{\exp\pars{\lambda \sum_{p=1}^m \sum_{i=1}^d \sum_{j=1}^{\dimRFF} {w_{i, j}^{(p)}}^2}}
        \\
        &\stackrel{\text{(s)}}{\leq} 
        \pars{1 + \frac{S}{2R} + \frac{S^2}{4R^2}}^d,
    \end{align}
    where (r) is due to the Jensen inequality (Lemma \ref{lem:jensen}), and (s) follows from the independence of the $w_{i,j}^{(p)}$'s for $p \in [m], i \in [d], j \in [\dimRFF]$.
    Finally, plugging this into \eqref{eq:rfsf:expected_trp_prob_decomp}, we get that
    \vspace{-10pt}
    \begin{align}
        \bbP\bracks{\abs{\rffsigkernelTRP[m](\bx, \by) - \rffsigkernel[m](\bx, \by)} \geq \epsilon}
        \leq
        2\pars{1 + \frac{S}{2R} + \frac{S^2}{4R^2}}^d
        \exp\pars{- \pars{\frac{ \dimTRP^{\frac{1}{2m}} \epsilon^{\frac{1}{m}}}{\sqrt{2}e R \beta_m(\bx, \by)^{\frac{1}{2m}}}}^\frac{1}{2}}.
    \end{align}
    \vspace*{-10pt}
    
    Finally, note that $\beta_m(\bx, \by)^\frac{1}{2m} = \pars{\frac{3^m + 1}{(m!)^4}}^\frac{1}{2m} \norm{\bx}_{\onevar} \norm{\by}_{\onevar} \leq \frac{2e^2}{m^2} \norm{\bx}_{\onevar} \norm{\by}_{\onevar}$, since $3^m + 1 \leq 4^m$ for $m \geq 1$, and $\frac{1}{m!} \leq \pars{\frac{e}{m}}^m$ due to Stirling's approximation.
\end{proof}

\section{Algorithms} \label{apx:algs}

\begin{algorithm}[H]
    \begin{footnotesize}
	\caption{Computing the \RFSF{} map $\rffsig[\leq M]$.}
	\label{alg:rfsf:rfsf}
	\begin{algorithmic}[1]
		\STATE {\bfseries Input:}  Sequences $\bX=(\bx_i)_{i=1}^N \subset \Seq(\cX)$, measure $\Lambda$, truncation $M \in \bbZ_+$, \RFF{} sample size $\dimRFF \in \bbZ_+$
		\STATE Optional: Add time-parameterization $\bx_i \gets (\bx_{i, t}, t / \ell_{\bx_i})_{t=1}^{\ell_{\bx_i}}$ for all $i \in [N]$
		\STATE Tabulate to uniform length $\ell = \max_{j \in [N]} \ell_{\bx_j}$ by $\bx_i \gets (\bx_{i, 1}, \ldots, \bx_{i, \ell_{\bx_i}}, \ldots, \bx_{i, \ell_{\bx_i}})$ for all $i \in [N]$
		\STATE Sample independent \RFF{} weights $\bW^{(1)}, \dots, \bW^{(M)} \stackrel{\iid}{\sim} \Lambda^{\dimRFF}$
		\STATE Initialize an array $U$ with shape $[M, N, \ell-1, 2\dimRFF]$
		\STATE Compute increments $U[m, i, t, :] \gets \rff_m(\bx_{i,t+1}) - \rff_m(\bx_{i, t})$ for $m \in [M]$, $i \in [N]$, $t \in [\ell-1]$
        \STATE Initialize array $V \gets U[1, :, :, :]$
        \STATE Collapse into level-$1$ features $P_1 \gets V[:, \Sigma, :]$
		\FOR{$m=2$ {\bfseries to} $M$}
		\STATE Update with next increment $V \gets V[:, \boxplus+1, :] \boxtimes_{3} U[m, :, :, :]$
		\STATE Collapse into level-$m$ features $P_m \gets V[:, \Sigma, :]$
		\ENDFOR
		\STATE {\bfseries Output:} Arrays of \RFSF{} features per signature level $P_1, \dots, P_M$.
	\end{algorithmic}
    \end{footnotesize}
\end{algorithm}

\vspace{-5pt}

\begin{algorithm}[H]
    \begin{footnotesize}
	\caption{Computing the \RFSFD{} map $\rffsigDP[\leq M]$.}
	\label{alg:rfsf:rsfsdp}
	\begin{algorithmic}[1]
		\STATE {\bfseries Input:} Sequences $\bX=(\bx_i)_{i=1}^{N} \subset \Seq(\cX)$, measure $\Lambda$, truncation $M \in \bbZ_+$, \RFSFD{} sample size $\dimRFF \in \bbZ_+$
		\STATE Optional: Add time-parameterization $\bx_i \gets (\bx_{i, t}, t / \ell_{\bx_i})_{t=1}^{\ell_{\bx_i}}$ for all $i \in [N]$
		\STATE Tabulate to uniform length $\ell = \max_{j \in [N]} \ell_{\bx_j}$ by $\bx_i \gets (\bx_{i, 1}, \ldots, \bx_{i, \ell_{\bx_i}}, \ldots, \bx_{i, \ell_{\bx_i}})$ for all $i \in [N]$
		\STATE Sample independent \RFF{} weights $\bW^{(1)}, \dots, \bW^{(M)} \stackrel{\iid}{\sim} \Lambda^{\dimRFF}$
		\STATE Initialize an array $U$ with shape $[M, N, \ell-1, \dimRFF, 2]$
		\STATE Compute increments $U[m, i, t, k, :] \gets \hat\kernelfeatures_{m, k}(\bx_{i,t+1}) - \hat\kernelfeatures_{m, k}(\bx_{i, t})$ for $m \in [M]$, $i \in [N]$, $t \in [\ell-1]$, $k \in [\dimRFF]$
        \STATE Initialize array $V \gets \frac{1}{\sqrt{\dimRFF}} U[1, :, :, :, :]$
        \STATE Collapse into level-$1$ features $P_1 \gets V[:, \Sigma, :, :]$
		\FOR{$m=2$ {\bfseries to} $M$}
		\STATE Update with next increment $V \gets V[:, \boxplus+1, :, :] \boxtimes_4 U[m, :, :, :, :]$
		\STATE Collapse into level-$m$ features $P_m \gets V[:, \Sigma, :, :]$
		\ENDFOR
		\STATE {\bfseries Output:} Arrays of \RFSFD{} features per signature level $P_1, \dots, P_M$.
	\end{algorithmic}
    \end{footnotesize}
\end{algorithm}

\vspace{-5pt}

\begin{algorithm}[H]
    \begin{footnotesize}
	\caption{Computing the \RFSFT{} map $\rffsigTRP[\leq M]$.}
	\label{alg:rfsf:rsfstrp}
	\begin{algorithmic}[1]
		\STATE {\bfseries Input:} Sequences $\bX=(\bx_i)_{i=1}^N \subset \Seq(\cX)$, measure $\Lambda$, truncation $M \in \bbZ_+$, \RFSF{} and \TRP{} sample size $\dimRFF \in \bbZ_+$
		\STATE Optional: Add time-parameterization $\bx_i \gets (\bx_{i, t}, t / \ell_{\bx_i})_{t=1}^{\ell_{\bx_i}}$ for all $i \in [N]$
		\STATE Tabulate to uniform length $\ell = \max_{j \in [N]} \ell_{\bx_j}$ by $\bx_i \gets (\bx_{i, 1}, \ldots, \bx_{i, \ell_{\bx_i}}, \ldots, \bx_{i, \ell_{\bx_i}})$ for all $i \in [N]$
		\STATE Sample independent \RFF{} weights $\bW^{(1)}, \dots, \bW^{(M)} \stackrel{\iid}{\sim} \Lambda^{\dimRFF}$
		\STATE Sample standard normal matrices  $\bP^{(1)}, \dots, \bP^{(M)} \stackrel{\iid}{\sim} \cN^{\dimRFF}(0, \b I_{2\dimRFF})$
            \STATE Initialize an array $U$ with shape $[M, N, \ell-1, \dimRFF]$
		\STATE Compute projected increments $U[m, i, t, :] \gets {\bP^{(m)}}^\top \pars{\rff_m(\bx_{i,t+1}) - \rff_m(\bx_{i, t})}$ for $m \in [M]$, $i \in [N]$, $t \in [\ell-1]$
        \STATE Initialize array $V \gets \frac{1}{\sqrt{\dimRFF}} U[1, :, :, :]$
        \STATE Collapse into level-$1$ features $P_1 \gets V[:, \Sigma, :]$
		\FOR{$m=2$ {\bfseries to} $M$}
		\STATE Update with next increment $V \gets V[:, \boxplus+1, :] \odot U[m, :, :, :]$
		\STATE Collapse into level-$m$ features $P_m \gets V[:, \Sigma, :]$
		\ENDFOR
		\STATE {\bfseries Output:} Arrays of \RFSFT{} features per signature level $P_1, \dots, P_M$.
	\end{algorithmic}
    \end{footnotesize}
\end{algorithm}
\end{subappendices}

\endgroup
\begingroup
\chapter{Recurrent Sparse Spectrum Signature Gaussian processes} \label{ch:vfsf}
\section{Introduction}
Time series forecasting plays a central role in various domains, including finance \cite{sezer2020financial}, renewable energy \cite{wang2019review, adachi2023bayesian}, and healthcare \cite{bui2018time}. Ongoing research faces several challenges, such as non-linear data domains, ordered structure, time-warping invariance, discrete and irregular sampling, and scalability.
Among the numerous existing approaches, we focus on signature approach, which address the first four challenges and have demonstrated state-of-the-art performance as a kernel for sequential data \cite{toth2023random}. However, the trade-off between accuracy and scalability remains unresolved. While the exact computation of the signature kernel requires quadratic complexity for sequence length $L$ \cite{kiraly2019kernels}, existing approximate methods reduce complexity to linear, at the cost of degraded performance on large-scale datasets. Moreover, these methods treat the signature as a global feature, which limits their ability to capture recent local information effectively. This is particularly crucial for time series forecasting, where capturing near-term correlations are often as important as identifying long-term trends.

\paragraph{Contributions} 
We propose Random Fourier Decayed Signature Features with Gaussian Processes, which dynamically adjust its context length based on the data, prioritizing more recent information, and call the resulting model Recurrent Sparse Spectrum Signature Gaussian Process (RS\textsuperscript{3}GP). By incorporating variational inference, our model learns the decay parameters in a data-driven manner, allowing it to transform time series data into a joint predictive distribution at scale while maintaining accuracy. Our approach outperforms other \texttt{GP} baselines for time series on both small and large datasets and demonstrates comparable performance to state-of-the-art deep learning diffusion models while offering significantly faster training times.

\subsection{Related work}


\paragraph{Gaussian process approach} 
There are two prominent approaches: (1) specialized kernels for sequences \cite{lodhi2002text, cuturi2011fast, cuturi2011autoregressive, al2017learning}, and (2) state-space models \cite{frigola2013bayesian, frigola2014variational, mattos2016recurrent, eleftheriadis2017identification, doerr2018probabilistic, ialongo2019overcoming}. These approaches can complement each other; modeling the latent system as a higher-order Markov process allows sequence kernels to capture the influence of past states. The random projection approach, also known as sparse spectrum approximation in the Gaussian process community \cite{lazaro2010sparse, wilson2014fast, gal2015improving, dao2017gaussian, li2024trigonometric}.

\paragraph{Deep learning approach} 
From classic LSTMs \cite{schmidhuber1997long} to transformers \cite{vaswani2017attention}, deep learning methods have been widely applied to sequential data. In the context of probabilistic forecasting, diffusion models \cite{sohl2015deep, ho2020denoising, kollovieh2024predict} are considered among the state-of-the-art techniques. Although deep learning models can approximate any continuous function, they often require a large number of parameters, suffer from high variance, and offer limited interpretability. This creates opportunities for alternative approaches, which can serve not only as competitors but also as complementary components within larger models. 

\section{Background} \label{sec:vfsf:back}

\subsection{Signature Features and Kernels}
\vspace{-0.5em}
The path signature $\S(\bx)$ provides a graded description of a path $\bx:[0,T]\to \bbR^d$ by mapping it to a hierarchy of tensors $\S(\bx)= \pars{1, \S_1(\b x), \S_2(\b x), \ldots} \in \prod_{m \geq 0} (\bbR^d)^{\otimes m}$ of increasing degrees.
Among its attractive properties are that: it maps from a nonlinear domain (there is not natural addition of paths of different length) to a linear space $\prod (\bbR^d)^{\otimes m}$; $\bx \mapsto \S(\bx)$ is injective up to time-parameterization (thus naturally factoring out time-warping) and it can be made injective by adding time as a path coordinate (i.e.~setting $x^0_t=t)$; it linearizes path functionals, that is we can approximate non-linear path functionals $f(\bx)$ as linear functionals of signatures $f(\bx) \approx \langle \ell, \S(\bx)\rangle$, see Section \ref{subsec:back:props} for details. Additionally, if the path is random, the expected path signature characterizes the distribution, $\mu \mapsto \mathbb{E}_{\bx \sim \mu}[\S(\bx)]$ is injective, see \cite{chevyrev2022signature}.
All this makes it a viable transformation for extracting information from time series and paths in machine learning.

\paragraph{Discrete-time signature features}
In practice, we do not have access to continuous paths but are only given a sequence $\bx = (\bx_{0}, \dots, \bx_L) \in \Seq(\bbR^d)$  in $\bbR^d$. Here, we focus on the higher-order discretized signature variation from Section \ref{sec:back:discrete}, where the order parameter is set to the truncation level $p=M$. Hence, in this case we can simply identify $\bx$ with a continuous time path by its linear interpolation for any choice of time parameterization.
To compute signatures, denote with $\bar\Delta_m(n)$ the collection of all ordered $m$-tuples such that
\begin{align} \label{eq:vfsf:delta}
   \bar\Delta_m(n) = \curls{1 \leq i_1 \leq \cdots \leq i_m \leq n \setgiven i_1, \dots, i_m \in [n]}.
\end{align}
 Then, it can be shown using Chen's relation (Proposition~\ref{prop:chen}) that its signature can be computed by the recursion for $1 \leq m$, and $1 \leq l \leq L$:
\begin{align} \label{eq:vfsf:sig_recursion}
    \S_m(\bx_{0:l}) = \S_m(\bx_{0:l-1}) + \sum_{p=1}^m \S_{m-p}(\bx_{0:l-1}) \otimes \frac{(\delta \bx_l)^{\otimes p}}{p!}
\end{align}
with the identification $S_m(\bx_{0:0}) \equiv 0$, and the first-order difference operator is defined as $\delta \bx_k = \bx_k - \bx_{k-1}$. Unrolling this along the length of the time series with respect to the first term, we get
\begin{align} \label{eq:vfsf:sig_recursion_unrolled}
    \S_m(\bx_{0:l}) = \sum_{k=1}^l \sum_{p=1}^m \S_{m-p}(\bx_{0:k-1}) \otimes \frac{(\delta \bx_k)^{\otimes p}}{p!}.
\end{align}
Recall from Section \ref{sec:back:discrete} that this expression can be evaluated in closed form:
\begin{align} \label{eq:vfsf:sigexplicit}
    \S_m(\bx_{0:k}) = \sum_{\b i \in \bar\Delta_m(\len{\bx})} \frac{1}{\b i!} \delta \bx_{i_1} \otimes \cdots \otimes \delta \bx_{i_m},
\end{align}
where $\b i! = k_1! \cdots k_q!$ such that there are $q \in \bbZ_+$ unique indices in the multi-index $\b i$ and $k_1, \cdots, k_q$ are the number of times they are repeated.
It will be important later on that \eqref{eq:vfsf:sig_recursion} allows to compute the signature up to all time steps using a paralellizable scan operation. 

\paragraph{Discrete-time signature kernels} 
As $\S_m(\bx)$ is tensor-valued, it has $d^m$ coordinates, which makes its computation infeasible for high-dimensional state-spaces or high $m$. 
Kernelization allows to alleviate this by first lifting sequences and paths from $\bbR^d$ into a RKHS and subsequently computing inner products of signature features in this RKHS. This alleviates the computational bottleneck and further adds more expressivity by first lifting the paths into the RKHS; see Section \ref{sec:back:sigkernels}. 
Concretely, the signature kernel $\sigkernel: \Seq(\bbR^d) \times \Seq(\bbR^d) \rightarrow \bbR$ is defined as 
\begin{align} \label{eq:vfsf:sig_kernel}
    \sigkernel(\bx, \by) = \sum_{m=0}^M \kernel_{S_m}(\bx, \by),
\end{align}
where $M \in \bbZ_+ \cup \{\infty\}$ is the truncation level. Although truncating the series \eqref{eq:vfsf:sig_kernel} may seem like a limitation, it often happens that a careful selection of finite truncation level outperforms the untruncated case, since it can mitigate overfitting.

If we identify two sequences $\bx, \by \in \Seq(\bbR^d)$ with paths given by their linear interpolations for some choice of time parameterization, then we get the analogous expression to \eqref{eq:vfsf:sigexplicit}:
\begin{align} \label{eq:vfsf:discr_sig_kernel}
    \sigkernel[m](\bx, \by) = \sum_{\substack{\bi \in \bar\Delta(\len{\bx})\\ \bj \in \bar\Delta(\len{\by})}} \frac{1}{\bi! \bj!} \delta_{i_1, j_1} \kernel(\bx_{i_1}, \by_{j_1}) \cdots \delta_{i_m, j_m} \kernel(\bx_{i_m}, \by_{j_m}),
\end{align}
where $\bi!$ is as previously, and $\delta_{i, j}$ is a second-order differencing operator, such that $\delta_{i, j} \kernel(\bx_i, \by_j) = \kernel(\bx_{i+1}, \by_{j+1}) - \kernel(\bx_{i+1}, \by_j) - \kernel(\b x_i, \b y_{j+1}) + \kernel(\b x_i, \b y_j).$ A similar recursion to \eqref{eq:vfsf:sig_recursion} exists to compute \eqref{eq:vfsf:discr_sig_kernel}, see Algorithm \ref{alg:back:sigkernel_p} in Section \ref{sec:back:algs}. For further details, see Sections \ref{sec:back:sigkernels} and \ref{sec:back:discrete}.

\subsection{Random Fourier Features}
Kernelization circumvents the computational burden of a high- or infinite-dimensional feature space, but the cost is the associated quadratic complexity in the number of samples which is due to the evaluation of the Gram matrix. 
Random Fourier Features \cite{rahimi2007random} address this by constructing for a given kernel $\kernel$ on $\bbR^d$ a random feature map for which the inner product is a good random approximation to the original kernel. 
Many variations proposed, see \cite{liu2021random, chamakh2020orlicz}.

Existing variations are based on Bochner's theorem, which allows for a spectral representation of stationary kernels. Let $\kernel: \bbR^d \times \bbR^d \to \bbR$ be a continuous, bounded, stationary kernel. Then, Bochner's theorem states that there exists a non-negative finite measure $\Lambda$ over $\bbR^d$, such that $\kernel$ is its Fourier transform. In other words, for $\bx, \by, \in \bbR^d$, $\kernel$ can be be represented as
\begin{align}
    \kernel(\bx, \by) 
    &= \int_{\bbR^d} \exp(i \bomega^\top(\bx-\by)) \d \Lambda(\bomega)\\ 
    &= \int_{\bbR^d} \cos(\bomega^\top (\bx - \by) \d\Lambda(\bomega), \label{eq:vfsf:bochner_cos}
\end{align}
since the kernel is real-valued. Without loss of generality, we may assume that $\Lambda(\bbR^d) = 1$ so $\Lambda$ is a probability measure, since it amounts to rescaling the kernel. 

Now, we may draw Monte Carlo samples to approximate \eqref{eq:vfsf:bochner_cos}, $\bomega_1, \dots, \bomega_{D} \stackrel{\iid}{\sim} \Lambda$ for $D \in \bbZ_+$: 

\begin{align} \label{eq:vfsf:rff1}
    \kernel(\bx, \by) \approx \frac{1}{D} \sum_{i=1}^{D} \cos(\bomega_i^\top(\bx - \by)). 
\end{align}

There are two ways to proceed from here: one is to use the cosine identity $\cos(x - y) = \cos(x)\cos(y) + \sin(x)\sin(y)$, or to further approximate the sum in \eqref{eq:vfsf:rff1} using the identity
{\small
\begin{align} \label{eq:vfsf:cos_prob_id}
    \cos(x - y) = \bbE_{b \sim \cU(0, 2\pi)}\bracks{\sqrt{2} \cos(x + b) \sqrt{2} \cos(y + b)}
\end{align}}
see \cite[App.~A]{gal2015improving}. We approximate the expectation \eqref{eq:vfsf:rff1} by drawing $1$ sample for each term:
\begin{align}
    \kernel(\bx, \by) \approx \frac{2}{D} \sum_{i=1}^{D} \cos(\bomega_i^\top \bx + b_i) \cos(\bomega_i^\top \by + b_i),
\end{align}
where $b_1, \dots, b_{D} \sim \cU(0, 2\pi)$. This represents each sample using a $1$-dimensional feature, which will be important for us later on in Section \ref{sec:vfsf:rfsf_rev} when constructing our variation of the Random Fourier Signature Feature map. Now, let $\Omega = (\bomega_1, \dots, \bomega_{D}) \in \bbR^{d \times D}$ and $\bb = (b_1, \dots, b_{D}) \in \bbR^{D}$. Then, the \RFF{} feature map can be written concisely as
\begin{align} \label{eq:vfsf:rff_feature}
	\tilde\varphi(\b x) = \sqrt{\frac{2}{p}} \cos\pars{\Omega^\top \b x + \b b}.
\end{align}
This offers to approximate a stationary kernel using a random, finite-dimensional feature map, and consequently to reformulate downstream algorithms in weight-space, which avoids the usual cubic costs in the number of data points. Probabilistic convergence of the \RFF{} is studied in the series of works: \cite{rahimi2007random}, optimal rates were derived in \cite{sriperumbudur2015optimal}, extended to the kernel derivatives in \cite{szabo2019kernel}, conditions on the spectral measure relaxed in \cite{chamakh2020orlicz}.

\section{Recurrent Sparse Spectrum Signature Gaussian processes} \label{sec:vfsf:rs3gp}
\subsection{Revisiting Random Fourier Signature Features} \label{sec:vfsf:rfsf_rev}

Now, we revisit how \RFF s can be combined with signatures to extract random features from a time series. To do so, we adapt the construction of Random Fourier Signature Features (\RFSF) from Chapter \ref{ch:rfsf}. We provide an adapted version of the diagonally projected \RFSF{} variant (\RFSFD), with a twist, which allows us to devise even more computationally efficient features. 


We will focus on the discrete-time setting, but the construction applies to continuous-time in an analogous manner. Firstly, we will build a $1$-dimensional random estimator for the signature kernel. Let $\kernel: \bbR^d \times \bbR^d \to \bbR$ be a continuous, bounded, stationary kernel with spectral measure $\Lambda$, $\bomega^{(1)}, \dots, \bomega^{(m)} \sim \Lambda$ and $b^{(1)}, \dots, b^{(m)} \sim \cU(0, 2\pi)$, and $\phi^{(p)}(\bx) \coloneqq \cos({\bomega^{(p)}}^\top \bx + b^{(p)})$ for $\bx \in \bbR^d$. Then, for sequences $\bx, \by \in \Seq(\bbR^d)$, it holds for the signature kernel $\sigkernel[m]$ as defined in \eqref{eq:vfsf:discr_sig_kernel} that
{
\begin{align} \label{eq:vfsf:sig_estimator}
    &\sigkernel[m](\bx, \by) = \bbE\bracks{\sum_{\substack{\bi \in \bar\Delta_m(\len{\bx})\\ \bj \in \bar\Delta_m(\len{\by})}} \frac{1}{\bi!\bj!} \prod_{p=1}^m \sqrt{2} \delta \phi^{(p)}(\bx_{i_1}) \sqrt{2} \delta \phi^{(p)}(\by_{j_1})}.
\end{align}}
The unbiasedness of the expectation is straightforward to check, since due to linearity and independence of $\phi^{(p)}$'s, it reduces to verifying the relations \eqref{eq:vfsf:cos_prob_id} and \eqref{eq:vfsf:bochner_cos}, which in turn recovers the formulation of the signature kernel from \eqref{eq:vfsf:discr_sig_kernel}. Next, we draw $D \in \bbZ_+$ Monte Carlo samples to approximate \eqref{eq:vfsf:sig_estimator}, hence, let $\bomega^{(1)}_1, \dots, \bomega^{(m)}_D \stackrel{\iid}{\sim} \Lambda$ and $b^{(1)}_1, \dots, b^{(m)}_D \stackrel{\iid}{\sim} \cU(0, 2\pi)$, and $\phi^{(p)}_k (\bx) = \cos({\bomega^{(p)}_k}^\top \bx + b^{(p)}_k)$ for $\bx \in \bbR^d$. Then,
\begin{align} \label{eq:vfsf:sig_approx}
    \sigkernel[m](\bx, \by) \approx \frac{1}{D} \sum_{k=1}^D \sum_{\substack{\bi \in \bar\Delta_m(\len{\bx})\\ \bj \in \bar\Delta_m(\len{\by})}} \frac{1}{\bi!\bj!} \prod_{p=1}^m \sqrt{2} \delta \phi^{(p)}_k(\bx_{i_1}) \sqrt{2} \delta \phi^{(p)}_k(\by_{j_1}).
\end{align}
We call this approximation Random Fourier Signature Features (\RFSF), and the features corresponding to the kernel on the \texttt{RHS} of \eqref{eq:vfsf:sig_approx} can be represented in a finite-dimensional feature space. Let us collect the frequencies $\Omega^{(p)} = (\bomega^{(p)}_1, \dots, \bomega^{(p)}_D) \in \bbR^{d \times D}$ and phases $\bb^{(p)} = (b^{(p)}_1, \dots, b^{(p)}_D)^\top \in \bbR^D$, and define the \RFF{} maps as in \eqref{eq:vfsf:rff_feature} so we have $\varphi^{(p)}(\bx) = \cos({\Omega^{(p)}}^\top \bx + \bb^{(p)})$. Then, the \RFSF{} map $\Phi_{m}: \Seq(\bbR^d) \to \bbR^D$ is defined for $\bx \in \Seq(\bbR^d)$ as
\begin{align} \label{eq:vfsf:rfsf}
    \Phi_{m}(\bx) = \sqrt{\frac{2^m}{D}} \sum_{\bi \in \bar\Delta_m(K)} \frac{1}{\bi!} \bigodot_{p=1}^m \delta \varphi^{(p)}(\bx_{i_p}),
\end{align}
where $\odot$ refers to the Hadamard product. Although \eqref{eq:vfsf:rfsf} looks difficult to compute, we can establish a recursion across time and signature levels. The update rule analogous to \eqref{eq:vfsf:sig_recursion} is
\begin{align} \label{eq:vfsf:rfsf_recursion}
    &\Phi_{m}(\bx_{0:l}) = \Phi_{m}(\bx_{0:l-1}) 
    + \sum_{p=1}^m \frac{1}{p!} \Phi_{m-p}(\bx_{0:l-1}) \bigodot_{q = m-p+1}^m \delta \varphi^{(q)}(\bx_l),
\end{align}
which can be computed efficiently using parallelizable scan operation. Additionally, we overload the definition of the differencing operator $\delta$ to mean fractional differencing for some learnable differencing order $\alpha \in (0, 1)$. Thus, in \eqref{eq:vfsf:rfsf}, we learn a separate differencing order parameter for each feature channel, see Appendix \ref{app:vfsf:fracdiff}.

In practive, we use the first $M \in \bbZ_+$ levels in conjunction, and normalize each \RFSF{} level to unit norm, so that the full \RFSF{} map $\Phi: \Seq(\bbR^d) \to \bbR^{M D + 1}$ is
\begin{align} \label{eq:vfsf:rfsf_full}
    \Phi(\bx) = \pars{1, \frac{\Phi_1(\bx)}{\norm{\Phi_1(\bx)}_2}, \dots, \frac{\Phi_M(\bx)}{\norm{\Phi_M(\bx)}_2}}.
\end{align}
We note that $\Phi(\bx)$ can be computed with complexity $O((M+W) L D + M L D d)$, where $W \in \bbZ_+$ is the window size for fractional differencing, see Appendix \ref{app:vfsf:algs}. This is more efficient than the scalable variants of Chapter \ref{ch:rfsf}, i.e. it avoids the $2^M$ factor as in \RFSFD, and it also avoids the $D^2$ factor as in \RFSFT.

\subsection{A forgetting mechanism for signatures} \label{sec:vfsf:forgetting}  The recursive step in \RFSF{} \eqref{eq:vfsf:rfsf_recursion} allows to compute the feature map up to all time steps of a time series, similarly to a recurrent neural network (\texttt{RNN}), in one forward pass. This suggests to perform sequence-to-sequence regression (including time series forecasting, when the target sequence is future values of the time series) by considering the \RFSF{} map over an expanding window. However, signatures have no built-in forgetting mechanism, and it is well-known that the level-$m$ signature feature has magnitude $O\pars{\nicefrac{\norm{\bx}_\onevar^m}{m!}}$, and the same is true for \eqref{eq:vfsf:rfsf}, which shows that \RFSF{} only accumulates information without a way of forgetting. This is in contrast to modern \texttt{RNN}s, which have built-in gating mechanisms that allow or disallow the flow of historical information, allowing them to focus on more recent information.

In this section, we propose a novel forgetting mechanism tailored for \RFSF, but which can be applied to signature features in general. We tackle this by introducing time step dependent decay factors, which multiply each increment in the formulation \eqref{eq:vfsf:rfsf}. We assume exponential decay in time, so let $\blambda \in \bbR^D$ be channel-wise decay factors. Then, we define Random Fourier Decayed Signature Features (\texttt{RFDSF}) as
\begin{align} \label{eq:vfsf:rfdsf}
    \Phi_{m}(\bx_{0:l}) = \sqrt{\frac{2^m}{D}} \sum_{\bi \in \bar\Delta_m(K)} \frac{1}{\bi!} \bigodot_{p=1}^m \blambda^{\odot(l-i_p)} \odot \delta \varphi^{(p)}(\bx_{i_p}).
\end{align}
A recursion analogous to \eqref{eq:vfsf:rfsf_recursion} allows to compute \eqref{eq:vfsf:rfdsf}:
\begin{align} \label{eq:vfsf:rfdsf_recursion}
    &\Phi_{m}(\bx_{0:l}) = \blambda^{\odot m} \odot \Phi_{m}(\bx_{0:l-1}) + \sum_{p=1}^m \frac{1}{p!} \blambda^{\odot(m-p)} \odot \Phi_{m-p}(\bx_{0:l-1}) \bigodot_{q = m-p+1}^m \delta \varphi^{(q)}(\bx_l),
\end{align}
which can be unrolled over the length of a sequence so
\begin{align} \label{eq:vfsf:rfdsf_recursion_unrolled}
    &\Phi_{m}(\bx_{0:l}) = \sum_{k=1}^l \blambda^{\odot m(l-k)} \sum_{p=1}^m \frac{1}{p!} \blambda^{\odot(m-p)} \odot \Phi_{m-p}(\bx_{0:l-1}) \bigodot_{q = m-p+1}^m \delta \varphi^{(q)}(\bx_k),
\end{align}
which is an exponential decay over time steps of an appropriately chosen sequence depending on signature levels lower than $m$. Hence, \eqref{eq:vfsf:rfdsf_recursion_unrolled} is still computable by a parallelizable scan operation, allowing for sublinear time computations. This makes the feature map well-suited for long time series, due to the adjustment of context length by the decay factors, and the possibility for parallelization across time. The theoretical complexity is the same as previously, i.e.~$O((M+W)MLD + MLDd)$, see Appendix \ref{app:vfsf:algs}, but we emphasize that a work-efficient scan algorithm allows to compute the recursion on a \texttt{GPU} log-linearly.

\subsection{Recurrent Sparse Spectrum Signature Gaussian processes}
Next, we construct our (Variational) Recurrent Sparse Spectrum Signature Gaussian Process (\texttt{(V)RS\textsuperscript{3}GP}) model. We define a Gaussian process, which treats the hyperparameters of the random covariance function in a probabilistic fashion \cite{gal2015improving} by incorporating them into inference. We formulate our variational \texttt{GP} model in the feature space, as opposed to in the function space. 


\paragraph{Bayesian formulation}
We focus on supervised learning, and assume our dataset consists of multiple input time series $\bx \in \Seq(\bbR^d)$ each with a corresponding univariate output time series $\by \in \Seq(\bbR)$, such that $\len{\bx} = \len{\by}$. We note that the multivariate case can also be handled by stacking multiple \texttt{GP} priors, one for each output coordinate. 

As the signature is a universal feature map, we expect that a linear layer on top of the computed \texttt{RFDSF} features will work well for approximating functions of time series. We assume a probabilistic model, which models the prediction process as a linear layer on top of \texttt{RFDSF}, corrupted by Gaussian noise, that is,
\begin{align}
    y_l = \b w^\top \Phi(\bx_{0:l}) + \epsilon_l \quad \text{for } l = 0, \dots, L,
\end{align}
where $\b w \in \bbR^{MD+1}$, such that $M$ is the signature truncation level and $D$ is the \RFF{} dimension. We place an $\iid$ standard Gaussian prior on $\bw$, $p(\b w) = \c N(0, I_{MD+1})$. The model noise is $\iid$ Gaussian with learnable variance $\sigma_y^2 > 0$, so that $p(y \given \b w, \bOmega, \b B, \bx) = \c N(\b w^\top \Phi(\b x), \sigma_y^2)$, where we explicitly denoted the dependence on the hyperparameters of the \RFSF{} map $\Phi$, frequencies $\bOmega = (\Omega^{(1)}, \dots, \Omega^{(M)}) \in \bbR^{M \times d \times D}$, and phases $B = (\b b^{(1)}, \dots, \b b^{(M)}) \in \bbR^{M \times D}$. Next, we specialize to the case of the ARD Gaussian kernel, which has $\Lambda = \cN(\b 0, D^{-1})$, where $D \in \bbR_+^{D \times D}$ is a diagonal matrix of lengthscales. We assign independent lengthscales to each random feature map in \eqref{eq:vfsf:rfdsf}.
Then, following previous work \cite{gal2015improving, cutajar2017random}, we place the priors on the \RFF{} parameters
\begin{align} \label{eq:vfsf:rfsf_priors}
&p(\bOmega) = \prod_{m=1}^M p(\Omega^{(m)}) = \prod_{m=1}^m \c N^D(\b 0, D_m^{-1}),\\
&p(B) = \prod_{m=1}^M p(\bb^{(m)}) = \prod_{m=1}^M \cU^D(0, 2\pi),
\end{align}
where $D_m = \diag(\ell_{m, 1}, \dots, \ell_{m, d})$ refers to the diagonal lengthscale matrix of the $m^{th}$ \RFF{} map.


As noted in \cite{cutajar2017random}, this model corresponds to a 2-layer Bayesian neural network, where the first layer is given by the \texttt{RFDSF} activations with priors on it as given by \eqref{eq:vfsf:rfsf_priors}, while the second layer is a linear readout layer with a Gaussian prior on it. Inference in this case is analytically intractable. For a Gaussian likelihood, although $\b w$ can be marginalized out in the log-likelihood, it is not clear how to handle the integrals with respect to $\bOmega$  and $B$.  Further, disregarding the latter issue of additional hyperparameters, it only admits full-batch training, and this limits scalability to datasets with large numbers of input-output pairs. To this end, we introduce a variational approximation, which allows for both scalability and flexibility.

\paragraph{Variational treatment}
We variationally approximate the posterior to define an evidence lower bound (\texttt{ELBO}).
We define factorized variational distributions over the parameters $\b w \in \bbR^{MD+1}$, frequencies $\bOmega \in \bbR^{M \times d \times D}$, and phases $B \in \bbR^{M \times D}$. The variational over $\b w \in \bbR^{MD+1}$ is as usual given by a Gaussian $q(\b w) = \c N(\bmu_{\b w}, \Sigma_{\b w})$, where $\bmu_{\b w} \in \bbR^{MD+1}$ is the variational mean and $\Sigma_{\b w} \in \bbR^{(MD+1) \times (MD+1)}$ is the variational covariance matrix, which is symmetric and positive definite, represented in terms of its Cholesky factor, $L_{\b w} \in \bbR^{(MD+1) \times (MD+1)}$, such that $\Sigma_{\b w} = L_{\b w}L_{\b w}^\top$. The other parameters get the factorized variationals
\begin{align}
	&q(\bOmega) = \prod_{m=1}^M q(\Omega^{(m)}) = \prod_{m=1}^M \prod_{i=1}^D \prod_{j=1}^d \c N (\mu_{mij}, \sigma^2_{mij}),\\
    &q(B) = \prod_{m=1}^M q(\b b^{(m)}) =  \prod_{m=1}^M \prod_{i=1}^D \c B_{[0, 2\pi]}(\alpha_{mi}, \beta_{mi}),
\end{align}
where $\alpha_{mi}, \beta_{mi} > 0$, so that the posteriors over the component frequencies of $\omega$ are independent Gaussians, while the posteriors over the phases $b$ are independent beta distributions on $[0, 2\pi]$.

The \texttt{KL}-divergence to the posterior is then minimized:
{
\begin{align}
&\KL{q(\bw, \bOmega, B)}{p(\bw, \bOmega, B \given \by)} = \int q(\b w, \bOmega, B) \log \frac{q(\b w, \bOmega, B)}{p(\b w, \bOmega, B \given \b y)} \d \b w \d \bOmega \d B.
\end{align}}

Then, by the usual calculations, we get the \texttt{ELBO}, $\c L_{\texttt{ELBO}} \leq \log p(\b y)$, such that
{\footnotesize
\begin{align} \label{eq:vfsf:elbo}
    \c L_{\texttt{ELBO}} = &\underbrace{\sum_{i=1}^N \bbE_q\bracks{\log p(y_i \given \b w, \bOmega, B)}}_{\text{data-fit term}} - \underbrace{\KL{q(\b w)}{p(\b w)} - \KL{q(\bOmega)}{p(\bOmega)} - \KL{q(B)}{p(B)}}_{\text{\texttt{KL} regularizers}}
\end{align}}

The predictive distribution for a set of (possibly overlapping) sequences $\b X = (\bx_1, \dots, \bx_N) \subset \Seq(\bbR^d)$ is then, denoting $\Phi(\b X) = \bracks{\Phi(\bx_1), \ldots, \Phi(\bx_N)}^\top \in \bbR^{N \times (MD+1)}$ and $\b f = \Phi(\b X) \b w \in \bbR^N$,
\begin{align} \label{eq:vfsf:variational_pred}
    q(\b f) = \c N \left( \Phi(\b X) \bmu_{\b w}^\top, \,\, \Phi(\b X) L_{\b w} L_{\b w}^\top \Phi(\b X)^\top \right).
\end{align}

In order to evaluate the data-fit term in \eqref{eq:vfsf:elbo}, and make inference about unseen points in \eqref{eq:vfsf:variational_pred}, we require a way to handle the randomness in $\bOmega$ and $B$. Two solutions proposed in \cite{cutajar2017random} are to perform Monte Carlo sampling or to sample them once at the start of training and keep them fixed while learning their distributional hyperparameters using the reparameterization trick \cite{kingma2013auto}. Our observations align with theirs, and we found that resampling leads to high variance and non-convergence. The reason for this is likely that the factorized variational distributions do not model correlations between $\bw$ and $\bOmega, B$. Hence, we use a fixed random outcome throughout with the reparameterization trick, see Appendix \ref{app:vfsf:random} for details.

\paragraph{Training objective}
Although the \texttt{ELBO} \eqref{eq:vfsf:elbo} lower bounds the log-likelihood, it suffers from well-known pathologies. One of these is that it greatly underestimates the latent function uncertainty, and mainly relies on the observation noise for uncertainty calibration, which leads to overestimation. This is clearly unideal for time series forecasting, where heteroscedasticity is often present, and since useful information carried by the kernel about the geometry of the data space is lost. There are several solutions proposed by \cite{jankowiak2020parametric}, and we adopt the approach called Parametric Predictive \texttt{GP} Regression (\texttt{PPGPR}), which restores symmetry between the training objective and the test time predictive distribution by treating both latent function uncertainty and observation noise on equal footing. The key idea is to modify the data-fit term in \eqref{eq:vfsf:elbo} taking the $\log$ outside the $q$-expectation so we have
{\small
\begin{align}
    \c L_{\texttt{PPGPR}} =& \sum_{i=1}^N \log\bbE_q\bracks{p(y_i \given \bw, \bOmega, B}
    - \KL{q(\b w)}{p(\b w)} - \KL{q(\bOmega)}{p(\bOmega)} - \KL{q(B)}{p(B)}
\end{align}}
which although is not a lower-bound anymore to the log-likelihood, leads to better calibrated uncertainties. However, as noted by \cite{jankowiak2020parametric}, this objective can lead to underfitting in data regions, where a good fit is harder to achieve, and instead relying on uncertainty overestimation. We account for this by modifying the objective function by introducing a penalty term for the latent function variance. Let $q(f_i) = \cN(\mu_i, \sigma_i^2)$. Then, we modify the objective as
{\small
\begin{align}
    \c L =& \sum_{i=1}^N \log\bbE_q\bracks{p(y_i \given \bw, \bOmega, B}
    - \KL{q(\b w)}{p(\b w)} - \KL{q(\bOmega)}{p(\bOmega)} - \KL{q(B)}{p(B)} - \alpha \sum_{i=1}^N \sigma_i^2,
\end{align}}
where $\alpha > 0$ is a regularization hyperparameter. This promotes the optimizer to escape local optima, and rely on a better fit of the predictive mean, rather than on uncertainty overestimation to fit the data. We give further details in Appendix \ref{app:vfsf:obj}.

\section{Experiments} \label{sec:vfsf:exp}
\paragraph{Implementation} We implemented our models and \texttt{GP} baselines using \texttt{PyTorch} \cite{paszke2019pytorch} and \texttt{GPyTorch} \cite{gardner2018gpytorch}.
The computing cluster used has 4 NVIDIA RTX 3080 TI \texttt{GPU}s.

\subsection{Synthetic Dataset}

\paragraph{Dataset}
We handcrafted a dataset to test the ability of our \texttt{(V)RS\textsuperscript{3}GP} model of adapting its context length to the data on a task which requires reasoning on multiple time horizons. The dataset consists of a multi-sinusoidal wave with multiple frequencies, which includes both large and low frequencies, requiring reasoning over both short and long time periods.

\paragraph{Methods} We compare three models: Variational Recurrent Sparse Spectrum Gaussian Process (\texttt{VRS\textsuperscript{3}GP}), our model constructed in the previous section; \texttt{RS\textsuperscript{3}GP}, which ablates the previous model as it does not learn variationals over the covariance parameters and sets them equal to the prior;  Sparse Variational Gaussian Process (\texttt{SVGP}) using the \texttt{RBF} kernel and a fixed hand-tuned context length. \texttt{(V)RS\textsuperscript{3}GP} uses $D = 200$ and $M = 5$, while \texttt{SVGP} uses $100$ inducing points.

\paragraph{Qualitative analysis} 
Figure~\ref{fig:demo} qualitatively illustrates the predictive mean and uncertainty of the different approaches. \texttt{SVGP} achieves a perfect fit of the training data, but in the testing regime fails to properly capture the underlying dynamics. \texttt{RS\textsuperscript{3}GP} properly captures both short and long horizon dynamics, but in certain data regions underfits the dataset. \texttt{VRS\textsuperscript{3}GP} on the other hand fits the data perfectly, and also perfectly captures the temporal dynamics, being more flexible than \texttt{RS\textsuperscript{3}GP} due to variational parameter learning. 

\begin{figure}[t]
    \centering
    \includegraphics[width=0.75\textwidth]{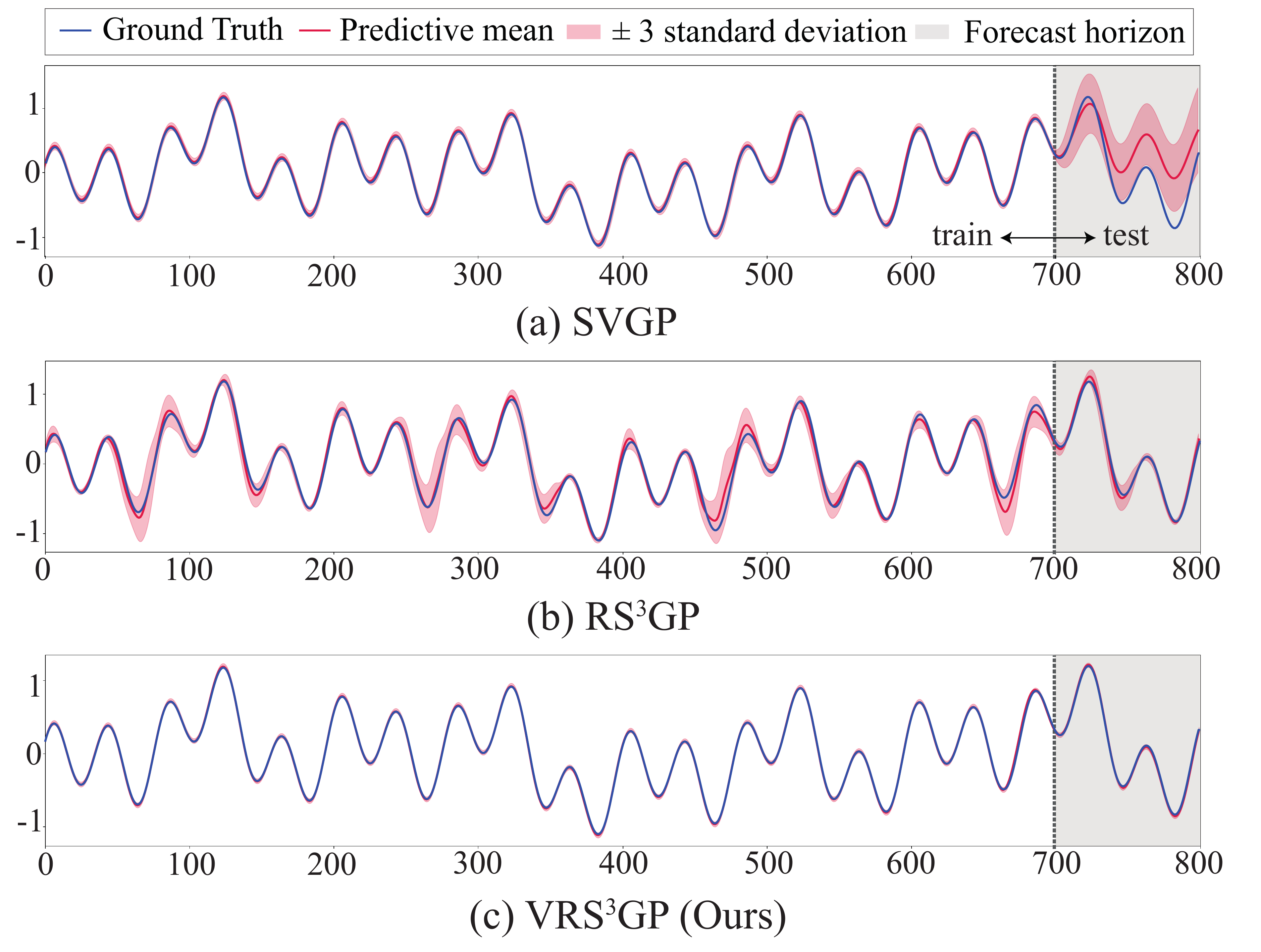}
    \caption{Predictive mean and uncertainty on a toy dataset composed of multi-sinusoidal waves with four distinct components including low and high frequencies, comparing various \texttt{GP} approaches. The true function is depicted in blue, while the predictive mean $\pm 3$ standard deviations are shown in red. The dataset consists of 700 training points, with a context window of 100 for \texttt{SVGP}, and a prediction horizon of 100 for all.}
    \label{fig:demo}
\end{figure}


\subsection{Real-World Datasets}

\begin{table*}
    \centering
    \caption{Forecasting results on eight benchmark datasets ranked by \texttt{CRPS}. The best and second best models have been shown as \textbf{bold} and \underline{underlined}, respectively.}
    \resizebox{1.0\textwidth}{!}{
    \begin{sc}
    \begin{tabular}{lcccccccc}
        \toprule
         method&  Solar & Electricity & Traffic & Exchange & M4 & UberTLC & KDDCup & Wikipedia\\
         \midrule
         \texttt{SeasonalNaïve} & 0.512 $\pm$ 0.000 & 0.069 $\pm$ 0.000 & 0.221 $\pm$ 0.000 & 0.011 $\pm$ 0.000 & 0.048 $\pm$ 0.000 & 0.299 $\pm$ 0.000 & 0.561 $\pm$ 0.000 & 0.410 $\pm$ 0.000\\
         \texttt{ARIMA} & 0.545 $\pm$ 0.006 & - & - & \textbf{0.008 $\pm$ 0.000} & 0.044 $\pm$ 0.001 & 0.284 $\pm$ 0.001 & 0.547 $\pm$ 0.004 & - \\
         \texttt{ETS} & 0.611 $\pm$ 0.040 & 0.072 $\pm$ 0.004 & 0.433 $\pm$ 0.050 &  \textbf{0.008 $\pm$ 0.000} & 0.042 $\pm$ 0.001 & 0.422 $\pm$ 0.001 & 0.753 $\pm$ 0.008 & 0.715 $\pm$ 0.002 \\
         Linear & 0.569 $\pm$ 0.021 & 0.088 $\pm$ 0.008 & 0.179 $\pm$ 0.003 & 0.011 $\pm$ 0.001 & 0.039 $\pm$  0.001 & 0.360 $\pm$ 0.023 & 0.513 $\pm$ 0.011 & 1.624 $\pm$ 1.114\\
         \midrule
         \texttt{DeepAR} & 0.389 $\pm$ 0.001 & \underline{0.054 $\pm$ 0.000} & \underline{0.099 $\pm$ 0.001} & 0.011 $\pm$ 0.003 & 0.052 $\pm$ 0.006 & \textbf{0.161} $\pm$ \textbf{0.002} & 0.414 $\pm$ 0.027 & 0.231 $\pm$ 0.008 \\
         \texttt{MQ-CNN} & 0.790 $\pm$ 0.063 & 0.067 $\pm$ 0.001 & - & 0.019 $\pm$ 0.006 & 0.046 $\pm$ 0.003 & 0.436 $\pm$ 0.020 & 0.516 $\pm$ 0.012 & 0.220 $\pm$ 0.001 \\
         \texttt{DeepState} & 0.379 $\pm$ 0.002 & 0.075 $\pm$ 0.004 & 0.146 $\pm$ 0.018 & 0.011 $\pm$ 0.001 & 0.041 $\pm$ 0.002 & 0.288 $\pm$ 0.087 & - & 0.318 $\pm$ 0.019 \\
         \texttt{Transformer} & 0.419 $\pm$ 0.008 & 0.076 $\pm$ 0.018 & 0.102 $\pm$ 0.002 & \underline{0.010 $\pm$ 0.000} & 0.040 $\pm$ 0.014 & 0.192 $\pm$ 0.004 & 0.411 $\pm$ 0.021 & \textbf{0.214} $\pm$ \textbf{0.001} \\
         \texttt{TSDiff} & \underline{0.358 $\pm$ 0.020} & \textbf{0.050} $\pm$ \textbf{0.002} & \textbf{0.094} $\pm$ \textbf{0.003} & 0.013 $\pm$ 0.002 &  0.039 $\pm$ 0.006 & \underline{0.172 $\pm$ 0.008} & 0.754 $\pm$ 0.007 & \underline{0.218 $\pm$ 0.010} \\
         \midrule
         \texttt{SVGP} & \textbf{0.341 $\pm$ 0.001} & 0.104 $\pm$ 0.037 & - & 0.011 $\pm$ 0.001 & 0.048 $\pm$ 0.001 & 0.326 $\pm$ 0.043 & 0.323 $\pm$ 0.007 & - \\
         \texttt{DKLGP} & 0.780 $\pm$ 0.269 & 0.207 $\pm$ 0.128 & - & 0.014 $\pm$ 0.004 & 0.047 $\pm$ 0.004 & 0.279 $\pm$ 0.068 & 0.318 $\pm$ 0.010 & - \\
         \midrule
         \texttt{RS\textsuperscript{3}GP} & 0.377 $\pm$ 0.004 & $0.057 \pm 0.001$ & $0.165 \pm 0.001$ & $0.012 \pm 0.001$ & \underline{0.038 $\pm$ 0.003} & $0.354 \pm 0.016$ & \underline{0.297 $\pm$ 0.007} & $0.310 \pm 0.012$ \\
         \texttt{VRS\textsuperscript{3}GP} & $0.366 \pm 0.003$ & 0.056 $\pm$ 0.001 & $0.160 \pm 0.002$ & 0.011 $\pm$ 0.001 & \textbf{0.035} $\pm$ \textbf{0.001} & $0.347 \pm 0.009$ & \textbf{0.291} $\pm$ \textbf{0.015} & $0.295 \pm 0.005$ \\
         \bottomrule
    \end{tabular}
    \end{sc}
    }
    \label{tab:CRPS}
\end{table*}
\begin{table}
    \centering
    \caption{Training time in hours}
    \setlength\tabcolsep{3.0pt}
    \resizebox{0.45\textwidth}{!}{
    \begin{sc}
    \begin{tabular}{lccccc}
        \toprule
          & \texttt{SVGP} & \texttt{DKLGP} & \texttt{TSDiff} & RS\textsuperscript{3}GP & VRS\textsuperscript{3}GP\\
         \midrule
         Solar & 1.90 & 2.41 & 2.75 & 0.27 & 0.45\\
         Electricity & 2.23 & 1.91 & 4.09 & 0.62 & 0.98 \\
         Traffic & - & - & 4.44 & 2.31 & 2.70 \\
         Exchange & 0.75 & 0.65  & 2.00 & 0.23 & 0.29 \\
         M4 & 1.68 & 1.94 & 1.96 & 0.86 & 1.02 \\
         UberTLC & 1.89 & 2.19  & 3.19 & 0.53 & 0.71 \\
         KDDCup & 2.03 & 1.55  & 2.10 & 0.65 & 0.93 \\
         Wikipedia & - & - & 6.17 & 3.52 & 6.10 \\
         \bottomrule
    \end{tabular}
    \end{sc}
    }
    \label{tab:time}
\end{table}
In this section, we present empirical results on several real-world datasets.

\paragraph{Datasets} 
We conducted experiments on eight univariate time series datasets from different domains, available in \texttt{GluonTS} \cite{alexandrov2020gluonts}---Solar \cite{lai2018modeling}, Electricity \cite{asuncion2007uci}, Traffic \cite{asuncion2007uci}, Exchange \cite{lai2018modeling}, M4 \cite{makridakis2020m4}, UberTLC \cite{gasthaus2019probabilistic}, KDDCup \cite{godahewa2021monash}, and Wikipedia \cite{gasthaus2019probabilistic}. We use GluonTS \cite{alexandrov2020gluonts} to load the datasets, which has pre-specified train-test splits for each. See \cite[App.~B.1]{toth2024learning} for details. 

\paragraph{Metric} We employed the continuous ranked probability score (\texttt{CRPS}) \cite{gneiting2007strictly} for evaluating probabilistic forecasts. We approximate the \texttt{CRPS}
by the normalized mean quantile loss, and report means and standard deviations over three independent runs, see \cite[App.~B.3]{toth2024learning}.

\paragraph{Baselines}
We extend the list of baselines from \cite{kollovieh2024predict}. As such, we have included comprehensive baselines from three groups: classical statistics, deep learning, and other \texttt{GP} methods. For statistics, we included \texttt{SeasonalNaïve}, \texttt{ARIMA}, \texttt{ETS}, and a \texttt{RidgeRegression} model from the statistical literature \cite{hyndman2018forecasting}. Additionally, we compared against deep learning models that represent various architectural paradigms such as the \texttt{RNN}-based \texttt{DeepAR} \cite{salinas2020deepar}, the \texttt{CNN}-based \texttt{MQ-CNN} \cite{wen2017multi}, the state space model-based \texttt{DeepState} \cite{rangapuram2018deep}, the self-attention-based \texttt{Transformer} \cite{vaswani2017attention}, and diffusion-model-based \texttt{TSDiff} \cite{kollovieh2024predict}. For \texttt{GP} models, we add \texttt{SVGP} \cite{hensman2013gaussian}, uses 500 inducing points with the \texttt{RBF} kernel; and \texttt{DKLGP} \cite{wilson2016deep} which augments \texttt{SVGP} with a deep kernel using a $2$-layer \texttt{NN} with $64$ units. See \cite[App.~B.4]{toth2024learning} for further details.

\paragraph{Results} 
Table~\ref{tab:CRPS} shows the results of our models \texttt{VRS\textsuperscript{3}GP} and \texttt{RS\textsuperscript{3}GP} compared to baselines. We omit results for \texttt{SVGP} and \texttt{DKLGP} on Traffic and Wikipedia since inference time takes longer than 12 hours. Overall, our models achieve top scores on 2 datasets, outperforming the state-of-the-art diffusion baseline \texttt{TSDiff}, and provide comparable performance to deep learning baselines on the remaining 4/6 datasets. Importantly, they outperform linear baselines. They also outperform the \texttt{GP} baselines on 4/6 datasets. One of the datasets our model achieves the top score on, KDDCup, is the one with longest time series length, $L \sim 10000$, which suggests that this is due to its ability to adapt its context memory to long ranges. Moreover, Table~\ref{tab:time} illustrates that our models \texttt{VRS\textsuperscript{3}GP} and \texttt{RS\textsuperscript{3}GP} are significantly quicker to train \texttt{TSDiff}, and both \texttt{SVGP} and \texttt{DKLGP}. This speed up is especially pronounced for datasets with long time series such as Solar, Electricity, Exchange, UberTLC and KDDCup, since our model is able to process a long time series in sublinear time due to parallelizability as detailed in Section \ref{sec:vfsf:rs3gp}. We further investigate the scalability in Appendix \ref{app:vfsf:scale}.

\section{Conclusion and Limitations}
In this work, we introduced the Random Fourier Decayed Signature Features for time series forecasting, incorporating a novel forgetting mechanism into signature features, which addresses the need to adaptively prioritize local, recent information in long time series. Our proposed model, the Recurrent Sparse Spectrum Signature Gaussian Process (\texttt{RS\textsuperscript{3}GP}), leverages variational inference and recurrent structure to efficiently transform time series data into a joint predictive distribution. We demonstrated that our approach outperforms traditional \texttt{GP} models and achieves comparable performance to state-of-the-art deep learning methods, while offering significantly faster training times. Limitations include reliance on the Gaussian likelihood, which may be unsuitable for non-Gaussian or heavy-tailed distributions, where asymmetric noise models are warranted, or in cases where explicit target constraints can be used to refine predictions. Additionally, while we introduced a decay mechanism for handling the trade-off between global and local information, more sophisticated forgetting mechanisms might be warranted for capturing complex dependencies in non-stationary time series. Future work could explore these contexts, and extend to the multivariate forecasting regime.

\begin{subappendices}

\section{Model details} \label{app:vfsf:model}

\subsection{Fractional Differencing} \label{app:vfsf:fracdiff}
Fractional differencing \cite{granger1980introduction} is a technique used in time series analysis to transform non-stationary data into stationary data while preserving long-term dependencies in the series. Unlike traditional differencing methods, which involve subtracting the previous value (integer differencing), fractional differencing provides a more flexible way of modelling persistence in time series by introducing a fractional order parameter. Hence, fractional differencing generalizes the concept of differencing by allowing the differencing parameter 
$q$ to take non-integer values.

The general form of fractional differencing is expressed through a binomial expansion:
\begin{align} \label{eq:vfsf:fracdiff}
    \delta^q X_t = (1 - B)^q X_t = \sum_{k=0}^\infty {q \choose k}(-1)^k X_{t-k},
\end{align}
where $B$ is the backshift operator, $q > 0$ is the fractional differencing order parameter, and $q \choose k$ is the generalized binomial coefficient, i.e.
\begin{align} \label{eq:vfsf:gen_binom}
    {q \choose k} = \frac{\Gamma(q + 1)}{\Gamma(k+1) \Gamma(q - k + 1)},
\end{align}
where $\Gamma$ is the Gamma function. This formula creates a weighted sum of past values, where the weights decay gradually depending on 
$q$. Note that for integer values of $q$, the formula collapses to standard differencing.

In our case in equation \eqref{eq:vfsf:rfsf} and \eqref{eq:vfsf:rfdsf}, we apply fractional differencing to each channel in the lifted time series it is applied to, such that the \RFF{} maps $\varphi^{(p)}$ each have $D$ channels, and each channel is convolved with an individual filter as in \eqref{eq:vfsf:fracdiff} depending on the channel-wise fractional differencing parameter. In practice, we limit the summation in \eqref{eq:vfsf:fracdiff} to a finite window $W \in \bbZ_+$, since the sequence decays to zero fast this does not lose much.

\subsection{Algorithms} \label{app:vfsf:algs}
Additionally to the notation defined in Section \ref{sec:back:notation}, we extend the possible array operations. Recall that for arrays $1$-based indexing is used. Elements outside the bounds of an array are treated as zeros as opposed to circular wrapping.
Let $A$ and $B$ be $k$-fold arrays with shape $(n_1 \times \dots \times n_k)$, and let $i_j \in [n_j]$ for $j \in [k]$. We define the following channelwise array operations:
\begin{enumerate}[label=(\roman*)]
    \item Let $\blambda \in \bbR^{n_k}$. The channelwise geometric scan along axis $j$:
    \begin{align}
        A[\dots, :, \boxplus^{\blambda}, :, \dots][\dots, i_{j-1}, i_j,, i_{j+1} \dots] := \sum_{\kappa=1}^{i_j} \lambda_{i_k}^{i_j - \kappa} A[\dots, i_{j-1}, \kappa, i_{j+1}, \dots].
    \end{align}
    \item Let $\b q \in \bbR^{n_k}$ and $W \in \bbZ_+$. The channelwise fractional difference along axis $j$:
    \begin{align}
        A[\dots, :, \boxminus^{\b q}_W, :, \dots][\dots, i_{j-1}, i_j,, i_{j+1} \dots] := \sum_{\kappa=0}^{W-1} {q_{i_k} \choose \kappa} A[\dots, i_{j-1}, i_j - \kappa, i_{j+1}, \dots],
    \end{align}
    where ${q_{i_k} \choose \kappa}$ is as in \eqref{eq:vfsf:gen_binom}.
\end{enumerate}

\begin{algorithm}[H]
\caption{Computing the \RFSF{} map.}
\begin{footnotesize}
\label{alg:vfsf:rfsf}
\begin{algorithmic}[1]
    \STATE {\bfseries Input:} Time series $\bx = (\bx_1, \dots, \bx_L) \in \Seq(\bbR^d)$, spectral measure $\Lambda$, truncation $M \in \bbZ_+$, \RFF{} dimension $D \in \bbZ_+$, frac.~diff.~orders $\b q \in \bbR^D$, frac.diff.~window $W \in \bbZ_+$
    \STATE Sample independent \RFF{} frequencies $\Omega^{(1)}, \dots, \Omega^{(M)} \stackrel{\iid}{\sim} \Lambda^{D}$ and phases $\bb^{(1)}, \dots, \bb^{(M)} \stackrel{\iid}{\sim} \cU(0, 2\pi)^D$
    \STATE Initialize an array $U$ with shape $[M, L, D]$
    \STATE Compute \RFF s per time step $U[m, l, :] \gets \cos({\Omega^{(m)}}^\top \bx_l + \bb^{(m)})$ for $m \in [M]$ and $l \in [L]$
    \STATE Compute fractional differences $U \gets U[:, \boxminus_W^{\b q}, :]$
    \STATE Accumulate into level-$1$ features $P_1 \gets U[1, \boxplus, :]$
    \STATE Initialize list $R \gets [U[1, :, :]]$
    \FOR{$m=2$ {\bfseries to} $M$}
        \STATE Update with next step $P^\prime \gets P_{m-1}[+1, :] \odot U[m, :, :]$
        \STATE Initialize new list $R^\prime \gets [P^\prime]$
        \FOR{$p=2$ to $m$}
            \STATE Update with current step $Q \gets \frac{1}{p} R[p-1] \odot U[m, :, :]$
            \STATE Append to $R^\prime += [Q]$
        \ENDFOR
        \STATE Aggregate into level-$m$ features $P_m \gets R^\prime[\Sigma][\boxplus, :]$
        \STATE Roll list $R \gets R^\prime$
    \ENDFOR
    \STATE {\bfseries Output:} Arrays of \RFSF{} features per signature level and time step $P_1, \dots, P_M$.
\end{algorithmic}
\end{footnotesize}
\end{algorithm}

\begin{algorithm}[t]
\begin{footnotesize}
\caption{Computing the \texttt{RFDSF} map.}
\label{alg:vfsf:rfdsf}
\begin{algorithmic}[1]
    \STATE {\bfseries Input:} Time series $\bx = (\bx_1, \dots, \bx_L) \in \Seq(\bbR^d)$, spectral measure $\Lambda$, truncation $M \in \bbZ_+$, \RFF{} dimension $D \in \bbZ_+$, frac.~diff.~orders $\b q \in \bbR^D$, frac.diff.~window $W \in \bbZ_+$, channelwise decay factors $\blambda \in \bbR^d$
    \STATE Sample independent \RFF{} frequencies $\Omega^{(1)}, \dots, \Omega^{(M)} \stackrel{\iid}{\sim} \Lambda^{D}$ and phases $\bb^{(1)}, \dots, \bb^{(M)} \stackrel{\iid}{\sim} \cU(0, 2\pi)^D$
    \STATE Initialize an array $U$ with shape $[M, L, D]$
    \STATE Compute \RFF s per time step $U[m, l, :] \gets \cos({\Omega^{(m)}}^\top \bx_l + \bb^{(m)})$ for $m \in [M]$ and $l \in [L]$
    \STATE Compute fractional differences $U \gets U[:, \boxminus_W^{\b q}, :]$
    \STATE Accumulate into level-$1$ features $P_1 \gets U[1, \boxplus^{\blambda}, :]$
    \STATE Initialize list $R \gets [U[1, :, :]]$
    \FOR{$m=2$ {\bfseries to} $M$}
        \STATE Update with next step $P^\prime \gets \blambda^{\odot(m-1)} \odot P_{m-1}[+1, :] \odot U[m, :, :]$
        \STATE Initialize new list $R^\prime \gets [P^\prime]$
        \FOR{$p=2$ to $m$}
            \STATE Update with current step $Q \gets \frac{1}{p} R[p-1] \odot U[m, :, :]$
            \STATE Append to $R^\prime += [Q]$
        \ENDFOR
        \STATE Aggregate into level-$m$ features $P_m \gets R^\prime[\Sigma][\boxplus^{\blambda^{\odot m}}, :]$
        \STATE Roll list $R \gets R^\prime$
    \ENDFOR
    \STATE {\bfseries Output:} Arrays of \texttt{RFDSF} features per signature level and time step $P_1, \dots, P_M$.
\end{algorithmic}
\end{footnotesize}
\end{algorithm}

\subsection{Treatment of Random Parameters} \label{app:vfsf:random}

\paragraph{Reparameterization}
\begin{figure}
    \centering
    \includegraphics[width=0.6\textwidth]{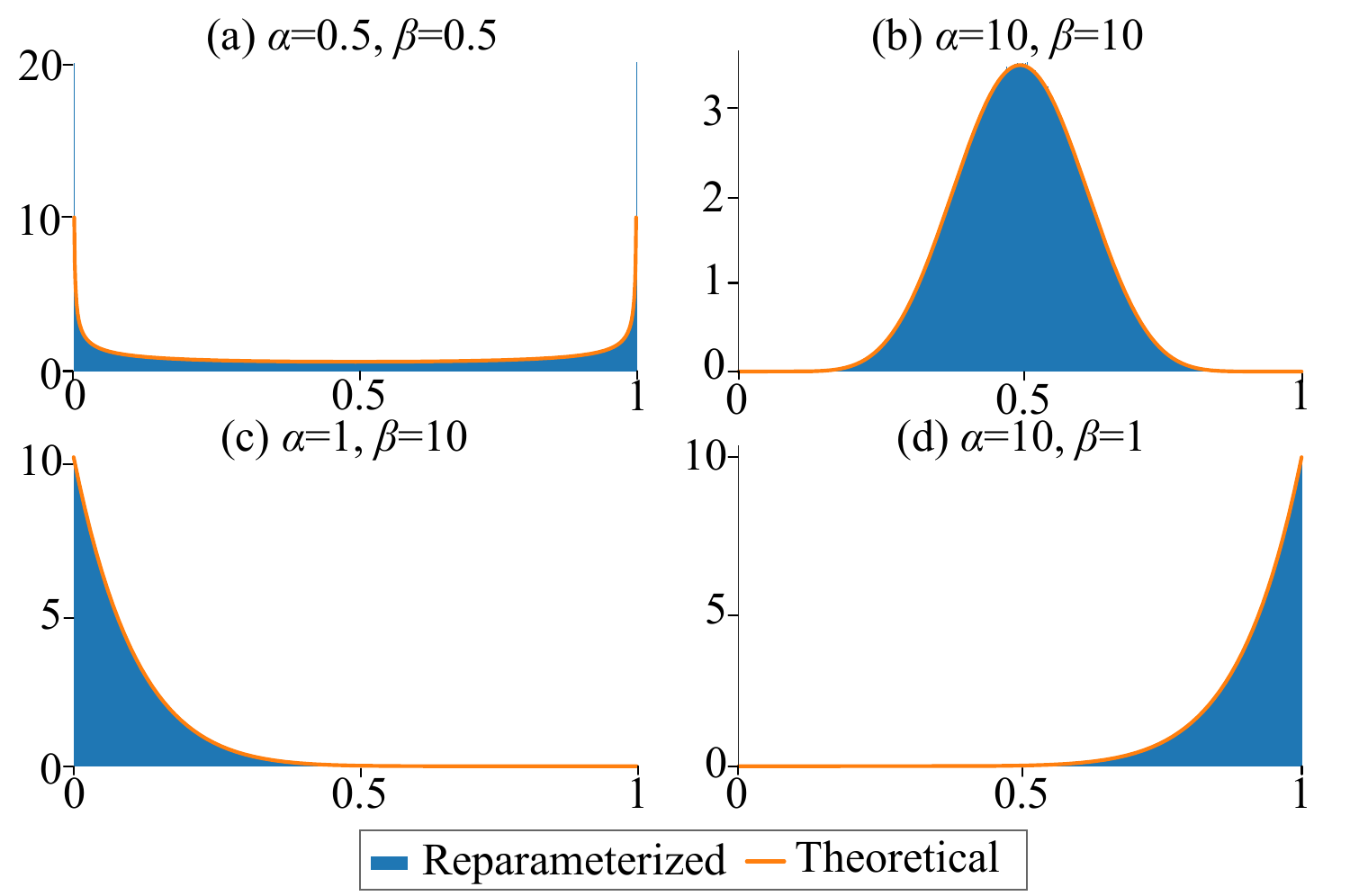}
    \caption{Reparameterizing the beta distribution for various shape parameters given fixed random outcomes.}
    \label{fig:beta_reparam}
\end{figure}
First of all, in order to be able to learn the distributional parameters of the random parameters, we need to reparameterize them \cite{kingma2013auto}. That is, we reparametrize random variable $X$, in terms of a simpler random variable, say $\epsilon$, such that the distributional parameters of $X$ can be captured by a continuous function.  For a Gaussian distribution, this is easy to do, since for $X \sim \c N(\mu, \sigma^2)$, we have the reparameterization $X = \sigma \epsilon + \mu$, where $\epsilon \sim \c N(0, 1)$. For a beta-distributed random variable, this is more tricky, and it is unfeasible to do exact reparameterization. The problem is equivalent to reparameterizing a Gamma random variable, since for $X = Y_1 / (Y_1 + Y_2)$ such that $Y_1 \sim \Gamma(\alpha, 1)$, $Y_2 \sim \Gamma(\beta, 1)$, it holds that $X \sim \c B(\alpha, \beta)$. Previous work \cite{naesseth2017reparameterization} combined accepted samples from the Gamma acceptance-rejection sampler with the shape augmentation trick for Gamma variables, so that the probability of acceptance is close to 1 in order to reparameterize Gamma random variables. We numerically validated this approach, and for fixed randomness, it is able to continuously transport between beta distributions with different shape parameters. The below figure shows a density plot of the reparameterization for shape augmentation parameter $B = 10$ and $n = 10^7$ samples. We see that the theoretical densities are reproduced exactly by the reparameterized distributions.

\paragraph{Resample or not to resample}
The evaluation of the \texttt{KL} divergences can be done analytically, and the question becomes how to sample from the data-fit term, and how to treat the integrals with respect to the random parameters during inference. There are two approaches proposed in \cite{cutajar2017random}. First, is to simply apply vanilla Monte Carlo sampling to the random hyperparameters, that is, resample them at each step optimization step. This leads to high variance and slow convergence of the \texttt{ELBO}, or it may not even converge within optimization budgets. The explanation given for this is that the since the variational factorizes across the ``layers'' of the model, it does not maintain the proper correlations needed to propagate the uncertainty. Hence, instead of resampling the random parameters during training, we retort to sampling them once at the start of training, and then keeping them fixed using the reparameterization trick throughout.

\subsection{Fixing Pathologies in the \texttt{ELBO}} \label{app:vfsf:obj}
Next, we will focus on why the \texttt{ELBO} is unsuitable for capturing latent uncertainty, and relies solely on the observation noise for uncertainty calibration. Let us recall the \texttt{ELBO} from \eqref{eq:vfsf:elbo}:
{\begin{align} \label{eq:vfsf:app_elbo}
    \c L_{\texttt{ELBO}} = &\underbrace{\sum_{i=1}^N \bbE_q\bracks{\log p(y_i \given \b w, \bOmega, B)}}_{\text{data-fit term}} -\underbrace{\KL{q(\b w)}{p(\b w)} - \KL{q(\bOmega)}{p(\bOmega)} - \KL{q(B)}{p(B)}}_{\text{\texttt{KL} regularizers}}.
\end{align}}
Clearly, the \texttt{KL} regularizers do not play a role in uncertainty calibration to the data, and we will focus on a datafit term $\bbE_q\bracks{\log p(y_i \given \bw, \bOmega, B)}$. Let $f_i = \bw^\top \Phi(\bx_i)$, and $q(f_i) = \cN(\mu_i, \sigma_i)$ be given as in \eqref{eq:vfsf:variational_pred}. Now, note that $p(y_i \given \bw, \bOmega, B) = \cN(y_i \given f_i, \sigma^2_y)$. Hence,
\begin{align}
    \bbE_q\bracks{\log p(y_i \given \bw, \bOmega, B)} = - \frac{1}{2} \log(2\pi \sigma_y^2) - \frac{1}{2 \sigma_{y}^2} \bbE_q[y_i - f_i]^2. 
\end{align}
The first term clearly plays no role in the data calibration, but serves as a penalty, while the second term matches the predictive distribution to the data. Calculating the expectation, 
\begin{align}
    \frac{1}{2\sigma_y^2}\bbE_q\bracks{y_i - f_i}^2 = \frac{1}{2\sigma_y^2}\pars{(y_i - \mu_i)^2 + \sigma_i^2}.
\end{align}
Putting these two together, we get that
\begin{align} \label{eq:vfsf:elbo_expanded}
    \bbE_q\bracks{\log p(y_i \given \bw, \bOmega, B)} = - \frac{1}{2} \log(2\pi \sigma_y^2) - \frac{1}{2 \sigma_{y}^2} \pars{(y_i- \mu_i)^2 + \sigma_i^2}.
\end{align}
Hence, the data-fit is solely determined by the closeness of the predictive mean to the underlying observation weighted the inverse of the observation noise level, while the latent function uncertainty is simply penalized without playing a role in the data-fit calibration. This explains why the \texttt{ELBO} underestimates the latent function uncertainty and only relies on the observation noise for uncertainty calibration.

Next, we consider the \texttt{PPGPR} loss. With the above notation, consider 
{\begin{align}
    \c L_{\texttt{PPGPR}} = &\underbrace{\sum_{i=1}^N \log \bbE_q\bracks{p(y_i \given \b w, \bOmega, B)}}_{\text{data-fit term}} -\underbrace{\KL{q(\b w)}{p(\b w)} - \KL{q(\bOmega)}{p(\bOmega)} - \KL{q(B)}{p(B)}}_{\text{\texttt{KL} regularizers}}.
\end{align}}
Now, the data-fit term is given by the $\log$ outside of the $q$-expectation so that $\bbE_q[p(y_i \given \bw, \bOmega, B)] = \bbE_q[\cN(y_i \given f_i, \sigma_y^2)]$, which is a convolution of two Gaussians, hence we get
\begin{align} \label{eq:vfsf:ppgpr_expanded}
    \log& \bbE_q[p(y_i \given \bw, \bOmega, B)] \\=& \log \cN(y_i \given \mu_i, \sigma_f^2 + \sigma_y^2) = -\frac{1}{2}\log(2\pi(\sigma_i^2 + \sigma_y^2)) - \frac{1}{2(\sigma_i^2 + \sigma_y^2)} (y_i - \mu_i)^2. 
\end{align}
Contrasting this with \eqref{eq:vfsf:elbo_expanded}, we see that \eqref{eq:vfsf:ppgpr_expanded} treats the latent function variance $\sigma_i^2$ and $\sigma_y^2$ on equal footing, hence restoring symmetry in the objective function. This is why \texttt{PPGPR} often leads to better predictive variances, since both the latent function uncertainty and observation noise are taken into account during uncertainty calibration, which makes use of the expressive power contained in the kernel function regarding the geometry of the data-space.

\subsection{Scalability} \label{app:vfsf:scale}

\begin{figure}[h]
\begin{minipage}{0.49\textwidth}
    \includegraphics[width=\textwidth]{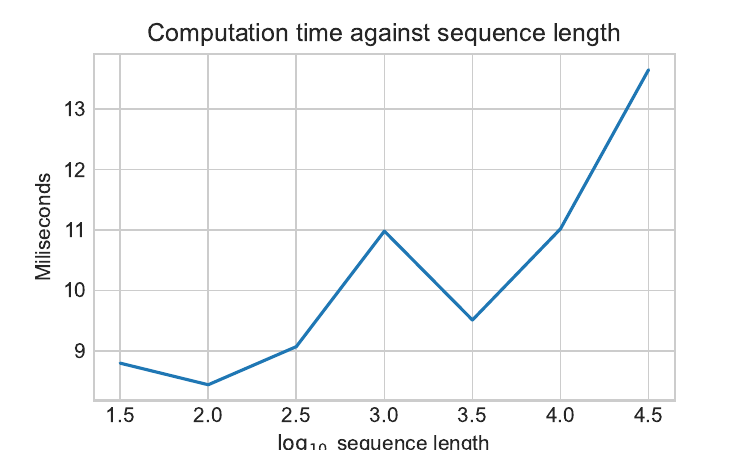}
\end{minipage}
\begin{minipage}{0.49\textwidth}
    \includegraphics[width=\textwidth]{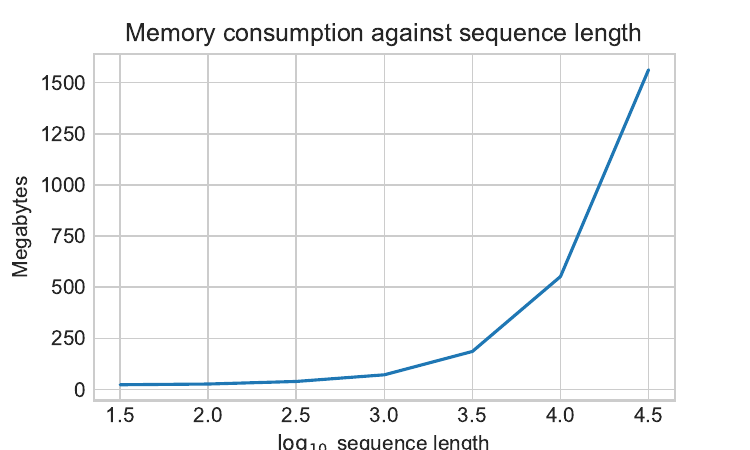}
\end{minipage}
\caption{Computation time (left) and memory consumption (right) of \texttt{VRS\textsuperscript{3}GP} measured against scaling the sequence length on a logarithmic scale. The hyperparameters of the model are $D = 200$ and $M = 5$, and the input time series is univariate augmented with $l = 9$ lags.}
\label{fig:benchmark}
\end{figure}

\end{subappendices}

\endgroup
\begingroup
\chapter*{Conclusion}
\addcontentsline{toc}{chapter}{Conclusion}
\section*{Summary}
\addcontentsline{toc}{section}{Summary}
The thesis explores the integration of path signatures into machine learning frameworks to address challenges in modeling sequential and structured data. Path signatures, rooted in rough path theory, serve as a powerful mathematical tool that generalizes polynomial features and captures complex dynamics in sequential data. Despite their theoretical strengths, computational challenges such as the combinatorial growth of tensor dimensions have limited their scalability, while its kernelized variation suffers from quadratic complexity in the sequence length. This thesis addresses these limitations while preserving the expressiveness of signatures.

Chapter \ref{ch:back} introduces path signatures by laying the necessary mathematical foundation through tensor algebras and their properties. It highlights path signatures' universality in approximating continuous functions and their robustness to reparameterization, which makes them particularly effective for sequence modelling. The connection between path signatures and classical polynomial features motivates their suitability for extracting meaningful representations from sequential data, although scalability remains a concern. Additionally, oscillatory behaviour may deteriorate performance on complex datasets analogously to how polynomial regression is often outperformed by other nonlinearities, which motivates the use of the lifted signature kernel as introduced in \cite{kiraly2019kernels} and discussed in Section \ref{sec:back:sigkernels}.

Building upon this foundation, Chapter \ref{ch:gpsig} applies path signatures to Gaussian Processes (\texttt{GP}s), embedding signature features into covariance functions. This approach enhances \texttt{GP}s' ability to model temporal dependencies in sequential data within a Bayesian framework. The introduction of sparse variational inference mitigates computational bottlenecks, enabling the model to scale effectively to large datasets of high-dimensional sequence data. The empirics demonstrate the advantages of the method, particularly in time series classification tasks where classic sequence kernels underperform. Synergy with sequence modelling methods in deep learning is also demonstrated, which can further enhance performance on real-world tasks.

Motivated by the success of combining signatures with deep learning architectures in Chapter 2, Chapter \ref{ch:s2t} introduces the \texttt{Seq2Tens} framework, which bridges path signatures with deep learning architectures. By employing low-rank linear functionals, \texttt{Seq2Tens} mitigates the computational expense of signature features while preserving their expressiveness. Stacking multiple layers of low-rank transformations efficiently enables deep learning-style flexibility, making \texttt{Seq2Tens} both efficient and powerful for tasks such as time series classification, mortality prediction, and generative modeling. The framework's success underscores its ability to combine the mathematical richness of signatures with the scalability of deep learning. Further on the theoretical side, the chapter formalizes the theory behind the discretized order-$1$ signature features introduced in Section \ref{sec:back:discrete} proving universality under mild conditions. This explains the 
practical usefulness of the order-$1$ signature, that we often make use of in the other chapters.

In Chapter \ref{ch:g2t}, the application of path signatures is extended to graph learning by connecting expected signatures on graphs to hypo-elliptic diffusion processes. This approach introduces a hypo-elliptic graph Laplacian that encodes both local and long-range graph structures via diffusion dynamics. By integrating these features into neural network architectures, the method effectively models hierarchical and global characteristics in graph-structured data. This development demonstrates the versatility of signature methods beyond sequential data, achieving strong performance in tasks such as node and graph-level classification. This connects to Chapter \ref{ch:s2t} in the deep learning-style iteration of layers, since we employ the same low-rank representation in the weight-space as introduced in there.

Chapter \ref{ch:rfsf} tackles the scalability limitations of signature kernels by introducing Random Fourier Signature Features (\RFSF). This novel method approximates signature kernels using random feature maps, significantly reducing computational complexity while preserving kernel properties with high probability. Through dimensionality reduction techniques, the approach allows signature methods to handle large-scale datasets efficiently both in sample size and sequence length. Experiments confirm that this retains high performance across time series tasks, establishing it as a practical tool for large-scale machine learning applications. This work can be treated as an extension of the low-rank algorithms proposed in \cite{kiraly2019kernels}, which had a similar aim of reformulating the computations from the dual space to the feature space, reducing the computational complexity while preserving the representational power of kernelization.

Finally, Chapter \ref{ch:vfsf} focuses on time series forecasting and introduces the Recurrent Sparse Spectrum Signature Gaussian Process (\texttt{RS\textsuperscript{3}GP}). This model combines Random Fourier Signature Features (\RFSF) with a novel decay-based forgetting mechanism to dynamically adjust its context length allowing it fo focus on more recent data segments. By leveraging variational inference and efficient parallelization, \texttt{RS\textsuperscript{3}GP} captures both short-term and long-term dependencies efficiently, making it well-suited for complex high-dimensional forecasting tasks. The model achieves competitive results against traditional \texttt{GP}s and state-of-the-art deep learning methods, highlighting its scalability and interpretability. This chapter can be treated as an extension of Chapters \ref{ch:gpsig} and \ref{ch:rfsf}, since we combine the finite-rank signature kernel called Random Fourier Signature Features introduced in Chapter \ref{ch:rfsf} with the variational \texttt{GP} framework of Chapter \ref{ch:gpsig}. 

The thesis demonstrates an evolution of path signatures from a theoretical construct to a versatile tool in machine learning. By integrating path signatures with Gaussian processes, deep learning, and kernel methods, the work highlights how the core principles of path signatures can address longstanding challenges in sequential and structured data modeling. Furthermore, the adaptation of these principles to graph learning, scalable low-rank and random feature techniques, extends the applicability of path signatures beyond their traditional scope, offering solutions to real-world problems in both temporal and graph-structured domains. This exploration bridges the gap between theoretical mathematical constructs and practical machine learning methodologies. The findings pave the way for future advancements by emphasizing the balance between computational scalability and expressiveness, making the contributions of this thesis a promising foundation for applications of signatures in scalable AI systems.


\section*{Outlook}
\addcontentsline{toc}{section}{Outlook}
This thesis demonstrates that path signatures provide a powerful and mathematically grounded framework for learning from sequential and structured data. However, there are avenues for further exploration. In particular, next we detail open questions and further ideas that might be interesting to pursue in the future relating to each chapter. 

In Section, \ref{app:gpsig:gpsig} we show on the theoretical side that there exists a continuous modification of the Gaussian process with the unlifted truncated signature kernel as covariance function. We do this by bounding the metric entropy of the space of uniformly bounded variation paths.
There are several possible extensions for this result, such as considering the case of the lifted signature kernel by e.g.~the RBF kernel as static kernel, generalizing to unbounded variation paths using tools from rough path theory, and extending the result to the untruncated signature kernel to provide theoretical support for the \texttt{GP} approach of \cite{lemercier2021siggpde}.

In Section, \ref{sec:3} we introduce \texttt{LS2T} layers as (order-$1$) signature features projected by low-rank weight tensors. Although the low-rank constraint limits the expressiveness of these features, we empirically demonstrate that stacking such layers allows to achieve state-of-the-art performance. Some algebraic results regarding compositions of such layers are given in \cite[App.~C]{toth2021seq2tens}, but it would be interesting to provide an explicit characterization of the functions that can be recovered by compositions of low-rank sequence-to-sequence layers, and answer the question theoretically whether the iteration of such low-rank layers can recover universality. Another observation, is that if the components vectors of the low-rank tensor projections are initialized as Gaussians, then the resulting activations at initialization correspond to the Tensor Random Projection map (\texttt{TRP}) from Section \ref{sec:rfsf:trp} applied to unlifted signature features. The result in Theorem \ref{thm:main3} can be easily generalized to this case to show that in the infinite-width limit the activations of the \texttt{LS2T} layer at initialization correspond to a feature representation of the unlifted signature kernel. Consequently, a 2-layer neural network with an \texttt{LS2T} layer followed by a linear projection layer, both with Gaussian initializations, corresponds in the infinite-width limit to the Gaussian process with the signature covariance function from Chapter \ref{ch:gpsig}. Inspired by this, in the setting when the low-rank projections are learnt, it would be interesting to characterize the Neural Tangent Kernel (\texttt{NTK}) \cite{jacot2018neural} of \texttt{LS2T} layers, and establish its relation to the signature kernel, potentially sheding light on the training dynamics. Finally, the extension of signature features with a forgetting mechanism from Section \ref{sec:vfsf:forgetting} can be incorporated into \texttt{LS2T} layers to make them amenable to the learning of both short- and long-range interactions, potentially ameliorating the need for the preprocessing \texttt{CNN} layers employed in our experiments.

In Section \ref{sec:g2t:algos}, we introduce hypo-elliptic diffusions on graphs corresponding to expected signatures of random walks on nodes. We apply low-rank tensor projections similarly to the predecing section to construct scalable graph learning architectures. These are called \texttt{G2TN} layers, and they generalize the \texttt{LS2T} layers of the previous chapter to the domain of graphs. The exact same questions about \texttt{LS2T} layers follow through, e.g.~whether compositions of such low-rank layers can recover the characteristicness property of expected signatures. Further, it can analogously be derived that in the infinite-limit the layer activations converge to a feature representation for the expected signature kernel (i.e.~a kernel for probability measures on path-space by considering the inner product of expected signatures), which motivates to investigate the \texttt{NTK} associated to \texttt{G2TN} layers. Additionally, there are two main issues associated with classical \texttt{GNN}s: oversquashing related to structural bottlenecks on graphs \cite{topping2022understanding}; and oversmoothing caused by node features converging to a constant node vector as we increase the network depth \cite{rusch2023survey}. Whether these issues are apparent in the context of \texttt{G2TN} layers is an open question. However, we conjecture that due to the noncommutativity of the sequence features used to represent random walks on graphs, our node features might be less plagued by oversmoothing, and that graph rewiring techniques may be equally useful in this setting for ameliorating oversquashing.

In Chapter \ref{ch:rfsf}, we construct various scalable, random approximations for the order-$1$ signature kernel that converge with high probability. The extension of this construction to the approximation of  higher-order signature kernels (see Section~\ref{sec:back:discrete}) is straightforward, but the theoretical results require further work, especially the key lemma in the proof of Theorem \ref{thm:main}, i.e.~Lemma \ref{lem:RFSF_approx}, where the recursive step requires to consider the higher-order terms (i.e.~the repetitions). Further, extension to continuous-time signature kernels including the unbounded variation setting is an interesting future direction to consider. On the applications' side, \texttt{RFSF} features can also be used to construct efficient approximations to signature \texttt{MMD}s, which could come handy in several contexts, such as hypothesis testing for large time series datasets \cite{chevyrev2022signature}, or as an efficient loss function for the training of generative models for time series \cite{issa2023non}.

Finally, Chapter \ref{ch:vfsf} constructs a parametric Gaussian process model called \texttt{RS\textsuperscript{3}GP} for efficient forecasting of long time series with adaptive context horizon, that utilizes \texttt{RFSF} features combined with a forgetting mechanism. However, the model formulation can be applied in more general contexts, such as sequence-to-sequence regression and system dynamics identification. In particular, we carried out preliminary experiments on battery health prediction \cite{aitio2021predicting} with promising results, indicating that this could be an interesting application to consider. Finally, given the results in Section \ref{sec:vfsf:exp}, we conclude that our models are much more efficient but somewhat lacking in expressiveness compared to the state-of-the-art deep learning models for probabilistic time series forecasting, such as \texttt{TSDiff}. This could be enhanced by constructing a deep \texttt{GP} using probabilistic iterations of \texttt{RS\textsuperscript{3}GP} layers in the same spirit as done in \cite{cutajar2017random}.

Finally, a critical direction lies in the interpretability of models that employ path signatures. Developing methods to explain the contributions of signature features to predictions would enhance their applicability in fields such as healthcare and finance, where transparency is essential. By bridging theoretical rigour with computational scalability, this thesis establishes path signatures as a robust tool for modern machine learning. The innovations introduced here pave the way for scalable deployment of signature features in diverse large-scale real-world applications.
\endgroup


\addcontentsline{toc}{chapter}{Bibliography}
\bibliography{bib}        
\bibliographystyle{plain}  

\end{document}